\newcommand{\highlight}[1]{{\color{red}#1}}
\def\sE{{\mathbb{E}}}
\def\sR{{\mathbb{R}}}
\def\gN{{\mathcal{N}}}
\def\gS{{\mathcal{S}}}
\def\bx{{\mathbf x}}
\def\sP{{\mathbb P}}
\def\bmu{{\boldsymbol{\mu}}}
\def\bxi{{\boldsymbol{\xi}}}
\def\bI{{\mathbf I}}
\def\bw{{\mathbf w}}
\def\beps{{\boldsymbol{\epsilon}}}
\def\sP{{\mathbb{P}}}
\def\bP{{\mathbf P}}
\def\bxi{{\boldsymbol{\xi}}}
\def\bW{{\mathbf W}}
\def\bv{{\mathbf v}}
\def\sone{{\mathds{1}}}
\def\orho{{\overline{\rho}}}
\def\gC{{\mathcal{C}}}
\def\SNR{{\mathrm{SNR}}}
\def\sign{{\mathrm{sign}}}
\def\erf{{\mathrm{erf}}}
\def\gE{{\mathcal{E}}}
\def\bbf{{\boldsymbol{f}}}
\def\ggA{{\mathscr{A}}}
\def\ggB{{\mathscr{B}}}
\def\ggC{{\mathscr{C}}}
\def\ggD{{\mathscr{D}}}
\def\ggE{{\mathscr{E}}}
\newtheorem{theorem}{Theorem}[section]
\newtheorem{lemma}{Lemma}[section]
\newtheorem{proposition}{Proposition}[section]
\newtheorem{definition}{Definition}[section]
\newtheorem{remark}{Remark}[section]
\newtheorem{condition}{Condition}[section]
\newtheorem{claim}{Claim}[section]
\title{On the Feature Learning in Diffusion Models}
\author{Andi Han${}^*$ \qquad Wei Huang${}^*$ \qquad Yuan Cao${}^\dagger$ \qquad Difan Zou${}^\ddagger$ \\[5pt]
${}^*$RIKEN AIP (\texttt{andi.han@riken.jp}, \texttt{wei.huang.vr@riken.jp}). Equal contribution.\\
${}^\dagger$Department of Statistics and Actuarial Science, University of Hong Kong (\texttt{yuancao@hku.hk}) \\
${}^\ddagger$Department of Computer Science and Institute of Data Science, University of Hong Kong \\\texttt{(dzou@cs.hku.hk)}
% Ji Q. Ren \& Yevgeny LeNet \\
% Department of Computational Neuroscience \\
% University of the Witwatersrand \\
% Joburg, South Africa \\
% \texttt{\{robot,net\}@wits.ac.za} \\
% \AND
% Coauthor \\
% Affiliation \\
% Address \\
% \texttt{email}
}
\begin{document}

\maketitle

\begin{abstract}
The predominant success of diffusion models in generative modeling has spurred significant interest in understanding their theoretical foundations. In this work, we propose a feature learning framework aimed at analyzing and comparing the training dynamics of diffusion models with those of traditional classification models. Our theoretical analysis demonstrates that diffusion models, due to the denoising objective, are encouraged to learn more balanced and comprehensive representations of the data. In contrast, neural networks with a similar architecture trained for classification tend to prioritize learning specific patterns in the data, often focusing on easy-to-learn components. To support these theoretical insights, we conduct several experiments on both synthetic and real-world datasets, which empirically validate our findings and highlight the distinct feature learning dynamics in diffusion models compared to classification.
\end{abstract}

\section{Introduction}

Diffusion models \citep{ho2020denoising,song2021scorebased} have emerged as a powerful class of generative models for content synthesis and have demonstrated state-of-the-art generative performance in a variety of domains, such as  computer vision \citep{dhariwal2021diffusion,peebles2023scalable}, acoustic \citep{kong2021diffwave,chen2021wavegrad} and biochemical \citep{hoogeboom2022equivariant,watson2023novo}. Recently, many works have employed (pre-trained) diffusion models to extract useful representations for tasks other than generative modelling, and demonstrated surprising capabilities in classical tasks such as image classification with little-to-no tuning \citep{mukhopadhyay2023diffusion,xiang2023denoising,li2023your,clark2024text,yang2023diffusion,jaini2024intriguing}. Compared to discriminative models trained with supervised learning, diffusion models not only are able to achieve comparable recognition performance \citep{li2023your}, but also demonstrate exceptional out-of-distribution transferablity \citep{li2023your,jaini2024intriguing} and improved classification robustness \citep{chen2023robust}. 

The significant representation learning power suggests diffusion models are able to extract meaningful features from training data. 
Indeed, the core of diffusion models is to estimate the data distribution through progressively denoising noisy inputs over several iterative steps. This inherently views data distribution as a composition of multiple latent features and therefore learning the data distribution corresponds to learning the underlying features. Nevertheless, it remains unclear 
\begin{center}
    \textit{how feature learning emerges during the training of diffusion models and whether the feature learning process is different to supervised learning.} 
\end{center}
% Answer to such questions can elucidate the ability of diffusion models to extract meaningful features from training data, which are crucial for downstream task adaptation. In addition, it can provide insights as to which features are learned in the training process and highlight the difference to supervised learning and thus enable a

Regardless of the ground-breaking success of diffusion models, the theoretical understanding is still in its infancy. Existing analysis on diffusion models has mostly focused on theoretical guarantees in terms of distribution estimation and sampling convergence. Several works have derived statistical estimation errors between distribution generated by diffusion models to ground-truth distribution \citep{oko2023diffusion,pmlrv235zhang24bv,chen2023score}, showing that diffusion models achieve a minimax optimal rate under certain assumptions on the true density \citep{oko2023diffusion,pmlrv235zhang24bv}. Algorithmically, \cite{NEURIPS202306abed94,han2024neural} studied the estimation error of diffusion models trained with gradient descent using kernel methods. \cite{shah2023learning,gatmiry2024learning,chen2024learning} introduced algorithms based on diffusion models for learning Gaussian mixture models. In addition, given access to sufficiently accurate score estimation, \cite{lee2022convergence,lee2023convergence,chen2023sampling,li2023towards} proved the convergence guarantees of sampling in (score-based) diffusion models.
Despite showing provable guarantees for diffusion models, existing theories are limited to the generative aspects of diffusion models, namely distribution learning and sampling. To the best of our knowledge, \textit{no theoretical analysis} is performed to elucidate the \textit{feature learning process} in diffusion models.

\textbf{Notations.}
We make use of the following notations throughout the paper. We use $\| \cdot \|$ to denote $L_2$ norm for vectors and Frobenius norm for matrices, unless mentioning otherwise. We use $O(\cdot), \Omega(\cdot), \Theta(\cdot), o(\cdot), \omega(\cdot)$ for the big-O, big-Omega, big-Theta, small-o, small-omega notations. We write $\widetilde O(\cdot)$ to hide (poly)logarithmic factors and similar notations hold for $\widetilde \Omega(\cdot)$ and $\widetilde \Theta(\cdot)$.   For a binary condition $\gC$, we let $\sone(\gC) = 1$ if $\gC$ is true and $\sone(\gC) = 0$ otherwise.

\subsection{Our main results}

\begin{wrapfigure}{r}{0.48\textwidth}
\vspace{-25pt}
  \begin{center}
    \includegraphics[width=0.4\textwidth]{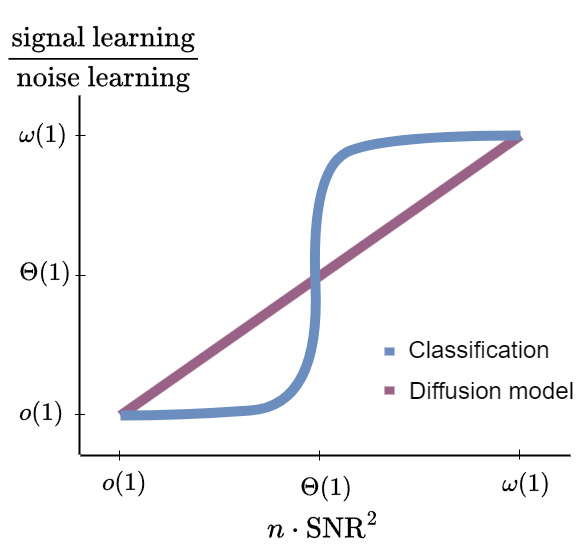}
  \end{center}
  \vspace{-10pt}
  \caption{Illustration of the ratio of signal learning to noise learning when varying $n \cdot \SNR^2$, where $\SNR \coloneqq \| \bmu\|/(\sigma_\xi \sqrt{d})$. We show diffusion model tends to study more balanced signal and noise while classification has a sharp phase transition and tends to focus on learning either signal or noise.}
  \label{fig:demo_snr}
  \vspace{-5pt}
\end{wrapfigure}
In this work, we develop a theoretical framework 
that studies feature learning dynamics of diffusion models and compares  with classification. 
Inspired by the image data structure, we employ a multi-patch data distribution $\bx = [ \bmu_y, \bxi ]$ for both classification and diffusion model training. 
We consider a binary-class data setup with $y = \pm1$ as the data label and $\bmu_1, \bmu_{-1} \in \sR^d$ are two fixed orthogonal vectors, i.e., $\bmu_1 \perp \bmu_{-1}$, representing the signal.
On the other hand, $\bxi$ is the label-independent noise, which is randomly sampled from a Gaussian distribution with standard deviation $\sigma_\xi$. 
% Such a data assumption has been widely adopted for analyzing training dynamics for neural networks \citep{cao2022benign,chen2022towards,kou2023benign,allenzhu2023towards}.

In order to elucidate the difference of feature learning dynamics for the two tasks, we adopt a \textit{two-layer convolutional neural network} with \textit{quadratic activation}. For diffusion model, we consider a weight-sharing setting for the first and second layer, which is commonly considered for analyzing autoencoders \citep{nguyen2021analysis,cui2024high}. For classification, we fix the second layer weights to be $\pm1$, following \cite{cao2022benign,kou2023benign}. In other words, the classifier can be viewed as attaching a fixed linear head to the intermediate layer of the diffusion model. Given a training dataset of $n$ samples from the multi-patch data distribution, we use \textit{gradient descent} to minimize the empirical  logistic loss for classification and the DDPM loss \citep{ho2020denoising} with expectation over the diffusion noise. 

Under the above settings, we investigate the differences of feature learning dynamics \citep{allenzhu2023towards,cao2022benign,zou2023benefits,huang2023understanding,jiang2024unveil,huang2024comparison,lu2024benign,mengbenign} between diffusion model and classification. We quantify the feature learning in terms of signal learning and noise learning, measured through the alignment between the network weights $\bw$ to the directions of signal/noise respectively, i.e., $ |\langle \bw, \bmu_y \rangle|$, $ |\langle \bw, \bxi \rangle|$. We present the following (informal) results for the two learning paradigms.
% that compare the feature learning trajectories of the two learning paradigms. 

\begin{theorem}[Informal]
\label{them:main_informal}
Let $\SNR \coloneqq \| \bmu\|/(\sigma_\xi \sqrt{d})$ be the signal-to-noise ratio. We can show 
\begin{itemize}[leftmargin=0.2in]
    \item For \textbf{diffusion model}, $|\langle \bw, \bmu_y \rangle|, |\langle \bw, \bxi \rangle|$ exhibit linear growth initially and there exists a stationary point along the path of the training dynamics that satisfies $|\langle  \bw, \bmu_y\rangle|/|\langle \bw, \bxi  \rangle| = \Theta(n \cdot \SNR^2)$. 

    \item For \textbf{classification}, $|\langle \bw, \bmu_y \rangle|, |\langle \bw, \bxi \rangle|$ exhibit exponential growth initially and when $n \cdot \SNR^2 \ge \beta$ for some constant $\beta >1$, $|\langle  \bw, \bmu_y\rangle|/|\langle \bw, \bxi  \rangle| = \omega(1)$, and when $n \cdot \SNR^2 <1/\beta$, $|\langle \bw, \bmu_y  \rangle|/|\langle  \bw, \bxi\rangle| = o(1)$.
\end{itemize}
\end{theorem}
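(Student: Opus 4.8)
The plan is to reduce both training trajectories to a small closed system of scalar recursions for the \emph{feature coefficients} $\langle\bw_j^{(t)},\bmu_1\rangle$, $\langle\bw_j^{(t)},\bmu_{-1}\rangle$, and $\langle\bw_j^{(t)},\bxi_i\rangle$ (indexed by filter $j$ and sample $i$), following the signal--noise decomposition of \citep{cao2022benign,kou2023benign}. The first step is to write both objectives in closed form. For classification the logit $\sum_j a_j\sum_p\langle\bw_j,\bx^{(p)}\rangle^2$ is an explicit quadratic form in $\bW$, so $\nabla_{\bw_j}$ of the empirical logistic loss is explicit once we name the loss derivatives $\ell_i'\coloneqq\ell'(y_iF(\bW,\bx_i))$. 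For the diffusion model I would integrate out the forward noise $\beps\sim\gN(\bzero,\bI)$ and the time, using only Gaussian moments --- the key identity being $\sE_\beps[(\bw^\top\beps)(\bw^\top(a_t\bx_0+b_t\beps))^2]=2a_tb_t\langle\bw,\bx_0\rangle\|\bw\|^2$ for the DDPM schedule $\bx_t=a_t\bx_0+b_t\beps$ --- which turns the DDPM objective into an explicit polynomial in $\bW$ whose lowest-degree nontrivial term is (after averaging over samples) $-4\,\sE_t[a_tb_t]\sum_j\langle\bw_j,\bx_0\rangle\|\bw_j\|^2$. Combining this with $\bmu_1\perp\bmu_{-1}$ and the high-probability near-orthogonality of the $\bxi_i$'s (valid for $d\gg n$) converts the GD recursion on $\bW$ into the desired coupled scalar recursions, with off-diagonal interactions carried as controlled error terms.

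The second step, where the qualitative contrast appears, is the early-phase analysis. For classification, while the weights are small the logit is $o(1)$, so $\ell_i'=-\tfrac12+o(1)$ is essentially constant and $\nabla_{\bw_j}$ of the logit is \emph{linear} in $\bw_j$; each coefficient therefore obeys a multiplicative recursion $v^{(t+1)}=(1+\eta\lambda)v^{(t)}+(\text{lower order})$, with $\lambda_{\mathrm{sig}}=\Theta(\|\bmu\|^2)$ for the signal (reinforced coherently by the $\Theta(n)$ same-label samples, whose count cancels the $1/n$ averaging) and $\lambda_{\mathrm{noise}}=\Theta(\sigma_\xi^2 d/n)$ for each noise coefficient (reinforced only by the single sample that owns it). Thus signal and noise grow exponentially with rates in ratio $\lambda_{\mathrm{sig}}/\lambda_{\mathrm{noise}}=\Theta(n\cdot\SNR^2)$. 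For the diffusion model, the lowest-degree term above contributes a gradient component $\propto\|\bw_j^{(t)}\|^2$ along $\bx_0$; since $\|\bw_j^{(t)}\|^2$ is pinned at $\Theta(\sigma_0^2 d)$ by the random bulk during the early phase, this is an (approximately) \emph{constant} forcing term, so every feature coefficient instead grows linearly, with per-step increments $\Theta(\eta\|\bmu\|^2\sigma_0^2 d)$ along the signal and $\Theta(\eta\sigma_\xi^2 d\,\sigma_0^2 d/n)$ along each noise direction --- ratio again $\Theta(n\cdot\SNR^2)$, but now realized by both directions advancing together.

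The third step is to upgrade these local pictures to the stated conclusions. For the diffusion model I would follow the recursions past the linear regime until the cubic self-interaction terms $\langle\bw_j,\bmu_y\rangle^2$ and $\langle\bw_j,\bxi_i\rangle^2$ become comparable to the forcing --- at which point the restricted dynamics has an (approximate) stationary point --- and verify via a monotonicity/trapping argument that the trajectory passes through a neighborhood of it; since the linear-phase increments along signal and noise are in ratio $\Theta(n\cdot\SNR^2)$ and the higher-order corrections remain lower order up to that point, the ratio $|\langle\bw,\bmu_y\rangle|/|\langle\bw,\bxi\rangle|$ at this stationary point is $\Theta(n\cdot\SNR^2)$. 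For classification I would run the exponential-growth comparison: after $T\asymp\log(1/\sigma_0)/(\eta\min\{\lambda_{\mathrm{sig}},\lambda_{\mathrm{noise}}\})$ iterations --- with $\sigma_0,\eta,T$ chosen so that the larger coefficient reaches $\Theta(1)$ while every $\ell_i'$ stays $\Theta(1)$ throughout --- the ratio of the two coefficients is $\asymp(\SNR\sqrt d)\exp\!\big(\eta(\lambda_{\mathrm{sig}}-\lambda_{\mathrm{noise}})T\big)$, which is $\omega(1)$ once $n\cdot\SNR^2\ge\beta$ and $o(1)$ once $n\cdot\SNR^2\le 1/\beta$ for a suitable constant $\beta>1$ absorbing the polylog and $\Theta(\cdot)$ factors; the phase transition is sharp precisely because the gap between the two exponential rates is amplified over $T$ iterations.

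I expect the diffusion part to be the main obstacle. After the closed form is in hand, the classification analysis follows the established two-stage signal--noise template adapted to the quadratic activation; but the diffusion recursion is genuinely nonlinear (degree six in $\bW$), so I must simultaneously (i) show $\|\bw_j^{(t)}\|^2$ stays $\Theta(\sigma_0^2 d)$ long enough for the linear-growth phase to be meaningful, (ii) keep the cross-terms between distinct $\bxi_i$'s and between signal and noise negligible over a super-constant number of iterations with no contraction available, and (iii) locate the relevant stationary point and show the trajectory enters its basin. Making the time-averaging over the DDPM schedule interact cleanly with the weight-norm control in (i) is the most delicate piece.
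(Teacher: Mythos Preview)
Your proposal is essentially the paper's approach: closed-form expectation of the DDPM loss to isolate the forcing term $-\Theta(\alpha_t\beta_t)\|\bw_r\|^2\langle\bw_r,\bx_0\rangle$, a first stage in which $\|\bw_r\|^2=\Theta(\sigma_0^2 d)$ is pinned so that signal and noise inner products grow linearly with increment ratio $\Theta(n\cdot\SNR^2)$, followed by a second stage where higher-order residuals catch up and a stationary point with the same ratio is exhibited; for classification, the standard two-stage exponential-growth comparison \`a la \citep{cao2022benign,kou2023benign} adapted to quadratic activation.

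Two small corrections worth noting. First, the paper does \emph{not} average over the diffusion time $t$: it decouples training across timesteps and analyzes a fixed $t$ with $\alpha_t,\beta_t=\Theta(1)$, so your ``time-averaging'' concern disappears (replace $\sE_t[a_tb_t]$ by $\alpha_t\beta_t$ throughout). Second, the paper does not prove the trajectory is trapped near the stationary point; it only (i) tracks the ratio through the second stage until the residual terms $E_{r,t,\mu_j}^k,E_{r,t,\xi_i}^k$ reach the order of the dominant terms, and (ii) separately verifies that a stationary point exists satisfying the same scale conditions---your proposed monotonicity/trapping argument would prove more but is not needed for the informal statement.
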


Theorem \ref{them:main_informal} highlights differences in the feature learning process between diffusion models and classification. Especially in the regime where $n \cdot \SNR^2= \Theta(1)$, classification is sensitive to changes in SNR and tends to learn either the signal $\bmu_y$ or the noise $\bxi$. In contrast, diffusion model learns both signal and noise to the same order. Such a claim is visualized in Figure \ref{fig:demo_snr}. 

% learning speed during the early stage of training, where the growth rate is quadratic for classification and linear for diffusion model. In addition, the final learning outcomes are largely different. 

We believe our framework represents the \textit{first}  attempt to systematically investigate  feature learning within diffusion models, potentially uncovering novel insights into the intriguing properties of diffusion models, including but not limited to critical window \citep{sclocchi2024phase,li2024critical}, shape bias \citep{jaini2024intriguing}, classification robustness \citep{chen2023robust}, feature composition and dependence \citep{okawa2024compositional,yang2025dynamics,han2025can}.

\subsection{Related Work}
\textbf{Theoretical analysis of diffusion model.}
Existing theoretical guarantees for diffusion models focus on distribution estimation and sampling. For distribution estimation, \cite{oko2023diffusion} proved that diffusion models achieve a nearly minimax optimal estimation error where the true density is defined over a bounded Besov space. \cite{pmlrv235zhang24bv} extended the minimax optimality to more general sub-Gaussian densities with sufficient smoothness. When the density is supported on a low-dimensional subspace, 
% The minimax optimality of diffusion model is later proved to hold for a more general class of densities that are sub-Gaussian and satisfy certain degree of smoothness . F
diffusion models avoid curse of dimensionality with an estimation rate depending only on the intrinsic dimension \citep{oko2023diffusion,chen2023score}. 
% Besides statistical guarantees, several studies approach the distribution learning problem from an algorithmic perspective. 
Furthermore, \cite{shah2023learning,gatmiry2024learning,chen2024learning} introduced algorithms based on diffusion models for learning a mixture of Gaussians. 
% have shown gradient descent can provably learn the distribution of well-separated spherical Gaussian mixtures. 
Other works provided guarantees of diffusion model trained by gradient descent \citep{NEURIPS202306abed94,han2024neural,wang2024evaluating}. 
% under the choice of a random feature model \citep{NEURIPS202306abed94} and neural tangent kernel regime \citep{han2024neural}. \cite{wang2024evaluating} analyzed the convergence of denoising score matching objective under gradient descent. 
For sampling, \cite{lee2022convergence,lee2023convergence,chen2023sampling,li2023towards} have shown (score-based) diffusion models converge polynomially under sufficiently accurate score estimation. Recent studies also aimed to accelerate the convergence via strategies such as consistency training \citep{song2023consistency,li2024towards}, advanced design of the reverse transition kernel \citep{huang2024reverse}, higher-order approximation \citep{pmlrv235li24ad} and parallelization \citep{chen2024accelerating,gupta2024faster}. In addition, \cite{li2024critical} theoretically verified critical windows of feature emergence during the sampling process, provided accurate score estimation.

\textbf{Theoretical analysis on (denoising) autoencoders.}
% Diffusion models can be viewed as multi-level denoising autoencoders \citep{xiang2023denoising}. There exists extensive research on theoretical guarantees for autoencoders \textit{without denoising}. Most of the works focus on linear autoencoders \citep{kunin2019loss,oftadeh2020eliminating,NEURIPS2020_e33d974a,bao2020regularized} while only a few works analyzed non-linear autoencoders, either in the lazy training regime \citep{nguyen2021benefits} or the mean-field regime \cite{nguyen2021analysis}.  Training dynamics of non-linear autoencoders has also been studied under population gradient descent  \citep{pmlrv202shevchenko23a,kogler2024compression}  and online gradient descent \citep{refinetti2022dynamics}. On the other hand, training dynamics of denoising autoencoder has been studied with a linear network \citep{pretorius2018learning} and in the high-dimensional asymptotic limit \citep{cui2024high}. 
Diffusion models can be viewed as multi-level denoising autoencoders \citep{xiang2023denoising}. While there is extensive research on the theoretical guarantees of autoencoders without denoising, most studies focus on linear autoencoders \citep{kunin2019loss,oftadeh2020eliminating,NEURIPS2020_e33d974a,bao2020regularized}. In contrast, only a limited number of works analyze non-linear autoencoders, primarily in the lazy training regime \citep{nguyen2021benefits} or the mean-field regime \citep{nguyen2021analysis}. Additionally, the training dynamics of non-linear autoencoders have been investigated under population gradient descent \citep{pmlrv202shevchenko23a,kogler2024compression} and online gradient descent \citep{refinetti2022dynamics}. On the other hand, the training dynamics of denoising autoencoders have been studied in the context of linear networks \citep{pretorius2018learning} and in the high-dimensional asymptotic limit \citep{cui2024high}.
% Thus, even for (denoising) autoencoders, feature learning dynamics is not well-understood.
% \cite{nguyen2021analysis} studies two-layer weight-tied autoencoder 

\textbf{Diffusion model for representation learning.}
% Apart from the generative applications, diffusion models have been leveraged for representation learning. 
Pre-trained diffusion models are shown to learn powerful representation, which is useful for downstream tasks such as classification \citep{mukhopadhyay2023diffusion,xiang2023denoising,li2023your,clark2024text,yang2023diffusion}, semantic segmentation \citep{baranchuk2022labelefficient,zhao2023unleashing,yang2023diffusion}. Moreover, many works have found intriguing properties of diffusion models used as classifier, including its ability to understand shape bias \citep{jaini2024intriguing} and improved adversarial robustness \citep{chen2023robust}. For more detailed exposition, we refer to the recent survey on this matter \citep{fuest2024diffusion}.

\section{Problem setting}
\label{sect:problem_set}

This section introduces the problem settings for both diffusion model and  classification, including the data model, neural network functions as well as training objectives and algorithm. 
% We consider the following data setup, which adapts from the settings in \citep{cao2022benign}. 

\begin{definition}[Data distribution]
\label{def_data_model}
 Each data sample consists of two patches, as $\bx = [ \bx^{(1)\top}, \bx^{(2)\top} ]^\top$, where each patch is generated as follows:
\begin{itemize}[leftmargin=0.3in]
\setlength\itemsep{0.001em}
    \item Sample $y \in \{ -1, 1\}$ uniformly with $\sP(y = -1) = \sP(y = 1) = 1/2$.
    
    \item Given two orthogonal signal vectors $\bmu_1, \bmu_{-1}$, with $\bmu_1 \perp \bmu_{-1}$, we set $ \bx^{(1)} = \bmu_y$, i.e., $\bx^{(1)} = \bmu_{1}$ if $y = 1$ and $\bx^{(1)} = \bmu_{-1}$ if $y = -1$. For simplicity, we assume $\| \bmu_1 \| = \| \bmu_{-1}\| = \| \bmu \|$.

    \item Set $\bx^{(2)} = \bxi$ where $\bxi \sim \gN(0, \sigma_\xi^2 (\bI - \bmu_1\bmu_1^\top \| \bmu_1 \|^{-2} - \bmu_{-1} \bmu_{-1}^\top \|\bmu_{-1} \|^{-2}  ) )$. 
\end{itemize}   
\end{definition}

This multi-patch data model reflects the structure of image data, where each image consists of multiple patches, and only a subset of the patches are relevant to the class label, while the rest contribute as background noise. 
This data model has been employed in several existing studies \citep{allenzhu2023towards,cao2022benign,kou2023benign,mengbenign,zou2023benefits}. A difference in our model is the use of two orthogonal signal vectors, in contrast to previous works, that employ a single signal vector of the form $y \bmu$. Additionally, while our analysis focuses on a two-patch setting for simplicity, it can be readily extended to multi-patch data. 
We let $\SNR \coloneqq \| \bmu\|/(\sigma_\xi \sqrt{d})$ denote the signal-to-noise ratio.

\paragraph{Neural network functions.}
We study two-layer convolutional-type neural networks for both  diffusion model and classification. 
For \textbf{diffusion model}, we consider neural network with quadratic activation and shared first-layer and second-layer weights:
\begin{align*}
\vspace{-5pt}
    &\bbf(\bW,  \bx ) = \left[  \bbf_1(\bW, \bx^{(1)})^\top ,  \bbf_2( \bW, \bx^{(2)} )^\top \right]^\top \in \sR^{2d}, \\
    \text{where } \quad &\bbf_p \big(\bW, \bx^{(p)} \big) = \frac{1}{\sqrt{m}} \sum_{r=1}^m  \langle \bw_{r} , \bx^{(p)}  \rangle^2 \bw_{r} , \quad p = 1, 2
\vspace{-5pt}
\end{align*}
where $m$ denotes the network width and $r$ represents the neuron index. 

For \textbf{classification}, we consider a similar neural network with quadratic activation where second-layer weights are fixed to be $\pm 1$ (instead of $\bw_{r}$):
\begin{align*}
\vspace{-5pt}
    &f(\bW, \bx) = F_{1} (\bW_{1}, \bx) - F_{-1} (\bW_{-1}, \bx), \\
    \text{where } \quad &F_j(\bW, \bx) = \frac{1}{m} \sum_{r=1}^m \langle \bw_{j,r}, \bx^{(1)} \rangle^2 +  \frac{1}{m} \sum_{r=1}^m \langle \bw_{j,r}, \bx^{(2)} \rangle^2 . 
\vspace{-5pt}
\end{align*}
% The network corresponds to fixing the second layer weights to be $\pm 1/m$. 
We remark that the use of polynomial activation, such as quadratic, cubic and ReLU with polynomial smoothing is not uncommon in existing theoretical works \citep{cao2022benign,jelassi2022towards,zou2023benefits,huang2023graph,meng2023per}. The aim is to better elucidate the separation between signal and noise learning dynamics. 

% \dz{also comment that we consider two patches for the simicity, the analysis can be straightforwardly extended to the case of multiple patches.} 
% \highlight{added after definition 2.1}

\paragraph{Training objectives and algorithm.}
For \textbf{diffusion model}, 
% the goal is to estimate the distribution of input images through the process of gradual denoising. 
we employ the objective of denoising diffusion probabilistic model (DDPM) \citep{ho2020denoising}. We let $\bx_{0} = [\bx^{(1)}, \bx^{(2)} ]^\top \in \sR^{2d}$ to denote input image. For a given diffusion time step $t \in [0, T]$, we sample $\bx_{t} = \alpha_t \bx_{0} + \beta_t \beps_{t}$ for $\beps_{t} \sim \gN(0,\bI)$ and a pre-determined noise schedule coefficients $\{\alpha_t, \beta_t \}_{t=0}^T$. 
% In this work, we do not make any assumption over the noise schedule. 
% \highlight{add more introduction to DDPM loss}
The aim of diffusion models  is to estimate the mean of the posterior distribution of the noise $\beps_t$ conditioned on $\bx_t$. This is achieved by training a neural network $f$ to predict the noise added at each step $t$. The DDPM loss is given by $\sE_{\bx_0, \beps_t, t} \| f( \bx_t) - \beps_t\|^2$ up to some re-scaling \citep{ho2020denoising}. We consider a finite-sample setup given by the training images $\{ \bx_i\}_{i=1}^n$ sampled according to Definition \ref{def_data_model} and thus the empirical DDPM loss at time step $t$ becomes
\begin{align*}
    L_F(\bW_t) &= \frac{1}{2n} \sum_{i=1}^n \sE_{\beps_{t,i}} \left\| \bbf(\bW_t, \bx_{t,i}) -  \beps_{t,i}   \right\|^2 =\frac{1}{2n} \sum_{i=1}^n \sE_{\beps_{t,i}} \left\| \bbf(\bW_t, \alpha_t \bx_{0,i} + \beta_t \beps_{t,i}) -  \beps_{t,i}   \right\|^2,
    % &= \frac{1}{2n} \sum_{i=1}^n \sE_{\beps_{t,i}} \left\{   \Big\| F_1(\bW_t, \bx_{t,i}^{(1)})^\top - \beps_{t,i}^{(1)}  \Big\|^2 + \Big\| F_1(\bW_t, \bx_{t,i}^{(2)})^\top - \beps_{t,i}^{(2)}  \Big\|^2 \right\}
\end{align*}
where we let $\bx_{0,i} = \bx_i$ and  $\bx_{t,i} = \alpha_t \bx_{0,i} + \beta_t \beps_{t,i}$. Here, we decouple the training of neural network at each diffusion time step with separate weight parameters, a strategy also adopted in \citep{shah2023learning} for simplicity of  analysis. 

Unlike \citep{han2024neural}, where each sample $i$ is associated with a single noise $\beps_{t,i} \sim \gN(0, \bI)$, we here consider taking the expectation over the noise distribution, which aligns with the practical setting where multiple noises are sampled for each input data. We use gradient descent to train diffusion model starting from random Gaussian initialization $\bw_{r,t}^0 \sim \gN(0, \sigma_0^2 \bI)$ as $ \bw_{r,t}^{k+1}  = \bw_{r,t}^{k} -  \eta \nabla_{\bw_{r,t}} L_F(\bW_t^k)$, where the superscript $k$ is the iteration index.
% \dz{using $L_F$ is a bit confusing, as this looks like the population version of $L_S$. if $L_F$ does not appear many times, we can consider to change it.}

For \textbf{classification}, we minimize the empirical logistic loss over the training data $\{ \bx_i, y_i \}_{i=1}^n$, 
\begin{equation*}
\vspace{-3pt}
    L_S(\bW) = \frac{1}{n} \sum_{i=1}^n \ell\big(y_i f(\bW, \bx_i)\big), \quad \ell(z) = \log\big(1 + \exp(-z)\big).
\end{equation*}
The same as diffusion model, we use gradient descent to train the neural network starting from random Gaussian initialization $\bw_{j,r}^0 \sim \gN(0, \sigma_0^2 \bI)$.
% where we can write the update as $ \bw_{j,r}^{k+1} = \bw_{j,r}^k - \frac{\eta}{nm} \sum_{i=1}^n \ell_i'^{k} \langle \bw_{j,r}^k, \bx_i^{(1)} \rangle j  y_i \bx_i^{(1)} - \frac{\eta}{nm} \sum_{i=1}^n \ell_i'^{k} \langle \bw_{j,r}^k, \bxi_i \rangle j y_i \bxi_i$, 
% \begin{align}
%     \bw_{j,r}^{k+1} &= \bw_{j,r}^k - \frac{\eta}{nm} \sum_{i=1}^n \ell_i'^{k} \langle \bw_{j,r}^k, \bx_i^{(1)} \rangle j  y_i \bx_i^{(1)} - \frac{\eta}{nm} \sum_{i=1}^n \ell_i'^{k} \langle \bw_{j,r}^k, \bxi_i \rangle j y_i \bxi_i, \label{eq:gd_class} 
% \end{align}
% where we let $\ell_i'^{k} = \ell' \big(y_i f(\bW^k, \bx) \big)$.

% \begin{align*}
%      \bw_{r,t}^{k+1}  &= \bw_{r,t}^{k} -  \eta \nabla_{\bw_{r,t}} L_F(\bW_t^k), \quad \bw_{r,t}^0 \sim \gN(0, \sigma_0^2 \bI)
% \end{align*}

\section{Main results}

Our main results are based on the following conditions.

\begin{condition}
\label{cond:main}
Suppose the following holds.
\begin{enumerate}[leftmargin=0.2in]
\setlength\itemsep{0.005em}
    \item Dimension $d$ is sufficiently large with $d = \widetilde \Omega \big( n^{7} m^{5} \big)$.

    \item The sample size $n$ satisfies $n = \widetilde \Omega(1)$.

    \item The standard deviation of initialization $\sigma_0$ is chosen such that 
    $\widetilde O(n^2m \sigma_\xi^{-1} d^{-1}) \leq \sigma_0  \leq \widetilde O\big(   \min\{    m^{-1/6} d^{-1/6} \sigma_\xi^{1/3} n^{-1/3}, m^{-1/6} d^{-7/12} \sigma_\xi^{-1/3} n^{1/3}, d^{-3/4} \sigma_\xi^{-1} n \}\big).$

    \item The learning rate $\eta$ satisfies $\eta \leq \widetilde O \big( \min\{ nm \sigma_0 \sigma_\xi^{-1} d^{-1/2}, nm \sigma_\xi^{-2} d^{-1} \} \big)$.

    \item The signal strength satisfies $\| \bmu \| = \Theta(1)$ and noise variation $\sigma_\xi$ satisfies $\widetilde O(\max\{ n^{5/2} m^{7/4} d^{-5/8}, n m^{1/6} d^{-1} \}) \leq \sigma_\xi \leq \widetilde O(d^{-1/4})$.

    \item The noise coefficients for diffusion model satisfy $\alpha_t, \beta_t = \Theta(1)$. 
\end{enumerate}
\end{condition}

% \begin{remark}
 Condition \ref{cond:main} requires $d$ to be sufficiently large to ensure learning in an over-parameterized setting. Furthermore, we require the sample size to be lower bounded by a constant subject to logarithmic factors.
The upper bound on the initialization $\sigma_0$ is to ensure random initialization does not significantly affect the signal and noise learning dynamics. 
The lower bound on $\sigma_0$ is required to bound the noise inner product at initialization for properly minimizing the training loss of classification.  The learning rate $\eta$ is chosen sufficiently small for the convergence analysis for the classification. 
% Similar assumptions are commonly employed to derive learning guarantees for classification \citep{chatterji2021finite,cao2022benign,kou2023benign}.  
Lastly for diffusion model, we consider the constant order for {    $\| \bmu\|$ and  further restrict the range of $\sigma_\xi$}. Despite these conditions, our setting covers a broad range of $n \cdot\SNR^2$, i.e., $\widetilde O(n d^{-1/2})\leq n\cdot \SNR^2 \leq \widetilde O(\min\{ n^{-4} m^{-7/2} d^{1/4}, n^{-1} m^{-1/3} d\})$. We also consider constant order of $\alpha_t, \beta_t$ to avoid degeneracy in learning dynamics.    
% \end{remark}

% $ \widetilde O(n^{5} m^{3.5} d^{-5/4}, n^2 m^{1/3} d^{-2}) \leq \sigma_\xi^2$

% $\widetilde O(n^5m^{3.5} d^{-1/4}, n^2 m^{1/3} d^{-1} )\leq \sigma_\xi^2 d \leq \widetilde O(d^{1/2})$

% \dz{do we need to add some referennce to support the assumptions, e.g., the assumptions is widely made in xxx to ensure the learning guarantee/benign overfitting for classification model? }

% \paragraph{Main results for diffusion model.}
% Based on Condition \ref{cond:main}, the main results for diffusion model are included in Theorem \ref{them:diff_main} below.

% For diffusion model, we  show the signal and noise learning also depends on the SNR condition.

We present the main results for diffusion model (Theorem \ref{them:diff_main}) and classification (Theorem \ref{thm:main_class}).

\begin{theorem}[Diffusion model]
\label{them:diff_main}
Under Condition \ref{cond:main}, suppose $m = \Theta(1)$. With probability at least $1-\delta$ (for any $\delta > 0$), there exists a stationary point $\bW^*_t$ along the training trajectory of diffusion model, i.e., $\nabla_{\bw_{r,t}}  L_F(\bW_t^*) =  0$ that satisfies \emph{(1)} $\langle \bw_{r,t}^{*}, \bmu_{j} \rangle = \Theta(\langle \bw_{r',t}^{*} , \bmu_{j'} \rangle)$, \emph{(2)} $\langle \bw_{r,t}^{*}, \bxi_i \rangle = \Theta( \langle \bw_{r',t}^{*}, \bxi_{i'} \rangle )$, and \emph{(3)} for all $j = \pm 1, r \in [m]$, $i \in [m]$, 
\begin{equation*}
\vspace*{-3pt}
    |\langle  \bw_{r,t}^*, \bmu_j \rangle|/|\langle \bw_{r,t}^*, \bxi_i \rangle| = \Theta( n \cdot \SNR^2 ),
\end{equation*}
{with $\langle \bw_{r,t}^{*}, \bmu_j \rangle = \Theta(1)$ if $n \cdot \SNR^2 = \Omega(1)$,  and $\langle \bw_{r,t}^{*}, \bxi_i \rangle = \Theta(1)$ if $n^{-1} \cdot \SNR^{-2} = \Omega(1)$.}
\end{theorem}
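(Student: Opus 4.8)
The plan is to reduce the DDPM objective to a closed-form population loss, track the coupled growth of the feature projections through a bootstrap argument, and then exhibit the stationary point as a limit point of gradient descent and read off its scale.

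\emph{Step 1: closed form and the driving/restoring structure.} Since $\beps_{t,i}\sim\gN(\bzero,\bI_{2d})$ factors across patches, $L_F(\bW_t)=\frac{1}{2n}\sum_{i=1}^n\sum_{p\in\{1,2\}}\sE_\beps\|\bbf_p(\bW_t,\alpha_t\bv_i^{(p)}+\beta_t\beps^{(p)})-\beps^{(p)}\|^2$ with $\bv_i^{(1)}=\bmu_{y_i}$, $\bv_i^{(2)}=\bxi_i$. Expanding the square and evaluating the Gaussian expectations by Wick's theorem, each summand becomes an explicit polynomial in $\langle\bw_{r,t},\bmu_j\rangle$, $\langle\bw_{r,t},\bxi_i\rangle$, $\langle\bw_{r,t},\bw_{r',t}\rangle$ and $\|\bw_{r,t}\|$. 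Two structures dominate: a \emph{driving term} $-\frac{4\alpha_t\beta_t}{\sqrt m}\sum_r\langle\bw_r,\bv_i^{(p)}\rangle\|\bw_r\|^2$ coming from $-2\sE\langle\bbf_p,\beps^{(p)}\rangle$ (linear in the feature projections, with gradient $-\frac{4\alpha_t\beta_t}{\sqrt m}(\|\bw_r\|^2\bv_i^{(p)}+2\langle\bw_r,\bv_i^{(p)}\rangle\bw_r)$), and a \emph{restoring term} $\frac{1}{m}\sum_{r,r'}\langle\bw_r,\bw_{r'}\rangle\sE[(\bw_r^\top\bx^{(p)})^2(\bw_{r'}^\top\bx^{(p)})^2]$ of degree six, which stays subdominant until the projections grow. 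I then track $\gamma_{r,j,t}^k\coloneqq\langle\bw_{r,t}^k,\bmu_j\rangle$, $\rho_{r,i,t}^k\coloneqq\langle\bw_{r,t}^k,\bxi_i\rangle$ and the norms $\|\bw_{r,t}^k\|$, and prove by induction on $k$ (a bootstrap using the step-size and dimension bounds of Condition~\ref{cond:main}) that the component of $\bw_{r,t}^k$ orthogonal to $\mathrm{span}\{\bmu_{\pm1},\bxi_1,\dots,\bxi_n\}$ stays within a constant factor of that of $\bw_{r,t}^0$, so $\|\bw_{r,t}^k\|^2=\Theta(\sigma_0^2 d)$ throughout and $\sum_j(\gamma_{r,j,t}^k)^2$, $\sum_i(\rho_{r,i,t}^k)^2$ remain negligible beside $\sigma_0^2 d$; consequently the dynamics collapse to a coupled system of scalar recursions in $\{\gamma_{r,j,t}^k\}$, $\{\rho_{r,i,t}^k\}$.

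\emph{Step 2: early linear growth and the ratio.} While all projections are $o(1)$ the restoring term is negligible and each recursion is governed by the driving gradient; using label balance ($|\{i:y_i=j\}|=\Theta(n)$) and near-orthonormality of the $\bxi_i$'s (off-diagonal Gram entries are $O(\sigma_\xi^2\sqrt{nd})\ll\sigma_\xi^2 d$ by Condition~\ref{cond:main}), one gets $\gamma_{r,j,t}^{k+1}-\gamma_{r,j,t}^k=\Theta(\eta\,\alpha_t\beta_t\,m^{-1/2}\|\bmu\|^2\|\bw_{r,t}^k\|^2)$ and $\rho_{r,i,t}^{k+1}-\rho_{r,i,t}^k=\Theta(\eta\,\alpha_t\beta_t\,m^{-1/2}n^{-1}\sigma_\xi^2 d\,\|\bw_{r,t}^k\|^2)$, both constant-order in $k$. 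Hence the projections grow linearly, and since they are uniformly pushed in the same (positive) direction from $\Theta(\sigma_0\|\bmu\|)$, resp.\ $\Theta(\sigma_0\sigma_\xi\sqrt d)$, initial values, after a short phase $\gamma_{r,j,t}^k=\Theta(\gamma_{r',j',t}^k)$, $\rho_{r,i,t}^k=\Theta(\rho_{r',i',t}^k)$ and $\gamma_{r,j,t}^k/\rho_{r,i,t}^k=\Theta(n\|\bmu\|^2/(\sigma_\xi^2 d))=\Theta(n\cdot\SNR^2)$ — which are exactly properties (1)--(3). A monotonicity argument then shows these order relations persist for all later $k$: once the restoring term switches on, the corrections it produces are of lower order and symmetric across $(r,j)$, resp.\ $(r,i)$, so they cannot change the relative scales.

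\emph{Step 3: the stationary point and its magnitude.} The population loss $L_F$ is a nonnegative polynomial; on the bounded region in which Step~1 confines the iterates it is smooth, so with the step size of Condition~\ref{cond:main} the descent lemma makes $L_F(\bW_t^k)$ nonincreasing and convergent, hence $\nabla L_F(\bW_t^k)\to\bzero$, and the trajectory has a limit point $\bW_t^*$ with $\nabla_{\bw_{r,t}}L_F(\bW_t^*)=\bzero$, which by Step~2 inherits (1)--(3). To pin the absolute scale, project the stationarity equation onto the dominant feature direction and balance the driving against the restoring contribution: because $\bbf_p$ is degree-$3$ homogeneous in $\bW_t$, the loss contributes a degree-$3$ (driving) and a degree-$6$ (restoring) piece along the radial direction, so the balance fixes the dominant projection at a cube root of (driving source)$/$(restoring curvature), and tracking how the $m^{-1/2}$ normalization propagates through this ratio (the restoring contribution being $\Theta(1)$ in $m$ by incoherence of the $\bw_{r,t}^0$'s) yields the saturation scale $\Theta(m^{-1/6})$. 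Thus when $n\cdot\SNR^2=\Omega(1)$ the signal dominates, $\langle\bw_{r,t}^*,\bmu_j\rangle=\Theta(m^{-1/6})$, whence $\langle\bw_{r,t}^*,\bxi_i\rangle=\langle\bw_{r,t}^*,\bmu_j\rangle/\Theta(n\cdot\SNR^2)=\Theta(n^{-1}\SNR^{-2}m^{-1/6})$; when $n^{-1}\cdot\SNR^{-2}=\Omega(1)$ the noise dominates and $\langle\bw_{r,t}^*,\bxi_i\rangle=\Theta(m^{-1/6})$ directly.

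\emph{Main obstacle.} The delicate part is making Steps~1--3 rigorous \emph{simultaneously}: one must show that throughout training none of the many subleading polynomial contributions — the $\langle\bw_r,\bv\rangle$-dependent pieces of the fourth Gaussian moments, the slowly growing off-diagonal Gram entries $\langle\bw_r,\bw_{r'}\rangle$, and the multiplicative $\bw_r$-direction feedback in the gradient — ever overtakes the driving and restoring terms we keep, and that the ratio and same-order relations are genuine invariants of the dynamics rather than artifacts of the early phase. The quantitative inequalities in Condition~\ref{cond:main} (especially $d=\widetilde\Omega(\sigma_\xi^{-2}m^{7/6}n^{5/2})$ and the upper bound on $\sigma_0$) are calibrated precisely so that this bootstrap closes; carrying it out, together with the accompanying fourth-moment bookkeeping, is the bulk of the work.
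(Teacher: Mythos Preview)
Your high-level decomposition --- closed-form loss, linear growth phase, then a balance argument at stationarity --- matches the paper. But the bootstrap invariant you commit to in Step~1 is wrong, and it propagates to break Step~3. You assert that $\sum_j(\gamma_{r,j,t}^k)^2$ and $\sum_i(\rho_{r,i,t}^k)^2$ remain negligible beside $\sigma_0^2 d$, so that $\|\bw_{r,t}^k\|^2=\Theta(\sigma_0^2 d)$ throughout. This holds only in an initial phase. At the stationary point you are trying to characterize, $\langle\bw_{r,t}^*,\bmu_j\rangle=\Theta(m^{-1/6})$ (in the $n\cdot\SNR^2=\Omega(1)$ regime), hence the signal contribution to $\|\bw_{r,t}^*\|^2$ is $\Theta(m^{-1/3}\|\bmu\|^{-2})$, which under Condition~\ref{cond:main} dominates $\sigma_0^2 d$. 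The paper handles this by a genuine two-stage analysis: a first stage where $\|\bw_{r,t}^k\|^2=\Theta(\sigma_0^2 d)$ and the projections grow linearly until they overwhelm initialization, then a second stage where $\|\bw_{r,t}^k\|^2=\Theta\bigl((\langle\bw_{r,t}^k,\bmu_j\rangle^2+n\cdot\SNR^2\langle\bw_{r,t}^k,\bxi_i\rangle^2)\|\bmu\|^{-2}\bigr)$ and one tracks the residual terms until they match the driving term. If you keep $\|\bw\|^2=\Theta(\sigma_0^2 d)$ frozen, the balance in Step~3 gives $\gamma^*\sim m^{-1/2}\sigma_0^{-2}d^{-1}$, not $m^{-1/6}$.

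Two smaller points. First, the ``incoherence of the $\bw_{r,t}^0$'s'' does not persist: the paper shows $\langle\bw_{r,t}^k,\bw_{r',t}^k\rangle=\Theta(\|\bw_{r,t}^k\|^2-\sigma_0^2 d)$, i.e.\ the neurons become \emph{coherent} as they all align with the same features, and this is what makes the cross-neuron restoring term $\Theta(1)$ in $m$ rather than $\Theta(1/m)$. Second, your descent-lemma route to the stationary point is different from the paper, which does not prove convergence of gradient descent but instead directly solves the stationarity system $\langle\nabla_{\bw_{r,t}}L_F,\bmu_j\rangle=\langle\nabla_{\bw_{r,t}}L_F,\bxi_i\rangle=0$ under the concentration hypotheses established at the end of the second stage, and verifies that a solution with the claimed scales exists. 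Your limit-point argument could in principle work, but it needs the ratio and concentration invariants to hold for \emph{all} $k$, whereas the paper only carries them to the iteration $T_2$ where the balance first kicks in; extending them past $T_2$ is nontrivial precisely because the driving and restoring terms are then comparable.
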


% \begin{theorem}[Signal and noise learning with diffusion model]
% \label{them:diff_main}
% Let \highlight{$T = $}, and suppose $n \cdot \SNR^2 = \widetilde O(1), n^{-1}  \cdot \SNR^{-2} = \widetilde O(1)$. Then there exists $0\leq k \leq T$ such that with probability at least $1-\delta$ (for any $\delta > 0$), we have 
% \begin{enumerate}[leftmargin=0.2in]
%     \item Neural network finds a stationary point, i.e., $\langle \nabla_{\bw_{r,t}}  L_F( \bW^k_t ), \bmu_j  \rangle = \langle\nabla_{\bw_{r,t}} L_F( \bW^k ) , \bxi_i \rangle = 0$  for all $j = \pm1, r \in [m], i \in [n]$. 
    
%     \item The ratio of signal learning and noise learning satisfies $| \langle \bw_{r,t}^k, \bmu_j  \rangle | / | \langle \bw_{r,t}^k, \bxi_i \rangle | = \Theta( n \cdot \SNR^2 )$, for all $j = \pm1, r \in [m], i \in [n]$. 
% \end{enumerate}
% \end{theorem}

Theorem \ref{them:diff_main} states that
diffusion model training encourages balanced signal and noise learning, i.e., the neurons share the same order in the directions of signals and noise. Notably, the ratio between signal and noise learning is governed by the SNR, with a stationary magnitude as $n\cdot\SNR^2$.

% \paragraph{Main results for classification.}
% In comparison, we provide the main results for classification as follows. 

\begin{theorem}[Classification]
\label{thm:main_class}
Let $T_\mu = \widetilde \Theta(\eta^{-1} m \| \bmu\|^{-2})$ and $T_\xi =  \widetilde \Theta(\eta^{-1} nm \sigma_\xi^{-2} d^{-1})$ and $\delta > 0$. Under Condition \ref{cond:main},  suppose $m = \Omega(\log(n/\delta))$. There exist two absolute constants $\overline{C}>\underline{C}>0$  such that with probability at least $1-\delta$, it satisfies that: 
\begin{itemize}[leftmargin=0.15in]
    \item When $n \cdot \SNR^2 \ge \overline{C}$, there exists $0\leq k \leq T_\mu$ such that $L_S(\bW^k) \leq 0.1$ and 
\begin{equation*}
    \max_r | \langle \bw_{j,r}^k , \bmu_j\rangle | \geq 2, \, \forall j = \pm 1, \qquad \max_{j,r,i} | \langle \bw_{j,r}^k, \bxi_i\rangle| = o(1).
\end{equation*}

\item When $n \cdot \SNR^2  \le \underline{C}$, there exists $0 \leq k \leq T_\xi$  such that $L_S(\bW^k) \leq 0.1$ and 
\begin{equation*}
    \max_r | \langle \bw_{y_i,r}^k, \bxi_i \rangle | \geq 1, \, \forall i \in [n], \qquad \max_{j,r,y} |\langle \bw_{j,r}^k, \bmu_y \rangle |= o(1).
\end{equation*}
\end{itemize}
\vspace*{-7pt}
\end{theorem}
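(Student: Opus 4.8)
The plan is to port the signal--noise decomposition machinery developed for two-layer CNN classifiers (as in \citep{cao2022benign,kou2023benign}) to the present orthogonal two-signal data model, and then run a two-stage bootstrap argument. First I would write each neuron as
\[
\bw_{j,r}^k = \bw_{j,r}^0 + \sum_{y=\pm1}\gamma_{j,r,y}^k\,\frac{\bmu_y}{\|\bmu\|^2} + \sum_{i=1}^n\rho_{j,r,i}^k\,\frac{\bxi_i}{\|\bxi_i\|^2},
\]
and, since the activation is quadratic with $\nabla_{\bw_{j,r}}f(\bW,\bx_i) = \tfrac{2j}{m}\big(\langle\bw_{j,r},\bmu_{y_i}\rangle\bmu_{y_i} + \langle\bw_{j,r},\bxi_i\rangle\bxi_i\big)$, project the gradient-descent update onto $\bmu_j$ and onto $\bxi_i$. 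Writing $\ell'_i$ for $\ell'\big(y_if(\bW^k,\bx_i)\big)<0$ and using $\bmu_1\perp\bmu_{-1}$, $\bmu_{\pm1}\perp\bxi_i$, the high-probability estimates $\|\bxi_i\|^2 = \widetilde\Theta(\sigma_\xi^2 d)$ and $|\langle\bxi_i,\bxi_{i'}\rangle| = \widetilde O(\sigma_\xi^2\sqrt d)$ for $i\neq i'$, and the initialization bounds $|\langle\bw_{j,r}^0,\bmu_j\rangle| = \widetilde\Theta(\sigma_0\|\bmu\|)$, $|\langle\bw_{j,r}^0,\bxi_i\rangle| = \widetilde\Theta(\sigma_0\sigma_\xi\sqrt d)$, one obtains clean multiplicative recursions for the ``diagonal'' projections,
\[
\langle\bw_{j,r}^{k+1},\bmu_j\rangle = \langle\bw_{j,r}^k,\bmu_j\rangle\Big(1 - \tfrac{2\eta\|\bmu\|^2}{nm}\sum_{i:y_i=j}\ell'_i\Big),\qquad \langle\bw_{y_i,r}^{k+1},\bxi_i\rangle \approx \langle\bw_{y_i,r}^k,\bxi_i\rangle\Big(1 - \tfrac{2\eta\|\bxi_i\|^2}{nm}\ell'_i\Big),
\]
while the off-diagonal quantities $\langle\bw_{j,r}^k,\bmu_{-j}\rangle$, $\langle\bw_{-y_i,r}^k,\bxi_i\rangle$ and the cross-sample parts of $\rho_{j,r,i'}$ obey contracting or $\widetilde O(1/\sqrt d)$-perturbed recursions, hence remain at the order of their initialization. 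Making these recursions and the initialization/near-orthogonality estimates rigorous is the first block of work.

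Next I would run Stage~I under the inductive hypothesis that $y_if(\bW^k,\bx_i) = O(1)$ for all $i$ (hence $|\ell'_i| = \Theta(1)$) and that all off-diagonal quantities stay near their initial scale. The diagonal recursions then give exponential growth, $|\langle\bw_{j,r}^k,\bmu_j\rangle| = \widetilde\Theta(\sigma_0\|\bmu\|)\,(1+\Theta(\eta\|\bmu\|^2/m))^k$ and $|\langle\bw_{y_i,r}^k,\bxi_i\rangle| = \widetilde\Theta(\sigma_0\sigma_\xi\sqrt d)\,(1+\Theta(\eta\sigma_\xi^2 d/(nm)))^k$, the essential point being that the ratio of the two log-growth rates equals $n\cdot\SNR^2$. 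Let $k^\star$ be the first step at which the faster-growing family reaches a constant: this is $\widetilde\Theta(m/(\eta\|\bmu\|^2)) = T_\mu$ when $n\cdot\SNR^2\ge\overline C$, and $\widetilde\Theta(nm/(\eta\sigma_\xi^2 d)) = T_\xi$ when $n\cdot\SNR^2\le\underline C$. By the rate ratio, at step $k^\star$ the slower family has multiplied its (polynomially small) initialization only by the power $1/(n\cdot\SNR^2)$, resp.\ $n\cdot\SNR^2$, of $\widetilde\Theta(\sigma_0^{-1})$, so for a suitable absolute constant $\overline C>1$, resp.\ $\underline C<1$, and $\sigma_0$ in the two-sided window of Condition~\ref{cond:main}, the slower family is still $o(1)$; meanwhile $F_{y_i},F_{-y_i} = O(1)$ throughout Stage~I (they are normalized sums of squares of $O(1)$ quantities), so $y_if(\bW^k,\bx_i) = O(1)$ and the induction closes.

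Stage~II then converts ``the winner has reached a constant'' into the stated conclusions. When $n\cdot\SNR^2\ge\overline C$: once some $|\langle\bw_{j,r}^k,\bmu_j\rangle|$ reaches a constant it does so for essentially all $r$ (the projections grow with the same multiplicative factor from comparable initial magnitudes), so $F_{y_i}(\bW_{y_i}^k,\bx_i) = \Theta\big(\max_r\langle\bw_{y_i,r}^k,\bmu_{y_i}\rangle^2\big)$ while $F_{-y_i}(\bW_{-y_i}^k,\bx_i) = o(1)$ (every term feeding it is off-diagonal); choosing $\overline C$ large enough that the signal projections can be driven past $2$ before the noise projections leave $o(1)$ gives $y_if(\bW^k,\bx_i)\ge 2.25$ for all $i$, hence $L_S(\bW^k)\le\log(1+e^{-2.25})\le0.1$, which is the first bullet. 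The second bullet is the mirror image: when $n\cdot\SNR^2\le\underline C$ the noise projections win the race, $F_{y_i}\ge\Theta\big(\max_r\langle\bw_{y_i,r}^k,\bxi_i\rangle^2\big)$ with the signal projections $o(1)$, yielding $\max_r|\langle\bw_{y_i,r}^k,\bxi_i\rangle|\ge1$ and again $L_S(\bW^k)\le0.1$.

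I expect the main obstacle to be Stage~I: one must run a single induction controlling four coupled families at once---the diagonal signal projections, the diagonal noise projections, all off-diagonal projections, and the loss derivatives $\ell'_i$---while both diagonal families grow exponentially, which makes the argument sensitive to the constants in the exponents and to the interplay between the scale of $\sigma_0$ and $\mathrm{poly}(d)$. The two genuinely delicate points are: (i) showing the losing family remains $o(1)$, not merely $O(1)$, all the way to $k^\star$, which is exactly where the constant-factor separation $n\cdot\SNR^2\ge\overline C>1$ (rather than just $>1$) and the two-sided window for $\sigma_0$ are needed; and (ii) showing the $\langle\bxi_i,\bxi_{i'}\rangle$ cross-terms never accumulate enough to corrupt the diagonal noise coefficients, which is where the dimension lower bound $d = \widetilde\Omega(\cdot)$ of Condition~\ref{cond:main} enters. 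The remaining steps are essentially bookkeeping.
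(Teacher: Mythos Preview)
Your Stage~I plan is essentially the paper's: the same signal--noise decomposition of $\bw_{j,r}^k$, the same multiplicative recursions for the diagonal projections $\langle\bw_{j,r}^k,\bmu_j\rangle$ and $\langle\bw_{y_i,r}^k,\bxi_i\rangle$, and the same race argument in which the ratio of log-growth rates equals $n\cdot\SNR^2$, yielding $k^\star=T_\mu$ or $T_\xi$. Your Stage~II, however, is genuinely different from the paper's. You argue directly that once the winning family's average projection reaches $2$, one has $F_{y_i}\ge 4$ and $F_{-y_i}=o(1)$, hence $L_S\le\log(1+e^{-3})<0.1$ already at $k=T_1$. The paper instead constructs a reference point $\bW^*$ (each $\bw^*_{j,r}=\bw^0_{j,r}$ plus a prescribed signal or noise correction), proves the telescope $\|\bW^k-\bW^*\|^2-\|\bW^{k+1}-\bW^*\|^2\ge 2\eta L_S(\bW^k)-\eta\epsilon$, and extracts $\min_k L_S\le\epsilon$ over a second window; the resulting bound $\sum_k L_S(\bW^k)=O(\eta^{-1}m\|\bmu\|^{-2})$ is then used, together with a loss-derivative-ratio estimate $|\ell'_i|/|\ell'_{i'}|\le C$, to certify the losing family stays $o(1)$ throughout. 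Your shortcut is valid for the threshold $0.1$ as stated (indeed the paper's own first-stage Lemma~5.1 delivers everything you need at $k=T_1$), while the paper's route yields convergence to arbitrary $\epsilon$ and tracks the losing family over the entire second window, which is what downstream generalization bounds would need.

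One point you underplay: in the noise-memorization case the claim ``once some $|\langle\bw_{y_i,r}^k,\bxi_i\rangle|$ reaches a constant it does so for essentially all $r$'' is not automatic, because the noise recursion carries the $\langle\bxi_i,\bxi_{i'}\rangle$ cross-terms and a sign that could in principle flip before the projection leaves $o(1)$. The paper closes this via an anti-concentration bound ensuring \emph{every} $|\langle\bw_{j,r}^0,\bxi_i\rangle|$ is bounded below (this is precisely where the lower bound $\sigma_0\ge\widetilde O(n^2m\sigma_\xi^{-1}d^{-1})$ in Condition~\ref{cond:main} is consumed), from which sign invariance $\sign\langle\bw_{y_i,r}^k,\bxi_i\rangle=\sign\langle\bw_{y_i,r}^0,\bxi_i\rangle$ follows for all $k$; only then can the average $\frac1m\sum_r|\langle\bw_{y_i,r}^k,\bxi_i\rangle|$ be shown to grow in lockstep with the max. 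You should flag this as a third delicate point alongside your (i) and (ii).
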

Theorem \ref{thm:main_class} establishes a sharp phase transition between signal and noise learning under classification training. The transition is precisely determined by $n \cdot \SNR^2$. That is, when $n \cdot \SNR^2 \geq \overline{C}$ for some constant $\overline{C} > 0$, the neural network learns signal to achieve small training loss. On the contrary, when $n \cdot \SNR^2\le \underline{C}$ for some constant $\underline{C}\in (0, \overline{C})$, the neural network overfits noise to achieve convergence.   
With standard techniques, such as in \citep{cao2022benign}, we can show signal and noise learning corresponds to the regime of benign and harmful overfitting respectively. To the best of our knowledge, this is the first result that shows separation under the constant of $n \cdot \SNR^2$.

% \begin{theorem}
% \label{them:signal_class}
% Let $T = \widetilde \Theta(\eta^{-1} m \| \bmu\|^{-2})$. Under Condition \ref{cond:main} and if $n \cdot \SNR^2 = \Omega(1)$,  there exists  $0\leq k \leq T$ such that with probability at least $1-\delta$ (for any $\delta > 0$), the following are satisfied
% \begin{enumerate}[leftmargin=0.2in]
%     \item Training loss converges $L_S(\bW^k) \leq 0.1$. 
%     \item Neural network learns signal to constant order, i.e., $\max_r |\langle \bw^{k}_{j,r}, \bmu_j \rangle| \geq 2$ for all $j = \pm 1$.
%     \item Neural network does not learn noise, i.e., $\max_{j,r,i} |\langle \bw_{j,r}^k, \bxi_i \rangle| =  o(1)$. 
% \end{enumerate}
% \end{theorem}

% \begin{theorem}
% \label{them:noise_class}
% Let $T = \widetilde \Theta(\eta^{-1} nm \sigma_\xi^{-2} d^{-1})$. Under Condition \ref{cond:main} and if $n^{-1} \cdot \SNR^{-2} = \Omega(1)$,  there exists  $0\leq k \leq T$ such that with probability at least $1-\delta$ (for any $\delta > 0$), the following are satisfied
% \begin{enumerate}[leftmargin=0.2in]
%     \item Training loss converges: $L_S(\bW^k) \leq 0.1$. 
%     \item Neural network learns noise to constant order, i.e., $\max_r |\langle \bw^{k}_{y_i,r}, \bxi_i \rangle| \geq 1$ for all $i \in [n]$.
%     \item Neural network does not learn signal, i.e., $\max_{j,r,y} |\langle \bw_{j,r}^k, \bmu_y \rangle| =  o(1)$. 
% \end{enumerate}
% \end{theorem}

\textbf{Diffusion model learns balanced features while classification learn dominant features.}
Comparing the learning outcomes of diffusion model and classification,  we reveal a critical difference that \emph{diffusion models learn more balanced features depending on the SNR conditions, while classification is prone to learning either signal or noise predominately}. 
This can be best understood in the case of $n \cdot \SNR^2 = \Theta(1)$. By Theorem \ref{thm:main_class}, we have either signal learning or noise dominating the learning process in classification, while Theorem \ref{them:diff_main} suggests signal and noise learning are in the same order in diffusion models. The theoretical findings corroborate the empirical observations that the neural network trained for classification is prone to overly rely on learning a specific pattern that is easier to learn, a process known as shortcut learning \citep{geirhos2020shortcut}. Meanwhile, diffusion models tend to learn low-frequency, global patterns \citep{jaini2024intriguing}, which helps to improve the classification robustness \citep{chen2024your,chen2023robust}.

% We characterize such a difference in the following corollary for the downstream tasks where $ n^{-1} \cdot \SNR^{-1} = \Omega(1)$.

% \begin{corollary}
% Let 
% \end{corollary}
% \begin{proof}
% From 

% \end{proof}

\section{Proof overview}
% For both the analysis of diffusion model and  classification, we track the gradient descent dynamics in terms of the inner products between the weight parameters and the signal and noise vectors. This allows to explicitly quantify the magnitude of signal and noise learning along the training process. 
This section outlines the proof roadmap for the main results.
For \textit{diffusion model}, the mean-squared loss, the joint training of two layers as well as learning in the direction of initialization, pose significant challenges for the analysis. We adopt a two-stage analysis and characterize the stationary points based on the derived results at the end of the second stage. 
For \textit{classification}, the two-stage analysis is similar as in \citep{cao2022benign,kou2023benign} where the first stage learns signal or noise vector sufficiently fast and the second stage shows convergence in the training loss where the learned scale difference in the first stage is maintained. However for classification analysis, we highlight two critical differences compared to existing works \citep{cao2022benign,kou2023benign,mengbenign}, i.e., a constant $n \cdot \SNR^2$ condition and quadratic activation.

\subsection{Diffusion model}

We first simplify the DDPM loss by taking the expectation with respect to the added diffusion noise:
\begin{equation*}
    L_F (\bW_t) = d + \frac{1}{2n} \sum_{i=1}^n \sum_{p=1}^2 \Big( \underbrace{ \frac{1}{m} \sE_{\beps_{t,i}} \big\| \sum_{r=1}^m \langle \bw_{r,t}, \bx_{t,i}^{(p)}\rangle^2 \bw_{r,t}  \big\|^2}_{I_1} - \underbrace{\frac{4\alpha_t \beta_t}{\sqrt{m}} \sum_{r=1}^m \| \bw_{r,t} \|^2 \langle \bw_{r,t} , \bx_{0,i}^{(p)} \rangle}_{I_2} \Big),
\end{equation*}
where for $p = 1, 2$, $\bx_{t,i}^{(p)} = \alpha_t \bx_{0,i}^{(p)} + \beta_t \beps_{t,i}^{(p)}$, with $\bx_{0,i}^{(1)} = \bmu_{y_i}$ and $\bx_{0,i}^{(2)} = \bxi_i$ and $\beps_{t,i}^{(1)}, \beps_{t,i}^{(2) } \sim \gN(0,\bI)$. We further simplify $I_1$ in Lemma \ref{lemma:obj_exp} (in Appendix). We remark that $I_1$ corresponds to a regularization term that regulates the magnitude and alignment of neurons, while $I_2$ corresponds to the main learning term. We highlight that apart from the signal and noise directions, the learning term $I_2$ also includes the initialization direction $\bw_{r,t}^0$, which further complicates the analysis.
% Under the current setting, when either $\alpha_t$ or $\beta_t$ vanishes, the loss becomes dominated by the regularization term, causing the weights to converge towards zero. 

\textbf{First stage.}
In the first stage, where all the key quantities, including signal and noise inner products, weight norms and cross-neuron inner products remain close to their respective initialization, we can show the growth of the signal and noise inner products is approximately linear:
\begin{equation}
\begin{array}{rl}
    \langle  \bw_{r,t}^{k+1} , \bmu_j \rangle & = \, \langle  \bw_{r,t}^{k} , \bmu_j \rangle + \Theta({\eta } \| \bw_{r,t}^k \|^2 \| \bmu \|^2)   \\ [3pt]
    \langle  \bw_{r,t}^{k+1} , \bxi_i \rangle & = \, \langle  \bw_{r,t}^{k} , \bxi_i \rangle + \Theta( \eta n^{-1}   \| \bw_{r,t}^k \|^2 \| \bxi_i  \|^2  )
\end{array}
\label{eq:first_stage_main}
\end{equation}
In addition, the change of $\bw_{r,t}^k$ along direction $\bw_{r',t}^0$ can be properly controlled such that the scale of key quantities remain unaffected and the simplification in \eqref{eq:first_stage_main} is valid throughout the first stage. The updates in \eqref{eq:first_stage_main} immediately suggest that once the growth terms of the inner products dominate their initialization, we obtain $|\langle  \bw_{r,t}^{k} , \bmu_j \rangle|/|\langle \bw_{r,t}^{k} , \bxi_i \rangle| = \Theta(n \cdot \SNR^2)$. This marks the end of the first stage, as described in the following lemma.
% This allows to simplify the analysis for the initial iterations and we have the following scale at the end of the first stage.
\begin{lemma}
\label{lemma:first_stage_diff}
Under Condition \ref{cond:main}, there exists  an iteration $T_1 = \max\{ T_\mu, T_\xi \}$, where $T_\mu = \widetilde\Theta(\sqrt{m} \sigma_0^{-1} d^{-1} \| \bmu\|^{-1} \eta^{-1})$ and $T_\xi = \widetilde \Theta( n \sqrt{m} \sigma_0^{-1} \sigma_\xi^{-1} d^{-3/2} \eta^{-1} )$ such that for all $k \leq T_1$,   $\| \bw_{r,t}^k \|^2 = \Theta( \sigma_0^2 d ), \langle \bw_{r,t}^k, \bw_{r,t}^0 \rangle = \Theta( \sigma_0^2 d )$ for all $r \in [m], j = \pm1, i \in [n]$. Furthermore, we can show for all $j, j' = \pm1, r,r' \in [m], i,i' \in [n]$, 
\begin{itemize}
    % \vspace*{-5pt}
    \item $\langle \bw_{r,t}^{T_1}, \bmu_{j} \rangle = \Theta(\langle \bw_{r',t}^{T_1} , \bmu_{j'} \rangle)$,
    % \vspace*{-5pt}
    $\langle \bw_{r,t}^{T_1}, \bxi_i \rangle = \Theta( \langle \bw_{r',t}^{T_1}, \bxi_{i'} \rangle )$, and 
    % \vspace*{-5pt}
    
    \item $|\langle \bw_{r,t}^{T_1}, \bmu_j \rangle|/|\langle \bw_{r,t}^{T_1}, \bxi_i \rangle| = \Theta( n \cdot  \SNR^2 )$ ,
    % \vspace*{-5pt}
\end{itemize}
\end{lemma}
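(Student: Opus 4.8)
The plan is to establish Lemma~\ref{lemma:first_stage_diff} by a bootstrap (induction over the iteration index $k$) that freezes the slowly moving quantities at their initialization scale and shows that, throughout the first stage, the signal and noise inner products obey the clean linear recursion in \eqref{eq:first_stage_main}. First I would write the gradient-descent update $\bw_{r,t}^{k+1} = \bw_{r,t}^{k} - \eta\nabla_{\bw_{r,t}}L_F(\bW_t^k)$ explicitly, using the simplified loss and the closed form of $I_1$ from Lemma~\ref{lemma:obj_exp}, and split $-\eta\nabla_{\bw_{r,t}}L_F$ into the learning part $\frac{\eta}{2n}\sum_{i,p}\nabla_{\bw_{r,t}}I_2$ and the regularization part $-\frac{\eta}{2n}\sum_{i,p}\nabla_{\bw_{r,t}}I_1$. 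Since $\nabla_{\bw_{r,t}}\big(\|\bw_{r,t}\|^2\langle\bw_{r,t},\bv\rangle\big) = 2\langle\bw_{r,t},\bv\rangle\bw_{r,t} + \|\bw_{r,t}\|^2\bv$, projecting the learning part onto $\bmu_j$ picks out the dominant term $\|\bw_{r,t}\|^2\|\bmu_j\|^2$ from the $|\gS_j|$ samples of class $j$ (the companion term $\langle\bw_{r,t},\bmu_j\rangle^2$ is smaller by the factor $\langle\bw_{r,t},\bmu_j\rangle^2/(\|\bw_{r,t}\|^2\|\bmu_j\|^2) = \widetilde O(1/d)$ at the initialization scale), and projecting onto $\bxi_i$ picks out $\|\bw_{r,t}\|^2\|\bxi_i\|^2$ from sample $i$, the cross-sample terms $\|\bw_{r,t}\|^2\langle\bxi_{i'},\bxi_i\rangle$ with $i'\ne i$ being lower order after summation because $\langle\bxi_{i'},\bxi_i\rangle = \widetilde O(\sigma_\xi^2\sqrt d)$ while $d = \widetilde\Omega(n^4m^4)$ by Condition~\ref{cond:main}. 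This reproduces exactly the right-hand sides of \eqref{eq:first_stage_main}.

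Next I would run the induction. The hypotheses carried for all $k \le T_1$ are: (a) $\|\bw_{r,t}^k\|^2 = \Theta(\sigma_0^2 d)$ for every $r\in[m]$; (b) $|\langle\bw_{r,t}^k,\bmu_j\rangle| = o(\|\bw_{r,t}^k\|\,\|\bmu_j\|)$ and $|\langle\bw_{r,t}^k,\bxi_i\rangle| = o(\|\bw_{r,t}^k\|\,\|\bxi_i\|)$, which keeps the linearizations above valid; (c) $|\langle\bw_{r,t}^k,\bw_{r',t}^k\rangle| = \widetilde O(\sigma_0^2\sqrt d)$ for $r\ne r'$; and (d) all signal inner products stay of the same order across $r$ and across $j=\pm1$, and likewise all noise inner products across $r$ and $i$. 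Granting (a)--(d) at step $k$: the regularization part $\nabla_{\bw_{r,t}}I_1$ (a degree-five polynomial in the weights, a priori not negligible) is, after the expectation, dominated by a term proportional to $\|\bw_{r,t}^k\|^4\bw_{r,t}^k/m$, so it (i) multiplies $\|\bw_{r,t}^k\|^2$ by $1 - \widetilde O(\eta\sigma_0^4 d^2/m)$ and (ii) adds to $\langle\bw_{r,t}^k,\bmu_j\rangle$ and $\langle\bw_{r,t}^k,\bxi_i\rangle$ corrections smaller than the learning increments in \eqref{eq:first_stage_main} by a polynomial factor under the upper bound on $\sigma_0$ in Condition~\ref{cond:main}; hence \eqref{eq:first_stage_main} holds at step $k$. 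The main increments there are positive and uniform in $(r,j)$ and in $(r,i)$ up to constants, so summing over $k$ preserves (d); and since each is $\Theta(\eta\sigma_0^2 d\|\bmu\|^2/\sqrt m)$ in the signal direction and $\Theta(\eta\sigma_0^2\sigma_\xi^2 d^2/(n\sqrt m))$ in each noise direction, the total movement over $T_1 = \max\{T_\mu,T_\xi\}$ iterations stays $o(\sigma_0\sqrt d)$ in every coordinate block, which together with the $\widetilde O(1)$ cumulative multiplicative drift on the norm closes (a)--(c). The horizons $T_\mu$ and $T_\xi$ are chosen (with enough $\polylog$ headroom) precisely so that at $k=T_\mu$ the accumulated signal movement is a large $\polylog$ multiple of its $\Theta(\sigma_0\|\bmu\|)$ initialization and at $k=T_\xi$ the accumulated noise movement is a large $\polylog$ multiple of its $\Theta(\sigma_0\sigma_\xi\sqrt d)$ initialization, so the common order in (d) is well defined and signs have stabilized.

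It remains to read off the scales at $k=T_1$. On the standard high-probability event for the random initialization and the data we have $|\langle\bw_{r,t}^0,\bmu_j\rangle| = \Theta(\sigma_0\|\bmu\|)$, $|\langle\bw_{r,t}^0,\bxi_i\rangle| = \Theta(\sigma_0\sigma_\xi\sqrt d)$ and $\|\bxi_i\| = \Theta(\sigma_\xi\sqrt d)$, so solving \eqref{eq:first_stage_main} gives $|\langle\bw_{r,t}^{T_1},\bmu_j\rangle| = \Theta\big(\sigma_0\|\bmu\| + T_1\,\eta\sigma_0^2 d\|\bmu\|^2/\sqrt m\big)$ and $|\langle\bw_{r,t}^{T_1},\bxi_i\rangle| = \Theta\big(\sigma_0\sigma_\xi\sqrt d + T_1\,\eta\sigma_0^2\sigma_\xi^2 d^2/(n\sqrt m)\big)$. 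Plugging in $T_1 = \max\{T_\mu,T_\xi\}$ and distinguishing the regimes $T_\mu \ge T_\xi$ (equivalently $n\cdot\SNR = O(1)$) and $T_\xi > T_\mu$ ($n\cdot\SNR = \Omega(1)$), and substituting $\SNR = \|\bmu\|/(\sigma_\xi\sqrt d)$ with $\|\bmu\| = \Theta(1)$, both regimes yield $|\langle\bw_{r,t}^{T_1},\bmu_j\rangle|/|\langle\bw_{r,t}^{T_1},\bxi_i\rangle| = \Theta(n\cdot\SNR^2)$ (the $\polylog$ factors in $T_\mu,T_\xi$ cancel in the ratio); the same-order statements across $r$, $j$, $i$ are hypothesis (d) at $k=T_1$, and the norm claim is hypothesis (a).

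I expect the main obstacle to be the bootstrap itself carried over the long horizon: $T_1$ is of order $1/(\eta\sigma_0^2 d)$ (or larger), so every per-step error term — in particular the $I_1$-gradient corrections and the cross-sample noise interactions — must be shown to beat the main increments by a genuine polynomial margin, not merely by a constant, so they remain negligible after being summed $T_1$ times, both for the inner products and for $\|\bw_{r,t}^k\|^2$ itself. Breaking the circularity between ``$\|\bw_{r,t}^k\|$ and the cross-neuron inner products stay controlled'' and ``the $I_1$ contribution stays negligible'' requires ordering the bootstrap quantities carefully and exploiting the large-$d$ lower bounds and the two-sided bound on $\sigma_0$ in Condition~\ref{cond:main}; checking that all the resulting polynomial inequalities are implied by Condition~\ref{cond:main} is the most delicate bookkeeping in the argument.
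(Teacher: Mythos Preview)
Your proposal is correct and follows essentially the same bootstrap/induction scheme as the paper (their Lemma~D.3): freeze the slow quantities at initialization scale, show that the learning term $-\nabla I_2$ dominates the regularization $\nabla I_1$, obtain the linear recursions~\eqref{eq:first_stage_main}, and read off the scales at $T_1$. Two implementation points differ from the paper and are worth adjusting.

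First, the paper does \emph{not} run a single induction to $T_1$ with your coarse hypothesis~(b). It splits into two sub-stages: $0\le k\le \min\{T_\mu,T_\xi\}$ (both inner products at initial scale) and $\min\{T_\mu,T_\xi\}\le k\le T_1$ (one inner product has already grown by a factor $\widetilde\Theta(n\cdot\SNR)$ or $\widetilde\Theta(n^{-1}\SNR^{-1})$ past its initial scale). In the second sub-stage the paper carries the \emph{tight} upper bound on the grown inner product and invokes specific pieces of Condition~\ref{cond:main} (e.g.\ $\sigma_0\le\widetilde O(\|\bmu\|^{-2}n^{-1}\sigma_\xi d^{1/2})$ when $T_\mu\le T_\xi$, and $\SNR^{-1}=\widetilde O(d^{1/4})$ together with $\sigma_0\le\widetilde O(n\sigma_\xi^{-1}d^{-3/4})$ when $T_\xi<T_\mu$) to re-verify that the linearization in~\eqref{eq:first_stage_main} still holds. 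Your hypothesis~(b), $|\langle\bw_{r,t}^k,\bmu_j\rangle|=o(\|\bw_{r,t}^k\|\,\|\bmu_j\|)$, is weaker and does not by itself control the second-dominant term $\|\bw_{r,t}^k\|^4\langle\bw_{r,t}^k,\bmu_j\rangle\|\bmu_j\|^2$ tightly enough in the second sub-stage; you should mirror the paper's split and use the per-stage bound on the growing inner product.

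Second, your identification of $\nabla_{\bw_{r,t}}I_1$ as ``dominated by $\|\bw_{r,t}^k\|^4\bw_{r,t}^k/m$'' misses the cross-neuron part. In the paper's decomposition $I_1 = \sum_p L_{1,i}^{(p)} + L_{2,i}^{(p)}$, the gradient of $L_{2,i}^{(p)}$ contributes $\frac{2}{m}\sum_{r'\ne r}\beta_t^4\|\bw_{r,t}\|^2\|\bw_{r',t}\|^2\,\bw_{r',t}$, which after summing over $r'$ carries an $(m-1)/m=\Theta(1)$ prefactor, not $1/m$. Its projection onto $\bmu_j$ is therefore $\widetilde O(\sigma_0^5 d^2\|\bmu\|)$, a factor $m$ larger than your diagonal estimate. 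This does not break the argument---the correct bound is still polynomially smaller than the learning increment under Condition~\ref{cond:main}---but your claim~(ii) and the accompanying bookkeeping need this correction. (Your claim~(i) on the norm is fine, because the cross term hits $\bw_{r,t}$ through $\langle\bw_{r,t},\bw_{r',t}\rangle$, which your hypothesis~(c) controls.)
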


Lemma \ref{lemma:first_stage_diff} verifies that at the end of the first stage,
% verifies that throughout the first stage, $\| \bw_{r,t}^k \|^2, \langle \bw_{r,t}^k, \bw_{r,t}^0 \rangle = \Theta(\sigma_0^2 d)$, the inner product dynamics is linear and at the end of the first stage, the growth term dominants the initialization. Meanwhile, 
% \dz{This looks weird to me. I think in the first phase, the noise or feature learning can go beyond the initial scaling and grow to a relatively large order, e.g., $\tilde \Theta(1)$.} 
all the neurons are concentrated and the ratio is precisely determined by $n \cdot \SNR^2$. This is critically different compared to  classification where signal and noise learning exhibits exponential growth as we show later and thus shows a clear scale difference at the end of the first stage, even when $n \cdot \SNR^2 = \Theta(1)$.

\textbf{Second stage.}
The second stage aims to characterize when the dominant terms in the gradients along the key directions become no longer dominant. To this end, we decompose the gradient into
% {\small
% \begin{align*}
%     &\langle \nabla_{\bw_{r,t}} L(\bW_t^k),  \bmu_j \rangle = - \Theta\big(  \| \bw_{r,t}^k \|^2  \| \bmu \|^2 \big)  + E_{r,t,\mu_j}^k, \\[3pt]
%      &\langle \nabla_{\bw_{r,t}} L(\bW_t^k),  \bxi_i \rangle = - 
%     \Theta\big( n^{-1} \| \bw_{r,t}^k \|^2  \| \bxi_i \|^2\big) + E_{r,t,\xi_i}^k  , \\[3pt]
%     &\langle \nabla_{\bw_{r,t}} L(\bW_t^k),  \bw_{r,t}^0 \rangle   = -  \Theta\big( \big( \langle \bw_{r,t}^k, \bmu_j + \overline \bxi \rangle -  \| \bw_{r,t}^k \|^4\big) \langle \bw_{r,t}^{k}, \bw_{r,t}^0 \rangle + \| \bw_{r,t}^k \|^2 \langle \bw_{r,t}^0, \bmu_j + \overline\bxi \rangle \big) + E_{r,t,w^0}^k
% \end{align*}
% }
\resizebox{0.99\textwidth}{!}{
\begin{minipage}{\textwidth}
\begin{align*}
    &\langle \nabla_{\bw_{r,t}} L(\bW_t^k),  \bmu_j \rangle = 
    - \Theta\big(  \| \bw_{r,t}^k \|^2  \| \bmu \|^2 \big)  
    + E_{r,t,\mu_j}^k, \\[3pt]
    &\langle \nabla_{\bw_{r,t}} L(\bW_t^k),  \bxi_i \rangle = - 
    \Theta\big( n^{-1} \| \bw_{r,t}^k \|^2  \| \bxi_i \|^2\big) 
    + E_{r,t,\xi_i}^k  , \\[3pt]
    &\langle \nabla_{\bw_{r,t}} L(\bW_t^k),  \bw_{r,t}^0 \rangle =  
    -  \Theta\Big( \big( \langle \bw_{r,t}^k, \bmu_j + \overline \bxi \rangle 
    -  \| \bw_{r,t}^k \|^4\big) \langle \bw_{r,t}^{k}, \bw_{r,t}^0 \rangle   + \| \bw_{r,t}^k \|^2 \langle \bw_{r,t}^0, \bmu_j + \overline\bxi \rangle \Big) 
    + E_{r,t,w^0}^k,
\end{align*}
\end{minipage}
}
% \begin{equation*}
% \begin{array}{ll}
%       \langle \nabla_{\bw_{r,t}} L(\bW_t^k),  \bmu_j \rangle &= - \Theta\big(  \| \bw_{r,t}^k \|^2  \| \bmu \|^2 \big)  + E_{r,t,\mu_j}^k, \\[3pt]
%      \langle \nabla_{\bw_{r,t}} L(\bW_t^k),  \bxi_i \rangle &= - 
%     \Theta\big( n^{-1} \| \bw_{r,t}^k \|^2  \| \bxi_i \|^2\big) + E_{r,t,\xi_i}^k  , \\[3pt]
%     \langle \nabla_{\bw_{r,t}} L(\bW_t^k),  \bw_{r,t}^0 \rangle 
%     &= - 
%  \Theta\big( \big( \langle \bw_{r,t}^k, \bmu_j + \overline \bxi \rangle -  \| \bw_{r,t}^k \|^4\big) \langle \bw_{r,t}^{k}, \bw_{r,t}^0 \rangle + \| \bw_{r,t}^k \|^2 \langle \bw_{r,t}^0, \bmu_j + \overline\bxi \rangle \big) + E_{r,t,w^0}^k 
% \end{array}
% \end{equation*}
\noindent where $E_{r,t,\mu_j}^k, E_{r,t,\xi_i}^k, E_{r,t,w^0}^k$ are the residual terms of the gradients and we let $\overline{\bxi} = \frac{1}{n} \sum_{i=1}^n \bxi_i$. The following lemma shows before $E_{r,t,\mu_j}^k, E_{r,t,\xi_i}^k, E_{r,t,w^0}^k$ reach order as the dominant terms, the ratio of signal and noise inner products are preserved. 

\begin{lemma}
\label{lemma:second_stage_diffu}
There exists an iteration $T_2 > T_1$ with  $T_2 = \Theta(\max\{ \eta^{-1}  \sigma_0^{-2} d^{-1}, \eta^{-1}   n \sigma_0^{-2} \sigma_\xi^2   \})$ such that for all $j = \pm1, r \in [m], i \in [n]$ \emph{(1)} if $n \cdot \SNR^2 = \Omega(1)$, $\langle \bw_{r,t}^{T_2}, \bmu_j \rangle = \Theta(1 )$ and if $n^{-1} \cdot \SNR^{-2} = \Omega(1)$, $\langle \bw_{r,t}^{T_2}, \bxi_i \rangle = \Theta( 1)$; \emph{(2)} $E_{r,t,\mu_j}^{T_2} = \Theta( \| \bw_{r,t}^{T_2} \|^2  \| \bmu \|^2)$, $E_{r,t,\xi_i} = \Theta( n^{-1} \| \bw_{r,t}^{T_2} \|^2  \| \bxi_i \|^2 ), E_{r,t,w^0}^{T_2} = \Theta( ( \langle \bw_{r,t}^k, \bmu_j + \overline \bxi \rangle 
    -  \| \bw_{r,t}^k \|^4) \langle \bw_{r,t}^{k}, \bw_{r,t}^0 \rangle   + \| \bw_{r,t}^k \|^2 \langle \bw_{r,t}^0, \bmu_j + \overline\bxi \rangle )$ and \emph{(3)} for all $T_1 \leq k \leq T_2$, we have 
\begin{itemize}
    \item $\langle \bw_{r,t}^{k}, \bmu_{j} \rangle = \Theta(\langle \bw_{r',t}^{k} , \bmu_{j'} \rangle)$,
    $\langle \bw_{r,t}^{k}, \bxi_i \rangle = \Theta( \langle \bw_{r',t}^{k}, \bxi_{i'} \rangle )$, 
    
    \item $|\langle \bw_{r,t}^{k}, \bmu_j \rangle|/|\langle \bw_{r,t}^{k}, \bxi_i \rangle| = \Theta( n \cdot  \SNR^2 )$ ,
\end{itemize}
\end{lemma}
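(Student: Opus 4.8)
The plan is to continue the inductive argument of Lemma~\ref{lemma:first_stage_diff} past the linear regime, now tracking the growth until the dominant learning part of the gradient is neutralized by the residual. I would define $T_2$ to be the first iteration $k>T_1$ at which $E_{r,t,\mu_j}^{k}$ reaches the order of the leading term $\tfrac1{\sqrt m}\|\bw_{r,t}^{k}\|^{2}\|\bmu\|^{2}$; because the signal/noise ratio is preserved (see below), this is also the first time $E_{r,t,\xi_i}^{k}$ reaches the order of $\tfrac1{n\sqrt m}\|\bw_{r,t}^{k}\|^{2}\|\bxi_i\|^{2}$. The argument is one joint induction over $T_1\le k\le T_2$ maintaining: (i) the neuron norms are mutually comparable, $\|\bw_{r,t}^{k}\|^{2}=\Theta(\|\bw_{r',t}^{k}\|^{2})$; (ii) all signal inner products are mutually comparable and of one sign, and likewise all noise inner products; (iii) $|\langle\bw_{r,t}^{k},\bmu_j\rangle|/|\langle\bw_{r,t}^{k},\bxi_i\rangle|=\Theta(n\cdot\SNR^{2})$; (iv) the cross terms ($\langle\bw_r,\bw_{r'}\rangle$ for $r\neq r'$, $\langle\bxi_i,\bxi_{i'}\rangle$, $\langle\bmu_j,\bxi_i\rangle$) remain at their high-dimensional concentration scales; and (v) $E_{r,t,\cdot}^{k}$ stays strictly below the corresponding leading term for $k<T_2$.

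The engine is the per-step recursion. Using the simplified form of $I_1$ from Lemma~\ref{lemma:obj_exp} together with the explicit $I_2$, I would expand, for each $\bv\in\{\bmu_j,\bxi_i\}$,
\[
 \langle\nabla_{\bw_{r,t}}L_F(\bW_t^{k}),\bv\rangle=-\big(\text{leading learning term}\big)+E_{r,t,\bv}^{k},
\]
where the leading term is exactly the $\Theta(\tfrac1{\sqrt m}\|\bw_{r,t}^{k}\|^{2}\|\bmu\|^{2})$ / $\Theta(\tfrac1{n\sqrt m}\|\bw_{r,t}^{k}\|^{2}\|\bxi_i\|^{2})$ quantity from \eqref{eq:first_stage_main}, and $E_{r,t,\bv}^{k}$ collects (a) the cubic self-regularization gradient of $I_1$, of order $\tfrac1m\|\bw_{r,t}^{k}\|^{4}\langle\bw_{r,t}^{k},\bv\rangle$ plus cross-neuron corrections; (b) the quadratic-in-inner-product learning corrections; and (c) data-concentration errors, which the large-$d$ part of Condition~\ref{cond:main} keeps subdominant. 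Under the induction hypothesis $E_{r,t,\bv}^{k}$ is $o(1)$ times the leading term for $k<T_2$, so the two inner-product updates read $\langle\bw_{r,t}^{k+1},\bmu_j\rangle=\langle\bw_{r,t}^{k},\bmu_j\rangle+\eta\,\Theta(\tfrac1{\sqrt m}\|\bw_{r,t}^{k}\|^{2}\|\bmu\|^{2})(1-o(1))$ and $\langle\bw_{r,t}^{k+1},\bxi_i\rangle=\langle\bw_{r,t}^{k},\bxi_i\rangle+\eta\,\Theta(\tfrac1{n\sqrt m}\|\bw_{r,t}^{k}\|^{2}\|\bxi_i\|^{2})(1-o(1))$; the two increments share the common factor $\|\bw_{r,t}^{k}\|^{2}$ and differ only by $\|\bmu\|^{2}$ versus $\tfrac1n\|\bxi_i\|^{2}$, whose ratio is $\Theta(n\cdot\SNR^{2})$, so (iii) is preserved and (ii) follows from the monotone, sign-definite growth. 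In parallel I would track $\|\bw_{r,t}^{k+1}\|^{2}=\|\bw_{r,t}^{k}\|^{2}-2\eta\langle\nabla_{\bw_{r,t}}L_F,\bw_{r,t}^{k}\rangle+\eta^{2}\|\nabla_{\bw_{r,t}}L_F\|^{2}$, in which, while the inner products are small, the dominant contribution is the negative cubic term $-\eta\,\Theta(\tfrac1m\|\bw_{r,t}^{k}\|^{6})$, so $\|\bw_{r,t}^{k}\|^{-2}$ grows at a constant rate $\Theta(\eta/m)$ and the norm contracts from $\Theta(\sigma_0^{2}d)$ at $T_1$.

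Reading the two recursions together gives the conclusion. The contraction $\|\bw_{r,t}^{k}\|^{2}\asymp\sqrt{m/(\eta(k-T_1))}$ feeds the accumulation $\langle\bw_{r,t}^{k},\bmu_j\rangle\asymp\eta\sum_{j}\|\bw_{r,t}^{j}\|^{2}/\sqrt m\asymp\sqrt{\eta(k-T_1)}$, so the product $\|\bw_{r,t}^{k}\|^{2}\langle\bw_{r,t}^{k},\bmu_j\rangle$ stays $\Theta(\sqrt m)$; the defining balance of $T_2$, namely $\tfrac1m\|\bw_{r,t}^{k}\|^{4}\langle\bw_{r,t}^{k},\bmu_j\rangle\asymp\tfrac1{\sqrt m}\|\bw_{r,t}^{k}\|^{2}$, i.e.\ $\|\bw_{r,t}^{k}\|^{2}\langle\bw_{r,t}^{k},\bmu_j\rangle\asymp\sqrt m$, is then reached when $\|\bw_{r,t}^{k}\|^{2}=\Theta(m^{2/3})$ and $\langle\bw_{r,t}^{k},\bmu_j\rangle=\Theta(m^{-1/6})$; combined with (iii) this yields $\langle\bw_{r,t}^{T_2},\bxi_i\rangle=\Theta(n^{-1}\SNR^{-2}m^{-1/6})$ when $n\cdot\SNR^{2}=\Omega(1)$, and a symmetric computation in which the noise inner product is the larger of the two gives $\langle\bw_{r,t}^{T_2},\bxi_i\rangle=\Theta(m^{-1/6})$ when $n^{-1}\SNR^{-2}=\Omega(1)$; part~(2) of the lemma is precisely this balance at $k=T_2$. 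The time estimate follows from $\|\bw_{r,t}^{T_2}\|^{-2}=\Theta(m^{-4/3})$ and the $\Theta(\eta/m)$ rate, which give $T_2-T_1=\widetilde O(\eta^{-1}m^{-1/3})$; since the bounds on $\sigma_0$ and $d$ in Condition~\ref{cond:main} force $T_1=\max\{T_\mu,T_\xi\}=\widetilde O(\eta^{-1}m^{2/3})$, we obtain $T_2=\widetilde O(\eta^{-1}m^{2/3})$.

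The hard part is the simultaneous, self-consistent control of three coupled quantities over a window in which none is frozen (in contrast with the first stage, where the norm is essentially static): the norm is contracting, the inner products are growing super-linearly relative to a moving norm, and the residual's leading piece $\tfrac1m\|\bw_r\|^{4}\langle\bw_r,\cdot\rangle$ (plus the cross-neuron terms, which themselves grow as the neurons align with the signal/noise subspace) must be shown to stay strictly subdominant for every $k<T_2$ yet attain parity exactly at $T_2$. Preventing the $\Theta(\cdot)$ constants from drifting — so that ``comparable across neurons and across noise patches'' genuinely persists and no single neuron or patch runs away — while pinning $T_2$ to polylog accuracy is what forces the careful choice of induction hypothesis and the repeated appeal to the high-dimensional near-orthogonality supplied by Condition~\ref{cond:main}.
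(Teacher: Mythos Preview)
Your plan has the right high-level architecture (joint induction maintaining concentration and the ratio, with $T_2$ defined as the first time the residual balances the leading term), but the dynamics you propose for $\|\bw_{r,t}^k\|^2$ is wrong, and this error propagates through everything downstream.

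The claim that ``the dominant contribution is the negative cubic term $-\eta\,\Theta(\tfrac1m\|\bw\|^6)$, so the norm contracts from $\Theta(\sigma_0^2 d)$'' fails for two reasons. First, the learning term in $\langle\nabla L,\bw_{r,t}\rangle$ is $-\Theta\big(\tfrac1{\sqrt m}\|\bw\|^2\langle\bw,\bx_{0,i}^{(p)}\rangle\big)$, and once the inner products become positive (already at $T_1$), this makes the norm \emph{grow}; under Condition~\ref{cond:main} the ratio (cubic)/(learning) is $\Theta(\|\bw\|^4/(\sqrt m\,\langle\bw,\bx\rangle))$, which is $o(1)$ throughout the second stage and never reaches order one (even at $T_2$ it is $\Theta(m^{-1})$). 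Second, and more structurally, the weight lives in $\mathrm{span}\{\bmu_{\pm1},\bxi_1,\dots,\bxi_n\}$ plus a component inherited from $\bw_{r,t}^0$ whose scale stays $\Theta(\sigma_0^2 d)$; hence $\|\bw_{r,t}^k\|^2$ is \emph{determined} by the inner products,
\[
\|\bw_{r,t}^k\|^2=\Theta\big(\langle\bw_{r,t}^k,\bmu_j\rangle^2\|\bmu\|^{-2}+n\cdot\SNR^2\langle\bw_{r,t}^k,\bxi_i\rangle^2\|\bmu\|^{-2}+\sigma_0^2 d\big),
\]
so it \emph{grows} from $\Theta(\sigma_0^2 d)$ to $\Theta(m^{-1/3})$ as the inner products grow to $\Theta(m^{-1/6})$. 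Your scenario $\|\bw\|^2=\Theta(m^{2/3})$ is impossible: it would require the orthogonal-to-signal/noise component to carry mass $m^{2/3}$, but that component starts at $\sigma_0^2 d\ll m^{-1/3}$ and is only rescaled by an $O(1)$ factor. (Also, even the ODE $u'=-\eta C u^3/m$ with $u(T_1)=\sigma_0^2 d$ small never \emph{reaches} your asymptotic $\sqrt{m/(\eta k)}$ in the relevant window.)

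A second, related gap: the residual $E_{r,t,\bv}^k$ is \emph{not} dominated by the $L_1$ self-interaction $\tfrac1m\|\bw\|^4\langle\bw,\bv\rangle$; once the neurons concentrate (which is part of your own induction hypothesis), the cross-neuron $L_2$ contribution carries a $\tfrac{m-1}{m}=\Theta(1)$ prefactor, so the true residual is $\Theta(\|\bw\|^4\langle\bw,\bv\rangle)$ without the $1/m$. Your two errors (norm too large by $m$, residual too small by $m$) happen to cancel in the balance equation and deliver the correct $\langle\bw,\bmu\rangle=\Theta(m^{-1/6})$, but the mechanism is wrong and the intermediate quantities are inconsistent. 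The paper's route is to substitute the weight decomposition above into the full gradient (both $L_1$ and $L_2$), express everything in the single variable $\langle\bw,\bxi_i\rangle$ and $\chi=n\cdot\SNR^2$, and then do a case analysis on $\chi$ to identify when the $\Theta(\langle\bw,\bxi_i\rangle^5)$-type residual meets the $\tfrac1{\sqrt m}\Theta(\langle\bw,\bxi_i\rangle^2)$ leading term; this is what pins $T_2$ and gives the correct scales.
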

Lemma \ref{lemma:second_stage_diffu} characterizes $T_2$ as the point where the dominant terms of the gradients in the first stage become comparable  to the residual terms. Meanwhile, we show the scale of signal and noise inner products escape from initialization and reach constant order. Throughout the second stage, the concentration of neurons are preserved and the ratio of signal to noise learning is dictated by $n \cdot \SNR^2$. 
Finally, we identify there exists a stationary point that satisfies the conditions at the end of the second stage (Lemma \ref{lemma:second_stage_diffu}).
\begin{theorem}[Informal]
There exists a stationary point $\bW_t^*$, i.e., $\nabla_{\bw_{r,t}} L(\bW_t^*) = 0$ such that the conditions at $T_2$ (in Lemma \ref{lemma:second_stage_diffu}) are satisfied, and in particular $|\langle \bw_{r,t}^{k}, \bmu_j \rangle|/|\langle \bw_{r,t}^{k}, \bxi_i \rangle| = \Theta( n \cdot  \SNR^2 )$ for all $j = \pm1, r \in [m], i \in [n]$. 
\end{theorem}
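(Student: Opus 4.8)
The plan is to locate the stationary point on a low-dimensional \emph{symmetric submanifold} that the gradient-descent trajectory approaches by the end of the second stage, and then to lift it back to a genuine stationary point of $L_F$ via the principle of symmetric criticality. Concretely, the argument has three ingredients: (i) existence of stationary points at all, which follows from coercivity of $L_F$ (after taking expectation over the diffusion noise, $L_F$ is a polynomial whose sextic part $I_1$ dominates the cubic part $I_2$ at infinity, so $L_F\to\infty$ and the infimum is attained); (ii) reduction of the stationarity equations to a three-variable polynomial system by symmetry; and (iii) localization of a solution of that system within the scales asserted in Lemma \ref{lemma:second_stage_diffu}.

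For the reduction, I would use the genuine symmetries of $L_F(\bW_t)$: permutations of the neuron index $r\in[m]$, and orthogonal rotations of the subspace orthogonal to $\mathrm{span}\{\bmu_1,\bmu_{-1},\bxi_1,\dots,\bxi_n\}$. Let $\mathcal{M}$ be the fixed-point set of this group action; on $\mathcal{M}$ every neuron is equal to a common vector $\bw = a_1\,\bmu_1/\|\bmu\| + a_{-1}\,\bmu_{-1}/\|\bmu\| + \sum_{i=1}^n b_i\,\widetilde\bxi_i + \bv$, where $\widetilde\bxi_i$ is the dual basis of $\{\bxi_i\}$ and $\bv$ lies in the doubly-orthogonal complement with norm $c$. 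Lemma \ref{lemma:second_stage_diffu} guarantees that $\bW_t^{T_2}$ lies in a $\Theta(\cdot)$-neighborhood of $\mathcal{M}$, and moreover its signal coordinates are balanced ($a_1\asymp a_{-1}$) and its noise coordinates are balanced ($b_i\asymp b$); the latter equalities are not forced by symmetry but by the near-orthogonality of the $\bxi_i$, so I would treat the $O(\SNR^{-1}d^{-1/2})$-size non-orthogonality as a perturbation and use an implicit-function / continuity argument to reduce the remaining $n+3$ variables to the three effective variables $(a,b,c)$, obtaining a reduced loss $g(a,b,c)$.

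Next I would compute $\nabla g$ in closed form using the simplification of $I_1$ from Lemma \ref{lemma:obj_exp} together with the explicit form of $I_2$. The stationarity system has the schematic shape of the gradient decomposition in the proof overview: a dominant ``learning'' contribution of order $\Theta\!\big(c^{2}\|\bmu\|^{2}/\sqrt m\big)$ in the $a$-equation and $\Theta\!\big(c^{2}\|\bxi\|^{2}/(n\sqrt m)\big)$ in the $b$-equation (degree three, from $I_2$), balanced against higher-degree ``regularization'' residuals $E_{r,t,\mu_j},E_{r,t,\xi_i}$ coming from $I_1$. Because each side is monotone in $(a,b,c)$ on the relevant box and the endpoint estimates at $T_1$ and $T_2$ are already available, a Poincar\'e--Miranda (intermediate-value in several variables) argument on a box $[\underline a,\overline a]\times[\underline b,\overline b]\times[\underline c,\overline c]$ with scales matched to Lemma \ref{lemma:second_stage_diffu} yields an interior zero $(a^\ast,b^\ast,c^\ast)$ with $a^\ast/b^\ast=\Theta(n\cdot\SNR^2)$ and $b^\ast=\Theta(n^{-1}\SNR^{-2}m^{-1/6})$ when $n\cdot\SNR^2=\Omega(1)$, respectively $b^\ast=\Theta(m^{-1/6})$ when $n^{-1}\cdot\SNR^{-2}=\Omega(1)$. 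This point lifts to $\bW_t^\ast\in\mathcal{M}$; since $\mathcal{M}$ is the fixed set of a group of isometries of $L_F$, Palais' principle gives $\nabla_{\bw_{r,t}}L_F(\bW_t^\ast)=0$ for every $r$, conclusions (1)--(3) of Theorem \ref{them:diff_main} hold by construction on $\mathcal{M}$, and the $\Theta(\cdot)$-closeness of $\bW_t^\ast$ to $\bW_t^{T_2}$ justifies calling it a point ``along'' the training path; coercivity ensures the construction is non-vacuous.

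The main obstacle is step (iii): pinning the equilibrium $(a^\ast,b^\ast,c^\ast)$ to the asserted scales. This demands a sufficiently precise expansion of $I_1$ restricted to $\mathcal{M}$ --- in particular isolating how the orthogonal-norm variable $c$ and the self/cross terms enter each of the three stationarity equations --- and then verifying that $\nabla g$ has the correct sign on every face of the box, which is most delicate exactly in the regime $n\cdot\SNR^2=\Theta(1)$ where the signal and noise contributions are comparable. A secondary difficulty is reconciling the per-neuron concentration statements $\langle\bw_{r,t}^\ast,\bmu_j\rangle=\Theta(\langle\bw_{r',t}^\ast,\bmu_{j'}\rangle)$ and $\langle\bw_{r,t}^\ast,\bxi_i\rangle=\Theta(\langle\bw_{r',t}^\ast,\bxi_{i'}\rangle)$ --- automatic on $\mathcal{M}$ --- with the fact that the trajectory only reaches a neighborhood of $\mathcal{M}$, which is where the $\bxi_i$-non-orthogonality perturbation must be controlled. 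As an alternative route one could instead continue gradient descent past $T_2$: coercivity plus monotone decrease of $L_F$ bounds the iterates, $\sum_k\eta\|\nabla L_F(\bW_t^k)\|^2<\infty$ forces $\nabla L_F(\bW_t^k)\to0$, and a limit point of the bounded trajectory is a stationary point lying in the closed region of Lemma \ref{lemma:second_stage_diffu}; there the hard part is instead establishing forward-invariance of that region beyond $T_2$.
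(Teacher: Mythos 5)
Your route is genuinely different from the paper's. The paper does not construct the stationary point variationally: it takes the concentration properties $\langle \bw_{r,t},\bmu_j\rangle=\Theta(\langle\bw_{r',t},\bmu_{j'}\rangle)$, $\langle \bw_{r,t},\bxi_i\rangle=\Theta(\langle\bw_{r',t},\bxi_{i'}\rangle)$, $\|\bw_{r,t}\|^2=\Theta(\|\bw_{r',t}\|^2)$ and $\langle\bw_{r,t},\bw_{r',t}\rangle=\Theta(\|\bw_{r,t}\|^2-\sigma_0^2 d)$ as hypotheses (they are what Lemma \ref{lem:first_stage_linear} delivers at $T_2$, i.e.\ they come from the \emph{dynamics}), writes $\langle\nabla_{\bw_{r,t}}L,\bmu_j\rangle=\langle\nabla_{\bw_{r,t}}L,\bxi_i\rangle=0$ explicitly from the expected loss, uses Lemma \ref{lemma:weight_decomposition} to eliminate $\|\bw^*_{r,t}\|^2$, and then case-analyzes the resulting $\Theta$-system in the single ratio $\tau=\langle\bw^*_{r,t},\bmu_j\rangle/\langle\bw^*_{r,t},\bxi_i\rangle$ over the regimes $n\SNR^2=\Theta(1)$, $n\SNR^2=\widetilde\Omega(1)$, $n^{-1}\SNR^{-2}=\widetilde\Omega(1)$, showing consistency only at $\tau=\Theta(n\cdot\SNR^2)$ with $\langle\bw^*_{r,t},\bxi_i\rangle$ at the $m^{-1/6}$ scale (Theorem \ref{thm:second_stage_station}). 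Your coercivity/symmetric-criticality/Poincar\'e--Miranda architecture would, if executed, be a more self-contained existence argument, and your observation that the paper-style statement "along the trajectory" is not really proved by either route is fair.

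The gaps are concrete, however. First, the step you defer as "the main obstacle" is not a technicality but the entire content of the paper's proof: pinning $(a^*,b^*,c^*)$ to the $m^{-1/6}$ and $\Theta(n\cdot\SNR^2)$ scales requires exactly the expansion of the stationarity system (the balance of the cubic learning term $-\Theta(m^{-1/2}\|\bw\|^2\|\bmu\|^2)$, resp.\ $-\Theta(n^{-1}m^{-1/2}\|\bw\|^2\|\bxi_i\|^2)$, against the quintic residuals) and its regime-by-regime resolution; your claimed monotonicity of each equation on the box, needed for the sign conditions on the faces, is unverified and is least plausible precisely at $n\cdot\SNR^2=\Theta(1)$. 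Second, your reduction from $n+3$ variables to three is not licensed by symmetry: the loss is invariant under neuron permutations and rotations of the data-orthogonal complement, but the $\bxi_i$ are fixed, non-exchangeable vectors, so nothing forces $b_i\asymp b$ on the fixed-point set; the implicit-function/continuity argument you invoke needs a non-degeneracy statement for the unperturbed system that you do not supply, whereas the paper sidesteps this by importing cross-sample concentration from the first and second training stages. Third, a smaller inconsistency: the fixed-point set of the full rotation group of the orthogonal complement forces $\bv=0$, so retaining a variable $c=\|\bv\|\neq 0$ requires restricting to rotations fixing a chosen direction (or dropping $\bv$ altogether and checking a stationary point with no orthogonal component is compatible with the scales, which it is since $\|\bw^*_{r,t}\|^2\gg\sigma_0^2d$ at the $m^{-1/6}$ scale). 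As written, the proposal is a plausible program whose decisive computations --- the ones carried out in the paper via Lemma \ref{lemma:obj_exp} and the case analysis in Theorem \ref{thm:second_stage_station} --- remain to be done.
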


\subsection{Classification}
For classification, we first let $\gS_y \coloneqq \{ i \in [n] : y_i = y\}$ for $y = \pm 1$ and $\ell_i'^{k} = \ell' \big(y_i f(\bW^k, \bx) \big)$. We can rewrite the gradient descent updates  in terms of the signal and noise inner products:
\begin{align}
\vspace{-5pt}
    \langle \bw_{j,r}^{k+1}, \bmu_y  \rangle &= \langle \bw_{j,r}^k, \bmu_y \rangle - \frac{\eta |\gS_y|}{nm}  \ell_i'^k \langle  \bw_{j,r}^k, \bmu_y \rangle j y \| \bmu \|^2 
    = ( 1 - \frac{\eta |\gS_y| \| \bmu \|^2}{nm} \ell_i'^k j y) \langle \bw_{j,r}^{k}, \bmu_y \rangle, \label{eq:signal_dyn}\\
    \langle \bw_{j,r}^{k+1}, \bxi_i  \rangle &= \langle \bw_{j,r}^k, \bxi_i \rangle - \frac{\eta}{nm} \ell_i'^k \langle \bw_{j,r}^k, \bxi_i \rangle \| \bxi_i \|^2 j y_i - \frac{\eta}{nm}\sum_{i' \neq i} \ell_{i'}'^k \langle \bw_{j,r}^k , \bxi_{i'} \rangle j y_{i'} \langle \bxi_{i'},  \bxi_i\rangle, \label{eq:noise_dyn}
\vspace{-10pt}
\end{align}
for all $j, y = \pm1, r \in [m], i \in [n]$. For the signal inner product, its update suggests that for any $j = \pm 1$,  $\bw_{j,r}$ specializes the learning of $\bmu_j$ due to that 
$\ell_i'^k < 0$, $|\langle \bw_{j,r}^{k+1}, \bmu_y  \rangle| = ( 1 - \frac{\eta |\gS_y| \| \bmu \|^2}{nm} \ell_i'^k j y) |\langle \bw_{j,r}^{k}, \bmu_y \rangle| > |\langle \bw_{j,r}^{k}, \bmu_y \rangle|$ only when $j = y$.  For the noise inner product, the growth is dominated by the second term where $|\langle \bxi_i, \bxi_{i'} \rangle| = \widetilde O(d^{-1/2}) \| \bxi_i \|^2$ is negligible. Thus, we show $|\langle \bw_{j,r}^{k+1}, \bxi_i \rangle|$ grows only for $j = y_i$. In contrast, for $j = -y_i$, its magnitude cannot increase relative to the scale of initialization. Next, we decompose the analysis into two stages.

\textbf{First stage.}
In the first stage before the maximum of signal and noise inner product reaches constant order, the loss derivatives can be lower bounded by an absolute constant, i.e., $|\ell_i'^k| \geq C_\ell$, for all $k \leq T_1$. As a result, both signal and noise inner product can grow exponentially and the relative growth rates are determined  by $n \cdot \SNR^2$. A constant order of difference in the growth rates is sufficient to ensure a scale separation in the signal and noise learning at the end of the first stage, as shown in the following Lemma.
% at the end of first stage, there exists a scale separation in signal and noise learning, 
% where either signal or noise inner product reaches a constant order.

\begin{lemma}
\label{lemma:first_stage_main_class}
Under Condition \ref{cond:main}: 
(1) When $n \cdot \SNR^2 = \Omega(1)$, there exists $T_1 = \widetilde \Theta(\eta^{-1} m \| \bmu\|^{-2} )$, such that $\frac{1}{m} \sum_{r=1}^m |\langle \bw^{T_1}_{j,r}, \bmu_j \rangle| \geq 2$ for all $j = \pm 1$ and $\max_{j,r,i} |\langle \bw_{j,r}^{T_1}, \bxi_i \rangle| = o(1)$.  (2) When $n^{-1} \cdot \SNR^{-2} = \Omega(1)$, there exists $T_1 = \widetilde \Theta( \eta^{-1} nm \sigma_\xi^{-2} d^{-1} )$ such that $\frac{1}{m} \sum_{r=1}^m |\langle \bw_{y_i,r}^{T_1}, \bxi_i \rangle| \geq 4$ for all $i \in [n]$ and $\max_{j,r,y} |\langle \bw_{j,r}^{T_1}, \bmu_y \rangle| = o(1)$. 
\end{lemma}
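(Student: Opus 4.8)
\textbf{Proof plan for Lemma \ref{lemma:first_stage_main_class}.}

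The plan is a two-part argument: first control the loss derivatives, then track the signal and noise inner products as (nearly) geometric sequences, keeping track of both the maximum and the average over neurons. Define the first stage as all iterations $k \le T_1$ during which $\max_{j,r}|\langle \bw_{j,r}^k,\bmu_j\rangle|$ and $\max_{j,r,i}|\langle\bw_{j,r}^k,\bxi_i\rangle|$ stay below a fixed constant; we will choose $T_1$ at the end. The first step is to observe that on this stage all inner products entering the network output are $\widetilde O(1)$: the ``matched'' quantities are bounded by the stage definition, while the ``unmatched'' ones — $\langle\bw_{-j,r}^k,\bmu_j\rangle$ and $\langle\bw_{j,r}^k,\bxi_i\rangle$ for $j\neq y_i$ — start at $\widetilde O(\sigma_0\|\bmu\|)$ and $\widetilde O(\sigma_0\sigma_\xi\sqrt d)$ and, by \eqref{eq:signal_dyn}–\eqref{eq:noise_dyn} together with $\ell_i'^k<0$, their update factors are $\le 1$ (the cross term in \eqref{eq:noise_dyn} being negligible, see below), so they remain $o(1)$ by the bounds on $\sigma_0$ in Condition \ref{cond:main}. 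Since for classification the output $f(\bW^k,\bx_i)$ is a sum of these squared inner products, we get $|y_i f(\bW^k,\bx_i)| = \widetilde O(1)$, hence $|\ell_i'^k|\ge \widetilde\Omega(1)$ (and $\le 1$ trivially); here I use $m=\widetilde\Omega(1)$ to absorb the $\mathrm{poly}(m)$ factor that could otherwise appear in $\exp(-\widetilde O(1))$ into the $\widetilde\Omega$.

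For the signal, \eqref{eq:signal_dyn} shows that for $j=y$ the factor $1+\frac{\eta|\gS_y|\|\bmu\|^2}{nm}|\ell_i'^k|$ is \emph{independent of $r$}, so $|\langle\bw_{j,r}^k,\bmu_j\rangle| = \big(\prod_{s<k}\text{factor}_s\big)\cdot|\langle\bw_{j,r}^0,\bmu_j\rangle|$ for every $r$. With $|\gS_y|=\Theta(n)$ and $|\ell_i'^k|=\widetilde\Theta(1)$ the per-step factor is $1+\widetilde\Theta(\eta\|\bmu\|^2/m)$, and since at initialization $\min_r|\langle\bw_{j,r}^0,\bmu_j\rangle|$ and $\max_r|\langle\bw_{j,r}^0,\bmu_j\rangle|$ and $\frac1m\sum_r|\langle\bw_{j,r}^0,\bmu_j\rangle|$ are all $\widetilde\Theta(\sigma_0\|\bmu\|)$ (Gaussian concentration over $m=\widetilde\Omega(1)$ neurons), the average and the maximum stay within a polylog factor for all $k$ and both reach $\widetilde\Theta(1)$ at $T_\mu=\widetilde\Theta(\eta^{-1}m\|\bmu\|^{-2})$; choosing constants, $\frac1m\sum_r|\langle\bw_{j,r}^{T_\mu},\bmu_j\rangle|\ge 2$.

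For the noise, \eqref{eq:noise_dyn} has the $r$-independent main factor $1+\frac{\eta}{nm}|\ell_i'^k|\|\bxi_i\|^2$ (for $j=y_i$) plus a cross term. I first establish an anti-concentration bound $\min_{j,r,i}|\langle\bw_{j,r}^0,\bxi_i\rangle|\ge\widetilde\Omega(\sigma_0\sigma_\xi\sqrt d)$ by a union bound over the $nm$ Gaussians (valid by the lower bound $\sigma_0\ge\widetilde\Omega(n^2m\sigma_\xi^{-1}d^{-1})$); combined with $\|\bxi_i\|^2=\Theta(\sigma_\xi^2 d)$, $|\langle\bxi_i,\bxi_{i'}\rangle|=\widetilde O(\sigma_\xi^2\sqrt d)$ and $d=\widetilde\Omega(n^2)$, an induction shows the cross term is always $\widetilde O(n/\sqrt d)$ times the main increment, hence too small to flip signs or to break the relation $\max_{r,i}|\langle\bw_{j,r}^k,\bxi_i\rangle|/\min_{r,i}|\langle\bw_{j,r}^k,\bxi_i\rangle|=\widetilde O(1)$ over the first stage. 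Thus all matched noise inner products grow at rate $1+\widetilde\Theta(\eta\sigma_\xi^2 d/(nm))$, and average and maximum reach $\widetilde\Theta(1)$ together at $T_\xi=\widetilde\Theta(\eta^{-1}nm\sigma_\xi^{-2}d^{-1})$. Finally I compare: $T_\mu/T_\xi=\widetilde\Theta(1/(n\cdot\SNR^2))$. In case (1) ($n\cdot\SNR^2=\Omega(1)$) set $T_1=T_\mu$; then the noise has been multiplied only by $\exp(\widetilde O(1/(n\cdot\SNR^2)))=\widetilde O(1)$, so $\max_{j,r,i}|\langle\bw_{j,r}^{T_1},\bxi_i\rangle|=\widetilde O(\sigma_0\sigma_\xi\sqrt d)=\widetilde O(\min\{m^{-1/6}n^{-1/2},\,nd^{-1/4}\})=o(1)$ by Condition \ref{cond:main}, while $\frac1m\sum_r|\langle\bw_{j,r}^{T_1},\bmu_j\rangle|\ge2$; case (2) is the mirror image with $T_1=T_\xi$. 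The main obstacle is the noise analysis: controlling the accumulated cross term in \eqref{eq:noise_dyn} well enough to preserve sign-invariance and the neuron-to-neuron ratio, which is exactly what lets the \emph{average} (not just the max) noise inner product reach constant order at the same time — the new ingredient relative to \citep{cao2022benign,kou2023benign}.
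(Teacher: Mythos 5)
There is a genuine gap in the noise-growth accounting at $T_1=T_\mu$. You claim the noise has been multiplied by ``$\exp(\widetilde O(1/(n\cdot\SNR^2)))=\widetilde O(1)$,'' concluding $\max_{j,r,i}|\langle\bw_{j,r}^{T_1},\bxi_i\rangle|=\widetilde O(\sigma_0\sigma_\xi\sqrt d)$. This does not hold when $n\cdot\SNR^2=\Theta(1)$: the exponent is in fact $T_\mu\cdot\widetilde\Theta(\eta\sigma_\xi^2 d/(nm))=\widetilde\Theta\!\big(\log(\sigma_0^{-1}\|\bmu\|^{-1})/(n\cdot\SNR^2)\big)$, so the multiplier is $(\sigma_0^{-1}\|\bmu\|^{-1})^{c}$ with $c=\Theta(1/(n\cdot\SNR^2))$ — a \emph{polynomial} factor in $\sigma_0^{-1}$, not a polylog one. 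The paper addresses exactly this by taking the threshold constant $\overline C$ large enough (their $n\cdot\SNR^2\ge C'\ge 2.5C_\ell^{-1}$) so that $c<1$, and the resulting bound is $B^{T_1}\le 8\,\SNR^{-1}\sqrt{\log(8mn/\delta)}$, which is much bigger than $\widetilde O(\sigma_0\sigma_\xi\sqrt d)$; showing that \emph{this} quantity is $o(1)$ is where the extra conditions in Condition~\ref{cond:main} are used. The same issue appears in your ``mirror image'' for case (2). Note that this is also why the final statement has an explicit constant threshold ($\overline C$, $\underline C$) rather than generic $\Omega(1)$/$o(1)$ regimes — a sharp constant threshold on $n\cdot\SNR^2$ is what makes the exponent subcritical.

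Your anti-concentration and $\max/\min$ claims are also incorrect. For $nm$ independent Gaussians of standard deviation $\sigma_0\sigma_\xi\sqrt d$, $\min_{j,r,i}|\langle\bw_{j,r}^0,\bxi_i\rangle|$ is of order $\sigma_0\sigma_\xi\sqrt d/(nm)$, not $\widetilde\Omega(\sigma_0\sigma_\xi\sqrt d)$; the paper's Lemma~\ref{lemma:noise_anti} proves only the much weaker lower bound $8\sqrt{\log(4n^2/\delta)/d}\,n\alpha$, needing the lower-bound side of Condition~\ref{cond:main} on $\sigma_0$, and uses it solely (via Lemma~\ref{lemma:inner_coef_noise}) to preserve the sign of each $\langle\bw_{y_i,r}^k,\bxi_i\rangle$. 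Consequently $\max_{r,i}/\min_{r,i}$ is $\Theta(nm)$ at $k=0$, not $\widetilde O(1)$, so you cannot conclude that the average reaches a constant by controlling the max/min ratio. The paper instead tracks the \emph{average} directly: the multiplicative update factor in \eqref{eq:noise_dyn} is $r$-independent and, because all signs are preserved, the absolute-value average inherits the same geometric growth; Lemma~\ref{lemma:noise_inner_concen} supplies $\frac1m\sum_r|\langle\bw_{j,r}^0,\bxi_i\rangle|=\Theta(\sigma_0\sigma_\xi\sqrt d)$ at initialization. You should replace the max/min argument with this average-tracking argument (you already have the right ingredients — sign invariance and $r$-independence of the dominant factor — you just draw the wrong intermediate conclusion from them).
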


\begin{remark}
Different to existing analysis that only shows maximum inner product reaches constant order \citep{cao2022benign,huang2023graph}, we also show the average inner product reach constant order at the same time. Such a stronger result is required for the analysis under the constant order of $n \cdot \SNR^2$, which reduces the required iteration number in the second stage by an order of $m$.     
\end{remark}
% For the case of signal learning, we can readily obtain the same bound for the average inner product and maximum inner product based on \eqref{eq:signal_dyn}. Nevertheless, this becomes challenging for noise learning due to the cross term in \eqref{eq:noise_dyn}. Thus, we rely on an anti-concentration result that lower bounds the $|\langle \bw_{j,r}^0, \bxi_i \rangle|$ at initialization, which is sufficient to ensure the sign invariance across the whole optimization process, i.e., $\sign(\langle \bw_{j,r}^k, \bxi_i \rangle) = \sign(\langle \bw_{j,r}^0, \bxi_i\rangle)$ for all $k$. 

% The following lemma provides a formal characterization at the end of first stage. 

\textbf{Second stage.}
In the second stage, we show the loss converges while the scale separation established in Lemma \ref{lemma:first_stage_main_class} is maintained. 
Because $n \cdot \SNR^2$ can be a constant, we require to carefully bound the loss derivatives in the second stage particularly for establishing the upper bound for $|\langle \bw_{j,r}^k, \bxi_i \rangle|$ when $n \cdot \SNR^2 = \Omega(1)$. The na\"ive bound $\max_i |\ell_i'^k| \leq \max_i|\ell_i^k| \leq n L_S(\bW^k)$ used in \citep{cao2022benign} no longer works as it introduces an additional factor of $n$. To provide a tighter bound, we show the ratio of loss derivatives in the case of $n \cdot \SNR^2 = \Omega(1)$, i.e., $|\ell_{i}'^k|/|\ell_{i'}'^k| \leq C_1$ for all $i,i' \in [n]$ with $y_i = y_{i'}$, $ k \geq T_1$, where $C_1 > 0$ is a constant. This is possible because the network output is dominated by the signal, which is shared across samples with the same label. This allows to bound $\max_i |\ell_i'^k| = \Theta  ( {|\gS_{y_{i^*}}|}^{-1} \sum_{i \in \gS_{y_{i^*}}} |\ell_i'^k|  ) \leq \Theta ( L_S(\bW^k) )$.

% \begin{figure}[!t]
% %\captionsetup{justification=left}
%     \centering
%     \subfloat[Training loss]{\includegraphics[width=0.247\textwidth]{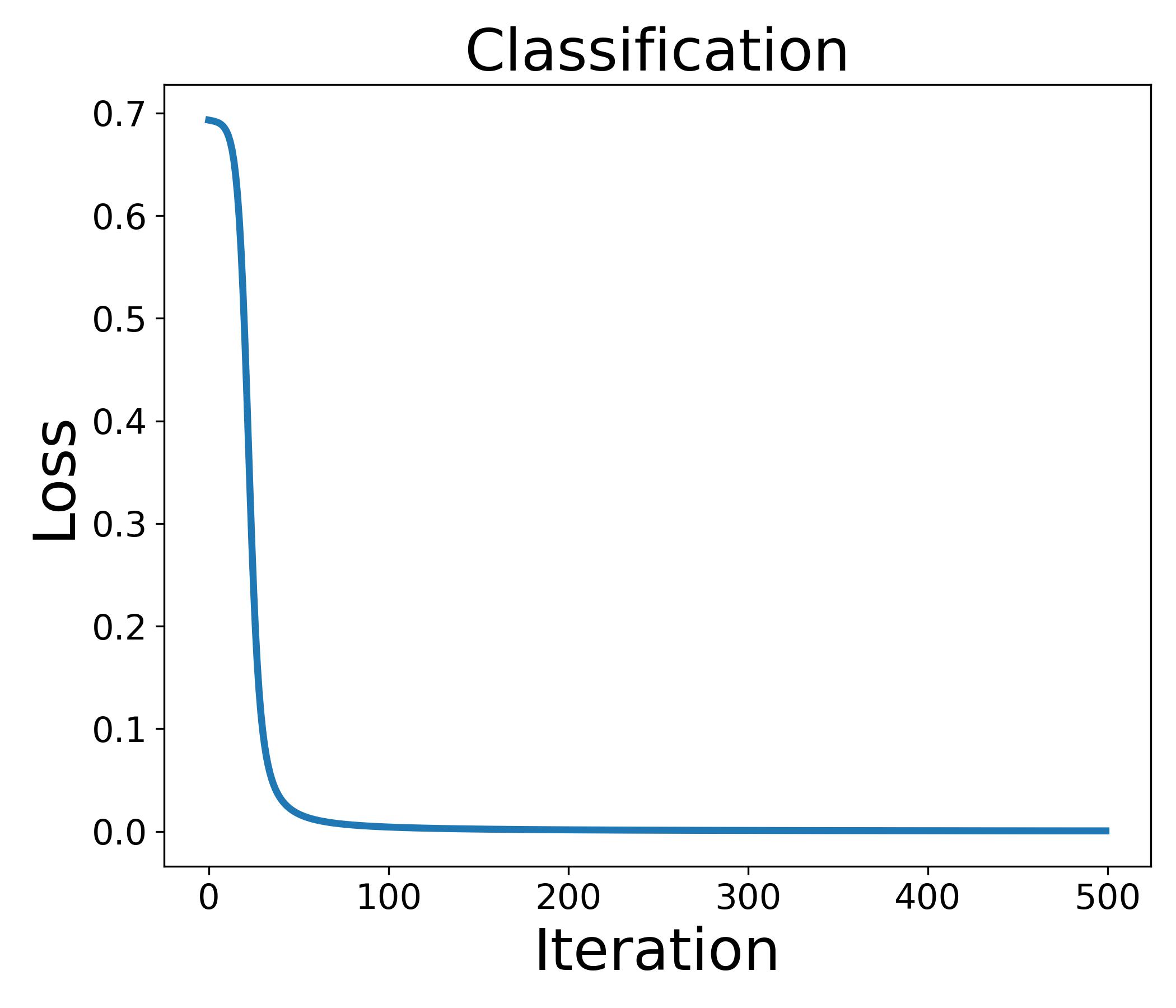}  }
%     \subfloat[Feature and noise learning under  classification]{\includegraphics[width=0.247\textwidth]{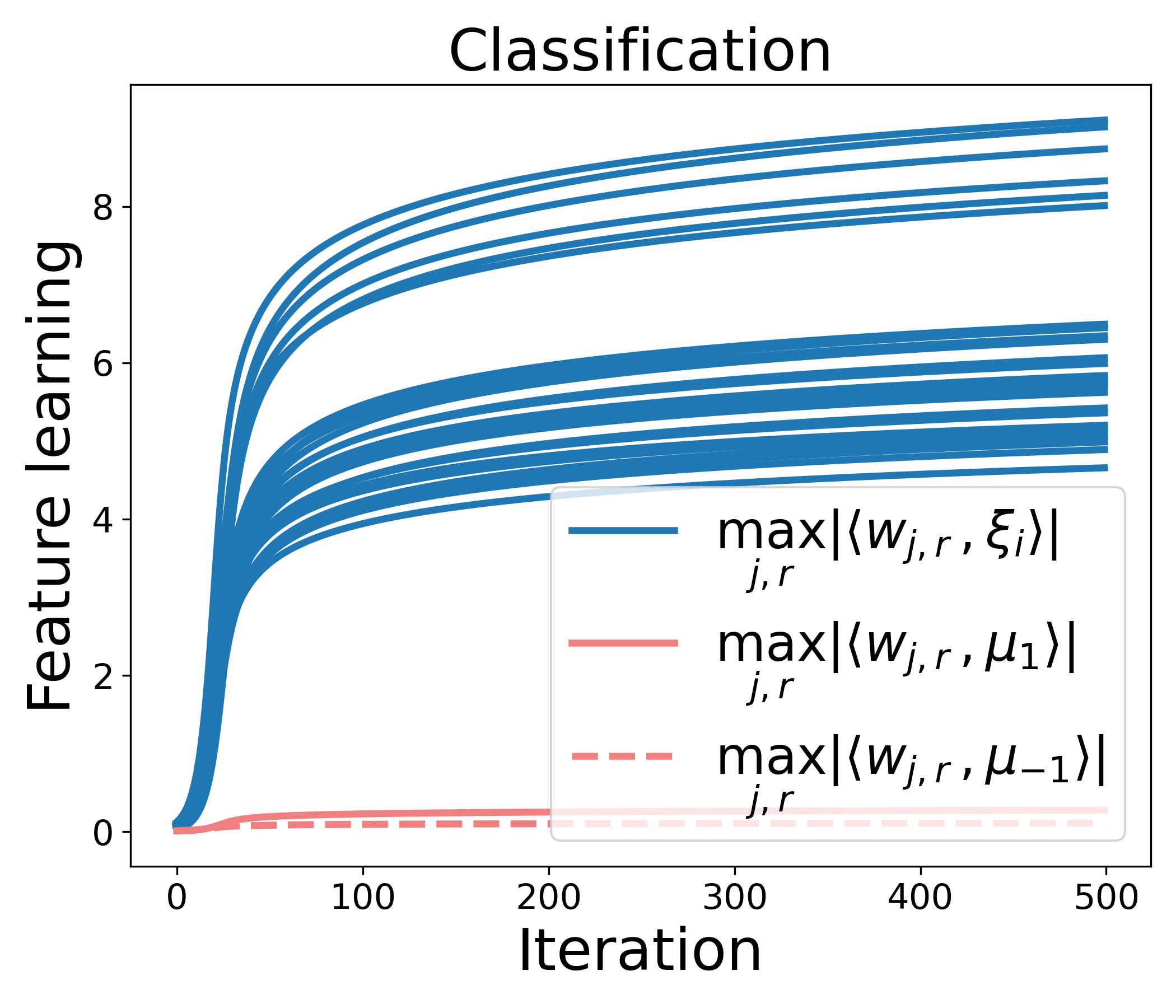}  }
%     \subfloat[Training loss]{\includegraphics[width=0.247\textwidth]{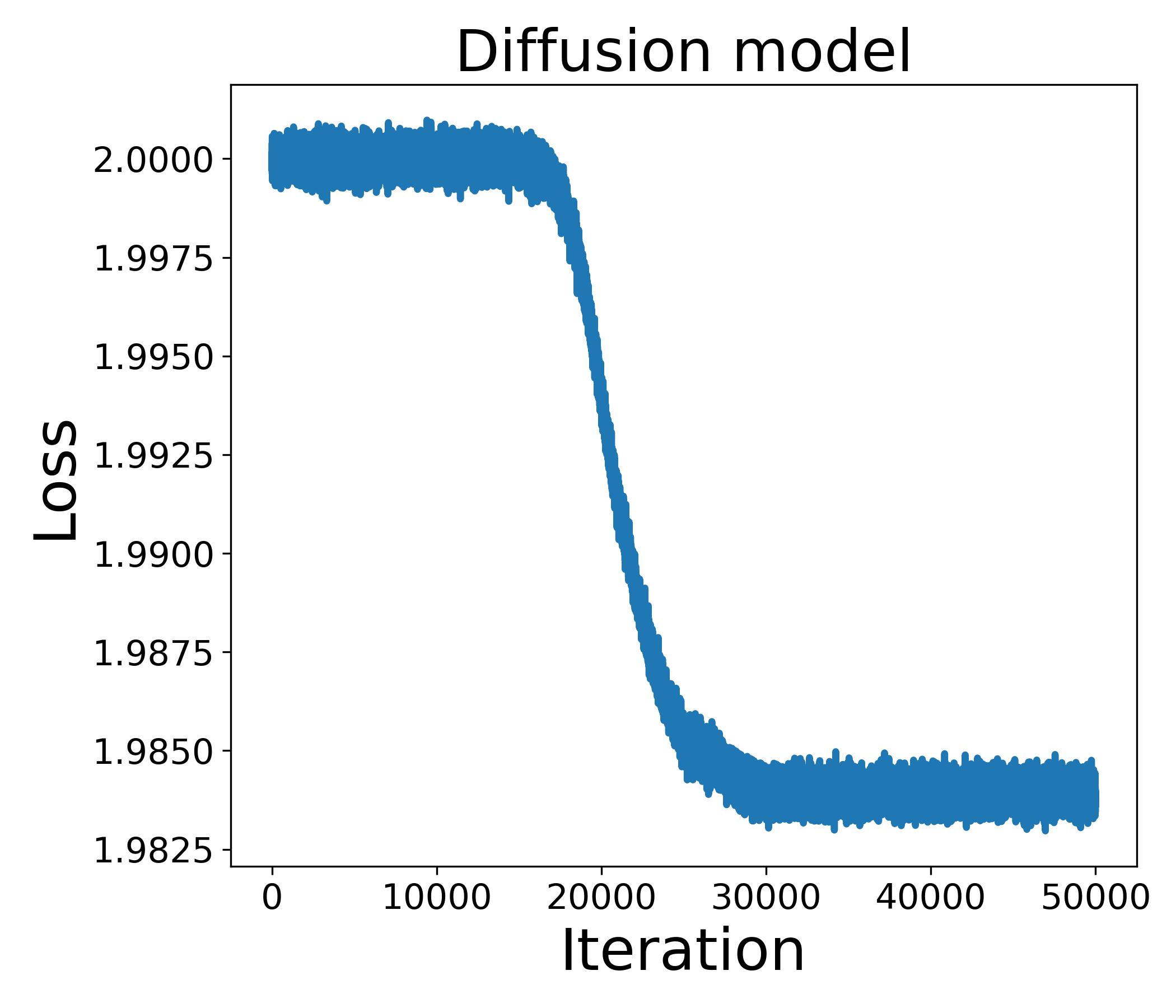}  }
%     \subfloat[Feature and noise learning under diffusion model]{\includegraphics[width=0.247\textwidth]{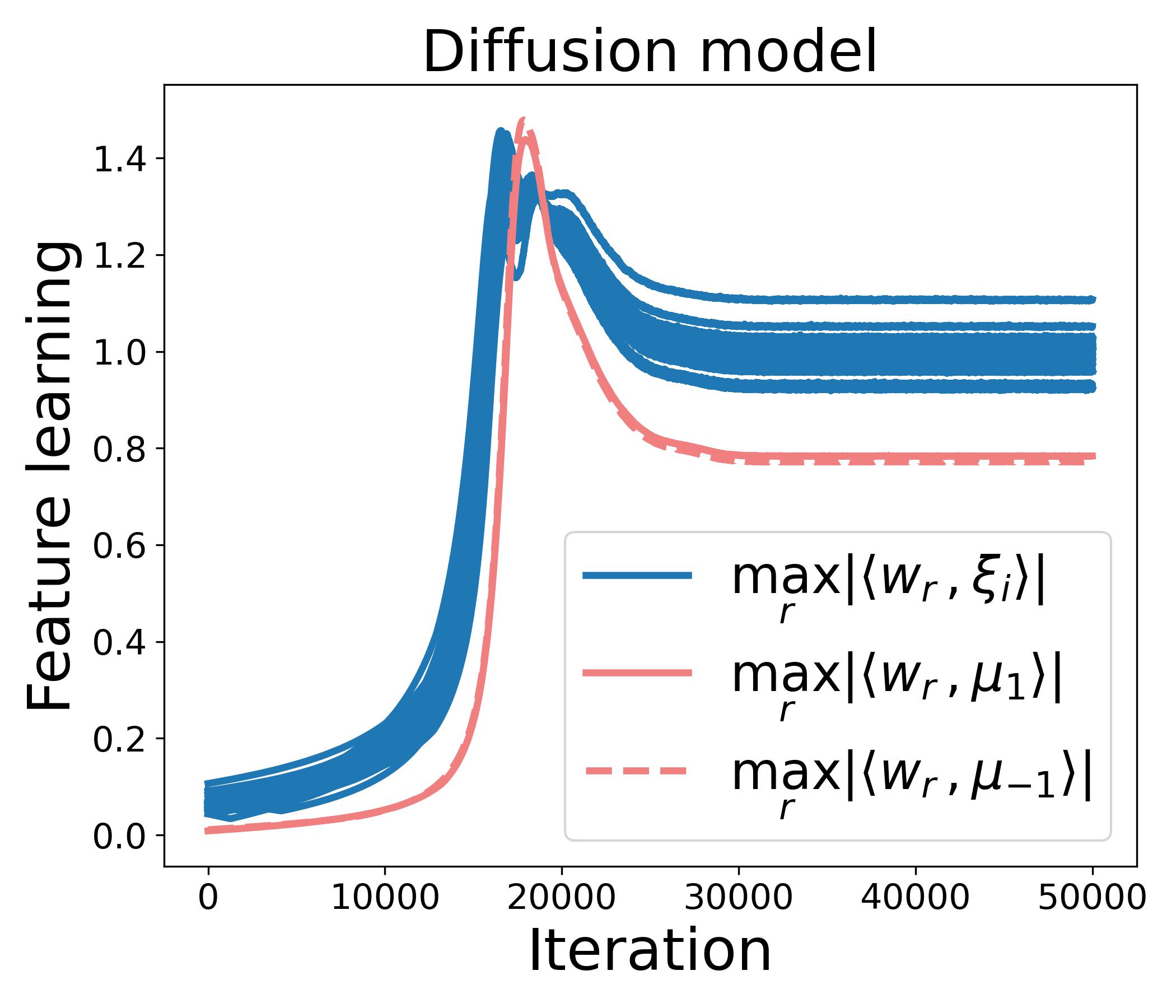}} 
%     \caption{Experiments on the synthetic dataset with $n \cdot \SNR^2 = 0.75$. We see noise learning clearly dominates feature learning for the classification task. Meanwhile diffusion model converges to a stationary point that with signal-to-noise learning ratio respects the order of $n \cdot \SNR^2$.}
%     \label{fig:syn}
% \end{figure}

\begin{figure}[tbp]
    \centering
    \begin{tabular}{@{}c@{\hskip 0.3em}c@{\hskip 0.3em}c@{\hskip 0.3em}c@{\hskip 0.3em}c@{}}
    
    % First row with figures and row caption
    % \multirow{2}{*}[5.5em]{\adjustbox{valign=m,rotate=90}{\small Low SNR}} & 
    % \includegraphics[width=0.24\textwidth]{Figures/syn_class_loss.jpg} & 
    \includegraphics[width=0.24\textwidth]{Figures/syn_class_feat_learn.jpg} & 
    \includegraphics[width=0.24\textwidth]{Figures/syn_diff_feat_learn.jpg} &

    % Second row with figures and row caption
    % \multirow{2}{*}[5.5em]{\adjustbox{valign=m,rotate=90}{\small High SNR}} &  
    % \includegraphics[width=0.24\textwidth]{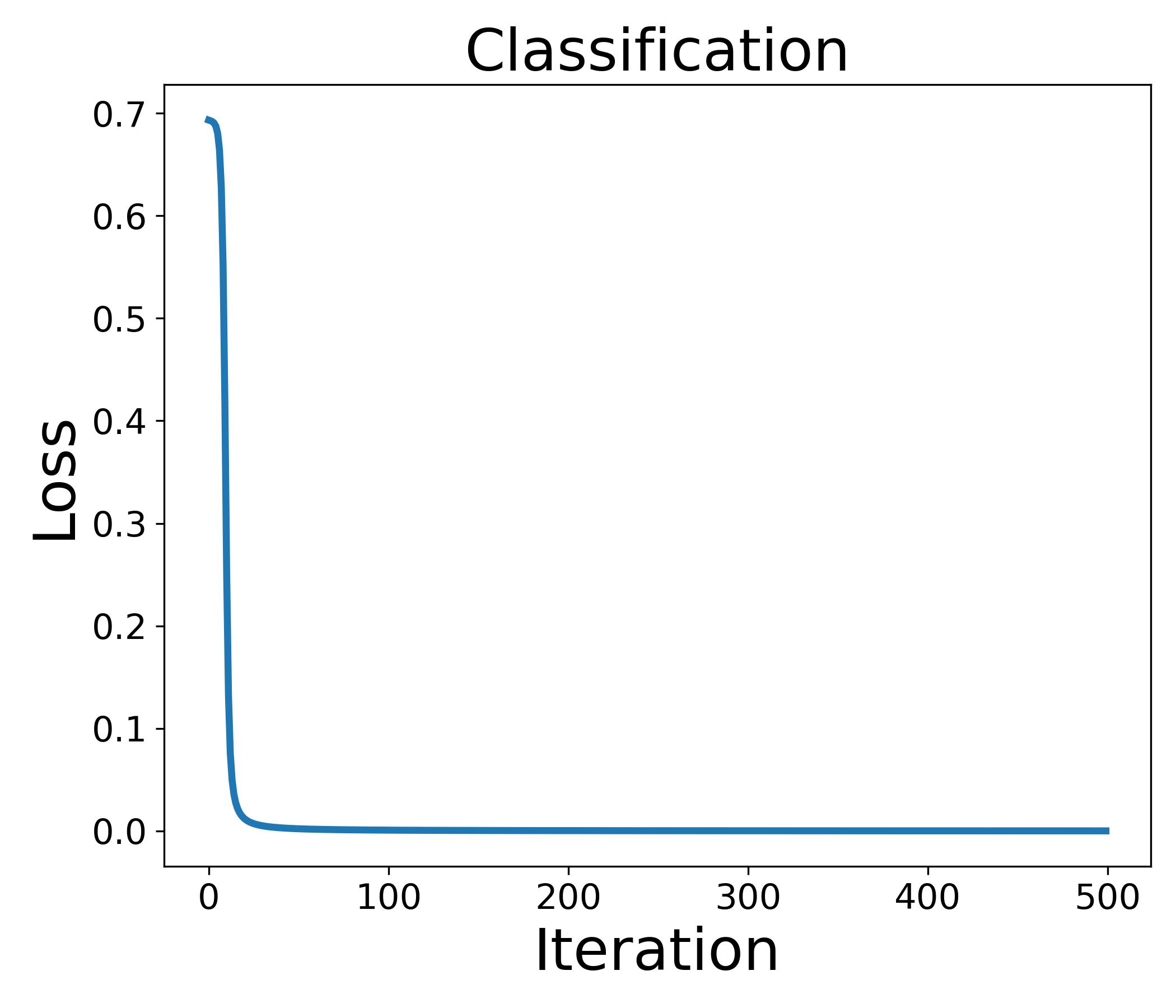} & 
    \includegraphics[width=0.24\textwidth]{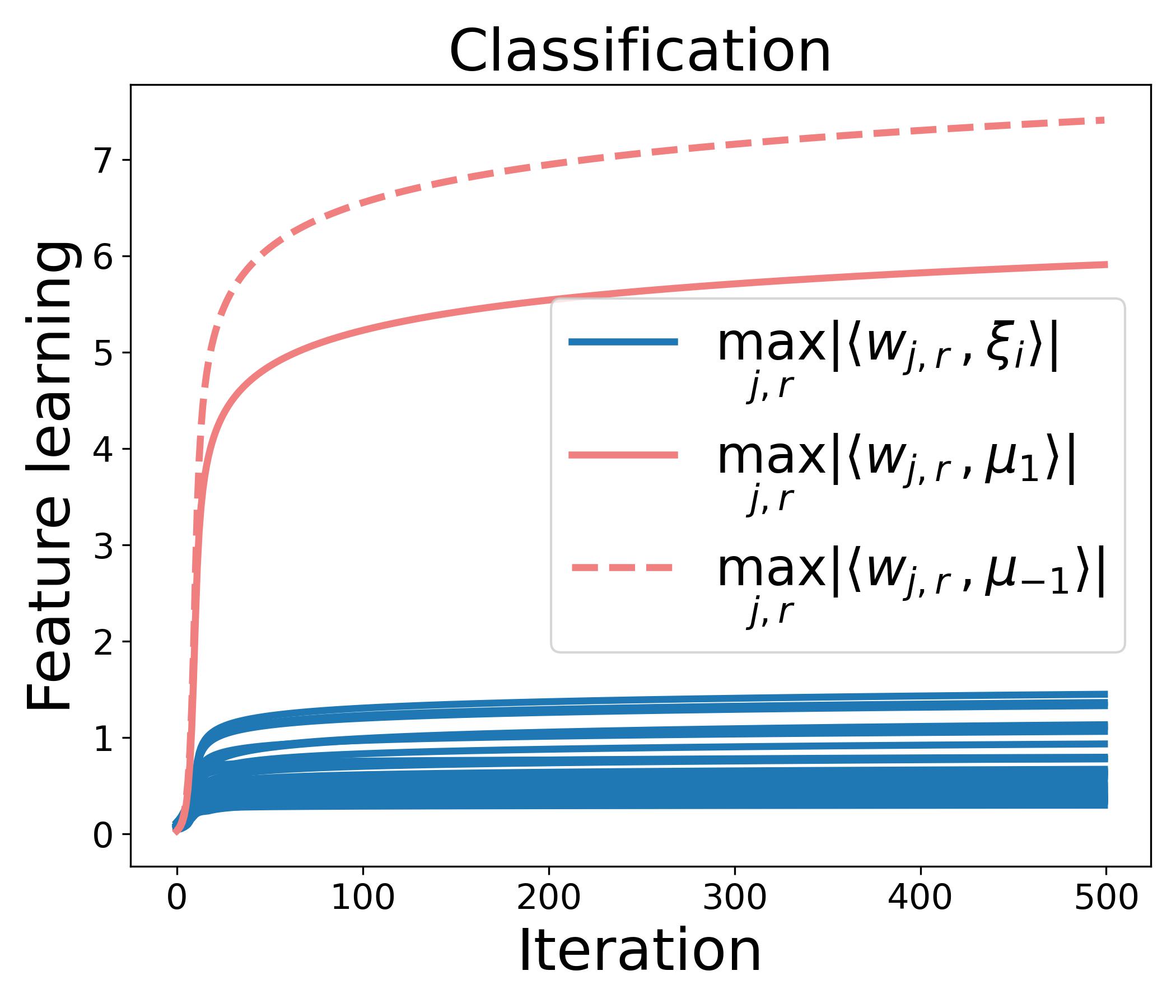} &
    \includegraphics[width=0.24\textwidth]{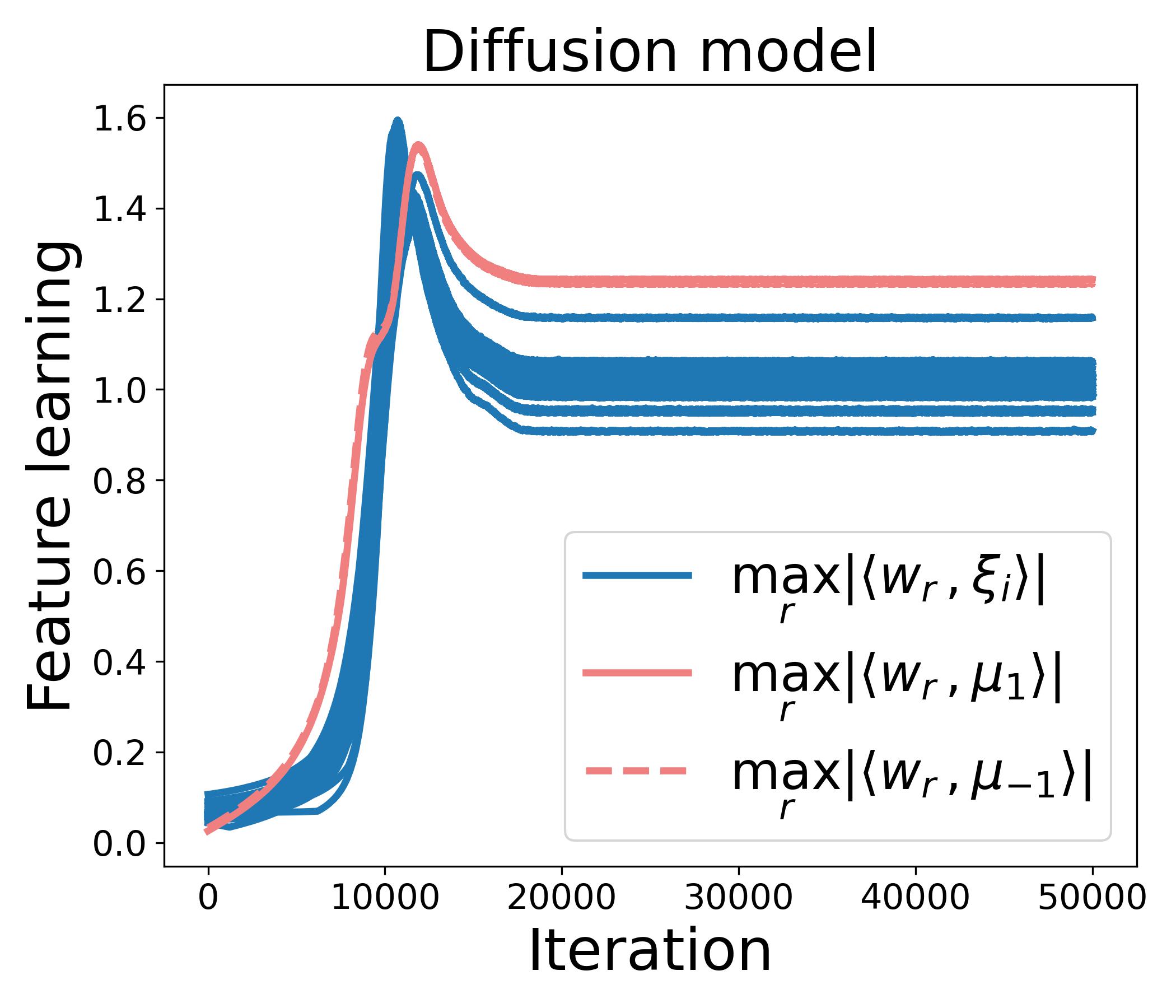} \\

     \multicolumn{2}{c}{Low SNR ($n \cdot \SNR^2 = 0.75$)} & \multicolumn{2}{c}{High SNR ($n \cdot \SNR^2 = 6.75$)} \\
    
    % Column captions
    % & \makecell[c]{\small Training loss} & \makecell[c]{\small {Feature and noise learning} \\ \small{(Classification)}} & \makecell[c]{\small Training loss} & \makecell[c]{\small {Feature and noise learning} \\ \small{(Diffusion model)}} \\
    \end{tabular}
    % \vspace{-5pt}
    \caption{Experiments on the synthetic dataset with both low SNR ($n \cdot \SNR^2 = 0.75$) and high SNR ($n \cdot \SNR^2 = 6.75$). In the low SNR setting, we see noise learning quickly dominates signal learning for the classification task and in the high SNR setting, signal learning quickly dominates noise learning. Meanwhile diffusion model converges to a stationary point that with signal-to-noise learning ratio respects the order of $n \cdot \SNR^2$. More experimental results on additional SNR values are in Appendix \ref{app:additional_snr}.}
    % \vspace{-10pt}
    \label{fig:syn}
\end{figure}

\section{Experiments}

We conduct both synthetic and real-world experiments to verify our theoretical claims.

\subsection{Synthetic experiment}
\textbf{Setup.} We follow Definition \ref{def_data_model} to generate a synthetic dataset for both diffusion model and classification. Specifically, we set data dimension $d = 1000$ and let $\bmu_1 = [\mu, 0, \cdots ,0] \in \sR^d$ and $\bmu_{-1} = [0, \mu, 0, \cdots , 0] \in \sR^d$. We sample the noise patch $\bxi_i \sim \gN(0, \bI_d)$, $i \in [n]$ (i.e., $\sigma_\xi = 1$). We set sample size and network width to be $n = 30$ and $m = 20$ and initialize the weights to be Gaussian with a standard deviation $\sigma_0 = 0.001$.  We vary the choice of $\mu$ to create two problem settings: (1) low SNR with $\mu = 5$, which leads to $n \cdot \SNR^2 = 0.75$ and (2) high SNR with $\mu = 15$, which leads to $n \cdot \SNR^2 = 6.75$. We use the same two-layer networks introduced in Section \ref{sect:problem_set}. For classification, we set a learning rate of $\eta = 0.1$ and train for 500 iterations. 
For diffusion model, we minimize the DDPM loss by averaging over the diffusion noise,  following the standard training of diffusion model. In particular, for each sample, we samples $n_\epsilon = 2000$ noise at each iteration and the loss is calculated by taking an average over the noise. For the noise coefficients, we consider a time $t = 0.2$ and set $\alpha_t = \exp(-t) = 0.82$ and $\beta_t = \sqrt{1 -\exp(-2t)} = 0.57$. For diffusion model, we set $\eta = 0.5$ and train for 40000 iterations.

\textbf{Results.}
In Figure \ref{fig:syn}, we compare signal and noise learning  dynamics--visualized through maximum signal and noise inner product--between classification and diffusion model. In Appendix \ref{app:supp_synthetic}, we also include training loss convergence for both the tasks as well as training and test accuracy for classification. For classification, the training loss converges while diffusion model recovers only a stationary point. 

In terms of feature learning, noise learning in classification quickly dominates signal learning by exhibiting a significant larger growth in the first stage (up to around 20 iterations). This ensures that noise learning stabilizes at a constant order while signal learning remains relatively small. In the second stage, training loss converges and signal and noise learning exhibits logarithmic growth. For diffusion model, in the first stage, where training loss does not materially change, both signal and noise learning increase linearly. In the second stage where loss significantly decreases, signal and noise learning start to grow exponentially and in the final stage, due to the weight regularization terms, noise and signal reach a stationary point that preserves the scale of $n \cdot \SNR^2$.

\begin{figure}[t!]
    \centering
    % Minipage for the first three images with a shared title
    \begin{minipage}[b]{0.65\textwidth} % Adjust width as needed
        \centering
        \includegraphics[width=0.9\textwidth]{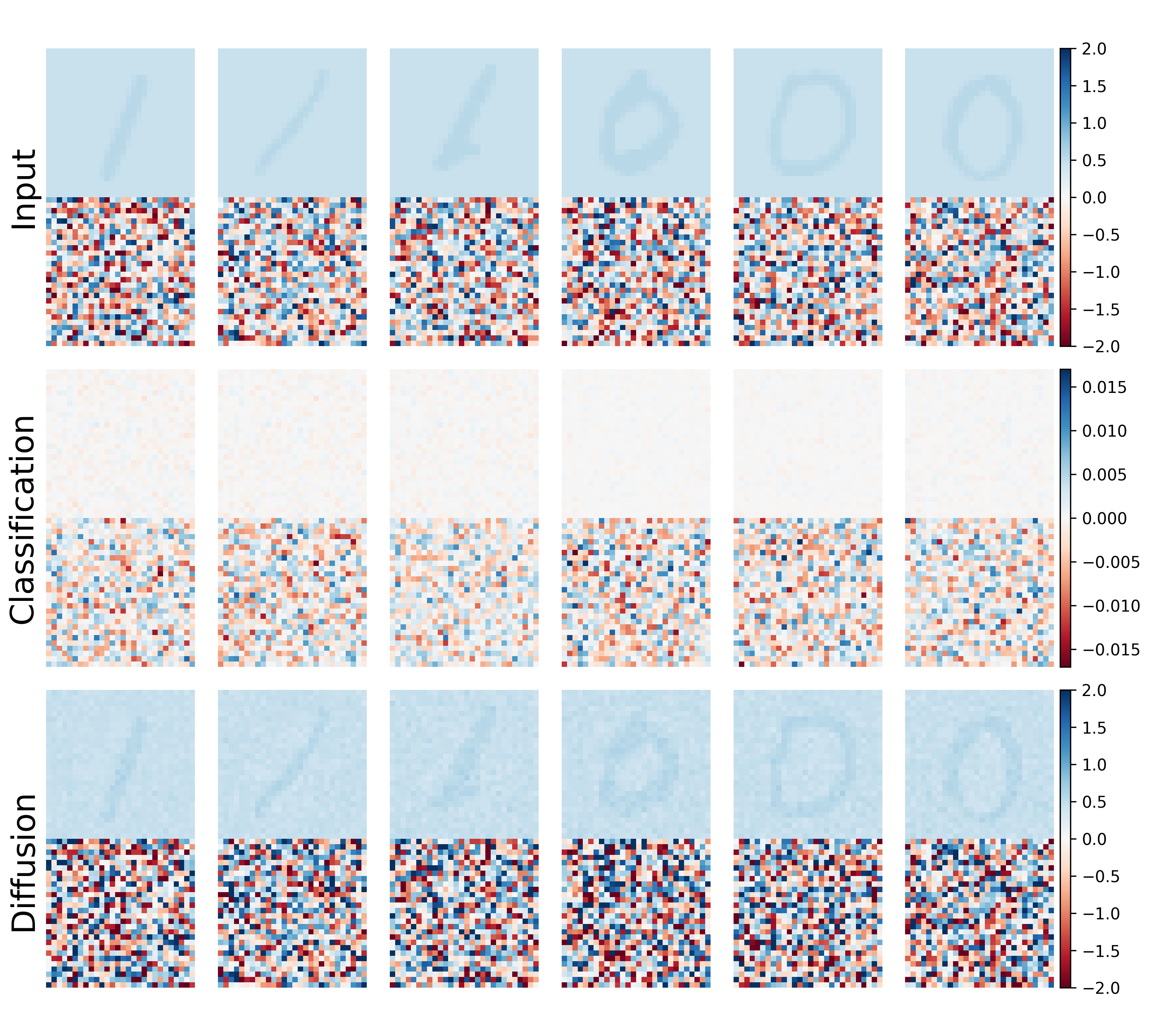}
        \caption{Experiments on Noisy-MNIST with $\widetilde\SNR = 0.1$. ({First row}): Test Noisy-MNIST images; ({Second row}): Illustration of input gradient, i.e., $\nabla_\bx F_{+1}(\bW, \bx)$ when $y = 1$ and $\nabla_\bx F_{-1}(\bW, \bx)$ when $y = 0$. ({Third row}): denoised image from diffusion model. In this low-SNR case, we see classification tends to predominately learn noise while diffusion learns both signals and noise.}
        \label{fig:real_demo}
    \end{minipage}
    \hspace*{2pt}
    \begin{minipage}[b]{0.3\textwidth} % Adjust width as needed
        \centering
        % \subfloat[]{\includegraphics[width=0.6\textwidth]{Figures/class_loss_mnist_0.3_0.2.jpg}} \\
        % \subfloat[]{\includegraphics[width=0.6\textwidth]{Figures/class_acc_mnist_0.3_0.2.jpg}} \\
        % \subfloat[]{\includegraphics[width=0.6\textwidth]{Figures/diff_loss_mnist_0.3_0.2.jpg}}
        % \begin{minipage}[c]{.01\textwidth}
        %     \centering
        %     \raisebox{0.5\height}{(a)}
        % \end{minipage}%

        % \begin{minipage}[b]{.6\textwidth}
        %     \centering
        %     \includegraphics[width=\textwidth]{Figures/class_loss_mnist_0.3_0.2.jpg}
        % \end{minipage} \\[1em]
        
        % \begin{minipage}[t]{.05\textwidth}
        %     \centering
        %     (b)
        % \end{minipage}%
        % \hspace*{2pt}
        % \begin{minipage}[b]{.6\textwidth}
        %     \centering
        %     \includegraphics[width=\textwidth]{Figures/class_acc_mnist_0.3_0.2.jpg}
        % \end{minipage} \\[1em]
        
        % \begin{minipage}[t]{.05\textwidth}
        %     \centering
        %     (c)
        % \end{minipage}%
        % \hspace*{2pt}
        % \begin{minipage}[b]{.6\textwidth}
        %     \centering
        %     \includegraphics[width=\textwidth]{Figures/diff_loss_mnist_0.3_0.2.jpg}
        % \end{minipage}
        
        \begin{tabular}{m{0.05\textwidth} m{0.9\textwidth}} 
        
        \centering (a) & \includegraphics[width=0.68\textwidth]{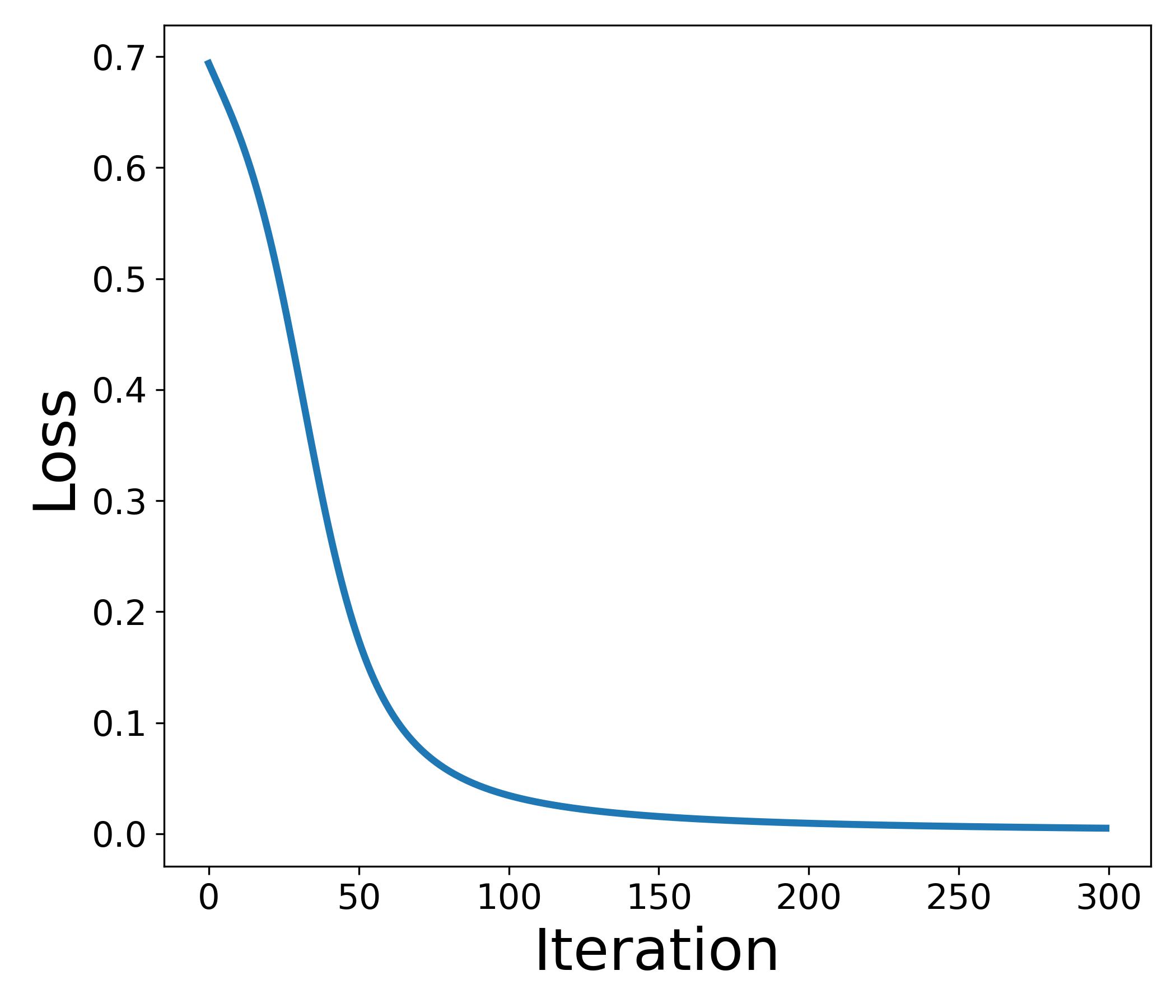} \\[1em]
        
        % Second row
        \centering (b) & \includegraphics[width=0.68\textwidth]{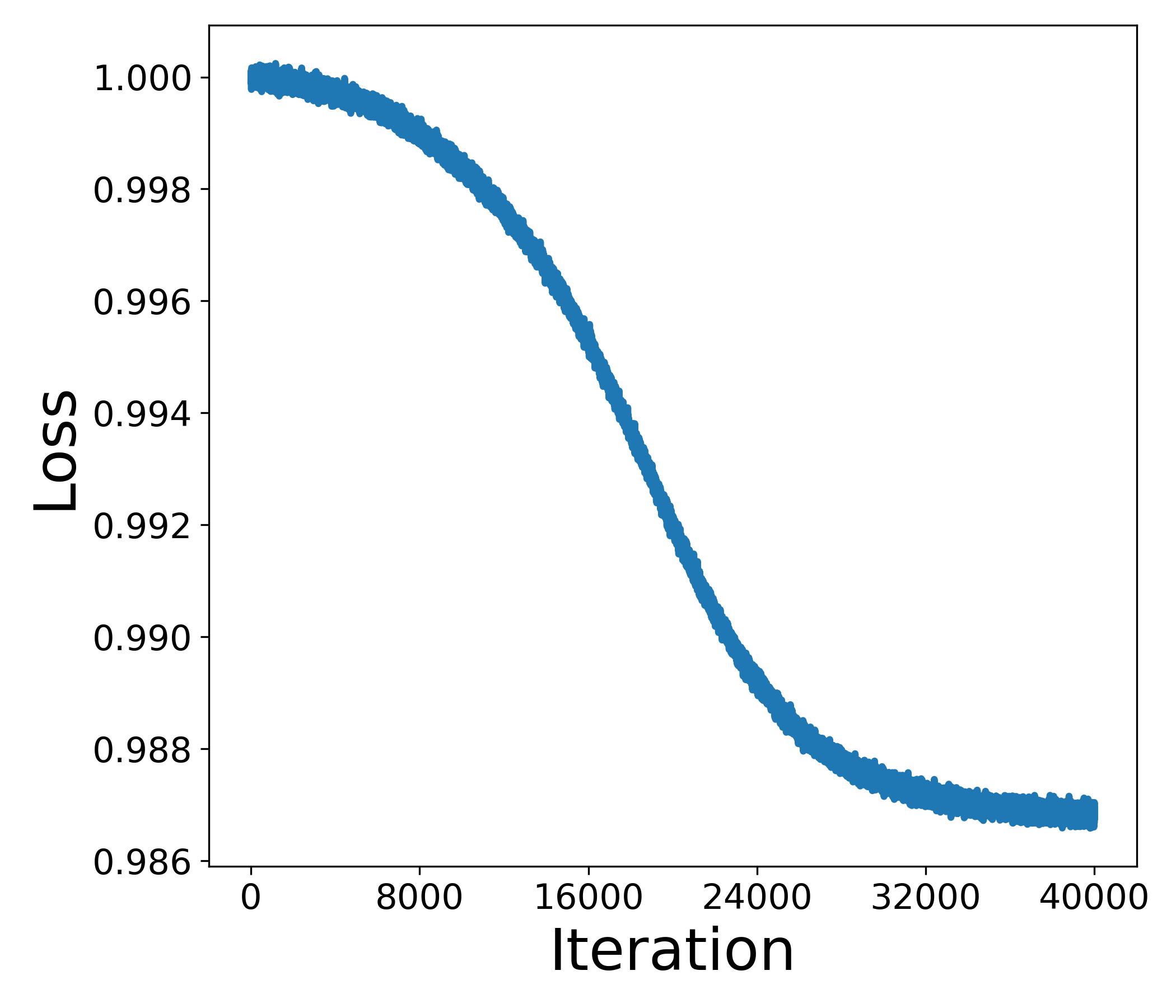} \\[1em]
        
        % Third row
        \centering (c) & \includegraphics[width=0.68\textwidth]{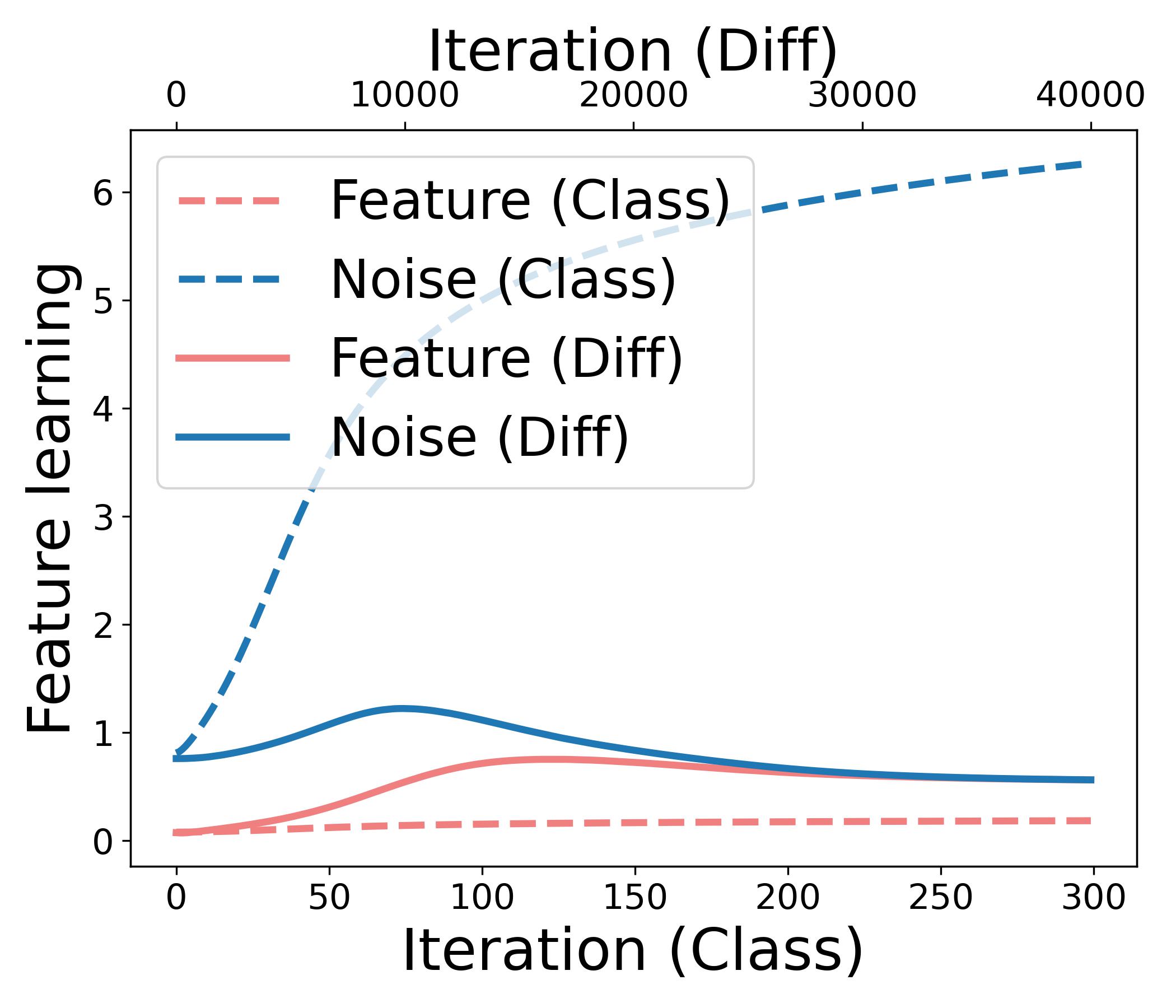} \\
        \end{tabular}
        % \vspace*{-10pt}
        \caption{Experiments on Noisy-MNIST with $\widetilde\SNR = 0.1$. (a) Train loss for classification. (b) Train loss for diffusion model. (c) Feture learning dynamics.}
         \label{fig:real_loss}
    \end{minipage}
\end{figure}

\subsection{Real-world experiment}
\label{sect:real_exper}
\textbf{Setup.} We also conduct experiments on the MNIST dataset \citep{lecun1998gradient} to support our theory. In order to better control the signal-to-noise ratio, we create a \textit{Noisy}-MNIST dataset, where we treat each original MNIST image as a clean signal patch and concatenate a standard Gaussian noise patch with the same size, i.e., $28 \times 28$. In addition, we scale the signal patch by a constant denoted as $\widetilde \SNR$. Because the noise scale is fixed, higher $\widetilde \SNR$ corresponds to higher $\SNR$. Some sample images with $\widetilde \SNR = 0.1$ are shown in the first row of Figure \ref{fig:real_demo}. We select 50 samples each from digit $0$ and $1$ respectively (i.e., $n = 100$).
We consider the same neural networks as in the synthetic example, where we set $m = 100$ and initialize the weights with $\sigma_0 = 0.01$. For diffusion model, we choose the same $\alpha_t, \beta_t$ as in the synthetic experiment. In the main paper, we present the results for $\widetilde\SNR = 0.1$, which corresponds to a low SNR setting. 

\textbf{Results.} Figure \ref{fig:real_loss}(a,b) shows that both classification and diffusion model converge in loss. Additionally, Figure \ref{fig:real_loss}(c) plots the signal and noise learning dynamics. Because each image is composed of unique signal $\bmu_i$ and noise patch $\bxi_i$ for $i \in [n]$, we measure the signal and noise learning by computing $\frac{1}{n} \sum_{i=1}^n \max_{r} |\langle \bw_r, \bmu_i \rangle|$ and $\frac{1}{n} \sum_{i=1}^n \max_{r} |\langle \bw_r, \bxi_i \rangle|$ respectively. We notice that due to the low SNR, noise learning in classification dominates signal learning at convergence while diffusion model learns more balanced features. This corroborates our theoretical findings.
% Each image is resized into $27 \times 27$ and then partitioned horizontally into three equal-sized patches (each has a size of $27 \times 9$). Such a partition is motivated by most of the features are concentrated in the middle patch. Lastly, each image is normalized to $[-1,1]$.  We then select 50 samples from digit $0$ and $1$ respectively (i.e., $n = 100$). We set network width to be $m  = 100$ and initialize weights with $\sigma_0 = 0.1$. For diffusion model, we choose the same $\alpha_t, \beta_t$ as for the synthetic experiment and train for 5000 iterations. For classification, we train until the loss reaches below $10^{-5}$. Figure \ref{fig:real_loss} demonstrates the convergence in training loss for classification and also shows diffusion model finds a stationary point. 

To visualize the patterns learned by the neural networks, for classification, we adopt an approach similar to Grad-CAM \citep{selvaraju2020grad} by analyzing the gradient of output with respect to the input. Specifically, for samples of digit $0$, we plot the gradient of negative function output,   $\nabla_\bx F_{-1}(\bW, \bx)$, while for digit $1$, we plot $\nabla_\bx F_{+1}(\bW, \bx)$. As shown in the second row of Figure \ref{fig:real_demo}, the gradients of six test images indicate that classification primarily learns the noise rather than the signal patch. 
For diffusion model, we first add diffusion noise to the input images and use the network to predict the added noise. Then we reconstruct the input with the formula $\hat\bx_0 = (\bx_t - \beta_t \hat\beps(\bx_t))/\alpha_t$, where $\hat\beps(\bx_t)$ denotes the predicted diffusion noise. The third row of Figure \ref{fig:real_demo} shows that the diffusion model learns both the signal and noise. In Appendix \ref{sect:high_snr}, we present results for a high-SNR setting with $\widetilde \SNR = 0.5$, where we observe the reverse pattern: classification predominately captures signal rather than noise while diffusion model continues to balance the learning of both signal and noise. 
Additionally, Appendix \ref{app:mnist_10} presents experiments on all 10 digits of the MNIST dataset, verifying the observed distinctions in feature learning between diffusion models and classification.

% we calculate the pairwise inner product between each patch of a test image and the network's weights. For each patch, we identify the weight corresponding to the largest absolute inner product and plot it in Figure \ref{fig:real_demo}, using six test images as examples.  The figures clearly demonstrate that neural network trained for classification tends to learn only parts of the images by having a larger magnitude for regions with the presence of features. 

% \begin{figure}[!t]
% %\captionsetup{justification=left}
%     \centering
%     \subfloat[Illustration of learned weights corresponding to the maximum inner product with each patch.]{\includegraphics[width=0.6\textwidth]{Figures/diff_class_mnist.jpg}  }
%     \caption{Experiments on real-world dataset. }
% \end{figure}

\section{Conclusion}
This work introduces a theoretical framework for analyzing the feature learning dynamics in diffusion models, taking an initial step toward understanding the representation learning in diffusion models.
Our findings  demonstrate that diffusion models inherently promote a more balanced feature learning, in contrast to classification models, which tend to prioritize certain features over others. This suggests that classification models may be more sensitive to variations in the signal-to-noise ratio compared to diffusion models. Consequently, this may provide an explanation for the inherent adversarial robustness of diffusion models in downstream applications, such as classification \citep{li2023your,chen2023robust,chen2024your}, as perturbations are less likely to significantly affect the learned representations compared to classification models.

Although our study focuses on  a two-patch data setup, the proposed framework can be extended to accommodate more complex data settings. For example,  our analysis can be extended to multi-feature data distributions, where certain features appear more frequently \citep{zou2023benefits} or have larger norms than others \citep{lu2024benign}. Such extensions could provide deeper insights into the mechanisms of feature learning in more realistic scenarios.
We hypothesize that, despite the infrequent occurrence or smaller norm of certain features, diffusion models can  effectively learn them due to the nature of the denoising objective. This insight has significant implications for downstream tasks, such as out-of-distribution classification, where rare or weak features may be the primary distinguishing factors. 

We believe our framework holds broader potential beyond the scope of this work and  can be adapted to analyze conditional and latent diffusion models, elucidate the mechanisms of various training objectives and optimizers,  and examine other generative paradigms, such as flow matching.

\section*{Acknowledgements}
We would like to thank the anonymous reviewers and area chairs for their helpful comments. Wei Huang is
supported by JSPS KAKENHI Grant Number 24K20848. Yuan Cao is supported in part by
NSFC 12301657 and Hong Kong ECS award 27308624. Difan Zou is 
supported in part by NSFC 62306252, Hong Kong ECS award
27309624, Guangdong NSF 2024A1515012444, and the central fund from HKU IDS.

\bibliography{references}
\bibliographystyle{iclr2025_conference}

\clearpage

\appendix
\etocdepthtag.toc{mtappendix}
\etocsettagdepth{mtchapter}{none}
\etocsettagdepth{mtappendix}{subsubsection}
\renewcommand{\contentsname}{Appendix Contents}
\tableofcontents

\clearpage
\section{Additional experimental results}
\label{app:add_experiment}

This section includes additional experiment results. 

\subsection{Supplementary results for synthetic experiment}
\label{app:supp_synthetic}

We first include the convergence in loss plots as well as accuracy for classification under the two SNR conditions considered in the main experiment. The (in-distribution) test accuracy is computed with 3000 test samples. We see both classification and diffusion model are able to converge in loss, although diffusion model only finds a stationary point. In the low SNR setting, classification is able to perfectly fit the training samples with a 100\% classification accuracy. However because it primarily focuses on learning noise, the generalization is poor with a   test accuracy of around 50\%. For the high-SNR case, both training and test sets can be perfectly classified due to the signal learning.

\begin{figure}[h]
    \centering
    \begin{tabular}{@{}c@{\hskip 0.3em}c@{\hskip 0.3em}c@{\hskip 0.3em}c@{\hskip 0.3em}c@{}}
    
    % First row with figures and row caption
    % \multirow{2}{*}[5.5em]{\adjustbox{valign=m,rotate=90}{\small Low SNR}} & 
    \includegraphics[width=0.24\textwidth]{Figures/syn_class_loss.jpg} & 
    \includegraphics[width=0.24\textwidth]{Figures/syn_diff_loss.jpg} & 
    \includegraphics[width=0.24\textwidth]{Figures/syn_class_loss_15.jpg} & 
     \includegraphics[width=0.24\textwidth]{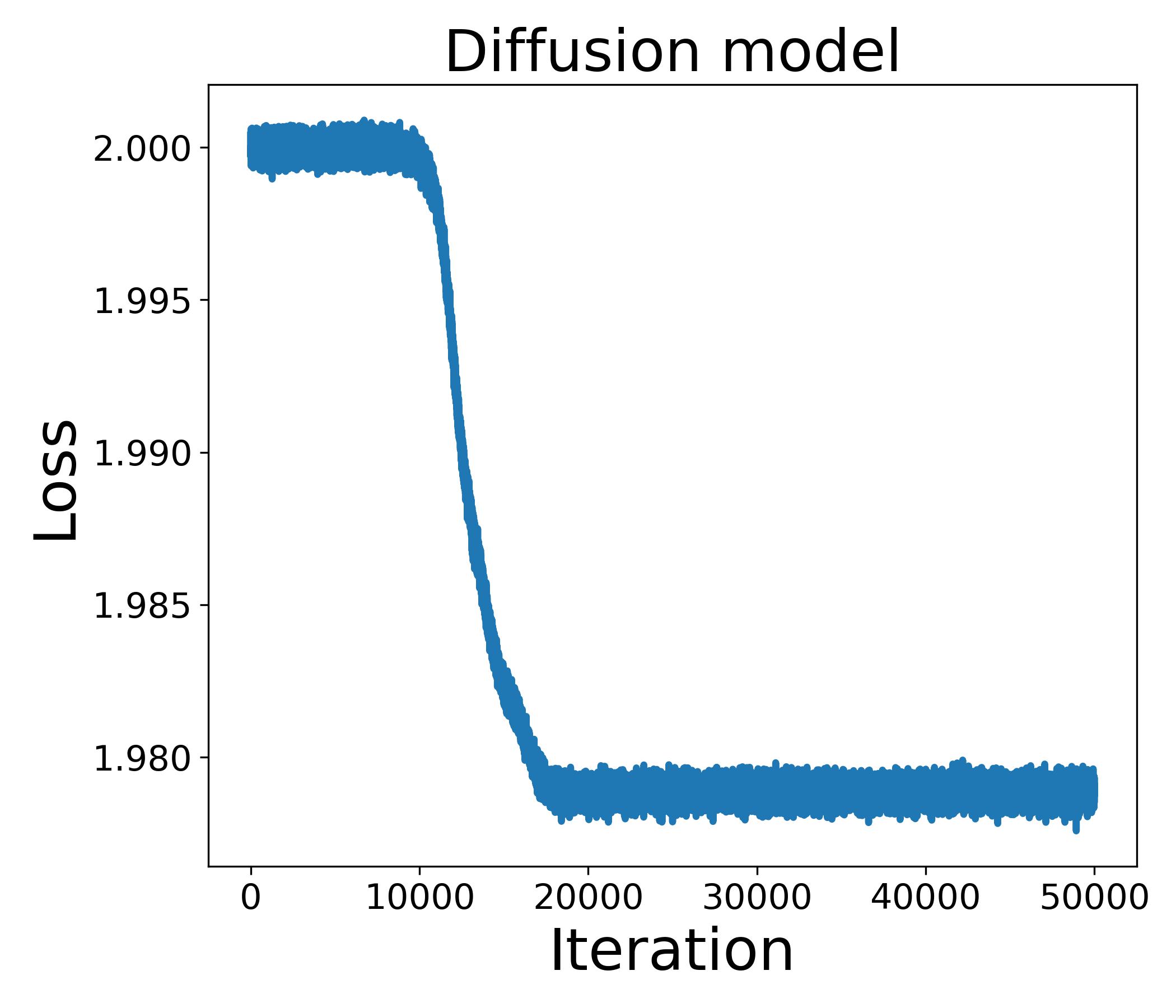} \\

    % Second row with figures and row caption
    % \multirow{2}{*}[5.5em]{\adjustbox{valign=m,rotate=90}{\small High SNR}} &  
    \multicolumn{2}{c}{\includegraphics[width=0.24\textwidth]{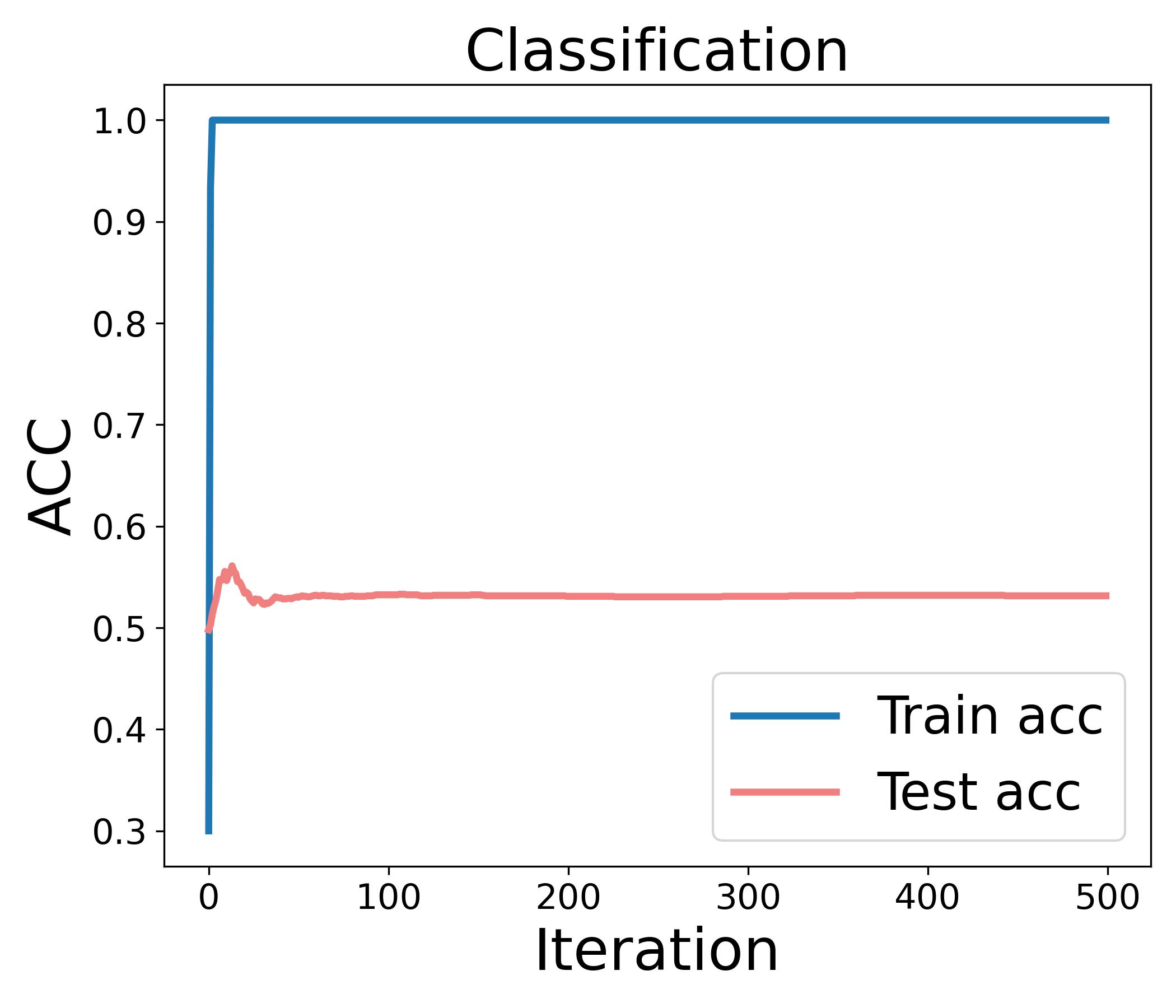}} & 
    \multicolumn{2}{c}{\includegraphics[width=0.24\textwidth]{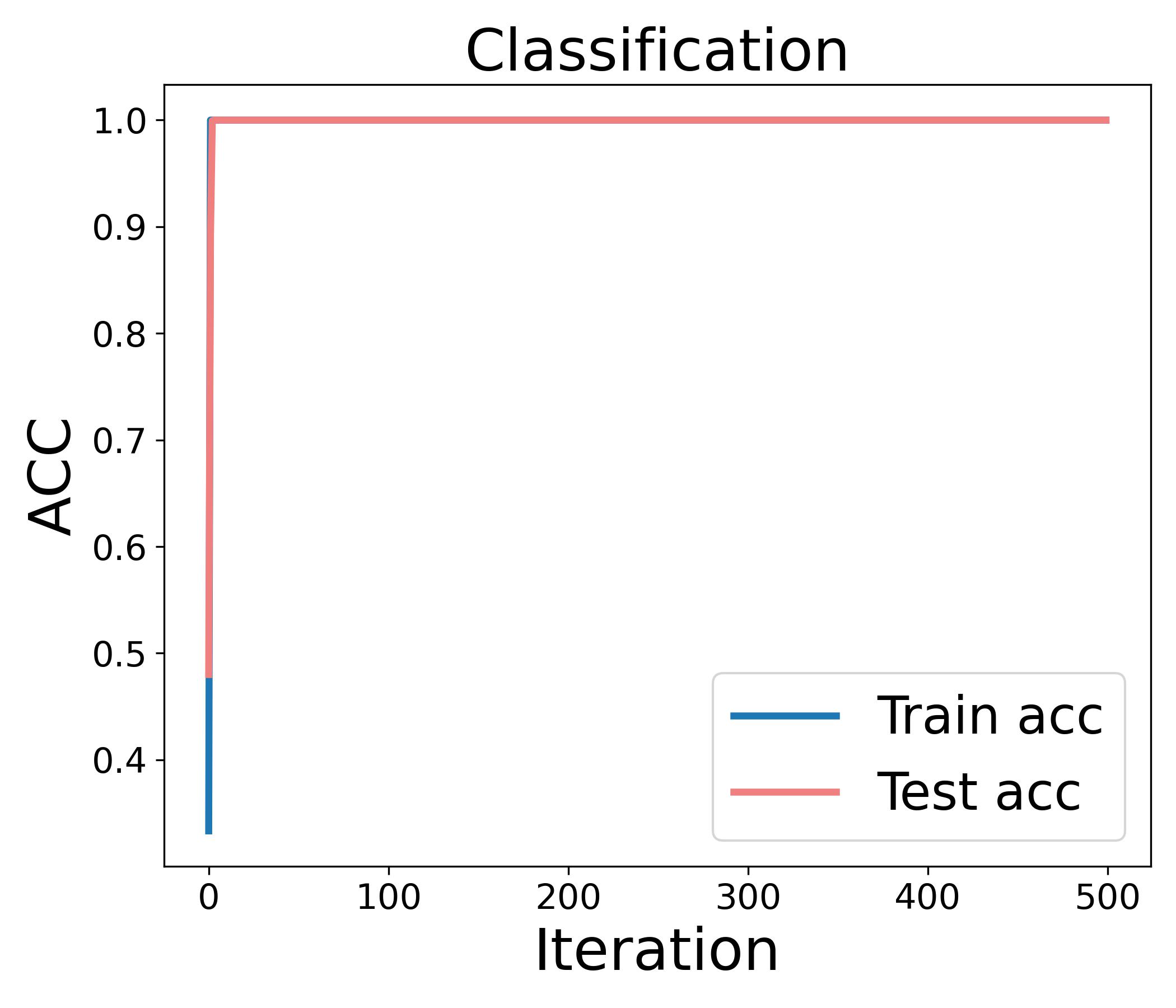}} \\
    
    % Column captions
     \multicolumn{2}{c}{Low SNR} & \multicolumn{2}{c}{High SNR} \\
    
    % & \makecell[c]{\small Training loss} & \makecell[c]{\small {Feature and noise learning} \\ \small{(Classification)}} & \makecell[c]{\small Training loss} & \makecell[c]{\small {Feature and noise learning} \\ \small{(Diffusion model)}} \\

    \end{tabular}
    \caption{Experiments on the synthetic dataset with both low SNR ($n \cdot \SNR^2 = 0.75$) and high SNR ($n \cdot \SNR^2 = 6.75$). }
    \label{fig:syn_1}
\end{figure}

\subsection{Feature learning comparison under varying SNRs}
\label{app:additional_snr}

In this section, we compare the feature learning dynamics of classification and diffusion models on additional settings of SNR. Apart from the $n \cdot \SNR^2 = 0.75$ and $n \cdot \SNR^2 = 6.75$ as shown in the main text, we additionally test on (1) $n \cdot \SNR^2 = 1.92$, (2) $n \cdot \SNR^2 = 3$ (3) $n \cdot \SNR^2 = 4.32$. The feature learning dynamics under the corresponding SNR settings are shown in Figure \ref{fig:syn_snr}. 

From the figures, we can see that classification indeed is more sensitive to the SNR scale, where it easily overfit to either signal or noise (except for the case where $n \cdot \SNR^2 = 3$ where classification learns signal and noise to approximately the same scale). On the other hand, we can verify that at stationarity, diffusion model learns in a more balanced scale for signal and noise.

\begin{figure}[!ht]
    \centering
    \begin{tabular}{@{}c@{\hskip 0.3em}c@{\hskip 0.3em}c@{\hskip 0.3em}c@{\hskip 0.3em}c@{}}
    
    % First row with figures and row caption
    % \multirow{2}{*}[5.5em]{\adjustbox{valign=m,rotate=90}{\small Low SNR}} & 
    % \includegraphics[width=0.24\textwidth]{Figures/syn_class_loss.jpg} & 
    \includegraphics[width=0.24\textwidth]{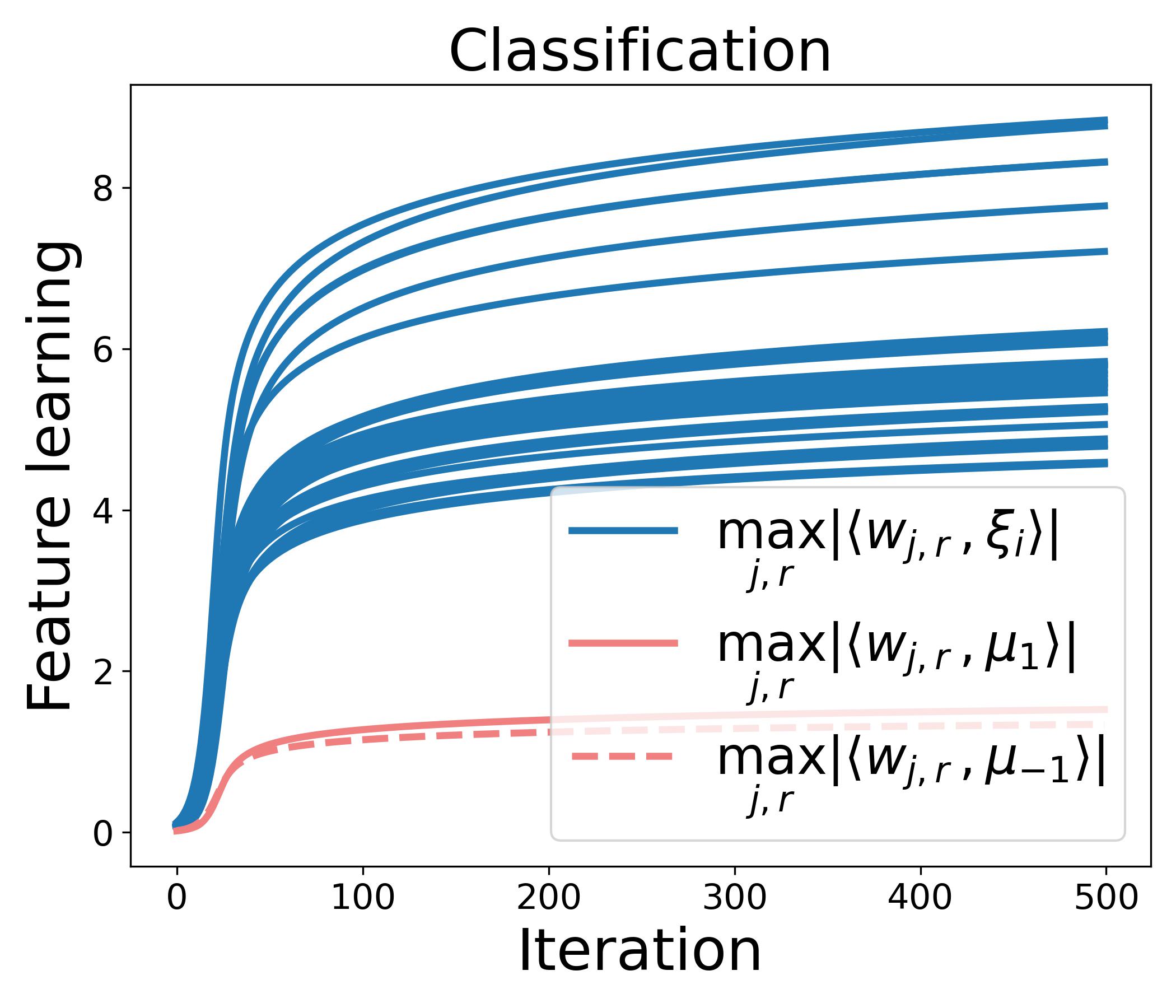} & 
    \includegraphics[width=0.24\textwidth]{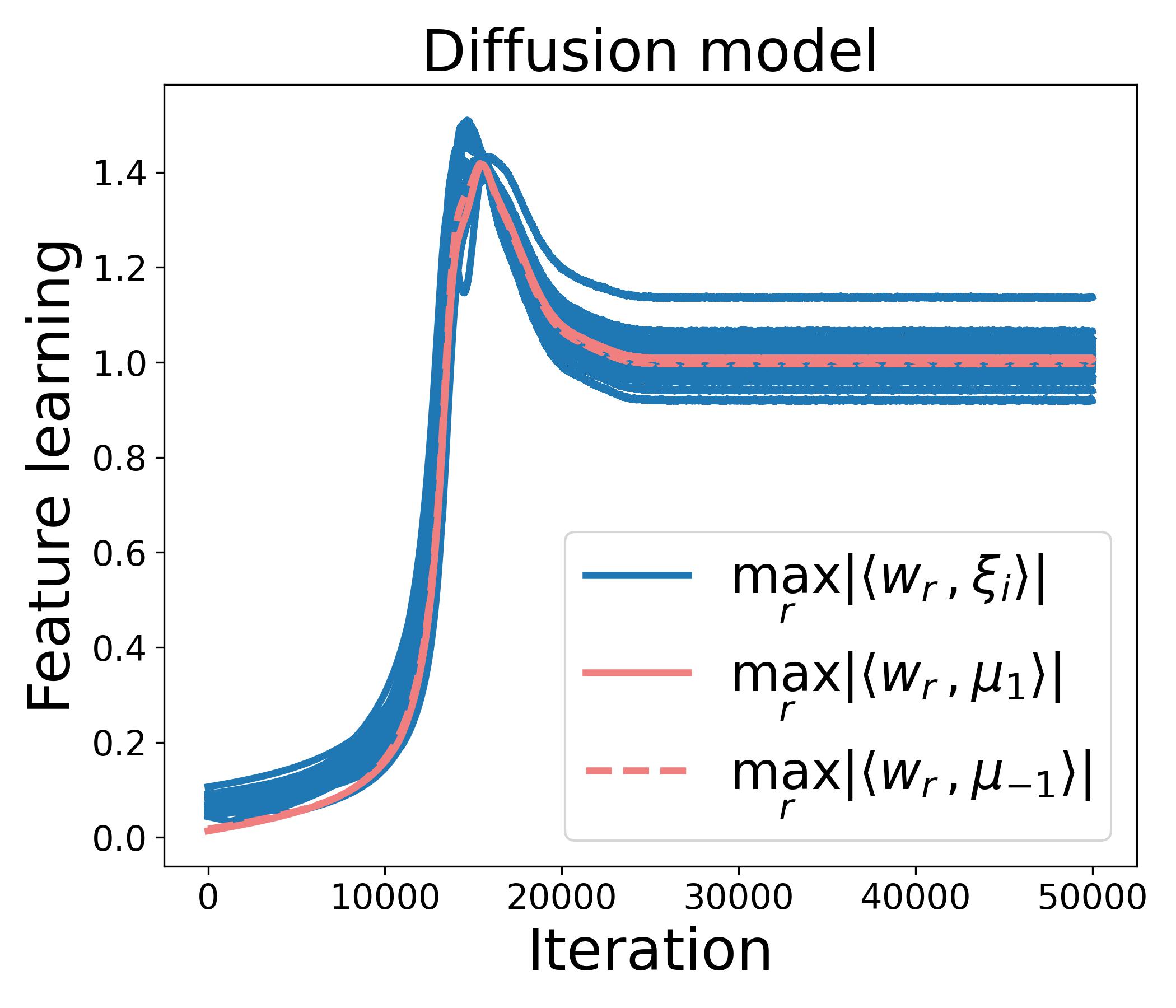} &

    % Second row with figures and row caption
    % \multirow{2}{*}[5.5em]{\adjustbox{valign=m,rotate=90}{\small High SNR}} &  
    % \includegraphics[width=0.24\textwidth]{Figures/syn_class_loss_15.jpg} & 
    \includegraphics[width=0.24\textwidth]{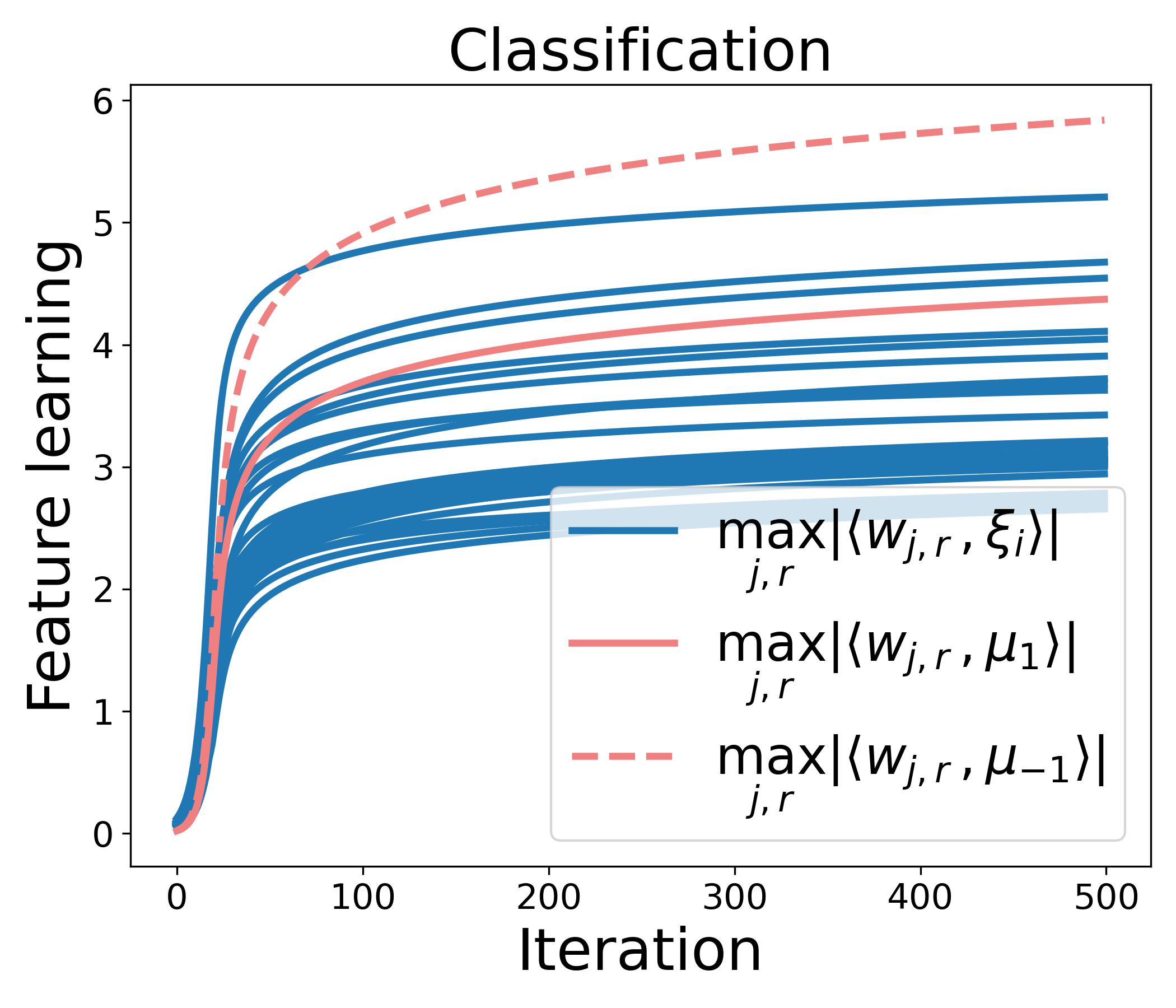} &
    \includegraphics[width=0.24\textwidth]{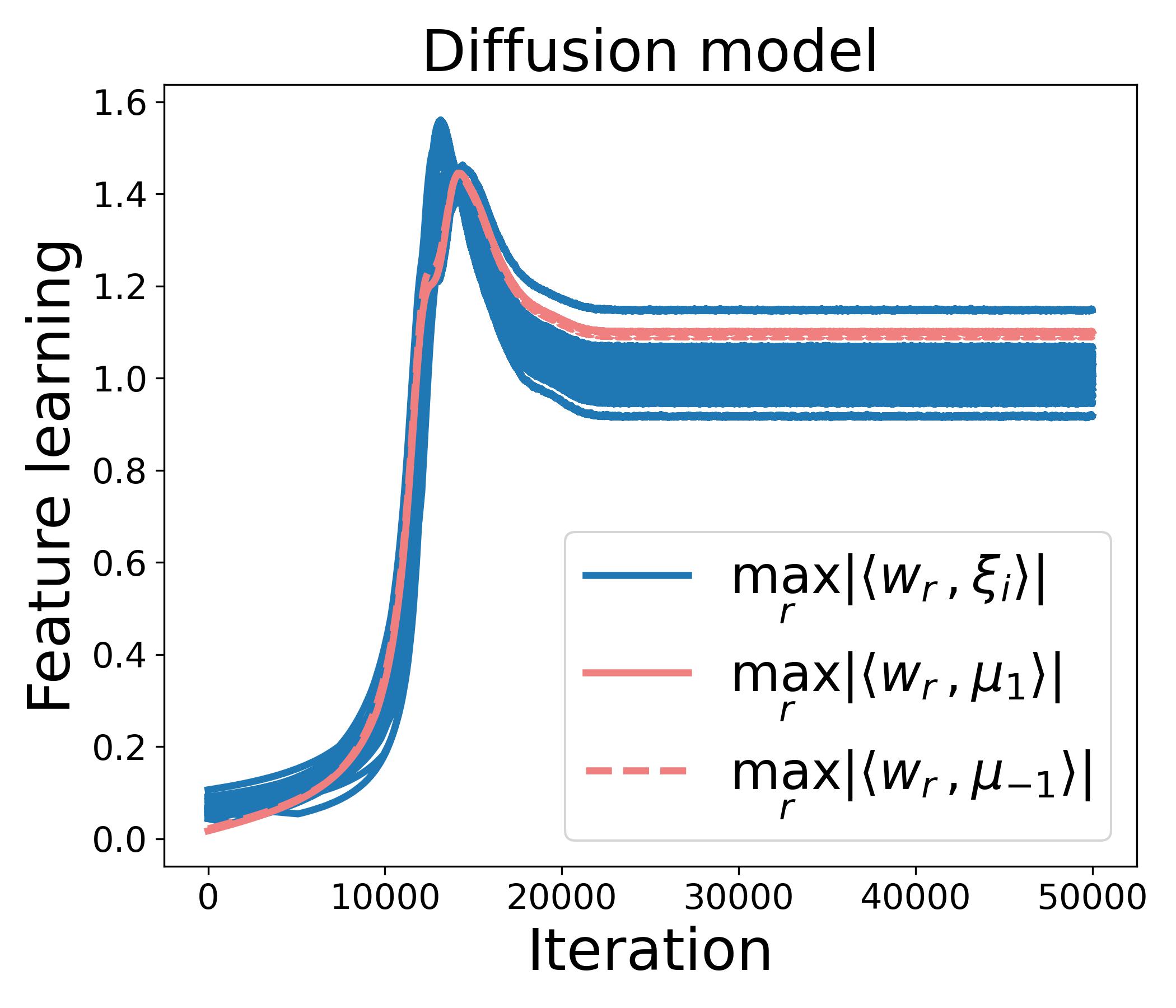} \\

     \multicolumn{2}{c}{ $n \cdot \SNR^2 = 1.92$ } & \multicolumn{2}{c}{ $n \cdot \SNR^2 = 3$ } \\

    & \includegraphics[width=0.24\textwidth]{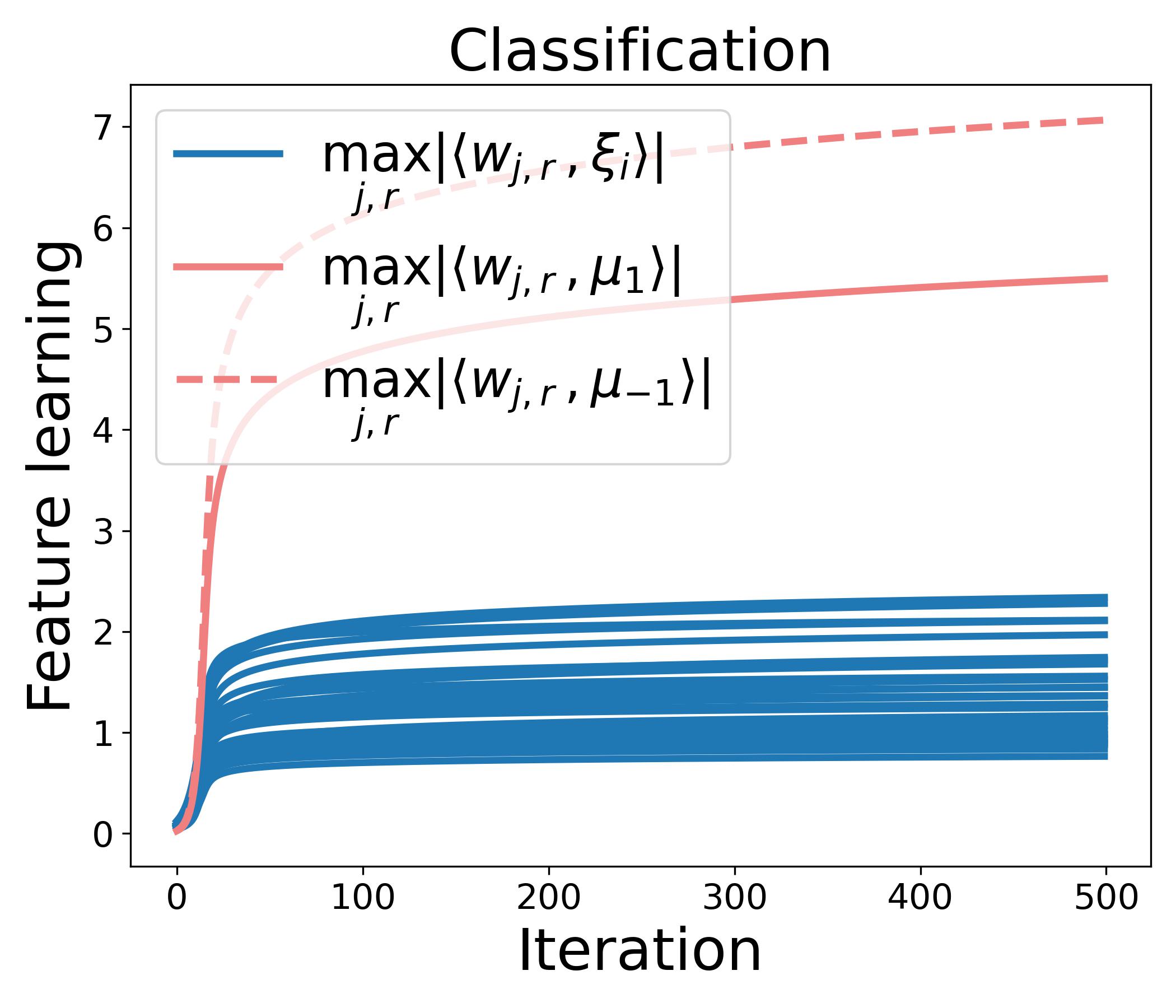} 
    & \includegraphics[width=0.24\textwidth]{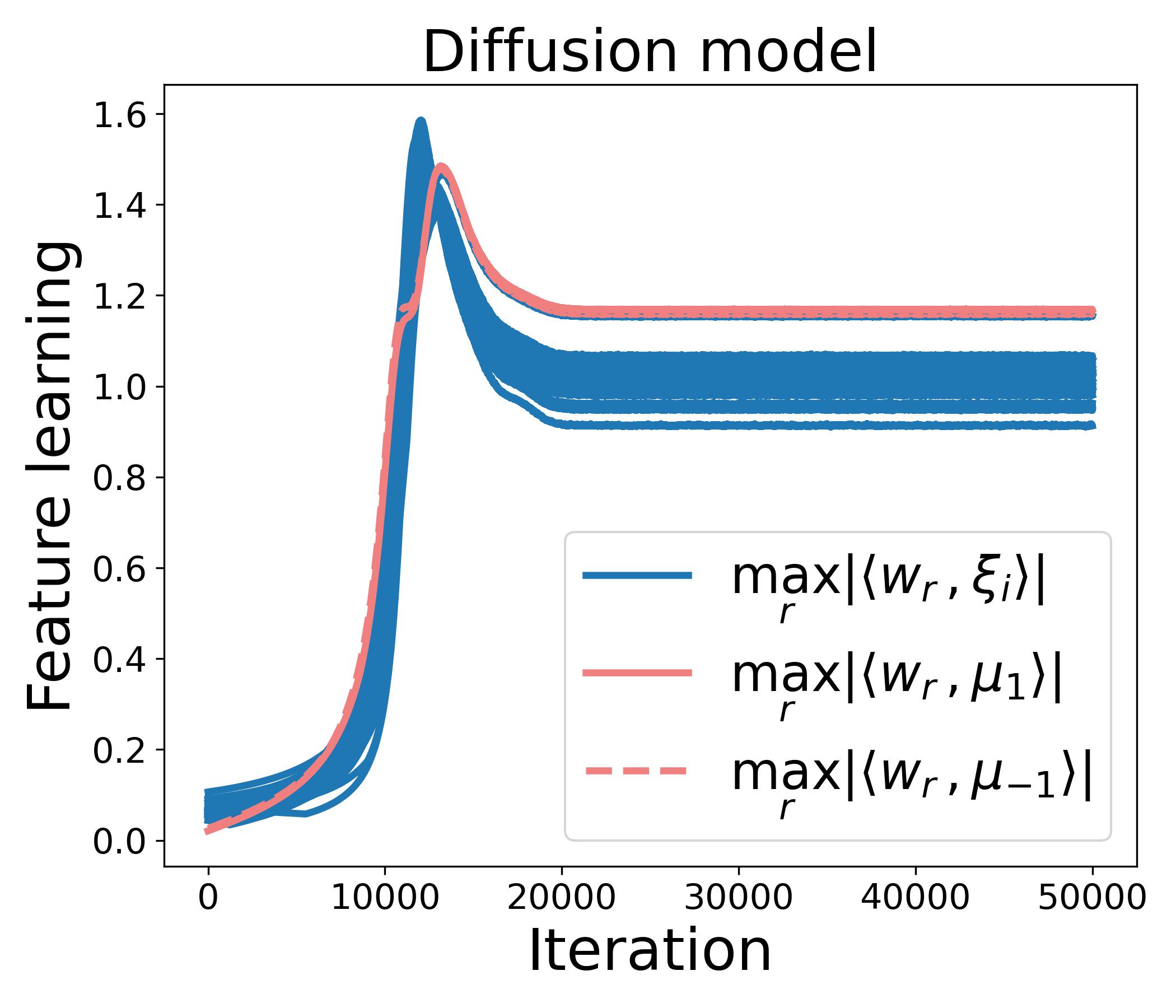} 
    & 
    \\
    \multicolumn{4}{c}{ $n \cdot \SNR^2 = 4.32$ }
    % Column captions
    % & \makecell[c]{\small Training loss} & \makecell[c]{\small {Feature and noise learning} \\ \small{(Classification)}} & \makecell[c]{\small Training loss} & \makecell[c]{\small {Feature and noise learning} \\ \small{(Diffusion model)}} \\
    \end{tabular}
    % \vspace{-5pt}
    \caption{  Experiments on the synthetic dataset with varying SNRs. }
    \label{fig:syn_snr}
\end{figure}

\subsection{High SNR setting on Noisy-MNIST}
\label{sect:high_snr}

Here we include experiment results when $\widetilde \SNR = 0.5$, which corresponds to the high SNR setting. The experiment settings are exactly the same as in the main experiment. Figure \ref{fig:real_loss_1} shows both classification and diffusion model converge in terms of objective. In addition, we see the high SNR encourages classification to learn primarily the signal while ignoring the noise. In contrast, diffusion model still learns both signal and noise to relatively the same order.
Figure \ref{fig:real_demo_1} suggests that classification learns more signal compared to noise while diffusion model still learns more balanced signal and noise. We also plot classification accuracy for both the low and high SNR cases. In the low-SNR case, because classification predominately learns noise, the generalization is poor with  test accuracy around 50\%. Conversely in the high-SNR case, where the model is able to learn signals, the classification demonstrates effective generalization with nearly 100\% test accuracy.

\begin{figure}[ht!]
    \centering
    % Minipage for the first three images with a shared title
    \begin{minipage}[b]{0.65\textwidth} % Adjust width as needed
        \centering
        \includegraphics[width=0.9\textwidth]{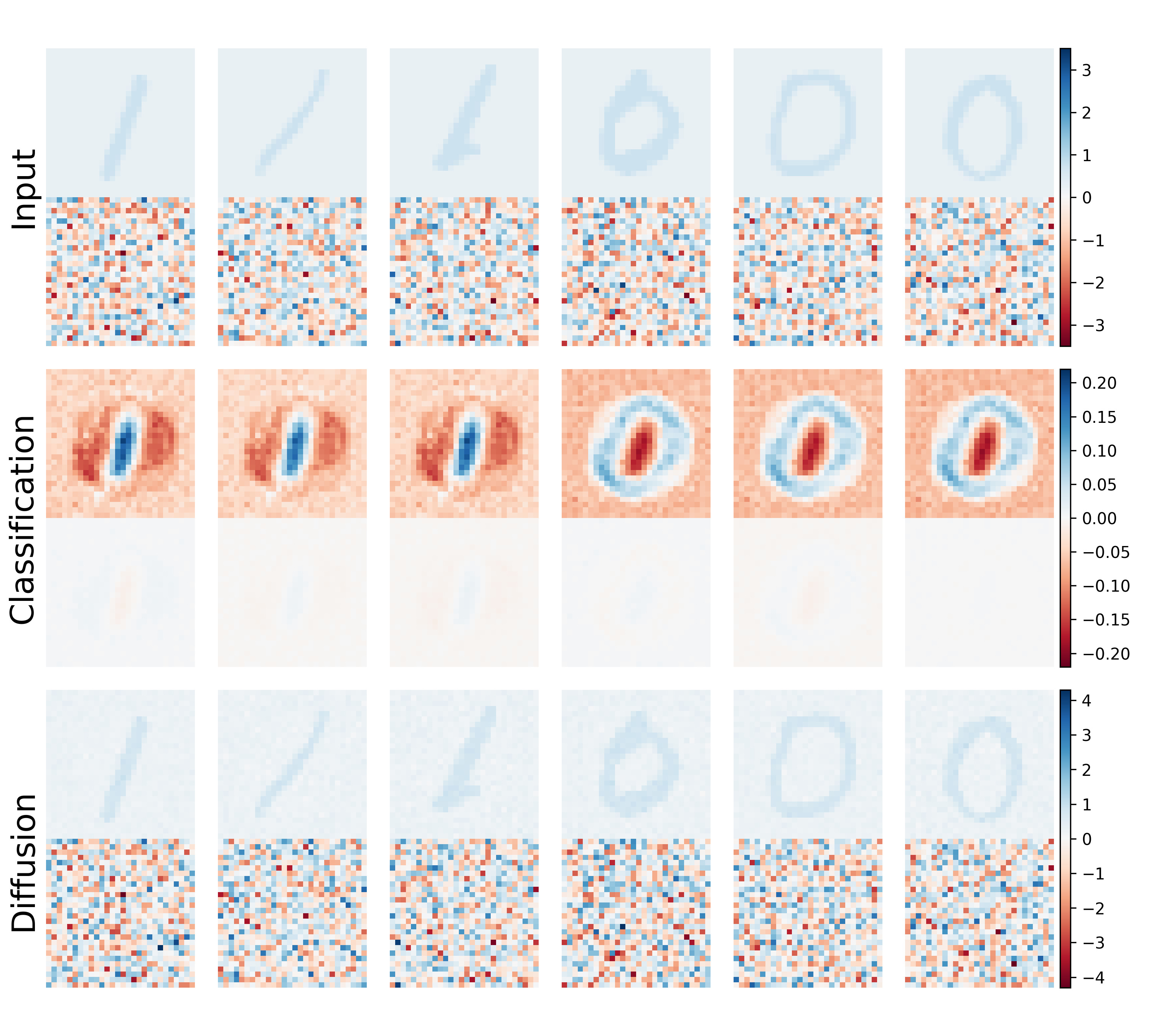}
        \caption{Experiments on Noisy-MNIST with $\widetilde\SNR =0.5$. ({First row}): Test Noisy-MNIST images; ({Second row}): Illustration of input gradient, i.e., $\nabla_\bx F_{+1}(\bW, \bx)$ when $y = 1$ and $\nabla_\bx F_{-1}(\bW, \bx)$ when $y = 0$. ({Third row}): denoised image from diffusion model. In this low-SNR case, we see classification tends to predominately learn noise while diffusion learns both signal and noise.}
        \label{fig:real_demo_1}
    \end{minipage}
    \hspace*{2pt}
    \begin{minipage}[b]{0.3\textwidth} % Adjust width as needed
        \centering
        \begin{tabular}{m{0.05\textwidth} m{0.9\textwidth}} 
        
        \centering (a) & \includegraphics[width=0.68\textwidth]{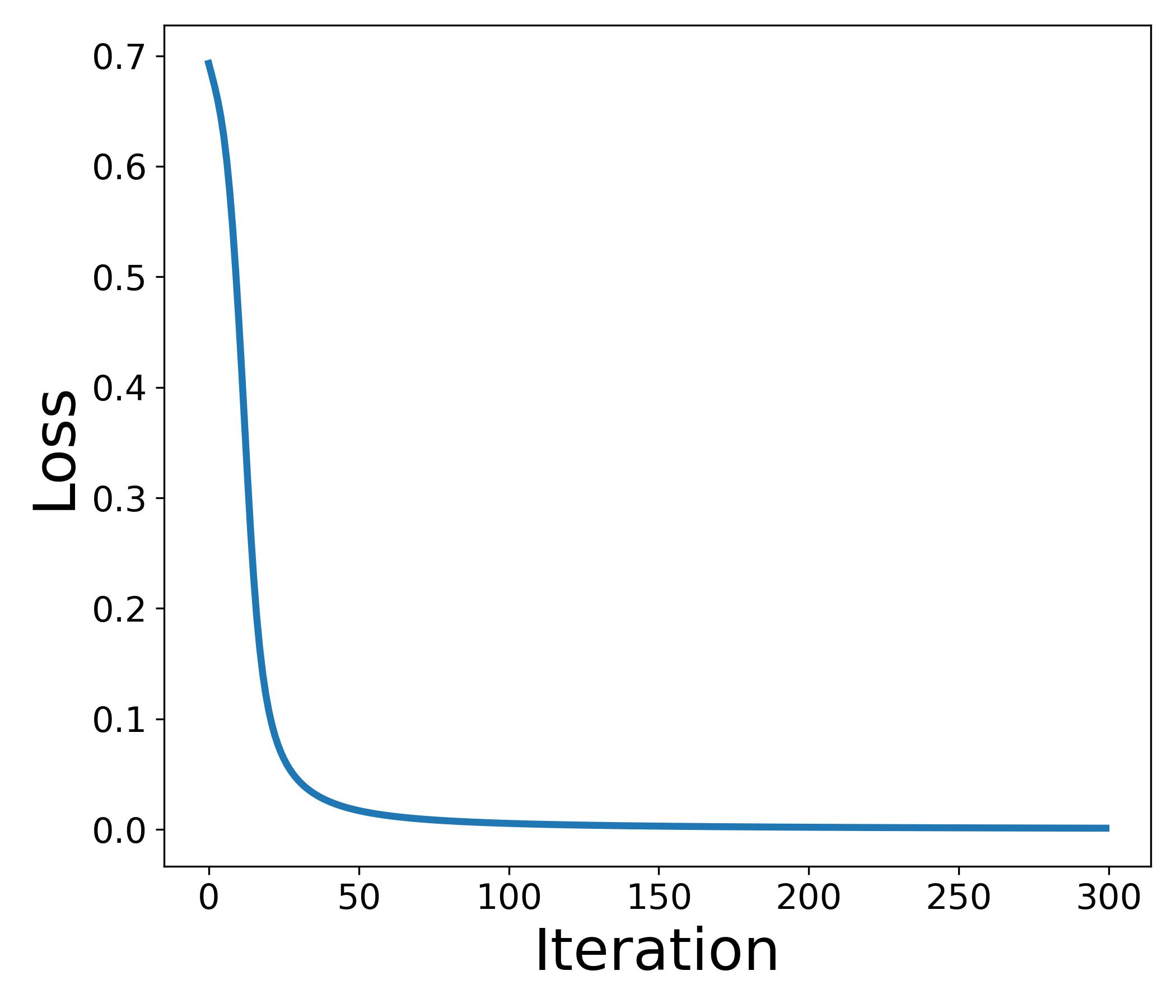} \\[1em]
        
        % Second row
        \centering (b) & \includegraphics[width=0.68\textwidth]{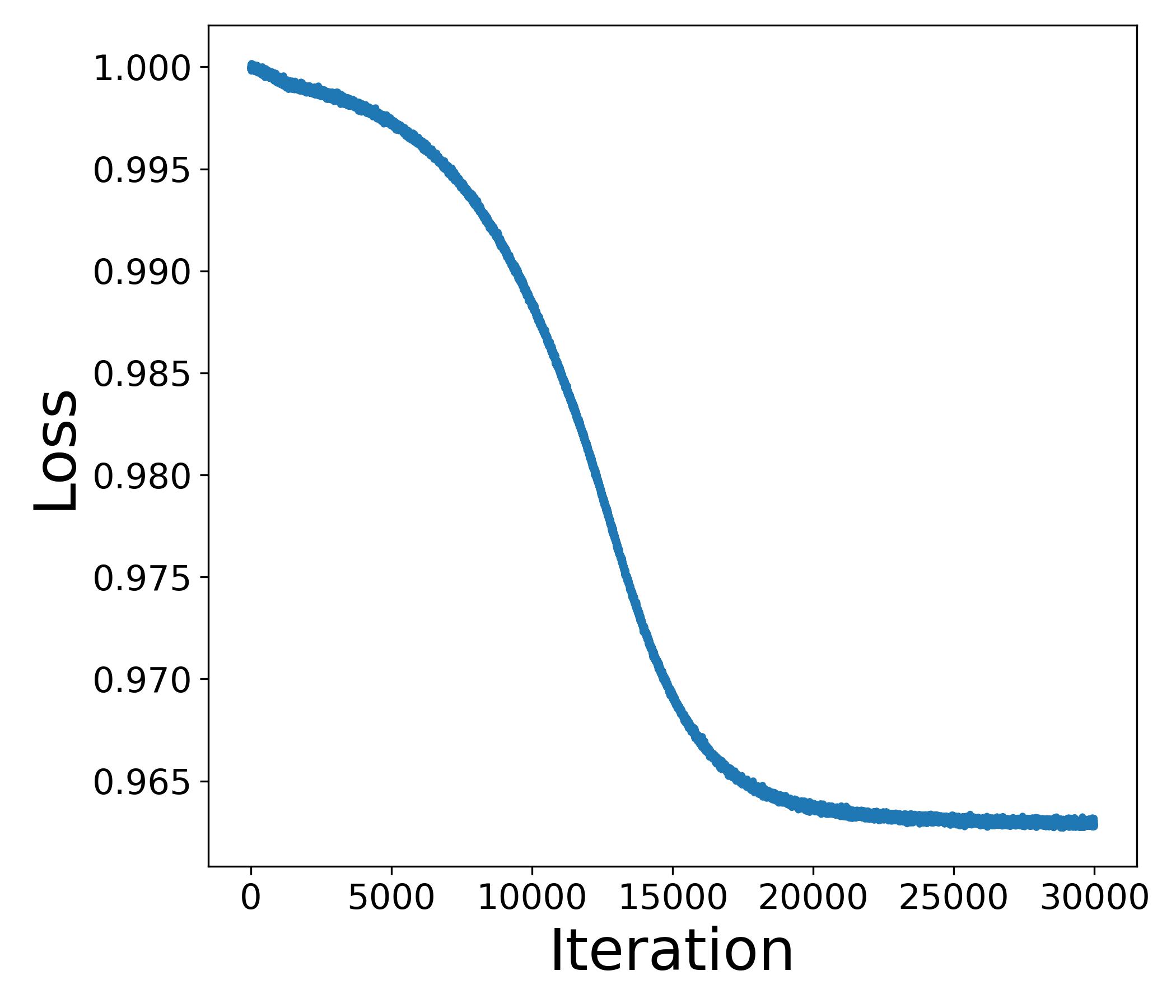} \\[1em]
        
        % Third row
        \centering (c) & \includegraphics[width=0.68\textwidth]{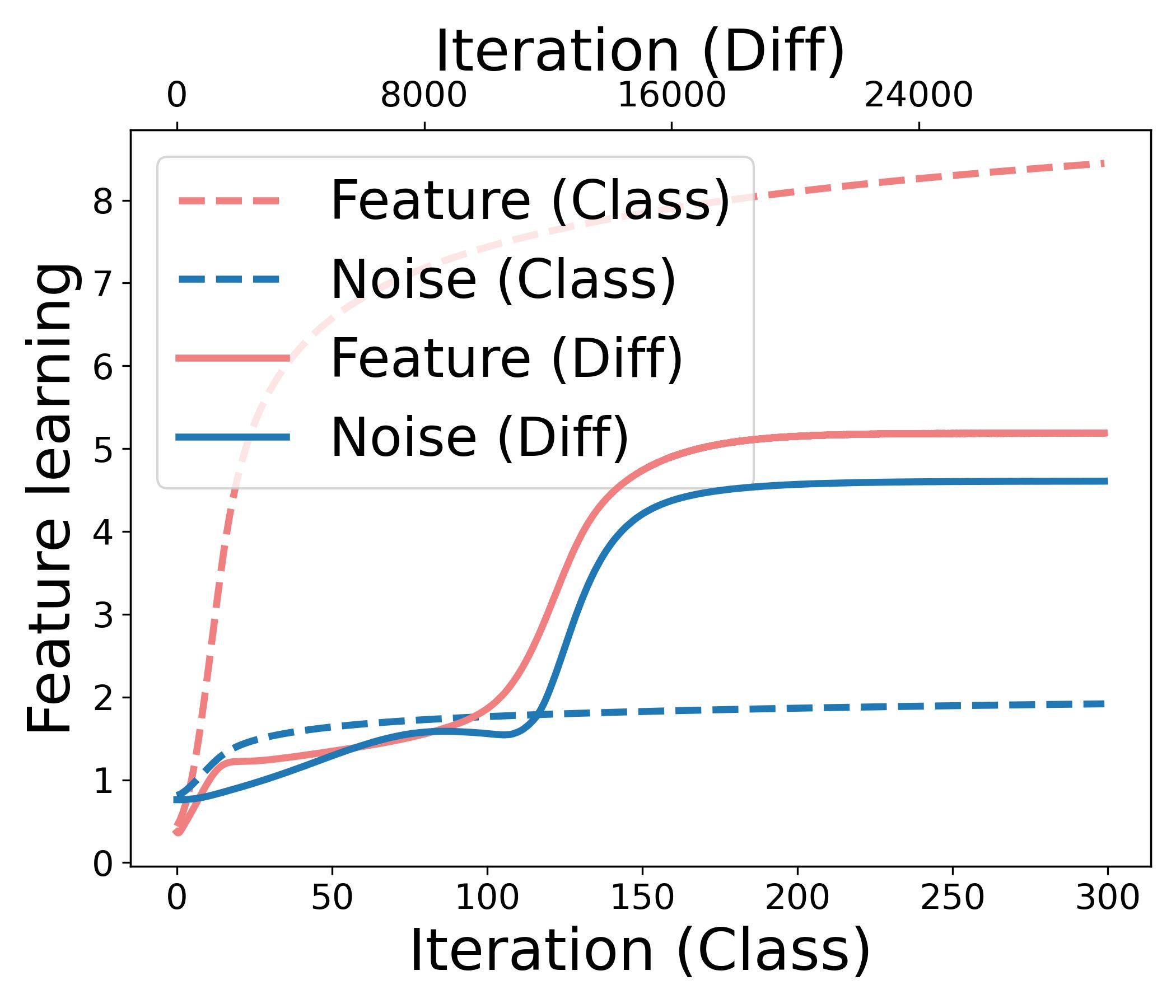} \\
        \end{tabular}
        \vspace*{-10pt}
        \caption{Experiments on Noisy-MNIST with $\widetilde\SNR = 0.5$. (a) Train loss for classification. (b) Train loss for diffusion model. (c) Feature learning dynamics.}
         \label{fig:real_loss_1}
    \end{minipage}
\end{figure}

\begin{figure}[ht]
%\captionsetup{justification=left}
    \centering
    \subfloat[ACC ($\widetilde \SNR = 0.1$)]{\includegraphics[width=0.248\textwidth]{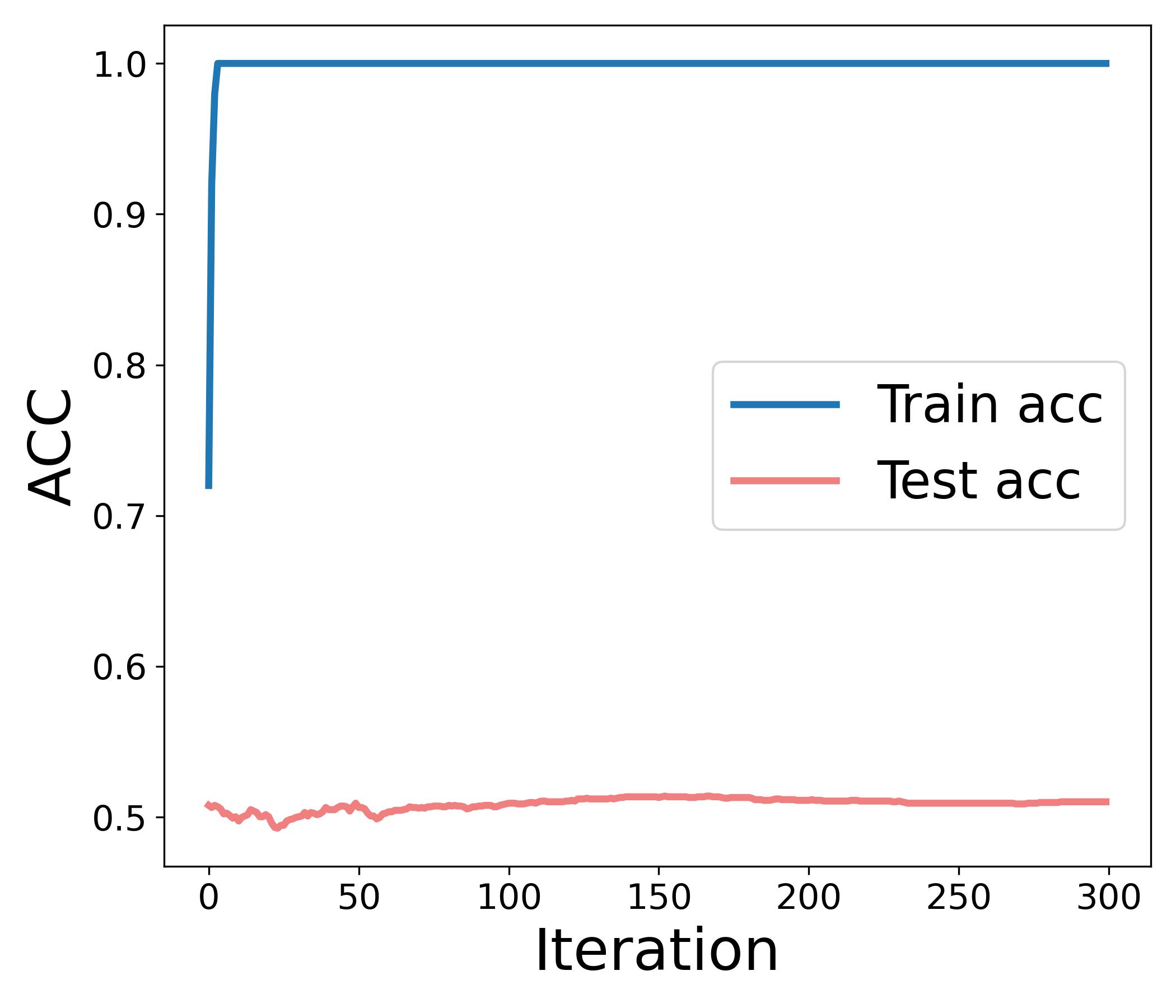}}
    \hspace*{20pt}
    \subfloat[ACC ($\widetilde \SNR = 0.5$)]{\includegraphics[width=0.248\textwidth]{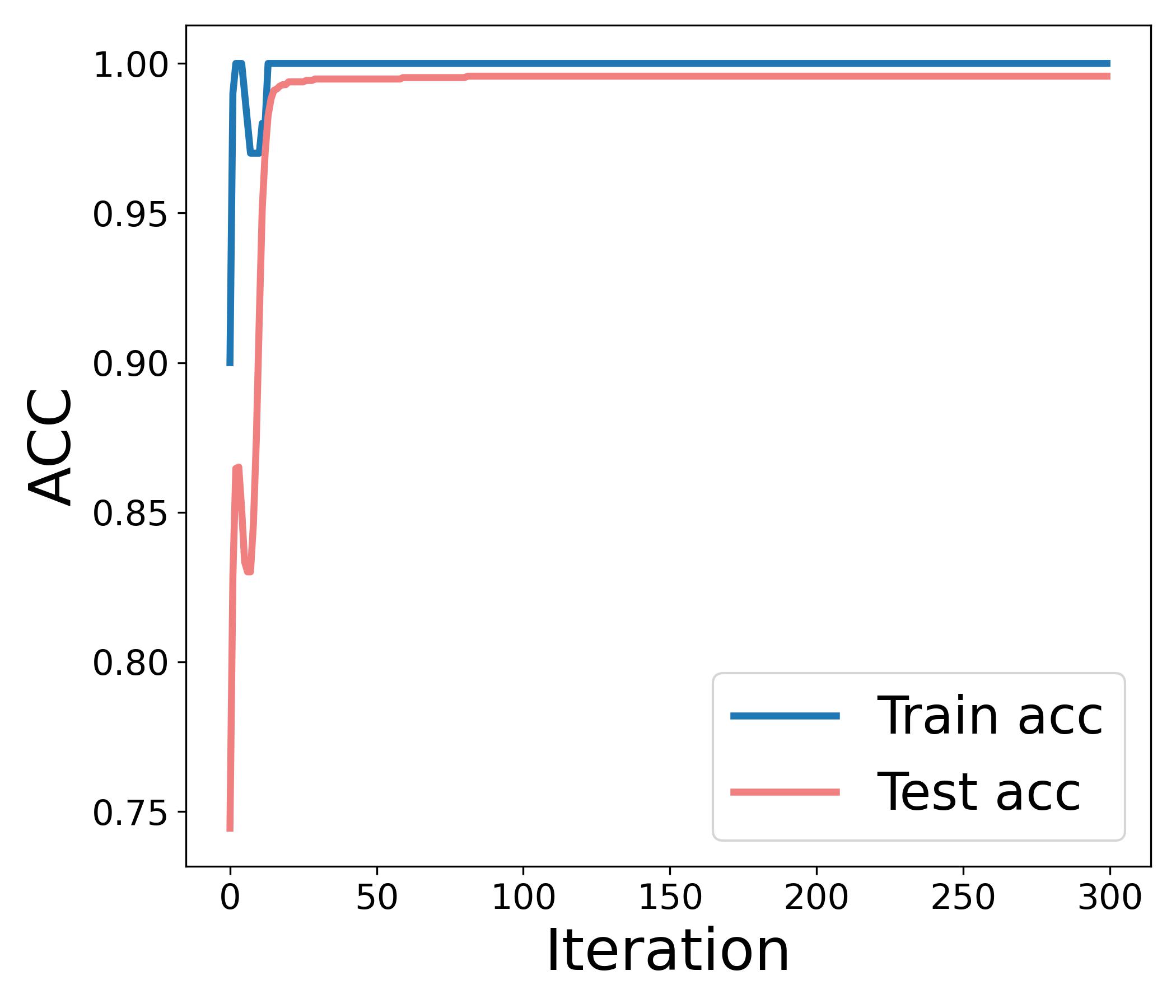}} 
    \caption{Classification accuracy on (a) low-SNR and (b) high-SNR noisy MNIST datasets. This demonstrates that when classification focuses on learning noise (as in the low-SNR case), the test accuracy hovers around 50\%, thus suggesting failure to generalize. In contrast, when classification focuses on learning signals (as in the high-SNR case), classification generalizes effectively, achieving near-perfect accuracy.}
    \label{figure_supp1}
\end{figure}

\subsection{Experiments with additional diffusion time step}

Here we also test on additional diffusion time step for learning on noisy-MNIST dataset. In particular, we consider $t=0.8$, which gives $\alpha_t = \exp(-t) = 0.45$ and $\beta_t = \sqrt{1 - \exp(-2t)} = 0.89$. We include the illustrations of denoised images as well as loss convergence and feature learning dynamics in Figure \ref{fig:real_demo_2}, \ref{fig:real_loss_2}, \ref{fig:real_demo_3}, \ref{fig:real_loss_3}. We see despite with a larger scale of added diffusion noise, diffusion model still learn both signals and noise unlike for the case of classification.

\begin{figure}[h!]
    \centering
    % Minipage for the first three images with a shared title
    \begin{minipage}[b]{0.65\textwidth} % Adjust width as needed
        \centering
        \includegraphics[width=0.9\textwidth]{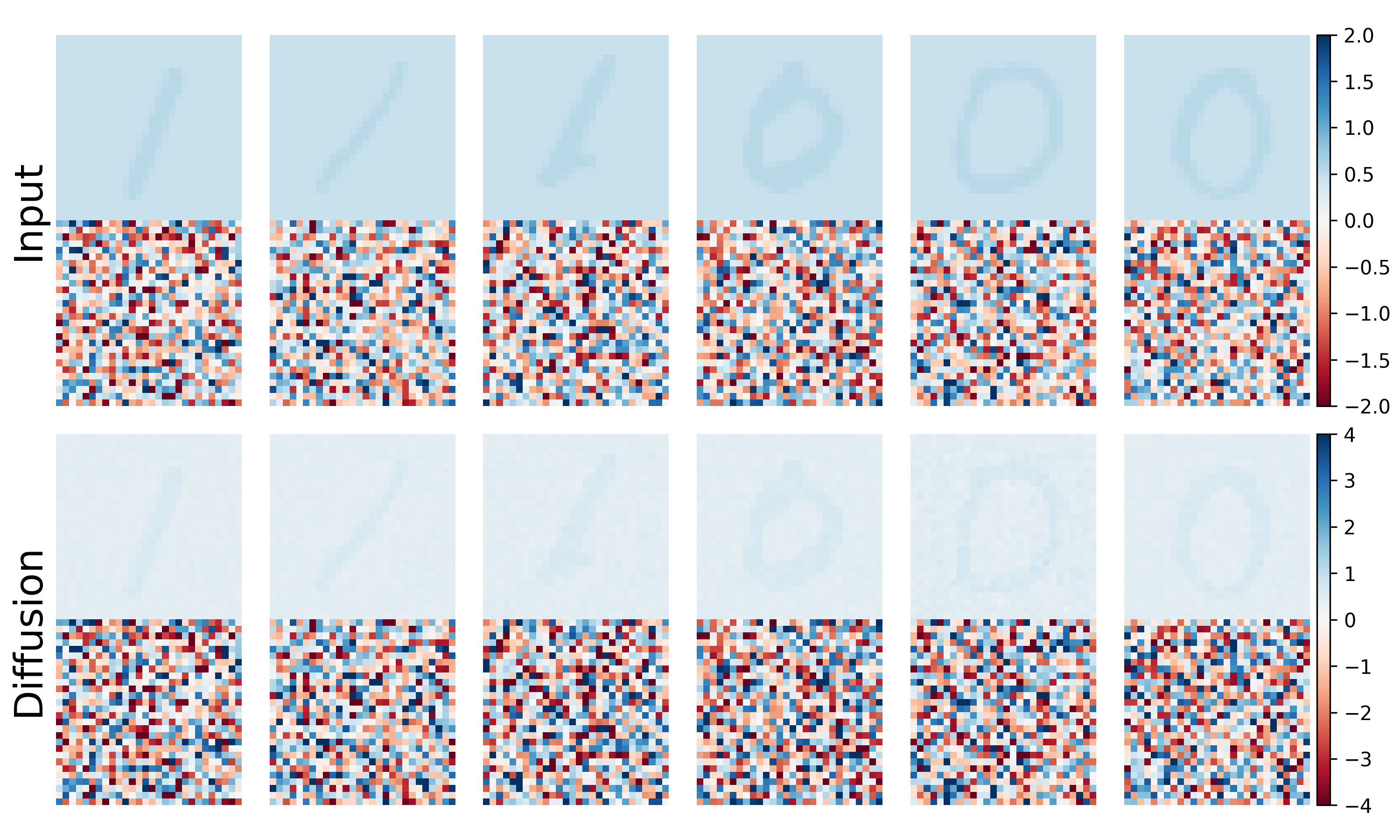}
        \caption{Additional experiments on Noisy-MNIST with $\widetilde\SNR =0.1$ and diffusion $t = 0.8$. ({First row}): Test Noisy-MNIST images; ({Second row}): denoised image from diffusion model. We see diffusion still learns both signals and noise even with large diffusion time step.}
        \label{fig:real_demo_2}
    \end{minipage}
    \hspace*{2pt}
    \begin{minipage}[b]{0.3\textwidth} % Adjust width as needed
        \centering
        \begin{tabular}{m{0.05\textwidth} m{0.9\textwidth}} 
        
        \centering (a) & \includegraphics[width=0.72\textwidth]{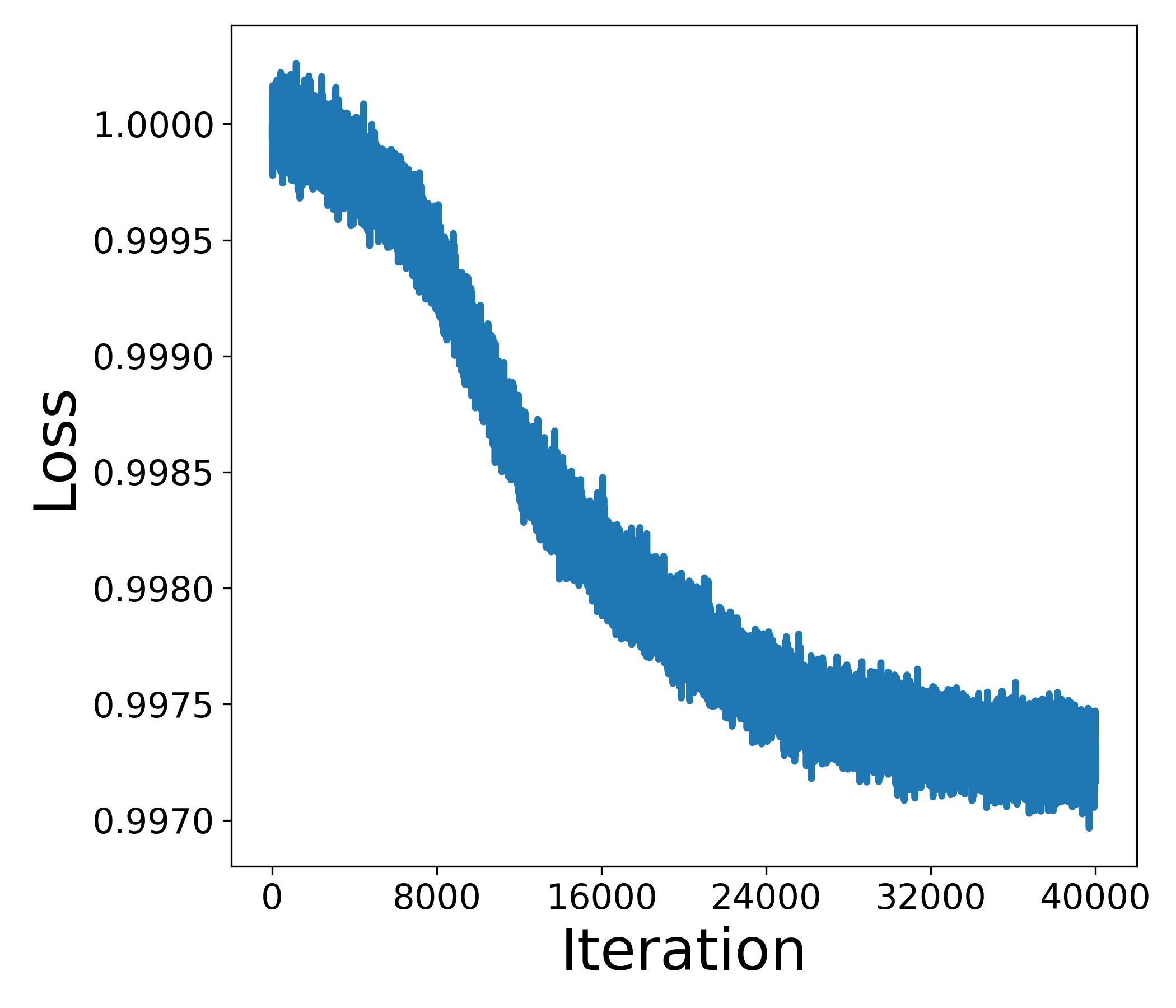} \\[1em]
        
        % Second row
        \centering (b) & \includegraphics[width=0.72\textwidth]{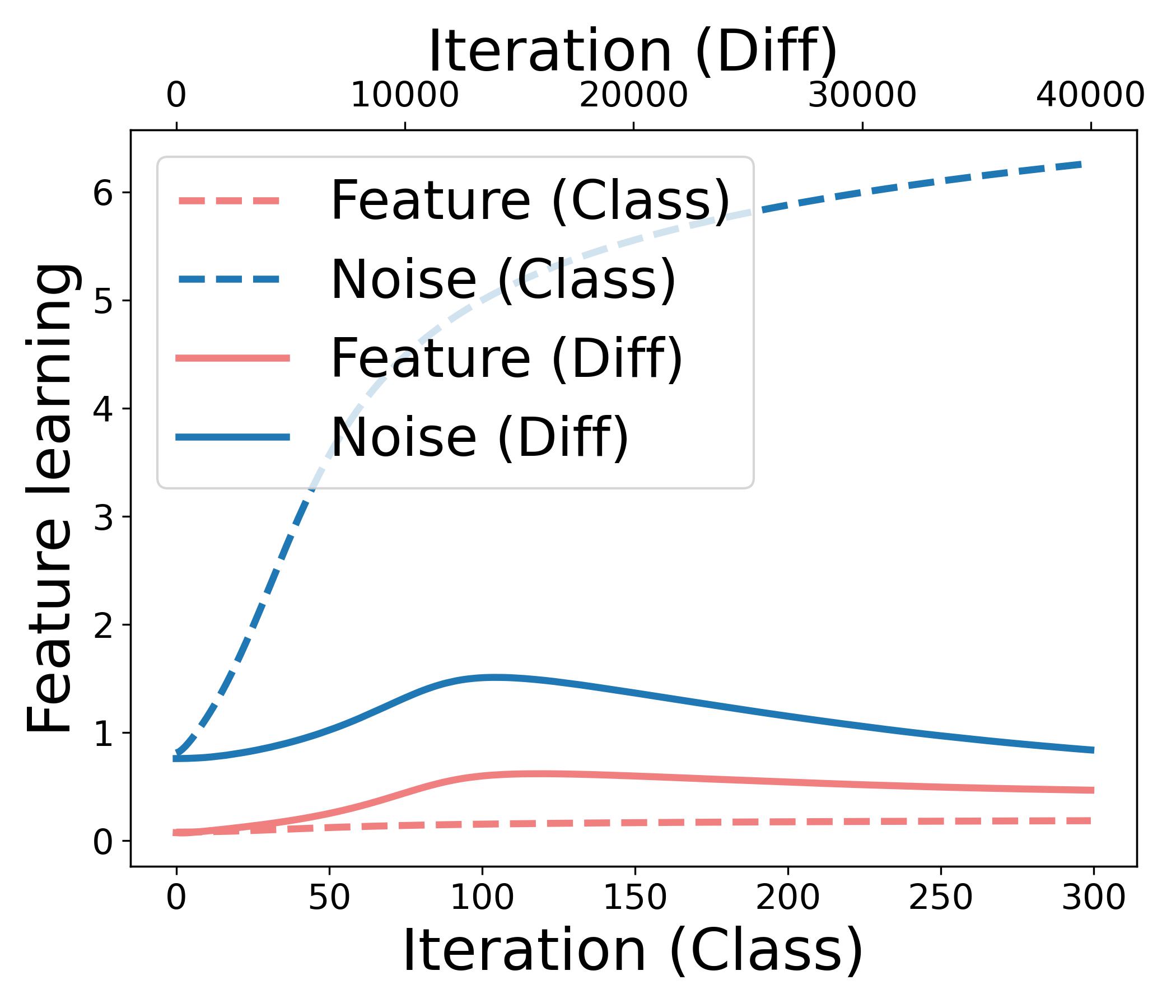} \\[1em]
        
        % Third row
        \end{tabular}
        \vspace*{-10pt}
        \caption{Additional experiments on Noisy-MNIST with $\widetilde\SNR = 0.1$  and = $t = 0.8$. (a) Train loss for diffusion model. (c) Feature learning dynamics.}
         \label{fig:real_loss_2}
    \end{minipage}
\end{figure}

\begin{figure}[h!]
    \centering
    % Minipage for the first three images with a shared title
    \begin{minipage}[b]{0.65\textwidth} % Adjust width as needed
        \centering
        \includegraphics[width=0.9\textwidth]{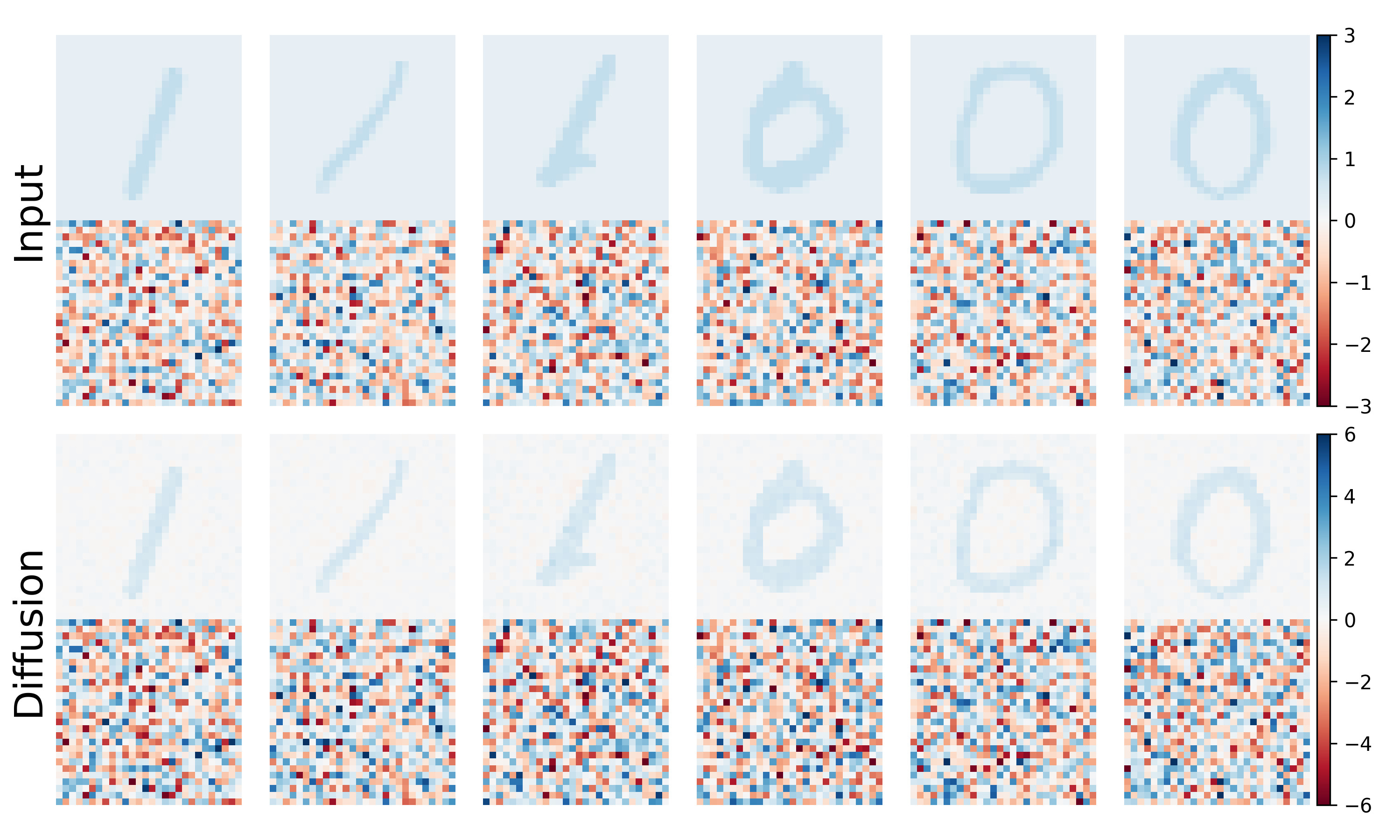}
        \caption{Additional experiments on Noisy-MNIST with $\widetilde\SNR =0.5$ and diffusion $t = 0.8$. ({First row}): Test Noisy-MNIST images; ({Second row}): denoised image from diffusion model. We see diffusion still learns both signals and noise even with large diffusion time step.}
        \label{fig:real_demo_3}
    \end{minipage}
    \hspace*{2pt}
    \begin{minipage}[b]{0.3\textwidth} % Adjust width as needed
        \centering
        \begin{tabular}{m{0.05\textwidth} m{0.9\textwidth}} 
        
        \centering (a) & \includegraphics[width=0.72\textwidth]{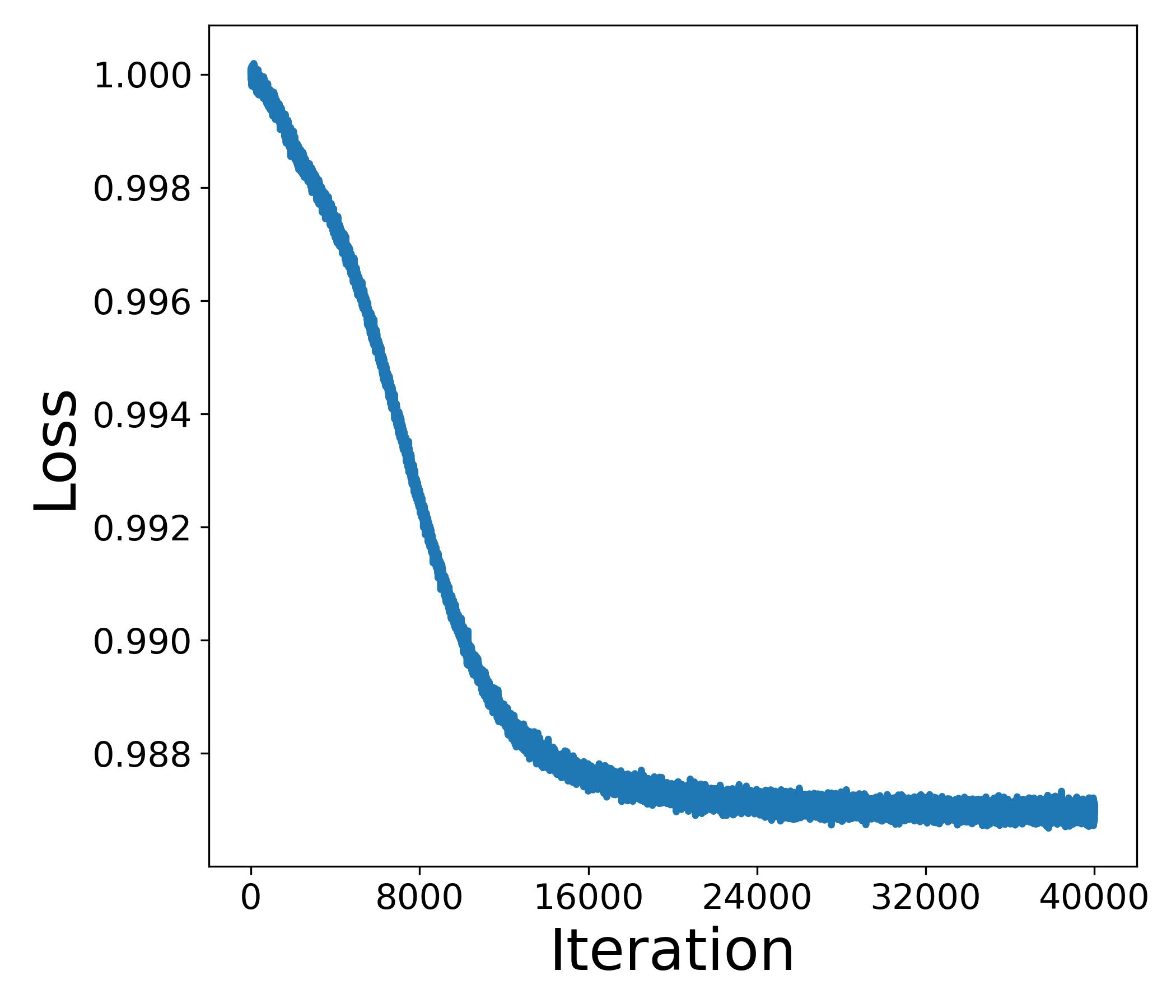} \\[1em]
        
        % Second row
        \centering (b) & \includegraphics[width=0.72\textwidth]{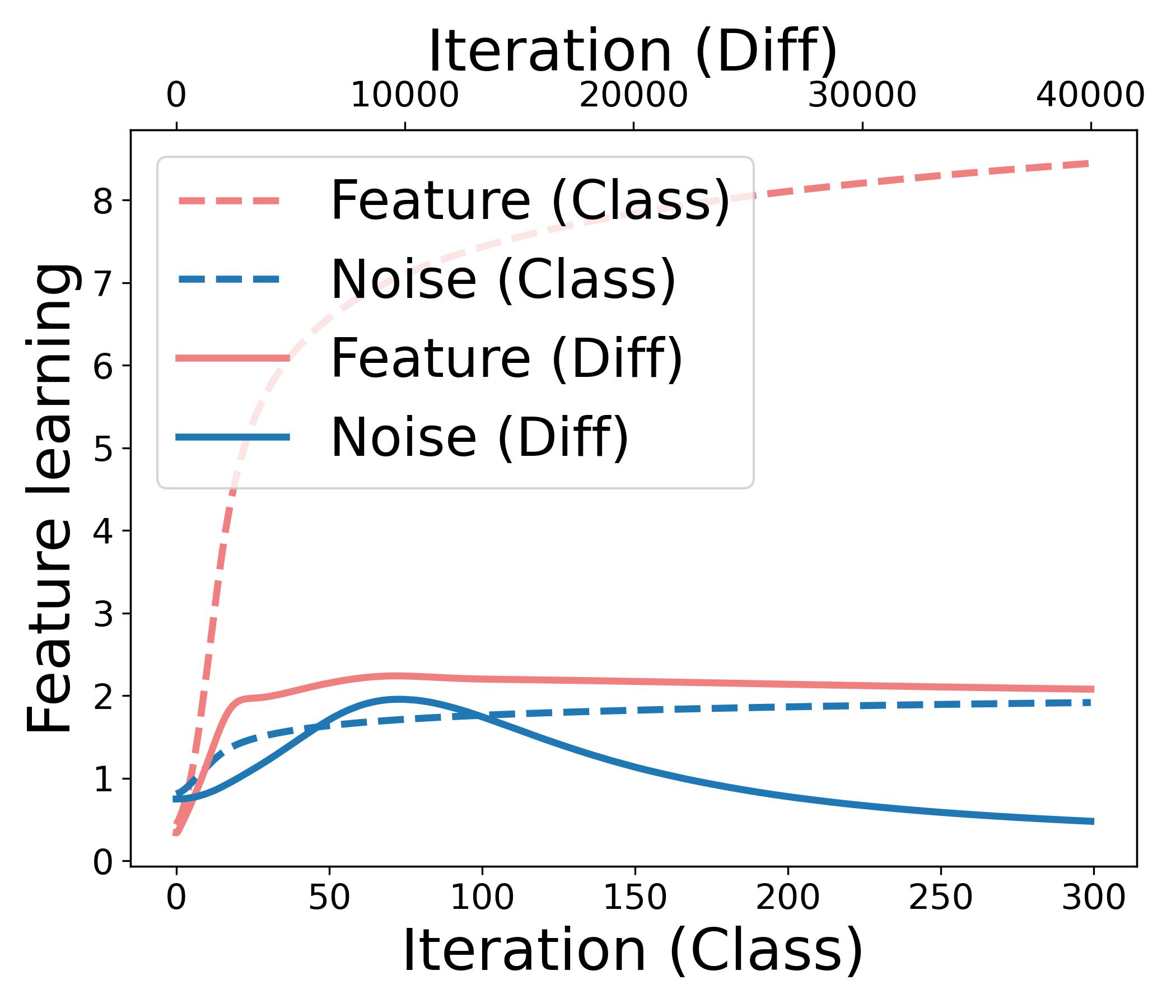} \\[1em]
        
        % Third row
        \end{tabular}
        \vspace*{-10pt}
        \caption{Additional experiments on Noisy-MNIST with $\widetilde\SNR = 0.5$  and = $t = 0.8$. (a) Train loss for diffusion model. (c) Feature learning dynamics.}
         \label{fig:real_loss_3}
    \end{minipage}
\end{figure}

\subsection{On the feature learning with 10-class MNIST}
\label{app:mnist_10}

In the main paper, we only conduct experiments on Noisy-MNIST restricted to two classes. In this section, we experiment over the 10-class MNIST dataset, which contains more features and is more challenging for both diffusion model and classification. 

We adopt the same data processing pipelines as in Section \ref{sect:real_exper} except that for each class, we select 10 images. We set the scaled SNR $\widetilde{\SNR} = 0.1$, consistent with the main paper. While the diffusion model remains unchanged,  the  classification model requires modification. Specifically, the second layer's weight matrix has dimensions $m 
\times 10$, with entries  fixed uniformly to values in $\{-1, 1 \}$. Furthermore, we employ cross-entropy loss for training the classification model.

We plot the visualization of feature learning in Figure \ref{fig:real_demo_add}. We observe that, even with additional features and labels, the similar learning patterns are observed, i.e., diffusion model learns both signals and noise in order to reconstruct the input distribution while classification model learns primarily noise for loss minimization. From Figure \ref{fig:real_loss_add}(c), we notice that diffusion model learns features to relatively the same scale while for classification, the growth of feature learning is dominated by noise learning.

\begin{figure}[t!]
    \centering
    % Minipage for the first three images with a shared title
    \begin{minipage}[b]{0.65\textwidth} % Adjust width as needed
        \centering
        \includegraphics[width=0.9\textwidth]{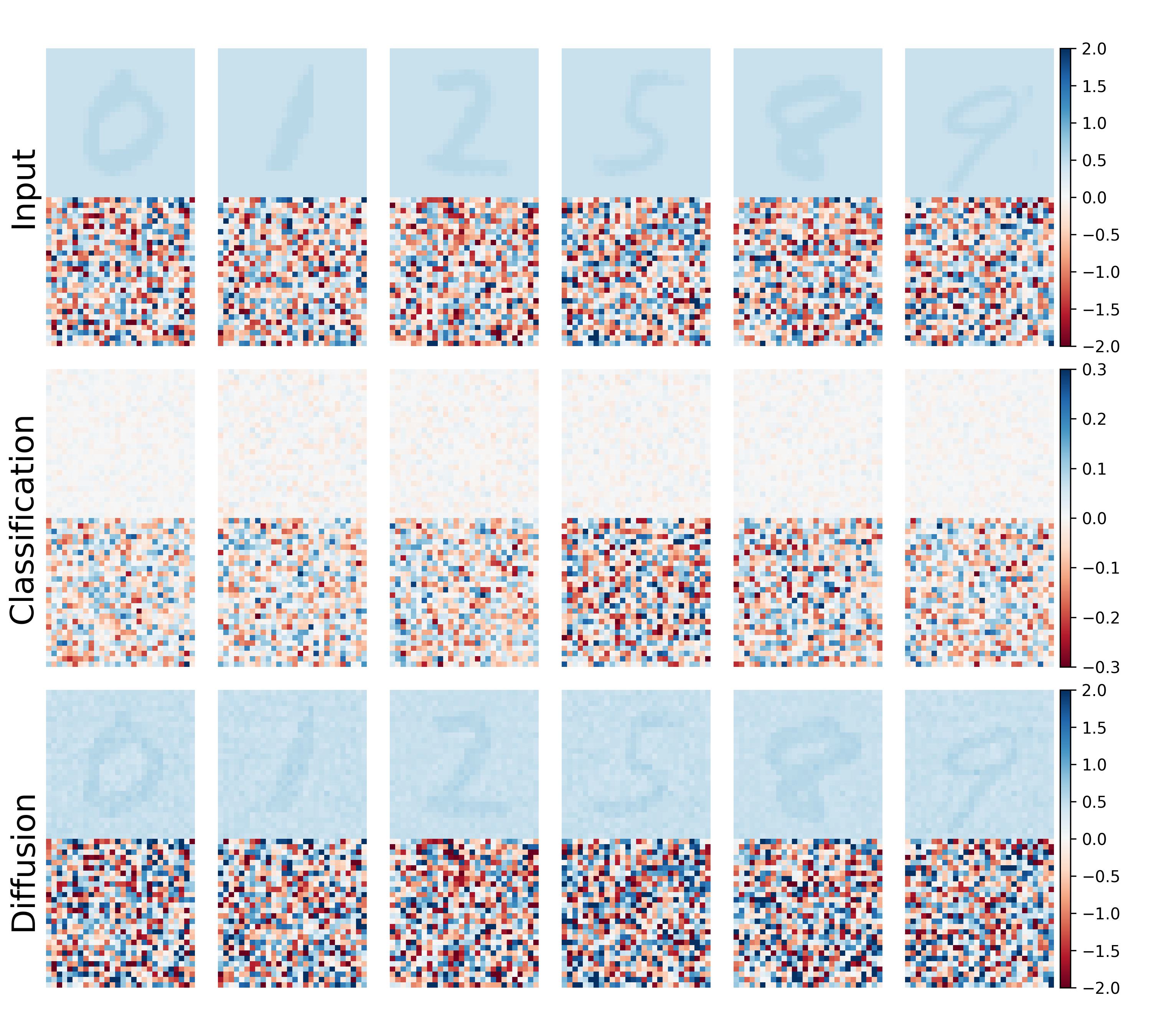}
        \caption{Experiments on 10-class Noisy-MNIST with $\widetilde\SNR = 0.1$. ({First row}): Test Noisy-MNIST images; ({Second row}): Illustration of gradient of output (for the true class) with respect to the input. ({Third row}): denoised image from diffusion model. In this low-SNR case, we see classification tends to predominately learn noise while diffusion learns both signals and noise.}
        \label{fig:real_demo_add}
    \end{minipage}
    \hspace*{2pt}
    \begin{minipage}[b]{0.3\textwidth} % Adjust width as needed
        \centering
        \begin{tabular}{m{0.05\textwidth} m{0.9\textwidth}} 
        
        \centering (a) & \includegraphics[width=0.68\textwidth]{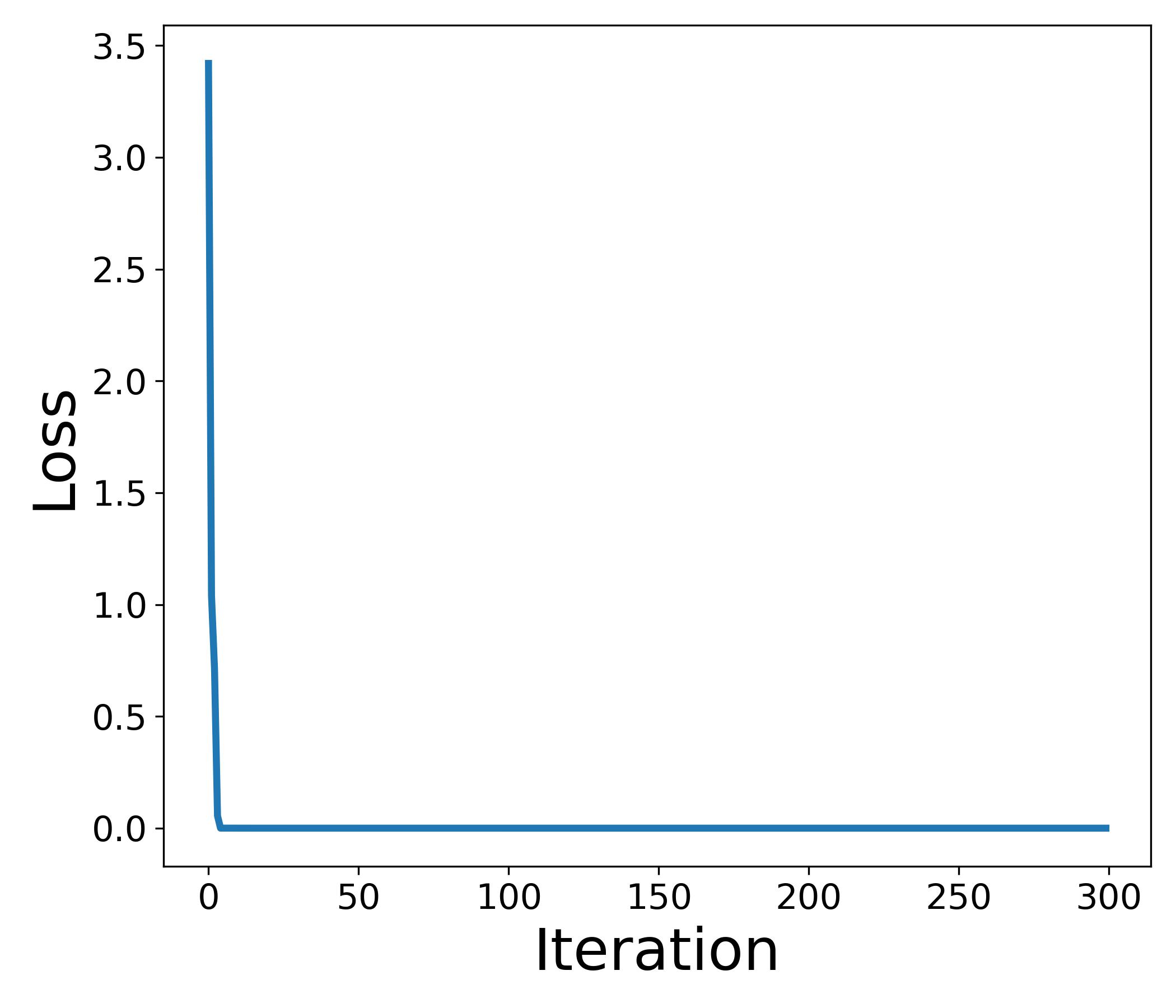} \\[1em]
        
        % Second row
        \centering (b) & \includegraphics[width=0.68\textwidth]{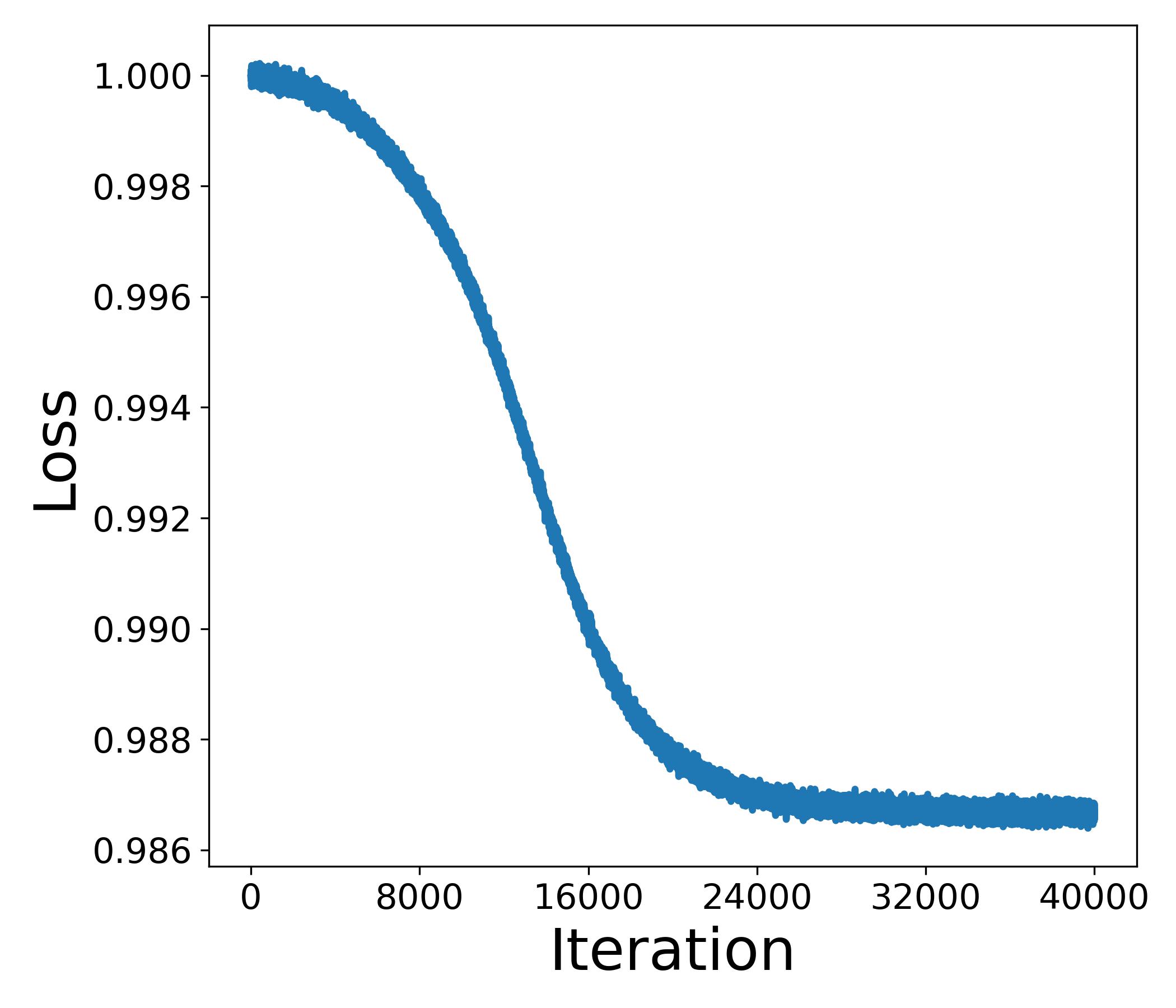} \\[1em]
        
        % Third row
        \centering (c) & \includegraphics[width=0.68\textwidth]{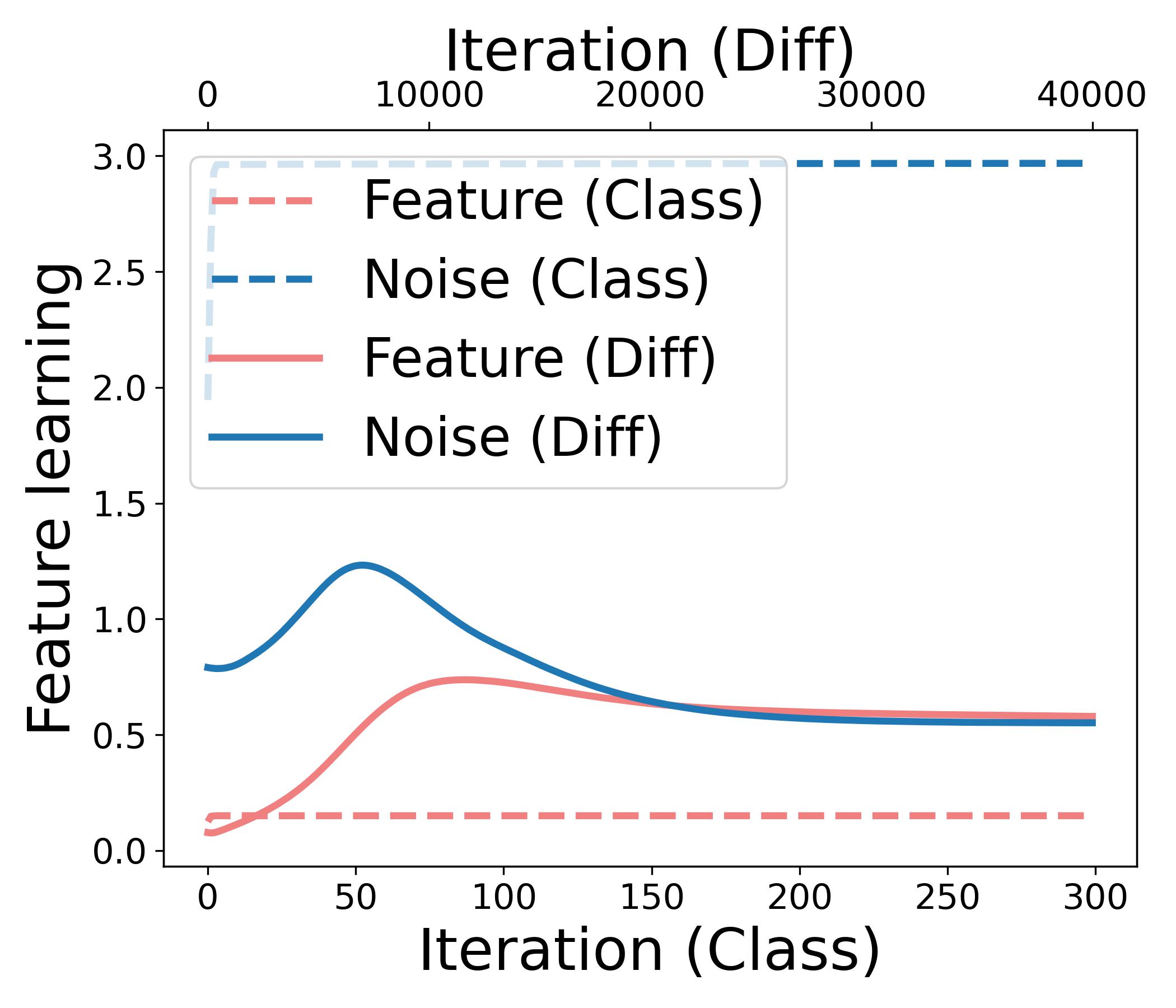} \\
        \end{tabular}
        % \vspace*{-10pt}
        \caption{{Experiments on 10-class Noisy-MNIST with $\widetilde\SNR = 0.1$. (a) Train loss for classification. (b) Train loss for diffusion model. (c) Feature learning dynamics.}}
         \label{fig:real_loss_add}
    \end{minipage}
\end{figure}

\clearpage
\section{Preliminary lemmas}
\label{app:prelim}

Recall we define $\gS_1 = \{ i \in [n] : y_i = 1 \}$ and $\gS_{-1} = \{ i \in [n] : y_i = -1 \}$. 

\begin{lemma}
\label{lemma:bound_S}
Given arbitrary $\delta > 0$, with probability at least $ 1- \delta$, we have 
\begin{align*}
    \frac{n}{2} \big( 1 - \widetilde O(n^{-1/2}) \big) \leq |\gS_1|, |\gS_{-1}|  \leq \frac{n}{2} \big( 1 + \widetilde O(n^{-1/2} ) \big)
\end{align*}
\end{lemma}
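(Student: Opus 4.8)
\textbf{Proof proposal for Lemma \ref{lemma:bound_S}.}

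The plan is to observe that $|\gS_1| = \sum_{i=1}^n \sone(y_i = 1)$ is a sum of $n$ i.i.d.\ Bernoulli$(1/2)$ random variables, so that $\sE[|\gS_1|] = n/2$, and then to invoke a standard concentration inequality to control the deviation from the mean. Concretely, I would apply Hoeffding's inequality (or equivalently a Chernoff bound for binomials): for any $s > 0$,
\begin{equation*}
    \sP\big( \big| |\gS_1| - n/2 \big| \geq s \big) \leq 2 \exp(- 2 s^2 / n).
\end{equation*}
Setting $s = \sqrt{(n/2)\log(4/\delta)} = \widetilde\Theta(\sqrt{n})$ makes the right-hand side at most $\delta/2$, which gives $\big| |\gS_1| - n/2 \big| \leq \widetilde O(\sqrt n) = (n/2)\cdot \widetilde O(n^{-1/2})$ with probability at least $1 - \delta/2$. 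Rearranging yields the two-sided bound on $|\gS_1|$ stated in the lemma.

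Next I would handle $|\gS_{-1}|$. Since $|\gS_{-1}| = n - |\gS_1|$ deterministically, the bound on $|\gS_{-1}|$ follows immediately from the bound on $|\gS_1|$ with no additional randomness — the same event controls both. (Alternatively one could union-bound over the two counts with budget $\delta/2$ each; but using $|\gS_{-1}| = n - |\gS_1|$ is cleaner and avoids even a factor of two.) Either way, on the good event of probability at least $1-\delta$ we get both inequalities simultaneously, absorbing the $\log(1/\delta)$ factor into the $\widetilde O(\cdot)$ notation (which by the paper's convention hides polylogarithmic factors, including in $1/\delta$).

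This lemma is routine and I do not anticipate a genuine obstacle; the only mild care needed is bookkeeping the $\widetilde O$ notation — making sure the $\log(1/\delta)$ dependence is legitimately swept into $\widetilde O(n^{-1/2})$, and noting that Condition \ref{cond:main} guarantees $n = \widetilde\Omega(1)$ so that $\widetilde O(n^{-1/2}) = o(1)$ and the stated lower bound on $|\gS_1|, |\gS_{-1}|$ is nonvacuous (bounded away from $0$, and in particular both classes are nonempty). If one wants to avoid relying on the $\delta$-dependence being polylogarithmic, one could instead state the bound with an explicit $\sqrt{\log(1/\delta)}$ factor, but given the paper's conventions the $\widetilde O$ formulation is what is needed downstream.
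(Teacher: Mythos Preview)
Your proposal is correct and matches the paper's approach essentially line for line: both apply Hoeffding's inequality to the sum of i.i.d.\ Bernoulli indicators with the same choice of deviation parameter. The only cosmetic difference is that the paper treats $|\gS_1|$ and $|\gS_{-1}|$ symmetrically via a union bound, whereas you (slightly more cleanly) observe $|\gS_{-1}| = n - |\gS_1|$; either route gives the stated bound.
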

\begin{proof}[Proof of Lemma \ref{lemma:bound_S}]
The proof is the same as in  \citep{cao2022benign,kou2023benign} and we include here for completeness.
Because $|\gS_1| = \sum_{i = 1}^n \sone(y_i = 1)$ and $| \gS_{-1}| = \sum_{i=1}^n \sone(y_i = -1)$ and $\sP(y_i=1) = \sP(y_i = -1) = 1/2$ for all $i \in [n]$, then $\sE |\gS_1| = \sE |\gS_{-1}| = n/2$. By Hoeffding's inequality, for arbitrary $a > 0$, 
\begin{align*}
    \sP( | |\gS_{\pm 1}| - n/2 | \geq a ) \leq 2 \exp(- 2 a^2 n^{-1} ).
\end{align*}
Setting $a = \sqrt{n \log(4/\delta)/2}$ and taking union bound, we have with probability at least $1-\delta$, 
\begin{equation*}
    \left| | \gS_{\pm 1} | - \frac{n}{2} \right| \leq \sqrt{\frac{n \log(4/\delta)}{2}}.
\end{equation*}
Hence the proof is complete. 
\end{proof}

\begin{lemma}
\label{lemma:init_bound}
Given arbitrary $\delta > 0$, with probability at least $1-\delta$, 
\begin{align*}
    &\sigma_\xi^2 d (1 - \widetilde O(d^{-1/2})) \leq \| \bxi_i \|^2 \leq \sigma_\xi^2 d ( 1 +  \widetilde O( d^{-1/2} ) ) \\
    &| \langle \bxi_i  , \bxi_{i'} \rangle | \leq 2 \sigma_\xi^2 \sqrt{d \log(4n^2/\delta)} 
    % &|\langle \bxi_i, \bmu_j \rangle| \leq \| \bmu_j\|_2 \sigma_\xi \sqrt{2 \log(8n/\delta)}, \quad \forall j = 1,2 
\end{align*}
for all $i, i' \in [n]$. 
\end{lemma}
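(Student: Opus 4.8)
\textbf{Proof proposal for Lemma~\ref{lemma:init_bound}.}
The plan is to treat the two statements separately, both being standard concentration arguments for Gaussian quadratic forms and Gaussian inner products, followed by a union bound over the (at most $n^2$) pairs. Recall that each $\bxi_i$ is drawn from $\gN(0,\sigma_\xi^2 \bQ)$ where $\bQ = \bI - \bmu_1\bmu_1^\top\|\bmu_1\|^{-2} - \bmu_{-1}\bmu_{-1}^\top\|\bmu_{-1}\|^{-2}$ is the orthogonal projection onto the $(d-2)$-dimensional subspace orthogonal to $\mathrm{span}(\bmu_1,\bmu_{-1})$. Hence we may write $\bxi_i = \sigma_\xi \bQ \bz_i$ with $\bz_i \sim \gN(0,\bI_d)$ i.i.d.\ across $i$, and $\|\bxi_i\|^2 = \sigma_\xi^2 \|\bQ\bz_i\|^2$, which is $\sigma_\xi^2$ times a $\chi^2$ random variable with $d-2$ degrees of freedom.

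For the norm bound, I would apply a standard $\chi^2$ tail inequality (e.g.\ the Laurent--Massart bound): for $Z \sim \chi^2_{d-2}$ and any $x>0$, $\sP(Z - (d-2) \ge 2\sqrt{(d-2)x} + 2x) \le e^{-x}$ and $\sP((d-2) - Z \ge 2\sqrt{(d-2)x}) \le e^{-x}$. Choosing $x = \Theta(\log(n/\delta))$ and taking a union bound over $i \in [n]$ gives, with probability at least $1-\delta/2$, that $\|\bxi_i\|^2 = \sigma_\xi^2 (d-2)(1 \pm \widetilde O(d^{-1/2}))$ for all $i$; absorbing the $-2$ into the $\widetilde O(d^{-1/2})$ factor (valid since $d$ is large by Condition~\ref{cond:main}) yields the stated $\sigma_\xi^2 d (1 \pm \widetilde O(d^{-1/2}))$. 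For the cross-term bound, fix $i \ne i'$; conditionally on $\bxi_i$, the quantity $\langle \bxi_i, \bxi_{i'}\rangle = \sigma_\xi \langle \bxi_i, \bQ\bz_{i'}\rangle$ is a centered Gaussian with variance $\sigma_\xi^2 \|\bQ\bxi_i\|^2 = \sigma_\xi^2\|\bxi_i\|^2 \le 2\sigma_\xi^4 d$ on the event from the norm bound. A standard Gaussian tail bound gives $\sP(|\langle \bxi_i,\bxi_{i'}\rangle| \ge t \mid \bxi_i) \le 2\exp(-t^2/(2\sigma_\xi^2\|\bxi_i\|^2))$; setting $t = 2\sigma_\xi^2\sqrt{d\log(4n^2/\delta)}$ makes the exponent at most $-\log(4n^2/\delta)$ on the good norm event, and a union bound over the $\le n^2$ ordered pairs (together with the norm event) controls everything with probability at least $1-\delta$.

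I do not anticipate a genuine obstacle here; the only mild care needed is the conditioning step for the cross terms (one cannot treat $\langle\bxi_i,\bxi_{i'}\rangle$ as a single nice object directly, so conditioning on $\bxi_i$ and using the already-established norm bound is the clean route), and bookkeeping the $\delta$-budget across the two events and the union bounds. All constants and logarithmic factors are absorbed into the $\widetilde O(\cdot)$ notation, and the largeness of $d$ from Condition~\ref{cond:main} ensures the $d-2$ versus $d$ discrepancy and all lower-order terms are harmless.
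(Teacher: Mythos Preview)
Your proposal is correct and follows essentially the same route as the paper: standard sub-exponential/Gaussian concentration for the norms and inner products, followed by a union bound over the $O(n^2)$ events. The paper invokes Bernstein's inequality directly for both parts rather than Laurent--Massart plus a conditional Gaussian tail, but these are interchangeable tools yielding the same bounds.
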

\begin{proof}[Proof of Lemma \ref{lemma:init_bound}]
The proof is the same as in \citep{cao2022benign,kou2023benign} and we include here for completeness.
By Bernstein's inequality, with probability at least $1 - \delta/(2n)$, we have 
\begin{equation*}
      | \| \bxi_i \|^2 - \sigma_\xi^2 d | = O( \sigma_\xi^2 \sqrt{d \log(4n/\delta)} ),  
\end{equation*}
which shows the first result. 

For the second claim, we can show by Bernstein's inequality, with probability at least $1- \delta/(2n^2)$ that for any $i \neq i'$
\begin{align*}
    | \langle \bxi_i, \bxi_{i'}\rangle | \leq  2 \sigma_\xi^2 \sqrt{d \log(4n^2/\delta)}
\end{align*}
Then we apply union bound to show the results hold for all $i, i' \in [n]$.
% For the third claim, we have for $i \in [n]$ and $j = 1,2$, $\langle\bxi_i, \bmu_j\rangle \sim \gN(0, \| \bmu_j \|_2^2 \sigma_\xi^2)$. By Gaussian tail bounds, with probability at least $1-\delta/4n$, we have for any $j = 1,2$
% \begin{align*}
%     | \langle \bxi_i , \bmu_j \rangle | \leq \| \bmu_j \|_2 \sigma_\xi \sqrt{2 \log(8n/\delta)}
% \end{align*}
% for all $i \in [n]$. Then we apply union bound to show the results hold for all $i, i' \in [n]$ and $j = 1,2$. 
\end{proof}

\section{Classification}
\label{app:super_class}

We track the inner product dynamics during the training of supervised classification to elucidate the signal learning and noise learning. We first write the gradient descent dynamics as follows.
\begin{align*}
    &\bw_{j,r}^{k+1} =  \bw_{j,r}^k - \eta \nabla_{\bw_{j,r}} L_S(\bW^k) \\
    &\quad = \bw_{j,r}^k - \frac{\eta}{nm} \sum_{i=1}^n \ell_i'^{k} \langle \bw_{j,r}^k, \bx_i^{(1)} \rangle j  y_i \bx_i^{(1)} - \frac{\eta}{nm} \sum_{i=1}^n \ell_i'^{k} \langle \bw_{j,r}^k, \bxi_i \rangle j y_i \bxi_i  \\
    &\quad = \bw_{j,r}^k - \frac{\eta}{nm} \sum_{i \in \gS_1} \ell_i'^k \langle \bw_{j,r}^k , \bmu_1 \rangle j \bmu_1 + \frac{\eta}{nm} \sum_{i \in \gS_{-1}} \ell_i'^k \langle \bw_{j,r}^k ,\bmu_{-1} \rangle j \bmu_{-1} - \frac{\eta}{nm} \sum_{i=1}^n \ell_i'^{k} \langle \bw_{j,r}^k, \bxi_i \rangle j y_i \bxi_i
\end{align*}

Here we restate the Condition \ref{cond:main} specific for  the case of  supervised classification.

\begin{condition}
\label{assump:super}
Suppose that
\begin{enumerate}[leftmargin=0.3in]
    \item Dimension $d$ satisfies $d = \widetilde \Omega( \max\{  n^2m \sigma_\xi^{-1} \| \bmu\|, n^4 m \}  )$.

    \item Training sample and network width satisfy $m = \Omega(\log(n/\delta)), n = \Omega(\log(m/\delta))$. 

    \item The initialization variation $\sigma_0$ satisfies $\widetilde O( n^2m \sigma_\xi^{-1} d^{-1} )  \leq \sigma_0 \leq \widetilde O(  \min\{ \| \bmu \|^{-1}, \sigma_\xi^{-1} {d}^{-1/2} \} ) $. 

    \item The learning rate satisfies $\eta \leq \widetilde O( \min\{ m \| \bmu\|^{-2}, nm \sigma_0 \sigma_\xi^{-1} d^{-1/2} , nm \sigma_\xi^{-2} d^{-1}  \} )$
\end{enumerate}
\end{condition}

We make the particular remarks as follows. 
The lower bound on $m = \widetilde \Omega(1)$ is to ensure the initialization is concentrated and thus provides a lower bound on the maximum and average inner product. The lower bound on $n = \widetilde \Omega(1)$ is required such that $|\gS_1|, |\gS_{-1}| = \Theta(n)$ and $\widetilde O(n^{-1/2})$ remains small.
The lower bound on $\sigma_0$ is required for the noise memorization setting where we need to control the lower bound for the noise inner product at initialization. Thus to ensure the lower bound $\sigma_0$ is valid, we require further conditions on the dimension $d$ apart from $d = \widetilde \Omega(n^2)$.

% To better analyze the dynamics, we employ the signal-noise decomposition technique as in \citep{cao2022benign,kou2023benign,mengbenign}:
% \begin{align*}
%     \bw_{j,r}^{k} = \bw_{j,r}^{0} + j  \cdot \gamma_{1, j, r}^{k}  \| \bmu\|^{-2} \bmu_1 + j \cdot \gamma_{-1,j,r}^k  \| \bmu \|^{-2}  \bmu_{-1}  + \sum_{i=1}^n \rho^{k}_{j,r,i}  \| \bxi_i \|^{-2}  \bxi_i 
% \end{align*}

% We mainly use such a decomposition for noise dynamics and show the following iterative update for the noise coefficients.

\subsection{Useful lemmas}

We first provide a lemma that bound the inner product at initialization.
\begin{lemma}[\cite{cao2022benign}]
\label{lemma:init_inner_lower_upper}
Suppose $\delta > 0$ and that $d = \Omega( \log(mn/\delta) ), m = \Omega(\log(1/\delta))$, then with probability at least $1-\delta$, 
\begin{align*}
    &|\langle \bw_{j,r}^0 , \bmu_{j'}\rangle| \leq \sqrt{2 \log(8m/\delta)} \sigma_0 \| \bmu\| \\
    &|\langle \bw_{j',r}^0 , \bxi_i \rangle| \leq 2 \sqrt{\log(8mn/\delta)} \sigma_0 \sigma_\xi \sqrt{d}
\end{align*}
for all $j,j' \in \{ \pm1\}, r \in [m] , i \in [n]$. In addition,
\begin{align*}
    &\max_{r \in [m]}|  \langle \bw_{j,r}^0, \bmu_{j'} \rangle| \geq  \sigma_0 \| \bmu\|/2 , \\
    &\max_{r \in [m]} |\langle \bw_{j,r}^0, \bxi_i \rangle| \geq \sigma_0 \sigma_\xi \sqrt{d}/4
\end{align*}
for all $j, j' \in \{ \pm 1\}, i \in [n]$.
\end{lemma}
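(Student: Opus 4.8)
The statement to prove is Lemma~\ref{lemma:init_inner_lower_upper}, which bounds the magnitude of the inner products $\langle \bw_{j,r}^0, \bmu_{j'}\rangle$ and $\langle \bw_{j',r}^0, \bxi_i\rangle$ at Gaussian initialization, both from above (uniformly over all neurons and samples) and from below (for the maximum over neurons).

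\textbf{The plan is as follows.} Since each $\bw_{j,r}^0 \sim \gN(\bzero, \sigma_0^2 \bI)$ independently, for any fixed vector $\bv$ the scalar $\langle \bw_{j,r}^0, \bv\rangle$ is a centered Gaussian with variance $\sigma_0^2 \|\bv\|^2$. First I would condition on the high-probability event from Lemma~\ref{lemma:init_bound} that $\|\bxi_i\|^2 = \sigma_\xi^2 d (1 \pm \widetilde O(d^{-1/2}))$ for all $i$, so that the relevant variances are $\Theta(\sigma_0^2\sigma_\xi^2 d)$ for the noise directions and exactly $\sigma_0^2\|\bmu\|^2$ for the signal directions. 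Then the upper bounds follow from the standard Gaussian tail bound $\sP(|Z| \geq s) \leq 2\exp(-s^2/(2\sigma^2))$: setting $s = \sqrt{2\log(8m/\delta)}\,\sigma_0\|\bmu\|$ and taking a union bound over the $O(m)$ choices of $(j,r,j')$ gives the first inequality, and similarly $s = 2\sqrt{\log(8mn/\delta)}\,\sigma_0\sigma_\xi\sqrt{d}$ with a union bound over the $O(mn)$ triples $(j',r,i)$ gives the second (the extra slack in the constant absorbs the $(1 \pm \widetilde O(d^{-1/2}))$ factor on $\|\bxi_i\|^2$).

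\textbf{For the lower bounds on the maxima}, I would use Gaussian anti-concentration / small-ball together with independence across neurons $r \in [m]$. For a fixed direction $\bv$ and a single $Z \sim \gN(0,\sigma^2)$, one has $\sP(|Z| \leq \sigma/2) \leq 1/2$ (a crude bound on the Gaussian density suffices, e.g. $\sP(|Z|\le\sigma/2)\le \frac{1}{\sqrt{2\pi}}\le 1/2$). Hence $\sP(\max_{r\in[m]} |\langle \bw_{j,r}^0,\bv\rangle| \leq \sigma/2) \leq 2^{-m}$ by independence; since $m = \Omega(\log(1/\delta))$, this is at most $\delta/\text{poly}$, and a union bound over the $O(n)$ or $O(1)$ directions yields $\max_r |\langle \bw_{j,r}^0, \bmu_{j'}\rangle| \geq \sigma_0\|\bmu\|/2$ and $\max_r |\langle \bw_{j,r}^0,\bxi_i\rangle| \geq \sigma_0\sigma_\xi\sqrt{d}/4$ (the $\sqrt{d}/4$ rather than $\sqrt{d}/2$ again accommodating the $\|\bxi_i\|\approx \sigma_\xi\sqrt d$ fluctuation). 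Finally I would collect all the failure events and rescale $\delta$ by a constant factor so the total probability is at least $1-\delta$.

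\textbf{The main obstacle} is essentially bookkeeping rather than mathematical depth: one must be careful that the event in Lemma~\ref{lemma:init_bound} (on the $\|\bxi_i\|$) is shared across all the tail and anti-concentration steps, that the $\sqrt{d}$-fluctuations are correctly propagated into the constants ($2\sqrt{\log(\cdot)}$ for the upper bound, $1/4$ for the lower bound), and that the union-bound counts ($8m$, $8mn$) match the constants inside the logarithms. Since this is a near-verbatim restatement of a lemma from \cite{cao2022benign}, the cleanest route is to invoke that reference and only verify that the slightly different data model here (two orthogonal signals $\bmu_{\pm1}$, noise covariance projected off the signal subspace) does not change any variance computation — indeed it does not, since $\|\bmu_1\| = \|\bmu_{-1}\| = \|\bmu\|$ and $\sE\|\bxi_i\|^2 = \sigma_\xi^2(d-2) = \sigma_\xi^2 d(1-o(1))$.
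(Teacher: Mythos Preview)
Your proposal is correct and matches the paper's approach: the paper does not give an independent proof but simply cites \cite{cao2022benign}, and your sketch (Gaussian tail bound plus union bound for the upper bounds, small-ball probability plus independence over $r$ for the lower bounds, conditioning on the $\|\bxi_i\|$ concentration from Lemma~\ref{lemma:init_bound}) is exactly the standard argument behind that cited result. Your additional remark that the two-signal, projected-noise data model does not alter the variance computations is the only adaptation needed and is correct.
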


We  decompose the weights into its signal components and noise components. 

\begin{lemma}
\label{lemma:coef_update}
The weight can be decomposed as 
\begin{align*}
    \bw_{j,r}^{k} = \bw_{j,r}^{0} + \zeta_1^k \bmu_1 + \zeta_{-1}^k  \bmu_{-1}  + \sum_{i=1}^n \rho^{k}_{j,r,i}  \| \bxi_i \|^{-2}  \bxi_i 
\end{align*}
where the  noise  coefficients $\rho_{j,r,i}^k$ satisfy $\rho_{j,r,i}^0 = 0$ and 
\begin{align}   
    % &\gamma_{1,j,r}^{0}, \gamma_{-1,j,r}^0, \orho_{j,r,i}^0, \urho_{j,r,i}^0 = 0  \nonumber\\
    % &\gamma_{1,j,r}^{k+1} = \gamma_{1,j,r}^k - \frac{\eta}{nm} \sum_{i \in \gS_1} \ell_i'^{k} \langle \bw_{j,r}^k, \bmu_1 \rangle \| \bmu\|^2 \nonumber\\
    % &\gamma_{-1,j,r}^{k+1} = \gamma_{-1,j,r}^k + \frac{\eta}{nm} \sum_{i \in \gS_{-1}} \ell_i'^{k} \langle \bw_{j,r}^k, \bmu_{-1} \rangle \| \bmu\|^2 \nonumber\\
    &\rho_{j,r,i}^{k+1} = \rho_{j,r,i}^{k} - \frac{\eta}{nm}  \ell_i'^k \langle \bw_{j,r}^k , \bxi_i\rangle j y_i  \| \bxi_i \|^2  \nonumber
    % &\urho_{j,r,i}^{k+1} = \urho_{j,r,i}^{k} + \frac{\eta}{nm}  \ell_i'^k \langle \bw_{j,r}^k , \bxi_i\rangle   \| \bxi_i \|^2 \sone(y_i = -j) \nonumber
\end{align}
for all $j = \pm 1$, $r \in [m]$ and $i \in [n]$. 
\end{lemma}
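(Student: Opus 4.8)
\textbf{Proof plan for Lemma \ref{lemma:coef_update}.}

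The plan is to verify the claimed decomposition by induction on the iteration index $k$, reading off the update for each component directly from the gradient descent rule. First I would establish the base case: at $k = 0$ the decomposition reads $\bw_{j,r}^0 = \bw_{j,r}^0 + \zeta_1^0 \bmu_1 + \zeta_{-1}^0 \bmu_{-1} + \sum_i \rho_{j,r,i}^0 \|\bxi_i\|^{-2}\bxi_i$, which holds trivially with $\zeta_1^0 = \zeta_{-1}^0 = 0$ and $\rho_{j,r,i}^0 = 0$, matching the stated initial condition. For the inductive step, I would substitute the gradient formula already derived at the start of Section \ref{app:super_class}, namely
\begin{equation*}
    \bw_{j,r}^{k+1} = \bw_{j,r}^k - \frac{\eta}{nm}\sum_{i\in\gS_1}\ell_i'^k\langle\bw_{j,r}^k,\bmu_1\rangle j\bmu_1 + \frac{\eta}{nm}\sum_{i\in\gS_{-1}}\ell_i'^k\langle\bw_{j,r}^k,\bmu_{-1}\rangle j\bmu_{-1} - \frac{\eta}{nm}\sum_{i=1}^n\ell_i'^k\langle\bw_{j,r}^k,\bxi_i\rangle jy_i\bxi_i,
\end{equation*}
and collect terms along the three orthogonal families of directions, $\bmu_1$, $\bmu_{-1}$, and $\{\bxi_i\}_{i=1}^n$.

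The key point is that, by the data model in Definition \ref{def_data_model}, the noise vectors $\bxi_i$ are orthogonal to both $\bmu_1$ and $\bmu_{-1}$ (the covariance of $\bxi$ is supported on the orthogonal complement of $\mathrm{span}\{\bmu_1,\bmu_{-1}\}$), so the $\bxi_i$-direction increment in the update does not contaminate the signal components, and vice versa. The $\bmu_1$ coefficient receives an increment $-\frac{\eta}{nm}\sum_{i\in\gS_1}\ell_i'^k\langle\bw_{j,r}^k,\bmu_1\rangle j$, which defines $\zeta_1^{k+1}$ from $\zeta_1^k$; similarly for $\zeta_{-1}$ (with an opposite sign coming from the $\bmu_{-1} = \bx^{(1)}$ case and the $-\ell_i' \langle\cdot\rangle y_i$ structure). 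The $\bxi_i$-direction increment is $-\frac{\eta}{nm}\ell_i'^k\langle\bw_{j,r}^k,\bxi_i\rangle jy_i\bxi_i$; writing this as an update to the coefficient of $\|\bxi_i\|^{-2}\bxi_i$ multiplies through by $\|\bxi_i\|^2$ and yields exactly
\begin{equation*}
    \rho_{j,r,i}^{k+1} = \rho_{j,r,i}^k - \frac{\eta}{nm}\ell_i'^k\langle\bw_{j,r}^k,\bxi_i\rangle jy_i\|\bxi_i\|^2,
\end{equation*}
as claimed. Since the $\bxi_i$ need not be mutually orthogonal, one should note that the coefficient $\rho_{j,r,i}^k$ is defined as the one appearing in this particular (non-orthogonal) expansion along $\|\bxi_i\|^{-2}\bxi_i$; the induction still goes through because the gradient's noise part is, termwise, a sum of multiples of the individual $\bxi_i$, so matching coefficients $\bxi_i$-by-$\bxi_i$ is unambiguous.

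I do not anticipate a genuine obstacle here; this is a bookkeeping lemma. The only mild subtlety worth stating explicitly is the orthogonality of the noise patch to the signal directions (so that the decomposition into $\bmu$-part and $\bxi$-part is consistent under the update), and the fact that $\zeta_1^k, \zeta_{-1}^k$ do not carry $(j,r)$ subscripts in the statement — one should check this is intentional, i.e. that the signal coefficients are being tracked without the neuron index, or else read the statement as shorthand and carry the subscripts through. With that understood, the inductive step is immediate from linearity of the expansion and the gradient formula, and the recursion for $\rho_{j,r,i}^k$ follows by rescaling the $\bxi_i$-component by $\|\bxi_i\|^2$.
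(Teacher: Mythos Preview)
Your proposal is correct and follows essentially the same route as the paper: the paper unrolls the gradient descent recursion explicitly and then invokes almost-sure linear independence of $\{\bmu_1,\bmu_{-1},\bxi_1,\dots,\bxi_n\}$ to read off the coefficients, while you do the equivalent one-step induction and match coefficients termwise. Your remark that the $\zeta_{\pm 1}^k$ in the statement ought to carry $(j,r)$ subscripts is well taken; the paper's version is indeed shorthand, and carrying the indices through as you suggest is the right reading.
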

\begin{proof}[Proof of Lemma \ref{lemma:coef_update}]
The proof follows from \citep{cao2022benign,kou2023benign}. First, we recall the gradient descent update as 
\begin{align*}
     \bw_{j,r}^{k+1} 
    &= \bw_{j,r}^k - \frac{\eta}{nm} \sum_{i \in \gS_1} \ell_i'^k \langle \bw_{j,r}^k , \bmu_1 \rangle j \bmu_1 + \frac{\eta}{nm} \sum_{i \in \gS_{-1}} \ell_i'^k \langle \bw_{j,r}^k ,\bmu_{-1} \rangle j \bmu_{-1} - \frac{\eta}{nm} \sum_{i=1}^n \ell_i'^{k} \langle \bw_{j,r}^k, \bxi_i \rangle j y_i \bxi_i \\
    &= \bw_{j,r}^0 - \frac{\eta}{nm} \sum_{s=0}^k \sum_{i \in \gS_1} \ell_i'^k \langle \bw_{j,r}^s, \bmu_1 \rangle j \bmu_1 + \frac{\eta}{nm} \sum_{s=0}^k \sum_{i \in \gS_{-1}}  \ell_i'^k \langle \bw_{j,r}^s, \bmu_{-1} \rangle j \bmu_{-1} \\
    &\qquad - \frac{\eta}{nm} \sum_{s=0}^k \sum_{i=1}^n \ell_i'^k \langle \bw_{j,r}^s, \bxi_i \rangle j y_i \bxi_i.
\end{align*}
By the data model, we have with probability $1$, the vectors are linearly independent and thus the decomposition is unique with 
\begin{align*}
    % &\gamma_{1,j,r}^{k} = - \frac{\eta}{nm} \sum_{s=0}^k \sum_{i\in \gS_1} \ell_i'^s \langle \bw_{j,r}^s, \bmu_1 \rangle \| \bmu\|^2 \\
    % &\gamma_{-1,j,r}^{k} = \frac{\eta}{nm} \sum_{s=0}^k \sum_{i \in \gS_{-1}} \ell_i'^s \langle \bw_{j,r}^s, \bmu_{-1} \rangle \| \bmu \|^2 \\
    &\rho_{j,r,i}^k = - \frac{\eta}{nm} \sum_{s=0}^k \ell_i'^k \langle \bw_{j,r}^s , \bxi_i\rangle j y_i \| \bxi_i \|^2
\end{align*}
% Thus we can derive the iterative updates for $\gamma_{1,j,r}^k, \gamma_{-1,j,r}^k$. 
% For the update for $\rho_{j,r,i}^k$, we can further decompose based on the definition $\orho_{j,r,i}^k = \rho_{j,r,i}^k \sone(\rho_{j,r,i}^k \geq 0)$ and $\urho_{j,r,i}^k = \rho_{j,r,i}^k \sone(\rho_{j,r,i}^k < 0)$. Because $\ell_i'^s < 0$, we obtain 
% \begin{align*}
%     \orho^k_{j,r,i} = - \frac{\eta}{nm} \sum_{s=0}^k \ell_i'^k \langle \bw_{j,r}^s , \bxi_i\rangle   \| \bxi_i \|^2 \sone(y_i = j) \\
%     \orho^k_{j,r,i} =  \frac{\eta}{nm} \sum_{s=0}^k \ell_i'^k \langle \bw_{j,r}^s , \bxi_i\rangle   \| \bxi_i \|^2 \sone(y_i = -j).
% \end{align*}
Then writing out the iterative update for $\rho^k_{j,r,i} $ completes the proof.
\end{proof}

\begin{lemma}
    \label{lemma:gaussian-anti-concentration}
Let $x \sim \gN(0, \sigma^2)$. Then $\sP ( |{x}| \leq c) 
        \leq \erf\left(\frac{c}{\sqrt{2}\sigma}\right) 
        \leq \sqrt{1 - \exp(-\frac{2c^2}{\pi\sigma^2})}$.
\end{lemma}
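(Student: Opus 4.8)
\textbf{Proof proposal for Lemma \ref{lemma:gaussian-anti-concentration}.}

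The plan is to prove the two inequalities separately, starting from the exact expression for the probability mass of a centered Gaussian on a symmetric interval. First I would write $\sP(|x| \le c) = \sP(-c \le x \le c)$ and, after the substitution $u = x/(\sqrt{2}\sigma)$ so that $x \sim \gN(0,\sigma^2)$ corresponds to $u$ having density $\pi^{-1/2} e^{-u^2}$, recognize that $\sP(|x| \le c) = \frac{2}{\sqrt{\pi}} \int_0^{c/(\sqrt 2 \sigma)} e^{-u^2}\, \d u = \erf\!\left(\frac{c}{\sqrt{2}\sigma}\right)$ by the definition of the error function. This gives the first inequality as an equality, which is of course even stronger than claimed; stating it as $\le$ is harmless.

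For the second inequality I would establish the elementary bound $\erf(z) \le \sqrt{1 - e^{-4z^2/\pi}}$ for all $z \ge 0$, and then substitute $z = c/(\sqrt 2 \sigma)$, noting that $4z^2/\pi = 2c^2/(\pi\sigma^2)$, which yields exactly $\erf\!\left(\frac{c}{\sqrt 2\sigma}\right) \le \sqrt{1 - \exp(-\tfrac{2c^2}{\pi\sigma^2})}$. Since both sides of $\erf(z) \le \sqrt{1-e^{-4z^2/\pi}}$ are nonnegative, it suffices to prove the squared inequality $\erf(z)^2 \le 1 - e^{-4z^2/\pi}$, i.e. $e^{-4z^2/\pi} \le 1 - \erf(z)^2$. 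Define $g(z) = 1 - \erf(z)^2 - e^{-4z^2/\pi}$; then $g(0) = 0$, and I would show $g(z) \ge 0$ for $z \ge 0$ by differentiating: $g'(z) = -2\erf(z)\cdot \frac{2}{\sqrt\pi}e^{-z^2} + \frac{8z}{\pi}e^{-4z^2/\pi}$, and it remains to check $g'(z)\ge 0$, which reduces to comparing $\frac{4}{\sqrt\pi}\erf(z)e^{-z^2}$ with $\frac{8z}{\pi}e^{-4z^2/\pi}$. Using the standard lower bound $\erf(z) \ge \frac{2}{\sqrt\pi} z e^{-z^2}$ (itself from $\erf(z) = \frac{2}{\sqrt\pi}\int_0^z e^{-t^2}\d t \ge \frac{2}{\sqrt\pi} z e^{-z^2}$ since $e^{-t^2}\ge e^{-z^2}$ on $[0,z]$), this further reduces to checking $\frac{8z}{\pi}e^{-2z^2} \ge$? no — this direction is too lossy, so instead I would use a sharper known inequality or simply invoke the classical bound $\erf(z) \le \sqrt{1 - e^{-4z^2/\pi}}$ directly as a cited fact (it is a well-documented approximation, e.g. attributed to Chu), since the paper only needs a clean upper bound and not a self-contained derivation.

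The main obstacle is really just the second inequality: the first is a pure definitional identity, but proving $\erf(z)^2 \le 1 - e^{-4z^2/\pi}$ from scratch requires a somewhat delicate monotonicity argument (the naive $\erf$ lower bound is not tight enough near $z=0$, where one needs the Taylor expansion $\erf(z) = \frac{2}{\sqrt\pi}(z - z^3/3 + \cdots)$ to see that $1-\erf(z)^2 = 1 - \frac{4}{\pi}z^2 + O(z^4)$ matches $e^{-4z^2/\pi} = 1 - \frac{4}{\pi}z^2 + \frac{8}{\pi^2}z^4 - \cdots$ to second order). Given that this is a standard inequality, in the writeup I would present the identity $\sP(|x|\le c) = \erf(c/(\sqrt 2\sigma))$ in full and then cite the bound $\erf(z)\le\sqrt{1-e^{-4z^2/\pi}}$ rather than reproving it, which keeps the lemma short and self-contained at the level needed for the anti-concentration application in the classification analysis.
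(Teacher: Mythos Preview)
Your proposal is correct and matches the paper's approach essentially line for line: both establish the identity $\sP(|x|\le c)=\erf(c/(\sqrt{2}\sigma))$ via the definition of $\erf$ and a change of variables, and then invoke the standard bound $\erf(z)\le\sqrt{1-e^{-4z^2/\pi}}$ as a cited fact rather than proving it from scratch. Your observation that the first inequality is actually an equality is accurate (the paper's own derivation contains a stray factor of $2$ in the intermediate steps, but the lemma statement and your computation are correct).
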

\begin{proof}[Proof of Lemma \ref{lemma:gaussian-anti-concentration}]
    The probability density function for $x$ is given by
   \begin{align*}
       f(x) = \frac{1}{\sqrt{2 \pi}\sigma } \exp(- \frac{x^2}{2\sigma^2}).
   \end{align*} 
 Then we know that
\begin{align*} 
    \mathbb P(|x| \leq c) =  \frac{1}{\sqrt{2 \pi} \sigma } \int_{-c}^c \exp(- \frac{x^2}{2\sigma^2}) dx.
\end{align*}
 By the definition of $\mathrm{erf}$ function
 \begin{align*}
       \mathrm{erf}(c) = \frac{2}{\sqrt{\pi}  } \int_0^c \exp(-  {x^2}) dx,
   \end{align*} 
and variable substitution yields
\begin{align*}
    \mathrm{erf}(\frac{c}{\sqrt{2}\sigma}) = \frac{1}{\sqrt{2\pi} \sigma } \int_0^c \exp(- \frac{x^2}{2 \sigma^2}) dx.
\end{align*}
Therefore, we first conclude $ \mathbb P(|x| \leq c) = 2 \mathrm{erf}(\frac{c}{\sqrt{2}\sigma})$. Next, by the inequality $\mathrm{erf}(x) \le \sqrt{1 - \exp(-4x^2/\pi)} $, we obtain the desired result.
\end{proof}

\subsection{Scale of inner products}

We first derive a global bound for the growth of inner products until convergence. To this end, we let $T^* = \eta^{-1} {\rm poly}(\| \bmu\|^{-1},\sigma_\xi^{-2} d^{-1}, \sigma_0^{-1}, n,m,d)$ be the maximum number of iterations considered and let $\alpha = 2 \log(T^*)$. We also denote $\beta \coloneqq 3 \max_{j,r,i,y}\{ |\langle \bw_{j,r}^0, \bmu_y \rangle|,  |\langle \bw_{j,r}^0, \bxi_i \rangle| \}$. Then from Lemma \ref{lemma:init_inner_lower_upper} and from Condition \ref{assump:super}, we can  bound 
\begin{align}
     3\max\{ \sigma_0 \| \bmu \|/2, \sigma_0 \sigma_\xi \sqrt{d}/4 \} \leq \beta \leq 1/C \label{eq:beta_lower}
\end{align}
for some sufficiently large constant $C > 0$.

\begin{proposition}
\label{prop:global_super}
Under Condition \ref{assump:super}, for all $0 \leq k \leq T^*$, we can bound 
\begin{align}
    &|\langle \bw_{j,r}^k, \bmu_j \rangle|, |\langle \bw_{y_i,r}^k , \bxi_i\rangle|, |\rho_{y_i,r,i}^k| \leq \alpha,  \label{gamma_rho_true_bound} \\
    & | \langle \bw_{j,r}^k, \bmu_{-j} \rangle | \leq \beta , \label{gamma_rho_false_bound}  \\
    & | \langle \bw_{-y_i,r}^k, \bxi_i \rangle |, |\rho_{-y_i,r,i}^k| \leq \beta + 12 \sqrt{\frac{\log(4n^2/\delta)}{d}} n \alpha \label{rho_further_false_bound} 
\end{align}
for all $i \in [n]$, $r \in [m]$ and $j = \pm 1$. 
\end{proposition}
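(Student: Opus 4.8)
The plan is to argue by induction on $k$, phrased cleanest through a minimal counterexample: I would let $T \le T^*$ be the first iteration at which one of \eqref{gamma_rho_true_bound}, \eqref{gamma_rho_false_bound}, \eqref{rho_further_false_bound} fails, assume for contradiction that $T \le T^*$ (so all three bounds hold for every $k' < T$), and derive a contradiction at $T$. The base case $k = 0$ is immediate: $\rho_{j,r,i}^0 = 0$ by Lemma~\ref{lemma:coef_update}, while the inner-product bounds follow from Lemma~\ref{lemma:init_inner_lower_upper} together with the defining choices $\beta = 3\max_{j,r,i,y}\{ |\langle\bw_{j,r}^0,\bmu_y\rangle|, |\langle\bw_{j,r}^0,\bxi_i\rangle| \}$ and $\alpha = 2\log T^* \ge \beta$, the last inequality being \eqref{eq:beta_lower}.

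\textbf{Preliminary estimates.} Under the inductive hypothesis I would first establish three facts. \emph{(i) Cross-patch contamination is small.} From the decomposition in Lemma~\ref{lemma:coef_update}, $\langle\bw_{j,r}^k,\bxi_i\rangle = \langle\bw_{j,r}^0,\bxi_i\rangle + \rho_{j,r,i}^k + \sum_{i' \ne i}\rho_{j,r,i'}^k\|\bxi_{i'}\|^{-2}\langle\bxi_{i'},\bxi_i\rangle$; feeding in the near-orthogonality and norm bounds of Lemma~\ref{lemma:init_bound} and $\max_{i'}|\rho_{j,r,i'}^k| \le 2\alpha$ (from the inductive hypothesis and \eqref{eq:beta_lower}), the last sum is $O(n\alpha\sqrt{\log(4n^2/\delta)/d})$, which the condition $d = \widetilde\Omega(n^4 m)$ makes both $o(\beta)$ and $\ll \alpha$. \emph{(ii) Sign stability.} By Lemma~\ref{lemma:gaussian-anti-concentration} and the lower bound on $\sigma_0$ in Condition~\ref{assump:super}, $|\langle\bw_{y_i,r}^0,\bxi_i\rangle|$ dominates the contamination term of (i); since $\rho_{y_i,r,i}^{k'+1} = \rho_{y_i,r,i}^{k'} + \frac{\eta}{nm}|\ell_i'^{k'}|\langle\bw_{y_i,r}^{k'},\bxi_i\rangle\|\bxi_i\|^2$ (using $\ell_i'^{k'} < 0$), a short induction gives $\sign(\langle\bw_{y_i,r}^{k'},\bxi_i\rangle) = \sign(\langle\bw_{y_i,r}^0,\bxi_i\rangle)$ and $|\rho_{y_i,r,i}^{k'}|$ nondecreasing, for all $k' < T$. \emph{(iii) The wrong class contributes negligibly to the output.} Since \eqref{gamma_rho_false_bound} and (i) force $|\langle\bw_{-y_i,r}^k,\bmu_{y_i}\rangle|$ and $|\langle\bw_{-y_i,r}^k,\bxi_i\rangle|$ to be $O(\beta) + o(1)$, the term $F_{-y_i}(\bW^k,\bx_i)$ is $O(\beta^2) + o(1)$, negligible next to $\log T^*$, and hence for a sample with label $y_i$ one has $y_i f(\bW^k,\bx_i) \ge \frac1m\sum_r\langle\bw_{y_i,r}^k,\bxi_i\rangle^2 + \frac1m\sum_r\langle\bw_{y_i,r}^k,\bmu_{y_i}\rangle^2 - O(\beta^2) - o(1)$.

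\textbf{Closing the induction at $k+1$.} The wrong-class signal inner product follows the recursion \eqref{eq:signal_dyn} with multiplicative factor $1 - \frac{\eta|\gS_j|\|\bmu\|^2}{nm}|\ell_i'^k| \in (0,1)$ (using $\eta \le \widetilde O(m\|\bmu\|^{-2})$), so $|\langle\bw_{-j,r}^{k+1},\bmu_j\rangle| \le |\langle\bw_{-j,r}^0,\bmu_j\rangle| \le \beta/3$, giving \eqref{gamma_rho_false_bound}. For $\rho_{-y_i,r,i}$, the increment $-\frac{\eta}{nm}|\ell_i'^k|\langle\bw_{-y_i,r}^k,\bxi_i\rangle\|\bxi_i\|^2$ always points opposite to $\langle\bw_{-y_i,r}^k,\bxi_i\rangle$, so once $|\rho_{-y_i,r,i}^k|$ exceeds $|\langle\bw_{-y_i,r}^0,\bxi_i\rangle|$ plus the contamination bound of (i) the update strictly shrinks it; combined with a negligible one-step overshoot (using $\eta \le \widetilde O(nm\sigma_\xi^{-2}d^{-1})$) and the decomposition of (i), this yields \eqref{rho_further_false_bound}. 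The substantive step is the correct-class bounds $|\langle\bw_{j,r}^{k+1},\bmu_j\rangle|, |\rho_{y_i,r,i}^{k+1}| \le \alpha$, which I would obtain from the self-limiting behaviour of the logistic loss: as soon as a relevant coefficient reaches order $\sqrt{m\log T^*}$ (equivalently $\sqrt{\log T^*}$, upon also tracking that the noise coefficients grow at comparable rates across $r$), which Condition~\ref{assump:super} keeps at most $\alpha/2$, estimate (iii) forces $y_i f(\bW^k,\bx_i) \gtrsim \log T^*$, hence $|\ell_i'^k| \le T^{*-1}$; summing the remaining coefficient increments over the at most $T^*$ later iterations then contributes only $O(\eta\sigma_\xi^2 d\,\alpha/(nm)) = o(1)$ in the noise case and $O(\eta\|\bmu\|^2\alpha/m) = o(1)$ in the signal case, so no coefficient can escape $\alpha = 2\log T^*$. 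For the noise coefficient I would use sign stability (ii) so that $|\rho_{y_i,r,i}^k|$ is monotone; for the signal coefficient I would track $\log|\langle\bw_{j,r}^k,\bmu_j\rangle|$ through the multiplicative recursion and bound $\frac{\eta\|\bmu\|^2}{nm}\sum_{s<T}\sum_{i\in\gS_j}|\ell_i'^s|$ by $O(\log T^*)$ via the same decay together with $|\ell'(z)| \le \ell(z)$ and the monotone decrease of $L_S$. This contradicts the assumed failure at $T$ and completes the proof.

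\textbf{Main obstacle.} The hard part is the correct-class bound: one must certify that the loss derivatives $|\ell_i'^k|$ begin their exponential decay \emph{before} any coefficient reaches $\alpha$, i.e.\ win a quantitative race between coefficient growth and loss-derivative decay; this is precisely where the conditions on $d$, $\sigma_0$ and $\eta$ in Condition~\ref{assump:super} are consumed (for the contamination, sign-stability, and tail-increment estimates, respectively). The quadratic activation, compared with ReLU, affects only constants and the precise power of the coefficient appearing in the output lower bound of (iii), and the "first violating iteration" formulation is what makes the otherwise circular contamination and sign estimates self-consistent.
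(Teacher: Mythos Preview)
Your proposal follows the same inductive scheme as the paper, with matching auxiliary ingredients: the contamination estimate (your (i) is Lemma~\ref{lemma:inner_coef_noise}), the wrong-class output bound (your (iii) is Lemma~\ref{lemma:f_false_bound}), the shrinking argument for $\rho_{-y_i,r,i}$, and the self-limiting mechanism whereby a large coefficient drives $|\ell_i'^k|$ below $(T^*)^{-1}$ so that the remaining increments sum to $o(1)$.

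Two points of departure. First, the paper avoids your sign-stability ingredient (ii) entirely at this stage: for the correct-class noise bound it defines $\tilde k_{r,i}$ as the \emph{last} time $|\rho_{y_i,r,i}^k|\le 0.6\alpha$, so by construction $|\rho|>0.6\alpha$ on the tail interval without any monotonicity argument; Lemma~\ref{lemma:inner_coef_noise} alone then yields $|\langle\bw_{y_i,r}^k,\bxi_i\rangle|\ge 0.5\alpha$ there, triggering the loss-derivative decay. The anti-concentration lower bound on $|\langle\bw_{y_i,r}^0,\bxi_i\rangle|$ enters only later, in the noise-memorization analysis. Second, your correct-class signal argument via log-tracking has a gap: bounding $\frac{\eta\|\bmu\|^2}{nm}\sum_{s}\sum_{i\in\gS_j}|\ell_i'^s|$ by $O(\log T^*)$ would give $|\langle\bw_{j,r}^k,\bmu_j\rangle|\le|\langle\bw_{j,r}^0,\bmu_j\rangle|\cdot(T^*)^{O(1)}$, a polynomial bound rather than the required $\alpha=2\log T^*$; and monotone decrease of $L_S$ is not established at this point of the argument. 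The paper instead applies the same ``last time below $0.5\alpha$'' device together with an \emph{additive} split $\langle\bw_{j,r}^{\widetilde T},\bmu_j\rangle=\langle\bw_{j,r}^{k_{j,r}},\bmu_j\rangle+A_1+A_2$, where $|A_1|\le 0.25\alpha$ from the step-size bound and $|A_2|\le 0.25\alpha$ by inserting the induction hypothesis $|\langle\bw_{j,r}^s,\bmu_j\rangle|\le\alpha$ into the tail sum.
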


We will prove the bound by induction and we first derive several intermediate lemmas as follows.

\begin{lemma}
\label{lemma:f_false_bound}
Suppose results in Proposition \ref{prop:global_super} hold at iteration $k$, then we have $F_j(\bW_j^k, \bx_i) \leq 0.5$ for all $i \in [n]$, $ j \neq y_i$. 
\end{lemma}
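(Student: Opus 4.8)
The plan is to expand $F_j(\bW_j^k,\bx_i)$ directly from its definition and bound each of the two patch contributions using the inductive hypothesis of Proposition \ref{prop:global_super}. Since $j\neq y_i$ and $y_i\in\{\pm1\}$, we have $j=-y_i$, so the signal patch gives $\langle\bw_{j,r}^k,\bx_i^{(1)}\rangle=\langle\bw_{j,r}^k,\bmu_{y_i}\rangle=\langle\bw_{j,r}^k,\bmu_{-j}\rangle$, which has absolute value at most $\beta$ by \eqref{gamma_rho_false_bound}; the noise patch gives $\langle\bw_{j,r}^k,\bx_i^{(2)}\rangle=\langle\bw_{-y_i,r}^k,\bxi_i\rangle$, bounded by $\beta+12\sqrt{\log(4n^2/\delta)/d}\,n\alpha$ via \eqref{rho_further_false_bound}. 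Averaging over $r\in[m]$ then yields
\begin{equation*}
F_j(\bW_j^k,\bx_i)\;\le\;\beta^2+\Bigl(\beta+12\sqrt{\tfrac{\log(4n^2/\delta)}{d}}\,n\alpha\Bigr)^2.
\end{equation*}

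Next I would verify that the right-hand side is at most $0.5$. By \eqref{eq:beta_lower}, $\beta\le 1/C$ for an arbitrarily large absolute constant $C$, so all $\beta$-terms are as small as desired. For the remaining contribution, recall $\alpha=2\log(T^*)=\widetilde O(1)$ and that the dimension lower bound in Condition \ref{assump:super} gives $d=\widetilde\Omega(n^4m)$, hence $n\alpha/\sqrt{d}=\widetilde O(1/n)=o(1)$; in particular $12\sqrt{\log(4n^2/\delta)/d}\,n\alpha$ can be made smaller than $1/C$ by taking $d$ (equivalently $C$) sufficiently large. Combining the two estimates gives $F_j(\bW_j^k,\bx_i)\le 1/C^2+(2/C)^2=5/C^2\le 0.5$.

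There is no genuine obstacle here: the only points requiring care are to use the opposite-class bounds \eqref{gamma_rho_false_bound}--\eqref{rho_further_false_bound} (legitimate precisely because $j=-y_i$) rather than the true-class bound \eqref{gamma_rho_true_bound}, and to confirm the $n\alpha/\sqrt d$ correction is negligible under Condition \ref{assump:super}. This lemma needs no induction of its own — it is an immediate corollary of Proposition \ref{prop:global_super} at step $k$ — and it will feed into the first-stage analysis, e.g.\ to lower bound $|\ell_i'^k|$ through $y_i f(\bW^k,\bx_i)=F_{y_i}(\bW_{y_i}^k,\bx_i)-F_{-y_i}(\bW_{-y_i}^k,\bx_i)\ge -0.5$.
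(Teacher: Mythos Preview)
Your proposal is correct and essentially identical to the paper's proof: both expand $F_j(\bW_j^k,\bx_i)$ into the two patch terms, apply the opposite-class bounds \eqref{gamma_rho_false_bound} and \eqref{rho_further_false_bound} (using $j=-y_i$), arrive at the same intermediate bound $\beta^2+\bigl(\beta+12\sqrt{\log(4n^2/\delta)/d}\,n\alpha\bigr)^2$, and conclude via $\beta\le 1/C$ and the dimension condition in Condition~\ref{assump:super}. The only cosmetic difference is that the paper invokes the explicit requirement $d\ge 144\,n^2\alpha^2\log(4n^2/\delta)$ rather than the stronger $d=\widetilde\Omega(n^4m)$ you cite, but either suffices.
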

\begin{proof}[Proof of Lemma \ref{lemma:f_false_bound}]
Recall that 
\begin{align*}
    F_j(\bW_j^k, \bx_i) &= \frac{1}{m} \sum_{r=1}^m \left(   \langle \bw_{j,r}^k, \bx_i^{(1)} \rangle^2 +   \langle \bw_{j,r}^k, \bx_i^{(2)} \rangle^2  \right) \\
    &= \frac{1}{m} \sum_{r=1}^m \Big( \langle \bw_{j,r}^k, \bmu_{y_i} \rangle^2 +  \langle \bw_{j,r}^k, \bxi_i \rangle^2 \Big) \\
    &\leq \beta^2 + \left(\beta + 12 \sqrt{\frac{\log(4n^2/\delta)}{d}} n \alpha \right)^2 \\
    &\leq 0.5
\end{align*}
where the second last inequality is by \eqref{gamma_rho_false_bound} and \eqref{rho_further_false_bound}. The last inequality is by Condition \ref{assump:super} such that {$\beta \leq1/C \leq 0.25$ and $d  \geq 144 n^2 \alpha^2 \log(4n^2/\delta)$}.
\end{proof}

\begin{lemma}
\label{lemma:inner_coef_noise}
Suppose results in Proposition \ref{prop:global_super} hold at iteration $k$, then we have 
\begin{align*}
    | \langle \bw_{j,r}^{k} - \bw_{j,r}^0, \bxi_i \rangle  - \rho_{j,r,i}^k | \leq 4 \sqrt{\frac{\log(4n^2/\delta)}{d} } n \alpha,
\end{align*}
for all $j = \pm 1,r \in [m],i \in [n]$. 
\end{lemma}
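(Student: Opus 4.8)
The plan is to use the weight decomposition from Lemma \ref{lemma:coef_update}, namely $\bw_{j,r}^k - \bw_{j,r}^0 = \zeta_1^k \bmu_1 + \zeta_{-1}^k \bmu_{-1} + \sum_{i'=1}^n \rho_{j,r,i'}^k \|\bxi_{i'}\|^{-2} \bxi_{i'}$, and take the inner product with $\bxi_i$. Since the noise patch $\bxi_i$ is (by construction in Definition \ref{def_data_model}) orthogonal to both signal vectors $\bmu_1, \bmu_{-1}$, the signal terms vanish exactly, leaving
\begin{equation*}
\langle \bw_{j,r}^k - \bw_{j,r}^0, \bxi_i \rangle = \rho_{j,r,i}^k + \sum_{i' \neq i} \rho_{j,r,i'}^k \frac{\langle \bxi_{i'}, \bxi_i \rangle}{\|\bxi_{i'}\|^2}.
\end{equation*}
Hence $|\langle \bw_{j,r}^k - \bw_{j,r}^0, \bxi_i \rangle - \rho_{j,r,i}^k| \leq \sum_{i' \neq i} |\rho_{j,r,i'}^k| \cdot |\langle \bxi_{i'}, \bxi_i\rangle| / \|\bxi_{i'}\|^2$, and it remains to bound each summand.

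The main step is to control the cross-correlation factor $|\langle \bxi_{i'}, \bxi_i\rangle|/\|\bxi_{i'}\|^2$ and the coefficients $|\rho_{j,r,i'}^k|$. For the former, I would invoke Lemma \ref{lemma:init_bound}: on the relevant high-probability event, $|\langle \bxi_{i'}, \bxi_i\rangle| \leq 2\sigma_\xi^2 \sqrt{d \log(4n^2/\delta)}$ and $\|\bxi_{i'}\|^2 \geq \sigma_\xi^2 d(1 - \widetilde O(d^{-1/2})) \geq \sigma_\xi^2 d / 2$ for $d$ large (by Condition \ref{assump:super}), so that the ratio is at most $4\sqrt{\log(4n^2/\delta)/d}$. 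For the coefficient bound, I would use the inductive hypothesis of Proposition \ref{prop:global_super}: whichever of $i' \in \gS_{y_{i'}}$ case applies, we have $|\rho_{y_{i'},r,i'}^k| \leq \alpha$ from \eqref{gamma_rho_true_bound} and $|\rho_{-y_{i'},r,i'}^k| \leq \beta + 12\sqrt{\log(4n^2/\delta)/d}\, n\alpha \leq \alpha$ from \eqref{rho_further_false_bound} (the latter bounded by $\alpha$ using $\beta \leq 1/C$ and the dimension condition). Thus $|\rho_{j,r,i'}^k| \leq \alpha$ uniformly, and summing over the $n-1$ terms $i' \neq i$ gives
\begin{equation*}
|\langle \bw_{j,r}^k - \bw_{j,r}^0, \bxi_i \rangle - \rho_{j,r,i}^k| \leq (n-1) \cdot \alpha \cdot 4\sqrt{\frac{\log(4n^2/\delta)}{d}} \leq 4\sqrt{\frac{\log(4n^2/\delta)}{d}}\, n\alpha,
\end{equation*}
which is the claimed bound.

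The only potential obstacle is bookkeeping rather than anything deep: one must be careful that this lemma is being used inside the induction establishing Proposition \ref{prop:global_super}, so the bounds \eqref{gamma_rho_true_bound}--\eqref{rho_further_false_bound} are available only as the inductive hypothesis at iteration $k$ — which is exactly the hypothesis stated ("Suppose results in Proposition \ref{prop:global_super} hold at iteration $k$"), so no circularity arises. I would also double-check that the constant $4$ (versus the $2$ appearing in Lemma \ref{lemma:init_bound}) absorbs the factor $1/\|\bxi_{i'}\|^2 \leq 2/(\sigma_\xi^2 d)$ and the $n-1 \leq n$ replacement; this is routine. Everything else is a direct expansion plus triangle inequality.
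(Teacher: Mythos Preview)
Your proposal is correct and follows essentially the same approach as the paper: expand the weight decomposition from Lemma~\ref{lemma:coef_update}, use orthogonality of $\bxi_i$ to the signal vectors to isolate the cross-noise terms, then bound each $|\langle \bxi_{i'}, \bxi_i\rangle|/\|\bxi_{i'}\|^2 \leq 4\sqrt{\log(4n^2/\delta)/d}$ via Lemma~\ref{lemma:init_bound} and each $|\rho_{j,r,i'}^k| \leq \alpha$ via the induction hypothesis of Proposition~\ref{prop:global_super}. If anything, your treatment is slightly more explicit than the paper's in justifying the uniform bound $|\rho_{j,r,i'}^k| \leq \alpha$ across both the $j=y_{i'}$ and $j=-y_{i'}$ cases.
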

\begin{proof}
By Lemma \ref{lemma:coef_update}, we recall the decomposition as 
\begin{align*}
    \bw_{j,r}^{k} = \bw_{j,r}^{0} + \zeta_1^k \bmu_1 + \zeta_{-1}^k  \bmu_{-1}  + \sum_{i=1}^n \rho^{k}_{j,r,i}  \| \bxi_i \|^{-2}  \bxi_i.
\end{align*}
By the orthogonality, we can show 
\begin{align*}
    \langle \bw_{j,r}^{k} , \bxi_i \rangle &= \langle \bw_{j,r}^0 , \bxi_i \rangle + \rho_{j,r,i}^k + \sum_{i\neq i'} \rho_{j,r,i'}^k \| \bxi_{i'}\|^{-2} \langle \bxi_i, \bxi_{i'} \rangle  
\end{align*}
By Lemma \ref{lemma:init_bound} and suppose $d =  \Omega( \log(n/\delta) )$, then $ |\langle \bxi_i, \bxi_{i'}\rangle| \| \bxi_i\|^{-2} \leq 4 \sqrt{\log(4n^2/\delta) d^{-1}}$. Thus we have 
\begin{align*}
    | \langle \bw_{j,r}^{k} - \bw_{j,r}^0, \bxi_i \rangle  - \orho_{j,r,i}^k | \leq 4 \sqrt{\frac{\log(4n^2/\delta)}{d} } n \alpha,
\end{align*}
where we use the upper bound on $|\orho_{j,r,i}^k| \leq \alpha$. 
\end{proof}

%%%%%%%%%%%%%%%%%%%%%%%%%%%%%%%%%%%%%%%%%%
\begin{lemma}
\label{lemma:inner_sign}
For any $r \in [m], j, y = \pm 1$, we have $\sign(\langle \bw_{j,r}^0 , \bmu_y \rangle) = \sign(\langle \bw_{j,r}^k , \bmu_y \rangle )$ for all $0 \leq k \leq T^*$.
\end{lemma}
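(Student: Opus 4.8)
\textbf{Proof plan for Lemma \ref{lemma:inner_sign}.}
The plan is to prove the sign-invariance of the signal inner product by induction on $k$, exploiting the clean multiplicative structure of the signal dynamics already recorded in \eqref{eq:signal_dyn}. Recall from \eqref{eq:signal_dyn} that for any $j,y = \pm 1$, $r \in [m]$,
\begin{equation*}
    \langle \bw_{j,r}^{k+1}, \bmu_y \rangle = \Big( 1 - \frac{\eta |\gS_y| \| \bmu \|^2}{nm} \ell_i'^k \, j y \Big) \langle \bw_{j,r}^{k}, \bmu_y \rangle,
\end{equation*}
so the sign of $\langle \bw_{j,r}^{k+1}, \bmu_y\rangle$ equals the sign of $\langle \bw_{j,r}^{k}, \bmu_y\rangle$ as soon as the scalar prefactor $1 - \frac{\eta |\gS_y| \| \bmu\|^2}{nm} \ell_i'^k jy$ is strictly positive. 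Since $\ell_i'^k = \ell'(y_i f(\bW^k,\bx_i)) \in (-1,0)$ always, we have $|\ell_i'^k| < 1$, and $|\gS_y| \le n$, so the magnitude of the correction term is at most $\eta \| \bmu\|^2 / m$. By Condition \ref{assump:super}, $\eta \le \widetilde O(m \| \bmu\|^{-2})$; choosing the hidden constant so that in fact $\eta \| \bmu\|^2/m < 1$ (which is implied once $\eta \le m\|\bmu\|^{-2}$ up to the logarithmic slack, since $\log$ factors only shrink $\eta$), the prefactor lies in $(0,2)$ and in particular is positive. Hence $\sign(\langle \bw_{j,r}^{k+1},\bmu_y\rangle) = \sign(\langle \bw_{j,r}^{k},\bmu_y\rangle)$, and chaining this from $k=0$ up to any $k \le T^*$ gives the claim.

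Concretely I would: (i) state the base case $k=0$ trivially; (ii) fix $k \ge 0$, assume $\sign(\langle \bw_{j,r}^{s},\bmu_y\rangle) = \sign(\langle \bw_{j,r}^0,\bmu_y\rangle)$ for all $s \le k$ (or just carry the one-step statement, since the recursion is purely local); (iii) bound $\big| \frac{\eta |\gS_y| \| \bmu\|^2}{nm}\ell_i'^k jy \big| \le \frac{\eta \| \bmu\|^2}{m} < 1$ using $|\ell_i'^k| < 1$, $|\gS_y| \le n$, and the learning-rate bound in Condition \ref{assump:super}; (iv) conclude the prefactor is in $(0,2) \subset (0,\infty)$, so multiplication by it preserves sign; (v) induct. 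Note that this argument does not even need Proposition \ref{prop:global_super} — it is self-contained given only the loss-derivative bound $|\ell'|<1$ and the learning-rate condition — which is presumably why it is placed where it is and used as an ingredient toward establishing Proposition \ref{prop:global_super} itself (avoiding any circularity).

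The only mild subtlety, and the one place to be careful, is the exact constant: one must verify that the learning-rate condition $\eta \le \widetilde O(\min\{m\|\bmu\|^{-2},\dots\})$ really does force $\eta\|\bmu\|^2/m$ strictly below $1$ rather than merely $O(1)$. This is not an obstacle but needs a sentence: the $\widetilde O$ hides logarithmic factors that only make $\eta$ smaller, and the leading constant in the condition can be taken at most $1$ without loss of generality (tightening an upper bound on $\eta$ is harmless for all downstream arguments), so $\eta \|\bmu\|^2/m \le 1/\polylog \ll 1$. With that pinned down, the proof is a two-line induction and there is no real obstacle.
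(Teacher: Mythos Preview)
Your proposal is correct and matches the paper's own proof essentially line-for-line: the paper also inducts on $k$, writes the multiplicative update $\langle \bw_{j,r}^{k+1},\bmu_y\rangle = \big(1 - \tfrac{\eta}{nm} jy \sum_{i\in\gS_y}\ell_i'^{k}\|\bmu\|^2\big)\langle \bw_{j,r}^{k},\bmu_y\rangle$, and uses $|\ell_i'^k|<1$, $|\gS_y|\le n$, and the learning-rate bound $\eta \le C^{-1} m\|\bmu\|^{-2}$ to force the prefactor into $(0,2)$. Your observation that the argument is self-contained and does not rely on Proposition~\ref{prop:global_super} is also consistent with how the paper uses this lemma as an ingredient in that proposition's proof.
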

\begin{proof}[Proof of Lemma \ref{lemma:inner_sign}]
We prove the results by induction. First, it is clear at $k = 0$, the results are satisfied. Then suppose there exists an iteration $\widetilde k$ such that $\sign (\langle \bw_{j,r}^{k} , \bmu_{y}\rangle) = \sign(\langle  \bw_{j,r}^{0} , \bmu_{y}\rangle)$ holds for all $k \leq \widetilde k -1$, we show the sign invariance also holds at $\widetilde k$. Recall the gradient descent update as 
\begin{align*}
    \bw_{j,r}^{k+1} 
    &= \bw_{j,r}^k - \frac{\eta}{nm} \sum_{i \in \gS_1} \ell_i'^k \langle \bw_{j,r}^k , \bmu_1 \rangle j \bmu_1 + \frac{\eta}{nm} \sum_{i \in \gS_{-1}} \ell_i'^k \langle \bw_{j,r}^k ,\bmu_{-1} \rangle j \bmu_{-1} \\
    &\quad - \frac{\eta}{nm} \sum_{i=1}^n \ell_i'^{k} \langle \bw_{j,r}^k, \bxi_i \rangle j y_i \bxi_i.
\end{align*}
Then the update of the inner product is 
\begin{align*}
     \langle \bw_{j,r}^{\widetilde k} , \bmu_{y}\rangle &= \langle \bw_{j,r}^{\widetilde k -1} , \bmu_{y}\rangle - \frac{\eta}{nm} \sum_{i \in \gS_y} \ell_i'^{\tilde k -1} \langle \bw_{j,r}^{\widetilde k -1}, \bmu_y \rangle j y \| \bmu\|^2 \\
     &= \Big( 1- \frac{\eta}{nm} j y \sum_{i \in \gS_y } \ell_i'^{\tilde k - 1} \| \bmu \|^2 \Big)  \langle \bw_{j,r}^{\widetilde k -1} , \bmu_{y}\rangle
\end{align*}
By the condition that {$\eta \leq C^{-1}m\| \bmu \|^{-2}$} for sufficiently large constant $C$, we have  $|\frac{\eta}{nm} j y \sum_{i \in \gS_y } \ell_i'^{\tilde k - 1} \| \bmu \|^2  | < 1$. Thus we can guarantee the $\sign(\langle \bw_{j,r}^{\widetilde k} , \bmu_{y}\rangle) = \sign (\langle \bw_{j,r}^{\widetilde k -1} , \bmu_{y}\rangle) = \sign(\langle  \bw_{j,r}^{0} , \bmu_{y}\rangle)$. 
\end{proof}
%%%%%%%%%%%%%%%%%%%%%%%%%%%%%%%%%%%%%%%%%%%%%%%%

\begin{proof}[Proof of Proposition \ref{prop:global_super}]
We prove the results by induction. For $\rho_{j,r,i}^k$, we prove a stronger result that $|\rho_{y_i,r,i}^k| \leq 0.9\alpha \leq \alpha$ and $|\rho_{-y_i,r,i}^k| \leq 0.6\beta + 8 \sqrt{\frac{\log(4n^2/\delta)}{d}} n \alpha$.
First it is clear at $t = 0$, the results are satisfied based on the definition of $\beta$ and $\alpha \geq \beta$. Now suppose that there exists $\widetilde T \leq T^*$ such that results hold for all $0\leq  k \leq \widetilde T-1$. We wish to show the results also hold for $k = \widetilde T$.

First recall the gradient descent update as 
\begin{align*}
    \bw_{j,r}^{k+1} 
    &= \bw_{j,r}^k - \frac{\eta}{nm} \sum_{i \in \gS_1} \ell_i'^k \langle \bw_{j,r}^k , \bmu_1 \rangle j \bmu_1 + \frac{\eta}{nm} \sum_{i \in \gS_{-1}} \ell_i'^k \langle \bw_{j,r}^k ,\bmu_{-1} \rangle j \bmu_{-1}\\
    &\quad - \frac{\eta}{nm} \sum_{i=1}^n \ell_i'^{k} \langle \bw_{j,r}^k, \bxi_i \rangle j y_i \bxi_i.
\end{align*}
Then based on the orthogonal data modelling assumption, we have for $y \neq j$, i.e., $y = - j$, 
\begin{align*}
    \langle \bw_{j,r}^{k+1} , \bmu_{-j}\rangle &= \langle \bw_{j,r}^{k} , \bmu_{-j}\rangle + \frac{\eta}{nm} \sum_{i \in \gS_{-j}} \ell_i'^k \langle \bw_{j,r}^k ,\bmu_{-j}  \rangle  \| \bmu \|^2 \\
    &= \Big( 1 - \frac{\eta \| \bmu \|^2}{nm} \sum_{i \in \gS_{-j} } |\ell_i'^k | \Big) \langle \bw_{j,r}^k ,\bmu_{-j} \rangle
\end{align*}
where the second equality is by $\ell_i'^k < 0$ for all $i, k$. From Lemma \ref{lemma:inner_sign}, we have $\sign( \langle \bw_{j,r}^{k+1} , \bmu_{-j}\rangle ) = \sign( \langle \bw_{j,r}^{k} , \bmu_{-j}\rangle)$ and thus
\begin{align*}
    |\langle \bw_{j,r}^{\widetilde T} , \bmu_{-j}\rangle|
    &\leq \left| \Big( 1 - \frac{\eta \| \bmu \|^2}{nm} \sum_{i \in \gS_{-j} } |\ell_i'^{\widetilde T-1} | \Big) \right| \left| \langle \bw_{j,r}^{\widetilde T-1} ,\bmu_{-j} \rangle \right| \leq \left| \langle \bw_{j,r}^{\widetilde T-1} ,\bmu_{-j} \rangle \right| \leq \beta
\end{align*}
On the other hand, for $y = j$, we have 
\begin{align*}
    \langle \bw_{j,r}^{k+1} , \bmu_j \rangle &= \langle \bw_{j,r}^{k} , \bmu_j \rangle - \frac{\eta}{nm} \sum_{i \in \gS_j} \ell_i'^k \langle \bw_{j,r}^k ,\bmu_{j}  \rangle  \| \bmu \|^2 \\
    &=  \langle \bw_{j,r}^{k} , \bmu_j \rangle + \frac{\eta  \| \bmu \|^2}{nm} \sum_{i \in \gS_j} |\ell_i'^k|  \langle \bw_{j,r}^{k} , \bmu_j \rangle
\end{align*}
Next, we notice that 
\begin{align}
    |\ell_i'^k| &= \frac{1}{1 + \exp\big( F_{y_i} ( \bW_{y_i}^k , \bx_i)  -  F_{-y_i}(\bW^k_{-y_i}, \bx_i) \big) } \nonumber\\
    &\leq \exp \big( - F_{y_i} ( \bW_{y_i}^k , \bx_i)  + F_{-y_i}(\bW^k_{-y_i}, \bx_i) \big) \nonumber \\
    &\leq \exp \big( - F_{y_i} ( \bW_{y_i}^k , \bx_i)  +  0.5 \big) \nonumber\\
    &= \exp \big( - \frac{1}{m} \sum_{r=1}^m \big(  \langle \bw_{y_i,r}^k, \bmu_{y_i} \rangle^2 + \langle  \bw_{y_i,r}^k, \bxi_i \rangle^2 \big)  +  0.5 \big) \label{eokrokrtjj}
\end{align}
where the last inequality is by Lemma \ref{lemma:f_false_bound}. Let $k_{j,r}$ be the last time $k \leq T^*$ that $ |\langle  \bw_{j,r}^k, \bmu_j \rangle| \leq 0.5 \alpha$. Then we have 
\begin{align*}
     \langle \bw_{j,r}^{\widetilde T} , \bmu_j \rangle &= \langle \bw^{k_{j,r}}_{j,r} , \bmu_j \rangle  + \underbrace{\frac{\eta\| \bmu\|^2}{nm} |\ell_i'^{k_{j,r}}| \langle \bw^{k_{j,r}}_{j,r} , \bmu_j \rangle}_{A_1} + \underbrace{\frac{\eta \| \bmu \|^2}{nm} \sum_{k_{j,r} < k \leq \widetilde T-1} |\ell_i'^{k}| \langle \bw^{k}_{j,r} , \bmu_j \rangle }_{A_2}. 
\end{align*}
Without loss of generality, we suppose $\langle \bw_{j,r}^0 , \bmu_j \rangle\geq 0$, then by Lemma \ref{lemma:inner_sign}, $\langle\bw_{j,r}^{k} , \bmu_j \rangle \geq 0$ for all $k \geq 0$. Then we can bound 
\begin{align*}
    |A_1| \leq \frac{\eta\| \bmu\|^2}{nm} 0.5 \alpha \leq 0.25 \alpha
\end{align*}
where the last inequality is by the condition that {$\eta \leq nm \| \bmu\|^{-2}/2$}. Furthermore,
\begin{align*}
    |A_2| &\leq \frac{\eta\| \bmu\|^2}{nm} \sum_{k_{j,r} < k \leq \widetilde T-1} \exp( - F_{y_i} (\bW^k_{y_i}, \bx_i) + 0.5) \langle \bw^{k}_{j,r} , \bmu_j \rangle \\
    &\leq \frac{2 \eta\| \bmu\|^2 \alpha}{nm} T^* \exp( - \alpha^2/4 ) \\
    &= \frac{2 \eta\| \bmu\|^2 \alpha}{nm} T^* \exp( - \log(T^*) ) \\
    &\leq 0.25 \alpha
\end{align*}
where the first inequality is by \eqref{eokrokrtjj} and the second inequality is by upper bound on $\langle \bw^{k}_{j,r} , \bmu_j\rangle \leq \alpha$ for all $k \leq \widetilde T-1$. The equality is by the definition of $\alpha = 2 \log(T^*)$ and the last inequality is by the condition {$\eta \leq nm \| \bmu \|^{-2}/8 $}. Thus, we can show 
\begin{align*}
    \langle \bw_{j,r}^{\widetilde T} , \bmu_j \rangle \leq 0.5 \alpha + 0.25 \alpha + 0.25 \alpha = \alpha.
\end{align*}

Next for the noise growth, from Lemma \ref{lemma:coef_update}, we have for $y_i \neq j$
\begin{align}
    \rho_{-y_i,r,i}^{\widetilde T} = \rho_{-y_i,r,i}^{\widetilde T-1} + \frac{\eta}{nm}  \ell_i'^k \langle \bw_{-y_i,r}^{\widetilde T-1}, \bxi_i\rangle  \| \bxi_i \|^2. \label{jijgiro}
\end{align}

When $|\rho_{-y_i,r,i}^{\widetilde T-1}|  \leq 1.5 \big( 0.3 \beta + 4 \sqrt{\frac{\log(4n^2/\delta)}{d} } n \alpha\big)$, we have 
\begin{align*}
    |\rho_{-y_i,r,i}^{\widetilde T}| \leq |\rho_{-y_i,r,i}^{\widetilde T-1}| + \frac{2\eta \sigma_\xi^2d \alpha}{nm} \leq  |\rho_{-y_i,r,i}^{\widetilde T-1}| + 0.15 \beta \leq 0.6 \beta + 8 \sqrt{\frac{\log(4n^2/\delta)}{d} } n \alpha
\end{align*} 
where the second inequality is by triangle inequality and $|\ell_i'^k| \leq 1$ and Lemma \ref{lemma:init_bound}. The third inequality is by the lower bound on $\beta$ in \eqref{eq:beta_lower} and the condition that {$\eta \leq 0.05 nm \sigma_0 \sigma_\xi^{-1} d^{-1/2} \alpha $.}

Further, because $| \langle \bw_{-y_i,r}^0, \bxi_i \rangle | \leq 0.3 \beta$, when $ 1.5 \big( 0.3 \beta + 4 \sqrt{\frac{\log(4n^2/\delta)}{d} } n \alpha\big) \leq  |\rho_{-y_i,r,i}^{\widetilde T-1}| \leq 0.6 \beta + 8 \sqrt{\frac{\log(4n^2/\delta)}{d} } n \alpha$, we can show from Lemma \ref{lemma:inner_coef_noise} that if $\rho^{\widetilde T-1}_{-y_i, r, i} > 0$, then 
\begin{align*}
   \frac{1}{3}\rho^{\widetilde T-1}_{-y_i, r, i} \leq \langle   \bw_{-y_i,r}^{\widetilde T-1}, \bxi_i\rangle \leq  \frac{4}{3} \rho^{\widetilde T-1}_{-y_i, r, i}
\end{align*}
Then \eqref{jijgiro} suggests 
\begin{align*}
    \rho_{-y_i,r,i}^{\widetilde T} \leq \big(  1 - \frac{\eta \| \bxi_i \|^2}{3nm} |\ell_i'^k| \big) \rho_{-y_i,r,i}^{\widetilde T-1} \leq \rho_{-y_i,r,i}^{\widetilde T-1} \leq 0.6 \beta + 8 \sqrt{\frac{\log(4n^2/\delta)}{d} } n \alpha
\end{align*}
If $\rho^{\widetilde T-1}_{-y_i, r, i} < 0$, then 
\begin{align*}
    \frac{4}{3} \rho^{\widetilde T-1}_{-y_i, r, i}\leq \langle   \bw_{-y_i,r}^{\widetilde T-1}, \bxi_i\rangle \leq \frac{1}{3}\rho^{\widetilde T-1}_{-y_i, r, i}
\end{align*}
Then \eqref{jijgiro} suggests 
\begin{align*}
    \rho_{-y_i,r,i}^{\widetilde T} \geq \big(   1 - \frac{\eta \| \bxi_i \|^2}{3nm} |\ell_i'^k| \big) \rho_{-y_i,r,i}^{\widetilde T-1} \geq \rho_{-y_i,r,i}^{\widetilde T-1} \geq - 0.6 \beta - 8 \sqrt{\frac{\log(4n^2/\delta)}{d} } n \alpha
\end{align*}
Thus this completes the proof that $|\rho_{-y_i,r,i}^{\widetilde T}| \leq 0.6 \beta + 8 \sqrt{\frac{\log(4n^2/\delta)}{d} } n \alpha$.

% Now we prove the upper bound  

% Lemma \ref{lemma:inner_coef_noise} suggests that
% \begin{align*}
%     \langle \bw_{-y_i,r}^k, \bxi_i \rangle \geq  \langle \bw_{-y_i,r}^0, \bxi_i \rangle + \rho_{-y_i,r,i}^k - 4 \sqrt{\frac{\log(4n^2/\delta)}{d} } n \alpha.
% \end{align*}
% Let $B_{r,i}^k \coloneqq \rho^k_{-y_i,r,i} + \langle \bw_{-y_i,r}^0, \bxi_i \rangle - 4 \sqrt{\frac{\log(4n^2/\delta)}{d} } n \alpha$. Then it is clear when $k = 0$ 
% \begin{align*}
%     |B_{r,i}^0| \leq |\langle \bw_{-y_i,r}^0, \bxi_i \rangle| +4 \sqrt{\frac{\log(4n^2/\delta)}{d} } n \alpha \leq 0.3 \beta + 4 \sqrt{\frac{\log(4n^2/\delta)}{d} } n \alpha
% \end{align*}
% where the first inequality is by triangle inequality and $\rho^0_{-y_i,r,i} = 0$ and the second inequality is by definition of $\beta$. 

% Then we can obtain from \eqref{jijgiro} that
% \begin{align*}
%     |B_{r,i}^{k} | \leq \Big(1  - \frac{\eta}{nm} |\ell_i'^{k-1}| 
%     \| \bxi_i\|^2\Big) | B_{r,i}^{k-1}  | \leq |B_{r,i}^{0}  | \leq 0.3 \beta + 4 \sqrt{\frac{\log(4n^2/\delta)}{d} } n \alpha
% \end{align*}
% Thus we can bound for all $k \geq 0$
% \begin{align*}
%     |\rho^k_{-y_i,r,i} | \leq |B_{r,i}^{k}| + |\langle \bw_{-y_i,r}^0, \bxi_i \rangle| + 4 \sqrt{\frac{\log(4n^2/\delta)}{d} } n \alpha \leq 0.6 \beta + 8 \sqrt{\frac{\log(4n^2/\delta)}{d} } n \alpha
% \end{align*}
Finally, by Lemma \ref{lemma:inner_coef_noise} we have for all $k \geq 0$
\begin{align*}
    |\langle \bw_{-y_i, r}^k, \bxi_i \rangle| \leq |\langle \bw_{-y_i, r}^0, \bxi_i \rangle| + |\rho^k_{-y_i,r,i} | + 4 \sqrt{\frac{\log(4n^2/\delta)}{d} } n \alpha \leq 0.9 \beta + 12 \sqrt{\frac{\log(4n^2/\delta)}{d} } n \alpha 
\end{align*}
which proves the upper bound for $|\langle \bw_{-y_i, r}^{\widetilde T}, \bxi_i \rangle|$ and $|\rho^{\widetilde T}_{-y_i,r,i}|$. 

Next, from Lemma \ref{lemma:coef_update}, we have for $y_i = j$,
\begin{align}
    \rho_{y_i,r,i}^{k+1} = \rho_{y_i,r,i}^{k} - \frac{\eta}{nm}  \ell_i'^k \langle \bw_{y_i,r}^k , \bxi_i\rangle  \| \bxi_i \|^2. \label{irtiyjji}
\end{align}
Let $\tilde k_{r,i}$ be the last time $k < T^*$ that $|\rho_{y_i,r,i}^{k}| \leq 0.6 \alpha$. Then it can be verified that for $k \geq \tilde k_{r,i}$, 
\begin{align*}
    |\langle \bw_{y_i,r}^k , \bxi_i \rangle |\geq |\rho^k_{y_i,r,i}| - |\langle   \bw_{y_i,r}^{0},\bxi_i  \rangle| - 4 \sqrt{\frac{\log(4n^2/\delta)}{d} } n \alpha \geq 0.5\alpha
\end{align*}
where the first inequality is by Lemma \ref{lemma:inner_coef_noise} and the last inequality is by $|\langle   \bw_{y_i,r}^{0},\bxi_i  \rangle| + 4 \sqrt{\frac{\log(4n^2/\delta)}{d} } n \alpha \leq 1 \leq 0.1 \alpha$. 

We now expand \eqref{irtiyjji} as
\begin{align*}
    \rho^{\widetilde T}_{y_i,r,i} = \rho^{\tilde k_{r,i}}_{y_i,r,i} + \underbrace{\frac{\eta}{nm} |\ell_i'^{\tilde k_{r,i}}| \langle  \bw_{y_i,r}^{\tilde k_{r,i}},\bxi_i \rangle \| \bxi_i \|^2}_{A_3} +  \underbrace{\frac{\eta}{nm} \sum_{\tilde k_{r,i}< k \leq \widetilde T-1} |\ell_i'^{k}| \langle \bw_{y_i,r}^{k},\bxi_i  \rangle \| \bxi_i \|^2}_{A_4}
\end{align*}
Then we can bound 
\begin{align*}
    |A_3| \leq \frac{2\eta \sigma_\xi^2 d}{nm} | \langle \bw_{y_i,r}^{\tilde k_{r,i}},\bxi_i  \rangle| &\leq \frac{2\eta \sigma_\xi^2 d}{nm} \left( | \langle   \bw_{y_i,r}^{0},\bxi_i  \rangle | +  0.6 \alpha + 4 \sqrt{\frac{\log(4n^2/\delta)}{d} } n \alpha  \right) \\
    &\leq \frac{2\eta \sigma_\xi^2 d}{nm} 0.7 \alpha \\
    &\leq 0.15 \alpha 
\end{align*}
where the first inequality is by $|\ell_i'^k|\leq 1$ and Lemma \ref{lemma:init_bound} with $d = \Omega(\log(n/\delta))$ and the second inequality is by Lemma \ref{lemma:inner_coef_noise}. The last inequality is by the condition {$\eta \leq C^{-1} nm \sigma_\xi^{-2} d^{-1}$ for sufficiently large constant $C$.}

In addition, we bound 
\begin{align*}
    |A_4| &\leq \frac{2\eta \sigma_\xi^2 d \alpha}{nm} \sum_{k_{j,r} < k \leq \widetilde T-1} \exp( - F_{y_i} (\bW^k_{y_i}, \bx_i) + 0.5) \\
    &\leq  \frac{4\eta \sigma_\xi^2 d \alpha}{nm} T^* \exp(-\alpha^2/4) \\
    &\leq \frac{4\eta \sigma_\xi^2 d \alpha}{nm} \\
    &\leq 0.15\alpha
\end{align*}
where the first inequality is by \eqref{eokrokrtjj} and the second inequality is by $|\langle \bw_{y_i, r}^k, \bxi_i \rangle| \geq  0.6 \alpha- 0.1\alpha =  0.5 \alpha$. The last inequality is by the condition {$\eta \leq C^{-1} nm \sigma_\xi^{-2} d^{-1}$} for sufficiently large constant $C$. 

Combining the bound on $|A_3|$ and $|A_4|$, we have 
\begin{align*}
    |\rho_{y_i,r,i}^{\widetilde T}|\leq 0.6 \alpha + 0.15 \alpha + 0.15 \alpha = 0.9 \alpha.
\end{align*}
Lastly, we bound 
\begin{align*}
    |\langle \bw_{y_i, r}^{\widetilde T}, \bxi_i \rangle| \leq |\langle \bw_{y_i, r}^0, \bxi_i \rangle| + |\rho^{\widetilde T}_{y_i,r,i} | + 4 \sqrt{\frac{\log(4n^2/\delta)}{d} } n \alpha &\leq 0.3 \beta + 4 \sqrt{\frac{\log(4n^2/\delta)}{d} } n \alpha  + 0.9 \alpha \\
    &\leq \alpha.
\end{align*}
This shows the upper bound as $|\langle \bw_{y_i, r}^{\widetilde T}, \bxi_i \rangle|, |\rho_{y_i,r,i}^{\widetilde T}| \leq \alpha$. 
\end{proof}

We require the following lemma that lower bound the loss derivatives in the first stage before the inner products reach constant order.

\begin{lemma}
\label{lemma:ell_lower}
If $\max_{r,i,y} \{ \langle \bw_{j,r}^{k}, \bmu_y\rangle, \langle \bw_{j,r}^k , \bxi_i \rangle \} = O(1)$, there exists a constant $C_\ell > 0$ such that $|\ell_i'^k| \geq C_\ell$ for all $i \in [n]$.
\end{lemma}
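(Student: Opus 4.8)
The plan is to convert the desired lower bound on $|\ell_i'^k|$ into an upper bound on the logistic margin $y_i f(\bW^k,\bx_i)$. Since $\ell(z) = \log(1+\exp(-z))$, a direct computation gives $\ell'(z) = -1/(1+\exp(z))$, so that $|\ell_i'^k| = 1/\bigl(1+\exp(y_i f(\bW^k,\bx_i))\bigr)$ is a strictly decreasing function of $y_i f(\bW^k,\bx_i)$. Hence it suffices to exhibit an absolute constant $C$ with $y_i f(\bW^k,\bx_i) \le C$ for all $i \in [n]$; then $C_\ell := 1/(1+e^{C}) > 0$ does the job.

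Next I would bound the margin by the network's total ``energy''. Using $f = F_1 - F_{-1}$ together with the fact that $F_{1}$ and $F_{-1}$ are sums of squares (hence nonnegative), we have $y_i f(\bW^k,\bx_i) \le |f(\bW^k,\bx_i)| \le F_1(\bW_1^k,\bx_i) + F_{-1}(\bW_{-1}^k,\bx_i)$. Expanding each term as $F_j(\bW_j^k,\bx_i) = \frac1m\sum_{r=1}^m \langle \bw_{j,r}^k, \bmu_{y_i}\rangle^2 + \frac1m\sum_{r=1}^m \langle \bw_{j,r}^k, \bxi_i\rangle^2$ and substituting the hypothesis that every $|\langle \bw_{j,r}^k, \bmu_y\rangle|$ and $|\langle \bw_{j,r}^k, \bxi_i\rangle|$ is $O(1)$ (for $j=\pm1$, $r\in[m]$, $y=\pm1$, $i\in[n]$), each averaged squared term is $O(1)$, so $F_j(\bW_j^k,\bx_i) = O(1)$ uniformly in $i$ and $j$. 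Combining the two estimates yields $y_i f(\bW^k,\bx_i) = O(1)$, and therefore $|\ell_i'^k| = \Omega(1)$.

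There is essentially no real obstacle here; the only point requiring care is that the $O(1)$ inner-product bound must be available for both weight groups $\bW_1$ and $\bW_{-1}$, so that both $F_1$ and $F_{-1}$ are controlled — which is precisely the form in which this hypothesis is invoked from the first-stage analysis. The resulting constant $C_\ell$ depends only on the absolute constant hidden in the $O(1)$ assumption, not on $n$, $m$, or $d$, which is exactly what is needed when this lemma is later used to show exponential growth of the inner products in the first stage.
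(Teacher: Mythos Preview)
Your proposal is correct and follows essentially the same approach as the paper: bound each $F_j(\bW_j^k,\bx_i)=\tfrac1m\sum_r\big(\langle\bw_{j,r}^k,\bmu_{y_i}\rangle^2+\langle\bw_{j,r}^k,\bxi_i\rangle^2\big)=O(1)$ from the hypothesis, then use $|\ell_i'^k|=\bigl(1+\exp(F_{y_i}-F_{-y_i})\bigr)^{-1}\ge\Omega(1)$. The only cosmetic difference is that you route through $y_if\le F_1+F_{-1}$ whereas the paper bounds $F_{y_i}-F_{-y_i}$ directly; both are immediate.
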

\begin{proof}[Proof of Lemma \ref{lemma:ell_lower}]
If $\max_{r,i,y} \{ \langle \bw_{j,r}^{k}, \bmu_y\rangle, \langle \bw_{j,r}^k , \bxi_i \rangle \} = O(1)$, we can bound for all $j = \pm1$
\begin{align*}
    F_j(\bW_j^k, \bx_i) &= \frac{1}{m} \sum_{r=1}^m \Big( \langle \bw_{j,r}^k, \bmu_{y_i} \rangle^2 +  \langle \bw_{j,r}^k, \bxi_i \rangle^2 \Big) \leq O(1) 
\end{align*}
Therefore, we can bound $|\ell_i'^k| = (1 + \exp(F_{y_i}(\bW^k_{y_i} , \bx_i) - F_{-y_i} (\bW_{-y_i}^k, \bx_i) ))^{-1} \geq \Omega(1)$.
\end{proof}

We also prove the following upper bound on the gradient norm.

\begin{lemma}[Proof of Lemma \ref{lemma:grad_upper}]
\label{lemma:grad_upper}
Under Condition \ref{assump:super}, for $0\leq k \leq T^*$, we can bound 
\begin{align*}
    \| \nabla L_S(\bW^k) \|^2 = O( \max\{ \| \bmu\|^2, \sigma_\xi^2d \} ) L_S(\bW^k)
\end{align*}
\end{lemma}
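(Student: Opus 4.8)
The plan is to expand the gradient in the signal/noise directions and bound each block separately, using the uniform inner-product bounds of Proposition~\ref{prop:global_super} together with the elementary inequalities $(\ell_i'^k)^2\le|\ell_i'^k|\le\ell_i^k$ (the first since $|\ell'(z)|=(1+e^z)^{-1}\le1$, the second since $(1+e^z)^{-1}\le\log(1+e^{-z})$). Writing $\|\nabla L_S(\bW^k)\|^2=\sum_{j=\pm1}\sum_{r=1}^m\|\nabla_{\bw_{j,r}}L_S(\bW^k)\|^2$ and recalling
$$\nabla_{\bw_{j,r}}L_S(\bW^k)=\frac{2}{nm}\sum_{i=1}^n\ell_i'^k\,jy_i\big(\langle\bw_{j,r}^k,\bmu_{y_i}\rangle\bmu_{y_i}+\langle\bw_{j,r}^k,\bxi_i\rangle\bxi_i\big)=:\frac{2}{nm}\big(A_{j,r}+B_{j,r}\big),$$
I use $\|\nabla_{\bw_{j,r}}L_S\|^2\le\frac{8}{n^2m^2}(\|A_{j,r}\|^2+\|B_{j,r}\|^2)$ and treat $A_{j,r}$ and $B_{j,r}$ in turn.

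For the signal block, I group the sum by label; since $\bmu_1\perp\bmu_{-1}$ and $\|\bmu_{\pm1}\|=\|\bmu\|$, $\|A_{j,r}\|^2=\|\bmu\|^2\sum_{y=\pm1}\langle\bw_{j,r}^k,\bmu_y\rangle^2\big(\sum_{i\in\gS_y}\ell_i'^k\big)^2$. Proposition~\ref{prop:global_super} gives $\langle\bw_{j,r}^k,\bmu_y\rangle^2\le\alpha^2$ for every $y$ (both $y=j$ and $y=-j$), and Cauchy--Schwarz with $(\ell_i'^k)^2\le\ell_i^k$ gives $(\sum_{i\in\gS_y}\ell_i'^k)^2\le|\gS_y|\sum_{i\in\gS_y}\ell_i^k\le n\sum_{i\in\gS_y}\ell_i^k$; summing over $y$ yields $\|A_{j,r}\|^2\le\alpha^2n^2\|\bmu\|^2L_S(\bW^k)$. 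For the noise block $B_{j,r}=\sum_ia_i\bxi_i$ with $a_i:=\ell_i'^ky_i\langle\bw_{j,r}^k,\bxi_i\rangle$, I expand $\|B_{j,r}\|^2=\sum_ia_i^2\|\bxi_i\|^2+\sum_{i\ne i'}a_ia_{i'}\langle\bxi_i,\bxi_{i'}\rangle$. By Lemma~\ref{lemma:init_bound} the diagonal part is $\le2\sigma_\xi^2d\sum_ia_i^2$, and the off-diagonal part is bounded in absolute value by $\max_{i\ne i'}|\langle\bxi_i,\bxi_{i'}\rangle|\cdot\big(\sum_i|a_i|\big)^2\le\widetilde O(\sigma_\xi^2\sqrt d)\cdot n\sum_ia_i^2=o\big(\sigma_\xi^2d\sum_ia_i^2\big)$, using $d=\widetilde\Omega(n^2)$ from Condition~\ref{assump:super}; hence $\|B_{j,r}\|^2\le O(\sigma_\xi^2d)\sum_ia_i^2$. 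Finally $a_i^2=(\ell_i'^k)^2\langle\bw_{j,r}^k,\bxi_i\rangle^2\le\alpha^2\ell_i^k$, again by Proposition~\ref{prop:global_super} (which bounds $\langle\bw_{j,r}^k,\bxi_i\rangle^2\le\alpha^2$ in both the $j=y_i$ and $j=-y_i$ cases) and $(\ell_i'^k)^2\le\ell_i^k$, so $\|B_{j,r}\|^2\le O(\sigma_\xi^2d)\alpha^2nL_S(\bW^k)$.

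Combining the two bounds, $\|\nabla_{\bw_{j,r}}L_S\|^2\le\frac{8\alpha^2}{n^2m^2}\big(n^2\|\bmu\|^2+O(n\sigma_\xi^2d)\big)L_S(\bW^k)$, and summing over the $2m$ pairs $(j,r)$ gives $\|\nabla L_S(\bW^k)\|^2\le O(\alpha^2)\big(\tfrac{\|\bmu\|^2}{m}+\tfrac{\sigma_\xi^2d}{nm}\big)L_S(\bW^k)=O(\max\{\|\bmu\|^2,\sigma_\xi^2d\})L_S(\bW^k)$, where the polylogarithmic factor $\alpha^2=(2\log T^*)^2$ and the factors $1/m,1/(nm)\le1$ are absorbed into $O(\cdot)$. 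The only genuinely delicate point is the off-diagonal estimate in the noise block: although each cross term $\langle\bxi_i,\bxi_{i'}\rangle$ is a factor $\widetilde O(d^{-1/2})$ smaller than a diagonal term, there are $\Theta(n^2)$ of them, so one must invoke $d\gg n^2$ to guarantee they do not dominate; all the remaining steps are routine bookkeeping with the global bounds of Proposition~\ref{prop:global_super}.
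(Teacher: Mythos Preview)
Your argument is correct and self-contained, but it takes a genuinely different route from the paper and produces a slightly weaker bound. You decompose $\nabla_{\bw_{j,r}}L_S$ into signal and noise blocks, cap all inner products uniformly by $\alpha$ via Proposition~\ref{prop:global_super}, and arrive at $\|\nabla L_S(\bW^k)\|^2\le O(\alpha^2)\max\{\|\bmu\|^2,\sigma_\xi^2 d\}\,L_S(\bW^k)$. Since $\alpha=2\log T^*$ is polylogarithmic, this is a $\widetilde O$ bound rather than the clean $O$ stated in the lemma; it still suffices downstream (the estimate is only consumed in Lemmas~\ref{lemma:intermedia_sd} and~\ref{lemma:intermedia_sd_noise}, where the learning-rate condition can absorb the extra factor), but the discrepancy should be flagged.

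The paper avoids the $\alpha^2$ factor by a self-bounding argument rather than the global caps: it first shows $\|\nabla f(\bW^k,\bx_i)\|\le 2\big(\sqrt{F_{y_i}(\bW^k_{y_i},\bx_i)}+1\big)\max\{\|\bmu\|,2\sigma_\xi\sqrt d\}$ (using Lemma~\ref{lemma:f_false_bound} to control $F_{-y_i}\le 0.5$), and then exploits that $\sup_{z>0}\{-\ell'(z-0.5)(\sqrt z+1)^2\}<\infty$ because the exponential tail of $\ell'$ kills the polynomial growth of $(\sqrt z+1)^2$. This ties $|\ell_i'^k|\cdot\|\nabla f\|^2$ directly to a universal constant times $\max\{\|\bmu\|^2,\sigma_\xi^2 d\}$, independent of how large the inner products have grown, and finishes with the same Cauchy--Schwarz / $-\ell'\le\ell$ step you use. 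Your route is more elementary and makes the signal/noise block structure explicit (including the careful off-diagonal control under $d=\widetilde\Omega(n^2)$, which the paper sidesteps entirely); the paper's route is tighter but leans on the specific shape of the logistic loss.
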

\begin{proof}[Proof of Lemma \ref{lemma:grad_upper}]
The proof adopts a similar argument as in \citep[Lemma C.7]{cao2022benign} and we include here for completeness. We first bound 
\begin{align*}
    \| \nabla f(\bW^k, \bx_i) \| &\leq \frac{2}{m} \sum_{j,r} \left\|  \langle \bw_{j,r}^k, \bmu_{y_i}  \rangle \bmu_{y_i} + \langle \bw_{j,r}^k , \bxi_i  \rangle \bxi_i \right\| \\
    &\leq \frac{2}{m} \sum_{r}  | \langle \bw_{y_i,r}^k, \bmu_{y_i}  \rangle | \| \bmu\| + \frac{2}{m} \sum_{r}  | \langle \bw_{y_i,r}^k, \bxi_i   \rangle | \| \bxi_i \|  \\
    &\quad + \frac{2}{m} \sum_{r}  | \langle \bw_{-y_i,r}^k, \bmu_{y_i}  \rangle | \| \bmu\| + \frac{2}{m} \sum_{r}  | \langle \bw_{-y_i,r}^k, \bxi_i   \rangle | \| \bxi_i \| \\
    &\leq  \frac{2}{m} \sum_{r=1}^m \Big(  | \langle \bw_{y_i,r}^k, \bmu_{y_i}  \rangle | + | \langle \bw_{y_i,r}^k, \bxi_i   \rangle | \Big)  \max\{\| \bmu\|, 2\sigma_\xi \sqrt{d} \} \\
    &\quad + \frac{2}{m} \sum_{r=1}^m \Big(  | \langle \bw_{-y_i,r}^k, \bmu_{y_i}  \rangle | + | \langle \bw_{-y_i,r}^k, \bxi_i   \rangle | \Big)  \max\{\| \bmu\|, 2\sigma_\xi \sqrt{d} \} \\
    &\leq 2 \Big(  \sqrt{ F_{y_i}( \bW^k_{y_i}, \bx_i )  } + \sqrt{F_{-y_i}( \bW^k_{-y_i}, \bx_i ) } \Big) \max\{\| \bmu\|, 2\sigma_\xi \sqrt{d} \} \\
    &\leq 2 \Big(  \sqrt{ F_{y_i}( \bW^k_{y_i}, \bx_i )  } +  1 \Big) \max\{\| \bmu\|, 2\sigma_\xi \sqrt{d} \}
\end{align*} 
where the third inequality is by Lemma \ref{lemma:init_bound} and the fourth inequality is by Jensen's inequality and the last inequality is by Lemma \ref{lemma:f_false_bound} that $F_{-y_i}(\bW_{-y_i}^k, \bx_i)$ for all $i \in [n]$. Then we have 
\begin{align*}
    &- \ell'(y_i f(\bW^k, \bx_i )) \| \nabla f(\bW^k, \bx_i)  \|^2 \\
    & \leq - \ell' \big(  F_{y_i}(\bW_{y_i}^k, \bx_i) - 0.5 \big)  \Big(  2 \Big(  \sqrt{ F_{y_i}( \bW^k_{y_i}, \bx_i )  } +  1 \Big) \max\{\| \bmu\|, 2\sigma_\xi \sqrt{d} \} \Big)^2 \\
    &= - 4  \ell' \big(  F_{y_i}(\bW_{y_i}^k, \bx_i) - 0.5 \big) \big( \sqrt{ F_{y_i}( \bW^k_{y_i}, \bx_i )  } +  1 \big)^2 \max\{ \| \bmu \|^2, 4 \sigma_\xi^2 d \} \\
    &\leq  \max_{z > 0} \{- 4 \ell'(z - 0.5 ) (\sqrt{z} + 1)^2 \}  \max\{ \| \bmu \|^2, 4 \sigma_\xi^2 d \} \\
    &= O( \max \{ \| \bmu \|^2, \sigma_\xi^2 d\})
\end{align*}
where the last equality is by $\max_{z > 0} \{- 4 \ell'(z - 0.5 ) (\sqrt{z} + 1)^2 \} <  \infty$ because $\ell'$ has an exponentially decaying tail. Then we can bound
\begin{align*}
    \|  \nabla L_S(\bW^k) \|^2 &\leq \Big( \frac{1}{n} \sum_{i=1}^n \ell'(y_i f(\bW^k, \bx_i)) \| \nabla f(\bW^k, \bx_i)\| \Big)^2  \\
    &\leq \Big( \frac{1}{n}\sum_{i=1}^n \sqrt{- O(\max\{  \| \bmu\|^2, \sigma_\xi^2 d\}) \ell'(y_i f(\bW^k, \bx_i))} \Big)^2 \\
    &\leq O( \max\{ \| \bmu\|^2, \sigma_\xi^2d \} ) \frac{1}{n} \sum_{i=1}^n - \ell'(y_i f(\bW^k, \bx_i)) \\
    &\leq O( \max\{ \| \bmu\|^2, \sigma_\xi^2d \} ) L_S(\bW^k)
\end{align*}
where the third inequality is by Cauchy-Schwartz inequality and the last inequality is by $- \ell' \leq \ell$ for cross-entropy loss. 
\end{proof}

\subsection{Signal learning}

We first analyze the setting, where $n \cdot \SNR^2 \geq C'$ for some constant $C' > 0$, which allows signal learning to dominate noise memorization, thus reaching benign overfitting.

For the purpose of signal learning, we derive an anti-concentration result that provides a lower bound for signal inner product at initialization.

\begin{lemma}
\label{lemma:signal_inner_concen}
Suppose $\delta > 0$ and $m = \Omega(\log(1/\delta))$. Then with probability at least $1-\delta$, we have for all $j, y = \pm1$
\begin{align*}
    \sigma_0 \| \bmu\|/2 \leq  \frac{1}{m} \sum_{r=1}^n |\langle \bw_{j,r}^0 , \bmu_y \rangle| \leq \sigma_0 \| \bmu \|
\end{align*}
\end{lemma}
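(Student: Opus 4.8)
Fix $j,y\in\{\pm1\}$. Since $\bw_{j,r}^0\sim\gN(\bzero,\sigma_0^2\bI)$ independently across $r$, and $\|\bmu_y\|=\|\bmu\|$, each $X_r:=\langle\bw_{j,r}^0,\bmu_y\rangle$ is a univariate Gaussian $\gN(0,\sigma_0^2\|\bmu\|^2)$, and the $X_r$ are i.i.d.\ over $r\in[m]$. Hence $|X_r|$ is half-normal with mean $\sE|X_r|=\sqrt{2/\pi}\,\sigma_0\|\bmu\|$ and sub-Gaussian norm $\Theta(\sigma_0\|\bmu\|)$. The plan is a plain i.i.d.\ concentration argument for the empirical average $\frac1m\sum_{r=1}^m|X_r|$ around its mean $\sqrt{2/\pi}\,\sigma_0\|\bmu\|$, combined with the elementary numerical fact that $\tfrac12<\sqrt{2/\pi}<1$ with a constant gap on each side.

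First I would invoke Hoeffding's inequality for sums of i.i.d.\ sub-Gaussian random variables to get, for any $t>0$,
\begin{equation*}
    \sP\!\left( \Big| \frac1m\sum_{r=1}^m |X_r| - \sqrt{2/\pi}\,\sigma_0\|\bmu\| \Big| \ge t \right) \le 2\exp\!\left( - \frac{c\,m\,t^2}{\sigma_0^2\|\bmu\|^2} \right)
\end{equation*}
for an absolute constant $c>0$. I then choose $t=\epsilon_0\sigma_0\|\bmu\|$ with $\epsilon_0:=\min\{\sqrt{2/\pi}-\tfrac12,\ 1-\sqrt{2/\pi}\}=1-\sqrt{2/\pi}>\tfrac15$, so that on the complement of the above event we have simultaneously $\frac1m\sum_r|X_r|\ge(\sqrt{2/\pi}-\epsilon_0)\sigma_0\|\bmu\|\ge\sigma_0\|\bmu\|/2$ and $\frac1m\sum_r|X_r|\le(\sqrt{2/\pi}+\epsilon_0)\sigma_0\|\bmu\|\le\sigma_0\|\bmu\|$. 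With this choice the failure probability for a fixed pair $(j,y)$ is at most $2\exp(-c\epsilon_0^2 m)$.

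Finally I would take a union bound over the four pairs $(j,y)\in\{\pm1\}^2$, obtaining total failure probability $8\exp(-c\epsilon_0^2 m)$, which is at most $\delta$ as soon as $m\ge C\log(1/\delta)$ for a sufficiently large absolute constant $C$ — precisely the hypothesis $m=\Omega(\log(1/\delta))$. This yields the two-sided bound $\sigma_0\|\bmu\|/2\le\frac1m\sum_{r=1}^m|\langle\bw_{j,r}^0,\bmu_y\rangle|\le\sigma_0\|\bmu\|$ for all $j,y=\pm1$ simultaneously with probability at least $1-\delta$. The only thing to be careful about is the constant bookkeeping: checking that $\sqrt{2/\pi}$ leaves room $\Omega(1)$ both below (down to $1/2$) and above (up to $1$), so that a fixed constant-fraction deviation is admissible; beyond that the argument is entirely routine, and no analogue of the cross-term difficulties present in the noise-direction analysis arises here because the signal inner products are exact Gaussians.
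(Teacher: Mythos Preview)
Your proposal is correct and follows essentially the same approach as the paper: both identify $\langle\bw_{j,r}^0,\bmu_y\rangle$ as i.i.d.\ $\gN(0,\sigma_0^2\|\bmu\|^2)$, compute the half-normal mean $\sqrt{2/\pi}\,\sigma_0\|\bmu\|$, apply a sub-Gaussian concentration bound to the empirical average, and union-bound over the four $(j,y)$ pairs. Your constant bookkeeping with $\epsilon_0=1-\sqrt{2/\pi}$ is slightly more explicit than the paper's, which simply asserts the deviation is within a factor $0.99$--$1.01$ of the mean once $m=\Omega(\log(1/\delta))$, but the argument is the same.
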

\begin{proof}[Proof of Lemma \ref{lemma:noise_inner_concen}]
First notice that for any $j = \pm 1$, $\langle \bw_{j,r}^0, \bmu_y \rangle \sim \gN(0, \sigma_0^2 \| \bmu \|^2)$ and thus we have $\sE [ | \langle \bw_{j,r}^0, \bmu_y \rangle |] = \sigma_0 \| \bmu\| \sqrt{2/\pi}$. By sub-Gaussian tail bound, with probability at least $1 - \delta/8$, for any $j, y = \pm1$
\begin{align*}
    \left| \frac{1}{m}\sum_{r=1}^m |\langle \bw_{j,r}^0, \bmu_y \rangle| -  \sigma_0 \| \bmu \| \sqrt{2/\pi} \right| \leq \sqrt{\frac{2 \sigma_0^2 \| \bmu \|^2 \log(8/\delta)}{m}}
\end{align*}
Choosing $m = \Omega(\log(1/\delta))$, we have 
\begin{align*}
    \sigma_0 \| \bmu \|\sqrt{2/\pi}  0.99 \leq \frac{1}{m} \sum_{r=1}^n |\langle \bw_{j,r}^0 , \bmu_y \rangle| \leq \sigma_0 \|\bmu \| \sqrt{2/\pi} 1.01.
\end{align*}
Then we have $\sigma_0 \| \bmu\|/2\leq \frac{1}{m} \sum_{r=1}^n |\langle \bw_{j,r}^0 , \bxi_i \rangle| \leq \sigma_0 \|\bmu \|$. Finally taking the union bound for all $j,y = \pm1$ completes the proof.
\end{proof}

% \begin{lemma}
% \label{lemma:init_lower_bound_signal}
% For any $\delta > 0$, with probability at least $1 - \delta$,  $|\langle \bw_{j,r}^0, \bmu_y \rangle| \geq  \sigma_0\|\bmu \| /(2m)$ for all $j = \pm1,r \in [r]$.
% \end{lemma}
% \begin{proof}[Proof of Lemma \ref{lemma:init_lower_bound_signal}]
% For any $j = \pm1$, $\langle \bw_{j,r}^0, \bmu_j \rangle \sim \gN(0, \sigma_0^2 \| \bmu\|^2)$. Then applying Lemma \ref{lemma:gaussian-anti-concentration} by setting RHS to $0.01/2m$ gives 
% \begin{align*}
%     c = \sqrt{\frac{\pi}{2} \log(1/ (1 - \delta^2/4m^2))} \sigma_0\| \bmu \| \geq \sigma_0 \| \bmu\|/{(2m)}
% \end{align*}
% where the last inequality is by $m = \widetilde \Omega(1)$ sufficiently large. Finally, taking a union bound over all neurons gives the desired result. 
% \end{proof}

We have established several preliminary lemmas that hold with high probability, including Lemma \ref{lemma:bound_S}, Lemma \ref{lemma:init_bound}, Lemma \ref{lemma:init_inner_lower_upper}, Lemma \ref{lemma:signal_inner_concen}. We let $\gE_{\rm prelim}$ be the event such that all the results in these lemmas hold for a given $\delta$. Then by applying union bound, we have $\sP(\gE_{\rm prelim}) \geq 1 - 4 \delta$. The subsequent analysis are conditioned on the event $\gE_{\rm prelim}$.

\subsubsection{First stage}

In the first stage where $\max_{r, i, y} \{ \langle \bw_{j,r}^{k}, \bmu_y\rangle, \langle \bw_{j,r}^k , \bxi_i \rangle \} = O(1)$, we show in Lemma \ref{lemma:ell_lower} that we can lower bound the loss derivatives by a constant $C_\ell$, i.e., $|\ell_i'^k| \geq C_\ell$, for all $i \in [n], k \leq T_1$.

\begin{theorem}
\label{them:super_signal_first}
Under Condition \ref{assump:super}, suppose $n \cdot \SNR^2 \geq C'$ for some $C' \geq 0$. Then there exists a time {$T_1 = \widetilde \Theta( \eta^{-1} m \| \bmu \|^{-2} )$}, such that (1) $\max_r |\langle \bw_{j,r}^{T_1}, \bmu_j \rangle| \geq 2$, for all $j = \pm 1$, (2) $\frac{1}{m} \sum_{r=1}^m |\langle \bw_{j,r}^{T_1}, \bmu_j \rangle| \geq 2$, for all $j = \pm 1$  (3) $\max_{r,i}|\langle \bw_{y_i, r}^{T_1}, \bxi_i \rangle| = \widetilde O(n^{-1/2})$. 
\end{theorem}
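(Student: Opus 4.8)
The plan is to stage a race, during an initial phase in which every inner product stays $O(1)$---so that Lemma~\ref{lemma:ell_lower} supplies $|\ell_i'^k|\in[C_\ell,1]$ for all $i$---between the signal inner products $\langle\bw_{j,r}^k,\bmu_j\rangle$ and the noise inner products $\langle\bw_{y_i,r}^k,\bxi_i\rangle$, and to show that when $n\cdot\SNR^2\ge C'$ the former wins decisively. I would condition on $\gE_{\rm prelim}$ and run a bootstrap: assume that up to some iteration the signal inner products are still below $2$ and the noise inner products are $o(1)$; derive the clean growth recursions below; define $T_1$ as the first iteration at which the signal reaches the target level; and then verify a posteriori that the noise stayed $o(1)$ throughout $[0,T_1]$, so the appeal to Lemma~\ref{lemma:ell_lower} was legitimate.

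For the signal, \eqref{eq:signal_dyn} with $j=y$ is an \emph{exact} multiplicative recursion $|\langle\bw_{j,r}^{k+1},\bmu_j\rangle|=\bigl(1+\tfrac{\eta\|\bmu\|^2}{nm}\sum_{i\in\gS_j}|\ell_i'^k|\bigr)|\langle\bw_{j,r}^k,\bmu_j\rangle|$, with a growth factor $1+\Theta(\eta\|\bmu\|^2/m)$ (using $|\gS_j|=\Theta(n)$ from Lemma~\ref{lemma:bound_S}) that is identical across neurons $r$. Hence $\max_r|\langle\bw_{j,r}^k,\bmu_j\rangle|$ and $\tfrac1m\sum_r|\langle\bw_{j,r}^k,\bmu_j\rangle|$ grow in lockstep; feeding in the initialization lower bounds $\max_r|\langle\bw_{j,r}^0,\bmu_j\rangle|\ge\sigma_0\|\bmu\|/2$ (Lemma~\ref{lemma:init_inner_lower_upper}) and $\tfrac1m\sum_r|\langle\bw_{j,r}^0,\bmu_j\rangle|\ge\sigma_0\|\bmu\|/2$ (Lemma~\ref{lemma:signal_inner_concen}), both exceed $2$ after $T_1=\widetilde\Theta(\eta^{-1}m\|\bmu\|^{-2})$ steps, since the product of the growth factors must reach $\Theta(1/(\sigma_0\|\bmu\|))$ and $\log(1/(\sigma_0\|\bmu\|))=\widetilde\Theta(1)$. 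The upper bound on $\eta$ in Condition~\ref{assump:super} makes each step inflate the signal by at most a constant factor, so it does not overshoot the $O(1)$ window before $T_1$. This delivers conclusions~(1) and~(2).

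For the noise I would use \eqref{eq:noise_dyn} with $j=y_i$: the diagonal term contributes a factor $1+\tfrac{\eta}{nm}|\ell_i'^k|\|\bxi_i\|^2=1+\Theta(\eta\sigma_\xi^2 d/(nm))$, whereas the cross term $\tfrac{\eta}{nm}\sum_{i'\ne i}|\ell_{i'}'^k|\,|\langle\bw_{j,r}^k,\bxi_{i'}\rangle|\,|\langle\bxi_i,\bxi_{i'}\rangle|$ is lower order, since $|\langle\bxi_i,\bxi_{i'}\rangle|=\widetilde O(\sigma_\xi^2\sqrt d)\ll\|\bxi_i\|^2$ when $d=\widetilde\Omega(n^2)$ (Lemma~\ref{lemma:init_bound}), and Proposition~\ref{prop:global_super} keeps the mismatched coordinates $\langle\bw_{-y_{i'},r}^k,\bxi_{i'}\rangle$ tiny so the $i'$-sum is controlled. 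Thus $\max_{r,i}|\langle\bw_{y_i,r}^k,\bxi_i\rangle|$ grows at per-step rate $1+\Theta(\eta\sigma_\xi^2 d/(nm))$. Dividing this by the signal rate gives exactly $\Theta(\sigma_\xi^2 d/(n\|\bmu\|^2))=\Theta(1/(n\cdot\SNR^2))$, so summing over $k\le T_1$ and using $\max_{r,i}|\langle\bw_{y_i,r}^0,\bxi_i\rangle|=\widetilde O(\sigma_0\sigma_\xi\sqrt d)$,
\[
\max_{r,i}|\langle\bw_{y_i,r}^{T_1},\bxi_i\rangle|\;\le\;\widetilde O\!\Bigl(\sigma_0\sigma_\xi\sqrt d\cdot(\sigma_0\|\bmu\|)^{-\Theta(1/(n\cdot\SNR^2))}\Bigr).
\]
Choosing $C'$ a sufficiently large absolute constant (so $1/(n\cdot\SNR^2)\le 1/C'$ is small) and invoking the bounds on $\sigma_0$, $d$ and $\SNR$ in Condition~\ref{assump:super}, the right-hand side is $\widetilde O(n^{-1/2})=o(1)$, which is conclusion~(3) and simultaneously closes the bootstrap.

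I expect the delicate step to be this last quantitative one: the noise is amplified exponentially over the $\widetilde\Theta(\eta^{-1}m\|\bmu\|^{-2})$ iterations of the first stage, and pinning down that the amplification factor is only $(\sigma_0\|\bmu\|)^{-\Theta(1/(n\cdot\SNR^2))}$---hence harmless once $n\cdot\SNR^2$ exceeds a large constant---requires tracking the signal-to-noise growth-rate ratio \emph{as an exact} $\Theta(n\cdot\SNR^2)$, not merely up to unspecified factors, and then reconciling it with all the parameter constraints. Secondary care is needed for the sign bookkeeping of the noise coordinates (so the $i'$-sum in \eqref{eq:noise_dyn} cannot conspire to cancel the diagonal growth), and for keeping the three coupled claims---``signal hits $2$'', ``noise stays $o(1)$'', ``$|\ell_i'^k|\ge C_\ell$''---non-circular, which is exactly what the bootstrap ordering above is designed to handle.
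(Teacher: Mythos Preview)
Your proposal is correct and follows essentially the same approach as the paper: both run the multiplicative race between the signal recursion $(1+\Theta(\eta\|\bmu\|^2/m))$ and the noise recursion $(1+\Theta(\eta\sigma_\xi^2 d/(nm)))$ during the phase where $|\ell_i'^k|\in[C_\ell,1]$, define $T_1$ by when the signal reaches $2$, and then bound the noise at $T_1$ by raising the signal's total amplification $\Theta(1/(\sigma_0\|\bmu\|))$ to the exponent ratio $\Theta(1/(n\cdot\SNR^2))$, yielding $\widetilde O(\SNR^{-1})=\widetilde O(n^{-1/2})$ once $C'$ is taken large enough. Your invocation of Proposition~\ref{prop:global_super} for the mismatched terms $\langle\bw_{-y_{i'},r}^k,\bxi_{i'}\rangle$ in the cross sum is in fact slightly more careful than the paper, which folds them into $B^{k-1}$ without comment; your worry about ``sign bookkeeping'' is misplaced here (signs matter only for the noise \emph{lower} bound in the noise-memorization theorem), but that does not affect correctness.
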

\begin{proof}[Proof of Theorem \ref{them:super_signal_first}]
We first upper bound the growth of noise by analyzing inner product dynamics
\begin{align*}
    \langle \bw_{y_i,r}^{k} , \bxi_i \rangle
    &= \langle \bw_{y_i,r}^{k-1} , \bxi_i \rangle  - \frac{\eta}{nm} \sum_{i'=1}^n \ell_{i'}'^{k-1} \langle \bw_{j,r}^{k-1}, \bxi_{i'} \rangle \langle  \bxi_{i'}, \bxi_i \rangle \\
    &= \langle \bw_{y_i,r}^{k-1} , \bxi_i \rangle  - \frac{\eta}{nm} \ell_i'^k \langle \bw_{y_i,r}^{k-1}, \bxi_i  \rangle \| \bxi_i \|^2 - \frac{\eta}{nm} \sum_{i'\neq i} \ell_{i'}'^{k-1} \langle \bw_{y_i,r}^{k-1}, \bxi_{i'} \rangle \langle  \bxi_{i'}, \bxi_i \rangle 
\end{align*}
This suggests 
\begin{align}
    |\langle \bw_{y_i,r}^{k} , \bxi_i \rangle| \leq | \langle \bw_{y_i,r}^{k-1} , \bxi_i \rangle| + \frac{\eta}{nm} |\ell_i'^k| |\langle \bw_{y_i,r}^{k-1}, \bxi_i  \rangle| \| \bxi_i  \|^2 + \frac{\eta}{nm} \sum_{i' \neq i} |\ell_{i'}'^k| |\langle \bw_{y_i,r}^{k-1}, \bxi_{i'} \rangle| |\langle  \bxi_{i'}, \bxi_i \rangle | \label{irjtroom}
\end{align}
Next, from Lemma \ref{lemma:ell_lower} and Lemma \ref{lemma:init_bound}, we have for any $i' \neq i \in [n]$ and $k \leq T_1$, 
\begin{align*}
    \frac{|\ell_{i'}'^k| \cdot |\langle \bxi_i, \bxi_{i'}\rangle|}{|\ell_i'^k| \cdot \| \bxi_i\|^2} \leq \frac{2 \sigma_\xi^2 \sqrt{d \log(4n^2/\delta) }}{C_\ell 0.99 \sigma_\xi^2 d} = 2.1C_\ell^{-1}\sqrt{\frac{\log(4n^2/\delta)}{d}} 
\end{align*}
where we use the lower and upper bound on loss derivatives during the first stage, as well as Lemma \ref{lemma:init_bound}. Then taking the maximum of \eqref{irjtroom} over the neurons and samples, we let $B^k \coloneqq \max_{r,i} |\langle \bw_{y_i,r}^{k} , \bxi_i \rangle|$ and obtain
\begin{align*}
    B^k &\leq B^{k-1} +  \frac{\eta}{nm}  \Big( 1 + 2.1 C_\ell^{-1 }n \sqrt{\frac{\log(4n^2/\delta)}{d}}  \Big) |\ell_i'^k| \| \bxi_i\|^2 B^{k-1} \\
    &\leq \Big(1 + \frac{1.01\eta \|\bxi_i \|^2}{nm} \Big) B^{k-1} \\
    &\leq \Big( 1 +  \frac{1.02 \eta \sigma_\xi^2 d}{nm} \Big)^k B^0
\end{align*}
where the second inequality is by {$d = \widetilde \Omega(n^2)$} sufficiently large and $|\ell_i'^k| \leq 1$. The third inequality is by Lemma \ref{lemma:init_bound}.

% We first upper bound the growth of noise by analyzing the coefficient update
% \begin{align*}
%     |\rho_{y_i,r,i}^{k}| &\leq | \rho_{y_i,r,i}^{k-1} | + \left| \frac{\eta}{nm}  \ell_i'^{k-1} \langle \bw_{y_i,r}^{k-1} , \bxi_i \rangle   \| \bxi_i \|^2  \right| \\
%     &\leq |\rho_{y_i,r,i}^{k-1}| + 1.01 \frac{\eta \sigma_\xi^2 d }{nm} |   \langle \bw_{y_i,r}^{k-1} , \bxi_i \rangle  |  \| \bxi_i \|^2 \\
%     &\leq |\rho_{y_i,r,i}^{k-1}| + 1.01 \frac{\eta \sigma_\xi^2 d }{nm}   \Big(  |\rho_{y_i,r,i}^{k-1}| + |\langle \bw_{y_i,r}^{0} , \bxi_i \rangle| + 4  \sqrt{\frac{\log(4n^2/\delta)}{d} } n \alpha  \Big)  
% \end{align*}
% where the second inequality is by $|\ell_i'^{k-1}|\leq 1$ and by  Lemma \ref{lemma:init_bound} with $d \geq \widetilde \Omega(1)$ sufficiently large. The third inequality is by Lemma \ref{lemma:inner_coef_noise}.  Let $B_{r,i}^k \coloneqq |\rho_{y_i,r,i}^{k}| +| \langle \bw_{y_i,r}^{0} , \bxi_i \rangle | + 4  \sqrt{\frac{\log(4n^2/\delta)}{d} } n \alpha $. Then we have for any $r \in [m], i \in [n]$, 
% \begin{align*}
%     B_{r,i}^k \leq \Big( 1 + 1.01 \frac{\eta\sigma_\xi^2 d}{nm}\Big) B_{r,i}^{k-1} \leq \Big( 1 + 1.01 \frac{\eta\sigma_\xi^2 d}{nm}\Big)^k B_{r,i}^{0}
% \end{align*}

We then consider the propagation of $\langle \bw_{j,r}^{k} , \bmu_y \rangle$. From the gradient update we can show for $j = y$, 
\begin{align*}
    \langle \bw_{j,r}^{k+1}, \bmu_j \rangle &= \langle \bw_{j,r}^{k},  \bmu_j \rangle - \frac{\eta}{nm} \sum_{i \in \gS_j} \ell_i'^k \langle \bw_{j,r}^k,  \bmu_j \rangle \| \bmu \|^2  \\
    &\geq \langle \bw_{j,r}^{k},  \bmu_j \rangle + \frac{\eta C_\ell |\gS_1| \| \bmu \|^2}{nm} \langle \bw_{j,r}^{k},  \bmu_j \rangle \\
    &\geq \big( 1 + 0.49 \frac{\eta C_\ell \| \bmu \|^2}{m }  \big)  \langle \bw_{j,r}^{k},  \bmu_j \rangle 
\end{align*}
where the first inequality is by loss derivative lower bound and the the second inequality is by Lemma \ref{lemma:bound_S} and $n =  \widetilde \Omega(1)$ sufficiently large. This implies that 
\begin{align*}
    | \langle \bw_{j,r}^{k}, \bmu_j \rangle | \geq \big( 1 + 0.49 \frac{\eta C_\ell \| \bmu \|^2}{m }  \big) | \langle \bw_{j,r}^{k-1}, \bmu_j \rangle | \geq \big( 1 + 0.49 \frac{\eta C_\ell \| \bmu \|^2}{m }  \big)^k | \langle \bw_{j,r}^{0}, \bmu_j \rangle |
\end{align*}
Applying Lemma \ref{lemma:init_inner_lower_upper} and Lemma \ref{lemma:signal_inner_concen}, we have for all $j = \pm1$,  
\begin{align*}
     &\max_r |\langle \bw_{j,r}^k, \bmu_j \rangle| \geq \Big( 1  + 0.49 \frac{\eta C_\ell \| \bmu \|^2}{m} \Big)^k \sigma_0 \|\bmu \|/2 \\
     &\frac{1}{m} \sum_{r=1}^m |\langle \bw_{j,r}^k, \bmu_j \rangle| \geq \Big( 1  + 0.49 \frac{\eta C_\ell \| \bmu \|^2}{m} \Big)^k \sigma_0 \|\bmu \|/2
\end{align*}
Consider
\begin{align*}
    T_1 = \log(4 m \sigma_0^{-1} \| \bmu \|^{-1}) /\log \big(1 + 0.49 \frac{\eta C_\ell \| \bmu\|^2}{m} \big) = \Theta( \eta^{-1} m \| \bmu\|^{-2}  \log(4 m \sigma_0^{-1} \| \bmu \|^{-1})  )
\end{align*}
for $\eta$ sufficiently small.
Then we can verify that for $j = \pm 1$, we have
\begin{equation*}
    \max_r |\langle \bw_{j,r}^{T_1}, \bmu_j \rangle| \geq 2, \quad  \frac{1}{m} \sum_{r=1}^m |\langle \bw_{j,r}^{T_1}, \bmu_j \rangle| \geq 2,
\end{equation*}

Now under the SNR condition, we can bound the growth of noise as 
\begin{align*}
    B^{T_1} &\leq \Big( 1 + 1.01 \frac{\eta \sigma_\xi^2 d}{nm}  \Big)^{T_1} 2\sigma_0 \sigma_\xi \sqrt{d} \sqrt{\log(8mn/\delta)} \\
    &= \exp\left(  \frac{\log( 1 + 1.02 \frac{\eta \sigma_\xi^2 d}{nm})}{\log(1 + 0.49 \frac{\eta C_\ell \| \bmu\|^2}{m})} \log( 4 \sigma_0^{-1}  \| \bmu\|^{-1} ) \right) 2\sigma_0 \sigma_\xi \sqrt{d} \sqrt{\log(8mn/\delta)}\\
    &\leq \exp \left(  \big( 2.1/C_\ell n^{-1} \SNR^{-2}  + \widetilde O( n \SNR^2 \eta ) \big)   \log(4  \sigma_0^{-1} \| \bmu \|^{-1}) \right) 2\sigma_0 \sigma_\xi \sqrt{d} \sqrt{\log(8mn/\delta)}\\
    &\leq \exp \left(  \big( 2.1/C_\ell n^{-1} \SNR^{-2}  + 0.01 \big)   \log(4 \sigma_0^{-1} \| \bmu \|^{-1}) \right) 2\sigma_0 \sigma_\xi \sqrt{d} \sqrt{\log(8mn/\delta)} \\
    &\leq 8  \SNR^{-1} \sqrt{\log(8mn/\delta)} \\
    &= \widetilde O(n^{-1/2})
\end{align*}
where the first inequality is by Lemma \ref{lemma:init_inner_lower_upper} and the second inequality is by Taylor expansion around $\eta = 0$. The third inequality is by choosing $\eta$ sufficiently small and the fourth inequality is by the SNR condition that $n \cdot \SNR^2 \geq C' \geq  2.5 C_\ell^{-1}$. 
% On the other hand, for learning $\bmu_{-j}$, 
% \begin{align*}
%     \langle \bw_{j,r}^{k+1} , \bmu_{-j} \rangle &=  \langle \bw_{j,r}^k , \bmu_{-j} \rangle + \frac{\eta}{nm} \sum_{i \in \gS_{-j}} \ell_i'^k \langle \bw_{j, r}^k  , \bmu_{-1} \rangle \| \bmu \|^2 \\
%     &\leq \langle \bw_{j,r}^k , \bmu_{-j} \rangle - \frac{\eta C_\ell |\gS_{-j}| \| \bmu \|^2}{nm} \langle \bw_{j,r}^k, \bmu_{-j} \rangle \\
%     &\leq \big(  1 - 0.49 \frac{\eta C_\ell \| \bmu \|^2}{m} \big) \langle \bw_{j,r}^k, \bmu_{-j} \rangle
% \end{align*}
% where we use the loss derivative lower bound and Lemma \ref{lemma:bound_S}. This suggests, $\bw_{j,r}^k$ learns $\bmu_j$ exponentially fast while does not learn $\bmu_{-j}$. 
\end{proof}

\subsubsection{Second stage}

First, at the end of first stage, we have 
\begin{itemize}
    \item $\max_r |\langle \bw_{j,r}^{T_1} , \bmu_j \rangle| \geq 2$ for all $j = \pm 1$. 

    \item $ \frac{1}{m} \sum_{r=1}^m |\langle \bw_{j,r}^{T_1} , \bmu_j \rangle| \geq 2$ for all $j = \pm 1$. 

    \item $\max_{r,i} |\langle \bw^{T_1}_{y_i, r}, \bxi_i \rangle| = \widetilde O( n^{-1/2} )$

    \item $\max_{r,i} |\langle \bw^{T_1}_{-y_i, r}, \bxi_i \rangle| \leq \beta + 12 \sqrt{\frac{\log(4n^2/\delta)}{d}} n \alpha$. 
\end{itemize}

 Next we define
\begin{align*}
    \bw^*_{j,r} = \bw^0_{j,r} + 2  \log(4/\epsilon) \sign( \langle  \bw^0_{j,r}, \bmu_j \rangle) \frac{ \bmu_j + \bmu_{-j}}{\| \bmu\|^2}
\end{align*}

We first show the monotonicity of signal inner product in the second stage.

\begin{lemma}
\label{lemma:monotonic_signal}
Under the same conditions as in Theorem \ref{them:super_signal_first}, we have for all $j = \pm1, r \in [m]$, $T_1\leq k \leq T$, $|\langle \bw_{j,r}^{k}, \bmu_j\rangle| \geq |\langle \bw_{j,r}^{T_1}, \bmu_j\rangle| \geq 2$. 
\end{lemma}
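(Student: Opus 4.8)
The plan is to split the statement into a monotonicity property, which holds for every neuron, and the numerical bound $\ge 2$, which is inherited from Theorem~\ref{them:super_signal_first}. First I would isolate the relevant one-dimensional dynamics. For $j = \pm 1$ and the matched direction, equation~\eqref{eq:signal_dyn} (specialized to $y = j$, so that $jy = 1$) gives
\[
  \langle \bw_{j,r}^{k+1}, \bmu_j \rangle
  = \Bigl( 1 + \tfrac{\eta \| \bmu \|^2}{nm} \sum_{i \in \gS_j} |\ell_i'^k| \Bigr)\, \langle \bw_{j,r}^{k}, \bmu_j \rangle ,
\]
where I used $\ell_i'^k < 0$ to replace $-\ell_i'^k$ by $|\ell_i'^k|$. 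Writing $c_{j,k} \coloneqq 1 + \tfrac{\eta \| \bmu \|^2}{nm} \sum_{i \in \gS_j} |\ell_i'^k|$, the factor satisfies $c_{j,k} \ge 1 > 0$ unconditionally and is \emph{independent of $r$}; hence $|\langle \bw_{j,r}^{k+1}, \bmu_j \rangle| = c_{j,k}\,|\langle \bw_{j,r}^{k}, \bmu_j \rangle| \ge |\langle \bw_{j,r}^{k}, \bmu_j \rangle|$ for every $r \in [m]$, $j = \pm 1$ and every iteration $k \ge 0$ (Lemma~\ref{lemma:inner_sign} provides the matching sign-invariance and serves as a consistency check). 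Telescoping this from $T_1$ to any $k$ with $T_1 \le k \le T$ yields $|\langle \bw_{j,r}^{k}, \bmu_j \rangle| \ge |\langle \bw_{j,r}^{T_1}, \bmu_j \rangle|$ for all $r$ and $j$, which is the genuinely per-neuron half of the claim.

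Next I would supply the base case $k = T_1$. Here I would invoke Theorem~\ref{them:super_signal_first}: part~(1) gives $\max_r |\langle \bw_{j,r}^{T_1}, \bmu_j \rangle| \ge 2$ and part~(2) gives $\tfrac1m \sum_r |\langle \bw_{j,r}^{T_1}, \bmu_j \rangle| \ge 2$, for each $j = \pm 1$. Combined with the monotonicity just established, either of these forms propagates to all $T_1 \le k \le T$, e.g.\ $\max_r |\langle \bw_{j,r}^{k}, \bmu_j \rangle| \ge 2$ and $\tfrac1m \sum_r |\langle \bw_{j,r}^{k}, \bmu_j \rangle| \ge 2$; these are exactly the forms used downstream (the $\max_r$ version is what appears in Theorem~\ref{thm:main_class}).

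The point of care --- and really the only subtle one --- is how to read the ``$\ge 2$'' at the end of the displayed chain: Theorem~\ref{them:super_signal_first} and the initialization estimates (Lemmas~\ref{lemma:init_inner_lower_upper} and~\ref{lemma:signal_inner_concen}) only certify a bound of order $2$ for the \emph{maximal} and the \emph{averaged} signal inner product at $T_1$, not for an arbitrary fixed $r$, since a given neuron may start with $|\langle \bw_{j,r}^0, \bmu_j \rangle|$ much smaller than $\sigma_0 \| \bmu \|$ and therefore remain well below $2$ at $T_1$. Accordingly I would present the final inequality with the maximizing index (equivalently, it is Theorem~\ref{them:super_signal_first}(1) carried forward by the per-neuron monotonicity of the first half). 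Apart from this reading there is no real obstacle: the only technical input beyond bookkeeping is $c_{j,k} \ge 1$ together with sign invariance, and sign invariance in turn uses only the step-size bound $\eta \le C^{-1} m \| \bmu \|^{-2}$ already guaranteed by Condition~\ref{assump:super}.
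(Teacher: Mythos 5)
Your proof is the paper's own argument: expand the scalar update $\langle \bw_{j,r}^{k+1},\bmu_j\rangle = \bigl(1 - \tfrac{\eta\|\bmu\|^2}{nm}\sum_{i\in\gS_j}\ell_i'^k\bigr)\langle\bw_{j,r}^k,\bmu_j\rangle$, observe the factor is $\ge 1$ since $\ell_i'^k<0$, telescope from $T_1$, and invoke the base case from Theorem~\ref{them:super_signal_first}. Your observation about the base case is also correct: Theorem~\ref{them:super_signal_first} certifies $\ge 2$ only for $\max_r$ and for the average $\tfrac1m\sum_r$, not uniformly over $r$, so the lemma's displayed chain $|\langle \bw_{j,r}^{T_1},\bmu_j\rangle|\ge 2$ ``for all $r$'' is a slight overstatement of what is supplied; the monotonicity half holds per neuron, but the numerical bound should be read in the $\max_r$ or averaged form, which is exactly how the lemma is later consumed (the averaged form in bounding $A_5$ in Lemma~\ref{lemma:intermedia_sd}, the $\max_r$ form in Theorem~\ref{them:super_signal_second}).
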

\begin{proof}[Proof of Lemma \ref{lemma:monotonic_signal}]
From the update of signal inner product, we have for all $j = \pm1, r \in [m]$, $T_1\leq k \leq T$
\begin{align*}
     \langle \bw_{j,r}^{k+1}, \bmu_j \rangle &= \langle \bw_{j,r}^{k},  \bmu_j \rangle - \frac{\eta}{nm} \sum_{i \in \gS_j} \ell_i'^k \langle \bw_{j,r}^k,  \bmu_j \rangle \| \bmu \|^2  \\
     &= \big( 1 - \frac{\eta \| \bmu\|^2}{nm}  \sum_{i \in \gS_j} \ell_i'^k\big) \langle \bw_{j,r}^{k},  \bmu_j \rangle.
\end{align*}
Thus $|\langle \bw_{j,r}^{k}, \bmu_j \rangle| \geq |\langle \bw_{j,r}^{k-1},  \bmu_j \rangle| \geq |\langle \bw_{j,r}^{T_1},  \bmu_j \rangle| \geq 2$ for all $j = \pm1,r \in [m], T_1 \leq k \leq T$. 
\end{proof}

We then bound the distance between $\bW^{T_1}$ to $\bW^*$. 

\begin{lemma}
\label{lemma:wstar_bound_signal}
Under Condition \ref{assump:super}, we can bound $\| \bW^{T_1} - \bW^* \| =  O(\sqrt{m} \log(1/\epsilon) \| \bmu \|^{-1})$.
\end{lemma}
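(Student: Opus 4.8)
The plan is to split via the triangle inequality, $\| \bW^{T_1} - \bW^* \| \le \| \bW^{T_1} - \bW^0 \| + \| \bW^0 - \bW^* \|$, and bound the two pieces. The second piece is explicit: from the definition of $\bW^*$, each neuron satisfies $\bw^*_{j,r} - \bw^0_{j,r} = 2\log(4/\epsilon)\,\sign(\langle \bw^0_{j,r},\bmu_j\rangle)\,(\bmu_j+\bmu_{-j})/\|\bmu\|^2$, and since $\bmu_1\perp\bmu_{-1}$ with $\|\bmu_1\|=\|\bmu_{-1}\|=\|\bmu\|$ we have $\|\bmu_j+\bmu_{-j}\|=\sqrt2\,\|\bmu\|$, so this perturbation has norm $2\sqrt2\,\log(4/\epsilon)/\|\bmu\|$. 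Summing the squared norms over the $2m$ neurons gives $\|\bW^0-\bW^*\| = 4\sqrt m\,\log(4/\epsilon)/\|\bmu\| = O(\sqrt m\,\log(1/\epsilon)\,\|\bmu\|^{-1})$, which is already of the claimed order; it therefore suffices to prove $\|\bW^{T_1}-\bW^0\| = \widetilde O(\sqrt m\,\|\bmu\|^{-1})$.

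For this I would invoke the signal/noise decomposition of Lemma \ref{lemma:coef_update}, $\bw_{j,r}^{T_1}-\bw_{j,r}^0 = \zeta_{j,r,1}^{T_1}\bmu_1 + \zeta_{j,r,-1}^{T_1}\bmu_{-1} + \sum_{i}\rho_{j,r,i}^{T_1}\|\bxi_i\|^{-2}\bxi_i$, together with $\bmu_y\perp\bxi_i$ and the near-orthogonality $|\langle\bxi_i,\bxi_{i'}\rangle| = \widetilde O(d^{-1/2})\|\bxi_i\|^2$ (Lemma \ref{lemma:init_bound}), to obtain $\|\bw_{j,r}^{T_1}-\bw_{j,r}^0\|^2 = (1+o(1))\big(\sum_y(\zeta_{j,r,y}^{T_1})^2\|\bmu\|^2 + \sum_i(\rho_{j,r,i}^{T_1})^2\|\bxi_i\|^{-2}\big)$. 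For the signal coefficients, $|\zeta_{j,r,y}^{T_1}|\|\bmu\|^2 = |\langle\bw_{j,r}^{T_1}-\bw_{j,r}^0,\bmu_y\rangle| \le |\langle\bw_{j,r}^{T_1},\bmu_y\rangle| + |\langle\bw_{j,r}^0,\bmu_y\rangle|$, which is at most $\beta$ for $y=-j$ by \eqref{gamma_rho_false_bound}, at most $\alpha = \widetilde O(1)$ for $y=j$ by \eqref{gamma_rho_true_bound}, plus $\widetilde O(\sigma_0\|\bmu\|) = \widetilde O(1)$ from Lemma \ref{lemma:init_inner_lower_upper}, so the signal part contributes $\widetilde O(\|\bmu\|^{-2})$. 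For the noise coefficients I would use Theorem \ref{them:super_signal_first} (which controls $\max_{r,i}|\langle\bw_{y_i,r}^{T_1},\bxi_i\rangle|$) together with \eqref{gamma_rho_true_bound}--\eqref{rho_further_false_bound} and Lemma \ref{lemma:inner_coef_noise} to bound each $|\rho_{j,r,i}^{T_1}|$, and Lemma \ref{lemma:init_bound} to replace $\|\bxi_i\|^{-2}$ by $\Theta(\sigma_\xi^{-2}d^{-1})$; the lower bound on $d$ and the assumption $n\cdot\SNR^2\ge C'$ in Condition \ref{assump:super} then force $\sum_i(\rho_{j,r,i}^{T_1})^2\|\bxi_i\|^{-2} = \widetilde O(\|\bmu\|^{-2})$ as well, i.e. no larger than the signal part. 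Summing $\|\bw_{j,r}^{T_1}-\bw_{j,r}^0\|^2 = \widetilde O(\|\bmu\|^{-2})$ over the $2m$ neurons gives $\|\bW^{T_1}-\bW^0\| = \widetilde O(\sqrt m\,\|\bmu\|^{-1})$, and adding the two bounds completes the proof.

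The step I expect to be most delicate is the noise part: because the $\rho_{j,r,i}^{T_1}$ accumulate over all $n$ samples (unlike the at most two signal coefficients), bounding $\sum_i(\rho_{j,r,i}^{T_1})^2\|\bxi_i\|^{-2}$ requires the sharp first-stage control on the noise inner products from Theorem \ref{them:super_signal_first} — the coarse $O(\alpha)$ bound of Proposition \ref{prop:global_super} is not tight enough — together with a careful accounting against the dimension and $\SNR$ conditions of Condition \ref{assump:super}; everything else reduces to orthogonality and reuse of inner-product bounds already established.
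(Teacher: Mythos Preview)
Your proposal is correct and follows essentially the same route as the paper: both use the triangle-inequality split $\|\bW^{T_1}-\bW^*\| \le \|\bW^{T_1}-\bW^0\| + \|\bW^0-\bW^*\|$, bound the second piece directly from the definition of $\bW^*$, and bound the first by decomposing $\bw_{j,r}^{T_1}-\bw_{j,r}^0$ along the signal and noise directions and invoking the scale separation at $T_1$ from Theorem~\ref{them:super_signal_first} together with the global bounds of Proposition~\ref{prop:global_super}. The only cosmetic difference is that the paper writes the decomposition via projection operators $\bP_{\bmu_y},\bP_{\bxi_i}$ acting on $\bw_{j,r}^{T_1}-\bw_{j,r}^0$ and works with the inner products $\langle\bw_{j,r}^{T_1}-\bw_{j,r}^0,\bxi_i\rangle$ directly, whereas you go through the coefficients $\zeta,\rho$ of Lemma~\ref{lemma:coef_update}; your variant is arguably a bit cleaner because the near-orthogonality of the $\bxi_i$'s is absorbed once via Lemma~\ref{lemma:inner_coef_noise} rather than left implicit in the projection bookkeeping. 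Your identification of the noise sum $\sum_i(\rho_{j,r,i}^{T_1})^2\|\bxi_i\|^{-2}$ as the delicate step---requiring the sharp $\widetilde O(n^{-1/2})$ control from Theorem~\ref{them:super_signal_first} and the hypothesis $n\cdot\SNR^2\ge C'$ rather than the coarse $O(\alpha)$ bound---matches exactly how the paper uses ``the scale difference at $T_1$'' to conclude $\|\bW^{T_1}-\bW^0\|^2 = O(m\|\bmu\|^{-2})$.
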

\begin{proof}[Proof of Lemma \ref{lemma:wstar_bound_signal}]
% Recall we have decomposed the weights in terms of the signal and noise components in Lemma \ref{lemma:coef_update} that 
% \begin{align*}
%      \bw_{j,r}^{T_1} = \bw_{j,r}^{0} + \zeta_1^{T_1} \bmu_1 + \zeta_{-1}^k  \bmu_{-1}  + \sum_{i=1}^n \rho^{k}_{j,r,i}  \| \bxi_i \|^{-2}  \bxi_i 
% \end{align*}
Let $\bP_{\bxi}$ be the projection matrix to the direction of $\bxi$, i.e., $\bP_{\bxi} = \frac{\bxi \bxi^\top}{\| \bxi\|^2}$. Then we can represent 
\begin{align*}
    \bw^k_{j,r} - \bw^0_{j,r} &= \bP_{\bmu_1} (\bw^k_{j,r} - \bw^0_{j,r}) + \bP_{\bmu_{-1}} (\bw^k_{j,r} - \bw^0_{j,r}) + \sum_{i=1}^n \bP_{\bxi_i} (\bw^k_{j,r} - \bw^0_{j,r}) \\
    &\quad +  \Big( \bI - \bP_{\bmu_1} - \bP_{\bmu_{-1}}  - \sum_{i=1}^n \bP_{\bxi_i}   \Big) (\bw^k_{j,r} - \bw^0_{j,r}).
\end{align*}

By the scale difference at $T_1$ and the fact that gradient descent only updates in the direction of $\bmu_j$, $j = \pm 1$ and $\bxi_i$,  we can bound 
\begin{align*}
    &\| \bW^{T_1} - \bW^0 \|^2 \\
    &\leq \sum_{j=\pm 1, r \in [m]} \Big(  \frac{ \langle \bw^{T_1}_{j,r} - \bw^0_{j,r} ,\bmu_1 \rangle^2}{\| \bmu\|^2}  +  \frac{ \langle \bw^{T_1}_{j,r} - \bw^0_{j,r} ,\bmu_{-1} \rangle^2}{\| \bmu\|^2}  + \sum_{i=1}^n \frac{ \langle \bw^{T_1}_{j,r} - \bw^0_{j,r}, \bxi_i \rangle^2}{\| \bxi_i \|^2}  \Big) \\
    &\quad + \sum_{j=\pm 1, r \in [m]} \left\| \Big( \bI - \bP_{\bmu_1} - \bP_{\bmu_{-1}}  - \sum_{i=1}^n \bP_{\bxi_i}   \Big) (\bw^{T_1}_{j,r} - \bw^0_{j,r})\right\|^2 \\
    &\leq 2m \Big(\frac{2  \max_{r} \langle \bw_{j,r}^{T_1}, \bmu_j \rangle^2}{\| \bmu \|^2} + \frac{2 \langle \bw_{j,r}^{T_1}, \bmu_{-j}\rangle^2 + 2 \langle \bw_{j,r}^{0}, \bmu_{-j} \rangle^2 + 2 \langle \bw_{j,r}^0 , \bmu_{j} \rangle^2}{\| \bmu\|^2} \\
    &\quad + \sum_{i=1}^n \frac{2 \langle \bw_{j,r}^{{T_1}},  \bxi_{i}  \rangle^2 + 2 \langle \bw^0_{j,r}, \bxi_i \rangle^2}{\| \bxi_i \|^2}\Big) + \sum_{j=\pm 1, r \in [m]} \left\| \Big( \bI - \bP_{\bmu_1} - \bP_{\bmu_{-1}}  - \sum_{i=1}^n \bP_{\bxi_i}   \Big) (\bw^{T_1}_{j,r} - \bw^0_{j,r})\right\|^2  \\
    &\leq  O(m  \| \bmu\|^{-2})
\end{align*}
where we  have use the scale difference at $T_1$.  Therefore, 
\begin{align*}
    \| \bW^{T_1} - \bW^* \| &\leq \| \bW^{T_1} - \bW^0 \| + \| \bW^0 - \bW^*  \| \\
    &\leq  O(\sqrt{m} \| \bmu\|^{-1} ) +  O(\sqrt{m} \log(1/\epsilon) \| \bmu \|^{-1}) \\
    &\leq  O( \sqrt{m} \log(1/\epsilon) \|\bmu \|^{-1})
\end{align*}
where we use the definition of $\bW^*$. 
\end{proof}

\begin{lemma}
\label{lemma:intermedia_sd}
Under Condition \ref{assump:super}, we have for all $T_1 \leq k \leq T^*$
\begin{align*}
    \| \bW^{k} - \bW^* \|^2 - \| \bW^{k+1} - \bW^* \|^2 \geq 2  \eta L_S(\bW^t) - \eta \epsilon
\end{align*}
\end{lemma}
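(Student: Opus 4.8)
I would run the standard one‑step descent argument of Cao et al.~/ Kou et al., with the one place that needs a genuinely new idea being the comparison between the gradient and the comparator $\bW^*$, which here must be done \emph{linearly} because the activation is quadratic. Start from the gradient‑descent update $\bW^{k+1}=\bW^k-\eta\nabla L_S(\bW^k)$ and expand
$\|\bW^k-\bW^*\|^2-\|\bW^{k+1}-\bW^*\|^2 = 2\eta\langle\nabla L_S(\bW^k),\bW^k-\bW^*\rangle-\eta^2\|\nabla L_S(\bW^k)\|^2$.
Thus it suffices to prove the key inequality $\langle\nabla L_S(\bW^k),\bW^k-\bW^*\rangle\ge \tfrac32 L_S(\bW^k)-\epsilon/2$ (any constant strictly larger than $1$ in place of $\tfrac32$ would do after rescaling $\epsilon$), and then to absorb $\eta^2\|\nabla L_S(\bW^k)\|^2$ into $\eta L_S(\bW^k)$ using Lemma~\ref{lemma:grad_upper} together with the learning‑rate bound of Condition~\ref{assump:super}.

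For the key inequality, write, with $\ell_i^k:=\ell(y_if(\bW^k,\bx_i))$ and using $\ell_i'^k<0$,
$\langle\nabla L_S(\bW^k),\bW^k-\bW^*\rangle=\tfrac1n\sum_{i}(-\ell_i'^k)\big[y_i\langle\nabla f(\bW^k,\bx_i),\bW^*\rangle-y_i\langle\nabla f(\bW^k,\bx_i),\bW^k\rangle\big]$.
By Euler's identity applied to each degree‑two block of $f$ (quadratic activation), $y_i\langle\nabla f(\bW^k,\bx_i),\bW^k\rangle=2y_if(\bW^k,\bx_i)=:2z_i$. The heart of the proof is the comparator estimate $y_i\langle\nabla f(\bW^k,\bx_i),\bW^*\rangle\ge 6\log(4/\epsilon)$: since $\bw^*_{j,r}-\bw^0_{j,r}$ lies in $\mathrm{span}\{\bmu_1,\bmu_{-1}\}$ and $\bxi_i\perp\bmu_{\pm1}$, only the $\bmu_{y_i}$‑components survive; using $\langle\bmu_{y_i},\bmu_1+\bmu_{-1}\rangle=\|\bmu\|^2$, the sign‑invariance Lemma~\ref{lemma:inner_sign} (so that $\sign(\langle\bw^0_{y_i,r},\bmu_{y_i}\rangle)\langle\bw^k_{y_i,r},\bmu_{y_i}\rangle=|\langle\bw^k_{y_i,r},\bmu_{y_i}\rangle|$), the second‑stage lower bound $\tfrac1m\sum_r|\langle\bw^k_{y_i,r},\bmu_{y_i}\rangle|\ge 2$ from Lemma~\ref{lemma:monotonic_signal} and Theorem~\ref{them:super_signal_first}, and the bound $|\langle\bw^k_{-y_i,r},\bmu_{y_i}\rangle|\le\beta$ from \eqref{gamma_rho_false_bound}, the contribution of $\bW^*-\bW^0$ is at least $8\log(4/\epsilon)$ minus a $4\log(4/\epsilon)\beta$ cross term, while $|\langle\nabla f(\bW^k,\bx_i),\bW^0\rangle|=\widetilde O(\sigma_0\max\{\|\bmu\|,\sigma_\xi\sqrt d\})$ is negligible under Condition~\ref{assump:super}.

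Combining this with the logistic‑loss facts $-\ell'(z)\le\ell(z)$ and $\ell(z),-\ell'(z)\le e^{-z}$ for $z\ge0$, and with the margin bound $z_i=y_if(\bW^k,\bx_i)=F_{y_i}(\bW_{y_i}^k,\bx_i)-F_{-y_i}(\bW_{-y_i}^k,\bx_i)\ge -\tfrac12$ from Lemma~\ref{lemma:f_false_bound} (which yields $\ell_i^k\le 2(-\ell_i'^k)$), I would split the samples according to whether $z_i\le\tfrac14 y_i\langle\nabla f(\bW^k,\bx_i),\bW^*\rangle$. For such ``small'' samples, $y_i\langle\nabla f(\bW^k,\bx_i),\bW^*\rangle-2z_i\ge 3\log(4/\epsilon)$, hence $(-\ell_i'^k)[y_i\langle\nabla f(\bW^k,\bx_i),\bW^*\rangle-2z_i]\ge 3\log(4/\epsilon)(-\ell_i'^k)\ge\tfrac32\ell_i^k$; for the remaining ``large'' samples, $z_i>\tfrac32\log(4/\epsilon)$, so $\ell_i^k$ and $-\ell_i'^k$ are of order $e^{-z_i}\le(\epsilon/4)^{3/2}$, which, multiplied by the crude polynomial bound $|y_i\langle\nabla f(\bW^k,\bx_i),\bW^*\rangle-2z_i|=\widetilde O(\alpha^2)$ obtained from the $\alpha$‑bounds of Proposition~\ref{prop:global_super} (recall $\alpha=2\log T^*$ is polylogarithmic once $T^*$ is coupled to $\epsilon$), contributes only $o(\epsilon)$, and these samples likewise contribute $o(\epsilon)$ to $L_S(\bW^k)$. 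Summing over $i$, dividing by $n$, and folding the $o(\epsilon)$ terms into $\epsilon/2$ gives $\langle\nabla L_S(\bW^k),\bW^k-\bW^*\rangle\ge\tfrac32 L_S(\bW^k)-\epsilon/2$, and plugging back into the expansion together with $\eta^2\|\nabla L_S(\bW^k)\|^2\le\eta L_S(\bW^k)$ yields $\ge 2\eta L_S(\bW^k)-\eta\epsilon$.

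\textbf{Main obstacle.} The case split in the comparator step is exactly where the quadratic activation forces extra work relative to \citep{cao2022benign,kou2023benign}: $y_i\langle\nabla f(\bW^k,\bx_i),\bW^*\rangle$ scales only \emph{linearly} in the signal alignments $|\langle\bw^k_{y_i,r},\bmu_{y_i}\rangle|$ while $z_i=y_if(\bW^k,\bx_i)$ scales \emph{quadratically}, so once signal learning is well underway the per‑sample quantity $y_i\langle\nabla f(\bW^k,\bx_i),\bW^*\rangle-2z_i$ can turn negative and the naive one‑sided convexity bound $\langle\nabla f(\bW^k,\bx_i),\bW^k-\bW^*\rangle\ge f(\bW^k,\bx_i)-f(\bW^*,\bx_i)$ is unusable; the fix is to recognize that precisely those samples are already essentially correctly classified, with exponentially small $\ell_i^k$ and $\ell_i'^k$, so that after coupling $T^*$ to $\epsilon$ (so $\alpha=\widetilde O(\log(1/\epsilon))$ is dominated by $e^{\Omega(\log(1/\epsilon))}$) their aggregate contribution is $o(\epsilon)$. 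A secondary, routine point is the absorption of $\eta^2\|\nabla L_S(\bW^k)\|^2$ into $\eta L_S(\bW^k)$, which follows from Lemma~\ref{lemma:grad_upper} and the bound on $\eta$ in Condition~\ref{assump:super}.
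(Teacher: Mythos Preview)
Your overall structure is correct and matches the paper's: expand the one‑step identity, use Euler's identity $y_i\langle\nabla f(\bW^k,\bx_i),\bW^k\rangle=2y_if(\bW^k,\bx_i)$ from 2‑homogeneity, establish the comparator lower bound $y_i\langle\nabla f(\bW^k,\bx_i),\bW^*\rangle\ge C\log(4/\epsilon)$ via sign invariance and Lemma~\ref{lemma:monotonic_signal}, and absorb the squared gradient via Lemma~\ref{lemma:grad_upper}. Your case split also works as written.

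The difference is in how you pass from the per‑sample quantity $(-\ell_i'^k)\big(y_i\langle\nabla f,\bW^*\rangle-2z_i\big)$ to a multiple of $\ell_i^k$. You split into ``small'' and ``large'' $z_i$ and argue the large samples are already exponentially negligible. The paper instead uses the \emph{convexity of the logistic loss} $\ell$ directly: once one has $y_i\langle\nabla f,\bW^*\rangle\ge 2\log(4/\epsilon)$, then with $z_i=y_if(\bW^k,\bx_i)$ and $c=\log(4/\epsilon)$,
\[
\ell_i'^k\big(2z_i-y_i\langle\nabla f,\bW^*\rangle\big)\ \ge\ 2\,\ell'(z_i)(z_i-c)\ \ge\ 2\big(\ell(z_i)-\ell(c)\big)\ \ge\ 2\ell_i^k-\tfrac{\epsilon}{2},
\]
using only the first‑order convexity inequality $\ell'(z)(z-c)\ge\ell(z)-\ell(c)$ and $\ell(\log(4/\epsilon))\le\epsilon/4$. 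This inequality holds for \emph{all} $z_i$, including the ``large'' ones where $2z_i>y_i\langle\nabla f,\bW^*\rangle$; no case split, no coupling of $T^*$ to $\epsilon$, and no $\widetilde O(\alpha^2)$ crude bound are needed. In other words, the phenomenon you flag as the ``main obstacle''---that the per‑sample quantity can turn negative---is precisely what the convexity of $\ell$ absorbs for free. Your route gets there, but the paper's is a one‑line replacement for your entire case analysis.
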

\begin{proof}[Proof of Lemma \ref{lemma:intermedia_sd}]
The proof is similar as in \cite{cao2022benign}. We first show a lower bound on $y_i \langle \nabla f(\bW^t, \bx_i), \bW^* \rangle$ for any $i \in [n]$ for all $T_1 \leq k \leq T^*$. 
\begin{align*}
    y_i \langle \nabla f(\bW^k, \bx_i), \bW^* \rangle &= \frac{1}{m} \sum_{j,r} j y_i \langle \bw_{j,r}^k, \bmu_{y_i} \rangle \langle \bmu_{y_i},  \bw_{j,r}^* \rangle + \frac{1}{m} \sum_{j,r} j y_i \langle \bw_{j,r}^k , \bxi_i\rangle \langle \bxi_i , \bw_{j,r}^* \rangle \\
    &= \frac{1}{m} \sum_{r=1}^m  \langle \bw_{y_i,r}^k, \bmu_{y_i} \rangle \langle \bw_{y_i,r}^*, \bmu_{y_i} \rangle - \frac{1}{m} \sum_{r=1}^m \langle \bw_{-y_i,r}^k , \bmu_{y_i} \rangle \langle  \bw_{-y_i,r}^* , \bmu_{y_i} \rangle    \\
    &\quad + \frac{1}{m} \sum_{j,r} j y_i \langle \bw_{j,r}^k , \bxi_i\rangle \langle \bxi_i , \bw_{j,r}^0 \rangle\\
    &= \underbrace{\frac{1}{m} \sum_{r=1}^m  | \langle \bw_{y_i,r}^k, \bmu_{y_i} \rangle | 2 \log(4/\epsilon)}_{A_5}  + \underbrace{\frac{1}{m} \sum_{r=1}^m \langle \bw_{y_i,r}^k, \bmu_{y_i} \rangle  \langle \bw^0_{y_i,r}, \bmu_{y_i} \rangle}_{A_6} \\
    &\quad - \underbrace{\frac{1}{m} \sum_{r=1}^m \langle \bw_{-y_i,r}^k , \bmu_{y_i} \rangle \langle  \bw_{-y_i,r}^* , \bmu_{y_i} \rangle}_{A_7}  + \underbrace{\frac{1}{m} \sum_{j,r} j y_i \langle \bw_{j,r}^k , \bxi_i\rangle \langle \bxi_i , \bw_{j,r}^0 \rangle}_{A_8} 
\end{align*}
where the second equality is by definition of $\bW^*$. The third equality is by Lemma \ref{lemma:inner_sign}. We next bound 
\begin{align*}
    &|A_6| \leq \sigma_0 \|\bmu \| \sqrt{2 \log(8m/\delta)} \alpha = \widetilde O(\sigma_0 \| \bmu \|)\\ 
    &|A_7| \leq  \frac{1}{m}\sum_{r=1}^m |  \bw_{-y_i,r}^k , \bmu_{y_i} | \big( | \langle \bw_{-y_i,r}^0 , \bmu_{y_i}\rangle | + 2 \log(2/\epsilon)  \big) = \widetilde O(\sigma_0 \| \bmu\|) \\
    &|A_8| \leq \widetilde O(\sigma_0 \sigma_\xi \sqrt{d})
\end{align*}
where we use the global bound on the inner product by $\widetilde O(1)$. Next, by Theorem \ref{them:super_signal_first} and Lemma \ref{lemma:monotonic_signal}, we can show $ \frac{1}{m} \sum_{r=1}^m |\langle \bw^{k}_{y_i, r} , \bmu_{y_i} \rangle| \geq 2$ for all $i \in [n]$ and we can lower bound $A_5 \geq 4 \log(4/\epsilon)$ and thus
\begin{align}
    y_i \langle \nabla f(\bW^k, \bx_i) , \bW^*\rangle \geq 4 \log(4/\epsilon) - 2 \log(4/\epsilon)  =2 \log(4/\epsilon)  \label{iirtormg}
\end{align}
where we bound $|A_6| + |A_7 | + |A_8| \leq 2 \log(4/\epsilon)$ under Condition \ref{assump:super}. 

Further, we derive 
\begin{align*}
    &\|  \bW^k - \bW^*\|^2 - \| \bW^{k+1} - \bW^*\|^2 \\
    &= 2 \eta \langle \nabla L_S(\bW^k), \bW^k - \bW^* \rangle - \eta^2 \| \nabla L_S(\bW^k) \|^2 \\
    &= \frac{2\eta}{n} \sum_{i=1}^n \ell_i'^k y_i \big(2 f(\bW^k, \bx_i) - \langle \nabla f(\bW^k, \bx_i) , \bW^*\rangle \big) - \eta^2 \| \nabla L_S(\bW^k) \|^2 \\
    &\geq \frac{2\eta}{n} \sum_{i=1}^n \ell_i'^k  \big(2y_i f(\bW^k, \bx_i) - 2 \log(2/\epsilon) \big) - \eta^2 \| \nabla L_S(\bW^k) \|^2 \\
    &\geq \frac{4 \eta}{n} \sum_{i=1}^n \big( \ell( y_i f(\bW^k, \bx_i) )  - \epsilon/4 \big) - \eta^2 \| \nabla L_S(\bW^k) \|^2 \\
    &\geq 2 \eta L_S(\bW^k) - \eta \epsilon
\end{align*}
where the first inequality is by \eqref{iirtormg} and the second inequality is by convexity of cross-entropy function and the last inequality is by Lemma \ref{lemma:grad_upper}.
\end{proof}

Before proving the second stage convergence, we require the following lemma in order to bound the ratio of loss derivatives among different samples. 

\begin{lemma}[\cite{kou2023benign}]
\label{lemma:loss_derivative_ratio}
Let $g(z) = \ell'(z) = - (1 + \exp(z))^{-1}$. Then for any $z_2 - c \geq z_1 \geq -1$ where $c \geq 0$, we have $g(z_1)/g(z_2) \leq \exp(c)$. 
\end{lemma}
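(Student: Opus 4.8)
The plan is to prove the bound by a short direct computation from the closed form of $g$. Writing $g(z)=\ell'(z)=-(1+\exp(z))^{-1}$, we have $|g(z)|=(1+\exp(z))^{-1}$, which is strictly positive and strictly decreasing in $z$; since the hypothesis puts $z_1$ and $z_2$ within distance $c$ with $z_1$ the smaller (so $z_1\le z_2\le z_1+c$), the quantity
\begin{equation*}
\frac{g(z_1)}{g(z_2)}=\frac{|g(z_1)|}{|g(z_2)|}=\frac{1+\exp(z_2)}{1+\exp(z_1)}
\end{equation*}
is automatically at least $1$, so an upper bound of the form $\exp(c)$ is the natural target, and it suffices to show $1+\exp(z_2)\le \exp(c)\,(1+\exp(z_1))$.

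The key step is to turn the gap bound into a multiplicative one: $z_2\le z_1+c$ gives $\exp(z_2)\le \exp(c)\exp(z_1)$, and therefore
\begin{equation*}
\frac{1+\exp(z_2)}{1+\exp(z_1)}\;\le\;\frac{1+\exp(c)\exp(z_1)}{1+\exp(z_1)}\;=\;\exp(c)\cdot\frac{\exp(-c)+\exp(z_1)}{1+\exp(z_1)}\;\le\;\exp(c),
\end{equation*}
where the last inequality is just $\exp(-c)\le 1$, that is $c\ge 0$. The hypothesis $z_1\ge -1$ plays no role in this chain of inequalities; it merely records that the arguments lie in the regime relevant to the classification analysis (for instance it keeps $-\ell'(z_1)=(1+\exp(z_1))^{-1}$ bounded away from $1$), so I would simply carry it along unused.

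I do not expect any real obstacle: the argument is two lines once the closed form is substituted. The only things needing attention are (i) correctly ordering $z_1$ and $z_2$ so that the substitution $\exp(z_2)\le \exp(c)\exp(z_1)$ is legitimate, and (ii) observing that the two ``$+1$'' terms help rather than hurt precisely because $c\ge 0$; no induction, probabilistic bound, or iteration-tracking is involved here.
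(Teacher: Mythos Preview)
The paper does not supply its own proof of this lemma; it simply cites \cite{kou2023benign}. Your two-line computation is the standard argument and is correct.

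One point worth flagging explicitly: the hypothesis as printed reads $z_2 - c \ge z_1$, which literally says $z_2 \ge z_1 + c$. Under that reading the lemma is false (take $z_1=0$, $c=1$, $z_2=100$: then $g(z_1)/g(z_2)=(1+e^{100})/2\gg e$). What you actually use in your chain---and what the paper's application to the ratio $|\ell'_{i'}|/|\ell'_i|$ requires---is $z_2\le z_1+c$, i.e.\ $z_2-c\le z_1$; the printed $\ge$ is evidently a typo for $\le$. You have silently corrected this in your paraphrase ``within distance $c$,'' but it would be cleaner to state the correction explicitly rather than let a reader think you misread the inequality. Your remark that the condition $z_1\ge -1$ is not used in the bound itself is also correct.
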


\begin{theorem}
\label{them:super_signal_second}
Under the same settings as in Theorem \ref{them:super_signal_first},
 let $T = T_1 +  \lfloor \frac{\| \bW^{T_1} - \bW^* \|^2}{\eta \epsilon}  \rfloor = T_1 +  O(  \eta^{-1} \epsilon^{-1} m \| \bmu \|^{-2}  )$. Then we have 
\begin{itemize}
    \item there exists $T_1 \leq k \leq T$ such that $L_S(\bW^{k}) \leq 0.1$.
    \item $\max_{j,r,i} |\langle \bw_{j,r}^{k}, \bxi_i  \rangle| = o(1)$ for all $T_1 \leq k \leq T$.
    \item $\max_r |\langle \bw_{j,r}^{k}, \bmu_j \rangle | \geq 2$ for all $j = \pm1, T_1 \leq k \leq T$. 
\end{itemize}

\end{theorem}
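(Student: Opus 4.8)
The plan is to establish the three assertions in turn, leaning on the second-stage machinery already assembled. Assertion~(3) is immediate: Theorem~\ref{them:super_signal_first} gives $\max_r|\langle\bw_{j,r}^{T_1},\bmu_j\rangle|\ge 2$ for each $j=\pm1$, and Lemma~\ref{lemma:monotonic_signal} shows $|\langle\bw_{j,r}^{k},\bmu_j\rangle|$ is nondecreasing for $k\ge T_1$, so $\max_r|\langle\bw_{j,r}^{k},\bmu_j\rangle|\ge 2$ for every $T_1\le k\le T$. For assertion~(1) I would telescope Lemma~\ref{lemma:intermedia_sd}: summing $\|\bW^{k}-\bW^*\|^2-\|\bW^{k+1}-\bW^*\|^2\ge 2\eta L_S(\bW^k)-\eta\epsilon$ over $k=T_1,\dots,T$ gives $2\eta\sum_{k=T_1}^{T}L_S(\bW^k)\le\|\bW^{T_1}-\bW^*\|^2+(T-T_1+1)\eta\epsilon$, so the window-average of $L_S(\bW^k)$ is at most $\tfrac{\|\bW^{T_1}-\bW^*\|^2}{2\eta(T-T_1+1)}+\tfrac{\epsilon}{2}\le\epsilon$ by the choice $T-T_1=\lfloor\|\bW^{T_1}-\bW^*\|^2/(\eta\epsilon)\rfloor$; taking the free parameter $\epsilon\le 0.1$ yields some $T_1\le k\le T$ with $L_S(\bW^k)\le 0.1$. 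The same computation gives, crucially for~(2), $\sum_{k=T_1}^{T}L_S(\bW^k)=O(\eta^{-1}\|\bW^{T_1}-\bW^*\|^2)=\widetilde O(\eta^{-1}m\|\bmu\|^{-2})$ via Lemma~\ref{lemma:wstar_bound_signal}.

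Assertion~(2) is the substantive part. For $j\ne y_i$, the bound $|\langle\bw_{j,r}^{k},\bxi_i\rangle|\le\beta+12\sqrt{\log(4n^2/\delta)/d}\,n\alpha=o(1)$ is already supplied by Proposition~\ref{prop:global_super} ($\beta$ small, $d=\widetilde\Omega(n^4)$). For $j=y_i$ I would track $\rho_{y_i,r,i}^{k}$ via Lemma~\ref{lemma:coef_update}: $|\rho_{y_i,r,i}^{k+1}|\le|\rho_{y_i,r,i}^{k}|+\tfrac{\eta}{nm}|\ell_i'^{k}|\,|\langle\bw_{y_i,r}^{k},\bxi_i\rangle|\,\|\bxi_i\|^2$, and since $|\langle\bw_{y_i,r}^{k},\bxi_i\rangle|\le|\rho_{y_i,r,i}^{k}|+C_2$ with $C_2=\widetilde O(\sigma_0\sigma_\xi\sqrt d+n/\sqrt d)$ the small error term of Lemma~\ref{lemma:inner_coef_noise} (initialization plus cross-noise inner products), one obtains $|\rho_{y_i,r,i}^{k+1}|+C_2\le(1+\tfrac{\eta}{nm}|\ell_i'^{k}|\,\|\bxi_i\|^2)(|\rho_{y_i,r,i}^{k}|+C_2)$, hence
\[
|\rho_{y_i,r,i}^{k}|+C_2\ \le\ \big(|\rho_{y_i,r,i}^{T_1}|+C_2\big)\,\exp\!\Big(\tfrac{1.01\,\eta\,\sigma_\xi^2 d}{nm}\textstyle\sum_{s=T_1}^{k-1}|\ell_i'^{s}|\Big),\qquad T_1\le k\le T.
\]
The whole argument then hinges on bounding $\sum_{s}|\ell_i'^{s}|$ by $O\big(\sum_{s}L_S(\bW^s)\big)$ rather than by the naive $n\sum_s L_S(\bW^s)$. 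This is where the loss-derivative ratio (Lemma~\ref{lemma:loss_derivative_ratio}) enters: as long as $\max_{r,i}|\langle\bw_{y_i,r}^{s},\bxi_i\rangle|$ stays below a small constant, the outputs $f(\bW^s,\bx_i)=F_{y_i}(\bW_{y_i}^s,\bx_i)-F_{-y_i}(\bW_{-y_i}^s,\bx_i)$ for samples $i$ with a common label differ only through the (small) noise part of $F_{y_i}$ and through $F_{-y_i}\le 0.5$ (Lemma~\ref{lemma:f_false_bound}), so all $y_if(\bW^s,\bx_i)$, $i\in\gS_y$, lie in an $O(1)$-width window with $y_if(\bW^s,\bx_i)\ge -0.5$; Lemma~\ref{lemma:loss_derivative_ratio} then gives $|\ell_i'^{s}|/|\ell_{i'}'^{s}|\le C_1=O(1)$ whenever $y_i=y_{i'}$, whence $\max_{i\in\gS_y}|\ell_i'^{s}|\le C_1|\gS_y|^{-1}\sum_{i\in\gS_y}|\ell_i'^{s}|\le C_1\,n|\gS_y|^{-1}L_S(\bW^s)=O(L_S(\bW^s))$ using $-\ell'\le\ell$ and $|\gS_y|=\Theta(n)$ (Lemma~\ref{lemma:bound_S}).

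Combining this with the display and assertion~(1), $\tfrac{\eta\sigma_\xi^2 d}{nm}\sum_{s=T_1}^{T}|\ell_i'^{s}|=O\big(\tfrac{\sigma_\xi^2 d}{n\|\bmu\|^2}\big)=O\big(1/(n\cdot\SNR^2)\big)\le O(1/C')$, so the exponential factor is a constant arbitrarily close to $1$ once the SNR constant $C'$ is taken large; together with $|\langle\bw_{y_i,r}^{T_1},\bxi_i\rangle|=\widetilde O(n^{-1/2})$ (Theorem~\ref{them:super_signal_first}), hence $|\rho_{y_i,r,i}^{T_1}|=\widetilde O(n^{-1/2})$ by Lemma~\ref{lemma:inner_coef_noise}, this gives $|\langle\bw_{y_i,r}^{k},\bxi_i\rangle|=\widetilde O(n^{-1/2})=o(1)$ for all $T_1\le k\le T$, which together with the $j\ne y_i$ case proves assertion~(2). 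Since the smallness of the noise is simultaneously an input to, and the output of, the ratio argument, this step must be run as an induction on $k$: assume $\max_{r,i}|\langle\bw_{y_i,r}^{s},\bxi_i\rangle|$ stays below the small constant for $T_1\le s\le k$, derive $C_1=O(1)$ and hence the bound at $k+1$, with Proposition~\ref{prop:global_super} and Theorem~\ref{them:super_signal_first} seeding the base case $k=T_1$. This circular dependence — and the fact that the crude global bound $\widetilde O(1)$ on the noise from Proposition~\ref{prop:global_super} is too weak to feed the ratio argument (it would make the window width, hence $C_1$, superpolynomial) — is the main obstacle; once it is broken, the telescoping for the loss and the monotonicity for the signal are routine.
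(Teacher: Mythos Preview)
Your proposal is correct and follows essentially the same approach as the paper: telescoping Lemma~\ref{lemma:intermedia_sd} for convergence and the sum-of-losses bound (via Lemma~\ref{lemma:wstar_bound_signal}), invoking Lemma~\ref{lemma:monotonic_signal} for the signal lower bound, and controlling $|\rho_{y_i,r,i}^k|$ by induction using the loss-derivative ratio (Lemma~\ref{lemma:loss_derivative_ratio}) to turn $\sum_s|\ell_i'^s|$ into $O(\sum_s L_S(\bW^s))$ and then the SNR condition $n\cdot\SNR^2\ge C'$ to make the accumulated growth $O(1/C')$. The only cosmetic difference is that you organize the noise growth multiplicatively (yielding an $\exp(O(1/C'))$ factor) whereas the paper keeps it additive (showing $\phi^{\widetilde T}\le\vartheta+O(n^{-1}\SNR^{-2})\vartheta\le 2\vartheta$ directly); both forms close the same induction.
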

\begin{proof}[Proof of Theorem \ref{them:super_signal_second}]
By Lemma \ref{lemma:intermedia_sd}, for any $T_1 \leq k \leq T$, we have
\begin{align*}
    \| \bW^k - \bW^*\|^2 - \| \bW^{k+1} - \bW^* \|^2 \geq 2 \eta L_S(\bW^k) - \eta \epsilon 
\end{align*}
for all $s\leq k$. Then summing over the inequality gives
\begin{align*}
    \frac{1}{T-T_{1} + 1} \sum_{k = T_1}^T L_S( \bW^k) \leq \frac{\| \bW^{T_1} - \bW^*\|^2}{ 2 \eta(T - T_1 +1)} + \frac{\epsilon}{2} \leq \epsilon
\end{align*}
where the last inequality is by the choice $T = T_1 + \lfloor \frac{\| \bW^{T_1} - \bW^* \|^2}{\eta \epsilon}  \rfloor = T_1 +  \Omega(  \eta^{-1} \epsilon^{-1} m \log(1/\epsilon) \| \bmu \|^{-2}  )$. Then we can claim that there exists a $k \in [T_1, T]$ such that $L_S(\bW^k) \leq  \epsilon$. Setting $\epsilon = 0.1$ shows the desired convergence.

Next, we show the upper bound on $\max_{j,r,i} |\langle \bw_{j, r}^k, \bxi_i \rangle|$ for all $k \in [T_1, T]$. Notice that by Proposition \ref{prop:global_super}, we already have $\max_{j,r} |\langle \bw_{-y_i, r}^k, \bxi_i \rangle| \leq \vartheta$, where we let  $\vartheta \coloneqq 3\max\{ \max_{r,i}|\langle \bw_{y_i,r}^{T_1}, \bxi_i \rangle|, \beta, 4 \sqrt{\frac{\log(4n^2/\delta)}{d}} n \alpha  \}$. Then we only focus on bounding $\max_{y_i,i} |\langle \bw_{j, r}^k, \bxi_i \rangle|$.
From the scale difference at $T_1$, we know that $\vartheta = \widetilde O( \max\{n^{-1/2} , \sigma_0 \sigma_\xi \sqrt{d}, \sigma_0 \| \bmu\|, n d^{-1/2} \}  ) = o(1)$. Next, we can bound 
\begin{align}
    \sum_{k = T_1}^T L_S(\bW^k) \leq \frac{\| \bW^{T_1} - \bW^*\|^2}{\eta} =  O(\eta^{-1}  m \log(1/\epsilon) \| \bmu \|^{-2}) \label{itjrnjnvn}
\end{align}
where we use Lemma \ref{lemma:wstar_bound_signal} for the last equality. 

Then, we first prove $\max_{r,i} |\rho_{y_i,r,i}^k| \leq 2\vartheta$ for all $T_1 \leq k \leq T$. First it is easy to see that at $T_1$, we have 
\begin{align*}
    \max_{r,i} |\rho_{y_i,r,i}^{T_1}| \leq \max_{r,i} |\langle\bw_{y_i, r}^{T_1}, \bxi_i \rangle| + \max_{r,i}|\langle \bw_{y_i, r}^{0}, \bxi_i\rangle| + 4 \sqrt{\frac{\log(4n^2/\delta)}{d}} n \alpha \leq \vartheta \leq 2\vartheta
\end{align*}
Then suppose there $\widetilde T \in [T_1, T]$ such that $\max_{r,i} |\rho_{y_i,r,i}^{T_1}| \leq 2 \vartheta$ for all $k \in [T_1, \widetilde T-1]$. Now we let $\phi^k \coloneqq \max_{r,i} |\rho_{y_i,r,i}^{k}|$ and thus by the update of noise coefficient
\begin{align*}
    \phi^{k+1} &\leq \phi^k + \frac{\eta}{nm} |\ell_i'^k| \Big( \phi^k + \beta/3 + 4 \sqrt{\frac{\log(4n^2/\delta)}{d}} n \alpha \Big) \| \bxi_i \|^2 \\
    &\leq \phi^k + \frac{\eta}{nm} \max_i |\ell_i'^k| \Big( \phi^k + \beta/3 + 4 \sqrt{\frac{\log(4n^2/\delta)}{d}} n \alpha \Big) O(\sigma_\xi^2 d).
\end{align*}
where we use Lemma \ref{lemma:inner_coef_noise} in the first inequality. Then taking the summation from $T_1$ to $\widetilde T$ gives
\begin{align}
    \phi^{\widetilde T} &\leq \phi^{T_1} + \frac{\eta}{nm} \sum_{k = T_1}^{\widetilde T-1} \max_i |\ell_i'^k| O(\sigma_\xi^2 d) \vartheta \label{ijgrfgnjk}
\end{align}
where the first inequality is by the induction condition. Next, the aim is bound $\sum_{k = T_1}^{\widetilde T - 1}\max_i |\ell_i'^k|$. 
First, for any $i, i' \in [n]$ such that $y_i = y_{i'}$, we can bound for all $T_1 \leq k \leq \widetilde T-1$
\begin{align*}
    &y_i f(\bW^k, \bx_i) - y_{i'} f(\bW^k, \bx_{i'}) \\
    &=  F_{y_{i}} (\bW^k_{y_i}, \bx_i) - F_{-y_{i}}(\bW_{-y_i}^k, \bx_{i})) - F_{y_{i'}} (\bW^k_{y_{i'}}, \bx_{i'}) + F_{-y_{i'}}(\bW_{-y_{i'}}^k, \bx_{i'})) \\
    &\leq \frac{1}{m} \sum_{r=1}^m \big( \langle \bw^k_{y_i,r} , \bmu_{y_i} \rangle^2 + \langle \bw^k_{y_i,r}, \bxi_i \rangle^2  \big) - \frac{1}{m} \sum_{r=1}^m \big( \langle \bw^k_{y_i,r} , \bmu_{y_i} \rangle^2 + \langle \bw^k_{y_i,r}, \bxi_{i'} \rangle^2  \big) + 1/C_1 \\
    &= \frac{1}{m} \sum_{r=1}^m \big(  \langle \bw^k_{y_i,r}, \bxi_i \rangle^2 -  \langle \bw^k_{y_i,r}, \bxi_{i'} \rangle^2 \big) + 1/C_1 \\
    &\leq \max_{r,i}  \langle \bw^k_{y_i,r}, \bxi_i \rangle^2 + 1/C_1\\
    &\leq \max_{r,i} \big(  |\rho_{y_i,r,i}^{k}| + \max_{r,i}|\langle \bw_{y_i, r}^{0}, \bxi_i\rangle| + 4 \sqrt{\frac{\log(4n^2/\delta)}{d}} n \alpha \big)^2 \\
    &\leq 6 \vartheta^2 \leq \vartheta
\end{align*}
where in the first inequality we notice that $F_{-y_{i}}(\bW_{-y_i}^k, \bx_{i})) \geq 0$, $y_i = y_{i'}$ and  we recall that $F_{-y_i}(\bW_j^k, \bx_i) \leq \beta^2 + \big(\beta + 12 \sqrt{\frac{\log(4n^2/\delta)}{d}} n \alpha \big)^2 = 1/C_1$ for some sufficiently large constant $C_1 > 0$. The second last inequality is by induction condition and the last inequality is by choosing $\vartheta \leq 1/6$. Then we can bound the ratio of loss derivatives  (based on Lemma \ref{lemma:loss_derivative_ratio}) that
\begin{align*}
    |\ell_{i'}'^k|/|\ell_{i}'^k| &\leq \exp\big( y_i f(\bW^k, \bx_i) - y_{i'} f(\bW^k, \bx_{i'}) ) \leq \exp(\vartheta) 
\end{align*}
This suggests $1 - O(\vartheta)\leq |\ell_{i'}'^k|/ |\ell_i'^k| \leq 1 + O(\vartheta)$ for all $i, i' \in [n]$, $T_1\leq k \leq \widetilde T-1$. Then let $i^* = \arg\max_i |\ell_i'^k|$, we have
\begin{align}
    \sum_{T_1}^T \max_i |\ell_i'^k| = \sum_{T_1}^T\Theta( \frac{1}{|\gS_{y_{i^*}}|} \sum_{i \in \gS_{y_{i^*}}} |\ell_{i}'^k| ) \leq \sum_{T_1}^T \Theta( \frac{1}{|\gS_{y_{i^*}}|} \sum_{i \in \gS_{y_{i^*}}} \ell_i^k  )
    &\leq \sum_{T_1}^T \Theta( \frac{n}{|\gS_{y_{i^*}}|} L_S(\bW^k)  ) \nonumber\\
    &= \widetilde O(\eta^{-1} m \log(1/\epsilon) \| \bmu\|^{-2}) \label{iorototo}
\end{align}
where the first inequality is by $|\ell'| \leq \ell$ and the last equality is  from \eqref{itjrnjnvn} and $|\gS_{y_{i^*}}| \geq 0.49 n$ (based on Lemma \ref{lemma:bound_S}). 

This allows to bound \eqref{ijgrfgnjk} as
\begin{align*}
    \phi^{\widetilde T} &\leq \phi^{T_1} + \frac{\eta}{nm} \sum_{s = T_1}^{\widetilde T-1} \max_i |\ell_i'^k| O(\sigma_\xi^2 d) \vartheta \\
    &\leq  \phi^{T_1} +  O( n^{-1} \sigma_\xi^2 d \log(1/\epsilon) \| \bmu \|^{-2} ) \cdot \vartheta \\
    &\leq  \vartheta + O( n^{-1} \SNR^{-2} \log(1/\epsilon)) \cdot \vartheta\\
    &\leq 2 \vartheta
\end{align*}
and the second inequality is by \eqref{iorototo} and the last inequality is by setting $\epsilon = 0.1$ and $n \cdot \SNR^2  \geq C'$ for sufficiently large constant $C'$. Thus, we have $\max_{r,i} | \langle \bw_{y_i,r}^{\widetilde T}, \bxi_i  \rangle| \leq \max_{r,i} |\rho_{y_i,r,i}^{\widetilde T}| + \beta + 4 \sqrt{\frac{\log(4n^2/\delta)}{d}} n \alpha \leq 3 \vartheta = o(1)$. The lower bound on signal inner product is directly from Lemma \ref{lemma:monotonic_signal}.
\end{proof}

\subsection{Noise memorization}

We also analyze the setting where $n^{-1} \SNR^{-2} \geq C'$ for some constant $C' > 0$, which allows the noise memorization to dominate signal learning, thus reaching harmful overfitting.

We first require the following anti-concentration result for the noise inner product, which is required to ensure the sign invariance of the inner product along training. 
\begin{lemma}
\label{lemma:noise_anti}
Suppose $\delta > 0$ and {$\sigma_0 \geq \Omega( \log(n^2/\delta) n^2 m \alpha d^{-1} \sigma_\xi^{-1}  )$}, we have for all $j = \pm1, r \in [m], i \in [n]$, $|\langle\bw_{j,r}^0, \bxi_i \rangle| \geq 8 \sqrt{\frac{\log(4n^2/\delta)}{d}} n \alpha$. 
\end{lemma}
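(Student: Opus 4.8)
The plan is to use that the initialization $\bw_{j,r}^0 \sim \gN(\bzero,\sigma_0^2\bI)$ is drawn independently of the noise patches $\{\bxi_i\}_{i=1}^n$, so that, conditionally on the $\bxi_i$'s, each scalar $\langle \bw_{j,r}^0,\bxi_i\rangle$ is a centered one-dimensional Gaussian with variance $\sigma_0^2\|\bxi_i\|^2$. The claim then follows from the Gaussian anti-concentration bound of Lemma~\ref{lemma:gaussian-anti-concentration} (which controls small-ball probabilities of such Gaussians) combined with a union bound over the $2mn$ triples $(j,r,i)$; the hypothesis on $\sigma_0$ is exactly what is needed to make the per-triple small-ball probability negligible.

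Concretely, I would first restrict to the high-probability event of Lemma~\ref{lemma:init_bound} (which is part of $\gE_{\rm prelim}$), on which $\|\bxi_i\|^2 \geq \sigma_\xi^2 d\,(1-\widetilde O(d^{-1/2})) \geq \tfrac12 \sigma_\xi^2 d$ for every $i\in[n]$, using that $d$ is large by Condition~\ref{assump:super}; hence $\sigma_0\|\bxi_i\| \geq \tfrac{1}{\sqrt 2}\,\sigma_0\sigma_\xi\sqrt d$. Fixing any admissible realization of $\bxi_i$ and any $j,r$, set $c \coloneqq 8\sqrt{\log(4n^2/\delta)/d}\;n\alpha$. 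Applying Lemma~\ref{lemma:gaussian-anti-concentration} together with the elementary inequality $1-e^{-x}\leq x$ (which upgrades the $\erf$/square-root bound stated there to the density-type bound $\sP(|x|\leq c)\leq \sqrt{2/\pi}\,c/\sigma$) gives
\begin{align*}
    \sP\!\left(\,|\langle \bw_{j,r}^0,\bxi_i\rangle|\leq c \,\big|\, \bxi_i\right)
    \;\leq\; \sqrt{\tfrac{2}{\pi}}\cdot\frac{c}{\sigma_0\|\bxi_i\|}
    \;\leq\; \frac{2c}{\sigma_0\sigma_\xi\sqrt d}
    \;=\; \frac{16\sqrt{\log(4n^2/\delta)}\;n\alpha}{\sigma_0\sigma_\xi d}.
\end{align*}
Substituting the hypothesis $\sigma_0 = \Omega\!\big(\log(n^2/\delta)\,n^2 m\alpha\,d^{-1}\sigma_\xi^{-1}\big)$ with a suitably large absolute constant makes the right-hand side at most $\delta/(4mn)$; taking expectation over $\bxi_i$, then a union bound over $j\in\{\pm1\}$, $r\in[m]$, $i\in[n]$, and finally combining with the $O(\delta)$ failure probability of Lemma~\ref{lemma:init_bound} (after a harmless rescaling of $\delta$), yields that with probability at least $1-\delta$ one has $|\langle\bw_{j,r}^0,\bxi_i\rangle|\geq c$ for all such $j,r,i$, which is precisely the assertion.

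The main point to be careful about is the probability accounting. Because the conclusion must hold simultaneously for all $m$ neurons (and all $n$ samples, both signs), the per-event small-ball probability has to be pushed below $\Theta(\delta/(mn))$, and a one-dimensional Gaussian only affords a small-ball probability that is \emph{linear} in the ratio $c/\sigma$ (there is no way to beat this with concentration, since the $m$ events for distinct $r$ are independent, so the union bound is essentially tight). This linear dependence is exactly what dictates the polynomial-in-$(n,m)$ blow-up in the required lower bound on $\sigma_0$, as well as its dependence on $\log(n^2/\delta)$ and on $\alpha$ through $c$. It then remains to note that this lower bound is consistent with the upper bound on $\sigma_0$ in Condition~\ref{assump:super}, so the admissible range for $\sigma_0$ is non-empty; everything else is a routine substitution.
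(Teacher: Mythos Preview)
Your proposal is correct and follows essentially the same route as the paper: condition on the $\bxi_i$'s, use $\|\bxi_i\|^2 \geq \Omega(\sigma_\xi^2 d)$ from Lemma~\ref{lemma:init_bound}, apply the Gaussian anti-concentration Lemma~\ref{lemma:gaussian-anti-concentration}, and union bound over the $2mn$ triples. The only cosmetic difference is that you linearize the bound via $1-e^{-x}\leq x$ to obtain the density-type estimate $\sP(|x|\leq c)\leq \sqrt{2/\pi}\,c/\sigma$ up front, whereas the paper retains the $\sqrt{1-\exp(-2c^2/(\pi\sigma^2))}$ form and inverts it at the end (noting $1/\log\big(4m^2n^2/(4m^2n^2-\delta^2)\big)=\Theta(m^2n^2)$); both yield the same requirement on $\sigma_0$ up to absolute constants.
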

\begin{proof}[Proof of Lemma \ref{lemma:noise_anti}]
For any $j = \pm1, r \in [m], i \in [n]$, we have $\langle \bw_{j,r}^0, \bxi_i \rangle \sim \gN(0, \sigma_0^2 \| \bxi_i \|^2)$. Then applying Lemma \ref{lemma:gaussian-anti-concentration} by setting RHS to $\delta/(2mn)$ and $c = 8 \sqrt{\frac{\log(4n^2/\delta)}{d}} n \alpha$, we require 
\begin{align*}
    d^2 \geq 42 \log(4n^2/\delta) n^2 \alpha^2 \sigma_0^{-2} \sigma_\xi^{-2}/ \log( \frac{4m^2n^2}{4m^2n^2 - \delta^2} ) 
\end{align*}
where we use Lemma \ref{lemma:init_bound} that $\| \bxi_i\|^2 \geq 0.99 \sigma_\xi^2 d$. Finally noticing that $1/\log(4m^2n^2/(4m^2n^2 - \delta^2)) \leq \Theta(m^2n^2)$ and taking the union bound completes the proof.
\end{proof}

An immediate consequence of Lemma \ref{lemma:noise_anti} is the following result that  allows to derive the sign invariance for $\langle \bw^{k}_{y_i,r,i}, \bxi_i \rangle$ for all iterations.

\begin{lemma}
\label{lemma:sign_invaraince_noise}
Under Condition \ref{assump:super}, for any $i \in [n], r \in [m]$, we have $\sign(\langle \bw_{y_i,r}^k, \bxi_i  \rangle) = \sign(\rho^k_{y_i,r,i}) = \sign(\langle \bw_{y_i,r}^0, \bxi_i \rangle)$ for all $0\leq k \leq T^*$.
\end{lemma}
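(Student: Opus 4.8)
The plan is to run a single induction on the iteration index $k$, simultaneously tracking (i) the sign of the noise coefficient $\rho^k_{y_i,r,i}$ and (ii) the sign of $\langle \bw_{y_i,r}^k,\bxi_i\rangle$ \emph{with a quantitative margin}. The three ingredients are: the coefficient recursion from Lemma~\ref{lemma:coef_update}, namely $\rho_{y_i,r,i}^{k+1} = \rho_{y_i,r,i}^{k} + \frac{\eta}{nm}\,|\ell_i'^k|\,\langle \bw_{y_i,r}^k,\bxi_i\rangle\,\|\bxi_i\|^2$ (using $j=y_i$, $\ell_i'^k<0$); the near-diagonalization estimate of Lemma~\ref{lemma:inner_coef_noise}, which gives $\big|\langle \bw_{y_i,r}^k,\bxi_i\rangle - \langle \bw_{y_i,r}^0,\bxi_i\rangle - \rho^k_{y_i,r,i}\big| \le 4\sqrt{\log(4n^2/\delta)/d}\,n\alpha$ whenever Proposition~\ref{prop:global_super} holds at $k$ (which it does for all $k\le T^*$); and the anti-concentration bound of Lemma~\ref{lemma:noise_anti}, $|\langle \bw_{y_i,r}^0,\bxi_i\rangle|\ge 8\sqrt{\log(4n^2/\delta)/d}\,n\alpha$. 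The crucial numerics is that the initialization margin is exactly twice the cross-term error.

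Fix $i,r$ and, without loss of generality, assume $\langle \bw_{y_i,r}^0,\bxi_i\rangle \ge 8\sqrt{\log(4n^2/\delta)/d}\,n\alpha > 0$ (the case $<0$ is symmetric). Denote $\gamma \coloneqq 4\sqrt{\log(4n^2/\delta)/d}\,n\alpha$. I would prove by induction on $k$ the statement: $\rho^k_{y_i,r,i}\ge 0$ and $\langle \bw_{y_i,r}^k,\bxi_i\rangle \ge \gamma$. The base case $k=0$: $\rho^0_{y_i,r,i}=0$ and $\langle \bw_{y_i,r}^0,\bxi_i\rangle \ge 2\gamma \ge \gamma$. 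For the step, assume the statement at $k$. From the recursion, $\rho^{k+1}_{y_i,r,i} = \rho^k_{y_i,r,i} + \frac{\eta}{nm}|\ell_i'^k|\langle \bw_{y_i,r}^k,\bxi_i\rangle\|\bxi_i\|^2 \ge \rho^k_{y_i,r,i} \ge 0$, since all factors in the added term are nonnegative (and strictly positive once $k\ge 0$, so $\rho^{k+1}_{y_i,r,i}>0$). Then, using Lemma~\ref{lemma:inner_coef_noise},
\begin{align*}
\langle \bw_{y_i,r}^{k+1},\bxi_i\rangle \;\ge\; \langle \bw_{y_i,r}^0,\bxi_i\rangle + \rho^{k+1}_{y_i,r,i} - \gamma \;\ge\; 2\gamma + 0 - \gamma \;=\; \gamma,
\end{align*}
which closes the induction. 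In particular $\sign(\langle \bw_{y_i,r}^k,\bxi_i\rangle) = \sign(\langle \bw_{y_i,r}^0,\bxi_i\rangle)$ for all $0\le k\le T^*$, and $\sign(\rho^k_{y_i,r,i}) = \sign(\langle \bw_{y_i,r}^0,\bxi_i\rangle)$ for all $k\ge 1$ (with $\rho^0_{y_i,r,i}=0$).

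I do not expect a genuine obstacle here: the content is entirely front-loaded into Lemma~\ref{lemma:noise_anti} (whose hypothesis on $\sigma_0$ is precisely what makes the initialization margin dominate the cross-term error) and Lemma~\ref{lemma:inner_coef_noise}. The only point requiring care is logical ordering — Lemma~\ref{lemma:inner_coef_noise} presupposes the bounds of Proposition~\ref{prop:global_super}, so the induction above should be understood as running inside (or after) that proposition's induction; since Proposition~\ref{prop:global_super} is already established for all $k\le T^*$ under Condition~\ref{assump:super}, this is unproblematic. A final union bound over $i\in[n]$, $r\in[m]$ (already absorbed into the event $\gE_{\rm prelim}$ via Lemma~\ref{lemma:noise_anti}) completes the argument.
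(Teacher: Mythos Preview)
Your proposal is correct and follows essentially the same approach as the paper: induction on $k$ using the coefficient recursion from Lemma~\ref{lemma:coef_update}, the cross-term bound from Lemma~\ref{lemma:inner_coef_noise}, and the anti-concentration margin from Lemma~\ref{lemma:noise_anti}. Your formulation with the explicit margin $\gamma$ and the strengthened inductive hypothesis $\langle \bw_{y_i,r}^k,\bxi_i\rangle \ge \gamma$ is a bit cleaner than the paper's version, and you correctly flag both the $\rho^0=0$ edge case and the logical dependence on Proposition~\ref{prop:global_super}.
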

\begin{proof}[Proof of Lemma \ref{lemma:sign_invaraince_noise}]
First by Lemma \ref{lemma:noise_anti} and Lemma \ref{lemma:inner_coef_noise}, we can bound if  $\langle \bw_{y_i,r}^0, \bxi_i \rangle \geq 0$,
\begin{align*}
      \rho_{y_i,r,i}^k + \frac{1}{2} \langle \bw_{y_i,r}^0, \bxi_i \rangle \leq  \langle \bw_{y_i,r}^k, \bxi_i  \rangle \leq  \rho_{y_i,r,i}^k + \frac{3}{2} \langle \bw_{y_i,r}^0, \bxi_i \rangle 
\end{align*}
and if $\langle \bw_{y_i,r}^0, \bxi_i \rangle \leq 0$,
\begin{align*}
     \rho_{y_i,r,i}^k + \frac{3}{2} \langle \bw_{y_i,r}^0, \bxi_i \rangle \leq  \langle \bw_{y_i,r}^k, \bxi_i  \rangle \leq  \rho_{y_i,r,i}^k + \frac{1}{2} \langle \bw_{y_i,r}^0, \bxi_i \rangle
\end{align*}
Next we use induction to show the sign invariance. First it is clear when $k = 0$, the sign invariance is trivially satisfied. At $k = 1$, we have by the iterative update of the coefficients,
\begin{align*}
    \rho^{1}_{y_i,r,i} = \rho_{y_i,r,i}^{0} + \frac{\eta}{nm} |\ell_i'^{0}| \langle \bw_{y_i,r}^{0}, \bxi_i  \rangle \| \bxi_i\|^2 =  \frac{\eta}{nm} |\ell_i'^{0}| \langle \bw_{y_i,r}^{0}, \bxi_i  \rangle \| \bxi_i\|^2 
\end{align*}
and thus $\sign(\rho^1_{y_i,r,i}) = \sign(\langle \bw_{y_i,r}^0, \bxi_i \rangle)$. Further, by Lemma \ref{lemma:inner_coef_noise}, and without loss of generality that $\langle \bw_{y_i,r}^0, \bxi_i \rangle \geq 0$, we have 
\begin{align*}
    \langle\bw_{y_i,r}^1, \bxi_i \rangle \geq \rho^{1}_{y_i,r,i} + \langle\bw_{y_i,r}^0, \bxi_i \rangle - 4 \sqrt{\frac{\log(4n^2/\delta)}{d}} n \alpha \geq \rho^{1}_{y_i,r,i} + \frac{1}{2}\langle\bw_{y_i,r}^0, \bxi_i \rangle \geq 0.
\end{align*}
Similar argument also holds for $\langle \bw_{y_i,r}^0, \bxi_i \rangle < 0$. Then we show at $k = 1$, 
$\sign(\rho^1_{y_i,r,i}) = \sign(\langle \bw_{y_i,r}^1, \bxi_i\rangle) = \sign(\langle \bw_{y_i,r}^0, \bxi_i \rangle)$. Suppose there exists a time $\widetilde T$ such that for all $k \leq \widetilde T-1$, the sign invariance holds. Then for $k = \widetilde T$, suppose $\sign(\langle \bw_{y_i,r}^{\widetilde T-1}, \bxi_i \rangle) = \sign( \rho^{\widetilde T-1}_{y_i,r,i} ) = \sign(\langle \bw_{y_i,r}^{0}, \bxi_i \rangle) = +1$, 
\begin{align*}
    \rho_{y_i,r,i}^{\widetilde T} &= \rho_{y_i,r,i}^{\widetilde T-1} + \frac{\eta}{nm} |\ell_i'^{\widetilde T-1}| \langle \bw_{y_i,r}^{\widetilde T-1} , \bxi_i\rangle \|\bxi_i \|^2  \\
    &\geq \rho_{y_i,r,i}^{\widetilde T-1} + \frac{\eta}{nm} |\ell_i'^{\widetilde T-1}| \big( \rho_{y_i,r,i}^{\widetilde T-1} + \langle \bw_{y_i,r}^{0}, \bxi_i \rangle - 4 \sqrt{\frac{\log(4n^2/\delta)}{d}} n \alpha  \big) \|\bxi_i \|^2 \\
    &\geq \rho_{y_i,r,i}^{\widetilde T-1} + \frac{\eta}{nm} |\ell_i'^{\widetilde T-1}| \big( \rho_{y_i,r,i}^{\widetilde T-1} + \frac{1}{2} \langle \bw_{y_i,r}^{0}, \bxi_i \rangle \big) \|\bxi_i \|^2 \\
    &\geq 0
\end{align*}
Further, 
\begin{align*}
    \langle\bw_{y_i,r}^{\widetilde T}, \bxi_i \rangle \geq \rho^{\widetilde T}_{y_i,r,i} + \langle\bw_{y_i,r}^0, \bxi_i \rangle - 4 \sqrt{\frac{\log(4n^2/\delta)}{d}} n \alpha \geq \rho^{\widetilde T}_{y_i,r,i} + \frac{1}{2}\langle\bw_{y_i,r}^0, \bxi_i \rangle \geq 0.
\end{align*}
and thus completes the induction that $\sign(\langle \bw_{y_i,r}^{\widetilde T}, \bxi_i \rangle) = \sign( \rho^{\widetilde T}_{y_i,r,i} ) = \sign(\langle \bw_{y_i,r}^{0}, \bxi_i \rangle)$. Similar argument holds when $\sign(\langle \bw_{y_i,r}^{0}, \bxi_i \rangle) = -1$. 
\end{proof}

We also derive the following concentration result for the average noise inner product at initialization.

\begin{lemma}
\label{lemma:noise_inner_concen}
Suppose $\delta > 0$ and $m = \Omega(\log(n/\delta))$. Then with probability at least $1-\delta$, we have for all $j = \pm1, i \in [n]$
\begin{align*}
    \sigma_0 \sigma_\xi \sqrt{d}/2 \leq  \frac{1}{m} \sum_{r=1}^n |\langle \bw_{j,r}^0 , \bxi_i \rangle| \leq \sigma_0 \sigma_\xi \sqrt{d}
\end{align*}
\end{lemma}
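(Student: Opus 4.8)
The plan is to mirror the proof of Lemma~\ref{lemma:signal_inner_concen} almost verbatim, replacing the signal projection by the noise projection. First I would fix $j = \pm 1$ and $i \in [n]$ and observe that $\langle \bw_{j,r}^0, \bxi_i \rangle \sim \gN(0, \sigma_0^2 \|\bxi_i\|^2)$ for each $r \in [m]$, with these $m$ random variables independent across $r$. Hence $\sE\big[ |\langle \bw_{j,r}^0, \bxi_i \rangle| \big] = \sigma_0 \|\bxi_i\| \sqrt{2/\pi}$, and each $|\langle \bw_{j,r}^0, \bxi_i \rangle|$ is sub-Gaussian with variance proxy of order $\sigma_0^2 \|\bxi_i\|^2$. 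Applying a standard sub-Gaussian concentration bound to the average $\frac{1}{m} \sum_{r=1}^m |\langle \bw_{j,r}^0, \bxi_i \rangle|$, we get that with probability at least $1 - \delta/(2n)$,
\begin{align*}
    \Big| \frac{1}{m}\sum_{r=1}^m |\langle \bw_{j,r}^0, \bxi_i \rangle| - \sigma_0 \|\bxi_i\| \sqrt{2/\pi} \Big| \leq \sqrt{\frac{2 \sigma_0^2 \|\bxi_i\|^2 \log(4n/\delta)}{m}}.
\end{align*}

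Next I would invoke the condition $m = \Omega(\log(n/\delta))$ to make the right-hand side a sufficiently small multiple of $\sigma_0 \|\bxi_i\| \sqrt{2/\pi}$, so that the average lies in the window $[0.99\, \sigma_0 \|\bxi_i\| \sqrt{2/\pi},\, 1.01\, \sigma_0 \|\bxi_i\| \sqrt{2/\pi}]$. Then I would use Lemma~\ref{lemma:init_bound}, which gives $\|\bxi_i\|^2 = \sigma_\xi^2 d (1 \pm \widetilde O(d^{-1/2}))$, hence $\|\bxi_i\| = \sigma_\xi \sqrt{d}(1 \pm o(1))$; combining with the previous display and absorbing constants yields $\sigma_0 \sigma_\xi \sqrt{d}/2 \leq \frac{1}{m}\sum_{r=1}^m |\langle \bw_{j,r}^0, \bxi_i \rangle| \leq \sigma_0 \sigma_\xi \sqrt{d}$. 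Finally I would take a union bound over all $j = \pm 1$ and $i \in [n]$ (a total of $2n$ events), which costs probability $\delta$, to conclude the claim holds simultaneously for all $j, i$.

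The proof is essentially routine; the only mild subtlety is that the $\|\bxi_i\|$ appearing in the Gaussian variance is itself random, so one must either condition on the high-probability event from Lemma~\ref{lemma:init_bound} first and then run the sub-Gaussian bound over the randomness in $\{\bw_{j,r}^0\}$, or note that the two sources of randomness are independent (the $\bxi_i$ depend only on the data draw, the $\bw_{j,r}^0$ only on the initialization). I would take the first route for cleanliness: work on the event $\gE_{\mathrm{prelim}}$ (or at least the part coming from Lemma~\ref{lemma:init_bound}), treat $\|\bxi_i\|$ as a fixed quantity of order $\sigma_\xi\sqrt{d}$, and then the sub-Gaussian concentration over the initialization is immediate. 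No step here is a genuine obstacle; the ``hard part'' is merely bookkeeping the constants so the final window is exactly $[\sigma_0\sigma_\xi\sqrt d/2,\ \sigma_0\sigma_\xi\sqrt d]$, which follows since $\sqrt{2/\pi} \in (0.79, 0.80)$ and $2 \cdot 0.80 \cdot 1.01 < 1$ while $0.79 \cdot 0.99 > 1/2$.
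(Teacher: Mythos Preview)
Your proposal is correct and essentially identical to the paper's own proof: both condition on $\bxi_i$, use that $\langle \bw_{j,r}^0,\bxi_i\rangle\sim\gN(0,\sigma_0^2\|\bxi_i\|^2)$ with mean absolute value $\sigma_0\|\bxi_i\|\sqrt{2/\pi}$, apply a sub-Gaussian concentration bound to the average over $r$, invoke Lemma~\ref{lemma:init_bound} to replace $\|\bxi_i\|$ by $\sigma_\xi\sqrt{d}$, and union bound over $j,i$. The only blemish is the arithmetic slip ``$2\cdot 0.80\cdot 1.01<1$'' in your final sentence (presumably you meant $0.80\cdot 1.01<1$), but this does not affect the argument.
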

\begin{proof}[Proof of Lemma \ref{lemma:noise_inner_concen}]
First notice that for any $i \in [n]$, $\langle \bw_{j,r}^0, \bxi_i\rangle \sim \gN(0, \sigma_0^2 \| \bxi_i \|^2)$ and thus we have $\sE [ | \langle \bw_{j,r}^0, \bxi_i\rangle |] = \sigma_0 \| \bxi_i\| \sqrt{2/\pi}$. By sub-Gaussian tail bound, with probability at least $1 - \delta/(2n)$, for any $i \in [n]$
\begin{align*}
    \left| \frac{1}{m}\sum_{r=1}^m |\langle \bw_{j,r}^0, \bxi_i\rangle| -  \sigma_0 \| \bxi_i \| \sqrt{2/\pi} \right| \leq \sqrt{\frac{2 \sigma_0^2 \| \bxi_i \|^2 \log(4n/\delta)}{m}}
\end{align*}
Choosing $m = \Omega(\log(n/\delta))$, we have 
\begin{align*}
    \sigma_0 \| \bxi_i \|\sqrt{2/\pi}  0.99 \leq \frac{1}{m} \sum_{r=1}^n |\langle \bw_{j,r}^0 , \bxi_i \rangle| \leq \sigma_0 \|\bxi_i \| \sqrt{2/\pi} 1.01.
\end{align*}
Because from Lemma \ref{lemma:init_bound}, we have $0.99 \sigma_\xi \sqrt{d}\leq \| \bxi_i\|\leq 1.01 \sigma_\xi \sqrt{d}$ by choosing $d = \widetilde \Omega(1)$ sufficiently large. Then we have $\sigma_0 \sigma_\xi \sqrt{d}/2\leq \frac{1}{m} \sum_{r=1}^n |\langle \bw_{j,r}^0 , \bxi_i \rangle| \leq \sigma_0 \sigma_\xi \sqrt{d}$. Finally taking the union bound for all $j = \pm1, i \in [n]$ completes the proof.
\end{proof}

We have established several preliminary lemmas that hold with high probability, including Lemma \ref{lemma:bound_S}, Lemma \ref{lemma:init_bound}, Lemma \ref{lemma:init_inner_lower_upper}, Lemma \ref{lemma:noise_anti}, Lemma \ref{lemma:noise_inner_concen}. We let $\gE_{\rm prelim}$ be the event such that all the results in these lemmas hold for a given $\delta$. Then by applying union bound, we have $\sP(\gE_{\rm prelim}) \geq 1 - 5 \delta$. The subsequent analysis are conditioned on the event $\gE_{\rm prelim}$.

\subsubsection{First stage}

\begin{theorem}
\label{them:super_noise_first}
Under Condition \ref{assump:super}, suppose $n^{-1} \cdot \SNR^{-2} \geq C'$ for some constant $C' > 0$. Then there exists a time {$T_1 = \widetilde \Theta( \eta^{-1} nm \sigma_\xi^{-2} d^{-1})$}, such that (1) $\max_r |\langle \bw_{y_i,r}^{T_1}, \bxi_i \rangle| \geq 2$ for all $i \in [n]$, (2) $\frac{1}{m} \sum_{r=1}^m |\langle \bw_{y_i,r}^{T_1}, \bxi_i \rangle| \geq 4$ for all $i \in [n]$ and (3) $\max_{j,r,y}|\langle \bw_{j, r}^{T_1}, \bmu_y \rangle| = \widetilde O(n^{-1/2})$. 
\end{theorem}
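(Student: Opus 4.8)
The plan is to prove Theorem~\ref{them:super_noise_first} as the mirror image of Theorem~\ref{them:super_signal_first}, swapping the roles of the signal patch $\bmu_y$ and the noise patches $\bxi_i$, and working throughout on the event $\gE_{\rm prelim}$ (which now includes Lemmas~\ref{lemma:noise_anti} and~\ref{lemma:noise_inner_concen}). I would take the first stage to be $[0,T_1]$ with $T_1$ the explicit horizon constructed below, which is roughly the first time $\tfrac1m\sum_r|\langle\bw_{y_i,r}^k,\bxi_i\rangle|$ reaches a constant order for some $i$. A short bootstrap — using that the per-step growth factors established below are all $1+o(1)$ and that $\max_{j,r,y}|\langle\bw_{j,r}^k,\bmu_y\rangle|$ stays $o(1)$ on this range (shown in the third paragraph) — gives that every relevant inner product is $O(1)$ for $k\le T_1$, so Lemma~\ref{lemma:ell_lower} applies and $|\ell_i'^k|\ge C_\ell$ throughout the first stage.

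The core step is a geometric lower bound on noise learning, which is harder than its signal analogue because of the cross-sample terms in \eqref{eq:noise_dyn}. Using the coefficient decomposition of Lemma~\ref{lemma:coef_update}, for $j=y_i$ the update is $\rho_{y_i,r,i}^{k+1}=\rho_{y_i,r,i}^k-\tfrac{\eta}{nm}\ell_i'^k\langle\bw_{y_i,r}^k,\bxi_i\rangle\|\bxi_i\|^2$. By the sign-invariance Lemma~\ref{lemma:sign_invaraince_noise} — which rests on the anti-concentration bound $|\langle\bw_{y_i,r}^0,\bxi_i\rangle|\ge 8\sqrt{\log(4n^2/\delta)/d}\,n\alpha$ of Lemma~\ref{lemma:noise_anti} together with the lower bound on $\sigma_0$ in Condition~\ref{assump:super} — the quantities $\rho_{y_i,r,i}^k$ and $\langle\bw_{y_i,r}^k,\bxi_i\rangle$ retain the sign of $\langle\bw_{y_i,r}^0,\bxi_i\rangle$, so $|\rho_{y_i,r,i}^k|$ is nondecreasing; combining Lemma~\ref{lemma:inner_coef_noise} with the same anti-concentration bound yields $|\langle\bw_{y_i,r}^k,\bxi_i\rangle|\ge|\rho_{y_i,r,i}^k|+\tfrac12|\langle\bw_{y_i,r}^0,\bxi_i\rangle|$. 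Setting $\psi_{r,i}^k:=|\rho_{y_i,r,i}^k|+\tfrac12|\langle\bw_{y_i,r}^0,\bxi_i\rangle|$ and using $|\ell_i'^k|\ge C_\ell$ and $\|\bxi_i\|^2=\Theta(\sigma_\xi^2 d)$ (Lemma~\ref{lemma:init_bound}) gives the per-neuron recursion $\psi_{r,i}^{k+1}\ge(1+\Theta(\eta C_\ell\sigma_\xi^2 d/(nm)))\psi_{r,i}^k$, hence $\psi_{r,i}^k\ge(1+\Theta(\eta C_\ell\sigma_\xi^2 d/(nm)))^k\cdot\tfrac12|\langle\bw_{y_i,r}^0,\bxi_i\rangle|$. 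Averaging over $r$ with Lemma~\ref{lemma:noise_inner_concen} ($\tfrac1m\sum_r|\langle\bw_{y_i,r}^0,\bxi_i\rangle|\ge\sigma_0\sigma_\xi\sqrt d/2$) and separately maximizing over $r$ with Lemma~\ref{lemma:init_inner_lower_upper} ($\max_r|\langle\bw_{y_i,r}^0,\bxi_i\rangle|\ge\sigma_0\sigma_\xi\sqrt d/4$), I set $T_1=\lceil\log(16/(\sigma_0\sigma_\xi\sqrt d))/\log(1+\Theta(\eta C_\ell\sigma_\xi^2 d/(nm)))\rceil=\widetilde\Theta(\eta^{-1}nm\sigma_\xi^{-2}d^{-1})$, which makes $\tfrac1m\sum_r|\langle\bw_{y_i,r}^{T_1},\bxi_i\rangle|\ge4$ and $\max_r|\langle\bw_{y_i,r}^{T_1},\bxi_i\rangle|\ge2$ for every $i$.

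For the signal, $\langle\bw_{j,r}^{k+1},\bmu_j\rangle=(1-\tfrac{\eta\|\bmu\|^2}{nm}\sum_{i\in\gS_j}\ell_i'^k)\langle\bw_{j,r}^k,\bmu_j\rangle$ with $|\ell_i'^k|\le1$ and $|\gS_j|\le 0.51n$ (Lemma~\ref{lemma:bound_S}) gives $|\langle\bw_{j,r}^k,\bmu_j\rangle|\le(1+0.51\eta\|\bmu\|^2/m)^k|\langle\bw_{j,r}^0,\bmu_j\rangle|$, while $|\langle\bw_{j,r}^k,\bmu_{-j}\rangle|\le\beta$ for all $k$ by Proposition~\ref{prop:global_super}. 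Plugging in $T_1$, the growth exponent equals $\Theta(n\,\SNR^2)\cdot\log(1/(\sigma_0\sigma_\xi\sqrt d))$ up to constants (after dividing by $\log(1+\Theta(\eta\sigma_\xi^2 d/(nm)))$ and expanding for small $\eta$), which under the hypothesis $n^{-1}\SNR^{-2}\ge C'$ is at most $\gamma\log(1/(\sigma_0\sigma_\xi\sqrt d))$ with $\gamma=O(1/C')$ arbitrarily small; combined with $|\langle\bw_{j,r}^0,\bmu_j\rangle|\le\sqrt{2\log(8m/\delta)}\,\sigma_0\|\bmu\|$ (Lemma~\ref{lemma:init_inner_lower_upper}), $\sigma_0=\widetilde O(\SNR)$, and $1/(\sigma_\xi\sqrt d)=\Theta(\SNR)$ from Condition~\ref{assump:super}, this yields $\max_{j,r,y}|\langle\bw_{j,r}^{T_1},\bmu_y\rangle|=\widetilde O(\SNR)=\widetilde O(n^{-1/2})$, the last step using $\SNR\le 1/\sqrt{C'n}$. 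This also certifies that the signal never reaches constant order on $[0,T_1]$, closing the bootstrap in the first paragraph.

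The main obstacle is the noise growth lower bound: whereas the signal dynamics \eqref{eq:signal_dyn} is a clean one-dimensional recursion, \eqref{eq:noise_dyn} carries the cross-terms $-\tfrac{\eta}{nm}\sum_{i'\ne i}\ell_{i'}'^k\langle\bw_{j,r}^k,\bxi_{i'}\rangle jy_{i'}\langle\bxi_{i'},\bxi_i\rangle$, which could in principle cancel the main term, so one must first establish sign invariance — hence the anti-concentration input of Lemma~\ref{lemma:noise_anti}, which in turn is what drives the lower bound requirement on $\sigma_0$ in Condition~\ref{assump:super}. A secondary novelty relative to \citep{cao2022benign} is that part~(2) requires the \emph{average} noise inner product (not just the maximum) to reach constant order, which is exactly why one works with the per-neuron recursion on $\psi_{r,i}^k$ and invokes Lemma~\ref{lemma:noise_inner_concen} rather than only Lemma~\ref{lemma:init_inner_lower_upper}.
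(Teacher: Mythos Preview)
Your proposal is correct and lands on the same $T_1$ and the same final estimates as the paper, but the mechanism you use for the noise lower bound is genuinely different. The paper works directly with the inner products $\langle\bw_{y_i,r}^k,\bxi_i\rangle$: it proves by induction that $\max_r|\langle\bw_{y_i,r}^k,\bxi_i\rangle|$ (and separately the average) stays above its initial value, and controls the cross-sample terms in \eqref{eq:noise_dyn} by the explicit ratio bound
\[
\frac{n\,|\ell_{i'}'^{k}|\cdot|\langle\bxi_i,\bxi_{i'}\rangle|\cdot|\langle\bw_{y_i,r}^{k},\bxi_{i'}\rangle|}{|\ell_i'^{k}|\cdot\|\bxi_i\|^2\cdot\max_r|\langle\bw_{y_i,r}^{k},\bxi_i\rangle|}\le 0.01,
\]
which uses the global bound $|\langle\bw_{y_i,r}^{k},\bxi_{i'}\rangle|\le\alpha$ together with the lower bound on $\sigma_0$. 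You instead pass to the coefficient $\rho_{y_i,r,i}^k$, whose update from Lemma~\ref{lemma:coef_update} has \emph{no} cross-sample terms, and then use Lemma~\ref{lemma:sign_invaraince_noise} plus Lemma~\ref{lemma:inner_coef_noise} (with the anti-concentration input Lemma~\ref{lemma:noise_anti}) to obtain the clean per-neuron recursion on $\psi_{r,i}^k$. Your route has the advantage that a single per-neuron recursion simultaneously yields parts~(1) and~(2) after taking a maximum and an average, whereas the paper runs the max and average inductions separately; on the other hand, you are front-loading the sign-invariance lemma, which in the paper is reserved for the second-stage analysis. Both routes ultimately hinge on the same $\sigma_0$ lower bound in Condition~\ref{assump:super}, just for different technical reasons (your route needs it for Lemma~\ref{lemma:noise_anti}, the paper's for the cross-term ratio above). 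The signal upper bound and the final $\widetilde O(\SNR)=\widetilde O(n^{-1/2})$ calculation are essentially identical to the paper's.
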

\begin{proof}[Proof of Theorem \ref{them:super_noise_first}]
We first bound the growth of signal as follows. From the gradient descent update, we have 
\begin{align}
   |\langle \bw_{j,r}^{k}, \bmu_j \rangle| &= | \langle \bw_{j,r}^{k-1},  \bmu_j \rangle | + \frac{\eta |\gS_j|}{nm} | \langle \bw_{j,r}^{k-1},  \bmu_j \rangle | \| \bmu \|^2 \nonumber \\
   &\leq \Big( 1 + 0.51 \frac{\eta \| \bmu \|^2}{m} \Big)  | \langle \bw_{j,r}^{k-1},  \bmu_j \rangle | \nonumber\\
   &\leq \Big( 1 + 0.51 \frac{\eta \| \bmu \|^2}{m} \Big)^k  | \langle \bw_{j,r}^{0},  \bmu_j \rangle | \label{ritjrigjd}
\end{align}
where the first inequality is by $|\ell_i'^k| \leq 1$ and the second inequality  is by Lemma \ref{lemma:bound_S} with $n = \widetilde \Omega(1)$ sufficiently large.  

On the other hand, for the growth of noise,
we have from the inner product update, for any $i \in [n]$
\begin{align*}
    \langle \bw_{y_i,r}^{k} , \bxi_i \rangle
     &= \langle \bw_{y_i,r}^{k-1} , \bxi_i \rangle  - \frac{\eta}{nm} \sum_{i'=1}^n \ell_{i'}'^{k-1} \langle \bw_{j,r}^{k-1}, \bxi_{i'} \rangle \langle  \bxi_{i'}, \bxi_i \rangle \\
    &= \Big( 1   - \frac{\eta}{nm} \ell_i'^k \| \bxi_i \|^2 \Big) \langle \bw_{y_i,r}^{k-1}, \bxi_i  \rangle   - \frac{\eta}{nm} \sum_{i'\neq i} \ell_{i'}'^{k-1} \langle \bw_{y_i,r}^{k-1}, \bxi_{i'} \rangle \langle  \bxi_{i'}, \bxi_i \rangle 
\end{align*}
Then this suggests
\begin{align}
     |\langle \bw_{y_i,r}^{k} , \bxi_i \rangle| &\geq \Big( 1   - \frac{\eta}{nm} \ell_i'^k \| \bxi_i \|^2 \Big) | \langle \bw_{y_i,r}^{k-1}, \bxi_i  \rangle | - \frac{\eta}{nm} \sum_{i'\neq i} |\ell_{i'}'^{k-1}| \cdot | \langle \bw_{y_i,r}^{k-1}, \bxi_{i'} \rangle | \cdot | \langle  \bxi_{i'}, \bxi_i \rangle  |  \label{ijfjrjigmkk}
\end{align}
We first prove for any $i \in [n]$,  $\max_{r} |\langle \bw_{y_i,r}^{k+1} , \bxi_i \rangle| \geq \max_{r}|\langle \bw_{y_i,r}^{k} , \bxi_i\rangle| \geq \max_{r}|\langle \bw_{y_i,r}^{0} , \bxi_i\rangle|$ for all $k \leq T_1$. We prove such a result by induction. It is clear that at $k = 0$, the result is satisfied. Now suppose there exists an iteration $\tilde k$ such that
\begin{align*}
    \max_{r}|\langle \bw_{y_i,r}^{k}, \bxi_i \rangle| \geq \max_{r}|\langle \bw_{y_i,r}^{0}, \bxi_i \rangle| \geq \sigma_0 \sigma_\xi \sqrt{d}/4
\end{align*}
for all $k \leq \tilde k -1$, where the last inequality is by Lemma \ref{lemma:init_inner_lower_upper}. Then  we can bound based on Lemma \ref{lemma:ell_lower} and Lemma \ref{lemma:init_bound}, we have for any $i' \neq i \in [n]$ and  
\begin{align}
    \frac{n |\ell_{i'}'^{\tilde k -1}| \cdot |\langle \bxi_i, \bxi_{i'}\rangle| \cdot |\langle \bw_{y_i,r}^{\tilde k-1}, \bxi_{i'} \rangle| }{|\ell_i'^{\tilde k-1}| \cdot \| \bxi_i\|^2 \max_{r} |\langle \bw_{y_i,r}^{\tilde k-1}, \bxi_{i}\rangle|} &\leq \frac{2 \sigma_\xi^2 \sqrt{d \log(4n^2/\delta) }}{C_\ell 0.99 \sigma_\xi^2 d} n \alpha \sigma_0^{-1}\sigma_\xi^{-1} d^{-1/2}  \nonumber\\
    &= 8.4C_\ell^{-1} n \alpha\frac{ \sqrt{\log(4n^2/\delta)}}{d \sigma_0 \sigma_\xi} \nonumber\\
    &\leq 0.01 \label{rjirovkfnbn}
\end{align}
where we use the lower and upper bound on loss derivatives during the first stage, as well as Lemma \ref{lemma:init_bound} and Lemma \ref{lemma:init_inner_lower_upper}. The last inequality is by {$\sigma_0 \geq 840 n C_\ell^{-1} d^{-1} \sigma_\xi^{-1} \alpha \sqrt{\log(4n^2/\delta)}$}. Then we have 
\begin{align*}
    \max_r|\langle \bw_{y_i,r}^{\tilde  k} , \bxi_i \rangle| &\geq \Big( 1   - \frac{\eta}{nm} \ell_i'^{\tilde k-1} \| \bxi_i \|^2 \Big)\max_r | \langle \bw_{y_i,r}^{\tilde k-1}, \bxi_i  \rangle |  - \frac{\eta}{nm} \sum_{i'\neq i} |\ell_{i'}'^{\tilde k-1}| \cdot | \langle \bw_{y_i,r}^{\tilde k-1}, \bxi_{i'} \rangle | \cdot | \langle  \bxi_{i'}, \bxi_i \rangle  | \\
    &\geq  \Big( 1 + \frac{\eta}{nm} 0.99 |\ell_i'^{\tilde k-1}| \| \bxi_i \|^2 \Big)  \max_r\left| \langle \bw_{y_i,r}^{\tilde k-1}, \bxi_i  \rangle \right| \\
    &\geq \max_r \left| \langle \bw_{y_i,r}^{\tilde k-1}, \bxi_i  \rangle \right| \\
    &\geq \max_r |\langle \bw_{y_i,r}^{0}, \bxi_i \rangle|
\end{align*}

Let $B^k_i \coloneqq \max_{r} |\langle \bw_{y_i,r}^{k} , \bxi_i \rangle|$ and  we obtain for any $k \leq T_1$, 
\begin{align*}
    B_{i}^k \geq \Big( 1 + \frac{\eta}{nm} 0.99 |\ell_i'^{\tilde k-1}| \| \bxi_i \|^2 \Big) B_i^{k-1} &\geq \Big( 1 + \frac{\eta \sigma_\xi^2 d}{nm} 0.98 C_\ell \Big) B_i^{k-1} \\
    &\geq \Big( 1 + \frac{\eta \sigma_\xi^2 d}{nm} 0.98 C_\ell \Big)^k B_i^{0} \\
    &\geq \Big( 1 + \frac{\eta \sigma_\xi^2 d}{nm} 0.98 C_\ell \Big)^k \sigma_0 \sigma_\xi \sqrt{d}/4
\end{align*}
where we use \eqref{rjirovkfnbn}, which holds for iteration $k$ and Lemma \ref{lemma:init_inner_lower_upper}. Consider 
\begin{align*}
    T_1 = \log(8 \sigma_0^{-1} \sigma_\xi^{-1} d^{-1/2}) / \log \big(1 + \frac{\eta \sigma_\xi^2 d}{nm} 0.98 C_\ell \big) = \Theta( \eta^{-1} nm \sigma_\xi^{-2} d^{-1} \log(8 \sigma_0^{-1} \sigma_\xi^{-1} d^{-1/2}))
\end{align*}
for $\eta$ sufficiently small. 
Then it can be shown that 
\begin{align*}
    B_i^{T_1} = \max_{r} |\langle \bw_{y_i,r}^{T_1} , \bxi_i \rangle | \geq 2
\end{align*}

In addition, we show the average also grows to a constant order with a similar argument. In particular, from \eqref{ijfjrjigmkk}, we have 
\begin{align*}
   \frac{1}{m} \sum_{r=1}^m |\langle \bw_{y_i,r}^{k} , \bxi_i \rangle| &\geq \Big( 1   - \frac{\eta}{nm} \ell_i'^k \| \bxi_i \|^2 \Big) \frac{1}{m} \sum_{r=1}^m | \langle \bw_{y_i,r}^{k-1}, \bxi_i  \rangle | \\
   &\quad - \frac{\eta}{nm} \sum_{i'\neq i} |\ell_{i'}'^{k-1}| \cdot \frac{1}{m} \sum_{r=1}^m | \langle \bw_{y_i,r}^{k-1}, \bxi_{i'} \rangle | \cdot | \langle  \bxi_{i'}, \bxi_i \rangle  |
\end{align*}
Using a similar induction argument, we can show 
\begin{align*}
    \frac{1}{m} \sum_{r=1}^m |\langle \bw_{y_i,r}^{k} , \bxi_i \rangle| \geq \frac{1}{m} \sum_{r=1}^m |\langle \bw_{y_i,r}^{k-1} , \bxi_i \rangle| \geq \frac{1}{m} \sum_{r=1}^m |\langle \bw_{y_i,r}^{0} , \bxi_i \rangle| \geq \sigma_0 \sigma_\xi \sqrt{d}/2
\end{align*}
for all $k \leq T_1$, where the last inequality follows from Lemma \ref{lemma:noise_inner_concen}. Then we can show at $T_1$, 
\begin{align*}
    \frac{1}{m} \sum_{r=1}^m |\langle \bw_{y_i,r}^{T_1} , \bxi_i \rangle| \geq \left( 1 + \frac{\eta \sigma_\xi^2 d}{nm} 0.98 C_\ell \right)^{T_1} \sigma_0 \sigma_\xi \sqrt{d}/2 \geq 4.
\end{align*}

In the meantime, \eqref{ritjrigjd} allows to bound the growth of signal learning as for any $j = \pm 1$, 
\begin{align*}
    &\max_{r} | \langle \bw_{j,r}^{T_1} , \bmu_j\rangle | \\
    &\leq \Big(1 + 0.51 \frac{\eta \| \bmu \|^2}{m} \Big)^{T_1} \sqrt{2 \log(8m/\delta)} \sigma_0 \| \bmu\|  \\
    &= \exp\Big( \frac{\log( 1 + 0.51 \frac{\eta\| \bmu \|^2}{m} )}{\log( 1 + 0.98\frac{\eta \sigma_\xi^2 d C_\ell}{nm} )}  \log\big( 8 \sigma_0^{-1} \sigma_\xi^{-1} d^{-1/2} \big) \Big)  \sqrt{2 \log(8m/\delta)} \sigma_0 \| \bmu\| \\
    &\leq \exp \left( \big( 0.53 C_\ell^{-1} n \SNR^2 + \widetilde O( n^{-1} \SNR^{-2} \eta )\big) \log\big( 8 \sigma_0^{-1} \sigma_\xi^{-1} d^{-1/2} \big) \right) \sqrt{2 \log(8m/\delta)} \sigma_0 \| \bmu\| \\
    &\leq 8 \sqrt{2 \log(8m/\delta)} \SNR \\
    &=\widetilde O(n^{-1/2})
\end{align*}
where the first inequality is by Lemma \ref{lemma:init_inner_lower_upper} and the second inequality is by Taylor expansion around $\eta = 0$. The third inequality is by choosing $\eta$ sufficiently small and based on the condition that $n^{-1} \SNR^{-2} \geq 0.55 C_\ell^{-1}$. The last equality is by the SNR condition. 
\end{proof}

\subsubsection{Second stage}

We choose $\bW^*$ to be 
\begin{align*}
    \bw^*_{j,r} = \bw_{j,r}^0 + 2 \log(4/\epsilon)  \sum_{i=1}^n  \sone(y_i = j) \sign( \langle \bw_{j,r}^{0}, \bxi_i \rangle ) \frac{\bxi_i}{\| \bxi_i \|^2}
\end{align*}

First we show the invariance of sign of noise inner product after the first stage. 

\begin{lemma}
\label{lemma:noise_sign_invariance}
Under the same settings  as in Theorem \ref{them:super_noise_first}, we have   $\max_r |\langle \bw_{y_i,r}^k, \bxi_i \rangle| \geq 1$ and $ \frac{1}{m} \sum_{r=1}^m |\langle \bw_{y_i,r}^{k}, \bxi_i  \rangle | \geq 2$ for all $T_1  \leq k \leq T^*$ and any $i \in [n]$.
\end{lemma}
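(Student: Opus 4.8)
The plan is to mirror the monotonicity argument used for the signal direction in Lemma \ref{lemma:monotonic_signal}, now carried out for the noise coefficients $\rho_{y_i,r,i}^k$ and transported back to the inner products $\langle\bw_{y_i,r}^k,\bxi_i\rangle$ via the decomposition of Lemma \ref{lemma:coef_update}. First I would record the starting point: Theorem \ref{them:super_noise_first} gives $\max_r|\langle\bw_{y_i,r}^{T_1},\bxi_i\rangle|\geq 2$ and $\frac1m\sum_{r=1}^m|\langle\bw_{y_i,r}^{T_1},\bxi_i\rangle|\geq 4$ for every $i\in[n]$. By Lemma \ref{lemma:inner_coef_noise} the discrepancy $|\langle\bw_{y_i,r}^k-\bw_{y_i,r}^0,\bxi_i\rangle-\rho_{y_i,r,i}^k|\leq 4\sqrt{\log(4n^2/\delta)/d}\,n\alpha=o(1)$ under Condition \ref{assump:super} ($d$ large), and by Lemma \ref{lemma:sign_invaraince_noise} the three quantities $\langle\bw_{y_i,r}^k,\bxi_i\rangle$, $\rho_{y_i,r,i}^k$, $\langle\bw_{y_i,r}^0,\bxi_i\rangle$ all share the same sign. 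Hence $|\langle\bw_{y_i,r}^k,\bxi_i\rangle|\geq|\rho_{y_i,r,i}^k+\langle\bw_{y_i,r}^0,\bxi_i\rangle|-4\sqrt{\log(4n^2/\delta)/d}\,n\alpha=|\rho_{y_i,r,i}^k|+|\langle\bw_{y_i,r}^0,\bxi_i\rangle|-o(1)\geq|\rho_{y_i,r,i}^k|-o(1)$, and correspondingly at $k=T_1$ we get $\max_r|\rho_{y_i,r,i}^{T_1}|\geq 2-o(1)\geq 1.9$ and $\frac1m\sum_r|\rho_{y_i,r,i}^{T_1}|\geq 4-o(1)\geq 3$.

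Next I would establish that $|\rho_{y_i,r,i}^k|$ is nondecreasing in $k$ for all $i,r$. From Lemma \ref{lemma:coef_update}, $\rho_{y_i,r,i}^{k+1}=\rho_{y_i,r,i}^k+\frac{\eta}{nm}|\ell_i'^k|\langle\bw_{y_i,r}^k,\bxi_i\rangle\|\bxi_i\|^2$ (using $y_i^2=1$ and $\ell_i'^k<0$); since $\langle\bw_{y_i,r}^k,\bxi_i\rangle$ and $\rho_{y_i,r,i}^k$ have the same sign by Lemma \ref{lemma:sign_invaraince_noise}, the increment is of that same sign, so $|\rho_{y_i,r,i}^{k+1}|=|\rho_{y_i,r,i}^k|+\frac{\eta}{nm}|\ell_i'^k|\,|\langle\bw_{y_i,r}^k,\bxi_i\rangle|\,\|\bxi_i\|^2\geq|\rho_{y_i,r,i}^k|$. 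Applying this from $T_1$ onward: for the maximum, fixing $r^*$ with $|\rho_{y_i,r^*,i}^{T_1}|\geq 1.9$ yields $|\rho_{y_i,r^*,i}^k|\geq 1.9$ and hence $|\langle\bw_{y_i,r^*}^k,\bxi_i\rangle|\geq 1.9-o(1)\geq 1$ for all $T_1\leq k\leq T^*$, which is the first claim; for the average, nondecreasingness of each summand gives $\frac1m\sum_r|\rho_{y_i,r,i}^k|\geq 3$, hence $\frac1m\sum_r|\langle\bw_{y_i,r}^k,\bxi_i\rangle|\geq 3-o(1)\geq 2$, which is the second claim.

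The heavy lifting has already been done: Lemma \ref{lemma:sign_invaraince_noise} (resting on the anti-concentration bound Lemma \ref{lemma:noise_anti} and the lower bound on $\sigma_0$ in Condition \ref{assump:super}) supplies the sign invariance, Lemma \ref{lemma:inner_coef_noise} controls the cross-interaction term, and Theorem \ref{them:super_noise_first} provides the constant-order initial magnitudes; so I anticipate no genuine obstacle. The only points needing a little care are bookkeeping ones: checking that the $o(1)$ terms — namely $4\sqrt{\log(4n^2/\delta)/d}\,n\alpha$ and the initialization magnitude $O(\sigma_0\sigma_\xi\sqrt d)$ — are safely below the slack between the bounds at $T_1$ ($2$, $4$) and the target bounds ($1$, $2$) under Condition \ref{assump:super}, and making sure the sign of $\langle\bw_{y_i,r}^0,\bxi_i\rangle$ is used correctly so the triangle-inequality step does not throw away its contribution.
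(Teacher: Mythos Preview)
Your proposal is correct and follows essentially the same approach as the paper: both transfer the constant-order bounds at $T_1$ from the inner products to the coefficients $\rho_{y_i,r,i}$ via Lemma~\ref{lemma:inner_coef_noise}, establish monotonicity of $|\rho_{y_i,r,i}^k|$ using the sign invariance of Lemma~\ref{lemma:sign_invaraince_noise} together with the coefficient update in Lemma~\ref{lemma:coef_update}, and then transfer back. Your monotonicity step is in fact slightly cleaner than the paper's---you invoke $\sign(\rho_{y_i,r,i}^k)=\sign(\langle\bw_{y_i,r}^k,\bxi_i\rangle)$ directly to get $|\rho_{y_i,r,i}^{k+1}|=|\rho_{y_i,r,i}^k|+\tfrac{\eta}{nm}|\ell_i'^k|\,|\langle\bw_{y_i,r}^k,\bxi_i\rangle|\,\|\bxi_i\|^2$ for every $r$, whereas the paper expands the inner product via the decomposition and does a small case analysis on the sign of $\langle\bw_{y_i,r}^0,\bxi_i\rangle$---but the substance is identical.
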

\begin{proof}[Proof of Lemma \ref{lemma:noise_sign_invariance}]

In addition to the two results, we also prove $\max_r |\rho^k_{y_i,r,i}| \geq 1.5$ and $\frac{1}{m} \sum_{r=1}^m |\rho^k_{y_i,r,i}| \geq 3$. We prove these results by induction. First, it is clear that at $k = T_1$, the bound regarding inner products are trivially satisfied by Theorem \ref{them:super_noise_first}. Then by Lemma \ref{lemma:inner_coef_noise}, we have
\begin{align*}
    &\max_r |\rho^{T_1}_{y_i,r,i}| \geq \max_r | \langle \bw_{y_i,r}^{T_1}, \bxi_i \rangle | - \beta - 4 \sqrt{\frac{ \log(4n^2/\delta)}{d}} n\alpha \geq 2-0.5 = 1.5 \\
    &\frac{1}{m} \sum_{r=1}^m |\rho^{T_1}_{y_i,r,i}| \geq \frac{1}{m} \sum_{r=1}^m  | \langle \bw_{y_i,r}^{T_1}, \bxi_i \rangle | - \beta - 4 \sqrt{\frac{ \log(4 n^2/\delta)}{d}} n\alpha \geq 4 - 1 = 3
\end{align*}
where the last inequalities are by Condition \ref{assump:super} for sufficiently large constant $C$.

Now suppose there exists a time $T_1 \leq \widetilde T \leq T^*$ such that the results hold for all $k \leq \widetilde T-1$. Then  at $k = \widetilde T$, recall the coefficient update as
\begin{align}
    \rho_{y_i,r,i}^{\widetilde T} &= \rho_{y_i,r,i}^{\widetilde T-1} + \frac{\eta}{nm} |\ell_i'^{\widetilde T-1}| \langle \bw_{y_i,r}^{\widetilde T-1} , \bxi_i\rangle \|\bxi_i \|^2 \label{rjrktlll}
\end{align}
If $\langle \bw_{y_i,r}^{0} , \bxi_i\rangle > 0$, by Lemma \ref{lemma:sign_invaraince_noise} we have $\langle\bw_{y_i,r}^{\widetilde T-1} , \bxi_i \rangle, \rho_{y_i,r,i}^{\widetilde T-1} > 0$. Then
\begin{align}
     \rho_{y_i,r,i}^{\widetilde T} &= \rho_{y_i,r,i}^{\widetilde T-1} + \frac{\eta}{nm} |\ell_i'^{\widetilde T-1}| \langle \bw_{y_i,r}^{\widetilde T-1} , \bxi_i\rangle \|\bxi_i \|^2 \nonumber \\
     &\geq \rho_{y_i,r,i}^{\widetilde T-1} + \frac{\eta}{nm} |\ell_i'^{\widetilde T-1}| \big(\rho_{y_i,r,i}^{\widetilde T-1} + \langle \bw_{y_i,r}^{0}, \bxi_i \rangle - 4 \sqrt{\frac{\log(4n^2/\delta)}{d}} n \alpha \big) \|\bxi_i \|^2 \nonumber \\
     &\geq \rho_{y_i,r,i}^{\widetilde T-1} + \frac{\eta}{nm} |\ell_i'^{\widetilde T-1}| \big(\rho_{y_i,r,i}^{\widetilde T-1} + \frac{1}{2} \langle \bw_{y_i,r}^{0}, \bxi_i \rangle \big) \|\bxi_i \|^2. \nonumber
\end{align}
Then taking maximum over $r$, 
\begin{align*}
     \max_r |\rho_{y_i,r,i}^{\widetilde T} | \geq \max_r |\rho_{y_i,r,i}^{\widetilde T-1}| + \frac{\eta \|\bxi_i \|^2}{2nm} |\ell_i'^{\widetilde T-1}| \max_r |\rho_{y_i,r,i}^{\widetilde T-1}|  \geq \max_r |\rho_{y_i,r,i}^{\widetilde T-1}| \geq 1.5
\end{align*}
where the first inequality follows from $\langle \bw_{y_i,r}^{0}, \bxi_i \rangle/2 \leq 0.5 \leq \max_r |\rho_{y_i,r,i}^{\widetilde T-1}|/2$ based on Condition \ref{assump:super}. Similarly, when $\langle \bw_{y_i,r}^{0} , \bxi_i\rangle < 0$, we can obtain the same result.  Then, we have 
\begin{align*}
    \max_r |\langle \bw_{y_i,r}^{\widetilde T}, \bxi_i \rangle|  \geq \max_r |\rho_{y_i,r,i}^{\widetilde T}| - \beta - 4 \sqrt{\frac{\log(4n^2/\delta)}{d}} n \alpha \geq 1.5 - 0.5 = 1.
\end{align*}

Furthermore, we prove the results for the average quantities in a similar manner. First, from the coefficient update, and by Lemma \ref{lemma:sign_invaraince_noise}, $\sign(\rho_{y_i,r,i}^{\widetilde T-1}) = \sign(\langle \bw_{y_i,r}^{\widetilde T-1} , \bxi_i\rangle)$ and thus taking the average of absolute value on both sides of \eqref{rjrktlll}, we get
\begin{align*}
    \frac{1}{m} \sum_{r=1}^m|\rho_{y_i,r,i}^{\widetilde T}| &= \frac{1}{m} \sum_{r=1}^m|\rho_{y_i,r,i}^{\widetilde T-1}| + \frac{\eta}{nm} |\ell_i'^{\widetilde T-1}| \frac{1}{m} \sum_{r=1}^m |\langle \bw_{y_i,r}^{\widetilde T-1} , \bxi_i\rangle| \|\bxi_i \|^2 \nonumber\\ 
    &\geq \frac{1}{m} \sum_{r=1}^m|\rho_{y_i,r,i}^{\widetilde T-1}| + \frac{\eta}{nm} |\ell_i'^{\widetilde T-1}| \big(   \frac{1}{m} \sum_{r=1}^m|\rho^{\widetilde T-1}_{y_i,r,i}|  - \beta - 4 \sqrt{\frac{\log(4n^2/\delta)}{d}} n \alpha \big)\|\bxi_i \|^2 \\
    &\geq \frac{1}{m} \sum_{r=1}^m|\rho_{y_i,r,i}^{\widetilde T-1}| + \frac{\eta}{2nm} |\ell_i'^{\widetilde T-1}| \frac{1}{m} \sum_{r=1}^m|\rho^{\widetilde T-1}_{y_i,r,i}|  \|\bxi_i \|^2 \\
    &\geq \frac{1}{m} \sum_{r=1}^m|\rho_{y_i,r,i}^{\widetilde T-1}| \geq 3
\end{align*}
where we use $|a + b| = |a| + |b|$ when $\sign(a) = \sign(b)$. 
Then, we have 
\begin{align*}
    \frac{1}{m} \sum_{r=1}^m |\langle \bw_{y_i,r}^{\widetilde T}, \bxi_i \rangle|  \geq \frac{1}{m} \sum_{r=1}^m |\rho_{y_i,r,i}^{\widetilde T}| - \beta - 4 \sqrt{\frac{\log(4n^2/\delta)}{d}} n \alpha \geq 3 - 1 = 2.
\end{align*}
where the inequality is by Condition \ref{assump:super}. 
\end{proof}

\begin{lemma}
\label{lemma:wstar_bound_noise}
Under Condition \ref{assump:super}, we have $\| \bW^{T_1} - \bW^* \| = O(  \sqrt{n m} \log(1/\epsilon)  \sigma_\xi^{-1} d^{-1/2} )$.
\end{lemma}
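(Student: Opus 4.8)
The plan is to mirror the proof of Lemma~\ref{lemma:wstar_bound_signal}: insert $\bW^0$ by the triangle inequality, $\|\bW^{T_1}-\bW^*\| \le \|\bW^{T_1}-\bW^0\| + \|\bW^0-\bW^*\|$, and bound the two pieces separately, each by $O(\sqrt{nm}\,\sigma_\xi^{-1}d^{-1/2})$ up to a $\log(1/\epsilon)$ factor. For the second piece I would use the definition $\bw^*_{j,r}-\bw^0_{j,r} = 2\log(4/\epsilon)\sum_{i\in\gS_j}\sign(\langle\bw^0_{j,r},\bxi_i\rangle)\|\bxi_i\|^{-2}\bxi_i$ together with the mutual orthogonality of $\{\bxi_i\}$ (Definition~\ref{def_data_model}), which gives $\|\bw^*_{j,r}-\bw^0_{j,r}\|^2 = 4\log^2(4/\epsilon)\sum_{i\in\gS_j}\|\bxi_i\|^{-2}$; then $|\gS_j|=\Theta(n)$ (Lemma~\ref{lemma:bound_S}) and $\|\bxi_i\|^2=\Theta(\sigma_\xi^2 d)$ (Lemma~\ref{lemma:init_bound}) make this $O(\log^2(4/\epsilon)\,n\,\sigma_\xi^{-2}d^{-1})$, and summing over $j=\pm1$, $r\in[m]$ gives $\|\bW^0-\bW^*\| = O(\sqrt{nm}\log(1/\epsilon)\sigma_\xi^{-1}d^{-1/2})$.

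For the first piece I would invoke Lemma~\ref{lemma:coef_update}, which guarantees that $\bw^{T_1}_{j,r}-\bw^0_{j,r}$ has no component orthogonal to $\mathrm{span}\{\bmu_1,\bmu_{-1},\bxi_1,\dots,\bxi_n\}$, so that
\[
\|\bw^{T_1}_{j,r}-\bw^0_{j,r}\|^2 = \sum_{y=\pm1}\frac{\langle\bw^{T_1}_{j,r}-\bw^0_{j,r},\bmu_y\rangle^2}{\|\bmu\|^2} + \sum_{i=1}^n\frac{\langle\bw^{T_1}_{j,r}-\bw^0_{j,r},\bxi_i\rangle^2}{\|\bxi_i\|^2}.
\]
I would bound the signal terms using $\max_{j,r,y}|\langle\bw^{T_1}_{j,r},\bmu_y\rangle| = \widetilde O(n^{-1/2})$ from Theorem~\ref{them:super_noise_first} and the initialization bounds in Lemma~\ref{lemma:init_inner_lower_upper}, giving a $\widetilde O(n^{-1}\|\bmu\|^{-2})$ contribution per term. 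For the noise terms I would split $i\in\gS_j$ from $i\notin\gS_j$ and use Proposition~\ref{prop:global_super}: $|\langle\bw^{T_1}_{j,r},\bxi_i\rangle|\le\alpha=\widetilde O(1)$ on-diagonal and $|\langle\bw^{T_1}_{j,r},\bxi_i\rangle|\le\beta+12\sqrt{\log(4n^2/\delta)/d}\,n\alpha$ off-diagonal, so together with $\|\bxi_i\|^2=\Theta(\sigma_\xi^2 d)$ the $i$-sum is $\widetilde O(n\sigma_\xi^{-2}d^{-1})$, which dominates the signal contribution under Condition~\ref{assump:super}. Summing over $j=\pm1$, $r\in[m]$ then yields $\|\bW^{T_1}-\bW^0\| = \widetilde O(\sqrt{nm}\sigma_\xi^{-1}d^{-1/2})$, and combining with the bound on $\|\bW^0-\bW^*\|$ (absorbing the polylog into $\log(1/\epsilon)$, exactly as in Lemma~\ref{lemma:wstar_bound_signal}) finishes the argument.

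The main obstacle I anticipate is the on-diagonal noise sum $\sum_{i\in\gS_j}\langle\bw^{T_1}_{j,r},\bxi_i\rangle^2/\|\bxi_i\|^2$: ideally I want that for each fixed $(j,r)$ \emph{all} of these inner products are $\widetilde O(1)$, not just their maximum over $r$ (which is what Theorem~\ref{them:super_noise_first} provides directly). I expect to handle this by noting that in the first stage the noise dynamics for a fixed $i$ is, up to the negligible cross terms, a single multiplicative recursion shared by all neurons $r$ (cf.\ the proof of Theorem~\ref{them:super_noise_first}), so $\langle\bw^{T_1}_{y_i,r},\bxi_i\rangle$ stays within polylogarithmic factors of $\max_{r'}\langle\bw^{T_1}_{y_i,r'},\bxi_i\rangle=\Theta(1)$; the off-diagonal coefficients and the small signal component are then routine via Proposition~\ref{prop:global_super}. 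Alternatively, the cruder bound $|\langle\bw^{T_1}_{j,r},\bxi_i\rangle|\le\alpha$ already suffices for the stated $O(\cdot)$ once the resulting polylogarithmic factor is folded into $\log(1/\epsilon)$, which is the convention already in force in Lemma~\ref{lemma:wstar_bound_signal}.
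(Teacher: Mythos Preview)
Your proposal is correct and follows essentially the same route as the paper: insert $\bW^0$ via the triangle inequality, bound $\|\bW^0-\bW^*\|$ directly from the definition of $\bW^*$, and bound $\|\bW^{T_1}-\bW^0\|$ by decomposing along the $\bmu_{\pm1},\bxi_i$ directions and invoking the first-stage scales together with the global $\alpha$ bound from Proposition~\ref{prop:global_super}. One small correction: the $\{\bxi_i\}$ are not mutually orthogonal under Definition~\ref{def_data_model} (they are i.i.d.\ Gaussian, only orthogonal to $\bmu_{\pm1}$), but the near-orthogonality $|\langle\bxi_i,\bxi_{i'}\rangle|=\widetilde O(\sigma_\xi^2\sqrt{d})$ from Lemma~\ref{lemma:init_bound} makes the cross terms negligible, and your ``obstacle'' is a non-issue since Proposition~\ref{prop:global_super} already gives $|\langle\bw^k_{y_i,r},\bxi_i\rangle|\le\alpha$ uniformly in $r$ (the paper uses exactly this crude bound).
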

\begin{proof}[Proof of Lemma \ref{lemma:wstar_bound_noise}]
The proof follows similarly as in Lemma \ref{lemma:wstar_bound_signal}. 
Let $\bP_{\bxi}$ be the projection matrix to the direction of $\bxi$, i.e., $\bP_{\bxi} = \frac{\bxi \bxi^\top}{\| \bxi\|^2}$. Then we can represent 
\begin{align*}
    \bw^k_{j,r} - \bw^0_{j,r} &= \bP_{\bmu_1} (\bw^k_{j,r} - \bw^0_{j,r}) + \bP_{\bmu_{-1}} (\bw^k_{j,r} - \bw^0_{j,r}) + \sum_{i=1}^n \bP_{\bxi_i} (\bw^k_{j,r} - \bw^0_{j,r}) \\
    &\quad +  \Big( \bI - \bP_{\bmu_1} - \bP_{\bmu_{-1}}  - \sum_{i=1}^n \bP_{\bxi_i}   \Big) (\bw^k_{j,r} - \bw^0_{j,r}).
\end{align*}

By the scale difference at $T_1$ and the fact that gradient descent only updates in the direction of $\bmu_j$, $j = \pm 1$ and $\bxi_i$,  we can bound 
\begin{align*}
    &\| \bW^{T_1} - \bW^0 \|^2 \\
    &\leq \sum_{j=\pm 1, r \in [m]} \Big(  \frac{ \langle \bw^{T_1}_{j,r} - \bw^0_{j,r} ,\bmu_1 \rangle^2}{\| \bmu\|^2}  +  \frac{ \langle \bw^{T_1}_{j,r} - \bw^0_{j,r} ,\bmu_{-1} \rangle^2}{\| \bmu\|^2}  + \sum_{i=1}^n \frac{ \langle \bw^{T_1}_{j,r} - \bw^0_{j,r}, \bxi_i \rangle^2}{\| \bxi_i \|^2}  \Big) \\
    &\quad + \sum_{j=\pm 1, r \in [m]} \left\| \Big( \bI - \bP_{\bmu_1} - \bP_{\bmu_{-1}}  - \sum_{i=1}^n \bP_{\bxi_i}   \Big) (\bw^{T_1}_{j,r} - \bw^0_{j,r})\right\|^2 \\
    &\leq 2m \Big(  \frac{2  \langle \bw_{j,r}^{T_1}, \bmu_j \rangle^2 +2 \langle \bw_{j,r}^{T_1}, \bmu_{-j}\rangle^2 + 2 \langle \bw_{j,r}^{0}, \bmu_{-j} \rangle^2 + 2 \langle \bw_{j,r}^0 , \bmu_{j} \rangle^2}{\| \bmu\|^2} \\
    &\quad + n \max_{j,r} \frac{2 \langle \bw_{j,r}^{{T_1}},  \bxi_{i}  \rangle^2 + 2 \langle \bw^0_{j,r}, \bxi_i \rangle^2}{\| \bxi_i \|^2}\Big) + \sum_{j=\pm 1, r \in [m]} \left\| \Big( \bI - \bP_{\bmu_1} - \bP_{\bmu_{-1}}  - \sum_{i=1}^n \bP_{\bxi_i}   \Big) (\bw^{T_1}_{j,r} - \bw^0_{j,r})\right\|^2  \\
    &\leq  O( m n \sigma^{-2}_\xi d^{-1})
\end{align*}
where we  use the scale difference at $T_1$.  Therefore, 
\begin{align*}
    \| \bW^{T_1} - \bW^* \| &\leq \| \bW^{T_1} - \bW^0 \| + \| \bW^0 - \bW^*  \| \\
    &\leq  O(\sqrt{mn} \sigma_\xi^{-1} d^{-1/2} ) +  O(\sqrt{nm} \log(1/\epsilon)  \sigma_\xi^{-1} d^{-1/2}) \\
    &\leq  O(  \sqrt{nm}  \log(1/\epsilon)  \sigma_\xi^{-1} d^{-1/2} )
\end{align*}
where we use the definition of $\bW^*$. 
\end{proof}

\begin{lemma}
\label{lemma:intermedia_sd_noise}
Under Condition \ref{assump:super}, we have for all $T_1 \leq k \leq T^*$
\begin{align*}
    \| \bW^{k} - \bW^* \|^2 - \| \bW^{k+1} - \bW^* \|^2 \geq 2  \eta L_S(\bW^t) - \eta \epsilon
\end{align*}
\end{lemma}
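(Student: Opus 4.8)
\textbf{Proof proposal for Lemma \ref{lemma:intermedia_sd_noise}.}

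The plan is to mirror the proof of Lemma \ref{lemma:intermedia_sd} from the signal-learning analysis, replacing the signal comparator by the noise comparator $\bW^*$ defined just above (which aligns $\bw^*_{j,r}$ with the $\bxi_i$ for $y_i=j$, along the sign of $\langle\bw^0_{j,r},\bxi_i\rangle$). The first and main step is to establish the lower bound
$y_i\langle\nabla f(\bW^k,\bx_i),\bW^*\rangle\geq 2\log(4/\epsilon)$ for every $i\in[n]$ and every $T_1\leq k\leq T^*$. To do this I would expand, as in the proof of Lemma \ref{lemma:intermedia_sd},
$y_i\langle\nabla f(\bW^k,\bx_i),\bW^*\rangle=\frac{1}{m}\sum_{j,r} j y_i\langle\bw_{j,r}^k,\bmu_{y_i}\rangle\langle\bmu_{y_i},\bw_{j,r}^*\rangle+\frac{1}{m}\sum_{j,r} j y_i\langle\bw_{j,r}^k,\bxi_i\rangle\langle\bxi_i,\bw_{j,r}^*\rangle$, and isolate the dominant contribution, which now comes from the noise term with $j=y_i$: by orthogonality $\langle\bxi_i,\bw^*_{y_i,r}\rangle=\langle\bxi_i,\bw^0_{y_i,r}\rangle+2\log(4/\epsilon)\sign(\langle\bw^0_{y_i,r},\bxi_i\rangle)+\sum_{i'\neq i,\,y_{i'}=y_i}2\log(4/\epsilon)\sign(\cdot)\langle\bxi_i,\bxi_{i'}\rangle\|\bxi_{i'}\|^{-2}$, so the leading part of this sum is $\frac{2\log(4/\epsilon)}{m}\sum_r|\langle\bw^k_{y_i,r},\bxi_i\rangle|$, which is $\geq 4\log(4/\epsilon)$ by the average lower bound $\frac1m\sum_r|\langle\bw^k_{y_i,r},\bxi_i\rangle|\geq 2$ established in Lemma \ref{lemma:noise_sign_invariance} (here the sign-invariance of $\langle\bw^k_{y_i,r},\bxi_i\rangle$ from Lemma \ref{lemma:sign_invaraince_noise} is used to turn $\langle\bw^k_{y_i,r},\bxi_i\rangle\,\sign(\langle\bw^0_{y_i,r},\bxi_i\rangle)$ into $|\langle\bw^k_{y_i,r},\bxi_i\rangle|$).

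It then remains to bound all the remaining error terms by $2\log(4/\epsilon)$ in absolute value. These are: (i) the signal terms, controlled via $|\langle\bw^k_{j,r},\bmu_{y_i}\rangle|\leq\alpha$ and $|\langle\bw^0_{j,r},\bmu_{y_i}\rangle|\leq\widetilde O(\sigma_0\|\bmu\|)$ from Proposition \ref{prop:global_super} and Lemma \ref{lemma:init_inner_lower_upper}, noting that $\bmu_{y_i}\perp\bxi_{i'}$ so only the $\bw^0$-part of $\bw^*_{j,r}$ survives here; (ii) the ``wrong class'' noise terms ($j=-y_i$), where $|\langle\bw^k_{-y_i,r},\bxi_i\rangle|=o(1)$ by \eqref{rho_further_false_bound} while $|\langle\bxi_i,\bw^*_{-y_i,r}\rangle|=\widetilde O(\max\{\sigma_0\sigma_\xi\sqrt d,\,n d^{-1/2}\log(4/\epsilon)\})$; (iii) the cross-noise terms for $i'\neq i$ with $y_{i'}=y_i$, of total size $\widetilde O(n d^{-1/2}\log(4/\epsilon))$ since $|\langle\bxi_i,\bxi_{i'}\rangle|\|\bxi_{i'}\|^{-2}=\widetilde O(d^{-1/2})$ by Lemma \ref{lemma:init_bound} and $|\langle\bw^k_{y_i,r},\bxi_{i'}\rangle|\leq\alpha$; and (iv) the $\langle\bxi_i,\bw^0_{y_i,r}\rangle$ term of size $\widetilde O(\sigma_0\sigma_\xi\sqrt d)$. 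Each of these is $o(1)$ (hence $\leq 2\log(4/\epsilon)$ for the constant $\epsilon=0.1$) under Condition \ref{assump:super}, in particular $d=\widetilde\Omega(n^2)$, $\sigma_0\leq\widetilde O(\min\{\|\bmu\|^{-1},\sigma_\xi^{-1}d^{-1/2}\})$. Combining gives $y_i\langle\nabla f(\bW^k,\bx_i),\bW^*\rangle\geq 4\log(4/\epsilon)-2\log(4/\epsilon)=2\log(4/\epsilon)$.

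With this in hand the conclusion follows exactly as in Lemma \ref{lemma:intermedia_sd}: write
\begin{align*}
\|\bW^k-\bW^*\|^2-\|\bW^{k+1}-\bW^*\|^2
&=2\eta\langle\nabla L_S(\bW^k),\bW^k-\bW^*\rangle-\eta^2\|\nabla L_S(\bW^k)\|^2\\
&=\frac{2\eta}{n}\sum_{i=1}^n\ell_i'^k\,y_i\big(2 f(\bW^k,\bx_i)-\langle\nabla f(\bW^k,\bx_i),\bW^*\rangle\big)-\eta^2\|\nabla L_S(\bW^k)\|^2,
\end{align*}
using $\langle\nabla f(\bW^k,\bx_i),\bW^k\rangle=2 f(\bW^k,\bx_i)$ (degree-two homogeneity of the quadratic-activation network). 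Since $\ell_i'^k<0$ and $y_i\langle\nabla f(\bW^k,\bx_i),\bW^*\rangle\geq 2\log(4/\epsilon)$, each summand is $\geq 2\ell_i'^k\big(y_i f(\bW^k,\bx_i)-\log(4/\epsilon)\big)\geq 2\big(\ell(y_i f(\bW^k,\bx_i))-\ell(\log(4/\epsilon))\big)$ by convexity of $\ell$, and $\ell(\log(4/\epsilon))=\log(1+\epsilon/4)\leq\epsilon/4$; summing yields $\geq 4\eta L_S(\bW^k)-\eta\epsilon$. Finally, Lemma \ref{lemma:grad_upper} gives $\eta^2\|\nabla L_S(\bW^k)\|^2=O(\eta^2\max\{\|\bmu\|^2,\sigma_\xi^2 d\})L_S(\bW^k)\leq 2\eta L_S(\bW^k)$ by the learning-rate bound in Condition \ref{assump:super}, so the right-hand side is $\geq 2\eta L_S(\bW^k)-\eta\epsilon$, as claimed. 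The main obstacle is step two, and specifically the cross-noise terms in (iii): unlike the signal case, the comparator and the gradient both live in noise directions, so one must exploit both the near-orthogonality $|\langle\bxi_i,\bxi_{i'}\rangle|=\widetilde O(\sqrt d)$ and the fact that $\bw^*_{j,r}$ is supported only on $\{\bxi_i:y_i=j\}$ to keep every wrong-class and cross term $o(1)$; this is where $d=\widetilde\Omega(n^2)$ and the upper bound on $\sigma_0$ are essential.
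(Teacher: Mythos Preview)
Your proposal is correct and follows essentially the same route as the paper: you isolate the dominant term $\frac{2\log(4/\epsilon)}{m}\sum_r|\langle\bw^k_{y_i,r},\bxi_i\rangle|\geq 4\log(4/\epsilon)$ via Lemmas~\ref{lemma:noise_sign_invariance} and~\ref{lemma:sign_invaraince_noise}, bound the remaining signal, cross-noise, and initialization terms by $o(1)$ under Condition~\ref{assump:super}, and then finish with the same homogeneity/convexity/Lemma~\ref{lemma:grad_upper} argument as in Lemma~\ref{lemma:intermedia_sd}. Your error-term bookkeeping (i)--(iv) is a mild regrouping of the paper's $A_{10},A_{11},A_{12}$, but the estimates and the ingredients used are the same.
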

\begin{proof}[Proof of Lemma \ref{lemma:intermedia_sd_noise}]
The proof follows from similar arguments as for Lemma \ref{lemma:intermedia_sd}. We first obtain a lower bound on $y_i \langle \nabla f(\bW^t, \bx_i), \bW^* \rangle$ for any $i \in [n]$ for all $T_1 \leq k \leq T^*$. 
\begin{align*}
    y_i \langle \nabla f(\bW^k, \bx_i), \bW^* \rangle &= \frac{1}{m} \sum_{j,r} j y_i \langle \bw_{j,r}^k, \bmu_{y_i} \rangle \langle \bmu_{y_i},  \bw_{j,r}^* \rangle + \frac{1}{m} \sum_{j,r} j y_i \langle \bw_{j,r}^k , \bxi_i\rangle \langle \bxi_i , \bw_{j,r}^* \rangle \\
    &=\frac{1}{m} \sum_{j,r} j y_i \langle \bw_{j,r}^k, \bmu_{y_i} \rangle \langle \bmu_{y_i},  \bw_{j,r}^0 \rangle   + \frac{1}{m} \sum_{j,r} j y_i \langle \bw_{j,r}^k , \bxi_i\rangle \langle \bxi_i , \bw_{j,r}^0 \rangle\\
    &\quad +  \frac{1}{m} \sum_{j = \pm 1} \sum_{r=1}^m \sum_{i'=1}^n j y_i \langle \bw_{j,r}^k, \bxi_i  \rangle \sone (j = y_{i'}) \frac{\langle \bxi_i , \bxi_{i'} \rangle}{\| \bxi_{i'} \|^2} 2 \log(4/\epsilon)  \\
    &= \underbrace{\frac{1}{m} \sum_{ r=1}^m |\langle \bw_{y_i,r}^k, \bxi_i \rangle| 2 \log(4/\epsilon)}_{A_9}  + \underbrace{\frac{1}{m} \sum_{j,r}  \sum_{i' \neq i} \langle \bw_{y_i,r}^k, \bxi_i \rangle 2  \log(4/\epsilon) \frac{\langle \bxi_i, \bxi_{i'} \rangle}{\| \bxi_{i'}\|^2}}_{A_{10}} \\
    &\quad + \underbrace{\frac{1}{m} \sum_{j,r} j y_i \langle \bw_{j,r}^k, \bmu_{y_i} \rangle \langle \bmu_{y_i},  \bw_{j,r}^0 \rangle}_{A_{11}}   + \underbrace{\frac{1}{m} \sum_{j,r} j y_i \langle \bw_{j,r}^k , \bxi_i\rangle \langle \bxi_i , \bw_{j,r}^0 \rangle}_{A_{12}} 
\end{align*}
where the second equality is by definition of $\bW^*$. The third equality is by Lemma \ref{lemma:noise_sign_invariance} and Lemma \ref{lemma:sign_invaraince_noise} on the sign invariance. We next bound based on the scale difference and Lemma \ref{lemma:init_bound},
\begin{align*}
    &|A_{10}| = \widetilde O( n d^{-1/2}),\quad  |A_{11}|   = \widetilde O(\sigma_0 \| \bmu\|), \quad |A_{12}| \leq \widetilde O(\sigma_0 \sigma_\xi \sqrt{d})
\end{align*}
where we use the global bound on the inner product by $\widetilde O(1)$. Next, by Theorem \ref{them:super_noise_first} and Lemma \ref{lemma:noise_sign_invariance}, we can show 
$\frac{1}{m} \sum_{r=1}^m|\langle \bw^{k}_{y_i, r} , \bmu_{y_i} \rangle| \geq 2$ for all $i \in [n], k \geq T_1$ and we can bound 
\begin{align*}
    A_9 \geq 4 \log(4/\epsilon)
\end{align*}
Combining the bound for $A_9$, $A_{10}, A_{11}, A_{12}$, we have 
\begin{align}
    y_i \langle \nabla f(\bW^k, \bx_i) , \bW^*\rangle \geq 2 \log(4/\epsilon) \label{rjigkrkmfnjg}
\end{align}
where we bound $|A_{10}| + |A_{11} | + |A_{12}| \leq 2 \log(4/\epsilon)$ under Condition \ref{assump:super}. 

Further, we derive 
\begin{align*}
    &\|  \bW^k - \bW^*\|^2 - \| \bW^{k+1} - \bW^*\|^2 \\
    &= 2 \eta \langle \nabla L_S(\bW^k), \bW^k - \bW^* \rangle - \eta^2 \| \nabla L_S(\bW^k) \|^2 \\
    &= \frac{2\eta}{n} \sum_{i=1}^n \ell_i'^k y_i \big(2 f(\bW^k, \bx_i) - \langle \nabla f(\bW^k, \bx_i) , \bW^*\rangle \big) - \eta^2 \| \nabla L_S(\bW^k) \|^2 \\
    &\geq \frac{2\eta}{n} \sum_{i=1}^n \ell_i'^k  \big(2y_i f(\bW^k, \bx_i) - 2 \log(2/\epsilon) \big) - \eta^2 \| \nabla L_S(\bW^k) \|^2 \\
    &\geq \frac{4 \eta}{n} \sum_{i=1}^n \big( \ell( y_i f(\bW^k, \bx_i) )  - \epsilon/4 \big) - \eta^2 \| \nabla L_S(\bW^k) \|^2 \\
    &\geq 2 \eta L_S(\bW^k) - \eta \epsilon
\end{align*}
where the first inequality is by \eqref{rjigkrkmfnjg} and the second inequality is by convexity of cross-entropy function and the last inequality is by Lemma \ref{lemma:grad_upper}.
\end{proof}

\begin{theorem}
\label{them:super_noise_second}
Under the same settings as in Theorem \ref{them:super_noise_first}, let $T = T_1 +  \lfloor \frac{\| \bW^{T_1} - \bW^* \|^2}{\eta \epsilon}  \rfloor = T_1 +  O(  \eta^{-1} \epsilon^{-1} m n \sigma_\xi^{-2} d^{-1}  )$. Then we have 
\begin{itemize}
    \item there exists $T_1 \leq k \leq T$ such that $L_S(\bW^{k}) \leq 0.1$.

    \item $\max_{j,r, y} |\langle \bw_{j,r}^{k}, \bmu_y  \rangle| = o(1)$ for all $T_1 \leq k \leq T$.
    
    \item $\max_r |\langle \bw_{y_i,r}^k, \bxi_i \rangle| \geq 1$ for all $i \in [n], T_1 \leq k \leq T$. 
\end{itemize}
\end{theorem}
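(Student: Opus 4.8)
The argument is the noise-memorization analogue of the proof of Theorem~\ref{them:super_signal_second}, now built around the choice of $\bW^*$ from Lemma~\ref{lemma:intermedia_sd_noise} and the distance bound of Lemma~\ref{lemma:wstar_bound_noise}. \emph{Loss convergence (first bullet).} Summing the one-step descent inequality of Lemma~\ref{lemma:intermedia_sd_noise}, namely $\|\bW^k - \bW^*\|^2 - \|\bW^{k+1} - \bW^*\|^2 \geq 2\eta L_S(\bW^k) - \eta\epsilon$, over $T_1 \leq k \leq T$ and dividing by $T - T_1 + 1$ gives
\begin{equation*}
\frac{1}{T - T_1 + 1}\sum_{k=T_1}^{T} L_S(\bW^k) \;\leq\; \frac{\|\bW^{T_1} - \bW^*\|^2}{2\eta(T - T_1 + 1)} + \frac{\epsilon}{2} \;\leq\; \epsilon ,
\end{equation*}
where the last step uses $T = T_1 + \lfloor \|\bW^{T_1} - \bW^*\|^2/(\eta\epsilon)\rfloor$. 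Hence some $k \in [T_1, T]$ satisfies $L_S(\bW^k) \leq \epsilon$; taking $\epsilon = 0.1$ yields the first bullet, and Lemma~\ref{lemma:wstar_bound_noise} gives $T - T_1 = O(\eta^{-1}\epsilon^{-1}mn\sigma_\xi^{-2}d^{-1})$. \emph{Noise lower bound (third bullet).} The bound $\max_r |\langle \bw_{y_i,r}^k, \bxi_i\rangle| \geq 1$ for all $T_1 \leq k \leq T$ and all $i \in [n]$ is immediate from Lemma~\ref{lemma:noise_sign_invariance}, since $T \leq T^*$.

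It remains to prove the second bullet, $\max_{j,r,y} |\langle \bw_{j,r}^k, \bmu_y\rangle| = o(1)$ for all $T_1 \leq k \leq T$. For $j \neq y$ the inner product $|\langle \bw_{j,r}^k, \bmu_{-j}\rangle|$ is non-increasing and bounded by $\beta = o(1)$ throughout, by Proposition~\ref{prop:global_super}. For $j = y$, the signal update has no cross terms: since $\ell_i'^k < 0$,
\begin{equation*}
|\langle \bw_{j,r}^{k+1}, \bmu_j\rangle| = \Big(1 + \frac{\eta\|\bmu\|^2}{nm}\sum_{i \in \gS_j}|\ell_i'^k|\Big)|\langle \bw_{j,r}^k, \bmu_j\rangle| \leq \Big(1 + \frac{\eta\|\bmu\|^2}{m}L_S(\bW^k)\Big)|\langle \bw_{j,r}^k, \bmu_j\rangle| ,
\end{equation*}
using $\sum_{i\in\gS_j}|\ell_i'^k| \leq \sum_{i=1}^n|\ell_i'^k| \leq nL_S(\bW^k)$ via $|\ell'| \leq \ell$. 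Unrolling from $T_1$ and using $1 + x \leq e^x$ gives, for every $T_1 \leq k \leq T$,
\begin{equation*}
|\langle \bw_{j,r}^k, \bmu_j\rangle| \leq \exp\Big(\frac{\eta\|\bmu\|^2}{m}\sum_{s=T_1}^{T-1}L_S(\bW^s)\Big)|\langle \bw_{j,r}^{T_1}, \bmu_j\rangle| .
\end{equation*}
The telescoped inequality together with $T - T_1 \leq \|\bW^{T_1} - \bW^*\|^2/(\eta\epsilon)$ yields $\sum_{s=T_1}^{T-1}L_S(\bW^s) \leq \|\bW^{T_1} - \bW^*\|^2/\eta = \widetilde O(\eta^{-1}mn\sigma_\xi^{-2}d^{-1})$ by Lemma~\ref{lemma:wstar_bound_noise}, so the exponent is $\widetilde O(\|\bmu\|^2\sigma_\xi^{-2}d^{-1}n) = \widetilde O(n\cdot\SNR^2)$, which is $O(1)$ — indeed arbitrarily close to $1$ — under the noise-memorization condition $n^{-1}\cdot\SNR^{-2} \geq C'$ for $C'$ large. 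Combined with $|\langle \bw_{j,r}^{T_1}, \bmu_j\rangle| = \widetilde O(n^{-1/2})$ from Theorem~\ref{them:super_noise_first}, this gives $\max_{j,r}|\langle \bw_{j,r}^k, \bmu_j\rangle| = \widetilde O(n^{-1/2}) = o(1)$; together with the $j \neq y$ case this establishes the second bullet.

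The one point requiring care is keeping the accumulated loss $\sum_s L_S(\bW^s)$ small enough that the signal growth factor stays $O(1)$. In contrast with the signal-learning analysis of Theorem~\ref{them:super_signal_second}, where the corresponding control on the noise coefficients acquired a spurious factor of $n$ and needed the loss-derivative-ratio bound of Lemma~\ref{lemma:loss_derivative_ratio}, here the relevant rate $\eta\|\bmu\|^2/m$ already produces the benign quantity $n\cdot\SNR^2$, so the crude bound $\sum_{i\in\gS_j}|\ell_i'^s| \leq nL_S(\bW^s)$ suffices and no ratio argument is needed. Everything else is bookkeeping with Proposition~\ref{prop:global_super} and the parameter choices in Condition~\ref{assump:super}.
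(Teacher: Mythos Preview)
Your proof is correct and essentially matches the paper's: both telescope the descent inequality of Lemma~\ref{lemma:intermedia_sd_noise} to bound $\sum_s L_S(\bW^s) \le \|\bW^{T_1}-\bW^*\|^2/\eta$, feed this into the signal-inner-product update to obtain a growth factor of order $O(n\cdot\SNR^2)$, and cite Lemma~\ref{lemma:noise_sign_invariance} for the noise lower bound. The only cosmetic difference is that the paper packages the signal bound as an additive induction with threshold $2\vartheta$, whereas you unroll multiplicatively via $1+x\le e^x$; the two are equivalent here.
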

\begin{proof}[Proof of Theorem \ref{them:super_noise_second}]
The proof is similar as in Theorem \ref{them:super_signal_second}. 
By Lemma \ref{lemma:intermedia_sd_noise}, for any $T_1 \leq k \leq T$, we have
\begin{align*}
    \| \bW^k - \bW^*\|^2 - \| \bW^{k+1} - \bW^* \|^2 \geq 2 \eta L_S(\bW^k) - \eta \epsilon 
\end{align*}
for all $s\leq k$. Then summing over the inequality gives
\begin{align*}
    \frac{1}{T-T_{1} + 1} \sum_{k = T_1}^T L_S( \bW^k) \leq \frac{\| \bW^{T_1} - \bW^*\|^2}{ 2 \eta(T - T_1 +1)} + \frac{\epsilon}{2} \leq \epsilon
\end{align*}
where the last inequality is by the choice $T = T_1 + \lfloor \frac{\| \bW^{T_1} - \bW^* \|^2}{\eta \epsilon}  \rfloor = T_1 +  \Omega(  \eta^{-1} \epsilon^{-1} n m^3 \log(1/\epsilon) \sigma_\xi^{-2} d^{-1}  )$. Then we can claim that there exists a $k \in [T_1, T]$ such that $L_S(\bW^k) \leq  \epsilon$. Setting $\epsilon = 0.1$ shows the desired convergence. 

Next, we show the upper bound on $\max_{j, y,r} |\langle \bw_{j, r}^k, \bmu_y \rangle|$ for all $k \in [T_1, T]$. Notice that by Proposition \ref{prop:global_super}, we already have $\max_{j,r} |\langle \bw_{-j, r}^k, \bmu_{j} \rangle| \leq \vartheta$, where we let  
\begin{align*}
    \vartheta \coloneqq 3\max\{ \max_{j,r}|\langle \bw_{j,r}^{T_1}, \bmu_{j} \rangle|, \beta, 4 \sqrt{\frac{\log(4n^2/\delta)}{d}} n \alpha  \} = \widetilde O( \max\{n^{-1/2}, \sigma_0 \sigma_\xi \sqrt{d}, \sigma_0 \| \bmu\|, n d^{-1/2} \}  )
\end{align*}
Subsequently, we use induction to prove $\max_{j,r} |\langle \bw_{j, r}^k, \bmu_{j} \rangle| \leq 2 \vartheta$. First we notice that
\begin{align}
    \sum_{k = T_1}^T L_S(\bW^k) \leq \frac{\| \bW^{T_1} - \bW^*\|^2}{\eta} =  O(\eta^{-1}  n m \sigma_\xi^{-2} d^{-1}) \label{itjrnjnvnkfdkg}
\end{align}
where the equality is by Lemma \ref{lemma:wstar_bound_signal} where we choose $\epsilon =0.1$. 

At $k = T_1$, we have $\max_{j,r} |\langle \bw_{j, r}^{T_1}, \bmu_{j} \rangle| \leq \vartheta \leq 2 \vartheta$.  Suppose there $\widetilde T \in [T_1, T]$ such that $\max_{r,i} |\rho_{y_i,r,i}^{T_1}| \leq 2 \vartheta$ for all $k \in [T_1, \widetilde T-1]$. Now we let $\Psi^k \coloneqq \max_{j,r} |\langle \bw_{j, r}^{k}, \bmu_{j} \rangle|$ and thus by the update of inner product
\begin{align*}
    \Psi^{k+1} &\leq \Psi^k + \frac{\eta}{nm} \sum_{i \in \gS_j} |\ell_i'^k| \Psi^k \| \bmu \|^2 \\
    &\leq \Psi^k + \frac{\eta}{nm} \sum_{i \in [n]} \ell_i^k \Psi^k \| \bmu \|^2 \\
    &= \Psi^k + \frac{2\eta \| \bmu\|^2}{m} L_S(\bW^k)  \Psi^k.
\end{align*}
where we use $|\ell'| \leq \ell$ in the second inequality.  Taking the summation from $T_1$ to $\widetilde T$ gives
\begin{align*}
    \Psi^{\widetilde T} &\leq \Psi^{T_1} + \frac{2\eta \| \bmu\|^2}{m}  \sum_{k = T_1}^{\widetilde T-1} L_S(\bW^k) \cdot m^2 \vartheta \\
     &\leq \Psi^{T_1} + O(  n \SNR^2 ) \cdot 2 \vartheta \\
     &\leq 2 \vartheta
    % &\leq 2 \vartheta
\end{align*}
where the second inequality is by \eqref{itjrnjnvnkfdkg} and the last inequality is by  $n^{-1} \cdot \SNR^{-2}  \geq C'$ for sufficiently large constant $C' > 0$. The lower bound for noise inner product is directly from Lemma \ref{lemma:noise_sign_invariance}. 
\end{proof}

\clearpage
\section{Diffusion model}
\label{app:diff}

For the analysis of diffusion model, we restate Condition \ref{cond:main} specifically for the case of diffusion model. 

\begin{condition}
\label{assump:diffusion}
Suppose $\delta > 0$ and the following conditions hold.
\begin{enumerate}[leftmargin=0.3in]
    \item The dimension $d$ satisfies $d = \widetilde \Omega( \max \{ n^4, n   \sigma_\xi^{-1} \})$.

    \item The training sample  satisfies $n  = \Omega(\log(m/\delta))$ and the network width satisfies $m = \Theta(1)$.

    % \item The initialization  $\sigma_0$ satisfies ${ \widetilde O(n \sigma_\xi^{-1} d^{-5/4}) \leq} \sigma_0 \leq \widetilde O( \min\{   \| \bmu \|^{-1}, { \| \bmu\|^{-2} n^{-1} \sigma_\xi d^{1/2}}, \sigma_\xi^{-1} d^{-1/2}, \\ { \sigma_\xi m^{-1/6} n^{-1/2}, n\sigma_\xi^{-1} d^{-3/4}  }\} )$. 
    \item The initialization  $\sigma_0$ satisfies $\widetilde O(n \sigma_\xi^{-1} d^{-5/4}) \leq \sigma_0 \leq \widetilde O( \min \{  m^{-1/6} d^{-1/6} \sigma_\xi^{1/3} n^{-1/3}$, $m^{-1/6} d^{-7/12} \sigma_\xi^{-1/3} n^{1/3}, d^{-3/4} \sigma_\xi^{-1} n   \}  )$

    \item {The signal strength satisfies $\| \bmu \| = \Theta(1)$.}

    \item { $\SNR^{-1} = \widetilde O(d^{1/4})$.} 

    \item The noise coefficients $\alpha_t, \beta_t$ satisfy $\alpha_t, \beta_t = \Theta(1)$. 

\end{enumerate}
\end{condition}

We make the following remarks on the conditions. Compared to the conditions required by classification, diffusion model requires $m = \Theta(1)$ for the analysis of stationary points. The lower bound on sample size $n$ is required for the concentration of $|\gS_1|, |\gS_{-1}|$. The lower bound on $\sigma_0$ is required to ensure the inner products of $\bxi_i$ across samples remain small relative to the initialization. The constant order of signal strength $\| \bmu \|$ and the bound for $n \cdot \SNR^2$ are utilized for simplifying the analysis. It is also worth mentioning that diffusion does not require a small learning rate for convergence.

\subsection{Useful lemmas}

\begin{lemma}
\label{lemma:init_diff_bound}
Suppose $\delta > 0$. Then with probability at least $1- \delta$, for any $t$,
\begin{align*} 
    &\sigma_0^2 d (1 - \widetilde O(d^{-1/2})) \leq \| \bw_{r,t}^0 \|^2 \leq \sigma_0^2 d (1 + \widetilde O(d^{-1/2})) \\
    &|\langle \bw_{r,t}^0, \bmu_j\rangle| \leq \sqrt{2 \log(16 m/\delta)} \sigma_0 \| \bmu_j \|, \\
    &|\langle \bw_{r,t}^0 , \bxi_i \rangle| \leq 2 \sqrt{\log (16mn/\delta)} \sigma_0 \sigma_\xi \sqrt{d} \\
    &|\langle \bw_{r,t}^0 , \bw_{r',t}^0 \rangle| \leq 2 \sqrt{\log(16m^2/\delta)} \sigma_0^2 \sqrt{d}, \quad r \neq r'
\end{align*}
for all $r, r' \in [m]$ and $i \in [n]$, and $j = 1, 2$.
\end{lemma}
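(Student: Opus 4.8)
The plan is to treat each of the four displayed bounds as a standard Gaussian or chi-squared concentration estimate, conditioning where necessary to decouple the independent sources of randomness, and then to finish with a union bound over the at most $O(m^2+mn)$ events involved. Throughout, fix the diffusion time step $t$: the weights $\{\bw_{r,t}^0\}_{r=1}^m$ are i.i.d.\ $\gN(\bzero,\sigma_0^2\bI)$ and are drawn independently of the data $\{\bx_i\}_{i=1}^n$, so each claimed inequality holds with the stated probability for that fixed $t$, which is exactly what the per-time-step analysis uses (the weights at different $t$ are independent copies, so nothing is lost).

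First, for the norm: writing $\|\bw_{r,t}^0\|^2=\sigma_0^2\sum_{\ell=1}^d g_\ell^2$ with $g_\ell\sim\gN(0,1)$ i.i.d., Bernstein's inequality for sub-exponential variables (equivalently a $\chi^2_d$ tail bound, as in the proof of Lemma \ref{lemma:init_bound}) gives $\big|\|\bw_{r,t}^0\|^2-\sigma_0^2 d\big|=O\big(\sigma_0^2\sqrt{d\log(m/\delta)}\big)$ with probability at least $1-\delta/(16m)$; since $d=\widetilde\Omega(1)$ by Condition \ref{assump:diffusion}, $\sqrt{\log(m/\delta)/d}=\widetilde O(d^{-1/2})$, which is the first display after a union bound over $r\in[m]$. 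For the signal inner products, $\langle\bw_{r,t}^0,\bmu_j\rangle\sim\gN(0,\sigma_0^2\|\bmu_j\|^2)$, so the Gaussian tail $\sP(|\langle\bw_{r,t}^0,\bmu_j\rangle|\ge a)\le 2\exp(-a^2/(2\sigma_0^2\|\bmu_j\|^2))$ at $a=\sqrt{2\log(16m/\delta)}\,\sigma_0\|\bmu_j\|$, union-bounded over $r\in[m]$ and $j\in\{1,2\}$, yields the second display. For the noise inner products I condition on $\bxi_i$: conditionally $\langle\bw_{r,t}^0,\bxi_i\rangle\sim\gN(0,\sigma_0^2\|\bxi_i\|^2)$, and the Gaussian tail at $a=2\sqrt{\log(16mn/\delta)}\,\sigma_0\|\bxi_i\|$ gives $|\langle\bw_{r,t}^0,\bxi_i\rangle|\le 2\sqrt{\log(16mn/\delta)}\,\sigma_0\|\bxi_i\|$ with probability at least $1-\delta/(16mn)$; plugging in $\|\bxi_i\|^2\le\sigma_\xi^2 d(1+\widetilde O(d^{-1/2}))\le 2\sigma_\xi^2 d$ from Lemma \ref{lemma:init_bound} and union-bounding over $r\in[m],i\in[n]$ gives the third display. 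The cross-neuron term is identical after conditioning on $\bw_{r',t}^0$: conditionally $\langle\bw_{r,t}^0,\bw_{r',t}^0\rangle\sim\gN(0,\sigma_0^2\|\bw_{r',t}^0\|^2)$, and using the already-established $\|\bw_{r',t}^0\|^2\le 2\sigma_0^2 d$ together with the Gaussian tail at $a=2\sqrt{\log(16m^2/\delta)}\,\sigma_0\|\bw_{r',t}^0\|$ and a union bound over the at most $m^2$ ordered pairs $r\ne r'$ gives the fourth display.

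Finally I collect the failure events — one of probability $\le\delta/16$ from each of the four groups, plus the $\le\delta$ event on which Lemma \ref{lemma:init_bound} fails — and conclude by a union bound that all four displays hold simultaneously with probability at least $1-\delta$ (absorbing the constant loss by rescaling $\delta$). There is no genuinely hard step here; the only points to be careful about are (i) conditioning to decouple the products $\bw\cdot\bxi$ and $\bw\cdot\bw'$ before invoking the Gaussian tail, and (ii) bookkeeping the union-bound budget so that the combined failure probability over all $O(m^2+mn)$ events, together with the data-side event of Lemma \ref{lemma:init_bound}, stays below $\delta$ — with $d=\widetilde\Omega(1)$ from Condition \ref{assump:diffusion} ensuring the residual $\widetilde O(d^{-1/2})$ factors are negligible.
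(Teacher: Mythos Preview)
Your proposal is correct and follows essentially the same approach as the paper: Bernstein's inequality for the $\chi^2$-type norm concentration, the Gaussian tail bound for the inner products (conditioning to reduce $\langle\bw,\bxi_i\rangle$ and $\langle\bw,\bw'\rangle$ to Gaussians with variance controlled by the already-established norm bounds $\|\bxi_i\|^2=\Theta(\sigma_\xi^2 d)$ and $\|\bw_{r',t}^0\|^2=\Theta(\sigma_0^2 d)$), and a union bound over the $O(m^2+mn)$ events. Your write-up is if anything more explicit about the conditioning and the union-bound budget than the paper's sketch.
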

\begin{proof}[Proof of Lemma \ref{lemma:init_diff_bound}]
The proof is the same as in \citep{kou2023benign} and we include here for completeness.
Because at initialization $\bw_{r,t}^0 \sim \gN(0, \sigma_0^2 \bI)$, by Bernstein's inequality, with probability at least $1 - \delta/(8m)$, we have 
\begin{align*}
    | \| \bw_{r,t}^0  \|^2_2  - \sigma_0^2 d|  = O(  \sigma_0^2 \sqrt{d \log(16m/\delta)} )
\end{align*}
Then taking the union bound yields for all $r \in [m]$, we have with probability at least $1- \delta/4$ that 
\begin{align*}
    \sigma_0^2 d (1 - \widetilde O(d^{-1/2})) \leq \| \bw_{r,t}^0 \|_2^2 \leq \sigma_0^2 d (1 + \widetilde O(d^{-1/2})).
\end{align*}
 Further, because $\langle \bw_{r,t}^0, \bmu_j \rangle \sim \gN(0, \sigma_0^2 \| \bmu_j\|_2^2)$ for $j = 1,2 $, then by Gaussian tail bound and union bound, we have with probability at least $ 1- \delta/4$, for all $j = 1,2$, $r \in [m]$,
\begin{align*}
    |\langle  \bw_{r,t}^0 ,\bmu_j \rangle| \leq \sqrt{2 \log(16m/\delta)} \sigma_0 \| \bmu \|_2
\end{align*}
Finally, following similar argument and noticing that $\| \bxi_i \|_2^2 = \Theta(\sigma_\xi^2 d)$ and $\|\bw_{r,t}^0 \|_2^2 = \Theta(\sigma_0^2 d)$, we have with probability at least $1- \delta/4$ that for all $i \in [n]$, $|\langle \bw_{r,t}^0 , \bxi_i \rangle| \leq 2 \sqrt{\log (16mn/\delta)} \sigma_0 \sigma_\xi \sqrt{d}$ and $|\langle \bw_{r,t}^0 , \bw_{r',t}^0 \rangle| \leq 2 \sqrt{\log(16m^2/\delta)} \sigma_0^2 \sqrt{d}$.
\end{proof}

\subsection{Derivation of loss function and gradient}

We first simplify the objective through taking the expectation over the added diffusion noise. 
\begin{lemma}
\label{lemma:obj_exp}
The DDPM loss can be simplified under expectation as 
\begin{align*}
    L(\bW_t) = \frac{1}{2n} \sum_{i=1}^n \sum_{j \in [2]} \left(d + L_{1,i}^{(j)} (\bW_t) + L_{2,i}^{(j)} (\bW_t) \right),
\end{align*}
where
\begin{align*}
    &L_{1,i}^{(j)} (\bW_t) = \frac{1}{m}\sum_{r=1}^m \|\bw_{r,t} \|^2 \Big( \alpha_t^4 \langle \bw_{r,t}, \bx_{0,i}^{(j)} \rangle^4 + 6 \alpha_t^2 \beta_t^2 \langle \bw_{r,t}, \bx_{0,i}^{(j)} \rangle^2 \| \bw_{r,t} \|^2 + 3 \beta_t^4 \| \bw_{r,t} \|^4 \\
    &\quad - 4 \sqrt{m} \alpha_t \beta_t \langle \bw_{r,t}, \bx_{0,i}^{(j)} \rangle \Big) \\
    &L_{2,i}^{(j)} (\bW_t) =    \frac{2}{m} \sum_{r=1}^m \sum_{r'\neq r} \langle \bw_{r,t}, \bw_{r',t} \rangle \Big( \big(  \alpha_t^2 \langle \bw_{r,t}, \bx_{0,i}^{(j)} \rangle^2 + \beta_t^2 \| \bw_{r,t}\|^2 \big) \big(  \alpha_t^2 \langle \bw_{r',t}, \bx_{0,i}^{(j)} \rangle^2 + \beta_t^2 \| \bw_{r',t}\|^2  \big) \\
        &\quad + 2 \beta_t^4 \langle \bw_{r,t}, \bw_{r',t} \rangle^2 + 4 \alpha_t^2 \beta_t^2 \langle \bw_{r,t} , \bx_{0,i} \rangle  \langle \bw_{r',t} , \bx_{0,i} \rangle \langle \bw_{r,t}, \bw_{r',t} \rangle \Big) 
\end{align*}
corresponding to the learning of $r$-th neuron and alignment of $r$-th neuron with other neurons respectively. 
\end{lemma}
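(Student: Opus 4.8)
The claim is an exact identity, so the plan is simply to carry out the Gaussian expectation in closed form; no concentration estimate is involved. The first step is to observe that the DDPM objective decouples over the two patches: because $\beps_{t,i}=[\beps_{t,i}^{(1)\top},\beps_{t,i}^{(2)\top}]^\top$ has independent blocks and the shared weights $\bw_{r,t}$ act patchwise, $\|\bbf(\bW_t,\bx_{t,i})-\beps_{t,i}\|^2=\sum_{j\in[2]}\|\bbf_j(\bW_t,\bx_{t,i}^{(j)})-\beps_{t,i}^{(j)}\|^2$ and, after taking $\sE_{\beps_{t,i}}$, each summand involves only the block $\beps_{t,i}^{(j)}$. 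Hence it suffices to evaluate, for fixed $i,j$ with $\bz\coloneqq\bx_{0,i}^{(j)}$ and $\bx_t=\alpha_t\bz+\beta_t\beps$, $\beps\sim\gN(\bzero,\bI_d)$, the single-patch expectation $\sE_\beps\big\|\tfrac1{\sqrt m}\sum_r\langle\bw_{r,t},\bx_t\rangle^2\bw_{r,t}-\beps\big\|^2$, and then average over $i\in[n]$, $j\in[2]$ with the $\tfrac1{2n}$ in front; the term $\tfrac1{2n}\sum_{i,j}\sE_\beps\|\beps^{(j)}\|^2$ contributes exactly the leading $d$.

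Expanding the square yields $\tfrac1m\,\sE_\beps\|\sum_r\langle\bw_{r,t},\bx_t\rangle^2\bw_{r,t}\|^2-\tfrac2{\sqrt m}\sum_r\sE_\beps[\langle\bw_{r,t},\bx_t\rangle^2\langle\bw_{r,t},\beps\rangle]+\sE_\beps\|\beps\|^2$. The key structural point is that $(g_r)_{r\in[m]}$ with $g_r\coloneqq\langle\bw_{r,t},\beps\rangle$ is a centered \emph{jointly} Gaussian vector with $\sE[g_rg_{r'}]=\langle\bw_{r,t},\bw_{r',t}\rangle$; this non-diagonal covariance is precisely what produces the cross-neuron terms in $L_{2,i}^{(j)}$. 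Writing $\langle\bw_{r,t},\bx_t\rangle=\alpha_t\langle\bw_{r,t},\bz\rangle+\beta_tg_r$, everything reduces to low-order moments of $(g_r,g_{r'})$. For the cross term, $\sE[(c+\beta_tg_r)^2g_r]=2\alpha_t\beta_t\langle\bw_{r,t},\bz\rangle\|\bw_{r,t}\|^2$ with $c=\alpha_t\langle\bw_{r,t},\bz\rangle$, which gives the $-4\sqrt m\,\alpha_t\beta_t\langle\bw_{r,t},\bx_{0,i}^{(j)}\rangle$ contribution inside $L_{1,i}^{(j)}$ once the $\tfrac1m\sum_r\|\bw_{r,t}\|^2$ prefactor is factored out. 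In the quadratic term $\tfrac1m\sum_{r,r'}\langle\bw_{r,t},\bw_{r',t}\rangle\,\sE[\langle\bw_{r,t},\bx_t\rangle^2\langle\bw_{r',t},\bx_t\rangle^2]$, the diagonal $r=r'$ part uses the shifted fourth moment $\sE[(c+\beta_tg_r)^4]=c^4+6c^2\beta_t^2\|\bw_{r,t}\|^2+3\beta_t^4\|\bw_{r,t}\|^4$, which reproduces the $\alpha_t^4$, $6\alpha_t^2\beta_t^2$, $3\beta_t^4$ terms of $L_{1,i}^{(j)}$.

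The remaining (and most laborious) step is the off-diagonal $r\neq r'$ part: I would expand $\langle\bw_{r,t},\bx_t\rangle^2\langle\bw_{r',t},\bx_t\rangle^2$ into its nine monomials in $(g_r,g_{r'})$, discard every monomial of odd total degree (mean zero by symmetry), and evaluate the even ones by Isserlis' theorem, using $\sE[g_rg_{r'}]=\langle\bw_{r,t},\bw_{r',t}\rangle$, $\sE[g_r^2g_{r'}]=\sE[g_rg_{r'}^2]=0$, and $\sE[g_r^2g_{r'}^2]=\|\bw_{r,t}\|^2\|\bw_{r',t}\|^2+2\langle\bw_{r,t},\bw_{r',t}\rangle^2$. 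Collecting, this yields $\sE[\langle\bw_{r,t},\bx_t\rangle^2\langle\bw_{r',t},\bx_t\rangle^2]=(\alpha_t^2\langle\bw_{r,t},\bz\rangle^2+\beta_t^2\|\bw_{r,t}\|^2)(\alpha_t^2\langle\bw_{r',t},\bz\rangle^2+\beta_t^2\|\bw_{r',t}\|^2)+4\alpha_t^2\beta_t^2\langle\bw_{r,t},\bz\rangle\langle\bw_{r',t},\bz\rangle\langle\bw_{r,t},\bw_{r',t}\rangle+2\beta_t^4\langle\bw_{r,t},\bw_{r',t}\rangle^2$, which is exactly the bracketed factor defining $L_{2,i}^{(j)}$; the $\tfrac2m$ prefactor there is the symmetrization over the ordered pair $(r,r')$. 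Adding the diagonal, off-diagonal, cross, and $d$ pieces and averaging over $i$ and $j$ would then give the stated decomposition.

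The only real obstacle is the bookkeeping in that last step: with nine monomials and correlated Gaussians $g_r,g_{r'}$ it is easy to drop a term or misapply Wick's formula, so I would organize the expansion as a product $(A_r+B_rg_r+C_rg_r^2)(A_{r'}+B_{r'}g_{r'}+C_{r'}g_{r'}^2)$ with $A_r=\alpha_t^2\langle\bw_{r,t},\bz\rangle^2$, $B_r=2\alpha_t\beta_t\langle\bw_{r,t},\bz\rangle$, $C_r=\beta_t^2$, and tabulate the nine contributions systematically. Everything else is elementary one- and two-variable Gaussian moment algebra, and since the target is an exact equality there are no error terms or high-probability events to control.
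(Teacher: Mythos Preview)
Your proposal is correct and follows essentially the same route as the paper: decouple over patches, expand the square into the $d$, cross, and quadratic pieces, handle the diagonal $r=r'$ part via the shifted fourth moment, and close out the off-diagonal $r\neq r'$ part with Isserlis' theorem using $\sE[g_r g_{r'}]=\langle\bw_{r,t},\bw_{r',t}\rangle$. The only cosmetic difference is that the paper invokes Stein's lemma for the cross term $\sE[\langle\bw_{r,t},\bx_t\rangle^2\langle\bw_{r,t},\beps\rangle]$, whereas you expand $(c+\beta_t g_r)^2 g_r$ directly; both give $2\alpha_t\beta_t\langle\bw_{r,t},\bz\rangle\|\bw_{r,t}\|^2$ immediately, so this is not a substantive distinction.
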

\begin{proof}[Proof of Lemma \ref{lemma:obj_exp}]
Without loss of generality, we consider for a single sample $\bx_{t,i}$. We first write the objective as 
\begin{align*}
    &\sE \| \bbf_p(\bW_{t}, \bx_{t,i}^{(p)}) - \beps_{t,i}^{(p)} \|^2 \\
    &= \underbrace{\sE \|  \beps_{t,i}^{(p)} \|^2}_{I_1} + \underbrace{\sE \left\| \frac{1}{\sqrt{m}} \sum_{r=1}^m \sigma(\langle \bw_{r,t}, \bx_{t,i}^{(p)} \rangle) \bw_{r,t}   \right\|^2}_{I_2} - 2\underbrace{\sE \left[ \frac{1}{\sqrt{m}} \sum_{r=1}^m \sigma(\langle \bw_{r,t}, \bx_{t,i}^{(p)} \rangle) \langle \bw_{r,t} , \beps_{t,i} \rangle\right]}_{I_3} 
\end{align*}
where we omit the subscript for the expectation for clarity. 

First, we can see $I_1 = d$. Then 
\begin{align*}
    I_3 &= \frac{1}{\sqrt{m}}\sum_{r=1}^m \sE \Big[  (\langle \bw_{r,t}, \bx_{t,i}^{(p)} \rangle)^2 \langle \bw_{r,t} , \beps_{t,i} \rangle  \Big] \\
    &=\frac{1}{\sqrt{m}} \sum_{r=1}^m \sum_{i'=1}^d \sE \Big[  (\langle \bw_{r,t}, \bx_{t,i}^{(p)} \rangle)^2  \bw_{r,t}[i'] \beps_{t,i}[i'] \Big] \\
    &= \frac{2\beta_t}{\sqrt{m}} \sum_{r=1}^m \sum_{i'=1}^d \sE \Big[  (\langle \bw_{r,t}, \bx_{t,i}^{(p)} \rangle)  \bw_{r,t}[i']^2 \Big] \\
    &= \frac{2\beta_t}{\sqrt{m}} \sum_{r=1}^m \| \bw_{r,t}\|^2  \sE \Big[  \langle \bw_{r,t}, \bx_{t,i}^{(p)} \rangle  \Big] \\
    &= \frac{2 \alpha_t \beta_t}{\sqrt{m}} \sum_{r=1}^m \| \bw_{r,t}\|^2    \langle \bw_{r,t}, \bx_{0,i}^{(p)} \rangle 
\end{align*}
where the third equality uses Stein's Lemma.

Next, we consider $I_2$ by writing
\begin{align*}
    I_2 &= \frac{1}{m} \sum_{r=1}^m \sE\big[ (\langle \bw_{r,t}, \bx_{t,i}^{(p)} \rangle)^4 \big]  \left\|  \bw_{r,t}   \right\|^2 + \frac{2}{m} \sum_{r=1}^m \sum_{r' \neq r} \sE \big[  (\langle \bw_{r,t}, \bx_{t,i}^{(p)} \rangle)^2 (\langle \bw_{r',t}, \bx_{t,i}^{(p)} \rangle)^2 \big] \langle \bw_{r,t}, \bw_{r',t} \rangle. 
\end{align*}

Next, we compute the two terms $\sE \big[  ( \langle \bw_{r,t}, \bx_{t,i}^{(p)} \rangle )^4 \big]$ and $\sE \big[ ( \langle \bw_{r,t}, \bx_{t,i}^{(p)} \rangle )^2 ( \langle \bw_{r',t}, \bx_{t,i}^{(p)} \rangle)^2 \big]$ respectively. For notation simplicity, we let $a_r \coloneqq \alpha_t \langle \bw_{r,t}, \bx_{0,i}^{(p)} \rangle$, $b_r \coloneqq \beta_t \| \bw_{r,t}\|$ and $z_r \coloneqq \beta_t \langle \bw_{r,t} , \beps_{t,i}  \rangle$. We first compute $\sE[z_r] = 0$ and $\sE[z_r^2] = \beta_t^2 \| \bw_{r,t} \|^2$, $\sE [z_r^4] = 3 \beta_t^4 \| \bw_{r,t}\|^4$. For the first term, 
\begin{align*}
    \sE \big[ ( \langle \bw_{r,t} , \bx_{t,i}^{(p)} \rangle )^4 \big] &= \sE \big[ ( a_r + z_r )^4 \big] \\
    &= \sE [ a_r^4+ 4 a_r^3 z_r + 6 a_r^2 z_r^2 + 4 a_r z_r^3 + z_r^4 ] \\
    &= a_r^4 + 6 a_r^2 \sE[z_r^2] + \sE[z_r^4] \\
    &= a_r^4 + 6 a_r^2 b_r^2 + 3 b_r^4 \\
    &= \alpha_t^4 \langle \bw_{r,t}, \bx_{0,i}^{(p)} \rangle^4 + 6 \alpha_t^2 \beta_t^2 \langle \bw_{r,t}, \bx_{0,i}^{(p)} \rangle^2  \| \bw_{r,t}\|^2 + 3 \beta_t^4 \| \bw_{r,t}\|^4
\end{align*}

Next, for 
$\sE_{{\beps_{t,i}} \sim \gN(0, \bI)} [{\langle \bw_{r,t}, \alpha_t \bx_{0,i} + \beta_t \beps_{t,i} \rangle^2 \langle \bw_{r',t} ,   \alpha_t \bx_{0,i} + \beta_t \beps_{t,i} \rangle^2} ]$,  we note that
\begin{align*}
    &\sE[z_r z_{r'}] = \beta_t^2 \sE[  \beps_{t,i}^\top \bw_{r,t} \bw_{r',t}^\top \beps_{t,i} ] = \beta_t^2 \langle \bw_{r,t}, \bw_{r',t} \rangle, \\
    &\sE[ z_r z_{r'}^2 ] = 0\\
    &\sE[z_r^2 z_{r'}^2] =   \sE[z_r^2] \sE[z_{r'}^2] + 2 \sE [ z_r z_{r'}]^2 = \beta_t^4 \| \bw_{r,t}\|^2 \| \bw_{r',t}\|^2 + 2 \beta_t^4 \langle \bw_{r,t}, \bw_{r',t} \rangle^2
\end{align*}
where the second and third results follow from Isserlis Theorem. 
Then we can simplify 
\begin{align*}
    &\sE[ {\langle \bw_{r,t}, \alpha_t \bx_{0,i} + \beta_t \beps_{t,i} \rangle^2 \langle \bw_{r',t} ,   \alpha_t \bx_{0,i} + \beta_t \beps_{t,i} \rangle^2} ] \\
    &= \sE[ (a_r+ z_r)^2  (a_{r'} + z_{r'})^2 ] \\
    &= a_r^2 a_{r'}^2 + a_r^2 \sE[z_{r'}^2]  + 4 a_r a_{r'} \sE[z_r z_{r'}] + a_{r'}^2\sE[z_r^2] + \sE[z_r^2 z_{r'}^2] \\
    &= \alpha_t^4 \langle \bw_{r,t} , \bx_{0,i} \rangle^2 \langle \bw_{r',t} , \bx_{0,i} \rangle^2 + \alpha_t^2 \beta_t^2 \langle \bw_{r,t}, \bx_{0,i} \rangle^2 \| \bw_{r',t}\|^2 
    + 4 \alpha_t^2 \beta_t^2 \langle \bw_{r,t} , \bx_{0,i} \rangle  \langle \bw_{r',t} , \bx_{0,i} \rangle \langle \bw_{r,t}, \bw_{r',t} \rangle  \\
    &\quad + \alpha_t^2 \beta_t^2 \langle \bw_{r',t} , \bx_{0,i} \rangle^2 \| \bw_{r,t}\|^2 + \beta_t^4 \| \bw_{r,t}\|^2 \| \bw_{r',t}\|^2 + 2 \beta_t^4 \langle \bw_{r,t}, \bw_{r',t}   \rangle^2 
\end{align*}

Combining $I_1, I_2, I_3$ gives 
\begin{align*}
    &\sE \| s_t(\bx_{t,i}^{(p)}) - \beps_{t,i} \|^2 \\
    &=  d + \underbrace{ \frac{1}{m} \sum_{r=1}^m \|\bw_{r,t} \|^2 \Big( \alpha_t^4 \langle \bw_{r,t}, \bx_{0,i}^{(p)} \rangle^4 + 6 \alpha_t^2 \beta_t^2 \langle \bw_{r,t}, \bx_{0,i}^{(p)} \rangle^2 \| \bw_{r,t} \|^2 + 3 \beta_t^4 \| \bw_{r,t} \|^4 - 4 \sqrt{m} \alpha_t \beta_t \langle \bw_{r,t}, \bx_{0,i}^{(p)} \rangle \Big)}_{L_{1,i}^{(p)}(\bw_{r,t})}  \\
    &\underbrace{
    \begin{aligned}
        &\quad + \frac{2}{m} \sum_{r=1}^m \sum_{r'\neq r} \langle \bw_{r,t}, \bw_{r',t} \rangle \Big( \big(  \alpha_t^2 \langle \bw_{r,t}, \bx_{0,i}^{(p)} \rangle^2 + \beta_t^2 \| \bw_{r,t}\|^2 \big) \big(  \alpha_t^2 \langle \bw_{r',t}, \bx_{0,i}^{(p)} \rangle^2 + \beta_t^2 \| \bw_{r',t}\|^2  \big) \\
        &\quad + 2 \beta_t^4 \langle \bw_{r,t}, \bw_{r',t} \rangle^2 + 4 \alpha_t^2 \beta_t^2 \langle \bw_{r,t} , \bx_{0,i} \rangle  \langle \bw_{r',t} , \bx_{0,i} \rangle \langle \bw_{r,t}, \bw_{r',t} \rangle \Big) 
    \end{aligned}}_{L_{2,i}^{(p)}(\bw_{r,t})} 
    % &= d + L_{1,i}^{(p)}(\bw_{r,t}) + L_{2,i}^{(p)}(\bw_{r,t})
\end{align*}
where we respectively denote the two composing loss terms as $L_{1,i}^{(p)}$ (corresponding to the learning of $r$-th neuron) and $L_{2,i}^{(p)}$ (alignment with other neurons). 
\end{proof}

We next compute the gradient of the DDPM loss in expectation.

\begin{lemma}
\label{lemma:ddpm_grad}
The gradient of expected DDPM loss in Lemma \ref{lemma:obj_exp} can be computed as 
\begin{align*}
    \nabla L(\bW_t) = \frac{1}{2n} \sum_{i=1}^n \sum_{p \in [2]} \big(  \nabla L_{1,i}^{(p)} (\bW_t) + \nabla L_{2,i}^{(p)} (\bW_t)\big)
\end{align*}
where 
\begin{align*}
   &\nabla L_{1,i}^{(p)}(\bw_{r,t}) \\
   &=  \frac{2}{m} \Big( \alpha_t^4 \langle \bw_{r,t}, \bx_{0,i}^{(p)} \rangle^4 + 12 \alpha_t^2 \beta_t^2 \langle \bw_{r,t}, \bx_{0,i}^{(p)} \rangle^2 \| \bw_{r,t} \|^2 + 9 \beta_t^4 \| \bw_{r,t} \|^4 - 4 \sqrt{m} \alpha_t \beta_t \langle \bw_{r,t}, \bx_{0,i}^{(p)} \rangle  \Big) \bw_{r,t} \\
    &\quad + \frac{2}{m} \Big( 2 \alpha_t^4 \langle \bw_{r,t}, \bx_{0,i}^{(p)} \rangle^3 \| \bw_{r,t} \|^2  + 6 \alpha_t^2 \beta_t^2 \| \bw_{r,t} \|^4 \langle \bw_{r,t} , \bx_{0,i}^{(p)}  \rangle - 2 \sqrt{m} \alpha_t \beta_t \| \bw_{r,t} \|^2 \Big) \bx_{0,i}^{(p)} \\
    &\nabla L_{2,i}^{(p)}(\bw_{r,t}) \\
    &= \frac{2}{m}  \sum_{r' \neq r} \Big( \big(  \alpha_t^2 \langle \bw_{r,t}, \bx_{0,i}^{(p)} \rangle^2 + \beta_t^2 \| \bw_{r,t}\|^2 \big) \big(  \alpha_t^2 \langle \bw_{r',t}, \bx_{0,i}^{(p)} \rangle^2 + \beta_t^2 \| \bw_{r',t}\|^2  \big) +  2 \beta_t^4 \langle \bw_{r,t}, \bw_{r',t} \rangle^2 \\
    &\qquad \qquad + 4 \alpha_t^2 \beta_t^2 \langle \bw_{r,t} , \bx_{0,i} \rangle  \langle \bw_{r',t} , \bx_{0,i} \rangle \langle \bw_{r,t}, \bw_{r',t} \rangle \Big)\bw_{r',t} \\
    &\quad + \frac{2}{m} \sum_{r' \neq r} \big(  \alpha_t^2 \langle \bw_{r',t}, \bx_{0,i}^{(p)} \rangle^2 + \beta_t^2 \| \bw_{r',t}\|^2  \big) \langle \bw_{r,t} , \bw_{r',t}  \rangle  \big( 2 \alpha_t^2 \langle \bw_{r,t}, \bx_{0,i}^{(p)} \rangle \bx_{0,i}^{(p)} + 2 \beta_t^2 \bw_{r,t} \big)  \\
    &\quad + \frac{2}{m} \sum_{r' \neq r} \langle \bw_{r,t} , \bw_{r',t} \rangle^2 \big( 4 \beta_t^2 \bw_{r',t} + 8 \alpha_t^2 \beta_t^2  \langle \bw_{r,t} , \bx_{0,i}\rangle \bx_{0,i}\big)
\end{align*}
\end{lemma}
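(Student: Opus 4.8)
The plan is to obtain the gradient by straightforward term-by-term differentiation of the closed-form loss established in Lemma \ref{lemma:obj_exp}. Since $L(\bW_t) = \frac{1}{2n}\sum_{i=1}^n\sum_{p\in[2]}\big(d + L_{1,i}^{(p)}(\bW_t) + L_{2,i}^{(p)}(\bW_t)\big)$ and the additive constant $d$ is independent of the weights, linearity of $\nabla$ reduces the problem to computing $\nabla_{\bw_{r,t}} L_{1,i}^{(p)}$ and $\nabla_{\bw_{r,t}} L_{2,i}^{(p)}$ for each fixed $i\in[n]$, $p\in[2]$, $r\in[m]$. The only elementary facts needed are $\nabla_{\bw}\|\bw\|^2 = 2\bw$ and $\nabla_{\bw}\langle\bw,\bv\rangle = \bv$ (hence $\nabla_{\bw}\langle\bw,\bv\rangle^k = k\langle\bw,\bv\rangle^{k-1}\bv$ and $\nabla_{\bw}\|\bw\|^{2\ell} = 2\ell\|\bw\|^{2\ell-2}\bw$), together with the product and chain rules.

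For $\nabla_{\bw_{r,t}} L_{1,i}^{(p)}$: I would note that in $L_{1,i}^{(p)} = \frac{1}{m}\sum_{r'=1}^m \|\bw_{r',t}\|^2\, g_p(\bw_{r',t})$, where $g_p(\bw) = \alpha_t^4\langle\bw,\bx_{0,i}^{(p)}\rangle^4 + 6\alpha_t^2\beta_t^2\langle\bw,\bx_{0,i}^{(p)}\rangle^2\|\bw\|^2 + 3\beta_t^4\|\bw\|^4 - 4\sqrt m\,\alpha_t\beta_t\langle\bw,\bx_{0,i}^{(p)}\rangle$, only the $r'=r$ summand depends on $\bw_{r,t}$. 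Applying the product rule to $\|\bw_{r,t}\|^2 g_p(\bw_{r,t})$, the derivative of the norm-squared prefactor produces a term proportional to $\bw_{r,t}$ with coefficient $2g_p(\bw_{r,t})$, while differentiating $g_p$ itself produces, via the identities above, further terms proportional to $\bw_{r,t}$ (from the $\|\bw\|^2$ and $\|\bw\|^4$ pieces) and to $\bx_{0,i}^{(p)}$ (from the $\langle\bw,\bx_{0,i}^{(p)}\rangle^k$ pieces). Grouping the resulting monomials by whether their gradient points along $\bw_{r,t}$ or along $\bx_{0,i}^{(p)}$ and multiplying through by $1/m$ yields exactly the stated two-part expression for $\nabla L_{1,i}^{(p)}(\bw_{r,t})$; this step is purely mechanical.

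For $\nabla_{\bw_{r,t}} L_{2,i}^{(p)}$, which is the only delicate part: writing $L_{2,i}^{(p)} = \frac{2}{m}\sum_{r'}\sum_{r''\neq r'} h_p(\bw_{r',t},\bw_{r'',t})$, the neuron $\bw_{r,t}$ occurs in two roles — as the first argument (summands with $r'=r$) and as the second (summands with $r''=r$) — and since the summand $h_p$ is symmetric in its two neuron arguments (each bracketed piece and the $\langle\bw_{r',t},\bw_{r'',t}\rangle$ prefactor is symmetric), the two families contribute identically, so it suffices to differentiate $h_p(\bw_{r,t},\bw_{r',t})$ in its first slot and account for the duplication. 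Writing $h_p = \langle\bw_{r,t},\bw_{r',t}\rangle\big(PQ + 2\beta_t^4\langle\bw_{r,t},\bw_{r',t}\rangle^2 + 4\alpha_t^2\beta_t^2\langle\bw_{r,t},\bx_{0,i}\rangle\langle\bw_{r',t},\bx_{0,i}\rangle\langle\bw_{r,t},\bw_{r',t}\rangle\big)$ with $P = \alpha_t^2\langle\bw_{r,t},\bx_{0,i}^{(p)}\rangle^2 + \beta_t^2\|\bw_{r,t}\|^2$ and $Q$ its $r'$-analogue, one applies the product rule across the outer $\langle\bw_{r,t},\bw_{r',t}\rangle$ factor and the quartic bracket, then sorts the resulting terms into those proportional to $\bw_{r',t}$ (contributions differentiating an inner-product-with-$\bw_{r',t}$ factor), those proportional to $\bw_{r,t}$ (from differentiating $\beta_t^2\|\bw_{r,t}\|^2$ inside $P$), and those proportional to $\bx_{0,i}^{(p)}$ (from differentiating the $\langle\bw_{r,t},\bx_{0,i}^{(p)}\rangle$ factors), recovering the three sums in the claimed formula. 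I expect the only real obstacle to be this bookkeeping: keeping the symmetry-induced doubling consistent and correctly collecting the roughly dozen monomials produced by differentiating a degree-six form — an error-prone but entirely routine computation with no conceptual difficulty.
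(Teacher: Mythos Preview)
Your proposal is correct and matches the paper's approach exactly: the paper's own proof simply reads ``The proof is straightforward and thus omitted for clarity,'' so direct term-by-term differentiation of the closed-form loss from Lemma~\ref{lemma:obj_exp} --- using the product rule on $\|\bw_{r,t}\|^2 g_p(\bw_{r,t})$ for $L_{1,i}^{(p)}$ and the symmetry of $h_p$ to halve the bookkeeping for $L_{2,i}^{(p)}$ --- is precisely what is intended. The only caveat is that, as you anticipate, the monomial-collection for $L_{2,i}^{(p)}$ is tedious and the stated formula likely contains minor typographical slips (e.g.\ a $\beta_t^2$ where $\beta_t^4$ is expected); these do not affect the subsequent order-of-magnitude analysis.
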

\begin{proof}[Proof of Lemma \ref{lemma:ddpm_grad}]
The proof is straightforward and thus omitted for clarity. 
\end{proof}

\subsection{First stage}

% 1. In the first stage, control the $\langle \bw_{r,t}^k, \bw_{r,t}^0 \rangle = \Theta(\sigma_0^2 d)$. 

% 2. In the second stage, prove the concentration given the dynamics and the results at the end of first stage.

% 3. Control the magnitude, 
% }

{
Before deriving the results for the first stage, we derive the following lemma that decomposes the weight norm given concentration of neurons.

\begin{lemma}
\label{lemma:weight_decomposition}
For any $k$ and $r \in [m]$, such that $\langle \bw_{r,t}^k , \bmu_j \rangle = \Theta(\langle \bw_{r,t}^k , \bmu_{j'} \rangle) \geq  \widetilde \Theta(\sigma_0 \| \bmu\|) $,  $\langle\bw_{r,t}^k , \bxi_i \rangle = \Theta(\langle\bw_{r,t}^k , \bxi_{i'} \rangle) \geq \widetilde \Theta(\sigma_0 \sigma_\xi \sqrt{d})$ and $\langle \bw_{r,t}^k , \bmu_j \rangle, \langle \bw_{r,t}^k , \bxi_i \rangle = \widetilde O(1)$, $\langle \bw_{r,t}^k, \bw_{r,t}^0 \rangle = \Theta(\sigma_0^2d)$ for any $j,j' = \pm 1, i, i' \in [n], r\in[m]$. Then we can show 
\begin{align*}
    \| \bw^k_{r,t} \|^2 = \Theta \big( \langle \bw_{r,t}^k , \bmu_j \rangle^2 \| \bmu \|^{-2} + n \cdot \SNR^2 \langle \bw_{r,t}^k, \bxi_i \rangle^2 \|\bmu\|^{-2} + \| \bw_{r,t}^0\|^2 \big).
\end{align*}
and for $r \neq r'$, we have 
\begin{align*}
    \langle \bw^k_{r,t}, \bw^k_{r',t} \rangle = \Theta \big(  \langle \bw_{r,t}^k, \bmu_j \rangle \langle \bw^k_{r',t}, \bmu_j \rangle \| \bmu\|^{-2}   +   n \cdot \SNR^2 \langle \bw_{r,t}^k, \bxi_i \rangle \langle \bw_{r',t}^k, \bxi_i\rangle \| \bmu \|^{-2} + \langle \bw_{r,t}^0, \bw_{r',t}^0 \rangle  \big)
\end{align*}
\end{lemma}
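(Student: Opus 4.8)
The plan is to reduce everything to an exact orthogonal decomposition of $\bw_{r,t}^k$ plus control of the piece that is orthogonal to the data. Because the data model forces $\bmu_1\perp\bmu_{-1}$ and $\bxi_i\perp\bmu_j$ for all $i,j$, the subspace $V=\mathrm{span}(\bmu_1,\bmu_{-1},\bxi_1,\dots,\bxi_n)$ is the orthogonal direct sum $\mathrm{span}(\bmu_1)\oplus\mathrm{span}(\bmu_{-1})\oplus\mathrm{span}(\bxi_1,\dots,\bxi_n)$, so Pythagoras gives exactly
\[
\|\bw_{r,t}^k\|^2=\frac{\langle\bw_{r,t}^k,\bmu_1\rangle^2+\langle\bw_{r,t}^k,\bmu_{-1}\rangle^2}{\|\bmu\|^2}+\big\|\bP_{\mathrm{span}(\bxi)}\bw_{r,t}^k\big\|^2+\big\|\bw_{r,t}^{k,\perp}\big\|^2,
\]
where $\bw_{r,t}^{k,\perp}$ is the orthogonal projection of $\bw_{r,t}^k$ onto $V^\perp$. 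I would first handle the noise block: Lemma \ref{lemma:init_bound} gives $\|\bxi_i\|^2=\Theta(\sigma_\xi^2d)$ and $|\langle\bxi_i,\bxi_{i'}\rangle|=\widetilde O(\sigma_\xi^2\sqrt d)$, so with $d=\widetilde\Omega(n^2)$ the Gram matrix of $\{\bxi_i/\|\bxi_i\|\}$ is $\bI$ plus an operator-norm-$o(1)$ perturbation, hence $\|\bP_{\mathrm{span}(\bxi)}\bw_{r,t}^k\|^2=(1\pm o(1))\sum_i\langle\bw_{r,t}^k,\bxi_i\rangle^2/\|\bxi_i\|^2$. Applying the two concentration hypotheses to collapse $\langle\bw_{r,t}^k,\bmu_{j'}\rangle=\Theta(\langle\bw_{r,t}^k,\bmu_j\rangle)$ and $\langle\bw_{r,t}^k,\bxi_{i'}\rangle=\Theta(\langle\bw_{r,t}^k,\bxi_i\rangle)$, together with $\SNR^2=\|\bmu\|^2/(\sigma_\xi^2d)$, turns the first term into $\Theta(\langle\bw_{r,t}^k,\bmu_j\rangle^2/\|\bmu\|^2)$ and the second into $\Theta(n\langle\bw_{r,t}^k,\bxi_i\rangle^2/(\sigma_\xi^2d))=\Theta(n\cdot\SNR^2\langle\bw_{r,t}^k,\bxi_i\rangle^2/\|\bmu\|^2)$, which are the first two terms of the claimed formula.

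The crux is then to show $\|\bw_{r,t}^{k,\perp}\|^2=\Theta(\|\bw_{r,t}^0\|^2)$ up to logarithmic factors. For this I would use Lemma \ref{lemma:ddpm_grad}: every term of $\nabla_{\bw_{r,t}}L_F(\bW_t^k)$ is a scalar multiple of $\bw_{r,t}^k$, of some $\bw_{r',t}^k$ with $r'\neq r$, or of a data vector $\bmu_j,\bxi_i$. Projecting onto $V^\perp$ annihilates the data vectors, so the orthogonal components satisfy the coupled linear recursion $\bw_{r,t}^{k+1,\perp}=(1-\eta c_r^k)\bw_{r,t}^{k,\perp}-\eta\sum_{r'\neq r}d_{r,r'}^k\bw_{r',t}^{k,\perp}$, where $c_r^k$ aggregates the $\bw_{r,t}^k$-coefficient and $d_{r,r'}^k$ the $\bw_{r',t}^k$-coefficient. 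I would then bound $\eta\sum_{s<k}|c_r^s|=\widetilde O(1)$ over the whole horizon $k\le T_2=\widetilde O(\eta^{-1}m^{2/3})$ — the dominant contribution is the $-\tfrac{8\alpha_t\beta_t}{\sqrt m}\langle\bw_{r,t}^s,\bx_{0,i}^{(p)}\rangle$ term, which is $\widetilde O(m^{-2/3})$ since the inner products stay $\widetilde O(m^{-1/6})$ — and $|d_{r,r'}^k|=\tfrac1m\widetilde O(\cdot)$ where the $\widetilde O(\cdot)$ is either a product of two tiny neuron norms or proportional to $\langle\bw_{r,t}^k,\bw_{r',t}^k\rangle$, so the cross-neuron coupling injects only a negligible perturbation. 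Hence $\bw_{r,t}^{k,\perp}$ equals $\bw_{r,t}^{0,\perp}$ times a factor bounded in $[\widetilde\Omega(1),\widetilde O(1)]$ plus a negligible term, and since the $\bmu_j$- and $\bxi_i$-overlaps of $\bw_{r,t}^0$ are $\widetilde O(\sigma_0^2)$ and $\widetilde O(n\sigma_0^2)$ respectively — both $\ll\sigma_0^2d=\Theta(\|\bw_{r,t}^0\|^2)$ because $d=\widetilde\Omega(n^2)$ — we get $\|\bw_{r,t}^{k,\perp}\|^2=\Theta(\|\bw_{r,t}^0\|^2)$.

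For the cross-neuron identity I would apply the same decomposition to both $\bw_{r,t}^k$ and $\bw_{r',t}^k$ and expand $\langle\bw_{r,t}^k,\bw_{r',t}^k\rangle$ block by block: the inter-block cross terms vanish by orthogonality, the $\bmu$-blocks contribute $\sum_{j=\pm1}\langle\bw_{r,t}^k,\bmu_j\rangle\langle\bw_{r',t}^k,\bmu_j\rangle/\|\bmu\|^2$, the $\bxi$-block contributes $(1\pm o(1))\sum_i\langle\bw_{r,t}^k,\bxi_i\rangle\langle\bw_{r',t}^k,\bxi_i\rangle/\|\bxi_i\|^2$ by the same near-orthonormality estimate, and the $V^\perp$-block contributes $\langle\bw_{r,t}^{k,\perp},\bw_{r',t}^{k,\perp}\rangle$; the concentration hypotheses and $\SNR^2=\|\bmu\|^2/(\sigma_\xi^2d)$ convert the first two into $\Theta(\langle\bw_{r,t}^k,\bmu_j\rangle\langle\bw_{r',t}^k,\bmu_j\rangle/\|\bmu\|^2)$ and $\Theta(n\cdot\SNR^2\langle\bw_{r,t}^k,\bxi_i\rangle\langle\bw_{r',t}^k,\bxi_i\rangle/\|\bmu\|^2)$, while the $V^\perp$-recursion above shows $\langle\bw_{r,t}^{k,\perp},\bw_{r',t}^{k,\perp}\rangle$ equals $\langle\bw_{r,t}^0,\bw_{r',t}^0\rangle$ up to a $\Theta(1)$ factor and a negligible mixing correction. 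The main obstacle is precisely the bookkeeping on the $V^\perp$-recursion: one must read off every $\bw_{r,t}$- and $\bw_{r',t}$-coefficient from the lengthy gradient in Lemma \ref{lemma:ddpm_grad} and verify $\eta\sum_s|c_r^s|=\widetilde O(1)$ and $\eta\sum_s|d_{r,r'}^s|\,\|\bw_{r',t}^s\|=o(\|\bw_{r,t}^0\|)$ in \emph{both} training stages — which is cleanest to carry out inside the master induction that also maintains the concentration hypotheses and the bound on $\langle\bw_{r,t}^k,\bw_{r',t}^k\rangle$ (the latter being exactly what is needed to control $d_{r,r'}^k$), so the argument is mildly circular and must be set up as a simultaneous induction.
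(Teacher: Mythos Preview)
Your approach is correct but takes a genuinely different route from the paper. The paper does not track any recursion on the orthogonal component at all: it simply asserts the coefficient decomposition
\[
\bw_{r,t}^k \;=\; \bw_{r,t}^0 \;+\; \gamma_1^k\,\bmu_1\|\bmu\|^{-2} \;+\; \gamma_{-1}^k\,\bmu_{-1}\|\bmu\|^{-2} \;+\; \sum_{i=1}^n \rho_{r,i}^k\,\bxi_i\|\bxi_i\|^{-2}
\]
as an identity (by analogy with the classification decomposition in Lemma~\ref{lemma:coef_update}), reads off $\gamma_j^k=\Theta(\langle\bw_{r,t}^k,\bmu_j\rangle)$ and $\rho_{r,i}^k=\Theta(\langle\bw_{r,t}^k,\bxi_i\rangle)$ from the hypotheses $\langle\bw_{r,t}^k,\bmu_j\rangle\ge\widetilde\Theta(\sigma_0\|\bmu\|)$, $\langle\bw_{r,t}^k,\bxi_i\rangle\ge\widetilde\Theta(\sigma_0\sigma_\xi\sqrt d)$ together with $\sigma_0\ge\widetilde O(n\sigma_\xi^{-1}d^{-1})$, and then expands $\|\bw_{r,t}^k\|^2$ and $\langle\bw_{r,t}^k,\bw_{r',t}^k\rangle$ directly from that displayed line. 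The entire ``orthogonal part'' is handled by the single term $\|\bw_{r,t}^0\|^2$ (resp.\ $\langle\bw_{r,t}^0,\bw_{r',t}^0\rangle$) coming from the $\bw_{r,t}^0$ summand.

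What each route buys: the paper's argument is essentially a two–line computation, but because the DDPM gradient in Lemma~\ref{lemma:ddpm_grad} contains $\bw_{r,t}$ and $\bw_{r',t}$ terms (not only data vectors $\bx_{0,i}^{(p)}$), the displayed decomposition is not literally exact for the diffusion model --- the component of $\bw_{r,t}^k$ orthogonal to the data is a mixture $\sum_{r'}A_{r,r'}^k\bw_{r',t}^{0,\perp}$, not $\bw_{r,t}^{0,\perp}$ itself. The paper silently treats this as a non-issue. Your $V^\perp$-recursion is exactly what would justify that step: it shows $A_{r,r}^k=\Theta(1)$ and the cross-neuron mixing is negligible, so $\bw_{r,t}^{k,\perp}=\Theta(1)\cdot\bw_{r,t}^{0,\perp}+o(\|\bw_{r,t}^0\|)$. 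In that sense your argument is the rigorous version of the paper's, at the price of the bookkeeping and the simultaneous induction you correctly flag.
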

\begin{proof}[Proof of Lemma \ref{lemma:weight_decomposition}]
We decompose the weight $\bw_{r,t}^k$ as 
\begin{equation}
    \bw_{r,t}^k = \phi^k_{r} \bw_{r,t}^0 +  \gamma_1^k \bmu_1 \|\bmu_1 \|^{-2} + \gamma_{-1}^k \bmu_{-1} \| \bmu_{-1} \|^{-2} + \sum_{i=1}^n \rho_{r,i}^k \bxi_i \|\bxi_i\|^{-2}, \label{eq:w_decomp}
\end{equation}
based on the gradient descent updates of $\bw_{r,t}^k$ starting from small initialization $\bw_{r,t}^0$ and the direction of update only involves $\bw_{r,t}^k$ and $\bmu_{\pm1}$, $\bxi_i$, where $\gamma_1^0 = \gamma_{-1}^0 = \rho^0_{r,i} = 0$ and $\phi_r^k = 1$. 

Then given the assumption that $\langle \bw_{r,t}^k, \bw_{r,t}^0 \rangle = \Theta(\sigma_0^2 d)$, we have $\phi_r^k = \Theta(1)$ because
\begin{align*}
    \langle \bw_{r,t}^k, \bw_{r,t}^0 \rangle &= \phi_r^k \| \bw_{r,t}^0 \|^2 + \langle \gamma_1^k \bmu_1 \| \bmu_1\|^{-2} +  \gamma_{-1}^k \bmu_{-1} \| \bmu_{-1} \|^{-2} + \sum_{i=1}^n \rho_{r,i}^k \bxi_i \|\bxi_i\|^{-2}, \bw_{r,t}^0 \rangle \\
    &= \phi_r^{k} \Theta(\sigma_0^2 d),
\end{align*}
where the second equality uses Lemma \ref{lemma:init_diff_bound}. This suggests that $\phi_r^k = \Theta(1)$.

Then we can see 
\begin{align*}
    \langle \bw_{r,t}^k , \bmu_j \rangle &= \phi_r^k \langle \bw_{r,t}^0 , \bmu_j \rangle + \gamma_j^k \\
    \langle \bw_{r,t}^k , \bxi_i  \rangle &= \phi_r^k \langle \bw_{r,t}^0 , \bxi_i  \rangle + \rho^k_{r,i} + \sum_{i'\neq i} \rho^k_{r,i'} \langle \bxi_i , \bxi_{i'}\rangle \| \bxi_i \|^{-2} = \phi_r^k \langle \bw_{r,t}^0 , \bxi_i  \rangle + \rho^k_{r,i}  +  \widetilde O(n d^{-1/2}),
\end{align*}
where the second equality for $\langle \bw_{r,t}^k , \bxi_i  \rangle$ is by Lemma \ref{lemma:init_bound} and $\langle \bw_{r,t}^k , \bxi_i  \rangle = \widetilde O(1)$, thus $\rho^k_{r,i} = \widetilde O(1)$. 

Then based on the assumptions that $|\langle \bw_{r,t}^k, \bmu_j \rangle | \geq \widetilde \Theta(\sigma_0 \| \bmu \|)$ and $|\langle \bw_{r,t}^k, \bxi_i \rangle| \geq \widetilde \Theta(\sigma_0 \sigma_\xi \sqrt{d})$ and $\phi_r^k = \Theta(1)$, we can simplify \eqref{eq:w_decomp} as 
\begin{align}
    \bw_{r,t}^k &= \Theta(\bw_{r,t}^0) + \Theta( \langle \bw_{r,t}^k, \bmu_j \rangle (\bmu_1 + \bmu_{-1}) \| \bmu\|^{-2}) + \Theta\big(  \langle \bw_{r,t}^k , \bxi_i  \rangle + \widetilde O(n d^{-1/2})  \big) \sum_{i=1}^n \bxi_i \| \bxi_i \|^{-2}, 
    % \nonumber\\
    % &= \Theta(\bw_{r,t}^0) + \Theta( \langle \bw_{r,t}^k, \bmu_j \rangle (\bmu_1 + \bmu_{-1}) \| \bmu\|^{-2}) + \Theta\big(  \langle \bw_{r,t}^k , \bxi_i  \rangle  \big) \sum_{i=1}^n \bxi_i \| \bxi_i \|^{-2}
    \label{irtrjif}
\end{align}
where we use $\langle \bw_{r,t}^k , \bmu_j \rangle = \Theta(\langle \bw_{r,t}^k , \bmu_{j'} \rangle) \geq \phi_r^k |\langle \bw_{r,t}^0, \bmu_j \rangle|$ and $\langle\bw_{r,t}^k , \bxi_i \rangle = \Theta(\langle\bw_{r,t}^k , \bxi_{i'} \rangle) \geq \phi_r^k |\langle \bw_{r,t}^0, \bxi_i \rangle|$ in the first equality. For the second equality, we use the assumption that $\langle \bw_{r,t}^k, \bxi_i \rangle \geq \widetilde \Theta(\sigma_0 \sigma_\xi \sqrt{d})$.
% and the condition on {$\sigma_0$ that $\sigma_0 \geq \widetilde O(n \sigma_\xi^{-1} d^{-1})$}. 

Then we can show
\begin{align*}
    &\| \bw^k_{r,t} \|^2 \\
    &= \Theta(\| \bw_{r,t}^0 \|^2 ) +  \Theta( \langle \bw_{r,t}^k, \bmu_j\rangle^2 ) \| \bmu \|^{-2} + \Theta(\langle \bw_{r,t}^k, \bxi_i \rangle^2 + \widetilde O({n} d^{-1/2}) )  \| \sum_{i=1}^n   \bxi_i \| \bxi_i \|^{-2} \| ^2 \nonumber \\
    &\quad + \Theta( \langle \bw^k_{r,t}, \bmu_j \rangle \langle \bw^0_{r,t}, \bmu_j \rangle \| \bmu\|^{-2}) + \Theta( (\langle \bw^k_{r,t}, \bxi_i \rangle + \widetilde O(nd^{-1/2})) \langle \bw^0_{r,t}, \bxi_i \rangle \sum_{i =1}^n \|\bxi_i \|^{-2}) \\
    &= \Theta(\sigma_0^2 d) + \Theta( \langle \bw_{r,t}^k, \bmu_j\rangle^2 ) \| \bmu \|^{-2} +   \Theta(\langle \bw_{r,t}^k, \bxi_i \rangle^2 + \widetilde O(n d^{-1/2}))  \big( \Theta( n \sigma_\xi^{-2}d^{-1} ) + \widetilde O(n^2 \sigma_\xi^{-2}d^{-3/2})  \big) \nonumber \\
    &\quad + \Theta( (\langle \bw^k_{r,t}, \bxi_i \rangle  + \widetilde O(n d^{-1/2})) \langle \bw^0_{r,t}, \bxi_i \rangle n \sigma_\xi^{-2} d^{-1}) \nonumber \\
    &= \Theta(\sigma_0^2 d) + \Theta( \langle \bw_{r,t}^k, \bmu_j\rangle^2 ) \| \bmu \|^{-2} +   \Theta( n \sigma_\xi^{-2}d^{-1}\langle \bw_{r,t}^k, \bxi_i \rangle^2)  + \widetilde O( n^2 \sigma_\xi^{-2} d^{-3/2} ) + \widetilde O(n^2 \sigma_0 \sigma_\xi^{-1} d^{-1}) \nonumber \\
    &= \Theta \big( \langle \bw_{r,t}^k, \bmu_j\rangle^2  \| \bmu \|^{-2} +    n \SNR^2 \langle \bw_{r,t}^k, \bxi_i \rangle^2 \| \bmu \|^{-2} + \sigma_0^2 d \big), 
\end{align*}
where the second equality uses Lemma \ref{lemma:init_diff_bound}, Lemma \ref{lemma:init_bound} and $\langle \bw_{r,t}^k , \bmu_j \rangle = \Theta(\langle \bw_{r,t}^k , \bmu_{j'} \rangle) \geq \phi_r^k |\langle \bw_{r,t}^0, \bmu_j \rangle|$. The third equality is by the condition on $d = \widetilde \Omega(n^2)$ and $\langle\bw_{r,t}^k , \bxi_i \rangle = \Theta(\langle\bw_{r,t}^k , \bxi_{i'} \rangle) \geq \phi_r^k |\langle \bw_{r,t}^0, \bxi_i \rangle|$ and Lemma \ref{lemma:init_diff_bound}. The last equality is by the condition {$\sigma_0 \geq \widetilde \Omega(\max\{n \sigma_\xi^{-1} d^{-5/4}, n^2 \sigma_\xi^{-1} d^{-2} \} ) = \widetilde \Omega(n \sigma_\xi^{-1} d^{-5/4})$ given $d = \widetilde \Omega(n^2)$}. 

In addition, we can deduce from \eqref{irtrjif} that 
\begin{align*}
    &\langle \bw_{r,t}^k, \bw_{r',t}^k \rangle \\
    &= \Theta(\langle \bw_{r,t}^0, \bw_{r',t}^0 \rangle) + \Theta( \langle \bw_{r',t}^k, \bmu_j \rangle \langle \bw_{r,t}^0, \bmu_j \rangle \|\bmu\|^{-2}) + \Theta(  \langle \bw_{r',t}^k, \bxi_i \rangle \sum_{i=1}^n\langle \bw_{r,t}^0, \bxi_i\rangle \| \bxi_i \|^{-2})  \\
    &\quad +  \Theta( \langle \bw_{r,t}^k, \bmu_j \rangle \langle \bw_{r',t}^0, \bmu_j \rangle \|\bmu\|^{-2}) + \Theta \big(  \langle \bw_{r,t}^k, \bxi_i \rangle \sum_{i=1}^n\langle \bw_{r',t}^0, \bxi_i\rangle \| \bxi_i \|^{-2} \big) \\
    &\quad + \Theta( \langle \bw_{r,t}^k, \bmu_j \rangle \langle \bw^k_{r',t}, \bmu_j \rangle \| \bmu\|^{-2}) + \Theta( n \langle \bw_{r,t}^k, \bxi_i \rangle \langle \bw_{r',t}^k, \bxi_i\rangle \| \bxi_i \|^{-2} ) \\
    &= \Theta(\langle \bw_{r,t}^0, \bw_{r',t}^0 \rangle) + \Theta\big( \langle \bw_{r,t}^k, \bmu_j \rangle \langle \bw^k_{r',t}, \bmu_j \rangle \| \bmu\|^{-2}   \big) + \Theta( n \langle \bw_{r,t}^k, \bxi_i \rangle \langle \bw_{r',t}^k, \bxi_i\rangle \| \bxi_i \|^{-2} ) \\
    &= \Theta(\langle \bw_{r,t}^0, \bw_{r',t}^0 \rangle) + \Theta\big( \langle \bw_{r,t}^k, \bmu_j \rangle \langle \bw^k_{r',t}, \bmu_j \rangle \| \bmu\|^{-2}   \big) + \Theta( n \SNR^2  \langle \bw_{r,t}^k, \bxi_i \rangle \langle \bw_{r',t}^k, \bxi_i\rangle \| \bmu \|^{-2} )
\end{align*}
where we use $\phi_r^k = \Theta(1)$ and the $\langle \bw_{r,t}^k , \bmu_j \rangle = \Theta(\langle \bw_{r,t}^k , \bmu_{j'} \rangle) \geq \phi_r^k |\langle \bw_{r,t}^0, \bmu_j \rangle|$ and $\langle\bw_{r,t}^k , \bxi_i \rangle = \Theta(\langle\bw_{r,t}^k , \bxi_{i'} \rangle) \geq \phi_r^k|\langle \bw_{r,t}^0, \bxi_i \rangle|$ for the equalities. 
\end{proof}
}

\begin{lemma}[Restatement of Lemma \ref{lemma:first_stage_diff}]
\label{lemma:diff_pre_stage}
Under Condition \ref{assump:diffusion}, there exists an iteration $T_1 = \max\{ T_\mu, T_\xi \}$, where $T_\mu = \widetilde\Theta(\sqrt{m} \sigma_0^{-1} d^{-1} \| \bmu\|^{-1} \eta^{-1})$ and $T_\xi = \widetilde \Theta( n \sqrt{m} \sigma_0^{-1} \sigma_\xi^{-1} d^{-3/2} \eta^{-1} )$ such that for all $0 \leq k \leq T_1$, (1) $\| \bw_{r,t}^k \|^2 = \Theta( \sigma_0^2 d )$ for all $r \in [m], j = \pm1, i \in [n]$, (2) $\langle \bw_{r,t}^k, \bw_{r,t}^0 \rangle = \Theta(\sigma_0^2 d)$, and (3) the signal and noise learning dynamics can be simplified to
\begin{align*}
    \langle  \bw_{r,t}^{k+1} , \bmu_j \rangle  &= \langle  \bw_{r,t}^{k} , \bmu_j \rangle + \Theta \Big(\frac{ \eta \alpha_t \beta_t |\gS_j|}{n \sqrt{m}} \| \bw_{r,t}^k \|^2 \| \bmu_j \|^2 \Big)  \\
    \langle  \bw_{r,t}^{k+1} , \bxi_i \rangle  &= \langle  \bw_{r,t}^{k} , \bxi_i \rangle +  \Theta \Big(\frac{\eta \alpha_t \beta_t}{n \sqrt{m}}   \| \bw_{r,t}^k \|^2 \| \bxi_i  \|^2 \Big)  
\end{align*}
for all $j = \pm 1$, $r \in [m], i \in [n]$.  Furthermore, we can show
\begin{itemize}
    \item $\langle \bw_{r,t}^{T_1}, \bmu_{j} \rangle = \Theta(\langle \bw_{r',t}^{T_1} , \bmu_{j'} \rangle) {> 0}$, 
    
    \item $\langle \bw_{r,t}^{T_1}, \bxi_i \rangle = \Theta( \langle \bw_{r',t}^{T_1}, \bxi_{i'} \rangle ) {> 0}$,

    \item $\| \bw_{r,t}^{T_1}\|^2 = \Theta(\| \bw_{r',t}^{T_1} \|^2)$,

    \item $\langle\bw_{r,t}^{T_1}, \bw_{r',t}^{T_1} \rangle = \Theta( \| \bw_{r,t}^{T_1}\|^2)$, for $r \neq r'$,

    % \item  $\langle \bw_{r,t}^{T_1}, \bw_{r,t}^0 \rangle = \Theta(\langle \bw_{r',t}^{T_1}, \bw_{r,t}^0 \rangle)$,
    \item $\langle \nabla_{\bw_{r,t}} L(\bW_t^{T_1}), \bw_{r,t}^0 \rangle = - \frac{1}{\sqrt{m}}\Theta \Big( \big(\langle \bw_{r,t}^{T_1}, \bmu_j + \overline{\bxi} \rangle - \sqrt{m}\| \bw_{r,t}^{T_1} \|^4 \big) \langle \bw_{r,t}^{T_1}, \bw_{r,t}^0 \rangle + \| \bw_{r,t}^{T_1} \|^2 \langle \bw_{r,t}^0, \bmu_j + \overline{\bxi} \rangle \Big)$, where we denote $\overline{\bxi} = \frac{1}{n} \sum_{i=1}^n \bxi_i$. 
    
    \item $\langle \bw_{r,t}^{T_1}, \bmu_j \rangle/\langle \bw_{r',t}^{T_1}, \bxi_i \rangle = \Theta( n \cdot \SNR^2 )$,
\end{itemize}
for all $j, j' = \pm1, r,r' \in [m], i,i' \in [n]$. 
\end{lemma}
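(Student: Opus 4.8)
The plan is to run a single induction on $k$ over $[0,T_1]$ that simultaneously maintains: (i) the norms stay at initialization scale, $\|\bw_{r,t}^k\|^2 = \Theta(\sigma_0^2 d)$ for all $r$; (ii) every signal and noise inner product is $\widetilde O(1)$ and, once $k$ is large enough, is positive and of a single common order across $r,j$ (resp. $r,i$); and (iii) the cross-neuron inner products $\langle\bw_{r,t}^k,\bw_{r',t}^k\rangle$ and the off-diagonal terms $\langle\bxi_i,\bxi_{i'}\rangle$ remain small compared to $\|\bw_{r,t}^k\|^2$ resp.\ $\|\bxi_i\|^2$ (the latter is immediate from Lemma~\ref{lemma:init_bound} and $d=\widetilde\Omega(n^2)$). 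The base case $k=0$ is Lemma~\ref{lemma:init_diff_bound}.

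The first step is to feed the induction hypothesis into Lemma~\ref{lemma:ddpm_grad}. In this near-initialization regime every $\langle\bw_{r,t}^k,\bx_{0,i}^{(p)}\rangle$ is $\widetilde O(\sigma_0\max\{\|\bmu\|,\sigma_\xi\sqrt d\})$ while $\|\bw_{r,t}^k\|^2=\Theta(\sigma_0^2d)$, so the dominant part of $-\eta\nabla_{\bw_{r,t}}L_F(\bW_t^k)$ is the single term from $\tfrac{2}{m}\big(-2\sqrt m\,\alpha_t\beta_t\|\bw_{r,t}^k\|^2\big)\bx_{0,i}^{(p)}$ in $\nabla L_{1,i}^{(p)}$ (the learning term $I_2$ of the proof overview); the remaining $\nabla L_{1,i}^{(p)}$ terms (higher powers of the inner products, the quartic regularizer $9\beta_t^4\|\bw_{r,t}^k\|^4\bw_{r,t}^k$) and the entire cross-neuron gradient $\nabla L_{2,i}^{(p)}$ are lower order once projected onto $\bmu_j$ or $\bxi_i$. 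This is precisely where the upper bounds on $\sigma_0$ (e.g.\ $\sigma_0\le\widetilde O(\|\bmu\|^{-1})$, $\sigma_0\le\widetilde O(\sigma_\xi^{-1}d^{-1/2})$, $\sigma_0\le\widetilde O(n\sigma_\xi^{-1}d^{-3/4})$) enter. Carrying out the comparisons yields
\begin{align*}
\langle\bw_{r,t}^{k+1},\bmu_j\rangle &= \langle\bw_{r,t}^{k},\bmu_j\rangle + \Theta\!\Big(\tfrac{\eta\alpha_t\beta_t|\gS_j|}{n\sqrt m}\|\bw_{r,t}^k\|^2\|\bmu_j\|^2\Big),\\
\langle\bw_{r,t}^{k+1},\bxi_i\rangle &= \langle\bw_{r,t}^{k},\bxi_i\rangle + \Theta\!\Big(\tfrac{\eta\alpha_t\beta_t}{n\sqrt m}\|\bw_{r,t}^k\|^2\|\bxi_i\|^2\Big),
\end{align*}
where the $\bmu_j$-projection of $\sum_i\bx_{0,i}^{(1)}$ picks out only $y_i=j$ by orthogonality of $\bmu_1,\bmu_{-1}$ and $\bmu_j\perp\bxi_i$, and the $\bxi_i$-projection absorbs $\sum_{i'\ne i}\langle\bxi_{i'},\bxi_i\rangle$ into the error since $d=\widetilde\Omega(n^2)$.

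Next I would solve these essentially linear recursions. Because $\|\bw_{r,t}^k\|^2=\Theta(\sigma_0^2d)$ is frozen, $\langle\bw_{r,t}^k,\bmu_j\rangle$ and $\langle\bw_{r,t}^k,\bxi_i\rangle$ grow linearly with common slopes $\Theta(\eta\sigma_0^2d\|\bmu\|^2/\sqrt m)$ (using $|\gS_j|=\Theta(n)$ from Lemma~\ref{lemma:bound_S} and $\alpha_t,\beta_t=\Theta(1)$) and $\Theta(\eta\sigma_0^2\sigma_\xi^2 d^2/(n\sqrt m))$ (using $\|\bxi_i\|^2=\Theta(\sigma_\xi^2d)$ from Lemma~\ref{lemma:init_bound}). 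Defining $T_\mu$ (resp.\ $T_\xi$) as the iteration where the accumulated growth overtakes the respective initialization bounds $\widetilde O(\sigma_0\|\bmu\|)$ (resp.\ $\widetilde O(\sigma_0\sigma_\xi\sqrt d)$) by a chosen polylog factor reproduces $T_\mu=\widetilde\Theta(\sqrt m\,\sigma_0^{-1}d^{-1}\|\bmu\|^{-1}\eta^{-1})$ and $T_\xi=\widetilde\Theta(n\sqrt m\,\sigma_0^{-1}\sigma_\xi^{-1}d^{-3/2}\eta^{-1})$; set $T_1=\max\{T_\mu,T_\xi\}$. To close the induction at $k=T_1$ I would use the remaining conditions (notably $\sigma_0\le\widetilde O(\|\bmu\|^{-2}n^{-1}\sigma_\xi d^{1/2})$ and $d=\widetilde\Omega(n\|\bmu\|\sigma_\xi^{-1})$) to check that both inner products at $T_1$ are still $\widetilde O(1)$, then invoke Lemma~\ref{lemma:weight_decomposition} to get $\|\bw_{r,t}^{T_1}\|^2=\Theta(\sigma_0^2d)$, the concentration of norms, and the cross-neuron relation. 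Concentration and positivity of the signal (resp.\ noise) inner products follow because the per-step increments are the same $\Theta$ across all $r,j$ (resp.\ $r,i$) and positive, so once the growth dominates the possibly-signed lower-order initialization each inner product is positive and they are mutually $\Theta$; finally the ratio $\langle\bw_{r,t}^{T_1},\bmu_j\rangle/\langle\bw_{r,t}^{T_1},\bxi_i\rangle$ equals, up to lower-order initialization corrections, the ratio of slopes, which is $\Theta(|\gS_j|\|\bmu\|^2/\|\bxi_i\|^2)=\Theta(n\|\bmu\|^2/(\sigma_\xi^2d))=\Theta(n\cdot\SNR^2)$.

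The main obstacle is not the linear recursion but keeping the error bookkeeping tight: one must show the whole of $\nabla L_{2,i}^{(p)}$ and the subleading $\nabla L_{1,i}^{(p)}$ terms stay a $\Theta$-factor below the learning term throughout $[0,T_1]$ — in particular after the inner products have grown by a polylog factor, not just at $k=0$ — and that $\langle\bw_{r,t}^{T_1},\bw_{r',t}^{T_1}\rangle=\Theta(\|\bw_{r,t}^{T_1}\|^2-\sigma_0^2d)$, which forces the growth-induced pieces in Lemma~\ref{lemma:weight_decomposition} to dominate the $\widetilde O(\sigma_0^2\sqrt d)$ initialization fluctuations of $\langle\bw_{r,t}^0,\bw_{r',t}^0\rangle$ and $\|\bw_{r,t}^0\|^2-\sigma_0^2d$. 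Making all of these inequalities simultaneously consistent is exactly what the intricate list in Condition~\ref{assump:diffusion} is engineered for, and threading them correctly is the delicate part.
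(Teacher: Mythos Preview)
Your approach is the paper's: isolate the learning term $-\tfrac{4}{\sqrt m}\alpha_t\beta_t\|\bw_{r,t}^k\|^2\,\bx_{0,i}^{(p)}$ as the dominant contribution to the gradient, obtain the linear recursions, read off $T_\mu,T_\xi$ as the times when the accumulated linear growth overtakes the initialization scale, and close with Lemma~\ref{lemma:weight_decomposition} for the norm and cross-neuron claims.

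The one place your sketch is imprecise is the phrase ``grown by a polylog factor.'' On the interval $[\min\{T_\mu,T_\xi\},T_1]$ the inner product whose threshold came first keeps growing, and by $T_1$ it has exceeded its initialization scale by a factor $T_1/\min\{T_\mu,T_\xi\}=\widetilde\Theta(n\cdot\SNR)$ or $\widetilde\Theta((n\cdot\SNR)^{-1})$, which is \emph{not} polylog in general under Condition~\ref{assump:diffusion}. Your single induction with the invariant ``every inner product is $\widetilde O(\sigma_0\max\{\|\bmu\|,\sigma_\xi\sqrt d\})$'' therefore breaks on this interval, and the coarser invariant ``$\widetilde O(1)$'' is too weak to keep the subleading gradient terms suppressed. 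The paper handles this by splitting the induction at $\min\{T_\mu,T_\xi\}$ and doing a case analysis: if $T_\mu\le T_\xi$ (i.e.\ $n\cdot\SNR^2=\widetilde\Omega(1)$) it carries the tighter invariant $|\langle\bw_{r,t}^k,\bmu_j\rangle|=\widetilde O(\sigma_0\|\bmu\|\,n\cdot\SNR)$ and uses $\sigma_0\le\widetilde O(\|\bmu\|^{-2}n^{-1}\sigma_\xi d^{1/2})$, $d=\widetilde\Omega(n\|\bmu\|\sigma_\xi^{-1})$ to re-verify dominance; if $T_\xi<T_\mu$ it carries $|\langle\bw_{r,t}^k,\bxi_i\rangle|=\widetilde O(\sigma_0\sigma_\xi\sqrt d\,n^{-1}\SNR^{-1})$ and uses $\SNR^{-1}=\widetilde O(d^{1/4})$, $\sigma_0\le\widetilde O(n\sigma_\xi^{-1}d^{-3/4})$. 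This two-phase bookkeeping is exactly the ``delicate part'' you identify, but your proposal does not yet contain the right invariants for the second phase.
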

\begin{proof}[Proof of Lemma \ref{lemma:diff_pre_stage}]
We prove the results by induction.  To this end, we first compute the scale of the gradients projected to the space of $\bmu_1, \bmu_{-1}$ and $\bxi_i$, for $i \in [n]$ under the initialization scale. For notation clarity, we omit the index $k$.

{
As long as $\| \bw_{r,t}\|^2 = \Theta(\sigma_0^2 d)$ and suppose $\langle \bw_{r',t}, \bmu_j \rangle = O(\langle \bw_{r,t}, \bmu_j \rangle)$, $\langle \bw_{r',t}, \bxi_i  \rangle = O(\langle \bw_{r,t}, \bxi_i \rangle)$, we can identify the dominant terms as follows.

\paragraph{Signal.}
First for $\bmu_j$, and for any $i \in [n]$, we compute
\begin{align*}
     &\frac{1}{2n} \sum_{i=1}^n \langle  \nabla L_{1,i}^{(1)}(\bw_{r,t}) , \bmu_j \rangle \\
     &= \frac{1}{m} \Theta( \langle \bw_{r,t}, \bmu_j \rangle^5 + \langle \bw_{r,t}, \bmu_j \rangle^3 \sigma_0^2 d + \sigma_0^4 d^2 \langle \bw_{r,t}, \bmu_j \rangle  ) - \frac{1}{\sqrt{m}} \Theta(\langle \bw_{r,t}, \bmu_j\rangle^2) \\
    &\quad + \frac{1}{m}\Theta(  \sigma_0^2 d \langle \bw_{r,t}, \bmu_j \rangle^3 + \sigma_0^4 d^2 \langle \bw_{r,t}, \bmu_j \rangle ) - \frac{1}{\sqrt{m}} \Theta(\sigma_0^2 d) \\
    &= \frac{1}{m} O( \sigma_0^4 d^2 \langle \bw_{r,t}, \bmu_j \rangle  ) - \frac{1}{\sqrt{m}} \Theta(\sigma_0^2 d) 
\end{align*}
where the second equality is by $\langle \bw_{r,t}, \bmu_j \rangle^2 \leq \| \bw_{r,t}\|^2 \| \bmu\|^2 = \Theta(\sigma_0^2 d)$. It is clear the dominant term is $-\frac{1}{\sqrt{m}} 4 \alpha_t \beta_t \| \bw_{r,t}\|^2 \| \bmu \|^2$. The second dominant term comes from $\Theta(\frac{1}{m}\| \bw_{r,t} \|^4 \langle \bw_{r,t}, \bmu_j \rangle)$.

Further, we have due to the orthogonality between signal and noise vectors, 
\begin{align*}
    \frac{1}{2n} \sum_{i=1}^n \langle  \nabla L_{1,i}^{(2)}(\bw_{r,t}) , \bmu_j \rangle &= \frac{1}{m} O \big( \langle \bw_{r,t}, \bxi_i \rangle^4 \langle \bw_{r,t}, \bmu_j \rangle + \langle \bw_{r,t}, \bxi_i\rangle^2 \sigma_0^2 d \langle \bw_{r,t}, \bmu_j\rangle + \sigma_0^4 d^2 \langle \bw_{r,t}, \bmu_j \rangle \\
    &\quad - \sqrt{m} \langle \bw_{r,t}, \bxi_i\rangle \langle \bw_{r,t}, \bmu_j \rangle \big) 
    % &= \frac{1}{m} O(\sigma_0^4 d^2 \langle \bw_{r,t}, \bmu_j \rangle) - \frac{1}{\sqrt{m}} \Theta(\langle \bw_{r,t}, \bmu_j \rangle \langle  \bw_{r,t}, \bxi_i\rangle)
\end{align*}

In addition, we have 
\begin{align*}
     \frac{1}{2n} \sum_{i=1}^n\langle \nabla L_{2,i}^{(1)}(\bw_{r,t}) , \bmu_j \rangle &= \frac{m-1}{m}   O \big(  \langle \bw_{r,t}, \bmu_j \rangle^5 + \langle \bw_{r,t}, \bmu_j \rangle^3 \sigma_0^2 d + \sigma_0^4 d^2  \langle \bw_{r,t}, \bmu_j \rangle \big)  \\
    &= \frac{m-1}{m} O \big( \sigma_0^4 d^2 \langle \bw_{r,t}, \bmu_j \rangle  \big)
\end{align*}
Further,
\begin{align*}
     \frac{1}{2n} \sum_{i=1}^n\langle \nabla L_{2,i}^{(2)}(\bw_{r,t}), \bmu_j\rangle &=\frac{m-1}{m} O( \langle \bw_{r,t}, \bxi_i \rangle^4 \langle \bw_{r,t}, \bmu_j \rangle + \langle \bw_{r,t},\bxi_i \rangle^2 \sigma_0^2 d \langle \bw_{r,t}, \bmu_j \rangle + \sigma_0^4 d^2 \langle \bw_{r,t}, \bmu_j\rangle)
\end{align*}

Then according to the definition of $|\gS_{\pm 1}|$ and $\bmu_{\pm 1}$, 
we can simplify the gradient into the dominant terms as 
\begin{align*}
    \langle \nabla L(\bW_t), \bmu_j \rangle &= - \frac{1}{\sqrt{m}} \Theta( \| \bw_{r,t}\|^2 + \langle \bw_{r,t}, \bxi_i \rangle \langle \bw_{r,t}, \bmu_j  \rangle ) \\
    &\quad+  O \big( \sigma_0^4 d^2 \langle \bw_{r,t}, \bmu_j \rangle + \langle \bw_{r,t}, \bxi_i \rangle^4 \langle \bw_{r,t}, \bmu_j \rangle + \langle \bw_{r,t},\bxi_i \rangle^2 \sigma_0^2 d \langle \bw_{r,t}, \bmu_j \rangle  \big)
\end{align*}

\paragraph{Noise.}
Similarly, we can also show for the noise learning
\begin{align*}
    \frac{1}{2n} \sum_{i'=1}^n \langle \nabla L_{1,i'}^{(1)} (\bw_{r,t}), \bxi_i \rangle & = \frac{1}{m} O \big( \langle \bw_{r,t}, \bmu_j \rangle^4 \langle \bw_{r,t}, \bxi_i \rangle 
     + \langle \bw_{r,t}, \bmu_j \rangle^2 \sigma_0^2 d \langle \bw_{r,t}, \bxi_i  \rangle + \sigma_0^4 d^2 \langle \bw_{r,t}, \bxi_i \rangle \big) \\
    &\quad - \frac{1}{\sqrt{m}} \Theta\big( \langle \bw_{r,t}, \bxi_i \rangle \langle \bw_{r,t}, \bmu_j  \rangle \big) \\
    &= \frac{1}{m} O( \sigma_0^4 d^2 \langle \bw_{r,t}, \bxi_i \rangle ) - \frac{1}{\sqrt{m}} \Theta( \langle \bw_{r,t}, \bxi_i \rangle \langle \bw_{r,t}, \bmu_j  \rangle  )
\end{align*}
where the dominating term is $- 4 \sqrt{m}\alpha_t \beta_t \langle \bw_{r,t}, \bmu_j \rangle \langle \bw_{r,t} , \bxi_i\rangle$. 

In addition, 
\begin{align*}
   &\frac{1}{2n} \sum_{i'=1}^n \langle \nabla L_{1,i'}^{(2)} (\bw_{r,t}), \bxi_i \rangle \\
   &= \frac{1}{m} O \big(\langle \bw_{r,t}, \bxi_i \rangle^5 + \langle \bw_{r,t}, \bxi_i \rangle^3 \sigma_0^2 d + \sigma_0^4 d^2 \langle\bw_{r,t}, \bxi_i \rangle \big) - \frac{1}{\sqrt{m}} O( \langle \bw_{r,t}, \bxi_i \rangle^2 )  \\
    &\quad + \frac{1}{m} \big( O ( \langle \bw_{r,t},\bxi_i \rangle^3 \sigma_0^2 d + \sigma_0^4 d^2 \langle \bw_{r,t}, \bxi_i \rangle  ) + \Theta(\sigma_0^2 d)  \big) \big( \Theta( \sigma_\xi^2 d n^{-1}) - \sqrt{m} \widetilde O(\sigma_\xi^2 \sqrt{d})  \big ) \\
    &= \frac{1}{m} O \big(\langle \bw_{r,t}, \bxi_i \rangle^5 + \langle \bw_{r,t}, \bxi_i \rangle^3 \sigma_0^2 d + \sigma_0^4 d^2 \langle\bw_{r,t}, \bxi_i \rangle \big) - \frac{1}{\sqrt{m}} O( \langle \bw_{r,t}, \bxi_i \rangle^2 ) \\
    &\quad  + \frac{1}{m} O( \langle \bw_{r,t},\bxi_i  \rangle^3 \sigma_0^2 \sigma_{\xi}^2 d^2n^{-1} + \langle \bw_{r,t}, \bxi_i \rangle \sigma_0^4 \sigma_\xi^2 d^3 n^{-1}) - \frac{1}{\sqrt{m}} \Theta( \sigma_0^2 \sigma_\xi^2 d^2 n^{-1} ) 
\end{align*}
where we use Lemma \ref{lemma:init_bound} in the first equality and the second equality is by $d = \widetilde \Omega(n^2)$. 

Further we can show 
\begin{align*}
    &\frac{1}{2n} \sum_{i'=1}^n \langle \nabla L_{2,i}^{(1)} (\bw_{r,t}), \bxi_i  \rangle \\
    &\quad=  \frac{m-1}{m} O \big( \langle \bw_{r,t}, \bmu_j \rangle^4 \langle \bw_{r,t}, \bxi_i \rangle + \langle\bw_{r,t}, \bmu_j  \rangle^2 \sigma_0^2 d  \langle \bw_{r,t}, \bxi_i \rangle + \sigma_0^4 d \langle \bw_{r,t}, \bxi_i \rangle \big) \\
    &\quad= \frac{m-1}{m} O \big(   \sigma_0^4 d \langle \bw_{r,t}, \bxi_i \rangle \big)
\end{align*}
Lastly, 
\begin{align*}
    \frac{1}{2n} \sum_{i'=1}^n\langle \nabla L_{2,i}^{(2)} (\bw_{r,t}), \bxi_i  \rangle  &=  \frac{m-1}{m} O \big(  \langle \bw_{r,t}, \bxi_i \rangle^5 + \langle \bw_{r,t}, \bxi_i \rangle^3 \sigma_0^2 d + \sigma_0^4 d^2 \langle \bw_{r,t}, \bxi_i \rangle \big) \\
    &\quad +  \frac{m-1}{m} O( \langle \bw_{r,t},\bxi_i  \rangle^3 \sigma_0^2 \sigma_{\xi}^2 d^2n^{-1} + \langle \bw_{r,t}, \bxi_i \rangle \sigma_0^4 \sigma_\xi^2 d^3 n^{-1}) 
\end{align*}
This suggests we can simplify the gradient along noise direction as
\begin{align*}
     \langle \nabla L(\bW_t), \bxi_i \rangle &= - \frac{1}{\sqrt{m}} \Theta( \sigma_0^2 \sigma_\xi^2 d^2 n^{-1} + \langle \bw_{r,t}, \bxi_i \rangle^2 + \langle \bw_{r,t}, \bxi_i\rangle \langle \bw_{r,t}, \bmu_j  \rangle ) \\
     &\quad + O\big( \sigma_0^4 d^2 \langle \bw_{r,t}, \bxi_i \rangle + \langle \bw_{r,t}, \bxi_i \rangle^5 + \langle \bw_{r,t}, \bxi_i \rangle^3 \sigma_0^2 d \big) \\
     &\quad + O\big(  \langle \bw_{r,t},\bxi_i  \rangle^3 \sigma_0^2 \sigma_{\xi}^2 d^2n^{-1} + \langle \bw_{r,t}, \bxi_i \rangle \sigma_0^4 \sigma_\xi^2 d^3 n^{-1} \big)
\end{align*}

In summary, as long as $\| \bw_{r,t}\|^2 = \Theta(\sigma_0^2 d)$ and suppose $\langle \bw_{r',t}, \bmu_j \rangle = O(\langle \bw_{r,t}, \bmu_j \rangle)$, $\langle \bw_{r',t}, \bxi_i  \rangle = O(\langle \bw_{r,t}, \bxi_i \rangle)$, we can simplify the gradient as 
\begin{align}
    \langle \nabla_{\bw_{r,t}} L(\bW_t), \bmu_j \rangle &= - \frac{1}{\sqrt{m}} \Theta( \sigma_0^2 d ) - \frac{1}{\sqrt{m}} O( \langle \bw_{r,t}, \bxi_i \rangle \langle \bw_{r,t}, \bmu_j  \rangle ) \nonumber\\
    &\quad+  O \big( \sigma_0^4 d^2 \langle \bw_{r,t}, \bmu_j \rangle + \langle \bw_{r,t}, \bxi_i \rangle^4 \langle \bw_{r,t}, \bmu_j \rangle + \langle \bw_{r,t},\bxi_i \rangle^2 \sigma_0^2 d \langle \bw_{r,t}, \bmu_j \rangle  \big) \label{eq:signal_grad} \\
    \langle \nabla_{\bw_{r,t}} L(\bW_t), \bxi_i \rangle &= - \frac{1}{\sqrt{m}} \Theta( \sigma_0^2 \sigma_\xi^2 d^2 n^{-1} ) - \frac{1}{\sqrt{m}} O ( \langle \bw_{r,t}, \bxi_i \rangle^2 + \langle \bw_{r,t}, \bxi_i\rangle \langle \bw_{r,t}, \bmu_j  \rangle ) \nonumber\\
     &\quad + O\big( \sigma_0^4 d^2 \langle \bw_{r,t}, \bxi_i \rangle + \langle \bw_{r,t}, \bxi_i \rangle^5 + \langle \bw_{r,t}, \bxi_i \rangle^3 \sigma_0^2 d \big) \nonumber \\
     &\quad + O\big(  \langle \bw_{r,t},\bxi_i  \rangle^3 \sigma_0^2 \sigma_{\xi}^2 d^2 n^{-1} + \langle \bw_{r,t}, \bxi_i \rangle \sigma_0^4 \sigma_\xi^2 d^3 n^{-1} \big) \label{eq:noise_grad}
\end{align}

In the initial phase where $\| \bw_{r,t}^k \|^2 = \Theta(\sigma_0^2 d), |\langle \bw_{r,t}^k, \bmu_j \rangle| = \widetilde O(\sigma_0 \|\bmu \|)$ and $|\langle \bw_{r,t}^k, \bxi_i \rangle| = \widetilde O( \sigma_0 \sigma_\xi \sqrt{d} )$ (by Lemma \ref{lemma:init_diff_bound}), we can show \eqref{eq:signal_grad} reduces to 
\begin{align*}
    &\langle \nabla L(\bW_t), \bmu_j \rangle \\
    &= - \frac{1}{\sqrt{m}} \Theta( \sigma_0^2 d  ) +  O \big( \sigma_0^4 d^2 \langle \bw_{r,t}, \bmu_j \rangle + \langle \bw_{r,t}, \bxi_i \rangle^4 \langle \bw_{r,t}, \bmu_j \rangle + \langle \bw_{r,t},\bxi_i \rangle^2 \sigma_0^2 d \langle \bw_{r,t}, \bmu_j \rangle  \big) \\
    &= - \frac{1}{\sqrt{m}} \Theta(\sigma_0^2 d)
\end{align*}
where we use the condition that {$\SNR^{-1} = \widetilde O(d^{1/4})$, i.e., $\sigma_\xi = \widetilde O(d^{-1/4}) = o(1)$} in the first equality. The second equality is by condition {$\sigma_0 \leq \widetilde O(  m^{-1/6} d^{-1/3} )$} and $\sigma_\xi^{-1} = \widetilde \Omega(d^{1/4})$. Further, we can show \eqref{eq:noise_grad} reduces to 
\begin{align*}
    \langle \nabla L(\bW_t), \bxi_i \rangle &= - \frac{1}{\sqrt{m}} \Theta( \sigma_0^2 \sigma_\xi^2 d^2 n^{-1} )  + O\big( \sigma_0^4 d^2 \langle \bw_{r,t}, \bxi_i \rangle + \langle \bw_{r,t}, \bxi_i \rangle^5 + \langle \bw_{r,t}, \bxi_i \rangle^3 \sigma_0^2 d \big) \\
     &\quad + O\big(  \langle \bw_{r,t},\bxi_i  \rangle^3 \sigma_0^2 \sigma_{\xi}^2 d^2n^{-1} + \langle \bw_{r,t}, \bxi_i \rangle \sigma_0^4 \sigma_\xi^2 d^3 n^{-1} \big) \\
     &= - \frac{1}{\sqrt{m}} \Theta( \sigma_0^2 \sigma_\xi^2 d^2 n^{-1} ) + O\big(  \langle \bw_{r,t},\bxi_i  \rangle^3 \sigma_0^2 \sigma_{\xi}^2 d^2n^{-1} + \langle \bw_{r,t}, \bxi_i \rangle \sigma_0^4 \sigma_\xi^2 d^3 n^{-1} \big) \\
     &= - \frac{1}{\sqrt{m}} \Theta( \sigma_0^2 \sigma_\xi^2 d^2 n^{-1} ) 
\end{align*}
where the first equality is by {$d = \widetilde \Omega(n)$ and $d \geq \widetilde \Omega(n^{2/3} \sigma_\xi^{-2/3})$. } The second equality is by {$\sigma_0^3  \leq  \widetilde O(m^{-1/2} d^{-1/2} \sigma_\xi n^{-1})$}. The third equality is by  {$\sigma_0^3 \leq \widetilde O(m^{-1/2} d^{-3/2} \sigma_\xi^{-1})$}. 

In summary, we can show as long as 
$\| \bw_{r,t}^k \|^2 = \Theta(\sigma_0^2 d), |\langle \bw_{r,t}^k, \bmu_j \rangle| = \widetilde O(\sigma_0 \|\bmu \|)$ and $|\langle \bw_{r,t}^k, \bxi_i \rangle| = \widetilde O( \sigma_0 \sigma_\xi \sqrt{d} )$, 
% $\| \bw_{r,t}^k \|^2, |\langle \bw_{r,t}^k, \bmu_j \rangle|, |\langle \bw_{r,t}^k, \bxi_i \rangle| = o(1)$,
\begin{align}
    \langle  \bw_{r,t}^{k+1} , \bmu_j \rangle 
    % &= \langle  \bw_{r,t}^{k} , \bmu_j \rangle + \frac{4 \eta \alpha_t \beta_t |\gS_j|}{n \sqrt{m}} \| \bw_{r,t}^k \|^2 \| \bmu_j \|^2  + \widetilde O( \eta \sigma_0^5 d^2 \| \bmu_j \|^3 ) \label{eq:wmu1} \\
    &= \langle  \bw_{r,t}^{k} , \bmu_j \rangle + \frac{\eta \alpha_t \beta_t}{\sqrt{m}} \Theta(\sigma_0^2d) \label{eq:wmu2} \\
     \langle  \bw_{r,t}^{k+1} , \bxi_i \rangle  
     % &= \langle  \bw_{r,t}^{k} , \bxi_i \rangle + \frac{4\eta \alpha_t \beta_t}{n \sqrt{m}}   \| \bw_{r,t}^k \|^2 \| \bxi_i  \|^2 + \widetilde O( \eta \sigma_0^5 \sigma_\xi^3 d^{5/2} n^{-1}  ) \label{eq:wxi1}\\
     &=\langle  \bw_{r,t}^{k} , \bxi_i \rangle + \frac{\eta \alpha_t \beta_t}{n \sqrt{m}}   \Theta(\sigma_0^2 d) \| \bxi_i  \|^2 \label{eq:wxi2}
\end{align}
}

In addition, we similarly show that as long as $\|\bw_{r,t}^k \|^2 = \Theta(\sigma_0^2 d)$, $|\langle \bw_{r,t}^k, \bmu_j \rangle|, |\langle \bw_{r,t}^k, \bxi_i \rangle| = o(1)$, 
\begin{align}
    &\langle \bw_{r,t}^{k+1} , \bw_{r,t}^0 \rangle \nonumber\\
    &= \langle \bw_{r,t}^{k} , \bw_{r,t}^0 \rangle + \eta O \Big(  
     \big( \sigma_0^4 d^2 + \langle \bw_{r,t}^k, \bmu_j + \bxi_i \rangle \big) \langle \bw_{r,t}^k, \bw_{r,t}^0 \rangle  + \sigma_0^2 d \langle \bw_{r,t}^0, \bmu_j + \bxi_i \rangle \Big), \label{eq:ww0}
\end{align}
% \begin{align}
%     &\langle \bw_{r,t}^{k+1} , \bw_{r,t}^0 \rangle \\
%     &= \langle \bw_{r,t}^{k} , \bw_{r,t}^0 \rangle + \frac{\eta}{n} \sum_{i=1}^n \sum_{p\in [2]} \frac{\alpha_t \beta_t}{\sqrt{m}} \Theta \big( \langle \bw_{r,t}^k, \bx_{0,i}^{(p)} \rangle \langle \bw_{r,t}^k, \bw_{r,t}^0 \rangle + \| \bw_{r,t}^k\|^2 \langle \bw_{r,t}^0 , \bx_{0,i}^{(p)}\rangle \big) \nonumber\\
%     &= \langle \bw_{r,t}^{k} , \bw_{r,t}^0 \rangle + \frac{\eta \alpha_t \beta_t}{\sqrt{m}} \Theta \Big(  \langle \bw_{r,t}^k, \bmu_1 + \bmu_{-1} +  \overline \bxi  \rangle \langle \bw_{r,t}^k, \bw_{r,t}^0 \rangle + \| \bw_{r,t}^k \|^2 \langle \bw_{r,t}^0, \bmu_1+ \bmu_{-1} +\overline\bxi \rangle \Big), \label{eq:ww0}
% \end{align}
% where we denote $\overline{\bxi} = \frac{1}{n} \sum_{i=1}^n \bxi_i$.

Next, let $T_\mu =  \Theta(\frac{ \sqrt{m \log(16m/\delta)}}{\sigma_0 d \| \bmu\| \eta \alpha_t \beta_t})$ and $T_\xi = \Theta( \frac{n \sqrt{m \log(16mn/\delta) }}{\sigma_0 \sigma_\xi d^{3/2} \eta \alpha_t \beta_t} )$ and $T_1 = \max\{T_\mu, T_\xi \}$. We prove the results hold for all $0 \leq k \leq T_1$ via induction. 
{ We partition the proof into two stages, namely when $0 \leq k \leq \min\{ T_\mu, T_\xi\}$ and when $\min\{T_\mu, T_\xi \} \leq k \leq T_1$.

(1) We first show for all $0 \leq k \leq \min\{ T_\mu, T_\xi\}$ that $\| \bw_{r,t}^k \|^2 = \Theta(\sigma_0^2 d)$, $\langle \bw_{r,t}^k, \bw_{r,t}^0 \rangle = \Theta(\sigma_0^2 d)$, $|\langle \bw_{r,t}^k, \bmu_j \rangle| = \widetilde O(\sigma_0 \|\bmu \|)$ and $|\langle \bw_{r,t}^k, \bxi_i \rangle| = \widetilde O( \sigma_0 \sigma_\xi \sqrt{d} )$ hold and thus \eqref{eq:wmu2}, \eqref{eq:wxi2}, \eqref{eq:ww0} are directly satisfied. We prove the claims by induction as follows.
}

It is clear that at $k = 0$, we have from Lemma \ref{lemma:init_diff_bound} that $\| \bw_{r,t}^0\|^2 =\Theta(  \sigma_0^2d )$, $\langle \bw_{r,t}^k, \bw_{r,t}^0 \rangle = \| \bw_{r,t}^0\|^2 =\Theta(  \sigma_0^2d )$ and 
 \begin{align*}
    &|\langle \bw_{r,t}^0, \bmu_j \rangle| \leq  \sqrt{2 \log(16m/\delta)} \sigma_0 \| \bmu \| = \widetilde O(\sigma_0 \| \bmu\|) \\
    &| \langle \bw_{r,t}^0, \bxi_i \rangle | \leq 2 \sqrt{\log(16mn/\delta)} \sigma_0 \sigma_\xi \sqrt{d} = \widetilde O( \sigma_0 \sigma_\xi \sqrt{d} )
\end{align*}
Suppose there exists an iteration $\widetilde T \leq \min\{T_\mu , T_\xi\}$ such that $\| \bw_{r,t}^k \|^2 =\Theta(  \sigma_0^2d )$, $\langle \bw_{r,t}^k, \bw_{r,t}^0 \rangle = \Theta(\sigma_0^2d)$, $|\langle \bw_{r,t}^0, \bmu_j \rangle| = \widetilde O(\sigma_0 \| \bmu\|)$ and $|\langle \bw_{r,t}^k, \bxi_i \rangle| = \widetilde O(\sigma_0 \sigma_\xi \sqrt{d})$ for all $0\leq k \leq \widetilde T -1$. Then we have from \eqref{eq:wmu2} that 
% Then we have from the previous analysis, we can approximate with a linear dynamics by omitting the higher order terms $\widetilde O(\sigma_0^5 d^2 \| \bmu \|^3)$ as 
\begin{align}
    \langle  \bw_{r,t}^{\widetilde T} , \bmu_j \rangle = \langle  \bw_{r,t}^{\widetilde T-1} , \bmu_j \rangle + \frac{\eta\alpha_t \beta_t}{\sqrt{m}} \Theta(\sigma_0^2 d) \| \bmu \|^2 &= \langle  \bw_{r,t}^{0} , \bmu_j \rangle  +  \frac{\eta  \alpha_t \beta_t}{\sqrt{m}} \Theta(\sigma_0^2 d) \| \bmu \|^2 \widetilde T \nonumber\\
    &\leq \langle  \bw_{r,t}^{0} , \bmu_j \rangle  +  \frac{\eta  \alpha_t \beta_t}{\sqrt{m}} \Theta(\sigma_0^2 d) \| \bmu \|^2 T_\mu  \nonumber\\
    &= \langle  \bw_{r,t}^{0} , \bmu_j  \rangle + \widetilde O(\sigma_0 \| \bmu\|) \nonumber\\
    &= \widetilde O(\sigma_0 \| \bmu\|) \label{eioreotjk}
\end{align}
where we use the Lemma \ref{lemma:bound_S} that $|\gS_j| = \Theta(n)$. In addition, we have from \eqref{eq:wxi2} that 
% Following the similar argument, suppose there exists an iteration $\widetilde T \leq \min\{T_\mu, T_\xi\}$ such that $\| \bw_{r,t}^k\|^2 =\Theta(  \sigma_0^2d )$ and $|\langle \bw_{r,t}^k, \bxi_i \rangle| = \widetilde O(\sigma_0 \sigma_\xi \sqrt{d})$ hold for all $0\leq k \leq \widetilde T - 1$. Then we we can approximate with a linear dynamics by omitting the higher order  terms $\widetilde O(\sigma_0^5 \sigma_\xi^3 d^{7/2})$ 
\begin{align}
    \langle  \bw_{r,t}^{\widetilde T} , \bxi_i \rangle = \langle \bw_{r,t}^{\widetilde T-1} , \bxi_i  \rangle + \frac{\eta \alpha_t \beta_t}{n \sqrt{m}} \Theta(\sigma_0^2 \sigma_\xi^2 d^2)  &\leq \langle \bw_{r,t}^{0} , \bxi_i  \rangle + \frac{\eta \alpha_t \beta_t}{n \sqrt{m}} \Theta(\sigma_0^2 \sigma_\xi^2 d^2) T_\xi \nonumber \\
    &= \widetilde O( \sigma_0 \sigma_\xi \sqrt{d} ) \label{lriejire}
\end{align}
where we use Lemma \ref{lemma:init_diff_bound} that $\| \bxi_i\|^2 = \Theta(\sigma_\xi^2 d)$ for all $i \in [n]$. 

Finally, we deduce from \eqref{eq:ww0} that
\begin{align}
    \langle \bw_{r,t}^{\widetilde T} , \bw_{r,t}^0 \rangle 
    &= \langle \bw_{r,t}^{\widetilde T-1} , \bw_{r,t}^0 \rangle + \eta O \Big(  
     \big( \sigma_0^4 d^2 + \langle \bw_{r,t}^k, \bmu_j + \bxi_i \rangle \big) \langle \bw_{r,t}^k, \bw_{r,t}^0 \rangle  + \sigma_0^2 d \langle \bw_{r,t}^0, \bmu_j + \bxi_i \rangle \Big)\nonumber \\
    &= \langle \bw_{r,t}^{\widetilde T-1} , \bw_{r,t}^0 \rangle + \eta O\Big( \sigma_0^3 d^{3/2}  (\| \bmu\| +  \sigma_\xi \sqrt{d}  ) + \sigma_0^6 d^3 \Big) \nonumber \\
    &=  \Theta(\sigma_0^2 d) + \eta  O\Big( \sigma_0^3 d^{3/2}  (\| \bmu\| +  \sigma_\xi \sqrt{d}  ) + \sigma_0^6 d^3  \Big) \widetilde T \label{eq:ww0_induct}
\end{align}
where we use Cauchy-Schwarz inequality in the first inequality. 
% and the third equality is by \highlight{$\sigma_0 \leq \widetilde O(d^{-1/2})$}. 
When $\widetilde T = T_\mu$,
\begin{align}
    \eta O\Big( \sigma_0^3 d^{3/2}  (\| \bmu\| +  \sigma_\xi \sqrt{d}  ) + \sigma_0^6 d^3 \Big) \widetilde T = \widetilde O\Big( (\| \bmu\| + \sigma_\xi \sqrt{d}) \sigma_0^2 \sqrt{d} + \sigma_0^5 d^2 \Big) \leq \Theta(\sigma_0^2 d) \label{eq:w0_update_bound1}
\end{align}
where the last inequality is by the condition on {$\sigma_\xi^{-1} = \widetilde\Omega(d^{1/4}) \gg 1$ and $\sigma_0 \leq \widetilde O(d^{-1/3})$}. When $\widetilde T = T_\xi$,
\begin{align}
     \eta O\Big( \sigma_0^3 d^{3/2}  (\| \bmu\| +  \sigma_\xi \sqrt{d}  )  + \sigma_0^6 d^3 \Big) \widetilde T = \widetilde O\Big( n  \sigma_\xi^{-1} \sigma_0^2 (\| \bmu\| + \sigma_\xi \sqrt{d}) + n \sigma_0^5 \sigma_\xi^{-1} d^{3/2} \Big) \leq \Theta(\sigma_0^2 d) \label{eq:w0_update_bound2}
\end{align}
where the last inequality is by the condition on $d$ that {$d = \widetilde \Omega(n \sigma_\xi^{-1} \| \bmu \|)$ and $d = \widetilde \Omega(n^2)$ and $\sigma_0 \leq \widetilde O(  \sigma_\xi^{1/3} n^{-1/3} d^{-1/6})$}. Hence we have proved the induction on $\langle \bw_{r,t}^k, \bw_{r,t}^0 \rangle$ and in fact proved a stronger result that $\langle \bw_{r,t}^k, \bw_{r,t}^0 \rangle = \Theta(\sigma_0^2 d)$ for all $k \leq \max\{ T_\mu, T_\xi\}$ as long as  $\|\bw_{r,t}^k \|^2 = \Theta(\sigma_0^2 d)$, $|\langle \bw_{r,t}^k, \bmu_j \rangle|, |\langle \bw_{r,t}^k, \bxi_i \rangle| = o(1)$. 

Next, we let $\bP_\bxi = \frac{\bxi \bxi^\top}{\|\bxi\|^2}$ be the projection matrix onto the direction of $\bxi$ and we express $\bw^{\widetilde T}_{r,t} = \bP_{\bmu_1} \bw^{\widetilde T}_{r,t} + \bP_{\bmu_{-1}} \bw^{\widetilde T}_{r,t} + \sum_{i=1}^n \bP_{\bxi_i} \bw^{\widetilde T}_{r,t} + \big( \bI - \bP_{\bmu_1} - \bP_{\bmu_{-1}} - \sum_{i=1}^n \bP_{\bxi_i} \big) \bw^{\widetilde T}_{r,t}$ and due to the orthogonality of the decomposition, we have 
\begin{align*}
    \|  \bw_{r,t}^{\widetilde T} \|^2 &=  \frac{\langle \bw_{r,t}^{\widetilde T} , \bmu_1 \rangle^2}{\| \bmu \|^2} +  \frac{\langle \bw_{r,t}^{\widetilde T} , \bmu_{-1} \rangle^2}{\| \bmu \|^2} + \left\|\sum_{i=1}^n  \frac{\langle\bw_{r,t}^{\widetilde T} , \bxi_i \rangle}{\| \bxi\|^2}  \right\|^2    + \left\|  \big( \bI - \bP_{\bmu_1} - \bP_{\bmu_{-1}} - \sum_{i=1}^n \bP_{\bxi_i} \big) \bw^{\widetilde T}_{r,t} \right\|^2 \\
    &= \widetilde O(\sigma_0^2) + \widetilde O(n \sigma_0^2) + \Big\| \frac{\langle \bw_{r,t}^{\widetilde T}, \bw_{r,t}^0  \rangle}{\| \bw_{r,t}^0 \|^2} \bw_{r,t}^0 \Big\|^2\\
    &= \Theta( \sigma_0^2 d )
\end{align*}
where we use the induction results that $|\langle  \bw_{r,t}^{\widetilde T} , \bmu_j \rangle| = \widetilde O(\sigma_0 \|\bmu \|)$ and $|\langle  \bw_{r,t}^{\widetilde T} , \bxi_i \rangle| = \widetilde O(\sigma_0 \sigma_\xi \sqrt{d})$, and the $ \big\|  \big( \bI - \bP_{\bmu_1} - \bP_{\bmu_{-1}} - \sum_{i=1}^n \bP_{\bxi_i} \big) \bw^{\widetilde T}_{r,t} \big\|^2$ is dominated by its projection to $\bw_{r,t}^0$. 

This completes the induction that for all $k \leq \min\{T_\mu, T_\xi \}$, we have $\| \bw_{r,t}^k \|^2 = \Theta(\sigma_0^2 d)$, $\langle \bw_{r,t}^k, \bw_{r,t}^0 \rangle = \Theta(\sigma_0^2 d)$, $|\langle \bw_{r,t}^k, \bmu_j \rangle| = \widetilde O(\sigma_0 \|\bmu \|)$ and $|\langle \bw_{r,t}^k, \bxi_i \rangle| = \widetilde O( \sigma_0 \sigma_\xi \sqrt{d} )$. 

{
(2) Next, we examine the iteration $ \min\{T_\mu, T_\xi \} \leq k \leq  \max\{T_\mu, T_\xi \} = T_1$. The magnitude comparison between $T_\mu$ and $T_\xi$ depends on the condition on $n \cdot \SNR^2$. 
In particular, we can verify that $T_\mu/ T_\xi = \widetilde \Theta( n^{-1/2} \sqrt{n^{-1} \SNR^{-2}}) = \widetilde \Theta( n^{-1} \SNR^{-1})$. 
\begin{itemize}[leftmargin=0.2in]
    \item When $T_\mu \leq T_\xi$, i.e., $n \cdot \SNR^2 = \widetilde\Omega(1)$, we use induction to show for all $\min\{T_\mu, T_\xi \} \leq k \leq T_1$, $\| \bw_{r,t}^k \|^2 = \Theta(\sigma_0^2 d)$, $\langle \bw_{r,t}^k, \bw_{r,t}^0 \rangle = \Theta(\sigma_0^2d)$, $|\langle \bw_{r,t}^k, \bmu_j \rangle |= \widetilde O(\sigma_0 \| \bmu\| {n \SNR})$, $|\langle \bw_{r,t}^k, \bxi_i \rangle| = \widetilde O(\sigma_0\sigma_\xi \sqrt{d})$. It can be shown that under the condition {$\sigma_0 \leq  \widetilde O(n^{-1} \sigma_\xi d^{1/2})$}, we have $|\langle \bw_{r,t}^k, \bxi_i \rangle| = o(1)$, which suggests $\langle \bw_{r,t}^k, \bw_{r,t}^0 \rangle = \Theta(\sigma_0^2 d)$.
    Suppose there exists an iteration $T_\mu < \widetilde T_\xi \leq T_\xi$ such that the results hold for all $T_\mu \leq k \leq \widetilde T_\xi-1$. Then we can derive the dominant terms in \eqref{eq:signal_grad} 
    \begin{align*}
        \langle \nabla L(\bW_t), \bmu_j \rangle &= -\frac{1}{\sqrt{m}} \Theta(\sigma_0^2 d) \\
        &\quad + O \big( \sigma_0^4 d^2 \langle \bw_{r,t}, \bmu_j \rangle + \langle \bw_{r,t}, \bxi_i \rangle^4 \langle \bw_{r,t}, \bmu_j \rangle + \langle \bw_{r,t},\bxi_i \rangle^2 \sigma_0^2 d \langle \bw_{r,t}, \bmu_j \rangle  \big) \\
        &= -\frac{1}{\sqrt{m}} \Theta(\sigma_0^2 d)
    \end{align*}
    where the first equality is by $d = \widetilde \Omega(n)$ and the second equality is by {$\sigma_0^3 \leq \widetilde O(m^{-1/2} n^{-1} d^{-1/2}  \sigma_\xi)$}. This suggests that we can still leverage \eqref{eq:wmu2} to bound 
    \begin{align*}
         \langle  \bw_{r,t}^{\widetilde T_\xi} , \bmu_j \rangle = \langle  \bw_{r,t}^{\widetilde T_\xi-1} , \bmu_j \rangle + \frac{\eta\alpha_t \beta_t}{\sqrt{m}} \Theta(\sigma_0^2 d) \| \bmu \|^2 
        &\leq \langle  \bw_{r,t}^{0} , \bmu_j \rangle  +  \frac{\eta  \alpha_t \beta_t}{\sqrt{m}} \Theta(\sigma_0^2 d) \| \bmu \|^2 T_\xi \\
        &= \langle  \bw_{r,t}^{0} , \bmu_j  \rangle + \widetilde O(\sigma_0 \| \bmu\| n \cdot\SNR ) \\
        &= \widetilde O(\sigma_0 \| \bmu\| n \cdot \SNR) 
    \end{align*}
    The bound on $|\langle\bw_{r,t}^{\widetilde T_\xi}, \bxi_i \rangle|$ is the same as \eqref{lriejire}. Then by the same arguments in \eqref{eq:ww0_induct}, and \eqref{eq:w0_update_bound1}, \eqref{eq:w0_update_bound2}, we can show $\langle \bw_{r,t}^{\widetilde T_\xi}, \bw_{r,t}^0\rangle = \Theta(\sigma_0^2 d)$. Thus, we can compute
    \begin{equation*}
        \|  \bw_{r,t}^{\widetilde T_\xi} \|^2 = \widetilde O(\sigma_0^2 n^2 \cdot\SNR^2) + \widetilde O(n \sigma_0^2) + \Theta(\sigma_0^2 d) = \Theta(\sigma_0^2 d)
    \end{equation*}
    where the last equality is by the condition on $d$ that {$d = \widetilde \Omega(n \| \bmu\|\sigma_\xi^{-1})$}. This verifies the induction on $\| \bw^k_{r,t}\|^2 = \Theta(\sigma_0^2d)$. 

    \item When $T_\xi < T_\mu$, i.e., $n^{-1} \cdot \SNR^{-2} = \widetilde \Omega(1)$, we use induction to show for all $\min\{T_\mu, T_\xi \} \leq k \leq T_1$, $\| \bw_{r,t}^k \|^2 = \Theta(\sigma_0^2 d)$, $|\langle \bw_{r,t}^k, \bmu_j \rangle |= \widetilde O(\sigma_0 \| \bmu\| )$, $|\langle \bw_{r,t}^k, \bxi_i \rangle| = \widetilde O(\sigma_0\sigma_\xi \sqrt{d} n^{-1} \SNR^{-1})$. Under the condition that {$\sigma_0 \leq \widetilde O(\sigma_\xi^{-1} d^{-3/4} n) \leq \widetilde O(\sigma_\xi^{-2} d^{-1} n)$}, we have $|\langle \bw_{r,t}^k, \bxi_i \rangle| = o(1)$, which suggests $\langle \bw_{r,t}^k, \bw_{r,t}^0 \rangle = \Theta(\sigma_0^2 d)$.
    Suppose there exists an iteration $T_\xi < \widetilde T_\mu \leq T_\mu$ such that  the results hold for all $T_\xi \leq k \leq \widetilde T_\mu-1$.  Thus we can derive the dominant terms in \eqref{eq:noise_grad} as 
    % \begin{align*}
    %     \langle \nabla L(\bW_t), \bxi_i \rangle 
    %     &= - \frac{1}{\sqrt{m}} \Theta( \sigma_0^2 \sigma_\xi^2 d^2 n^{-1} ) 
    % \end{align*}
    % where we use the condition \highlight{$d = \widetilde \Omega(n)$, $d \geq \widetilde \Omega(n^{2/3} \sigma_\xi^{-2/3})$},  \highlight{$\sigma_0^3  \leq  \widetilde O(m^{-1/2} d^{-1/2} \sigma_\xi n^{-1})$} and \highlight{$\sigma_0^3 \leq \widetilde O(m^{-1/2} d^{-3/2} \sigma_\xi^{-1})$}. 
    \begin{align*}
        \langle \nabla L(\bW_t), \bxi_i \rangle &= - \frac{1}{\sqrt{m}} \Theta( \sigma_0^2 \sigma_\xi^2 d^2 n^{-1} )  + O\big( \sigma_0^4 d^2 \langle \bw_{r,t}, \bxi_i \rangle + \langle \bw_{r,t}, \bxi_i \rangle^5 + \langle \bw_{r,t}, \bxi_i \rangle^3 \sigma_0^2 d \big)  \\
        &\quad + O\big(  \langle \bw_{r,t},\bxi_i  \rangle^3 \sigma_0^2 \sigma_{\xi}^2 d^2 n^{-1} + \langle \bw_{r,t}, \bxi_i \rangle \sigma_0^4 \sigma_\xi^2 d^3 n^{-1} \big) \\
        &= - \frac{1}{\sqrt{m}} \Theta( \sigma_0^2 \sigma_\xi^2 d^2 n^{-1} ) + O\big(  \langle \bw_{r,t},\bxi_i  \rangle^3 \sigma_0^2 \sigma_{\xi}^2 d^2 n^{-1} + \langle \bw_{r,t}, \bxi_i \rangle \sigma_0^4 \sigma_\xi^2 d^3 n^{-1} \big) \\
        &= - \frac{1}{\sqrt{m}} \Theta( \sigma_0^2 \sigma_\xi^2 d^2 n^{-1} ) 
    \end{align*}
    where the first equality is due to {$\SNR^{-1} = \widetilde O(d^{1/4})$ and $d \geq \widetilde \Omega(n^{2/3} \sigma_\xi^{-2/3})$}. The second equality is by {$\sigma_0^3 \leq \widetilde O( d^{-1}m^{-1/2}  )$}. The third equality is by {$\sigma_0^3 \leq \widetilde O(  \sigma_\xi^{-1} d^{-7/4} n m^{-1/2}  )$}.

    This suggests that we can still leverage \eqref{eq:wxi2} to bound 
    \begin{align*}
        \langle  \bw_{r,t}^{\widetilde T_\mu} , \bxi_i \rangle = \langle \bw_{r,t}^{\widetilde T_\mu-1} , \bxi_i  \rangle + \frac{\eta \alpha_t \beta_t}{n \sqrt{m}} \Theta(\sigma_0^2 \sigma_\xi^2 d^2)  &\leq \langle \bw_{r,t}^{0} , \bxi_i  \rangle + \frac{\eta \alpha_t \beta_t}{n \sqrt{m}} \Theta(\sigma_0^2 \sigma_\xi^2 d^2) T_\mu \nonumber \\
        &= \widetilde O( \sigma_0 \sigma_\xi \sqrt{d} n^{-1} \SNR^{-1}) 
    \end{align*}
    The bound on $\langle  \bw_{r,t}^{\widetilde T_\xi} , \bmu_j \rangle $ is the same as \eqref{eioreotjk}. Then following the same argument, we can decompose 
    \begin{align*}
        \|  \bw_{r,t}^{\widetilde T_\xi} \|^2 = \widetilde O(\sigma_0^2) + \widetilde O(\sigma_0^2 n^{-1} \SNR^{-2}) + \Theta(\sigma_0^2 d) = \Theta(\sigma_0^2 d)
    \end{align*}
    where the last equality is by the condition that {$\SNR^{-1} = \widetilde O(d^{1/4})$}. This verifies the induction on $\| \bw^k_{r,t}\|^2 = \Theta(\sigma_0^2d)$. 
\end{itemize}
}

\begin{comment}
 Because $n \cdot \SNR^2, n^{-1} \cdot \SNR^{-2} = \widetilde O(1)$, then $T_\mu/ T_\xi = \widetilde \Theta( n^{-1/2} \sqrt{n^{-1} \SNR^{-2}}) = \widetilde O(1)$ and $T_\xi/ T_\mu = \widetilde \Theta(n^{1/2} \sqrt{n \SNR^2}) = \widetilde O(1)$, where we use the condition on $n = \widetilde O(1)$. Hence using a similar induction argument for the iteration $\min\{T_\mu, T_\xi \} \leq k \leq  \max\{T_\mu, T_\xi \} $ completes the proof that for all $0\leq k \leq T_1$,  $\| \bw_{r,t}^k \|^2 = \Theta(\sigma_0^2 d)$, $|\langle \bw_{r,t}^k, \bmu_j \rangle| = \widetilde O(\sigma_0 \| \bmu\|)$ and $| \langle \bw_{r,t}^k, \bxi_i \rangle | = \widetilde O(\sigma_0 \sigma_\xi \sqrt{d})$. 
   
\end{comment}

Furthermore, at $k = T_1$, we have for all $r \in [m]$, $j = \pm 1$ and $i\in [n]$, the growth term dominates the initialization term and thus
\begin{align*}
    &\langle \bw_{r,t}^{T_1}, \bmu_j \rangle =  \Theta( \eta \alpha_t \beta_t  m^{-1/2}  \sigma_0^2 d \| \bmu\|^2 T_1 ) \geq \widetilde \Theta( \sigma_0 \| \bmu\| )  \geq  \Theta(|\langle \bw_{r,t}^0, \bmu_j \rangle|)  \\
    &\langle \bw_{r,t}^{T_1}, \bxi_i \rangle =  \Theta(\eta \alpha_t \beta_t n^{-1} m^{-1/2} \sigma_0^2 d \sigma_\xi^2 d T_1)  \geq \widetilde \Theta( \sigma_0 \sigma_\xi \sqrt{d} )  \geq \Theta(|\langle \bw_{r,t}^0, \bxi_i \rangle| ) 
\end{align*}
where the inequality is by the definition of $T_1$.
Thus, we verify the concentration of inner products, i.e., $\langle \bw_{r,t}^{T_1}, \bmu_{j} \rangle = \Theta(\langle \bw_{r',t}^{T_1} , \bmu_{j'} \rangle)$ and  $\langle \bw_{r,t}^{T_1}, \bxi_i \rangle = \Theta( \langle \bw_{r',t}^{T_1}, \bxi_{i'} \rangle )$, at the end of first stage as well as the ratio $\langle\bw_{r,t}^{T_1}, \bmu_j \rangle/\langle \bw_{r',t}^{T_1}, \bxi_i \rangle = \Theta(n \cdot \SNR^2)$ for any $r,r' \in [m]$.
{
Then, we can see directly $\| \bw_{r,t}^{T_1} \|^2 = \Theta( \| \bw_{r',t}^{T_1} \|^2 ) = \Theta(\sigma_0^2 d)$ for all $r,r' \in [m]$. 

Next,
we verify at $T_1$, we have  $\langle \bw_{r,t}^{T_1}, \bw_{r',t}^{T_1} \rangle = \Theta(\| \bw_{r,t}^{T_1} \|^2 )$ for all $r,r' \in [m]$ such that $r \neq r'$. To this end, we first notice that the conditions required by Lemma \ref{lemma:weight_decomposition} are readily satisfied at $k = T_1$ and thus applying Lemma \ref{lemma:weight_decomposition} yields 
\begin{align*}
    \| \bw^{T_1}_{r,t} \|^2 &= \Theta \big( \langle \bw_{r,t}^{T_1} , \bmu_j \rangle^2 \| \bmu \|^{-2} + n \cdot \SNR^2 \langle \bw_{r,t}^{T_1}, \bxi_i \rangle^2 \|\bmu\|^{-2} + \| \bw_{r,t}^0\|^2 \big) \\
    \langle \bw^{T_1}_{r,t}, \bw^{T_1}_{r',t} \rangle &= \Theta \big(  \langle \bw_{r,t}^{T_1}, \bmu_j \rangle \langle \bw^{T_1}_{r',t}, \bmu_j \rangle \| \bmu\|^{-2}   +   n \cdot \SNR^2 \langle \bw_{r,t}^{T_1}, \bxi_i \rangle \langle \bw_{r',t}^{T_1}, \bxi_i\rangle \| \bmu \|^{-2} + \langle \bw_{r,t}^0, \bw_{r',t}^0 \rangle  \big) \\
    &= \Theta \big(  \langle \bw_{r,t}^{T_1}, \bmu_j \rangle^2 \| \bmu\|^{-2}   +   n \cdot \SNR^2 \langle \bw_{r,t}^{T_1}, \bxi_i \rangle^2 \| \bmu \|^{-2} + \langle \bw_{r,t}^0, \bw_{r',t}^0 \rangle  \big) \\
    &= \Theta\big(\| \bw^{T_1}_{r,t} \|^2 - \| \bw_{r,t}^0 \|^2 + \langle\bw_{r,t}^0, \bw_{r',t}^0  \rangle \big) \\
    &= \Theta\big( \| \bw^{T_1}_{r,t} \|^2 - \sigma_0^2 d \big) + \widetilde O(\sigma_0^2 \sqrt{d}) \\
    &= \Theta(\|\bw_{r,t}^{T_1} \|^2)
\end{align*}
where the second equality for $ \langle \bw^{T_1}_{r,t}, \bw^{T_1}_{r',t} \rangle$ is due to $\langle \bw_{r,t}^{T_1}, \bmu_{j} \rangle = \Theta(\langle \bw_{r',t}^{T_1} , \bmu_{j'} \rangle)$ and  $\langle \bw_{r,t}^{T_1}, \bxi_i \rangle = \Theta( \langle \bw_{r',t}^{T_1}, \bxi_{i'} \rangle )$ and the second last equality is by Lemma \ref{lemma:init_diff_bound}. 

Finally we verify that at $T_1$, 
\begin{align*}
    &\langle \nabla_{\bw_{r,t}} L(\bW_t^{T_1}), \bw_{r,t}^0 \rangle \\
    &= - \frac{1}{\sqrt{m}} \Theta( \langle \bw_{r,t}^{T_1}, \bmu_j + \overline{\bxi} \rangle \langle \bw_{r,t}^{T_1}, \bw_{r,t}^0 \rangle + \| \bw_{r,t}^{T_1} \|^2 \langle \bw_{r,t}^0, \bmu_j + \overline{\bxi}\rangle) \\
    &\quad + O\Big( \big( \langle \bw_{r,t}^{T_1}, \bmu_j \rangle^4 + \langle \bw_{r,t}^{T_1}, \bxi_i \rangle^4 + (\langle \bw_{r,t}^{T_1}, \bmu_j \rangle^2 + \langle \bw_{r,t}^{T_1}, \bxi_i \rangle^2) \| \bw_{r,t}^{T_1}\|^2 + \| \bw_{r,t}^{T_1}\|^4 \big) \langle \bw_{r,t}^{T_1}, \bw_{r,t}^0 \rangle\Big) \\
    &\quad + O \Big( \langle \bw_{r,t}^{T_1}, \bmu_j \rangle^3  \| \bw_{r,t}^{T_1}\|^2 \langle \bw_{r,t}^0, \bmu_j \rangle +\langle \bw_{r,t}^{T_1}, \bxi_i \rangle^3  \| \bw_{r,t}^{T_1}\|^2 \langle \bw_{r,t}^0, \bxi_i \rangle  \Big) \\
    &\quad + O \Big(  \| \bw_{r,t}^{T_1}\|^4 \langle \bw_{r,t}^{T_1}, \bmu_j \rangle \langle \bw_{r,t}^0, \bmu_j\rangle +  \| \bw_{r,t}^{T_1}\|^4 \langle \bw_{r,t}^{T_1}, \bxi_i \rangle \langle \bw_{r,t}^0, \bxi_i\rangle  \Big) \\
    &= - \frac{1}{\sqrt{m}} \Theta( ( \langle \bw_{r,t}^{T_1}, \bmu_j + \overline{\bxi} \rangle - \sqrt{m}\| \bw_{r,t}^{T_1} \|^4) \langle \bw_{r,t}^{T_1}, \bw_{r,t}^0 \rangle + \| \bw_{r,t}^{T_1} \|^2 \langle \bw_{r,t}^0, \bmu_j + \overline{\bxi} \rangle)
\end{align*}
where we use the concentration of neurons along directions $\bmu_j, \bxi_i$ at $T_1$ and the scale of $\langle \bw_{r,t}^{T_1}, \bmu_j \rangle, \langle \bw_{r,t}^{T_1}, \bxi_i \rangle$.
% and the condition that \highlight{$\sigma_0 \leq \widetilde O(d^{-2/3})$}. 
% Finally, we verify that at $T_1$, we have $\langle \bw_{r,t}^{T_1}, \bw_{r,t}^0 \rangle = \Theta(\langle \bw_{r',t}^{T_1}, \bw_{r,t}^0 \rangle)$. To see this, 
% \begin{align*}
%     \langle \bw_{r,t}^{T_1}, \bw_{r,t}^0 \rangle &= \Theta(\| \bw_{r,t}^0 \|^2) + \Theta( \langle \bw_{r,t}^{T_1}, \bmu_j \rangle \langle \bw_{r,t}^0, \bmu_1 + \bmu_{-1} \rangle \| \bmu\|^{-2}) + \Theta\big(  \langle \bw_{r,t}^{T_1} , \bxi_i  \rangle  \big) \sum_{i=1}^n \langle \bw_{r,t}^{0}, \bxi_i \rangle \| \bxi_i \|^{-2} \\
%     &= \Theta(\sigma_0^2 d) 
% \end{align*}
}
\end{proof}

\subsection{Second stage}

For the second stage, we derive an extension of Lemma \ref{lemma:weight_decomposition} given the scale of $\langle \bw_{r,t}^k, \bw_{r,t}^0\rangle$ can escape initialization. We highlight that unlike $\langle \bw_{r,t}^k, \bmu_j \rangle$ and $\langle \bw_{r,t}^k, \bxi_i \rangle$ that increase monotonically, the dominant term of $\langle \nabla_{\bw_{r,t}} L(\bW_t^{T_1}), \bw_{r,t}^0 \rangle$ suggests that  $\langle \bw_{r,t}^k, \bw_{r,t}^0\rangle$ can also decrease.

\begin{lemma}
\label{lemma:weight_decomposition_second_stage}
For any $k$ and $r \in [m]$, such that $\langle \bw_{r,t}^k , \bmu_j \rangle = \Theta(\langle \bw_{r,t}^k , \bmu_{j'} \rangle) \geq  
\Theta(\langle \bw_{r,t}^k, \bw_{r,t}^0 \rangle \| \bw_{r,t}^0\|^{-2} |\langle \bw_{r,t}^0, \bmu_j \rangle| )$,  $\langle\bw_{r,t}^k , \bxi_i \rangle = \Theta(\langle\bw_{r,t}^k , \bxi_{i'} \rangle) \geq  \Theta(\langle \bw_{r,t}^k, \bw_{r,t}^0 \rangle \| \bw_{r,t}^0\|^{-2} |\langle \bw_{r,t}^0, \bxi_i \rangle|)$ and $\langle \bw_{r,t}^k , \bmu_j \rangle, \langle \bw_{r,t}^k , \bxi_i \rangle = \widetilde O(1)$, $\langle \bw_{r,t}^k, \bw_{r,t}^0 \rangle = \Theta(\langle \bw_{r',t}^k, \bw_{r',t}^0 \rangle) = \Omega( \min\{ \sigma_0 \sigma_\xi^{-1} n^{1/2} m^{-1/6}, \sigma_0 \sqrt{d} m^{-1/6} \})$ for any $j,j' = \pm 1, i, i' \in [n], r, r'\in[m]$. Then we can show 
\begin{align*}
    \| \bw^k_{r,t} \|^2 = \Theta \big( \langle \bw_{r,t}^k , \bmu_j \rangle^2 \| \bmu \|^{-2} + n \cdot \SNR^2 \langle \bw_{r,t}^k, \bxi_i \rangle^2 \|\bmu\|^{-2} + \langle \bw_{r,t}^k, \bw_{r,t}^0 \rangle^2 \| \bw_{r,t}^0\|^{-2} \big).
\end{align*}
And for $r \neq r'$, we have 
\begin{align*}
    \langle \bw^k_{r,t}, \bw^k_{r',t} \rangle &= \Theta \Big(  \langle \bw_{r,t}^k, \bmu_j \rangle \langle \bw^k_{r',t}, \bmu_j \rangle \| \bmu\|^{-2}   +   n \cdot \SNR^2 \langle \bw_{r,t}^k, \bxi_i \rangle \langle \bw_{r',t}^k, \bxi_i\rangle \| \bmu \|^{-2} \\
    &\quad + \langle \bw_{r,t}^k, \bw_{r,t}^0 \rangle^2 \frac{\langle \bw_{r,t}^0, \bw_{r',t}^0 \rangle}{\| \bw_{r,t}^0 \|^{4}}   \Big)
\end{align*}
\end{lemma}
\begin{proof}[Proof of Lemma \ref{lemma:weight_decomposition_second_stage}]
Similar to the proof of Lemma \ref{lemma:weight_decomposition}, we can decompose the weight $\bw_{r,t}^k$ as
\begin{align*}
    \bw_{r,t}^k = \phi^k_{r} \bw_{r,t}^0 +  \gamma_1^k \bmu_1 \|\bmu_1 \|^{-2} + \gamma_{-1}^k \bmu_{-1} \| \bmu_{-1} \|^{-2} + \sum_{i=1}^n \rho_{r,i}^k \bxi_i \|\bxi_i\|^{-2}. 
\end{align*}
First, we show that $\phi_r^k = \Theta(\langle \bw_{r,t}^k, \bw_{r,t}^0 \rangle \| \bw_{r,t}^0\|^{-2})$ as follows.  We compute 
\begin{align*}
    \langle \bw_{r,t}^k, \bw_{r,t}^0 \rangle &= \phi_r^k \| \bw_{r,t}^0 \|^2 + \Theta( \langle \bw_{r,t}^k, \bmu_j\rangle \langle \bw_{r,t}^0, \bmu_j \rangle \| \bmu\|^{-2} + \langle \bw_{r,t}^k, \bxi_i \rangle  \sum_{i=1}^n \langle \bw_{r,t}^0, \bxi_i \rangle \| \bxi_i\|^{-2}) \\
    &= \phi_r^k \| \bw_{r,t}^0 \|^2 + \widetilde O(\sigma_0 + n \sigma_0 \sigma_\xi^{-1} d^{-1/2}) \\
    &= \Theta(\phi_r^k \| \bw_{r,t}^0 \|^2)
\end{align*}
where the second equality is by the assumption that $\langle \bw_{r,t}^k, \bmu_j \rangle, \langle  \bw_{r,t}^k, \bxi_i \rangle = \widetilde O(1)$ and the last equality is by the assumption $\langle \bw_{r,t}^k, \bw_{r,t}^0 \rangle = \Omega( \min\{ \sigma_0 \sigma_\xi^{-1} n^{1/2} m^{-1/6}, \sigma_0 \sqrt{d} m^{-1/6} \} )$ and the condition that {$\sigma_\xi^{-1} = \Omega(d^{1/4})$, $d \geq \widetilde O(nm^{1/3})$ and $d \geq \widetilde O(nm^{1/6} \sigma_\xi^{-1})$.}
% last equality is by $\phi_r^k = \Omega(1)$ (which can be easily shown via induction) and the condition that \highlight{$\sigma_0 \geq \widetilde O \big( \max\{  n \sigma_\xi^{-1} d^{-3/2} , d^{-1}\} \big) $}.
% With the assumption that $\langle \bw_{r,t}^k, \bw_{r,t}^0 \rangle = \widetilde O( \sigma_0^2 d )$, we can bound $\phi_r^k = \widetilde O(1)$ following the same argument as in Lemma \ref{lemma:weight_decomposition}. 
Then based on the assumption, we can still bound $\langle \bw_{r,t}^k, \bmu_j \rangle \geq \phi_r^k |\langle \bw_{r,t}^0, \bmu_j \rangle| = \Theta(\langle \bw_{r,t}^k, \bw_{r,t}^0 \rangle \| \bw_{r,t}^0\|^{-2} |\langle \bw_{r,t}^0, \bmu_j \rangle|)$, and similarly we can bound $ \langle \bw_{r,t}^k, \bxi_i \rangle \geq \phi_r^k | \langle \bw_{r,t}^0, \bxi_i \rangle | = \Theta(\langle \bw_{r,t}^k, \bw_{r,t}^0 \rangle \| \bw_{r,t}^0\|^{-2} |\langle \bw_{r,t}^0, \bxi_i \rangle|)$. This allows to simplify 
\begin{align}
    \bw_{r,t}^k 
    &=  \Theta(\langle \bw_{r,t}^k, \bw_{r,t}^0 \rangle \| \bw_{r,t}^0\|^{-2}) \bw_{r,t}^0 + \Theta( \langle \bw_{r,t}^k, \bmu_j \rangle (\bmu_1 + \bmu_{-1}) \| \bmu\|^{-2}) \nonumber\\
    &\qquad + \Theta\big(  \langle \bw_{r,t}^k , \bxi_i  \rangle  \big) \sum_{i=1}^n \bxi_i \| \bxi_i \|^{-2}  \label{eq:w_decomp_second}
\end{align}
Consequently, the assumption that $\langle \bw_{r,t}^k, \bw_{r,t}^0 \rangle = \Theta(\langle \bw_{r',t}^k, \bw_{r',t}^0 \rangle)$, combined with \eqref{eq:w_decomp_second}, we can derive that $\phi_r^k = \Theta(\phi_{r'}^k)$ given $\langle \bw_{r,t}^k, \bmu_j\rangle = \Theta(\langle \bw_{r',t}^k , \bmu_{j}\rangle)$ and $\langle \bw_{r,t}^k, \bxi_i \rangle = \Theta(\langle \bw_{r',t}^k, \bxi_i \rangle)$. Thus, we can compute
\begin{align*}
    \| \bw_{r,t}^k \|^2 = \Theta \Big( (\phi_r^k)^2 \| \bw_{r,t}^0\|^2 + \langle \bw_{r,t}^k, \bmu_j \rangle^2 \| \bmu \|^{-2} + n \SNR^2 \langle \bw_{r,t}^k, \bxi_i \rangle^2 \| \bmu\|^{-2} \Big) = \Theta(\| \bw_{r',t}^k\|^2)
\end{align*}
In addition, we can derive for $r \neq r'$
\begin{align*}
    &\langle \bw_{r,t}^k, \bw_{r',t}^k \rangle \\
    &\quad= \Theta \big( (\phi_r^k)^2 \langle \bw_{r,t}^0, \bw_{r',t}^0 \rangle + \langle \bw_{r,t}^k, \bmu_j \rangle \langle \bw^k_{r',t}, \bmu_j \rangle \| \bmu\|^{-2}    + n \SNR^2  \langle \bw_{r,t}^k, \bxi_i \rangle \langle \bw_{r',t}^k, \bxi_i\rangle \| \bmu \|^{-2} \big) 
\end{align*}
which completes the proof.
\end{proof}

{
\begin{lemma}
\label{lemma:inter_stage1_diff}
Let $T_1^+ \geq T_1$ and suppose for all $T_1 \leq k < T_1^+$, it satisfies that for all $j = \pm1, i \in [n], r \in [m]$, $\langle \bw_{r,t}^{k+1}, \bmu_j\rangle, \langle \bw_{r,t}^{k+1}, \bxi_i \rangle = \widetilde O(1)$, $\langle \bw_{r,t}^k, \bw_{r,t}^0 \rangle = \Omega( \min\{ \sigma_0 \sigma_\xi^{-1} n^{1/2} m^{-1/6}, \sigma_0 \sqrt{d} m^{-1/6} \})$ and 
\begin{align}
    &\langle \bw_{r,t}^{k+1}, \bmu_j \rangle = \langle \bw_{r,t}^{k}, \bmu_j  \rangle + \Theta\Big( \frac{\eta}{\sqrt{m}} \| \bw_{r,t}^k \|^2 \| \bmu\|^2 \Big) \label{inner_grow1} \\
    &\langle \bw_{r,t}^{k+1}, \bxi_i \rangle = \langle \bw_{r,t}^{k}, \bxi_i  \rangle + \Theta\Big( \frac{\eta}{n\sqrt{m}} \| \bw_{r,t}^k \|^2 \| \bxi_i \|^2 \Big). \label{inner_grow2} \\
    &\langle \bw_{r,t}^{k+1}, \bw_{r,t}^0 \rangle = \langle \bw_{r,t}^{k}, \bw_{r,t}^0 \rangle \nonumber\\
    &\qquad \qquad + \frac{\eta}{\sqrt{m}}  \Theta\Big(  \big( \langle \bw_{r,t}^k, \bmu_j + \overline{\bxi} \rangle - \sqrt{m} \| \bw_{r,t}^k \|^4\big) \langle \bw_{r,t}^{k}, \bw_{r,t}^0 \rangle + \| \bw_{r,t}^k \|^2 \langle \bw_{r,t}^0, \bmu_j + \overline{\bxi} \rangle \Big) \label{inner_grow3}
\end{align}
% where we let $\overline\bxi = \frac{1}{n} \sum_{i=1}^n \bxi_i$.
Then we have for all $T_1 \leq k \leq T_1^+$, 
\begin{enumerate}[(1)]
    \item $\langle  \bw_{r,t}^{k}, \bmu_j \rangle = \Theta( \langle  \bw_{r',t}^{k}, \bmu_{j'} \rangle )$ 

    \item  $\langle  \bw_{r,t}^{k}, \bxi_i \rangle = \Theta( \langle \bw_{r',t}^{k}, \bxi_{i'} \rangle )$

    \item $\langle \bw_{r,t}^k, \bw_{r,t}^0 \rangle = \Theta(\langle \bw_{r',t}^k, \bw_{r',t}^0 \rangle)$

    \item  $\langle \bw_{r,t}^k, \bmu_j \rangle \geq  \Theta( \langle \bw_{r,t}^k, \bw_{r,t}^0 \rangle \| \bw_{r,t}^0\|^{-2} |\langle \bw_{r,t}^0, \bmu_j \rangle| )$, \\ $\langle \bw_{r,t}^k, \bxi_i \rangle \geq \Theta(\langle \bw_{r,t}^k, \bw_{r,t}^0 \rangle \| \bw_{r,t}^0\|^{-2} |\langle \bw_{r,t}^0, \bxi_i \rangle|)$,

    \item $\| \bw_{r,t}^{k}\|^2 = \Theta( \| \bw_{r',t}^k \|^2)$

    \item $\langle  \bw_{r,t}^{k},  \bw_{r',t}^{k} \rangle = \Theta( \|  \bw_{r,t}^{k} \|^2  )$ for $r' \neq r$

    \item $ |\langle \bw_{r,t}^{k}, \bmu_j  \rangle |/ |\langle \bw_{r',t}^{k}, \bxi_i  \rangle| = \Theta(n \cdot \SNR^2)$ 

    % \item $\langle \bw_{r,t}^k, \bw_{r,t}^0 \rangle \| \bw_{r,t}^0 \|^{-1} = \widetilde O(1)$ \highlight{cannot prove, change to assume}

    % \item $\langle \bw_{r,t}^k, \bxi_i \rangle \geq \Theta( \langle \bw_{r,t}^k, \bw_{r,t}^0 \rangle \| \bw_{r,t}^0 \|^{-1})$
\end{enumerate}
for all $j = \pm1,r,r' \in [m], i \in [n]$. 
\end{lemma}
\begin{proof}[Proof of Lemma \ref{lemma:inter_stage1_diff}]
The proof is by induction. First, when $k = T_1$,  claims (1-8) are satisfied by Lemma \ref{lemma:diff_pre_stage} with $\langle \bw_{r,t}^{T_1}, \bw_{r,t}^0 \rangle = \Theta(\sigma_0^2 d), \langle \bw_{r,t}^{T_1}, \bw_{r,t}^0\rangle \| \bw_{r,t}^{0}\|^{-2} = \Theta(1)$. Now suppose there exists $\widetilde T_1^+ < T_1^+$ such that for all $T_1 \leq k \leq \widetilde T_1^+$, (1-6) are satisfied. We aim to show for it is also satisfied for $k +1$. By the assumption that for any $r \in [m]$
\begin{align*}
    &\langle \bw_{r,t}^{k+1}, \bmu_j \rangle = \langle \bw_{r,t}^{k}, \bmu_j  \rangle + \Theta\Big( \frac{\eta}{\sqrt{m}} \| \bw_{r,t}^k \|^2 \| \bmu\|^2 \Big)  \\
    &\langle \bw_{r,t}^{k+1}, \bxi_i \rangle = \langle \bw_{r,t}^{k}, \bxi_i  \rangle + \Theta\Big( \frac{\eta}{n\sqrt{m}} \| \bw_{r,t}^k \|^2 \| \bxi_i \|^2 \Big),  \\
    &\langle \bw_{r,t}^{k+1}, \bw_{r,t}^0 \rangle = \langle \bw_{r,t}^{k}, \bw_{r,t}^0 \rangle \\
    &\qquad \qquad +\frac{\eta}{\sqrt{m}} \Theta\Big(  \big( \langle \bw_{r,t}^k, \bmu_j + \bxi_i \rangle - \sqrt{m}\| \bw_{r,t}^k\|^4 \big)\langle \bw_{r,t}^{k}, \bw_{r,t}^0 \rangle + \| \bw_{r,t}^k \|^2 \langle \bw_{r,t}^0, \bmu_j + \bxi_i \rangle \Big)
\end{align*}
we can show 
\begin{align*}
    \langle \bw_{r,t}^{k+1}, \bmu_j \rangle = \langle \bw_{r,t}^k,  \bmu_{j}\rangle + \Theta\Big( \frac{\eta}{\sqrt{m}} \| \bw_{r,t}^k \|^2 \| \bmu\|^2 \Big) &= \Theta\Big(  \langle \bw_{r',t}^k,  \bmu_{j'}\rangle + \frac{\eta}{\sqrt{m}} \| \bw_{r',t}^k \|^2 \| \bmu\|^2  \Big) \\
    &= \Theta(  \langle \bw_{r',t}^{k+1}, \bmu_{j'} \rangle)
\end{align*}
where the second equality is by induction condition, thus verifying the induction for claim (1). Similarly, we can use the same argument for verifying claim (2). For the claim (3)
\begin{align*}
    &\langle \bw_{r,t}^{k+1}, \bw_{r,t}^0 \rangle \\
    &= \langle \bw_{r,t}^{k}, \bw_{r,t}^0 \rangle  + \frac{\eta}{\sqrt{m}} \Theta\Big( \big( \langle \bw_{r,t}^k, \bmu_j + \overline{\bxi} \rangle - \sqrt{m} \| \bw_{r,t}^k\|^2 \big) \langle \bw_{r,t}^{k}, \bw_{r,t}^0 \rangle + \| \bw_{r,t}^k \|^2 \langle \bw_{r,t}^0, \bmu_j + \overline{\bxi} \rangle \Big) \\
    &= \Theta( \langle \bw_{r',t}^k, \bw_{r',t}^0 \rangle) \\
    &\quad +
    \frac{\eta}{\sqrt{m}} \Theta\Big( \big( \langle \bw_{r',t}^k, \bmu_j + \overline{\bxi}\rangle - \sqrt{m}\| \bw_{r',t}^k\|^4 \big) \langle \bw_{r',t}^{k}, \bw_{r',t}^0 \rangle + \| \bw_{r',t}^k \|^2 \langle \bw_{r',t}^0, \bmu_j + \overline{\bxi} \rangle \Big) \\
    &= \Theta(\langle \bw_{r',t}^{k+1}, \bw_{r',t}^0 \rangle)
\end{align*} 
where the second equality is due to induction claim that $ \langle \bw_{r,t}^k, \bw_{r,t}^0 \rangle = \Theta( \langle \bw_{r',t}^k, \bw_{r',t}^0 \rangle)$, $\langle \bw_{r,t}^k, \bmu_j + \overline{\bxi} \rangle = \Theta(\langle \bw_{r',t}^k, \bmu_j + \overline{\bxi} \rangle)$, and we can show that $\Theta(\langle \bw_{r,t}^0, \bv\rangle) = \langle \bw_{r',t}^0, \bv \rangle$ holds for any $\bv$, and any $r,r'\in [m]$ with constant probability due to $m = \Theta(1)$. 
% Next for claim (4), for $r \neq r'$
% \begin{align*}
%     \langle \bw_{r,t}^{k+1} , \bw_{r',t}^0 \rangle &= \langle \bw_{r,t}^k, \bw_{r',t}^0 \rangle + \frac{\eta}{\sqrt{m}} \Theta \Big(  \langle \bw_{r,t}^k, \bmu_j + \overline \bxi \rangle \langle \bw_{r,t}^{k}, \bw_{r',t}^0 \rangle + \| \bw_{r,t}^k \|^2 \langle \bw_{r',t}^0, \bmu_j + \overline\bxi \rangle \Big) \\
%     &= \Theta(\langle \bw_{r,t}^k, \bw_{r,t}^0 - \phi_r^k \sigma_0^2 d \rangle)
% \end{align*}
Next, we verify claim (7) 
\begin{align*}
    \frac{\langle \bw_{r,t}^{k+1}, \bmu_j \rangle}{\langle \bw_{r',t}^{k+1}, \bxi_i \rangle} = \frac{\langle \bw_{r,t}^{k}, \bmu_j  \rangle + \Theta\big( \frac{\eta}{\sqrt{m}} \| \bw_{r,t}^k \|^2 \| \bmu\|^2 \big)}{\langle \bw_{r',t}^{k}, \bxi_i  \rangle + \Theta\big( \frac{\eta}{n\sqrt{m}} \| \bw_{r',t}^k \|^2 \| \bxi_i \|^2 \big)}  =  \Theta(n \cdot \SNR^2)
\end{align*}
where the last equality follows from the induction condition and $\| \bmu\|^2/\| \bxi_i\|^2 = \Theta(\SNR^2)$ by Lemma \ref{lemma:init_bound} and $\| \bw_{r,t}^{k} \|^2 = \Theta(\| \bw_{r',t}^{k} \|^2)$ by induction condition. Thus the induction for (7) is verified. 

Next in order to verify (4), we only need to show the growth of $\langle \bw_{r,t}^k, \bmu_j \rangle$, $\langle \bw_{r,t}^k, \bxi_i \rangle$ is larger than the growth of $\langle \bw_{r,t}^k, \bw_{r,t}^0 \rangle \| \bw_{r,t}^0\|^{-2} |\langle \bw_{r,t}^0, \bmu_j \rangle|$ and $\langle \bw_{r,t}^k, \bw_{r,t}^0 \rangle \| \bw_{r,t}^0\|^{-2} |\langle \bw_{r,t}^0, \bxi_i \rangle|$ respectively. To this end, we consider  upper bounding the update of $\langle \bw_{r,t}^k, \bw_{r,t}^0 \rangle$ as
\begin{align*}
    |\langle \bw_{r,t}^k, \bmu_j + \overline \bxi \rangle \langle \bw_{r,t}^{k}, \bw_{r,t}^0 \rangle | \leq \Theta\big( \| \bw_{r,t}^k \|^2 (\| \bmu\| + \sigma_\xi \sqrt{d}) \| \bw_{r,t}^0 \| \big) \\
    |\| \bw_{r,t}^k \|^2 \langle \bw_{r,t}^0, \bmu_j + \overline{\bxi} \rangle| \leq \Theta( \| \bw_{r,t}^k \|^2 (\| \bmu\| + \sigma_\xi \sqrt{d}) \| \bw_{r,t}^0 \| ).
\end{align*}
Then we consider two cases depending on the magnitude of $\| \bmu\|$ and $\sigma_\xi \sqrt{d}$:
% Then we only require to show that 
% \begin{align*}
%     &\| \bw_{r,t}^k \|^2 (\| \bmu\| + \sigma_\xi \sqrt{d}) \| \bw_{r,t}^0 \|^{-1} \langle \bw_{r,t}^0, \bmu_j \rangle \leq \Theta(\| \bw_{r,t}^k\|^2 \| \bmu \|^2) \\
%     &\| \bw_{r,t}^k \|^2 (\| \bmu\| + \sigma_\xi \sqrt{d}) \| \bw_{r,t}^0 \|^{-1} \langle \bw_{r,t}^0, \bxi_i \rangle \leq \Theta(  \frac{1}{n} \| \bw_{r,t}^k\|^2 \| \bxi_i \|^2)
% \end{align*}
\begin{itemize}[leftmargin=0.2in]
    \item When $\| \bmu\| \geq \sigma_\xi \sqrt{d}$, i.e., $\sigma_\xi \sqrt{d} = O(1)$. Then 
    \begin{align}
        &\| \bw_{r,t}^k \|^2 (\| \bmu\| + \sigma_\xi \sqrt{d}) \| \bw_{r,t}^0 \|^{-1} |\langle \bw_{r,t}^0, \bmu_j \rangle| = \widetilde O( \| \bw_{r,t}^k \|^2 d^{-1/2} ) \leq \Theta ( \| \bw_{r,t}^k \|^2  \| \bmu \|^2 ) \label{eq:w_upp1}\\
        &\| \bw_{r,t}^k \|^2 (\| \bmu\| + \sigma_\xi \sqrt{d}) \| \bw_{r,t}^0 \|^{-1} |\langle \bw_{r,t}^0, \bxi_i \rangle| = \widetilde O( \| \bw_{r,t}^k\|^2  \sigma_\xi ) \leq \Theta ( \frac{1}{n}\| \bw_{r,t}^k \|^2 \| \bxi_i \|^2) \label{eq:w_upp2}
    \end{align}
    where we use the condition on $d =\widetilde \Omega(n \sigma_\xi^{-1})$ and $\| \bxi_i\|^2 = \Theta(\sigma_\xi^2 d)$ for \eqref{eq:w_upp2}.

    \item When $\| \bmu\| \leq \sigma_\xi \sqrt{d}$, i.e., we have $\sigma_\xi \sqrt{d} = \Omega(1)$. Then 
    \begin{align}
         &\| \bw_{r,t}^k \|^2 (\| \bmu\| + \sigma_\xi \sqrt{d}) \| \bw_{r,t}^0 \|^{-1} |\langle \bw_{r,t}^0, \bmu_j \rangle| \nonumber \\
         &\quad = \Theta(\| \bw_{r,t}^k \|^2 \sigma_0^{-1}\sigma_\xi \langle \bw_{r,t}^0, \bmu_j \rangle) = \widetilde O(\| \bw_{r,t}^k \|^2 \sigma_\xi ) = \widetilde O(\| \bw_{r,t}^k\|^2 n d^{-1/2}) \leq \Theta(\| \bw_{r,t}^k\|^2 \| \bmu \|^2) \label{eq:w_upp3} \\
         &\| \bw_{r,t}^k \|^2 (\| \bmu\| + \sigma_\xi \sqrt{d}) \| \bw_{r,t}^0 \|^{-1} |\langle \bw_{r,t}^0, \bxi_i \rangle| \nonumber \\
         &\quad = \Theta( \| \bw_{r,t}^k \|^2 \sigma_0^{-1}\sigma_\xi \langle \bw_{r,t}^0, \bxi_i \rangle ) = \widetilde O(\| \bw_{r,t}^k\|^2 \sigma_\xi^2 \sqrt{d} ) \leq \Theta( \frac{1}{n}\| \bw_{r,t}^k \|^2  \sigma_\xi^2 d ) = \Theta(\frac{1}{n} \| \bw_{r,t}^k \|^2 \| \bxi_i \|^2) \label{eq:w_upp4}
    \end{align}
    where the second last equality of \eqref{eq:w_upp3} is by the condition that $\SNR^{-1} = \widetilde O( n )$ which implies that $\sigma_\xi =  \widetilde O( n d^{-1/2})$. The second last inequality of \eqref{eq:w_upp4} is by $d = \widetilde \Omega(n^2)$. 
\end{itemize}
This suggests that 
\begin{align*}
    |\langle \bw_{r,t}^k, \bmu_j + \overline \bxi \rangle \langle \bw_{r,t}^{k}, \bw_{r,t}^0 \rangle | \leq \Theta\big( \| \bw_{r,t}^k\|^2 \| \bmu \|^2 \big) \\
    |\| \bw_{r,t}^k \|^2 \langle \bw_{r,t}^0, \bmu_j + \overline{\bxi} \rangle| \leq \Theta( \frac{1}{n} \| \bw_{r,t}^k \|^2 \| \bxi_i \|^2 ).
\end{align*}
which verifies the claim (4) by combining with the update \eqref{inner_grow1}, \eqref{inner_grow2}, \eqref{inner_grow3}. 

Next, in order to verify (5,6), we leverage Lemma \ref{lemma:weight_decomposition_second_stage}. First, it is easy to verify that at $k+1$, the conditions for Lemma \ref{lemma:weight_decomposition_second_stage} are satisfied by the induction claims (1-4) at $k+1$. Then we have 
\begin{align*}
    \| \bw^{k+1}_{r,t} \|^2 &= \Theta \big( \langle \bw_{r,t}^{k+1} , \bmu_j \rangle^2 \| \bmu \|^{-2} + n \cdot \SNR^2 \langle \bw_{r,t}^{k+1}, \bxi_i \rangle^2 \|\bmu\|^{-2} +  \langle \bw_{r,t}^k, \bw_{r,t}^0 \rangle^2 \| \bw_{r,t}^0\|^{-2} \big) \\
    &= \Theta(\langle \bw_{r',t}^{k+1} , \bmu_j \rangle^2 \| \bmu \|^{-2} + n \cdot \SNR^2 \langle \bw_{r',t}^{k+1}, \bxi_i \rangle^2 \|\bmu\|^{-2} + \langle \bw_{r',t}^k, \bw_{r',t}^0 \rangle^2 \| \bw_{r',t}^0 \|^{-2}) \\
    &=\Theta(\| \bw^{k+1}_{r',t} \|^2).
\end{align*}
Finally, to verify (4) for $k+1$, we have from Lemma \ref{lemma:weight_decomposition_second_stage} that 
\begin{align*}
    &\langle \bw^{k+1}_{r,t}, \bw^{k+1}_{r',t} \rangle \\
    &= \Theta \big(  \langle \bw_{r,t}^{k+1}, \bmu_j \rangle \langle \bw^{k+1}_{r',t}, \bmu_j \rangle \| \bmu\|^{-2}   +   n \cdot \SNR^2 \langle \bw_{r,t}^{k+1} , \bxi_i \rangle \langle \bw_{r',t}^{k+1}, \bxi_i\rangle \| \bmu \|^{-2} + \langle \bw_{r,t}^{k+1}, \bw_{r,t}^0 \rangle^2 \frac{\langle \bw_{r,t}^0, \bw_{r',t}^0 \rangle}{\| \bw_{r,t}^0 \|^{4}}   \big) \\
    &=\Theta \big(  \langle \bw_{r,t}^{k+1}, \bmu_j \rangle^2  \| \bmu\|^{-2}   +   n \cdot \SNR^2 \langle \bw_{r,t}^{k+1} , \bxi_i \rangle^2   \| \bmu \|^{-2} + \langle \bw_{r,t}^{k+1}, \bw_{r,t}^0 \rangle^2 \frac{\langle \bw_{r,t}^0, \bw_{r',t}^0 \rangle}{\| \bw_{r,t}^0 \|^{4}}   \big) \\
    &= \Theta(\| \bw^{k+1}_{r,t} \|^2 - \langle \bw_{r,t}^{k+1}, \bw_{r,t}^0 \rangle^2 \| \bw_{r,t}^0\|^{-2} + \langle \bw_{r,t}^{k+1}, \bw_{r,t}^0 \rangle^2 \frac{\langle \bw_{r,t}^0, \bw_{r',t}^0 \rangle}{\| \bw_{r,t}^0 \|^{4}} ) \\
    &= \Theta(  \| \bw^{k+1}_{r,t} \|^2 )
\end{align*}
where we use the induction claims (1-2) for $k+1$ and Lemma \ref{lemma:init_diff_bound}. The last equality is by $\langle \bw_{r,t}^{k+1}, \bw_{r,t}^0 \rangle^2 \| \bw_{r,t}^0\|^{-2}  \leq \Theta(  \| \bw^{k+1}_{r,t} \|^2 )$.
which completes all the induction.
\end{proof}

From Lemma \ref{lemma:diff_pre_stage} and Lemma \ref{lemma:inter_stage1_diff}, we know that for $T_1 \leq k \leq T_1^+$ we can decompose the gradient into two parts, the  dominant term and the residual term:
\begin{align}
    &\langle \nabla_{\bw_{r,t}} L(\bW_t^k),  \bmu_j \rangle = - \frac{1}{\sqrt{m}} \Theta\Big(  \| \bw_{r,t}^k \|^2  \| \bmu \|^2 \Big)  + E_{r,t,\mu_j}^k \label{eq:grad_decomp1}\\
     &\langle \nabla_{\bw_{r,t}} L(\bW_t^k),  \bxi_i \rangle = - \frac{1}{n \sqrt{m}} 
 \Theta\Big(  \| \bw_{r,t}^k \|^2  \| \bxi_i \|^2\Big) + E_{r,t,\xi_i}^k \label{eq:grad_decomp2} \\
    &\langle \nabla_{\bw_{r,t}} L(\bW_t^k),  \bw_{r,t}^0 \rangle \nonumber \\
    &\quad = - \frac{1}{ \sqrt{m}} 
 \Theta\Big( \big( \langle \bw_{r,t}^k, \bmu_j + \overline \bxi \rangle - \sqrt{m}\| \bw_{r,t}^k \|^4\big) \langle \bw_{r,t}^{k}, \bw_{r,t}^0 \rangle + \| \bw_{r,t}^k \|^2 \langle \bw_{r,t}^0, \bmu_j + \overline\bxi \rangle \Big) + E_{r,t,w^0}^k \label{eq:grad_decomp3}
\end{align}
where we let $E_{r,t,\mu_j}^k$, $E_{r,t,\xi_i}^k, E_{r,t,w^0}^k$ denote the residual terms. Therefore, before $E_{r,t,\mu_j}^k$, $E_{r,t,\xi_i}^k$ grow to reach $E_{r,t,\mu_j}^k = \Theta(  \frac{1}{\sqrt{m}} \| \bw_{r,t}^k \|^2  \| \bmu \|^2 )$,  $E_{r,t,\xi_i}^k = \Theta(\frac{1}{n \sqrt{m}}  \| \bw_{r,t}^k \|^2  \| \bxi_i \|^2)$, $E_{r,t,w^0}^k =\frac{1}{\sqrt{m}} \Theta( ( \langle \bw_{r,t}^k, \bmu_j + \overline \bxi \rangle  -\sqrt{m} \| \bw_{r,t}^k\|^4 ) \langle \bw_{r,t}^{k}, \bw_{r,t}^0 \rangle + \| \bw_{r,t}^k \|^2 \langle \bw_{r,t}^0, \bmu_j + \overline\bxi \rangle )$, it can be verified that \eqref{inner_grow1}, \eqref{inner_grow2}, \eqref{inner_grow3} are satisfied respectively. 
If further, $\langle \bw_{r,t}^k, \bmu_j \rangle, \langle \bw_{r,t}^k, \bxi_i \rangle = \widetilde O(1)$, $\langle \bw_{r,t}^k, \bw_{r,t}^0 \rangle = \Omega( \min\{ \sigma_0 \sigma_\xi^{-1} n^{1/2} m^{-1/6}, \sigma_0 \sqrt{d} m^{-1/6} \})$ are satisfied, then we readily have $|\langle \bw_{r,t}^{k}, \bmu_j  \rangle |/ |\langle \bw_{r',t}^{k}, \bxi_i  \rangle| = \Theta(n \cdot \SNR^2)$ by Lemma \ref{lemma:inter_stage1_diff}.

% \begin{lemma}
% \label{lemma:grad_bound_wkw0}
% \highlight{unfinished}
% Let $T_2 > T_1$ be the time where $E_{r,t,\mu_j}^{T_2} = \Theta(  \frac{1}{\sqrt{m}} \| \bw_{r,t}^{T_2} \|^2  \| \bmu \|^2 )$,  $E_{r,t,\xi_i}^{T_2} = \Theta(\frac{1}{n \sqrt{m}}  \| \bw_{r,t}^{T_2} \|^2  \| \bxi_i \|^2)$, $E_{r,t,w^0}^{T_2} =\frac{1}{\sqrt{m}} \Theta( \langle \bw_{r,t}^{T_2}, \bmu_j + \overline \bxi \rangle \langle \bw_{r,t}^{T_2}, \bw_{r,t}^0 \rangle + \| \bw_{r,t}^{T_2} \|^2 \langle \bw_{r,t}^0, \bmu_j + \overline\bxi \rangle )$. Then we must have 
% \begin{align*}
%     \langle \bw_{r,t}^{T_2}, \bw_{r,t}^0 \rangle \| \bw_{r,t}^0 \|^{-1} \leq \Theta(\langle \bw_{r,t}^{T_2}, \bmu_j \rangle) \quad &\text{when } n \cdot \SNR^2 = \Omega(1)   \\
%     \langle \bw_{r,t}^{T_2}, \bw_{r,t}^0 \rangle \| \bw_{r,t}^0 \|^{-1} \leq \Theta( \langle \bw_{r,t}^{T_2}, \bxi_i  \rangle ) \quad &\text{when } n^{-1} \cdot \SNR^{-2} = \Omega(1)
% \end{align*}
% \end{lemma}
% \begin{proof}[Proof of Lemma \ref{lemma:grad_bound_wkw0}]

% \end{proof}

The next lemma characterizes the end of second stage where the residual term reaches the same order as the dominant term.

\begin{lemma}[Restatement of Lemma \ref{lemma:second_stage_diffu}]
\label{lem:first_stage_linear}
Consider the gradient decomposition defined in \eqref{eq:grad_decomp1}, \eqref{eq:grad_decomp2} and \eqref{eq:grad_decomp3}. There exists $T_2 > T_1$ with $T_2 = \Theta(\max\{ \eta^{-1} m^{1/3} \sigma_0^{-2} d^{-1}, \eta^{-1} m^{1/3} n \sigma_0^{-2} \sigma_\xi^2   \})$ such that for all $j =\pm1,r \in [m],i \in [n]$,
\begin{enumerate}[leftmargin=0.4in]
    \item[(1)] If $n \cdot \SNR^2 = \Omega(1)$,  
    \begin{align*}
        &\langle \bw_{r,t}^{T_2}, \bmu_j \rangle = \Theta(m^{-1/6}), \quad  \langle \bw_{r,t}^{T_2}, \bxi_i \rangle = \Theta(\langle \bw_{r,t}^{T_2}, \bmu_j \rangle), \\
        &\langle \bw_{r,t}^{T_2}, \bw_{r,t}^0 \rangle \| \bw_{r,t}^0 \|^{-2} \leq \Theta \big({n \cdot \SNR^2} \langle \bw_{r,t}^{T_2}, \bxi_i \rangle \big)
    \end{align*}
    If $n^{-1} \cdot \SNR^{-2} = \Omega(1)$, 
    \begin{align*}
        &\langle \bw_{r,t}^{T_2}, \bmu_j \rangle = \Theta(n \cdot \SNR^2 \cdot m^{-1/6}), \quad \langle \bw_{r,t}^{T_2}, \bxi_i \rangle = \Theta(m^{-1/6}) \\
        &\langle \bw_{r,t}^{T_2}, \bw_{r,t}^0 \rangle \| \bw_{r,t}^0 \|^{-2} \leq \Theta \big(\sqrt{n \cdot \SNR^2} \langle \bw_{r,t}^{T_2}, \bxi_i \rangle \big)
    \end{align*}
    
    \item[(2)] $E_{r,t,\mu_j}^{T_2} = \Theta(  \frac{1}{\sqrt{m}} \| \bw_{r,t}^{T_2} \|^2  \| \bmu \|^2 )$, $E_{r,t,\xi_i}^{T_2} = \Theta(\frac{1}{n \sqrt{m}}  \| \bw_{r,t}^{T_2} \|^2  \| \bxi_i \|^2)$, $E_{r,t,w^0}^{T_2} =\frac{1}{\sqrt{m}} \Theta( \langle \bw_{r,t}^k, \bmu_j + \bxi_i \rangle \langle \bw_{r,t}^{k}, \bw_{r,t}^0 \rangle + \| \bw_{r,t}^k \|^2 \langle \bw_{r,t}^0, \bmu_j + \bxi_i \rangle )$.
\end{enumerate}
In addition, for any $T_1 \leq k \leq T_2$ and for all $j =\pm1,r \in [m],i \in [n]$,
\begin{enumerate}[leftmargin=0.4in]
    \item[(3)] $\langle  \bw_{r,t}^{k}, \bmu_j \rangle = \Theta( \langle  \bw_{r',t}^{k}, \bmu_{j'} \rangle )$ and  $\langle  \bw_{r,t}^{k}, \bxi_i \rangle = \Theta( \langle \bw_{r',t}^{k}, \bxi_{i'} \rangle )$,

    \item[(4)] $\| \bw_{r,t}^{k}\|^2 = \Theta( \| \bw_{r',t}^k \|^2)$ and $\langle  \bw_{r,t}^{k},  \bw_{r',t}^{k} \rangle = \Theta( \|  \bw_{r,t}^{k} \|^2 )$,

    \item[(5)] $\langle \bw_{r,t}^{k}, \bmu_j \rangle/\langle \bw_{r,t}^{k}, \bxi_i \rangle = \Theta(n \cdot \SNR^2)$. 
\end{enumerate}
\end{lemma}
\begin{proof}[Proof of Lemma \ref{lem:first_stage_linear}]
Here we let $T_2$ be the \textit{first} time such that $E_{r,t,\mu_j}^{T_2} = \Theta(  \frac{1}{\sqrt{m}} \| \bw_{r,t}^{T_2} \|^2  \| \bmu \|^2 )$ or $E_{r,t,\xi_i}^{T_2} = \Theta(\frac{1}{n \sqrt{m}}  \| \bw_{r,t}^{T_2} \|^2  \| \bxi_i \|^2)$ or $E_{r,t,w^0}^{T_2} = \frac{1}{\sqrt{m}} \Theta( (\langle \bw_{r,t}^k, \bmu_j + \overline \bxi \rangle -\sqrt{m} \|\bw_{r,t}^k\|^4 )\langle \bw_{r,t}^{k}, \bw_{r,t}^0 \rangle + \| \bw_{r,t}^k \|^2 \langle \bw_{r,t}^0, \bmu_j + \overline\bxi \rangle )$.  In order to prove the results, we use induction $k$ to simultaneously prove the following conditions $\ggA(k), \ggB(k), \ggC(k), \ggD(T_2), \ggE(T_2)$, for $T_1\leq k \leq T_2$:
\begin{itemize}[leftmargin=0.2in]
    \item $\ggA(k)$: $\langle \bw^k_{r,t}, \bmu_j \rangle, \langle \bw^k_{r,t}, \bxi_i \rangle = \widetilde O(1)$, $\langle \bw_{r,t}^k, \bw_{r,t}^0 \rangle = \Omega( \min\{ \sigma_0 \sigma_\xi^{-1} n^{1/2} m^{-1/6}, \sigma_0 \sqrt{d} m^{-1/6} \})$ for all $j=\pm1, r \in [m], i \in [n]$.
    \item $\ggB(k)$: $\langle \nabla_{\bw_{r,t}} L(\bW_t^k),  \bmu_j \rangle = \Theta\big( - \frac{1}{\sqrt{m}} \| \bw_{r,t}^k \|^2  \| \bmu \|^2 \big)$, $\langle \nabla_{\bw_{r,t}} L(\bW_t^k),  \bxi_i \rangle = \Theta \big( - \frac{1}{n \sqrt{m}}  \| \bw_{r,t}^k \|^2  \| \bxi_i \|^2\big) $ and $\langle \nabla_{\bw_{r,t}} L(\bW_t^k),  \bw_{r,t}^0 \rangle = - \frac{1}{ \sqrt{m}} 
 \Theta\big(  \langle \bw_{r,t}^k, \bmu_j + \overline \bxi \rangle \langle \bw_{r,t}^{k}, \bw_{r,t}^0 \rangle + \| \bw_{r,t}^k \|^2 \langle \bw_{r,t}^0, \bmu_j + \overline\bxi \rangle \big)$ for all $j=\pm1, r \in [m], i \in [n]$.
    \item $\ggC(k)$: Claims (3-5), i.e., $\langle  \bw_{r,t}^{k}, \bmu_j \rangle = \Theta( \langle  \bw_{r',t}^{k}, \bmu_{j'} \rangle )$,  $\langle  \bw_{r,t}^{k}, \bxi_i \rangle = \Theta( \langle \bw_{r',t}^{k}, \bxi_{i'} \rangle )$,  $\| \bw_{r,t}^{k}\|^2 = \Theta( \| \bw_{r',t}^k \|^2)$,  $\langle  \bw_{r,t}^{k},  \bw_{r',t}^{k} \rangle = \Theta( \|  \bw_{r,t}^{k} \|^2 )$, and $\langle \bw_{r,t}^{k}, \bmu_j \rangle/\langle \bw_{r,t}^{k}, \bxi_i \rangle = \Theta(n \cdot \SNR^2)$. 
    \item $\ggD(T_2)$: Claim (1), i.e., If $n \cdot \SNR^2 = \Omega(1)$,  $\langle \bw_{r,t}^{T_2}, \bmu_j \rangle = \Theta(m^{-1/6})$,  $\langle \bw_{r,t}^{T_2}, \bxi_i \rangle = \Theta(n^{-1} \cdot \SNR^{-2}\cdot  m^{-1/6})$, and  if  $n^{-1} \cdot \SNR^{-2} = \Omega(1)$, $\langle \bw_{r,t}^{T_2}, \bmu_j \rangle = \Theta(n \cdot \SNR^2 \cdot m^{-1/6})$, $\langle \bw_{r,t}^{T_2}, \bxi_i \rangle = \Theta(m^{-1/6})$.

    \item $\ggE(T_2)$: Claim (2), i.e., $E_{r,t,\mu_j}^{T_2} = \Theta(  \frac{1}{\sqrt{m}} \| \bw_{r,t}^{T_2} \|^2  \| \bmu \|^2 )$, $E_{r,t,\xi_i}^{T_2} = \Theta(\frac{1}{n \sqrt{m}}  \| \bw_{r,t}^{T_2} \|^2  \| \bxi_i \|^2)$, and $E_{r,t,w^0}^{T_2} =\frac{1}{\sqrt{m}} \Theta \big( (\langle \bw_{r,t}^{T_2}, \bmu_j +  \overline{\bxi} \rangle - \sqrt{m} \| \bw_{r,t}^{T_2}\|^4 )\langle \bw_{r,t}^{T_2}, \bw_{r,t}^0 \rangle + \| \bw_{r,t}^{T_2} \|^2 \langle \bw_{r,t}^0, \bmu_j + \overline{\bxi} \rangle \big)$.
\end{itemize}

The initial conditions $\ggA(T_1), \ggB(T_1), \ggC(T_1)$ are satisfied by Lemma \ref{lemma:diff_pre_stage} at the end of the first stage. In order to show $\ggC(k), \ggD(T_2), \ggE(T_2)$, we show the following claims respectively.
\begin{claim}
\label{claim:1}
$\ggA(k), \ggB(k) \Rightarrow \ggC(k)$, for any $T_1 \leq k \leq T_2$. 
\end{claim}

\begin{claim}
\label{claim:2}
$\ggC(T_1), ..., \ggC(T_2) \Rightarrow \ggD(T_2), \ggE(T_2)$.
\end{claim}

\begin{claim}
\label{claim:3}
$\ggD(T_2), \ggE(T_2) \Rightarrow \ggA(T_1), ..., \ggA(T_2),  \ggB(T_1), ..., \ggB(T_2)$.
\end{claim}

% \begin{claim}
% \label{claim:4}
% $ \Rightarrow$.
% \end{claim}

\paragraph{Proof of Claim \ref{claim:1}.}
Claim \ref{claim:1} directly follows from Lemma \ref{lemma:inter_stage1_diff}.

\paragraph{Proof of Claim \ref{claim:2}.}
First, when $\ggC(k)$ is satisfied, we can simplify $\| \bw_{r,t}^k \|^2$ from Lemma \ref{lemma:weight_decomposition_second_stage}
\begin{align*}
    \| \bw^k_{r,t} \|^2 &= \Theta \big( \langle \bw_{r,t}^k , \bmu_j \rangle^2 \| \bmu \|^{-2} + n \cdot \SNR^2 \langle \bw_{r,t}^k, \bxi_i \rangle^2 \|\bmu\|^{-2} + \langle \bw_{r,t}^k, \bw_{r,t}^0 \rangle^2 \| \bw_{r,t}^0 \|^{-2} \big). \\
    &= \Theta \big( (n^2 \SNR^4 + {n \SNR^2}) \| \bmu\|^{-2}  \langle  \bw_{r,t}^k, \bxi_i \rangle^2 + \langle \bw_{r,t}^k, \bw_{r,t}^0 \rangle^2 \| \bw_{r,t}^0 \|^{-2} \big) \\
    &= \Theta \big( (\chi^2 + \chi) \| \bmu\|^{-2}  \langle  \bw_{r,t}^k, \bxi_i \rangle^2 + \psi_{r,t}^k \big)
\end{align*}
where we temporarily denote $\chi \coloneqq n \cdot \SNR^2$ and $\psi^k_{r,t} \coloneqq \langle \bw_{r,t}^k, \bw_{r,t}^0 \rangle^2 \| \bw_{r,t}^0 \|^{-2}$ for notation clarity.

% By Lemma \ref{lemma:inter_stage1_diff}, we see that as long as \eqref{inner_grow1}, \eqref{inner_grow2} hold and $ \langle \bw_{r,t}^{k}, \bmu_j  \rangle, \langle \bw_{r,t}^{k}, \bxi_i  \rangle = \widetilde O(1)$, we have $ \langle \bw_{r,t}^{k}, \bmu_j  \rangle/\langle \bw_{r,t}^{k}, \bxi_i  \rangle = \Theta(n  \SNR^2)$ is satisfied. Thus, we aim to characterize the iteration where \eqref{inner_grow1}, \eqref{inner_grow2} are violated. First it is clear at $k = T_1$, \eqref{inner_grow1}, \eqref{inner_grow2} hold and $ \langle \bw_{r,t}^{k}, \bmu_j  \rangle, \langle \bw_{r,t}^{k}, \bxi_i  \rangle = \widetilde O(1)$ due to Lemma \ref{lemma:diff_pre_stage}. With the increase of $\langle \bw_{r,t}^{k}, \bmu_j  \rangle, \langle \bw_{r,t}^{k}, \bxi_i  \rangle$, \eqref{inner_grow1}, \eqref{inner_grow2} are violated when  the second dominant term reaches the same order as the dominant term. 

% We consider $T_1^+$ be the iteration such that for all $T_1 \leq k \leq T_1^+$, \eqref{inner_grow1}, \eqref{inner_grow2} are satisfied and $ \langle \bw_{r,t}^{k}, \bmu_j  \rangle, \langle \bw_{r,t}^{k}, \bxi_i  \rangle = \widetilde O(1)$. Then for all $T_1 \leq k \leq T_1^+$, we have $\langle \bw_{r,t}^{k}, \bmu_j  \rangle/\langle \bw_{r,t}^{k}, \bxi_i  \rangle = \Theta(n  \cdot \SNR^2)$ and thus 

Then, for the update of $\langle \bw^k_{r,t}, \bmu_j\rangle$, we can compute 
\begin{align*}
    &\frac{1}{2n} \sum_{i=1}^n \langle  \nabla L_{1,i}^{(1)}(\bw^k_{r,t}) , \bmu_j \rangle \\
     & = \frac{1}{m} \Theta\big( \langle  \bw^k_{r,t}, \bmu_j \rangle^5 + \langle \bw_{r,t}^k, \bmu_j \rangle^3 \| \bw_{r,t}^k \|^2 + \langle \bw_{r,t}^k, \bmu_j \rangle \| \bw_{r,t}^k\|^4  - \sqrt{m} \langle \bw_{r,t}^k, \bmu_j \rangle^2 \big) \\
    &\quad + \frac{1}{m} \Theta \big( \langle \bw_{r,t}^k, \bmu_j \rangle^3 \| \bw_{r,t}^k \|^2 \| \bmu \|^2 + \langle \bw_{r,t}^k, \bmu_j\rangle \| \bw_{r,t}^k \|^4 \| \bmu\|^2 - \sqrt{m} \| \bw_{r,t}^k \|^2 \| \bmu\|^2 \big)    \\
    &= \frac{1}{m} \Theta( \chi^5 \langle \bw_{r,t}^k, \bxi_i \rangle^5 + (\chi^5 + \chi^4)  \langle \bw_{r,t}^k, \bxi_i \rangle^5 \| \bmu \|^{-2} + (\chi^5 + \chi^3) \langle\bw_{r,t}^k, \bxi_i \rangle^5 \| \bmu \|^{-4} - \sqrt{m} \chi^2\langle \bw_{r,t}^k, \bxi_i \rangle^2)  \\
    &\quad + \frac{1}{m} \Theta\big( ( \chi^5 + \chi^4 ) \langle \bw_{r,t}^k, \bxi_i \rangle^5 + (\chi^5 + \chi^3) \langle \bw_{r,t}^k, \bxi_i \rangle^5 \| \bmu\|^{-2} - \sqrt{m} (\chi^2 + \chi) \langle \bw_{r,t}^k, \bxi_i  \rangle^2  \big)  \\
    &\quad + \frac{1}{m} \Theta( \psi_{r,t}^k \chi^3 \langle \bw_{r,t}^k, \bxi_i \rangle^3 \| \bmu\|^2 + (\psi_{r,t}^k)^2 \chi \langle \bw_{r,t}^k, \bxi_i  \rangle \| \bmu \|^2 - \sqrt{m} \psi_{r,t}^k \| \bmu \|^2) \\
    &= \frac{1}{m} \Theta \big( - \sqrt{m}  (\chi^2 + \chi) \langle \bw_{r,t}^k, \bxi_i  \rangle^2 + ( \chi^5 + \chi^4 ) \langle \bw_{r,t}^k, \bxi_i \rangle^5 + (\chi^5 + \chi^3) \langle \bw_{r,t}^k, \bxi_i \rangle^5 \| \bmu\|^{-2} \big) \\
    &\quad + \frac{1}{m} \Theta( \psi_{r,t}^k \chi^3 \langle \bw_{r,t}^k, \bxi_i \rangle^3 \| \bmu\|^2 + (\psi_{r,t}^k)^2 \chi \langle \bw_{r,t}^k, \bxi_i  \rangle \| \bmu \|^2 - \sqrt{m} \psi_{r,t}^k \| \bmu \|^2)
\end{align*}
Similarly, we obtain 
\begin{align*}
    &\frac{1}{2n} \sum_{i=1}^n \langle  \nabla L_{1,i}^{(2)}(\bw_{r,t}) , \bmu_j \rangle \\
    &= \frac{1}{m} \Theta \big(  \langle \bw_{r,t}^k, \bxi_i \rangle^4 \langle \bw_{r,t}^k, \bmu_j \rangle  + \langle \bw_{r,t}^k, \bxi_i  \rangle^2 \langle \bw_{r,t}^k, \bmu_j \rangle \| \bw_{r,t}^k\|^2 + \langle \bw_{r,t}^k, \bmu_j \rangle \| \bw_{r,t}^k \|^4 \\
    &\quad - \sqrt{m} \langle \bw_{r,t}^k ,\bxi_i \rangle \langle \bw_{r,t}^k, \bmu_j \rangle \big)   \\
    &= \frac{1}{m} \Theta\big( \chi \langle \bw_{r,t}^k, \bxi_i \rangle^5 +  (\chi^3 + \chi^2) \langle \bw_{r,t}^k, \bxi_i \rangle^5 \| \bmu \|^{-2} + (\chi^5 + \chi^3) \langle \bw_{r,t}^k, \bxi_i \rangle^5 \| \bmu \|^{-4}  \\
    &\quad - \sqrt{m} \chi \langle \bw_{r,t}^k, \bxi_i \rangle^2  \big) +  \frac{1}{m} \Theta( \chi \psi_{r,t}^k \langle \bw_{r,t}^k, \bxi_i \rangle^3 + \chi (\psi_{r,t}^k)^2 \langle \bw_{r,t}^k, \bxi_i \rangle )\\
    &= \frac{1}{m} \Theta \big( -\sqrt{m} \chi   \langle \bw_{r,t}^k, \bxi_i \rangle^2 +   \chi \langle \bw_{r,t}^k, \bxi_i \rangle^5 +  (\chi^3 + \chi^2) \langle \bw_{r,t}^k, \bxi_i \rangle^5 \| \bmu \|^{-2} + (\chi^5 + \chi^3) \langle \bw_{r,t}^k, \bxi_i \rangle^5 \| \bmu \|^{-4} \big) \\
    &\quad + \frac{1}{m} \Theta( \chi \psi_{r,t}^k \langle \bw_{r,t}^k, \bxi_i \rangle^3 + \chi (\psi_{r,t}^k)^2 \langle \bw_{r,t}^k, \bxi_i \rangle )
\end{align*}
\begin{align*}
    &\frac{1}{2n} \sum_{i=1}^n \langle  \nabla L_{2,i}^{(1)}(\bw_{r,t}) , \bmu_j \rangle \\ 
    &= \frac{m-1}{m} \Theta\big( \langle \bw_{r,t}^k, \bmu_j \rangle^5 + \langle \bw_{r,t}^k, \bmu_j \rangle^3 \| \bw_{r,t}^k \|^2 + \langle\bw_{r,t}^k, \bmu_j \rangle \| \bw_{r,t}^k\|^4 + \langle \bw_{r,t}^k, \bmu_j \rangle \|  \bw_{r,t}^k \|^4 \| \bmu\|^2 \\
    &\quad + \langle \bw_{r,t}^k, \bmu_j \rangle^3 \| \bw_{r,t}^k \|^2 \| \bmu\|^2  \big) \\
    &= \frac{m-1}{m} \Theta\big(  (\chi^5 + \chi^4)  \langle \bw_{r,t}^k, \bxi_i \rangle^5  + (\chi^5 + \chi^3) \langle\bw_{r,t}^k, \bxi_i \rangle^5 \| \bmu \|^{-2} \big) \\
    &\quad + \frac{m-1}{m}  \Theta( \psi_{r,t}^k \chi^3 \langle \bw_{r,t}^k, \bxi_i \rangle^3 \| \bmu\|^2 + (\psi_{r,t}^k)^2 \chi \langle \bw_{r,t}^k, \bxi_i  \rangle \| \bmu \|^2 - \sqrt{m} \psi_{r,t}^k \| \bmu \|^2 )
\end{align*}
\begin{align*}
    &\frac{1}{2n} \sum_{i=1}^n \langle  \nabla L_{2,i}^{(2)}(\bw_{r,t}) , \bmu_j \rangle \\ 
    &= \frac{m-1}{m} \Theta \big( \langle \bw_{r,t}^k, \bmu_j \rangle \langle \bw_{r,t}^k, \bxi_i \rangle^4 + \langle \bw_{r,t}^k, \bmu_j \rangle  \| \bw_{r,t}^k \|^4 + \langle \bw_{r,t}^k, \bmu_j \rangle  \langle \bw_{r,t}^k, \bxi_i \rangle^2 \| \bw_{r,t}^k \|^2 \big) \\
    &= \frac{m-1}{m} \Theta \big( \chi \langle \bw_{r,t}^k, \bxi_i \rangle^5 +  (\chi^3 + \chi^2) \langle \bw_{r,t}^k, \bxi_i \rangle^5 \| \bmu \|^{-2} + (\chi^5 + \chi^3) \langle \bw_{r,t}^k, \bxi_i \rangle^5 \| \bmu \|^{-4} \big) \\
    &\quad + \frac{m-1}{m} \Theta( \chi \psi_{r,t}^k \langle \bw_{r,t}^k, \bxi_i \rangle^3 + \chi (\psi_{r,t}^k)^2 \langle \bw_{r,t}^k, \bxi_i \rangle )
\end{align*}
Combining the above results, we have 
\begin{align*}
     &\langle \nabla_{\bw_{r,t}} L(\bW_t^k),  \bmu_j \rangle = - \frac{1}{\sqrt{m}} \Theta\Big(  \| \bw_{r,t}^k \|^2  \| \bmu \|^2 \Big) + E_{r,t,\mu_j}^k \\
     &= - \frac{1}{\sqrt{m}} \Theta\big( (\chi^2 + \chi) \langle \bw_{r,t}^k, \bxi_i \rangle^2 +  \psi_{r,t}^k \| \bmu \|^2 \big) \\
    &\quad +  \Theta\Big( ( \chi^5 + \chi^4 ) \langle \bw_{r,t}^k, \bxi_i \rangle^5 + (\chi^5 + \chi^3) \langle \bw_{r,t}^k, \bxi_i \rangle^5 \| \bmu\|^{-2}  \Big) \\
    &\quad +  \Theta \Big( \chi \langle \bw_{r,t}^k, \bxi_i \rangle^5 +  (\chi^3 + \chi^2) \langle \bw_{r,t}^k, \bxi_i \rangle^5 \| \bmu \|^{-2} + (\chi^5 + \chi^3) \langle \bw_{r,t}^k, \bxi_i \rangle^5 \| \bmu \|^{-4} \Big) \\
    &\quad +  \Theta( \psi_{r,t}^k \chi^3 \langle \bw_{r,t}^k, \bxi_i \rangle^3 \| \bmu\|^2 + (\psi_{r,t}^k)^2 \chi \langle \bw_{r,t}^k, \bxi_i  \rangle \| \bmu \|^2 +  \chi \psi_{r,t}^k \langle \bw_{r,t}^k, \bxi_i \rangle^3   \big)
\end{align*}
Similarly, we can derive for the update of $\langle \bw_{r,t}^k, \bxi_i \rangle$ as follows:
\begin{align*}
    &\frac{1}{2n} \sum_{i=1}^n \langle \nabla L_{1,i}^{(1)} ( \bw_{r,t}^k  ) , \bxi_i \rangle \\
    &=  \frac{1}{m} \Theta \Big( \langle \bw_{r,t}^k, \bmu_j\rangle^4 \langle \bw_{r,t}^k, \bxi_i \rangle + \langle \bw_{r,t}^k, \bmu_j \rangle^2 \| \bw_{r,t}^k \|^2 \langle \bw_{r,t}^k, \bxi_i \rangle  + \| \bw_{r,t}^k \|^4 \langle \bw_{r,t}^k, \bxi_i \rangle \\
    &\quad - \sqrt{m} \langle \bw_{r,t}^k, \bmu_j \rangle \langle \bw_{r,t}^k, \bxi_i \rangle \Big) \\
    &= \frac{1}{m} \Theta \big(  \chi^4 \langle \bw_{r,t}^k, \bxi_i \rangle^5 + (\chi^4 + \chi^3) \langle \bw_{r,t}^k, \bxi_i \rangle^5 \| \bmu \|^{-2} + (\chi^4 + \chi^2) \langle \bw_{r,t}^k, \bxi_i \rangle^5 \| \bmu \|^{-4}  - \sqrt{m} \chi \langle \bw_{r,t}^k, \bxi_i \rangle^2 \big) \\
    &\quad + \frac{1}{m} \Theta( \psi_{r,t}^k \chi^2 \langle  \bw_{r,t}^k, \bxi_i \rangle^3 + (\psi_{r,t}^k)^2 \langle  \bw_{r,t}^k, \bxi_i\rangle)\\
    &= \frac{1}{m} \Theta \big(  -\sqrt{m} \chi \langle \bw_{r,t}^k, \bxi_i \rangle^2 + \chi^4 \langle \bw_{r,t}^k, \bxi_i \rangle^5 + (\chi^4+ \chi^3 ) \langle \bw_{r,t}^k, \bxi_i \rangle^5 \| \bmu \|^{-2} + (\chi^4 + \chi^2) \langle \bw_{r,t}^k, \bxi_i \rangle^5 \| \bmu \|^{-4}  \big) \\
    &\quad +\frac{1}{m} \Theta( \psi_{r,t}^k \chi^2 \langle  \bw_{r,t}^k, \bxi_i \rangle^3 + (\psi_{r,t}^k)^2 \langle  \bw_{r,t}^k, \bxi_i\rangle)
\end{align*}
\begin{align*}
    &\frac{1}{2n} \sum_{i=1}^n \langle \nabla L_{1,i}^{(2)} ( \bw_{r,t}^k  ) , \bxi_i \rangle \\
    &= \frac{1}{m} \Theta \Big( \langle \bw_{r,t}^k, \bxi_{i} \rangle^5 + \langle \bw_{r,t}^k, \bxi_i \rangle^3 \| \bw_{r,t}^k\|^2  + \| \bw_{r,t}^k \|^4 \langle \bw_{r,t}^k, \bxi_i  \rangle   - \sqrt{m} \langle\bw_{r,t}^k, \bxi_i \rangle^2 \Big) \\
    &\quad + \frac{1}{nm} \Theta\Big(  \langle \bw_{r,t}^k, \bxi_{i}\rangle^3 \| \bw_{r,t}^k\|^2 \| \bxi_i\|^2 + \| \bw_{r,t}^k \|^4 \langle \bw_{r,t}^k, \bxi_{i}\rangle \| \bxi_i\|^2  - \sqrt{m} \| \bw_{r,t}^k\|^2\| \bxi_i\|^2  \Big)  \\
    % &\quad + \frac{1}{m} \Theta \big( \psi_{r,t}^k  \langle \bw_{r,t}^k, \bxi_{i}  \rangle^3 + (\psi_{r,t}^k)^2 \langle \bw_{r,t}^k, \bxi_{i}  \rangle  +  \chi^{-1} \sigma_0^2d \langle \bw_{r,t}^k, \bxi_{i}  \rangle^3 \| \bmu \|^2 + \chi^{-1} \sigma_0^4d^2 \langle \bw_{r,t}^k, \bxi_{i}  \rangle \| \bmu \|^2 \\
    % &\quad - \chi^{-1} \sqrt{m} \psi_{r,t}^k \| \bmu \|^2\big) \\
    &= \frac{1}{m} \Theta \Big(  \langle \bw_{r,t}^k, \bxi_{i} \rangle^5 +  (\chi^2 + \chi)  \langle \bw_{r,t}^k, \bxi_{i} \rangle^5  \| \bmu \|^{-2} + (\chi^4 + \chi^2) \langle \bw_{r,t}^k, \bxi_{i} \rangle^5 \| \bmu \|^{-4} - \sqrt{m} \langle \bw_{r,t}^k, \bxi_{i} \rangle^2  \Big) \\
    &\quad + \frac{1}{\chi m} \Theta \Big( (\chi^2+\chi) \langle \bw_{r,t}^k, \bxi_{i} \rangle^5  + (\chi^4 + \chi^2) \langle \bw_{r,t}^k, \bxi_{i} \rangle^5  \| \bmu \|^{-2} - \sqrt{m} (\chi^2 + \chi) \langle \bw_{r,t}^k, \bxi_{i} \rangle^2   \Big)  \\
    &\quad  + \frac{1}{m} \Theta \big( \psi_{r,t}^k  \langle \bw_{r,t}^k, \bxi_{i}  \rangle^3 + (\psi_{r,t}^k)^2 \langle \bw_{r,t}^k, \bxi_{i}  \rangle  +  \chi^{-1} \psi_{r,t}^k \langle \bw_{r,t}^k, \bxi_{i}  \rangle^3 \| \bmu \|^2 + \chi^{-1} (\psi_{r,t}^k)^2 \langle \bw_{r,t}^k, \bxi_{i}  \rangle \| \bmu \|^2 \\
    &\quad - \chi^{-1} \sqrt{m} \psi_{r,t}^k \| \bmu \|^2\big) \\
    &= \frac{1}{m} \Theta \big( - \sqrt{m} (\chi + 1)  \langle \bw_{r,t}^k, \bxi_{i} \rangle^2 + (\chi^2 + \chi)  \langle \bw_{r,t}^k, \bxi_{i} \rangle^5  \| \bmu \|^{-2} + (\chi^4 + \chi^2) \langle \bw_{r,t}^k, \bxi_{i} \rangle^5 \| \bmu \|^{-4} \\
    &\quad + (\chi+1) \langle \bw_{r,t}^k, \bxi_{i} \rangle^5 + (\chi^3 + \chi) \langle \bw_{r,t}^k, \bxi_{i} \rangle^5  \| \bmu \|^{-2}  \big) \\
    &\quad  + \frac{1}{m} \Theta \big( \psi_{r,t}^k  \langle \bw_{r,t}^k, \bxi_{i}  \rangle^3 + (\psi_{r,t}^k)^2 \langle \bw_{r,t}^k, \bxi_{i}  \rangle  +  \chi^{-1} \psi_{r,t}^k \langle \bw_{r,t}^k, \bxi_{i}  \rangle^3 \| \bmu \|^2 + \chi^{-1} (\psi_{r,t}^k)^2 \langle \bw_{r,t}^k, \bxi_{i}  \rangle \| \bmu \|^2 \\
    &\quad - \chi^{-1} \sqrt{m} \psi_{r,t}^k \| \bmu \|^2\big) 
\end{align*}
where the second equality follows from $\sum_{i'=1}^n\langle \bxi_{i'}, \bxi_i \rangle = (1 + \widetilde O(n d^{-1/2})) \| \bxi_i\|^2 = \Theta(\|\bxi_i \|^2)$ by Lemma \ref{lemma:init_bound} and condition on $d$. Further, 
\begin{align*}
    &\frac{1}{2n} \sum_{i'=1}^n \langle \nabla L_{2,i'}^{(1)} ( \bw_{r,t}^k  ) , \bxi_i \rangle \\
    &= \frac{m-1}{m} \Theta \Big( \langle \bw_{r,t}^k, \bmu_j \rangle^4 \langle \bw_{r,t}^k, \bxi_i \rangle + \| \bw_{r,t}^k \|^4 \langle \bw_{r,t}^k, \bxi_i\rangle  + \langle \bw_{r,t}^k, \bmu_j\rangle^2 \| \bw_{r,t}^k\|^2 \langle \bw_{r,t}^k, \bxi_i\rangle \Big) \\
    &= \frac{m-1}{m} \Theta \big(  \chi^4 \langle \bw_{r,t}^k, \bxi_i \rangle^5 + (\chi^4 + \chi^2) \langle \bw_{r,t}^k, \bxi_i \rangle^5 \| \bmu \|^{-4} +  (\chi^4 + \chi^3) \langle \bw_{r,t}^k, \bxi_i \rangle^5 \| \bmu \|^{-2}\big) \\
    &\quad + \frac{m-1}{m} \Theta\big( \psi_{r,t}^k \chi^2 \langle  \bw_{r,t}^k, \bxi_i \rangle^3 + (\psi_{r,t}^k)^2 \langle  \bw_{r,t}^k, \bxi_i \rangle \big)
\end{align*}
\begin{align*}
    &\frac{1}{2n} \sum_{i'=1}^n \langle  \nabla L_{2,i'}^{(2)} (\bw_{r,t}^k) , \bxi_i \rangle \\
    &= \frac{m-1}{m} \Theta \Big( \langle \bw_{r,t}^k, \bxi_i \rangle^5 + \| \bw_{r,t}^k\|^4 \langle \bw_{r,t}^k, \bxi_i\rangle + \langle \bw_{r,t}^k, \bxi_i\rangle^3 \| \bw_{r,t}^k\|^2 \Big) \\
    &\quad + \frac{m-1}{nm} \Theta\Big( \langle \bw_{r,t}^k, \bxi_i\rangle^3 \| \bw_{r,t}^k\|^2 \| \bxi_i \|^2 + \| \bw_{r,t}^k\|^4 \langle\bw_{r,t}^k, \bxi_i \rangle \| \bxi_i \|^2 \Big)   \\
    % &\quad   + \frac{m-1}{m} \Theta \big( \psi_{r,t}^k  \langle \bw_{r,t}^k, \bxi_{i}  \rangle^3 + (\psi_{r,t}^k)^2 \langle \bw_{r,t}^k, \bxi_{i}  \rangle  +  \chi^{-1} \psi_{r,t}^k \langle \bw_{r,t}^k, \bxi_{i}  \rangle^3 \| \bmu \|^2 + \chi^{-1} (\psi_{r,t}^k)^2 \langle \bw_{r,t}^k, \bxi_{i}  \rangle \| \bmu \|^2 \\
    % &\quad - \chi^{-1} \sqrt{m} \psi_{r,t}^k \| \bmu \|^2\big) \\
    &= \frac{m-1}{m} \Theta \big( \langle \bw_{r,t}^k, \bxi_i \rangle^5 +  (\chi^4 + \chi^2) \langle \bw_{r,t}^k, \bxi_i \rangle^5 \| \bmu \|^{-4}  + (\chi^2 + \chi) \langle \bw_{r,t}^k, \bxi_i \rangle^5 \| \bmu \|^{-2} \big) \\
    &\quad + \frac{m-1}{\chi m} \Theta \big( (\chi^2+\chi) \langle \bw_{r,t}^k, \bxi_{i} \rangle^5  + (\chi^4 + \chi^2) \langle \bw_{r,t}^k, \bxi_{i} \rangle^5  \| \bmu \|^{-2} \big)   \\
    &\quad   + \frac{m-1}{m} \Theta \big( \psi_{r,t}^k  \langle \bw_{r,t}^k, \bxi_{i}  \rangle^3 + (\psi_{r,t}^k)^2 \langle \bw_{r,t}^k, \bxi_{i}  \rangle  +  \chi^{-1} \psi_{r,t}^k \langle \bw_{r,t}^k, \bxi_{i}  \rangle^3 \| \bmu \|^2 + \chi^{-1} (\psi_{r,t}^k)^2 \langle \bw_{r,t}^k, \bxi_{i}  \rangle \| \bmu \|^2 \\
    &\quad - \chi^{-1} \sqrt{m} \psi_{r,t}^k \| \bmu \|^2\big) \\
    &= \frac{m-1}{m} \Theta \big( (\chi^2 + \chi) \langle \bw_{r,t}^k, \bxi_i \rangle^5 \| \bmu \|^{-2} + (\chi^4 + \chi^2) \langle \bw_{r,t}^k, \bxi_i \rangle^5 \| \bmu \|^{-4} +   (\chi+1) \langle \bw_{r,t}^k, \bxi_{i} \rangle^5  \\
    &\quad + (\chi^3 + \chi) \langle \bw_{r,t}^k, \bxi_{i} \rangle^5  \| \bmu \|^{-2} \big) \\
    &\quad   + \frac{m-1}{m} \Theta \big( \psi_{r,t}^k  \langle \bw_{r,t}^k, \bxi_{i}  \rangle^3 + (\psi_{r,t}^k)^2 \langle \bw_{r,t}^k, \bxi_{i}  \rangle  +  \chi^{-1} \psi_{r,t}^k \langle \bw_{r,t}^k, \bxi_{i}  \rangle^3 \| \bmu \|^2 + \chi^{-1} (\psi_{r,t}^k)^2 \langle \bw_{r,t}^k, \bxi_{i}  \rangle \| \bmu \|^2 \\
    &\quad - \chi^{-1} \sqrt{m} \psi_{r,t}^k \| \bmu \|^2\big) 
\end{align*}
Combining the above results, we have 
\begin{align*}
   &\langle \nabla_{\bw_{r,t}} L(\bW_t^k),  \bxi_i \rangle = - \frac{1}{n \sqrt{m}} 
 \Theta\Big(  \| \bw_{r,t}^k \|^2  \| \bxi_i \|^2\Big) + E_{r,t,\xi_i}^k \\
    &=  - \frac{1}{\sqrt{m}} \Theta\Big( (\chi + 1) \langle \bw_{r,t}^k, \bxi_i \rangle^2 + \chi^{-1} \psi_{r,t}^k \| \bmu \|^2  \Big) \\
    &\quad +  \Theta\Big( \chi^4 \langle \bw_{r,t}^k, \bxi_i \rangle^5 + (\chi^4+ \chi^3 ) \langle \bw_{r,t}^k, \bxi_i \rangle^5 \| \bmu \|^{-2} + (\chi^4 + \chi^2) \langle \bw_{r,t}^k, \bxi_i \rangle^5 \| \bmu \|^{-4} \Big) \\
    &\quad +  \Theta\Big( (\chi^2 + \chi)  \langle \bw_{r,t}^k, \bxi_{i} \rangle^5  \| \bmu \|^{-2} + (\chi^4 + \chi^2) \langle \bw_{r,t}^k, \bxi_{i} \rangle^5 \| \bmu \|^{-4} \\
    &\quad +  \Theta\Big(  (\chi+1) \langle \bw_{r,t}^k, \bxi_{i} \rangle^5 + (\chi^3 + \chi) \langle \bw_{r,t}^k, \bxi_{i} \rangle^5  \| \bmu \|^{-2}   \Big) \\
    &\quad +  \Theta \Big(  \psi_{r,t}^k \chi^2 \langle  \bw_{r,t}^k, \bxi_i \rangle^3 + (\psi_{r,t}^k)^2 \langle  \bw_{r,t}^k, \bxi_i \rangle + \psi_{r,t}^k \langle  \bw_{r,t}^k, \bxi_{i}  \rangle^3 + \chi^{-1} \psi_{r,t}^k \langle \bw_{r,t}^k, \bxi_{i}  \rangle^3 \| \bmu \|^2 \\
    &\quad + \chi^{-1} (\psi_{r,t}^k)^2 \langle \bw_{r,t}^k, \bxi_{i}  \rangle \| \bmu \|^2  \Big)
\end{align*}
In summary, we finally arrive at 
\begin{align}
&\langle \nabla_{\bw_{r,t}} L(\bW_t^k),  \bmu_j \rangle = - \frac{1}{\sqrt{m}} \Theta\Big(  \| \bw_{r,t}^k \|^2  \| \bmu \|^2 \Big) + E_{r,t,\mu_j}^k \nonumber \\
     &= - \frac{1}{\sqrt{m}} \Theta\big( (\chi^2 + \chi) \langle \bw_{r,t}^k, \bxi_i \rangle^2 +  \psi_{r,t}^k \| \bmu \|^2 \big) \nonumber \\
    &\quad +  \Theta\Big( ( \chi^5 + \chi^4 ) \langle \bw_{r,t}^k, \bxi_i \rangle^5 + (\chi^5 + \chi^3) \langle \bw_{r,t}^k, \bxi_i \rangle^5 \| \bmu\|^{-2}  \Big) \nonumber \\
    &\quad +  \Theta \Big( \chi \langle \bw_{r,t}^k, \bxi_i \rangle^5 +  (\chi^3 + \chi^2) \langle \bw_{r,t}^k, \bxi_i \rangle^5 \| \bmu \|^{-2} + (\chi^5 + \chi^3) \langle \bw_{r,t}^k, \bxi_i \rangle^5 \| \bmu \|^{-4} \Big) \nonumber \\
    &\quad +  \Theta( \psi_{r,t}^k \chi^3 \langle \bw_{r,t}^k, \bxi_i \rangle^3 \| \bmu\|^2 + (\psi_{r,t}^k)^2 \chi \langle \bw_{r,t}^k, \bxi_i  \rangle \| \bmu \|^2 +  \chi \psi_{r,t}^k \langle \bw_{r,t}^k, \bxi_i \rangle^3   \big) \label{inner_mu_up} \\
    &\langle \nabla_{\bw_{r,t}} L(\bW_t^k),  \bxi_i \rangle = - \frac{1}{n \sqrt{m}} 
 \Theta\Big(  \| \bw_{r,t}^k \|^2  \| \bxi_i \|^2\Big) + E_{r,t,\xi_i}^k \nonumber \\
    &=  - \frac{1}{\sqrt{m}} \Theta\Big( (\chi + 1) \langle \bw_{r,t}^k, \bxi_i \rangle^2 + \chi^{-1} \psi_{r,t}^k \| \bmu \|^2  \Big) \nonumber \\
    &\quad +  \Theta\Big( \chi^4 \langle \bw_{r,t}^k, \bxi_i \rangle^5 + (\chi^4+ \chi^3 ) \langle \bw_{r,t}^k, \bxi_i \rangle^5 \| \bmu \|^{-2} + (\chi^4 + \chi^2) \langle \bw_{r,t}^k, \bxi_i \rangle^5 \| \bmu \|^{-4} \Big) \nonumber \\
    &\quad +  \Theta\Big( (\chi^2 + \chi)  \langle \bw_{r,t}^k, \bxi_{i} \rangle^5  \| \bmu \|^{-2} + (\chi^4 + \chi^2) \langle \bw_{r,t}^k, \bxi_{i} \rangle^5 \| \bmu \|^{-4} \nonumber \\
    &\quad +  \Theta\Big(  (\chi+1) \langle \bw_{r,t}^k, \bxi_{i} \rangle^5 + (\chi^3 + \chi) \langle \bw_{r,t}^k, \bxi_{i} \rangle^5  \| \bmu \|^{-2}   \Big) \nonumber \\
    &\quad +  \Theta \Big(  \psi_{r,t}^k \chi^2 \langle  \bw_{r,t}^k, \bxi_i \rangle^3 + (\psi_{r,t}^k)^2 \langle  \bw_{r,t}^k, \bxi_i \rangle + \psi_{r,t}^k \langle  \bw_{r,t}^k, \bxi_{i}  \rangle^3 + \chi^{-1} \psi_{r,t}^k \langle \bw_{r,t}^k, \bxi_{i}  \rangle^3 \| \bmu \|^2 \nonumber \\
    &\quad + \chi^{-1} (\psi_{r,t}^k)^2 \langle \bw_{r,t}^k, \bxi_{i}  \rangle \| \bmu \|^2  \Big) \label{inner_xi_up}
\end{align}

We also examine $\langle \nabla_{\bw_{r,t}} L(\bW_t^k) , \bw_{r,t}^0 \rangle$ as follows. We first upper bound for $r' \neq r$
\begin{align*}
    \langle \bw_{r,t}^k, \bw_{r',t}^0 \rangle &=  \Theta(\langle \bw_{r,t}^k, \bw_{r,t}^0 \rangle \| \bw_{r,t}^0\|^{-2}) \langle\bw_{r,t}^0, \bw_{r',t}^0 \rangle + \Theta( \langle \bw_{r,t}^k, \bmu_j \rangle \langle \bw_{r',t}^0, \bmu_j\rangle \| \bmu\|^{-2}) \\
    &\quad + \Theta\big(  \langle \bw_{r,t}^k , \bxi_i  \rangle  \big) \sum_{i=1}^n \langle \bw_{r',t}^0, \bxi_i \rangle \| \bxi_i \|^{-2}  \\
    &= \widetilde O(\sigma_0) + \widetilde O(\sigma_0) + \widetilde O(n \sigma_0 \sigma_\xi^{-1} d^{-1/2}) \\
    &= \widetilde O(\sigma_0) + \widetilde O(n \sigma_0 \sigma_\xi^{-1} d^{-1/2})
\end{align*}
where we use \eqref{eq:w_decomp_second} in the first equality and Lemma \ref{lemma:init_diff_bound}, Lemma \ref{lemma:inter_stage1_diff} in the second equality. 
% The last equality is by the condition \highlight{$d = \widetilde \Omega(n^2 \sigma_\xi^{-2})$.} 

Next we simplify the gradient as 
\begin{align}
    &\langle \nabla_{\bw_{r,t}} L(\bW_t^k), \bw_{r,t}^0  \rangle \nonumber \\
    &= - \frac{1}{\sqrt{m}} \Theta\Big( \langle \bw_{r,t}^k, \bmu_j + \bxi_i \rangle \langle \bw_{r,t}^k, \bw_{r,t}^0 \rangle + \| \bw_{r,t}^k\|^2  \langle \bw_{r,t}^0, \bmu_j + \bxi_i \rangle\Big) \nonumber\\
    &\quad + \frac{1}{m} \Theta\Big( \big(\langle \bw_{r,t}^k, \bmu_j \rangle^4 + \langle \bw_{r,t}^k, \bxi_i \rangle^4 \big) \langle \bw_{r,t}^k , \bw_{r,t}^0 \rangle + \big(\langle \bw_{r,t}^k, \bmu_j \rangle^2 + \langle \bw_{r,t}^k, \bxi_i \rangle^2 \big) \| \bw_{r,t}^k \|^2 \langle \bw_{r,t}^k, \bw_{r,t}^0 \rangle \Big) \nonumber \\
    &\quad+ \frac{1}{m} \Theta\Big(  \| \bw_{r,t}^k \|^4 \langle \bw_{r,t}^k, \bw_{r,t}^0 \rangle + \big( \langle\bw_{r,t}^k, \bmu_j  \rangle^3 \langle \bw_{r,t}^0, \bmu_j\rangle + \langle\bw_{r,t}^k, \bxi_i  \rangle^3 \langle \bw_{r,t}^0, \bxi_i \rangle\big) \| \bw_{r,t}^k \|^2 \Big) \nonumber \\
    &\quad + \frac{1}{m} \Theta\Big( \big( \langle\bw_{r,t}^k, \bmu_j \rangle \langle\bw_{r,t}^0, \bmu_j \rangle + \langle\bw_{r,t}^k, \bxi_i \rangle \langle\bw_{r,t}^0, \bxi_i \rangle \big)  \| \bw_{r,t}^k \|^4 \Big) \nonumber\\
    &\quad +\frac{m-1}{m} \Theta\Big( \langle \bw_{r,t}^k, \bmu_j \rangle^4 + \langle \bw_{r,t}^k, \bxi_i \rangle^4 + \| \bw_{r,t}^k \|^2 \big( \langle \bw_{r,t}^k, \bmu_j\rangle^2 + \langle \bw_{r,t}^k, \bxi_i \rangle^2 \big)  \nonumber\\
    &\qquad \qquad \qquad   + \| \bw_{r,t}^k \|^4  \Big) \widetilde O(\sigma_0 + n \sigma_0 \sigma_\xi^{-1} d^{-1/2}) \nonumber \\
    &\quad + \frac{m-1}{m} \Theta\Big(  \big(\langle \bw_{r,t}^k, \bmu_j \rangle^3 \langle \bw_{r,t}^0, \bmu_j \rangle + \langle \bw_{r,t}^k, \bxi_i \rangle^3 \langle \bw_{r,t}^0, \bxi_i \rangle\big) \| \bw_{r,t}^k \|^2   \Big) \nonumber\\
    &\quad + \frac{m-1}{m} \Theta\Big(  \big( \langle \bw_{r,t}^k, \bmu_j \rangle \langle \bw_{r,t}^0, \bmu_j \rangle +  \langle \bw_{r,t}^k, \bxi_i \rangle \langle \bw_{r,t}^0, \bxi_i \rangle \big) \| \bw_{r,t}^k\|^4 \Big) \nonumber \\
    &\quad + \frac{m-1}{m} \Theta \Big(  \big( \langle \bw_{r,t}^k, \bmu_j \rangle^2 + \langle \bw_{r,t}^k, \bxi_i \rangle^2 \big) \| \bw_{r,t}^k \|^2 \langle \bw_{r,t}^k, \bw_{r,t}^0 \rangle \Big) \nonumber \\
    &\quad + \frac{m-1}{m} \Theta \Big(  \| \bw_{r,t}^k \|^4 \langle \bw_{r,t}^k, \bw_{r,t}^0 \rangle \Big) \nonumber\\
    &= - \frac{1}{\sqrt{m}} \Theta\Big( \langle \bw_{r,t}^k, \bmu_j + \bxi_i \rangle \langle \bw_{r,t}^k, \bw_{r,t}^0 \rangle + \| \bw_{r,t}^k\|^2  \langle \bw_{r,t}^0, \bmu_j + \bxi_i \rangle\Big) \nonumber\\
    &\quad +\frac{1}{m} \Theta \Big( \langle \bw_{r,t}^k, \bmu_j \rangle^4 + \langle \bw_{r,t}^k, \bxi_i \rangle^4   \Big)  \langle \bw_{r,t}^k , \bw_{r,t}^0 \rangle \nonumber \\
    &\quad + \Theta\Big(  \big(\langle \bw_{r,t}^k, \bmu_j \rangle^2 + \langle \bw_{r,t}^k, \bxi_i \rangle^2 \big) \| \bw_{r,t}^k \|^2 + \| \bw_{r,t}^k \|^4  \Big)  \langle\bw_{r,t}^k, \bw_{r,t}^0 \rangle  \nonumber\\
    % &\qquad \qquad  \cdot \Big( + \frac{m-1}{m} \widetilde O(\sigma_0 + n \sigma_0 \sigma_\xi^{-1} d^{-1/2}) \Big)  \nonumber\\
    % &\quad + \Theta\Big( \big(\langle \bw_{r,t}^k, \bmu_j \rangle^2 + \langle \bw_{r,t}^k, \bxi_i \rangle^2 \big) \| \bw_{r,t}^k \|^2 \Big) \nonumber\\
    &\quad + \Theta \Big(   \big( \langle\bw_{r,t}^k, \bmu_j  \rangle^3 \langle \bw_{r,t}^0, \bmu_j\rangle + \langle\bw_{r,t}^k, \bxi_i  \rangle^3 \langle \bw_{r,t}^0, \bxi_i \rangle\big) \| \bw_{r,t}^k \|^2  \Big) \nonumber\\
    &\quad + \Theta\Big( \big( \langle\bw_{r,t}^k, \bmu_j \rangle \langle\bw_{r,t}^0, \bmu_j \rangle + \langle\bw_{r,t}^k, \bxi_i \rangle \langle\bw_{r,t}^0, \bxi_i \rangle \big)  \| \bw_{r,t}^k \|^4  \Big)  \label{eq:grad_w0}
\end{align}
where we use that $\langle \bw_{r,t}^k, \bw_{r,t}^0 \rangle  \geq \Theta(\sigma_0^2 d) \geq  \widetilde O( \sigma_0 + n \sigma_0 \sigma_\xi^{-1} d^{-1/2})$ due to the condition that {$\sigma_0 \geq \widetilde O( \max\{ n \sigma_\xi^{-1} d^{-3/2}, d^{-1} \})$.}

In order to identify the dominant terms of \eqref{inner_mu_up} and \eqref{inner_xi_up}, we separate the analysis for three cases depending on the scale of $n \cdot \SNR^2$. For each case, we also consider two sub-cases depending on the scale of $ \langle \bw_{r,t}^k, \bxi_i \rangle$. Recall that we define $\psi_{r,t}^k = \langle \bw_{r,t}^{k} , \bw_{r,t}^0\rangle^2 \| \bw_{r,t}^0 \|^{-2}$
\begin{itemize}[leftmargin=0.2in]
    \item When $\chi = n \cdot \SNR^2 = \Theta(1)$,  
    \begin{itemize}
        \item If $ \langle \bw_{r,t}^k, \bxi_i \rangle^2 \geq \Theta(\psi_{r,t}^k)$,  we can identify the dominant terms as
        \begin{align*}
            \frac{1}{\sqrt{m}} \| \bw_{r,t}^k \| \| \bmu\|^2 &=  - \frac{1}{\sqrt{m}} \Theta\big(   \langle \bw_{r,t}^k, \bxi_i \rangle^2 \big), \quad 
            E_{r,t,\mu_j}^k =   \Theta( \langle \bw_{r,t}^k, \bxi_i \rangle^5 )   \\
           \frac{1}{n \sqrt{m}}   \| \bw_{r,t}^k \|^2  \| \bxi_i \|^2  &=   - \frac{1}{\sqrt{m}} \Theta\big(   \langle \bw_{r,t}^k, \bxi_i \rangle^2 \big), \quad E_{r,t, \xi_i}^k =  \Theta( \langle \bw_{r,t}^k, \bxi_i \rangle^5 )  
        \end{align*}
         Hence, we can see at $k = T_2$, when $\langle\bw_{r,t}^{T_2}, \bxi_i  \rangle = \Theta(m^{-1/6})$ and $\langle \bw_{r,t}^{T_2}, \bmu_j\rangle = \Theta( \chi m^{-1/6}) = \Theta(m^{-1/6})$ (by the condition of $\chi$), we have $E_{r,t,\mu_j}^{T_2} = \Theta(\frac{1}{\sqrt{m}} \| \bw_{r,t}^{T_2} \| \| \bmu\|^2)$ and $E_{r,t, \xi_i}^{T_2} = \Theta(\frac{1}{n \sqrt{m}}   \| \bw_{r,t}^{T_2} \|^2  \| \bxi_i \|^2)$. 
         % In this case, we have $\langle \bw_{r,t}^{T_2}, \bw_{r,t}^0 \rangle \| \bw_{r,t}^0 \|^{-1} = \sqrt{\psi_{r,t}^k} \leq \Theta(m^{-1/6})$. Given the condition \highlight{$\sigma_0 \leq \widetilde O(m^{-1/6} d^{-1/2})$}, we see $\langle \bw_{r,t}^{T_2}, \bw_{r,t}^0 \rangle \| \bw_{r,t}^0 \|^{-1} \leq \Theta(m^{-1/6})$ can be satisfied. 
         
         It remains to show that $E^{T_2}_{r,t, w^0} = \frac{1}{\sqrt{m}} \Theta\big(  ( \langle \bw_{r,t}^{T_2}, \bmu_j + \overline{\bxi} \rangle - \sqrt{m} \| \bw_{r,t}^{T_2}\|^4 ) \langle \bw_{r,t}^{T_2}, \bw_{r,t}^0 \rangle + \| \bw_{r,t}^{T_2}\|^2  \langle \bw_{r,t}^0, \bmu_j + \overline{\bxi} \rangle \big)$. We compute that $\| \bw_{r,t}^{T_2}\|^2 = \Theta\big( \langle \bw_{r,t}^{T_2}, \bxi_i \rangle^2 + \psi_{r,t}^{T_2} \big) = \Theta(m^{-1/3})$, which implies
         \begin{align*}
              &\frac{1}{\sqrt{m}} \Big( ( \langle \bw_{r,t}^{T_2}, \bmu_j + \overline{\bxi} \rangle - \sqrt{m} \| \bw_{r,t}^{T_2} \|^2) \langle \bw_{r,t}^{T_2}, \bw_{r,t}^0 \rangle + \| \bw_{r,t}^{T_2}\|^2  \langle \bw_{r,t}^0, \bmu_j + \overline{\bxi} \rangle \Big) \\
            &= \frac{1}{\sqrt{m}} \Theta\big( m^{-1/6} \langle \bw_{r,t}^{T_2}, \bw_{r,t}^0 \rangle + m^{-1/3} \langle \bw_{r,t}^0, \bmu_j + \overline{\bxi} \rangle\big) \\
            &= \Theta \big( m^{-2/3}\langle \bw_{r,t}^{T_2}, \bw_{r,t}^0 \rangle +  m^{-5/6} \langle \bw_{r,t}^0, \bmu_j +\overline{\bxi} \rangle \big) \\
             % &= \Theta( m^{-2/3} ) \big( \frac{1}{m} \langle \bw_{r,t}^{T_2}, \bw_{r,t}^0 \rangle  \big) + \Theta(m^{-5/6} \langle \bw_{r,t}^0, \bmu_j +\bxi_i \rangle) \\
             &= E_{r,t, w^0}^{T_2}
         \end{align*}
         based on the derivation in \eqref{eq:grad_w0}. In this case, we can show $\psi_{r,t}^k \leq  \langle \bw_{r,t}^k, \bxi_i \rangle^2 = \Theta(m^{-1/3})$ is feasible  given the lower bound on $\langle \bw_{r,t}^k, \bw_{r,t}^0 \rangle = \Omega(\sigma_0 \sqrt{d} m^{-1/6})$ in $\ggA(k)$.
         This verifies $\ggD(T_2), \ggE(T_2)$ in the case where $ \langle \bw_{r,t}^k, \bxi_i \rangle^2 \geq \Theta(\psi_{r,t}^k)$. 
        
         % \highlight{
         % In this case, we verify the condition that $\langle \bw_{r,t}^k, \bxi_i \rangle^2 = \Theta(m^{-1/3}) \geq \Theta(\psi_{r,i}^k)$. Based on \eqref{eq:iutjig}, we have $\langle \bw_{r,t}^k, \bw_{r,t}^0 \rangle \| \bw_{r,t}\|^{-1} \leq \Theta(\sigma_0^2 d + m^{-1/6}) = \Theta(m^{-1/6})$ given the condition on \highlight{$\sigma_0 \leq \widetilde O(m^{-1/12} d^{-1/2})$}. Thus we verify indeed $ \langle \bw_{r,t}^k, \bxi_i \rangle^2 \geq \Theta(\psi_{r,t}^k) $ is satisfied. }

         \item If $\langle \bw_{r,t}^k, \bxi_i \rangle^2 =  o(\psi_{r,t}^k)$,  we show $ \langle \nabla_{\bw_{r,t}} L(\bW_t^k), \bw_{r,t}^0  \rangle \neq 0$ and thus cannot converge.  More specifically,  we can identify the dominant terms as 
         \begin{align*}
              \frac{1}{\sqrt{m}} \| \bw_{r,t}^k \| \| \bmu\|^2 &= - \frac{1}{\sqrt{m}} \Theta\big(   \psi_{r,t}^k \big), \quad E_{r,t,\mu_j}^k  =  \Theta( (\psi_{r,t}^k)^2 \langle \bw_{r,t}^k, \bxi_i \rangle )  \\
             \frac{1}{n \sqrt{m}}   \| \bw_{r,t}^k \|^2  \| \bxi_i \|^2  &=  - \frac{1}{\sqrt{m}} \Theta\big(   \psi_{r,t}^k \big), \quad E_{r,t, \xi_i}^k = \Theta( (\psi_{r,t}^k)^2 \langle \bw_{r,t}^k, \bxi_i \rangle).
         \end{align*}
         Thus, we see $E_{r,t,\mu_j}^k = \Theta(\frac{1}{\sqrt{m}} \| \bw_{r,t}^k \| \| \bmu\|^2)$ and $E_{r,t, \xi_i}^k = \Theta(\frac{1}{n \sqrt{m}}   \| \bw_{r,t}^k \|^2  \| \bxi_i \|^2)$ only when  $\langle \bw_{r,t}^k, \bxi_i \rangle = \Theta(m^{-1/2} (\psi_{r,t}^k)^{-1}) = \langle \bw_{r,t}^k, \bmu_j \rangle$. This implies that $\psi_{r,t}^k \geq \Theta(m^{-1/3})$ and thus $\| \bw_{r,t}^k \|^2 = \Theta(\psi_{r,t}^k) \geq  \Theta(m^{-1/3})$. We show in this case, the gradient along direction $\bw_{r,t}^0$ is negative. 
         % Suppose $\psi_{r,t}^k = \Theta(m^{-1/3})$ and we show it can only decrease. 
         % First we notice that $\langle \bw_{r,t}^k, \bw_{r,t}^0 \rangle = \Theta(m^{-1/6} \sigma_0 \sqrt{d})$. 
         Examining $\langle \nabla_{\bw_{r,t}} L (\bW_t^k), \bw_{r,t}^0 \rangle$, we see 
         \begin{align*}
             &\frac{1}{\sqrt{m}} \Big( ( \langle \bw_{r,t}^{k}, \bmu_j + \overline{\bxi} \rangle - \sqrt{m} \| \bw_{r,t}^k \|^4) \langle \bw_{r,t}^{k}, \bw_{r,t}^0 \rangle + \| \bw_{r,t}^{k}\|^2  \langle \bw_{r,t}^0, \bmu_j +  \overline{\bxi} \rangle  \Big) \\
             &= \frac{1}{\sqrt{m}} \big( \Theta( m^{-1/2} (\psi_{r,t}^k)^{-1} - \sqrt{m} (\psi_{r,t}^k)^{2}) \langle \bw_{r,t}^k, \bw_{r,t}^0 \rangle + \Theta( \psi_{r,t}^k) \langle \bw_{r,t}^0, \bmu_j + \overline{\bxi} \rangle \big) \\
             &= \Theta\big( m^{-1} (\psi_{r,t}^k)^{-1} \langle \bw_{r,t}^k, \bw_{r,t}^0\rangle - (\psi_{r,t}^k)^{2}\langle \bw_{r,t}^k, \bw_{r,t}^0 \rangle + m^{-1/2}\psi_{r,t}^k \langle \bw_{r,t}^0, \bmu_j + \overline{\bxi} \rangle\big) \\
             % &= o( m^{-2/3} \langle \bw_{r,t}^k, \bw_{r,t}^0  \rangle) + \widetilde O(m^{-5/6} \sigma_0 (\| \bmu\| + \sigma_\xi \sqrt{d}) ) \\
             % &= o(\langle \bw_{r,t}^k, \bw_{r,t}^0  \rangle) + \widetilde O( \sigma_0 (\| \bmu\| + \sigma_\xi \sqrt{d}) ) \\
             &E^k_{r,t, w^0} = \Theta \big( (m^{-3}(\psi_{r,t}^k)^{-4} + m^{-1}  (\psi_{r,t}^k)^{-1} + (\psi_{r,t}^k)^{2} ) \langle \bw_{r,t}^k, \bw_{r,t}^0 \rangle \big) \\
             &\qquad \qquad + \Theta\big(( m^{-3/2} (\psi_{r,t}^k)^{-2} + m^{-1/2} \psi_{r,t}^k ) \langle \bw_{r,t}^0, \bmu_j + \overline{\bxi} \rangle \big) \\
             &\qquad \quad  = \Theta( (\psi_{r,t}^k)^{2}   \langle \bw_{r,t}^k, \bw_{r,t}^0  \rangle +  m^{-1/2}\psi_{r,t}^k \langle \bw_{r,t}^0, \bmu_j + \overline{\bxi} \rangle)
         \end{align*}
         where we use that $\psi_{r,t}^k \geq \Theta(m^{-1/3})$. 
         % \begin{align*}
         %     &\frac{1}{\sqrt{m}} \Big( \langle \bw_{r,t}^{k}, \bmu_j + \bxi_i \rangle \langle \bw_{r,t}^{k}, \bw_{r,t}^0 \rangle + \| \bw_{r,t}^{k}\|^2  \langle \bw_{r,t}^0, \bmu_j + \bxi_i \rangle  \Big) \\
         %     &= \frac{1}{\sqrt{m}} \big( o(m^{-1/6}) \langle \bw_{r,t}^k, \bw_{r,t}^0 \rangle + \Theta(m^{-1/3}) \langle \bw_{r,t}^0, \bmu_j + \bxi_i \rangle \big) \\
         %     &= o( m^{-2/3} \langle \bw_{r,t}^k, \bw_{r,t}^0  \rangle) + \widetilde O(m^{-5/6} \sigma_0 (\| \bmu\| + \sigma_\xi \sqrt{d}) ) \\
         %     % &= o(\langle \bw_{r,t}^k, \bw_{r,t}^0  \rangle) + \widetilde O( \sigma_0 (\| \bmu\| + \sigma_\xi \sqrt{d}) ) \\
         %     &E^k_{r,t, w^0} = \Theta( m^{-2/3} \langle \bw_{r,t}^k, \bw_{r,t}^0 \rangle) = \Theta(m^{-5/6} \sigma_0 \sqrt{d})
         % \end{align*}
         It is clear that that due to the condition $\psi_{r,t}^k \geq \Theta(m^{-1/3})$, it satisfies $\langle \nabla_{\bw_{r,t}} L(\bW_t^k), \bw_{r,t}^0 \rangle < 0$, which then concludes that $\psi_{r,t}^k$ would decrease and cannot reach stationary point. 
         % However, this raises a contradiction to the condition $\langle \bw_{r,t}^k, \bxi_i \rangle^2 \leq  \Theta(\psi_{r,t}^k)$ because in this case $\psi_{r,t}^k \geq \Theta(m^{-1/4})$ and by \eqref{eq:iutjig}, we must have $\langle \bw_{r,t}^k, \bxi_i \rangle = \Theta(\psi_{r,t}^k)$, which leads to a contradiction.       
    \end{itemize}

    % Finally we verify that before $T_2$, $\langle \bw_{r,t}^{k}, \bmu_j \rangle, \langle \bw_{r,t}^k, \bxi_i \rangle = O(m^{-1/6})$ and thus $E_{r,t, w^0}^k \leq \Theta( \langle \bw_{r,t}^k, \bmu_j + {\bxi}_i \rangle \langle \bw_{r,t}^k, \bw_{r,t}^0 \rangle + \| \bw_{r,t}^k\|^2 \langle \bw_{r,t}^0, \bmu_j + {\bxi}_i  \rangle  )$ for the update of $\langle \bw_{r,t}^k, \bw_{r,t}^0 \rangle$. It remains to verify at $T_2$, we have $E_{r,t, w^0}^{T_2} = \Theta( \langle \bw_{r,t}^{T_2}, \bmu_j + {\bxi}_i \rangle \langle \bw_{r,t}^{T_2}, \bw_{r,t}^0 \rangle + \| \bw_{r,t}^{T_2}\|^2 \langle \bw_{r,t}^0, \bmu_j + {\bxi}_i  \rangle  )$. To this end, we deduce from \eqref{eq:grad_w0} that 
    % \begin{align*}
    %     &\frac{1}{\sqrt{m}} \Big( \langle \bw_{r,t}^{T_2}, \bmu_j + \bxi_i \rangle \langle \bw_{r,t}^{T_2}, \bw_{r,t}^0 \rangle + \| \bw_{r,t}^{T_2}\|^2  \langle \bw_{r,t}^0, \bmu_j + \bxi_i \rangle \Big) \\
    %     &= \frac{1}{\sqrt{m}} \Theta\big( m^{-1/6} \langle \bw_{r,t}^{T_2}, \bw_{r,t}^0 \rangle + m^{-1/3} \langle \bw_{r,t}^0, \bmu_j + \bxi_i \rangle\big) \\
    %     E_{r,t, w^0}^{T_2} &= \Theta\Big( m^{-2/3} \langle \bw_{r,t}^k, \bw_{r,t}^0 \rangle + m^{-5/6} \langle \bw_{r,t}^0, \bmu_j +\bxi_i \rangle  \Big)
    % \end{align*}
    % which implies $E_{r,t, w^0}^{T_2} = \Theta( \langle \bw_{r,t}^{T_2}, \bmu_j + {\bxi}_i \rangle \langle \bw_{r,t}^{T_2}, \bw_{r,t}^0 \rangle + \| \bw_{r,t}^{T_2}\|^2 \langle \bw_{r,t}^0, \bmu_j + {\bxi}_i  \rangle  )$. This verifies $\ggD(T_2), \ggE(T_2)$.

    \item When $\chi = n \cdot \SNR^2 = \widetilde \Omega(1)$,  
    \begin{itemize}
        \item If $\chi^2 \langle \bw_{r,t}^{k}, \bxi_i \rangle^2 \geq \Theta(\psi_{r,t}^k)$, we can simplify \eqref{inner_mu_up} and \eqref{inner_xi_up} to 
        \begin{align*}
            \frac{1}{\sqrt{m}} \| \bw_{r,t}^k \| \| \bmu\|^2 &= - \frac{1}{\sqrt{m}} \Theta\big( \chi^2  \langle \bw_{r,t}^k, \bxi_i \rangle^2 \big), \quad  E_{r,t,\mu_j}^k  =   \Theta( \chi^5 \langle \bw_{r,t}^k, \bxi_i \rangle^5 )    \\
           \frac{1}{n \sqrt{m}}   \| \bw_{r,t}^k \|^2  \| \bxi_i \|^2  &= - \frac{1}{\sqrt{m}} \Theta\big( \chi \langle \bw_{r,t}^k, \bxi_i \rangle^2 \big), \quad  E_{r,t,\xi_i}^k  = \Theta ( \chi^4 \langle \bw_{r,t}^k, \bxi_i \rangle^5 ) 
        \end{align*}
        Hence, we can see at $k = T_2$, when $\langle\bw_{r,t}^{T_2}, \bxi_i  \rangle = \Theta(\chi^{-1} m^{-1/6})$ and thus $\langle \bw_{r,t}^{T_2}, \bmu_j \rangle = \Theta(  m^{-1/6})$, we have $E_{r,t,\mu_j}^{T_2} = \Theta(\frac{1}{\sqrt{m}} \| \bw_{r,t}^{T_2} \| \| \bmu\|^2)$ and $E_{r,t, \xi_i}^{T_2} = \Theta(\frac{1}{n \sqrt{m}}   \| \bw_{r,t}^{T_2} \|^2  \| \bxi_i \|^2)$. 
        % In this case, we require $\langle \bw_{r,t}^{T_2},  \bw_{r,t}^0 \rangle \| \bw_{r,t}^0 \|^{-1} \leq \Theta(m^{-1/6})$, which requires \highlight{$\sigma_0 \leq \widetilde O(   m^{-1/6} d^{-1/2})$.} This ensures the $\chi^2 \langle \bw_{r,t}^{k}, \bxi_i \rangle^2 \geq \Theta(\psi_{r,t}^k)$ can be satisfied. 

        Next we show  $E^{T_2}_{r,t, w^0} =\frac{1}{\sqrt{m}} \Theta\big( ( \langle \bw_{r,t}^{T_2}, \bmu_j + \overline{\bxi} \rangle  - \sqrt{m } \| \bw_{r,t}^{T_2}\|^4) \langle \bw_{r,t}^{T_2}, \bw_{r,t}^0 \rangle + \| \bw_{r,t}^{T_2}\|^2  \langle \bw_{r,t}^0, \bmu_j + \overline{\bxi} \rangle \big)$.
        We first compute that $\| \bw_{r,t}^{T_2}\|^2 = \Theta\big( \chi^2 \langle \bw_{r,t}^{T_2}, \bxi_i \rangle^2 + \psi_{r,t}^{T_2} \big) = \Theta(m^{-1/3})$, which implies
         \begin{align*}
              &\frac{1}{\sqrt{m}} \Big( ( \langle \bw_{r,t}^{T_2}, \bmu_j + \overline{\bxi} \rangle - \sqrt{m} \| \bw_{r,t}^{T_2} \|^2) \langle \bw_{r,t}^{T_2}, \bw_{r,t}^0 \rangle + \| \bw_{r,t}^{T_2}\|^2  \langle \bw_{r,t}^0, \bmu_j + \overline{\bxi} \rangle \Big) \\
            &= \frac{1}{\sqrt{m}} \Theta\big( m^{-1/6} \langle \bw_{r,t}^{T_2}, \bw_{r,t}^0 \rangle + m^{-1/3} \langle \bw_{r,t}^0, \bmu_j + \overline{\bxi} \rangle\big) \\
            &= \Theta \big( m^{-2/3}\langle \bw_{r,t}^{T_2}, \bw_{r,t}^0 \rangle +  m^{-5/6} \langle \bw_{r,t}^0, \bmu_j + \overline{\bxi} \rangle \big) \\
            &= E_{r,t, w^0}^{T_2} 
         \end{align*}
         based on the derivation in \eqref{eq:grad_w0}. In this case, we can show $\psi_{r,t}^k \leq \chi^2\langle \bw_{r,t}^k, \bxi_i \rangle^2 = \Theta(m^{-1/3})$ is feasible  given the lower bound on $\langle \bw_{r,t}^k, \bw_{r,t}^0 \rangle = \Omega(\sigma_0 \sqrt{d} m^{-1/6})$ in $\ggA(k)$. This verifies $\ggD(T_2), \ggE(T_2)$ in the case where $ \chi^2 \langle \bw_{r,t}^k, \bxi_i \rangle^2 \geq \Theta(\psi_{r,t}^k)$.

        % In this case, based on \eqref{eq:iutjig}, we have $\langle \bw_{r,t}^k, \bw_{r,t}^0 \rangle \| \bw_{r,t}^0 \|^{-1} \leq \Theta(\sigma_0^2 d + \chi^{-1}m^{-1/6}) = \Theta(\chi^{-1}m^{-1/6})$ given the condition on . Thus we verify indeed $\chi^2  \langle \bw_{r,t}^k, \bxi_i \rangle^2 \geq \Theta(\psi_{r,t}^k) $ is satisfied.

        \item If $\chi^2 \langle \bw_{r,t}^{k}, \bxi_i \rangle^2 = o(\psi_{r,t}^k)$, we can follow a similar argument to show that $\langle \nabla_{\bw_{r,t}} L(\bW_t^k), \bw_{r,t}^0 \rangle \neq 0$. Specifically, we can simplify \eqref{inner_mu_up} and \eqref{inner_xi_up} to 
        \begin{align*}
            \frac{1}{\sqrt{m}} \| \bw_{r,t}^k \| \| \bmu\|^2 &= - \frac{1}{\sqrt{m}} \Theta\big( \psi_{r,t}^k \big) , \quad  E_{r,t,\mu_j}^k  =   \Theta( (\psi_{r,t}^k)^2 \chi \langle \bw_{r,t}^{k}, \bxi_i \rangle )    \\
           \frac{1}{n \sqrt{m}}   \| \bw_{r,t}^k \|^2  \| \bxi_i \|^2  &= - \frac{\eta}{\sqrt{m}} \Theta \big( \chi^{-1} \psi_{r,t}^k \big), \quad  E_{r,t,\xi_i}^k = \Theta ( (\psi_{r,t}^k)^2 \langle \bw_{r,t}^k, \bxi_i \rangle ). 
        \end{align*}
        Thus, we see $E_{r,t,\mu_j}^k = \Theta(\frac{1}{\sqrt{m}} \| \bw_{r,t}^k \| \| \bmu\|^2)$ and $E_{r,t, \xi_i}^k = \Theta(\frac{1}{n \sqrt{m}}   \| \bw_{r,t}^k \|^2  \| \bxi_i \|^2)$ only when $\chi \langle \bw_{r,t}^k, \bxi_i \rangle = \Theta( (\psi_{r,t}^k)^{-1} m^{-1/2})$, which implies $\psi_{r,t}^k \geq \Theta(m^{-1/3})$. However, by a similar argument as the case $n \cdot \SNR^2 = \Theta(1)$, we can show $\langle \nabla_{\bw_{r,t}} L(\bW_t^k), \bw_{r,t}^0 \rangle < 0$, which then concludes that $\psi_{r,t}^k$ would decrease and cannot reach stationary point. 
    \end{itemize}

    \item When $\chi^{-1} = n^{-1} \SNR^{-2} =  \widetilde \Omega(1)$,  
    \begin{itemize}
        \item If $ \langle  \bw_{r,t}^k, \bxi_i  \rangle^2 \geq \Theta(\chi^{-1} \psi_{r,t}^k)$, we can simplify \eqref{inner_mu_up} and \eqref{inner_xi_up} into 
        \begin{align*}
            \frac{1}{\sqrt{m}} \| \bw_{r,t}^k \| \| \bmu\|^2 &= - \frac{1}{\sqrt{m}} \Theta\big( \chi  \langle \bw_{r,t}^k, \bxi_i \rangle^2 \big),  \quad  E_{r,t,\mu_j}^k  =  \Theta( \chi \langle \bw_{r,t}^k, \bxi_i \rangle^5 )   \\
            \frac{1}{n \sqrt{m}}   \| \bw_{r,t}^k \|^2  \| \bxi_i \|^2  &=  - \frac{1}{\sqrt{m}} \Theta\big(   \langle \bw_{r,t}^k, \bxi_i \rangle^2 \big), \quad E_{r,t,\xi_i}^{k} = \Theta ( \langle \bw_{r,t}^k, \bxi_i \rangle^5 )  
        \end{align*}
         Hence, we can see at $k = T_2$, when  $\langle \bw_{r,t}^{T_2}, \bxi_i \rangle = \Theta( m^{-1/6})$ and $\langle \bw_{r,t}^{T_2}, \bmu_j \rangle = \Theta(\chi m^{-1/6})$, we have $E_{r,t,\mu_j}^{T_2} = \Theta(\frac{1}{\sqrt{m}} \| \bw_{r,t}^{T_2} \| \| \bmu\|^2)$ and $E_{r,t, \xi_i}^{T_2} = \Theta(\frac{1}{n \sqrt{m}}   \| \bw_{r,t}^{T_2} \|^2  \| \bxi_i \|^2)$. In this case, we can show $\psi_{r,t}^k \leq  \chi  \langle\bw_{r,t}^k, \bxi_i \rangle^2$ is feasible  given the lower bound on $\langle \bw_{r,t}^k, \bw_{r,t}^0 \rangle = \Omega(\sigma_0 \sigma_\xi^{-1} n^{1/2} m^{-1/6})$ in $\ggA(k)$ when $\chi^{-1} = \widetilde \Omega(1)$.
         % In this case, we require to bound $\psi_{r,t}^{T_2} \leq \Theta(\chi m^{-1/3})$, which can be satisfied due to the condition \highlight{$\sigma_0 \leq \widetilde O(n^{-1/2} m^{-1/6}  d^{-1/2})$ given that $\chi = \widetilde \Omega(n^{-1})$}. 

         Next we can check that at $T_2$, $\max\{ \langle \bw_{r,t}^{T_2}, \bmu_j \rangle, \langle \bw_{r,t}^{T_1}, \bxi_i \rangle \} = \langle \bw_{r,t}^{T_2}, \bxi_i \rangle = \Theta( m^{-1/6})$. Further,
    \begin{align*}
        \| \bw_{r,t}^{T_2} \|^2 &= \Theta\big( \langle \bw_{r,t}^{T_2}, \bmu_j \rangle^2 + \langle \bw_{r,t}^{T_2}, \bmu_j \rangle \langle \bw_{r,t}^{T_2}, \bxi_i \rangle + \langle \bw_{r,t}^{T_2}, \bw_{r,t}^0 \rangle^2 \| \bw_{r,t}^0 \|^{-2} \big) = \Theta(\chi m^{-1/3})
    \end{align*}
    where the second equality is by $\langle \bw_{r,t}^{T_2}, \bw_{r,t}^0 \rangle^2 \| \bw_{r,t}^0 \|^{-2} = \psi_{r,t}^{T_2} \leq \Theta(\chi m^{-1/3})$. 
    This leads to 
    \begin{align*}
        &\frac{1}{\sqrt{m}} \Big( ( \langle \bw_{r,t}^{T_2}, \bmu_j + \overline{\bxi} \rangle - \sqrt{m} \| \bw_{r,t}^{T_2}\|^4) \langle \bw_{r,t}^{T_2}, \bw_{r,t}^0 \rangle + \| \bw_{r,t}^{T_2}\|^2  \langle \bw_{r,t}^0, \bmu_j + \overline{\bxi} \rangle \Big) \\
        &=   \Theta\big( m^{-2/3} \langle \bw_{r,t}^{T_2}, \bw_{r,t}^0 \rangle + \chi m^{-5/6} \langle \bw_{r,t}^0, \bmu_j + \overline{\bxi} \rangle\big) \\
        E_{r,t, w^0}^{T_2} &= \Theta\Big( (m^{-1} + \chi) m^{-2/3} \langle \bw_{r,t}^{T_2}, \bw_{r,t}^0 \rangle + \chi  m^{-5/6} \langle \bw_{r,t}^0, \bmu_j +\overline{\bxi} \rangle  \Big)
    \end{align*}
    where $E_{r,t, w^0}^{T_2} = \frac{1}{\sqrt{m}} \Big( ( \langle \bw_{r,t}^{T_2}, \bmu_j + \overline{\bxi} \rangle - \sqrt{m} \| \bw_{r,t}^{T_2}\|^4) \langle \bw_{r,t}^{T_2}, \bw_{r,t}^0 \rangle + \| \bw_{r,t}^{T_2}\|^2  \langle \bw_{r,t}^0, \bmu_j + \overline{\bxi} \rangle \Big)$ holds due to {$m = \Theta(1)$}. This verifies $\ggD(T_2), \ggE(T_2)$.
        %  \highlight{
        %  In this case,  we can upper bound $\psi_{r,t}^k = \langle \bw_{r,t}^k, \bw_{r,t}^0 \rangle^2 \| \bw_{r,t}^0 \|^{-2}$ as follows. We first bound the update for $\langle \bw_{r,t}^k, \bw_{r,t}^0 \rangle \| \bw_{r,t}^0 \|^{-1}$ as
        %  \begin{align*}
        %     \frac{\eta}{\sqrt{m}}\Theta(\langle \bw_{r,t}^k, \bmu_j + \overline \bxi \rangle \langle \bw_{r,t}^{k}, \bw_{r,t}^0 \rangle + \| \bw_{r,t}^k \|^2 \langle \bw_{r,t}^0, \bmu_j + \overline\bxi \rangle) \| \bw_{r,t}^0 \|^{-1} &\leq \frac{\eta}{\sqrt{m}} \Theta\big( \| \bw_{r,t}^k \|^2  \sigma_\xi \sqrt{d} \big) \\
        %     &\geq \frac{\eta}{n \sqrt{m}} \Theta \Big( \| \bw_{r,t}^k \|^2 \| \bxi_i \|^2 \sqrt{\chi} \Big)
        % \end{align*}
        % where we use that $n^{-1}\SNR^{-2} = \widetilde \Omega(1)$. We recall $\Theta(\frac{\eta}{n \sqrt{m}} )$ is the update magnitude of $\langle \bw_{r,t}^k, \bxi_i \rangle$. }

        \item If $ \langle  \bw_{r,t}^k, \bxi_i  \rangle^2 < \Theta(\chi^{-1} \psi_{r,t}^k)$, we can simplify \eqref{inner_mu_up} and \eqref{inner_xi_up} into 
        \begin{align*}
             \frac{1}{\sqrt{m}} \| \bw_{r,t}^k \| \| \bmu\|^2 &= - \frac{1}{\sqrt{m}} \Theta\big( \psi_{r,t}^k \big), \quad  E_{r,t,\mu_j}^k  = \Theta( (\psi_{r,t}^k)^2 \langle \bw_{r,t}^{k}, \bxi_i \rangle \chi + \psi_{r,t}^k \langle\bw_{r,t}^{k}, \bxi_i  \rangle^3 \chi)  \\
             \frac{1}{n \sqrt{m}}   \| \bw_{r,t}^k \|^2  \| \bxi_i \|^2  &= - \frac{1}{\sqrt{m}} \Theta\big( \chi^{-1} \psi_{r,t}^k \big), \quad E_{r,t,\xi_i}^{k} =    \Theta ( \chi^{-1} \psi_{r,t}^k \langle \bw_{r,t}^k, \bxi_{i}  \rangle^3 + \chi^{-1} (\psi_{r,t}^k)^2 \langle \bw_{r,t}^k, \bxi_{i}  \rangle ) 
        \end{align*}
        \begin{itemize}
            \item If $\langle  \bw_{r,t}^k, \bxi_i  \rangle^2 \geq \Theta(\psi_{r,t}^k)$, the equalities become 
            \begin{align*}
                \frac{1}{\sqrt{m}} \| \bw_{r,t}^k \| \| \bmu\|^2 &= - \frac{1}{\sqrt{m}} \Theta\big( \psi_{r,t}^k \big), \quad  E_{r,t,\mu_j}^k  = \Theta(  \psi_{r,t}^k \langle\bw_{r,t}^{k}, \bxi_i  \rangle^3 \chi)  \\
                \frac{1}{n \sqrt{m}}   \| \bw_{r,t}^k \|^2  \| \bxi_i \|^2  &= - \frac{1}{\sqrt{m}} \Theta\big( \chi^{-1} \psi_{r,t}^k  \big), \quad  E_{r,t,\xi_i}^{k}= \Theta ( \chi^{-1} \psi_{r,t}^k \langle \bw_{r,t}^k, \bxi_{i}  \rangle^3), 
            \end{align*}
            and $E_{r,t,\mu_j}^k = \Theta(\frac{1}{\sqrt{m}} \| \bw_{r,t}^k \| \| \bmu\|^2)$ when $\langle\bw_{r,t}^{k}, \bxi_i \rangle =\Theta( m^{-1/6} \chi^{-1/3} )$ and $E_{r,t,\xi_i}^{k} = \Theta(\frac{1}{n \sqrt{m}}   \| \bw_{r,t}^k \|^2  \| \bxi_i \|^2)$ when $\langle\bw_{r,t}^{k}, \bxi_i \rangle = \Theta(m^{-1/6})$. Due to the scale that $\chi^{-1} = \widetilde \Omega(1)$, two equalities cannot hold at the same time. 

            \item If $\langle  \bw_{r,t}^k, \bxi_i  \rangle^2 < \Theta(\psi_{r,t}^k)$, the equalities become 
            \begin{align*}
                \frac{1}{\sqrt{m}} \| \bw_{r,t}^k \| \| \bmu\|^2 &= - \frac{1}{\sqrt{m}} \Theta\big( \psi_{r,t}^k \big), \quad E_{r,t,\mu_j}^k  =   \Theta(  (\psi_{r,t}^k)^2 \langle \bw_{r,t}^{k}, \bxi_i \rangle \chi )  \\
                \frac{1}{n \sqrt{m}}   \| \bw_{r,t}^k \|^2  \| \bxi_i \|^2  &= - \frac{1}{\sqrt{m}} \Theta\big( \chi^{-1} \psi_{r,t}^k \big), \quad  E_{r,t,\xi_i}^{k}= \Theta ( \chi^{-1} (\psi_{r,t}^k)^2 \langle \bw_{r,t}^k, \bxi_{i}  \rangle).
            \end{align*}
            Thus $E_{r,t,\mu_j}^k = \Theta(\frac{1}{\sqrt{m}} \| \bw_{r,t}^k \| \| \bmu\|^2)$ when $\langle\bw_{r,t}^{k}, \bxi_i \rangle =\Theta(\chi^{-1}  (\psi_{r,t}^k)^{-1} )$ and $E_{r,t,\xi_i}^{k} = \Theta(\frac{1}{n \sqrt{m}}   \| \bw_{r,t}^k \|^2  \| \bxi_i \|^2 )$ when $\langle\bw_{r,t}^{k}, \bxi_i \rangle = \Theta( (\psi_{r,t}^k)^{-1})$, which clearly cannot be satisfied at the same time given $\chi^{-1} = \widetilde \Omega(1)$.
        \end{itemize}
        To conclude, we verify that when $ \langle  \bw_{r,t}^k, \bxi_i  \rangle^2 \leq \Theta(\chi^{-1} \psi_{r,t}^{k})$, $E_{r,t,\mu_j}^k = \Theta(\frac{1}{\sqrt{m}} \| \bw_{r,t}^k \| \| \bmu\|^2)$ or $E_{r,t,\xi_i}^{k} = \Theta(\frac{1}{n \sqrt{m}}   \| \bw_{r,t}^k \|^2  \| \bxi_i \|^2 )$ cannot be satisfied. 
    \end{itemize}
\end{itemize}
In summary, we obtain the following results:
\begin{itemize}
    \item When $n \cdot \SNR^2 = \Theta(1)$, we can show  $E_{r,t,\mu_j}^k = \Theta(\frac{1}{\sqrt{m}} \| \bw_{r,t}^k \| \| \bmu\|^2)$ and $E_{r,t,\xi_i}^{k} = \Theta(\frac{1}{n \sqrt{m}}   \| \bw_{r,t}^k \|^2  \| \bxi_i \|^2 )$ when $\langle\bw_{r,t}^k, \bxi_i  \rangle = \Theta(m^{-1/6})$.

    \item  When $n \cdot \SNR^2 = \widetilde \Omega(1)$, we can show $E_{r,t,\mu_j}^k = \Theta(\frac{1}{\sqrt{m}} \| \bw_{r,t}^k \| \| \bmu\|^2)$ and  $E_{r,t,\xi_i}^{k} = \Theta(\frac{1}{n \sqrt{m}}   \| \bw_{r,t}^k \|^2  \| \bxi_i \|^2 )$ when $\langle\bw_{r,t}^k, \bxi_i  \rangle = \Theta(\chi^{-1}m^{-1/6})$.

    \item When $n^{-1} \cdot \SNR^{-2} = \widetilde \Omega(1)$, we can show $E_{r,t,\mu_j}^k = \Theta(\frac{1}{\sqrt{m}} \| \bw_{r,t}^k \| \| \bmu\|^2)$ and  $E_{r,t,\xi_i}^{k} = \Theta(\frac{1}{n \sqrt{m}}   \| \bw_{r,t}^k \|^2  \| \bxi_i \|^2 )$ when  $\langle\bw_{r,t}^k, \bxi_i  \rangle = \Theta( m^{-1/6})$.
\end{itemize}
Combining the definition of $T_2$, we complete the proof for Claim \ref{claim:2}. 

\paragraph{Proof of Claim \ref{claim:3}.}
 By the definition of $T_2$ and Lemma \ref{lemma:diff_pre_stage}, we know that for all $T_1 \leq k \leq T_2$, the gradients can be written as 
\begin{align*}
    \langle \nabla_{\bw_{r,t}} L(\bW_t^k),  \bmu_j \rangle &= \Theta\Big( - \frac{1}{\sqrt{m}} \| \bw_{r,t}^k \|^2  \| \bmu \|^2 \Big) \\
    \langle \nabla_{\bw_{r,t}} L(\bW_t^k),  \bxi_i \rangle &= \Theta\Big( - \frac{1}{n \sqrt{m}}  \| \bw_{r,t}^k \|^2  \| \bxi_i \|^2\Big) 
\end{align*}
and thus \eqref{inner_grow1}, \eqref{inner_grow2} are satisfied, which verifies $\ggB(k)$. In addition, this suggests, for all $T_1 \leq k \le T_2$, the increase in $\langle\bw_{r,t}^k, \bxi_i  \rangle, \langle\bw_{r,t}^k, \bmu_j  \rangle$ is monotonic. Combining with $\ggD(T_2)$, we have $\langle \bw_{r,t}^k, \bmu_j\rangle, \langle  \bw_{r,t}^k, \bxi_i \rangle   = O(m^{-1/6}) = \widetilde O(1)$ for all $T_1 \leq k \leq T_2$, thus verifying $\ggA(k)$. 

Hence, the proof completes the induction on $k$ and verify the claims $\ggA(k), \ggB(k), \ggC(k), \ggD(T_2), \ggE(T_2)$, $T_1\leq k \leq T_2$. 

% This completes the proof that for all SNR conditions, the \eqref{inner_grow1}, \eqref{inner_grow2} are violated when either  $\langle \bw_{r,t}^k, \bxi_i \rangle$ or $\langle \bw_{r,t}^k, \bmu_j  \rangle$ reaches $\Theta(m^{-1/6})$. This ensures the bound $\langle \bw_{r,t}^k, \bxi_i \rangle, \langle \bw_{r,t}^k, \bmu_j  \rangle = \widetilde O(1)$ is valid throughout the period. 

Finally, we derive an upper bound on $T_2$. Because for all $T_1 \leq k \leq T_2$, we can decompose $\| \bw^k_{r,t}\|^2$ from Lemma \ref{lemma:weight_decomposition} as 
\begin{align*}
     \| \bw^k_{r,t}\|^2 = \Theta \big( \langle \bw_{r,t}^k , \bmu_j \rangle^2 \| \bmu \|^{-2} + \chi \langle \bw_{r,t}^k, \bxi_i \rangle^2 \|\bmu\|^{-2} + \| \bw_{r,t}^0\|^2 \big) 
     % &\leq \Theta \big( \langle \bw_{r,t}^{T_2} , \bmu_j \rangle^2 \| \bmu \|^{-2} + \chi \langle \bw_{r,t}^{T_2}, \bxi_i \rangle^2 \|\bmu\|^{-2} + \sigma_0^2 d\big) \\
     &\geq \Theta( \sigma_0^2 d)
\end{align*}
% where the last inequality is by the scale of $\langle \bw_{r,t}^{T_2}, \bmu_j \rangle$, $\langle \bw_{r,t}^{T_2}, \bxi_i \rangle$ and the condition on $\sigma_0$. 
Therefore, we can upper bound the update in \eqref{inner_grow1}, \eqref{inner_grow2} for $T_1 \leq k \leq T_2$  by a liner growth as 
\begin{align*}
     &\langle \bw_{r,t}^{k+1}, \bmu_j \rangle = \langle \bw_{r,t}^{k}, \bmu_j  \rangle + \Theta\Big( \frac{\eta}{\sqrt{m}} \| \bw_{r,t}^k \|^2 \| \bmu\|^2 \Big)  \geq \langle \bw_{r,t}^{k}, \bmu_j  \rangle + \Theta \Big( \eta m^{-1/2} \sigma_0^2 d   \Big)  \\
    &\langle \bw_{r,t}^{k+1}, \bxi_i \rangle = \langle \bw_{r,t}^{k}, \bxi_i  \rangle + \Theta\Big( \frac{\eta}{n\sqrt{m}} \| \bw_{r,t}^k \|^2 \| \bxi_i \|^2 \Big) \geq \langle \bw_{r,t}^{k}, \bxi_i  \rangle + \Theta\Big( \eta \chi^{-1} m^{-1/2} \sigma_0^2 d \Big). 
\end{align*}
Therefore, we can upper bound as $T_2 \leq \Theta(\max\{ \eta^{-1} m^{1/3} \sigma_0^{-2} d^{-1}, \eta^{-1} m^{1/3} n \sigma_0^{-2} \sigma_\xi^2   \})$. 
\end{proof}

\subsection{Stationary point}

This section analyzes the stationary point with the conditions at the end of the second stage. 
}

{
\begin{theorem}
\label{thm:second_stage_station}
Under Condition \ref{assump:diffusion}, suppose (1) $\langle \bw_{r,t}^{*}, \bmu_{j} \rangle = \Theta(\langle \bw_{r',t}^{*} , \bmu_{j'} \rangle) = \widetilde O(1)$, (2) $\langle \bw_{r,t}^{*}, \bxi_i \rangle = \Theta( \langle \bw_{r',t}^{*}, \bxi_{i'} \rangle ) = \widetilde O(1)$, (3) $\| \bw_{r,t}^{*}\|^2 = \Theta( \| \bw_{r',t}^* \|^2)$ and (4) $\langle  \bw_{r,t}^{*},  \bw_{r',t}^{*} \rangle = \Theta( \|  \bw_{r,t}^{*} \|^2  )$ (5) $\langle \bw_{r,t}^{*}, \bw_{r,t}^0 \rangle = \Theta(\langle \bw_{r',t}^{*}, \bw_{r',t}^0 \rangle) = \Omega( \min\{ \sigma_0 \sigma_\xi^{-1} n^{1/2} m^{-1/6}, \sigma_0 \sqrt{d} m^{-1/6} \})$ hold for all $j = \pm 1, r \in [m]$, $i \in [m]$. Then there exists a stationary point $\bW^*_t$, i.e., $\nabla_{\bw_{r,t}}  L(\bW_t^*) =  0$ that satisfies
\begin{align*}
    |\langle  \bw_{r,t}^*, \bmu_j \rangle|/|\langle \bw_{r,t}^*, \bxi_i \rangle| = \Theta( n \cdot \SNR^2 ),
\end{align*}
with $\langle \bw_{r,t}^{*}, \bxi_i \rangle = \Theta(n^{-1} \cdot \SNR^{-2}\cdot  m^{-1/6})$, $\langle \bw_{r,t}^*, \bw_{r,t}^0 \rangle \| \bw_{r,t}^0 \|^{-1} \leq \Theta(\langle \bw_{r,t}^*, \bmu_j \rangle)$ if $n \cdot \SNR^2 = \Omega(1)$,  and $\langle \bw_{r,t}^{*}, \bxi_i \rangle = \Theta(m^{-1/6}), \langle \bw_{r,t}^*, \bw_{r,t}^0 \rangle \| \bw_{r,t}^0 \|^{-1} \leq \Theta(\sqrt{n \cdot \SNR^2} \langle \bw_{r,t}^*, \bxi_i \rangle )$ if $n^{-1} \cdot \SNR^{-2} = \Omega(1)$.
\end{theorem}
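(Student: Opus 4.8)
The plan is to reduce the vector stationarity condition $\nabla_{\bw_{r,t}} L(\bW_t^*)=0$ to a pair of scalar polynomial equations in the two alignment quantities and then solve that system up to $\Theta$ in each regime of $n\cdot\SNR^2$. Since the hypotheses (1)--(4) are exactly the concentration conditions required by Lemma \ref{lemma:weight_decomposition}, I would first apply that lemma to write $\|\bw_{r,t}^*\|^2 = \Theta\big(\langle\bw_{r,t}^*,\bmu_j\rangle^2\|\bmu\|^{-2} + n\cdot\SNR^2\,\langle\bw_{r,t}^*,\bxi_i\rangle^2\|\bmu\|^{-2} + \sigma_0^2 d\big)$ and the analogous expression for $\langle\bw_{r,t}^*,\bw_{r',t}^*\rangle$, so that every term in the gradient formula of Lemma \ref{lemma:ddpm_grad} is controlled by the two scalars $a:=|\langle\bw_{r,t}^*,\bmu_j\rangle|$ and $b:=|\langle\bw_{r,t}^*,\bxi_i\rangle|$, together with $\|\bmu\|^2=\Theta(1)$ and $\|\bxi_i\|^2=\Theta(\sigma_\xi^2 d)=\Theta(\SNR^{-2})$. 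Projecting the stationarity condition onto $\bmu_j$ and onto $\bxi_i$ and substituting then yields two polynomial equations in $(a,b)$ --- schematically $\tfrac{1}{\sqrt m}\Theta(\|\bw_{r,t}^*\|^2\|\bmu\|^2)=E_{r,t,\mu_j}^*$ and $\tfrac{1}{n\sqrt m}\Theta(\|\bw_{r,t}^*\|^2\|\bxi_i\|^2)=E_{r,t,\xi_i}^*$, where the residuals $E^*$ are precisely the degree-five polynomials in $a,b$ already written out in the proof of Lemma \ref{lem:first_stage_linear}.

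The second step is to solve this system. Exactly as in the proof of Claim~2 of Lemma \ref{lem:first_stage_linear}, I would split into the cases $n\cdot\SNR^2=\Theta(1)$, $n\cdot\SNR^2=\widetilde\Omega(1)$, and $n^{-1}\cdot\SNR^{-2}=\widetilde\Omega(1)$, and within each case further distinguish the sub-case where $b$ has escaped the initialization scale ($b^2\gtrsim\sigma_0^2 d$) from the one where it has not; in each situation I identify the dominant monomials on the two sides, discard the dominated ones, and read off the balance. Condition (4) together with the weight decomposition pins the ratio $a/b=\Theta(n\cdot\SNR^2)$, so the system collapses to a single equation in $b$; matching dominant terms there gives $b=\Theta(n^{-1}\SNR^{-2}m^{-1/6})$ when $n\cdot\SNR^2=\Omega(1)$ and $b=\Theta(m^{-1/6})$ when $n^{-1}\cdot\SNR^{-2}=\Omega(1)$, hence $a=\Theta(m^{-1/6})$ resp. $\Theta(n\cdot\SNR^2\,m^{-1/6})$, which is the claimed conclusion. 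The $\sigma_0$-upper-bounds in Condition \ref{assump:diffusion} are used exactly as in Lemma \ref{lem:first_stage_linear} to rule out the non-escaping sub-cases.

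For the existence itself I would argue that the DDPM loss is coercive: the degree-six term $\tfrac{3\beta_t^4}{m}\sum_r\|\bw_{r,t}\|^6$ inside $L_{1,i}^{(p)}$ dominates all others as $\|\bW_t\|\to\infty$ since $\alpha_t,\beta_t=\Theta(1)$, so $L$ attains a global minimum, which is a critical point. To see that the critical point lying in the concentrated family defined by (1)--(4) is the one whose feature inner products satisfy the balance just derived, I would combine the monotonicity of feature learning past $T_1$ (from Lemma \ref{lem:first_stage_linear}) with an intermediate-value argument on the one-parameter reduced equation in $b$: the quantity $\tfrac{1}{\sqrt m}\Theta(\|\bw_{r,t}^*\|^2\|\bmu\|^2)-E_{r,t,\mu_j}^*$ is positive while $b$ is of initialization order and turns negative once $b=\Theta(m^{-1/6})$, so it vanishes in between. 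The main obstacle is the polynomial bookkeeping in the first two steps: there are many monomials of superficially comparable order in $E^*$, and the correct dominant balance genuinely shifts across the three SNR regimes and the escaping/non-escaping sub-cases, so the argument must mirror the long case analysis of Lemma \ref{lem:first_stage_linear} while also confirming that no dominated term can conspire to change the order of the solution. A secondary subtlety is pinning down a genuine critical point rather than merely a point where the feature-direction components of the gradient vanish, which is why coercivity and the explicit global minimizer are invoked instead of a direct construction.
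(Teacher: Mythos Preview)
Your overall strategy matches the paper's: reduce the vector stationarity condition to two scalar equations in $a=|\langle\bw_{r,t}^*,\bmu_j\rangle|$ and $b=|\langle\bw_{r,t}^*,\bxi_i\rangle|$ via Lemma~\ref{lemma:weight_decomposition} and Lemma~\ref{lemma:ddpm_grad}, then solve by case analysis on $n\cdot\SNR^2$ and identification of dominant monomials.

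There is, however, a gap in your second step. Your claim that ``Condition (4) together with the weight decomposition pins the ratio $a/b=\Theta(n\cdot\SNR^2)$'' is incorrect: condition (4), namely $\langle\bw_{r,t}^*,\bw_{r',t}^*\rangle=\Theta(\|\bw_{r,t}^*\|^2-\sigma_0^2 d)$, is already implied by (1)--(2) through Lemma~\ref{lemma:weight_decomposition} and carries no independent information about $a/b$. In the paper the ratio $\tau=a/b$ is treated as an unknown to be determined by the \emph{pair} of projected stationarity equations. For $n\cdot\SNR^2=\Theta(1)$ the paper explicitly rules out $\tau=\widetilde\Omega(1)$ and $\tau^{-1}=\widetilde\Omega(1)$ by showing the two reduced equations become mutually inconsistent, leaving only $\tau=\Theta(1)$; in the other two regimes the paper posits $\tau=\Theta(n\cdot\SNR^2)$ as an ansatz and verifies that both equations are simultaneously solvable at the claimed scale of $b$. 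You should therefore keep both scalar equations in play rather than collapsing to one via an unjustified shortcut. Your coercivity argument for existence is additional rigor the paper omits, but note that coercivity alone only produces a minimizer somewhere, not one automatically lying in the concentrated family (1)--(4), so that route would still need the trajectory-based argument you sketch.
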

\begin{proof}[Proof of Theorem \ref{thm:second_stage_station}]
The analysis mostly follows from Lemma \ref{lem:first_stage_linear}. Due to the concentration of neurons, we can derive
\begin{align*}
    &\langle \nabla_{\bw_{r,t}} L( \bW_t^* ) , \bmu_j  \rangle \\
    &= - \frac{1}{\sqrt{m}} \Theta\big(  \langle \bw^*_{r,t}, \bmu_j \rangle^2 +  \| \bw^*_{r,t} \|^2 \| \bmu_j \|^2 + \langle\bw^*_{r,t}, \bxi_i \rangle \langle \bw^*_{r,t}, \bmu_j \rangle \big) \\
    &\quad + \Theta\Big( \langle \bw^*_{r,t}, \bmu_j \rangle^5  + \langle \bw^*_{r,t} , \bmu_j  \rangle^3 \| \bw^*_{r,t} \|^2 +  \| \bw^*_{r,t} \|^4 \langle \bw^*_{r,t} , \bmu_j  \rangle +  \langle \bw^*_{r,t}, \bmu_j \rangle^3 \| \bw^*_{r,t} \|^2 \| \bmu_j \|^2 \\
    &\qquad + \| \bw^*_{r,t}\|^4 \langle \bw^*_{r,t}, \bmu_j \rangle \| \bmu_j\|^2 + \langle \bw^*_{r,t}, \bxi_i \rangle^4  \langle \bw^*_{r,t}, \bmu_j \rangle + \langle \bw^*_{r,t}, \bxi_i \rangle^2 \| \bw^*_{r,t} \|^2 \langle \bw^*_{r,t} , \bmu_j\rangle \Big) \\
    &\langle \nabla_{\bw_{r,t}} L( \bW_t^* ) , \bxi_i \rangle \\
    &= - \frac{1}{\sqrt{m}}\Theta \Big(  \langle \bw_{r,t}^*, \bmu_j \rangle \langle \bw_{r,t}^*, \bxi_i \rangle + \langle\bw_{r,t}^*, \bxi_i \rangle^2 + \frac{1}{n} \| \bw_{r,t}^*\|^2 \| \bxi_i \|^2\Big) \\
    &\quad + \Theta \Big( \langle \bw_{r,t}^*, \bxi_i \rangle^5 + \langle \bw_{r,t}^*, \bxi_i \rangle^3 \| \bw_{r,t}^*\|^2   + \langle \bw_{r,t}^*, \bmu_j\rangle^4 \langle \bw_{r,t}^*, \bxi_i \rangle + \langle \bw_{r,t}^*, \bmu_j \rangle^2 \| \bw_{r,t}^* \|^2 \langle \bw_{r,t}^*, \bxi_i \rangle \\
    &\quad + \| \bw_{r,t}^* \|^4 \langle \bw_{r,t}^*, \bxi_i \rangle + \frac{1}{n}  \langle \bw_{r,t}^*, \bxi_{i}\rangle^3 \| \bw_{r,t}^*\|^2 \| \bxi_i\|^2 + \frac{1}{n} \| \bw_{r,t}^* \|^4 \langle \bw_{r,t}^*, \bxi_{i}\rangle \| \bxi_i\|^2   \Big)  \\
    &\langle \nabla_{\bw_{r,t}} L(\bW_t^*), \bw_{r,t}^0 \rangle \\
    &= - \frac{1}{\sqrt{m}} \Theta \Big( \langle \bw_{r,t}^*, \bmu_j + \bxi_i \rangle \langle \bw_{r,t}^*, \bw_{r,t}^0 \rangle + \| \bw_{r,t}^* \|^2  \langle \bw_{r,t}^0, \bmu_j + \bxi_i \rangle \Big) \\
    &\quad + \frac{1}{m} \Theta \Big( \langle \bw_{r,t}^*, \bmu_j \rangle^4 + \langle \bw_{r,t}^*, \bxi_i \rangle^4   \Big)  \langle \bw_{r,t}^* , \bw_{r,t}^0 \rangle  + \Theta\Big(  \big(\langle \bw_{r,t}^*, \bmu_j \rangle^2 + \langle \bw_{r,t}^*, \bxi_i \rangle^2 \big) \| \bw_{r,t}^* \|^2  \Big)  \langle\bw_{r,t}^*, \bw_{r,t}^0 \rangle  \\
    &\quad + \Theta \Big( \| \bw_{r,t}^* \|^4 \langle \bw_{r,t}^*, \bw_{r,t}^0 \rangle + \big( \langle\bw_{r,t}^*, \bmu_j  \rangle^3 \langle \bw_{r,t}^0, \bmu_j\rangle + \langle\bw_{r,t}^*, \bxi_i  \rangle^3 \langle \bw_{r,t}^0, \bxi_i \rangle\big) \| \bw_{r,t}^* \|^2  \Big) \nonumber\\
    &\quad + \Theta\Big( \big( \langle\bw_{r,t}^*, \bmu_j \rangle \langle\bw_{r,t}^0, \bmu_j \rangle + \langle\bw_{r,t}^*, \bxi_i \rangle \langle\bw_{r,t}^0, \bxi_i \rangle \big)  \| \bw_{r,t}^* \|^4  \Big) 
\end{align*}
And we can verify that $\bW_t^*$ is  a stationary point if and only if for all $j = \pm 1, r \in [m], i \in [n]$, $\langle \nabla_{\bw_{r,t}} L( \bW_t^* ) , \bmu_j  \rangle = \langle \nabla_{\bw_{r,t}} L( \bW_t^* ) , \bxi_i \rangle = \langle\nabla_{\bw_{r,t}} L( \bW_t^* ) , \bw_{r,t}^0 \rangle=  0$. This leads to the following equation system:
\begin{align}
    &\sqrt{m} \Theta\Big( \langle \bw^*_{r,t}, \bmu_j \rangle^5  + \langle \bw^*_{r,t} , \bmu_j  \rangle^3 \| \bw^*_{r,t} \|^2 +  \langle \bw^*_{r,t}, \bxi_i \rangle^4  \langle \bw^*_{r,t}, \bmu_j \rangle + \| \bw^*_{r,t} \|^4 \langle \bw^*_{r,t}, \bmu_j \rangle \nonumber  \\
    &\quad + \langle \bw^*_{r,t}, \bxi_i \rangle^2 \| \bw^*_{r,t} \|^2 \langle \bw^*_{r,t} , \bmu_j\rangle  +  \langle \bw^*_{r,t}, \bmu_j \rangle^3 \| \bw^*_{r,t} \|^2 \| \bmu_j \|^2  + \| \bw^*_{r,t}\|^4 \langle \bw^*_{r,t}, \bmu_j \rangle \| \bmu_j\|^2    \Big)  \nonumber\\
    &= \Theta \Big( \langle\bw^*_{r,t}, \bxi_i \rangle \langle \bw^*_{r,t}, \bmu_j \rangle + \langle \bw^*_{r,t}, \bmu_j \rangle^2 +  \| \bw^*_{r,t} \|^2 \| \bmu_j \|^2 \Big)   \label{eq:sta_11} \\
    &\sqrt{m} \Theta \Big( \langle \bw_{r,t}^*, \bxi_i \rangle^5 + \langle \bw_{r,t}^*, \bxi_i \rangle^3 \| \bw_{r,t}^*\|^2  + \langle \bw_{r,t}^*, \bmu_j\rangle^4 \langle \bw_{r,t}^*, \bxi_i \rangle + \| \bw_{r,t}^* \|^4 \langle \bw_{r,t}^*, \bxi_i \rangle \nonumber \\
    &\quad + \langle \bw_{r,t}^*, \bmu_j \rangle^2 \| \bw_{r,t}^* \|^2 \langle \bw_{r,t}^*, \bxi_i \rangle +  \frac{1}{n}  \langle \bw_{r,t}^*, \bxi_{i}\rangle^3 \| \bw_{r,t}^*\|^2 \| \bxi_i\|^2 + \frac{1}{n} \| \bw_{r,t}^* \|^4 \langle \bw_{r,t}^*, \bxi_{i}\rangle \| \bxi_i\|^2   \Big)  \nonumber\\
    &= \Theta \Big(  \langle \bw_{r,t}^*, \bmu_j \rangle \langle \bw_{r,t}^*, \bxi_i \rangle + \langle\bw_{r,t}^*, \bxi_i \rangle^2 + \frac{1}{n} \| \bw_{r,t}^*\|^2 \| \bxi_i \|^2\Big)    \label{eq:sta_22} \\
    &\sqrt{m} \Theta \Big(  \frac{1}{m}  \big( \langle \bw_{r,t}^*, \bmu_j \rangle^4 + \langle \bw_{r,t}^*, \bxi_i \rangle^4   \big)  \langle \bw_{r,t}^* , \bw_{r,t}^0 \rangle  +   \big(\langle \bw_{r,t}^*, \bmu_j \rangle^2  + \langle \bw_{r,t}^*, \bxi_i \rangle^2 \big) \| \bw_{r,t}^* \|^2 \langle\bw_{r,t}^*, \bw_{r,t}^0 \rangle  \nonumber \\
    &\quad + \| \bw_{r,t}^* \|^4 \langle\bw_{r,t}^*, \bw_{r,t}^0 \rangle +  \big( \langle\bw_{r,t}^*, \bmu_j  \rangle^3 \langle \bw_{r,t}^0, \bmu_j\rangle + \langle\bw_{r,t}^*, \bxi_i  \rangle^3 \langle \bw_{r,t}^0, \bxi_i \rangle\big) \| \bw_{r,t}^* \|^2 \nonumber \\
    &\quad +  \big( \langle\bw_{r,t}^*, \bmu_j \rangle \langle\bw_{r,t}^0, \bmu_j \rangle + \langle\bw_{r,t}^*, \bxi_i \rangle \langle\bw_{r,t}^0, \bxi_i \rangle \big)  \| \bw_{r,t}^* \|^4 \Big)  \nonumber \\
    &= \Theta\Big( \langle \bw_{r,t}^*, \bmu_j + \bxi_i \rangle \langle \bw_{r,t}^*, \bw_{r,t}^0 \rangle + \| \bw_{r,t}^* \|^2  \langle \bw_{r,t}^0, \bmu_j + \bxi_i \rangle  \Big) \label{eq:sta_33}
\end{align}
In order to solve the system, we let $\tau_{i,j} \coloneqq \frac{\langle \bw_{r,t}^* , \bmu_j \rangle}{\langle \bw_{r,t}^*,  \bxi_i \rangle}$ for any $i \in [n], j = \pm1$. We let $\tau = \Theta(\tau_{i,j})$. We first consider solving \eqref{eq:sta_11} and \eqref{eq:sta_22} and then analyze \eqref{eq:sta_33}. 

Furthermore, because the claims (1-4) are assumed, we can  leverage Lemma \ref{lemma:weight_decomposition} to decompose 
\begin{align*}
    \| \bw^*_{r,t} \|^2 &= \Theta \big( \langle  \bw^*_{r,t} , \bmu_j \rangle^2 \| \bmu \|^{-2} + n \cdot \SNR^2 \langle  \bw^*_{r,t}, \bxi_i \rangle^2 \|\bmu\|^{-2} + \langle \bw_{r,t}^*, \bw_{r,t}^0 \rangle^{2} \| \bw_{r,t}^0 \|^{-2} \big) \\
    &= \Theta \big( (\tau^2   + n \cdot \SNR^2 ) \langle  \bw^*_{r,t}, \bxi_i \rangle^2 \|\bmu\|^{-2} + \langle \bw_{r,t}^*, \bw_{r,t}^0 \rangle^{2} \| \bw_{r,t}^0 \|^{-2} \big) 
    % &= \Theta \big( (\tau^2   + n \cdot \SNR^2 ) \langle  \bw^*_{r,t}, \bxi_i \rangle^2 \|\bmu\|^{-2} \big) 
\end{align*}
where the third equality is by the scale of $\langle  \bw^*_{r,t}, \bmu_j \rangle$ and $\langle\bw^*_{r,t}, \bxi_i \rangle$. 

Next, we separately consider three SNR conditions, namely (1) $n \cdot \SNR^2 = \Theta(1)$; (2) $n \cdot \SNR^2 \geq  \widetilde \Omega(1)$; and (3) $n^{-1} \cdot \SNR^{-2} \geq \widetilde \Omega(1)$. 

\begin{enumerate}[leftmargin=0.2in]
    \item When $n \cdot \SNR^2 = \Theta(1)$: we first can bound $\langle \bw_{r,t}^*, \bw_{r,t}^0 \rangle \| \bw_{r,t}^0 \|^{-1} \leq \Theta(\langle \bw_{r,t}^*, \bmu_j \rangle)$ and derive 
    \begin{align*}
        \| \bw_{r,t}^* \|^2 = \max  \{ \Theta( \langle \bw_{r,t}^*, \bmu \rangle^2 ), \Theta(\langle \bw_{r,t}^*, \bxi_i  \rangle^2 ) \}  \| \bmu \|^{-2} 
    \end{align*}
    Next, we can simplify \eqref{eq:sta_11} and \eqref{eq:sta_22} depending on the scale of $\tau$. 
    \begin{itemize}
        \item When $\tau = \widetilde \Omega(1)$, we have $\| \bw^*_{r,t} \|^2 = \Theta( \langle \bw_{r,t}^*, \bmu_j\rangle^2 ) \| \bmu_j \|^{-2}$ and the equations reduce to 
        \begin{align*}
            \begin{cases}
                \Theta( \sqrt{m } \tau^5 \langle \bw^*_{r,t}, \bxi_i \rangle^5)  = \Theta( \tau^2 \langle  \bw^*_{r,t}, \bxi_i \rangle^2  ) \\
                \Theta(\sqrt{m} \tau^4 \langle \bw^*_{r,t}, \bxi_i \rangle^5 ) = \Theta(\tau^2 \langle \bw^*_{r,t}, \bxi_i \rangle^2)
            \end{cases}
        \end{align*}
        It is clear to see for $\tau = \widetilde \Omega(1)$, the equations cannot be jointly satisfied.

        \item When $\tau^{-1} = \widetilde \Omega(1)$, we have $\| \bw_{r,t}^* \|^2 = \Theta( \langle \bw_{r,t}^*, \bxi_i \rangle^2 ) \|\bmu_j \|^{-2}$ and the equations reduce to 
        \begin{align*}
            \begin{cases}
                \Theta( \sqrt{m} \tau \langle \bw_{r,t}^*, \bxi_i \rangle^5 )  = \Theta(\langle \bw^*_{r,t}, \bxi_i\rangle^2) \\
                \Theta( \sqrt{m}  \langle \bw_{r,t}^*, \bxi_i \rangle^5 ) = \Theta( \langle \bw^*_{r,t}, \bxi_i \rangle^2 )
            \end{cases}     
        \end{align*}
        which cannot be satisfied simultaneously for $\tau^{-1} = \widetilde \Omega(1)$. 

        \item When $\tau = \Theta(1)$, $\| \bw_{r,t}^* \|^2 = \Theta( \langle \bw_{r,t}^*, \bmu_j \rangle^2 ) \|\bmu_j \|^{-2} =  \Theta( \langle \bw_{r,t}^*, \bxi_i \rangle^2 ) \|\bmu_j \|^{-2}$ and thus we can simplify the equations to 
        \begin{align*}
            \begin{cases}
                \Theta( \sqrt{m} \langle \bw_{r,t}^*, \bxi_i \rangle^5 ) = \Theta(\langle \bw_{r,t}^*, \bxi_i \rangle^2  ) \\
                \Theta( \sqrt{m} \langle \bw_{r,t}^*, \bxi_i \rangle^5 ) = \Theta(\langle \bw_{r,t}^*, \bxi_i \rangle^2 )
            \end{cases}
        \end{align*}
        which has a solution with $ \langle \bw_{r,t}^*, \bxi_i \rangle  = \Theta(m^{-1/6}) = \langle\bw_{r,t}^*, \bmu_j \rangle$, thus verifying the scale and $\tau = \Theta(n \cdot \SNR^2)$. Then we can verify \eqref{eq:sta_33} holds under the scale of $\langle \bw_{r,t}^*, \bxi_i \rangle, \langle\bw_{r,t}^*, \bmu_j \rangle = \Theta(m^{-1/6})$ and $\| \bw_{r,t}^*\|^2 = \Theta(m^{-1/3})$. With the same argument as in Lemma \ref{lem:first_stage_linear}, we can show $\langle \nabla_{\bw_{r,t}} L(\bW_t^*), \bw_{r,t}^0 \rangle = 0$.
        % Finally, we notice the term ignored has an order of $\widetilde O(n \sqrt{m} d^{-1/2})$. 
        % Then to ensure the simplification is valid, we require the condition that \highlight{$d \geq \widetilde \Omega(n^2 m^{25})$}. 
    \end{itemize}

    \item When $n \cdot \SNR^2 = \widetilde \Omega(1)$: we first can bound $\langle \bw_{r,t}^*, \bw_{r,t}^0 \rangle \| \bw_{r,t}^0 \|^{-1} \leq \Theta(\langle \bw_{r,t}^*, \bmu_j \rangle)$ and derive 
    \begin{align*}
        \|  \bw_{r,t}^* \|^2 = \max \{  \Theta( \tau^2  )  , \Theta( n \SNR^2  )  \} \langle \bw_{r,t}^*, \bxi_i\rangle^2 \| \bmu \|^{-2}.
    \end{align*}
    We only consider the scale when $\tau = \Theta(n \cdot \SNR^2)$, where we can simplify  $\| \bw_{r,t}^* \|^2 = \Theta( \tau^2 ) \langle \bw_{r,t}^*, \bxi_i \rangle^2 \| \bmu_j \|^{-2}$ and thus the system becomes 
    \begin{align*}
            \begin{cases}
                \Theta(\sqrt{m} \tau^5 \langle \bw_{r,t}^*, \bxi_i \rangle^5  )   = \Theta( \tau^2 \langle\bw_{r,t}^*, \bxi_i  \rangle^2 ) \\
                \Theta(\sqrt{m} \tau^4 \langle \bw_{r,t}^*, \bxi_i \rangle^5 )   = \Theta(\tau \langle\bw_{r,t}^*, \bxi_i \rangle^2  ) 
            \end{cases}
        \end{align*}
    In order to satisfy both equations, we require $\langle \bw_{r,t}^*, \bxi_i \rangle = \Theta(\tau^{-1} m^{-1/6})$ and $\langle \bw_{r,t}^*, \bmu_j \rangle = \Theta(  m^{-1/6})$, which verifies the scale. We can then verify \eqref{eq:sta_33} holds under such condition. With the same argument as in Lemma \ref{lem:first_stage_linear}, we can show $\langle \nabla_{\bw_{r,t}} L(\bW_t^*), \bw_{r,t}^0 \rangle = 0$.

\item When $n^{-1} \cdot \SNR^{-2} = \widetilde \Omega(1)$: we first can bound $\langle \bw_{r,t}^*, \bw_{r,t}^0 \rangle \| \bw_{r,t}^0 \|^{-1} \leq \Theta(\sqrt{n \cdot \SNR^2} \langle \bw_{r,t}^*, \bxi_i \rangle)$ and  derive 
    \begin{align*}
         \|  \bw_{r,t}^* \|^2 = \max \{  \Theta( \tau^2  )  , \Theta( n \SNR^2  )  \} \langle \bw_{r,t}^*, \bxi_i\rangle^2 \| \bmu_j \|^{-2}.
    \end{align*}
    We only consider the scale when $\tau = \Theta(n \cdot \SNR^2)$, where we can simplify  $\| \bw_{r,t}^* \|^2 = \Theta( n \cdot \SNR^2 ) \langle \bw_{r,t}^*, \bxi_i \rangle^2 \| \bmu_j \|^{-2}$ and thus the system becomes 
    \begin{align*}
            \begin{cases}
                \Theta( \sqrt{m} \tau \langle  \bw_{r,t}^*, \bxi_i \rangle^5  )   = \Theta( n \SNR^2 \langle \bw_{r,t}^*, \bxi_i \rangle^2  ) \\
                \Theta( \sqrt{m}  \langle  \bw_{r,t}^*, \bxi_i \rangle^5 )   = \Theta(  \langle  \bw_{r,t}^*, \bxi_i \rangle^2)
            \end{cases}
        \end{align*}
        which can be satisfied when  $\langle \bw_{r,t}^*, \bxi_i \rangle = \Theta(  m^{-1/6})$ and $\langle \bw_{r,t}^*, \bmu_j \rangle = \Theta( \tau m^{-1/6})$ and thus verify the scale. In this case, we can also verify that \eqref{eq:sta_33} holds under the condition that $m = \Theta(1)$. With the same argument as in Lemma \ref{lem:first_stage_linear}, we can show $\langle \nabla_{\bw_{r,t}} L(\bW_t^*), \bw_{r,t}^0 \rangle = 0$.
\end{enumerate}
This concludes the proof that suppose the scales and concentration are the same as the end of second stage, then there exists a stationary point where $\langle \bw_{r,t}^*, \bmu_j \rangle/\langle \bw_{r,t}^* , \bxi_i \rangle = \Theta(n \cdot \SNR^2)$. 
\end{proof}
% In summary, the above scale analysis reveals that
% \begin{enumerate}
%     \item When $n \cdot \SNR^2 = \Theta(1)$, $|\langle \bw^*_{r,t}, \bmu_j \rangle|/|\langle \bw_{r,t}^*, \bxi_i \rangle| = \Theta(1)$ and $\langle \bw^*_{r,t}, \bxi_i \rangle = \Theta(m^{-1/6})$. 

%     \item When $n \cdot \SNR^2 = \widetilde \Omega(1)$, $\Omega(1) = |\langle \bw^*_{r,t}, \bmu_j \rangle|/|\langle \bw_{r,t}^*, \bxi_i \rangle| \leq \Theta(n \cdot \SNR^2)$. 

%     \item When $n^{-1} \cdot \SNR^{-2} = \widetilde \Omega(1)$, $\Omega(1) = |\langle \bw_{r,t}^*, \bxi_i \rangle|/|\langle \bw^*_{r,t}, \bmu_j \rangle| \leq \Theta(n^{-1} \cdot \SNR^{-2})$. 
% \end{enumerate}
}

\clearpage

\newpage

\end{document}